\Crefname{equation}{Eq.}{Eqs.} %
\newtheorem{theorem}{Theorem}
\newtheorem{lemma}[theorem]{Lemma}
\newtheorem{definition}[theorem]{Definition}
\newtheorem{fact}[theorem]{Fact}
\newtheorem{assumption}[theorem]{Assumption}
\newtheorem{proposition}[theorem]{Proposition}
\numberwithin{equation}{section}
\numberwithin{theorem}{section}
\newcommand{\maxbound}{\psi}
\newcommand{\avgbound}{\phi}
\newcommand{\samplebound}{\mu}
\newcommand{\vv}{\delta}
\newcommand{\norm}[1]{\left\lVert#1\right\rVert}
\newcommand{\sigmaCoeffMax}{\hat{\sigma}_{\textup{max}}}
\newcommand{\legn}[1]{\overline{P}_{#1, d}}
\newcommand{\w}{w}
\renewcommand{\u}{u}
\newcommand{\z}{z}
\newcommand{\otherCoords}{z}
\newcommand{\ubar}{\bar{u}}
\newcommand{\uhat}{\hat{u}}
\newcommand{\uGD}{\tilde{u}}
\newcommand{\what}{\hat{w}}
\newcommand{\wbar}{\bar{w}}
\newcommand{\zhat}{\hat{z}}
\newcommand{\zbar}{\bar{z}}
\newcommand{\wmax}{{w_{\textup{max}}}}
\newcommand{\iotaU}{\iota_{\textup{U}}}
\newcommand{\iotaL}{\iota_{\textup{L}}}
\newcommand{\iotaR}{\iota_{\textup{R}}}
\newcommand{\iotamid}{\iota_{\textup{M}}}
\newcommand{\wR}{{\w_{\textup{R}}}}
\newcommand{\TR}{{T_{\textup{R}}}}
\newcommand{\Tstar}{T_{*, \epsilon}}
\newcommand{\Tesc}{{T_{\textup{esc}}}}
\newcommand{\supp}{\textup{supp}}
\newcommand{\rhobar}{{\bar{\rho}}}
\newcommand{\rhohat}{{\hat{\rho}}}
\newcommand{\rhoGD}{{\tilde{\rho}}}
\newcommand{\truevector}{q_\star}
\newcommand{\ftilde}{\tilde{f}}
\newcommand{\Lhat}{\widehat{L}}
\newcommand{\pgrad}{{\textup{grad}}}
\newcommand{\vel}{v}
\newcommand{\target}{h}
\newcommand{\coup}{\Gamma}%
\newcommand{\deltaMax}{\Delta_{\textup{max}}}
\newcommand{\deltaMaxt}[1]{\Delta_{\textup{max}, #1}}
\newcommand{\deltaBar}{\overline{\Delta}}
\newcommand{\normdelta}{\bar{\delta}}
\newcommand{\cInd}{c_{\textup{IH}}}
\newcommand{\deltaSingleGD}{{\tilde{\delta}}}
\newcommand{\legendreCoeff}[2]{\hat{#1}_{#2, d}}
\newcommand{\sigmatwo}{\sigmaCoeffTwo}
\newcommand{\sigmaCoeffTwo}{\legendreCoeff{\sigma}{2}}
\newcommand{\sigmaCoeffFour}{\legendreCoeff{\sigma}{4}}
\newcommand{\PtwoD}{{P_{2, d}}}
\newcommand{\PfourD}{{P_{4, d}}}
\newcommand{\PtwoDbar}{{\bar{P}_{2, d}}}
\newcommand{\PfourDbar}{{\bar{P}_{4, d}}}
\newcommand{\Pkd}{{P_{k, d}}}
\newcommand{\Pkdbar}{\overline{P}_{k, d}}
\newcommand{\Nkd}{{N_{k, d}}}
\newcommand{\lambdaD}{{\lambda_d}}
\newcommand{\etaD}{{\eta_d}}
\newcommand{\concBound}{U}
\newcommand{\concBoundTrunc}{{U'}}
\newcommand{\DtwoT}{{D_{2, t}}}
\newcommand{\DfourT}{{D_{4, t}}}
\newcommand{\F}{F}
\newcommand{\Prob}{\mathbb{P}}
\newcommand{\Esymb}{\mathbb{E}}
\newcommand{\iidsim}{\overset{\textup{i.i.d}}{\sim}}
\DeclareMathOperator*{\E}{\Esymb}
\DeclareMathOperator*{\Exp}{\Esymb}
\DeclareMathOperator*{\Var}{Var}
\newcommand{\R}{\mathbb{R}}
\newcommand{\bbS}{\mathbb{S}}
\newcommand{\bbsd}{\bbS^{d-1}}
\newcommand{\N}{\mathbb{N}}
\newcommand{\calH}{\mathcal{H}}
\newcommand{\Y}{\mathbb{Y}}
\newcommand{\calS}{\mathcal{S}}
\newcommand{\calN}{\mathcal{N}}
\newcommand{\calF}{\mathcal{F}}
\newcommand{\calG}{\mathcal{G}}
\newcommand{\RS}{{R_S}}
\newcommand{\RN}{{R_N}}
\newcommand{\truncB}{{\varphi_B}}
\newcommand{\truncBprime}{{\varphi_{B'}}}
\newcommand{\Esparse}{{E_{\textup{sparse}}}}
\newcommand{\potential}{\Phi}
\newcommand{\concop}{\mathcal{C}}
\newcommand{\idx}[1]{i_{#1}}
\newcommand{\srv}{x}
\newcommand{\srvL}{X}
\newcommand{\poly}{\textup{poly}}
\newcommand{\nextlinespace}{\,\,\,\,\,\,\,\,\,\,\,\,\,\,\,\,}
\newcommand{\ddimId}{{I_{d \times d}}}
\newcommand{\e}{e_1}
\newcommand{\dotp}[2]{\left<#1, #2\right>}
\newcommand{\Sp}{\mathbb{S}^{d-1}}
\newcommand{\ind}[1]{\mathbf{1}\left[ #1 \right]}
\def\({\left(}
\def\){\right)}
\def\[{\left[}
\def\]{\right]}
\newcommand*{\defeq}{\triangleq}
\newcommand{\dd}{\mathrm{d}}
\newcommand{\Proj}{\Pi}
\newcommand{\Kernel}{K}
\newcommand{\kernel}{\kappa}
\newcommand{\h}{h}
\newcommand{\dirac}[1]{\delta_{#1}}
\title{Beyond NTK with Vanilla Gradient Descent: A Mean-Field Analysis of Neural Networks with Polynomial Width, Samples, and Time}
\author{%
Arvind Mahankali\thanks{Equal Contribution} \\
Stanford University\\
\texttt{amahanka@stanford.edu} \\
\and
Jeff Z. Haochen\footnotemark[1] \\
Stanford University\\
\texttt{jhaochen@stanford.edu} \\
\and
Kefan Dong \\
Stanford University\\
\texttt{kefandong@stanford.edu} \\
\and
Margalit Glasgow \\
Stanford University\\
\texttt{mglasgow@stanford.edu} \\
\and
Tengyu Ma \\
Stanford University \\
\texttt{tengyuma@stanford.edu}
}
\begin{document}

\maketitle

	\begin{abstract}
		Despite recent theoretical progress on the non-convex optimization of two-layer neural networks, it is still an open question whether gradient descent on neural networks without unnatural modifications can achieve better sample complexity than kernel methods. This paper provides a clean mean-field analysis of projected gradient flow on polynomial-width two-layer neural networks. 
		Different from prior works, our analysis does not require unnatural modifications of the optimization algorithm. We prove that with sample size $n = O(d^{3.1})$ where $d$ is the dimension of the inputs, the network trained with projected gradient flow converges in $\poly(d)$ time to a non-trivial error that is not achievable by kernel methods using $n \ll  d^4$ samples, hence demonstrating a clear separation between unmodified gradient descent and NTK. As a corollary, we show that projected gradient descent with a positive learning rate and a polynomial number of iterations converges to low error with the same sample complexity.
	\end{abstract}

\newcommand{\tnote}[1]{\TM{ [#1] }}

\section{Introduction} \label{sec:intro}
Training neural networks requires optimizing non-convex losses, which is often practically feasible but still not theoretically understood. The lack of understanding of non-convex optimization also limits the design of new principled optimizers for training neural networks that use theoretical insights. 

Early analysis on optimizing neural networks with linear or quadratic activations~\citep{du2018power, gunasekar2018implicit, li2018algorithmic,gunasekar2018characterizing,soltanolkotabi2018theoretical,hardt2016identity,kawaguchi2016deep,hardt2018gradient,oymak2020toward} relies on linear algebraic tools that do not extend to nonlinear and non-quadratic activations. The neural tangent kernel (NTK) approach analyzes nonconvex optimization under certain hyperparameter settings, e.g., when the initialization scale is large and the learning rate is small (see, e.g., ~\citet{du2018gradient,jacot2018neural, li2018learning, arora2019fine, daniely2016toward}). 
However, subsequent research shows that neural networks trained with practical hyperparameter settings typically outperform their corresponding NTK kernels~\citep{arora2019exact}. Furthermore, the initialization and learning rate under the NTK regime does not yield optimal generalization guarantees~\citep {wei2019regularization,chizat2018note,woodworth2020kernel, ghorbani2020neural}. 

Many recent works study modified versions of stochastic gradient descent (SGD) and prove sample complexity and runtime guarantees beyond NTK~\citep{damian2022neural, abbe2022merged, mousavi2022neural, li2020learning, abbe2021staircase,allen2019learning,allen2019can, xu2023over, wu2023learning, daniely2020learning, allen2020backward, suzukibenefit, telgarsky2022feature, chen2020towards, nichani2022escapeNTK}. These modified algorithms often contain multiple stages that optimize different blocks of parameters and/or use different learning rates or regularization strengths. For example, the work of~\citet{li2020learning} uses a non-standard parameterization for two-layer neural networks and runs a two-stage algorithm with sharply changing gradient clipping strength; \citet{damian2022neural} use one step of gradient descent with a large learning rate and then optimize only the last layer of the neural net. \citet{nichani2022escapeNTK} construct non-standard regularizers based on the NTK feature covariance matrix, in order to prevent the movement of weights in certain directions which hinder generalization. Additionally, \citet{abbe2022merged} only train the hidden layer for $O(1)$ time in the first stage of their algorithm, and then train the second layer to convergence in the second stage. They do not study how the population loss decreases during the first stage. However, oftentimes vanilla (stochastic) gradient descent with a constant learning rate empirically converges to a minimum with good generalization error. Thus, the modifications are arguably artifacts tailored to the analysis, and to some extent, over-using the modification may obscure the true power of gradient descent.

Another technique to analyze optimization dynamics uses the mean-field approach~\citep{chizat2018global, mei2018mean, mei2019mean,sirignano2018mean, dou2021training,abbe2022merged, javanmard2020analysis,sirignano2020mean, wei2019regularization}, which views the collection of weight vectors as a (discrete) distribution over $\R^d$ (where $d$ is the input dimension) and approximates its evolution by an infinite-width neural network, where the weight vector distribution is continuous and evolves according to a partial differential equation. 
However, these works do not provide an end-to-end polynomial runtime bound on the convergence to a global minimum in the concrete setting of two-layer neural networks (without modifying gradient descent). For example, the works of~\citet{chizat2018global} and~\citet{mei2018mean} do not provide a concrete bound on the number of iterations needed for convergence. \citet{mei2019mean} provide a coupling between the trajectories of finite and infinite-width networks with exponential growth of coupling error but do not apply it to a concrete setting to obtain a global convergence result with sample complexity guarantees. (See more discussion below and in Section~\ref{sec:main_results}.) \citet{wei2019regularization} achieve a polynomial number of iterations but require exponential width and an artificially added noise. In other words, these works, without modifying gradient descent, cannot prove convergence to a global minimizer with polynomial width and iterations. 

In this paper, we provide a mean-field analysis of projected gradient flow on two-layer neural networks with \textit{polynomial width} and quartic activations. Under a simple data distribution, we demonstrate that the network converges to a non-trivial error in \textit{polynomial time} with \textit{polynomial samples}. Notably, our results show a sample complexity that is superior to the NTK approach.

Concretely, the neural network is assumed to have unit-norm weight vectors and no bias, and the second-layer weights are all $\frac{1}{m}$ where $m$ is the width of the neural network. The data distribution is uniform over the sphere. The target function is of the form 	$y(x) = h(\truevector^\top x)$ where $h$ is an \textit{unknown} quartic link function and $\truevector$ is an unknown unit vector. 
Our main result (\Cref{thm:empirical-main}) states that with $n = O(d^{3.1})$ samples,  a polynomial-width neural network with random initialization converges in polynomial time to a non-trivial error, which is statistically not achievable by any kernel method with an inner product kernel using $n \ll d^4$ samples.  To the best of our knowledge, our result is the first to demonstrate the advantage of \textit{unmodified} gradient descent on neural networks over kernel methods.

The rank-one structure in the target function, also known as the single-index model, has been well-studied in the context of neural networks as a simplified case to demonstrate that neural networks can learn a latent feature direction $\truevector$ better than kernel methods~\citep{mousavi2022neural, abbe2022merged, damian2022neural, bietti2022learning}. 
Many works on single-index models study (stochastic) gradient descent in the setting where only a single vector (or ``neuron'') in $\mathbb{R}^d$ is trained. This includes earlier works where the link function is monotonic, or the convergence is analyzed with quasi-convexity ~\citep{kakade2011efficient,hazan2015beyond,mei2018landscape,soltanolkotabi2017learning}, along with more recent work~\citep{tan2019online, vardi2021learning, arous2021online} for more general link functions. Since these works only train a single neuron, they have limited expressivity in comparison to a neural network, and can only achieve zero loss when the link function equals the activation. We stress that in our setting, the link function $h$ is unknown, not necessarily monotonic, and does not need to be equal to the activation function. We show that in this setting, the first layer weights will converge to a \textit{distribution} of neurons that are correlated with but not exactly equal to $\truevector$, so that even without bias terms, their mixture can represent the link function $h$. Our analysis demonstrates that gradient flow, and gradient descent with a consistent inverse-polynomial learning rate, can \em simultaneously \em learn the feature $\truevector$ and the link function $h$, which is a key challenge that is side-stepped in previous works on neural-networks which use two-stage algorithms~\citep{mousavi2022neural, abbe2021staircase, abbe2022merged, damian2022neural, barak2022hidden}. The main novelty of our population dynamics analysis is designing a potential function that shows that the iterate stays away from the saddle points.

Our sample complexity results leverage a coupling between the dynamics on the empirical loss for a finite-width neural network and the dynamics on the population loss for an infinite-width neural network. The main challenge stems from the fact that some exponential coupling error growth is inevitable over a certain period of time when the dynamics resemble a power method update. 
Heavily inspired by~\citet{li2020learning}, we address this challenge by using a direct and sharp comparison between the growth of the coupling error and the growth of the signal.
In contrast, a simple exponential growth bound on the coupling error similar to the bound of~\citet{mei2019generalization} would result in a $d^{O(1)}$ sample complexity, which is not sufficient to outperform NTK.

As a corollary of our results for projected gradient flow, we show that projected gradient descent with polynomially small step size and polynomially many iterations can converge to a low error, again with sample complexity that is superior to NTK. Our proof is reminiscent of the standard bound on the error of Euler's method --- since our projected gradient flow runs for $O(\log d)$ time, a $\frac{1}{\poly(d)}$ learning rate suffices for the projected gradient descent trajectory to remain close to the projected gradient flow trajectory.
\section{Preliminaries and Notations} \label{sec:preliminaries}

We use $O(\cdot), \lesssim, \gtrsim$ to hide only absolute constants. Formally, every occurrence of $O(x)$ in this paper can be simultaneously replaced by a function $f(x)$ where $|f(x)|\le C|x|,\forall x\in\R$ for some universal constant $C>0$ (each occurrence can have a different universal constant $C$ and $f$). We use $a \lesssim b$ as a shorthand for $a\le O(b)$. Similarly, $\Omega(x)$ is a placeholder for some $g(x)$ where $|g(x)|\ge |x|/C,\forall x\in\R$ for some universal constant $C>0$. We use $a \gtrsim b$ as a shorthand for $a \ge \Omega(b)$ and $a \asymp b$ as a shorthand to indicate that $a \gtrsim b$ and $a \lesssim b$ simultaneously hold.

\noindent{\bf Legendre Polynomials.} We summarize the necessary facts about Legendre polynomials below, and present related background more comprehensively in Appendix~\ref{app:SH}. Let $\Pkd:[-1,1]\to\R$ be the degree-$k$ \textit{un-normalized} Legendre polynomial~\citep{atkinson2012spherical}, and $\Pkdbar(t) = \sqrt{\Nkd} \Pkd(t)$  be the \textit{normalized} Legendre polynomial, where $\Nkd\defeq \binom{d+k-1}{d-1} - \binom{d+k-3}{d-1}$ is the normalizing factor. The polynomials $\Pkdbar(t)$ form an orthonormal basis for the set of square-integrable functions over $[-1,1]$ with respect to the measure $\mu_d(t)\defeq (1-t^2)^{\frac{d-3}{2}}\frac{\Gamma(d/2)}{\Gamma((d-1)/2)}\frac{1}{\sqrt{\pi}}$, i.e., the density of $u_1$ when $u=(u_1,\cdots,u_d)$ is uniformly drawn from sphere $\Sp$. Hence, for every function $h:[-1,1]\to \R$ such that $\E_{t\sim \mu_d}[h(t)^2]<\infty$, we can define $\legendreCoeff{h}{k}\defeq \E_{t\sim\mu_d}[h(t)\Pkdbar(t)]$ and consequently, we have
$
    \textstyle{h(t)=\sum_{k=0}^{\infty}\legendreCoeff{h}{k}\Pkdbar(t).}
$

\section{Main Results} \label{sec:main_results}

\newcommand{\period}{\,.}
\newcommand{\comma}{\,,}

We will formally define the data distribution, neural networks, projected gradient flow, and assumptions on the problem-dependent quantities and then state our main theorems. 

\textit{Target function.} The ground-truth function $y(x) : \R^d \rightarrow \R$ that we aim to learn has the form $
y(x) = h(\truevector^\top x)$, where $h: \R \rightarrow \R$ is an \textit{unknown} one-dimensional \textit{even} quartic polynomial (which is called a link function), and $\truevector$ is an \textit{unknown} unit vector in $\R^d$.  Note that $h(s)$ has the Legendre expansion  $h(s) = \legendreCoeff{\target}{0}+ \legendreCoeff{\target}{2} \legn{2}(s) + \legendreCoeff{\target}{4} \legn{4}(s)$. %

\textit{Two-layer neural networks.} We consider a two-layer neural network where the first-layer weights are all unit vectors and the second-layer weights are fixed and all the same. Let $\sigma(\cdot)$ be the activation function, which can be different from $h(\cdot)$. Using the mean-field formulation, we describe the neural network using the distribution of first-layer weight vectors, denoted by $\rho$:
\begin{align}
\textstyle{f_\rho(x) \triangleq \Exp_{u\sim \rho}[\sigma(u^\top x)]\period} \label{eqn:12}
\end{align}
For example, when $\rho = \textup{unif}(\{\u_1, \ldots, \u_m\})$, is a discrete, uniform distribution supported on $m$ vectors $\{\u_1, \ldots, \u_m\}$, then $f_\rho(x) = \frac 1 m \sum_{i=1}^{m} \sigma(u_i^\top x)$, i.e. $f_\rho$ corresponds to a finite-width neural network whose first-layer weights are $\u_1, \ldots, \u_m$. For a continuous distribution $\rho$, the function $f_\rho(\cdot)$ can be viewed as an infinite-width neural network where the weight vectors are distributed according to $\rho$ (and can be viewed as taking the limit as $m \to \infty$ of the finite-width neural network). We assume that the weight vectors have unit norms, i.e. the support of $\rho$ is contained in $\bbS^{d - 1}$. The activation $\sigma: \R \to \R$ is assumed to be a fourth-degree polynomial with Legendre expansion $\sigma(s) = \sum_{k=0}^4 \legendreCoeff{\sigma}{k} \legn{k}(s)$.  

The simplified neural network defined in~\Cref{eqn:12}, even with infinite width (corresponding to a continuous distribution $\rho$), has a limited expressivity due to the lack of biases and trainable second-layer weights. We characterize the expressivity by the following lemma:

\begin{lemma}[Expressivity]\label{lem:expressivity}
Let $\gamma_2 = \hat{\target}_{2,d}/\sigmatwo$ and $\gamma_4 = \hat{\target}_{4,d}/\sigmaCoeffFour$. Suppose for some $d$ and $\truevector$, there exists a network $\rho$ such that $f_\rho(x) = h(\truevector^\top x)$ on $\bbsd$. Then we have $\hat{\sigma}_{0,d} = \hat{\target}_{0,d}$, and $
0 \le \gamma_2^2 \le \gamma_4 \le \gamma_2 \le 1.
$ Moreover, if this condition holds with strict inequalities, then for sufficiently large $d$, there exists a network $\rho$ such that $f_\rho(x) = h(\truevector^\top x)$ on $\bbS^{d - 1}$. (A more explicit version is stated in \Cref{app:conditions}.)\end{lemma}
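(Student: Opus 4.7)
The plan is to reduce the identity $f_\rho(x)=h(\truevector^\top x)$ to degree-by-degree matching in the spherical harmonic expansion in $x$. For any fixed $u\in\bbsd$, the function $x\mapsto\legn{k}(u^\top x)$ is a spherical harmonic of degree $k$ in $x$, so expanding $\sigma$ in normalized Legendre polynomials gives
\[
f_\rho(x) \;=\; \sum_{k=0}^4 \legendreCoeff{\sigma}{k}\, g_k(x),\qquad g_k(x)\defeq \Exp_{u\sim\rho}\!\left[\legn{k}(u^\top x)\right],
\]
where $g_k$ lies in the degree-$k$ spherical harmonic subspace. The target's degree-$k$ piece is the zonal harmonic $\legendreCoeff{\target}{k}\legn{k}(\truevector^\top x)$, nonzero only for $k\in\{0,2,4\}$. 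Uniqueness of the spherical harmonic decomposition then forces $\legendreCoeff{\sigma}{0}=\legendreCoeff{\target}{0}$ and, for $k\in\{2,4\}$, $g_k(x)=\gamma_k\legn{k}(\truevector^\top x)$ identically on $\bbsd$. Setting $x=\truevector$ and using $\legn{k}(1)=\sqrt{\Nkd}$ produces the clean moment identity
\[
\gamma_k \;=\; \Exp_{u\sim\rho}\!\left[\leg{k}(u^\top\truevector)\right], \qquad k\in\{2,4\}.
\]

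The inequalities then follow from classical properties of Gegenbauer polynomials combined with a Cauchy--Schwarz step. The bound $\gamma_k\le 1$ is immediate since $|\leg{k}(t)|\le\leg{k}(1)=1$ on $[-1,1]$. For $\gamma_2^2\le\gamma_4$, I would expand the product $\leg{2}(t)^2$ in the orthogonal basis as $\leg{2}(t)^2 = a_0 + a_2\leg{2}(t) + a_4\leg{4}(t)$ by a direct computation from the closed forms $\leg{2}(t)=(dt^2-1)/(d-1)$ and $\leg{4}(t)=[(d+2)(d+4)t^4-6(d+2)t^2+3]/[(d-1)(d+1)]$, verifying that the coefficients are nonnegative and sum to $1$; taking expectations and combining with Jensen yields $\gamma_2^2 \le \Exp_{u\sim\rho}[\leg{2}(u^\top\truevector)^2] = a_0+a_2\gamma_2+a_4\gamma_4$, which rearranges to $\gamma_4\ge\gamma_2^2$. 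For $\gamma_4\le\gamma_2$, I would start from the explicit factorization
\[
\leg{4}(t)-\leg{2}(t) \;=\; \frac{(d+4)(t^2-1)\bigl((d+2)t^2-1\bigr)}{(d-1)(d+1)},
\]
which is nonpositive precisely when $t^2\ge 1/(d+2)$, and combine it with the further structural content of the degree-$2$ matching condition, namely $\Exp_{u\sim\rho}[uu^\top]=\gamma_2\truevector\truevector^\top + \tfrac{1-\gamma_2}{d}\ddimId$, which forces $\rho$ to place only limited mass in the equatorial band $t^2<1/(d+2)$ where $\leg{4}-\leg{2}$ is positive.

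For the sufficiency direction, given a prescribed $(\gamma_2,\gamma_4)$ satisfying the strict inequalities, I would construct $\rho$ as a rotationally symmetric distribution around $\truevector$ whose marginal $\nu$ on $[-1,1]$ realizes $\Exp_\nu[\leg{k}(t)]=\gamma_k$ for $k\in\{2,4\}$. A two-atom ansatz $\nu=p\,\dirac{t_1}+(1-p)\,\dirac{t_2}$ offers three free parameters against two moment equations, and for $d$ large enough the strict inequalities place $(\gamma_2,\gamma_4)$ in the interior of the achievable moment region; lifting $\nu$ to the sphere by uniform measure on each latitude $\{u:u^\top\truevector=t\}$ then produces the required $\rho$. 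The main obstacle I anticipate is the upper bound $\gamma_4\le\gamma_2$ in the forward direction: because the pointwise comparison $\leg{4}\le\leg{2}$ fails throughout the equatorial band, the proof cannot be termwise and must genuinely invoke the joint structural content of the degree-$2$ and degree-$4$ matching conditions together, rather than either one alone.
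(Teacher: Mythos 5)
Your approach is correct in outline and takes a genuinely different route from the paper's proof in Appendix~E, so a comparison is in order.

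The paper works with tensor moments: after reducing the degree-$k$ matching to the per-degree summands of the loss (Lemma~\ref{lemma:population_loss_formula}), it derives the Frobenius-norm identities $\|\E_\rho[u^{\otimes 2}]-\gamma_2 e_1^{\otimes 2}\|_F^2=(1-\gamma_2)^2/d$ and the analogous (approximate) degree-$4$ version, then uses the addition theorem to build a feature map that yields $\gamma_k\le 1$ by Cauchy--Schwarz, and obtains $\gamma_2^2\lesssim\gamma_4$ and $\gamma_4\lesssim\gamma_2$ by applying Jensen and the pointwise bound $(u^\top\xi)^4\le(u^\top\xi)^2$ to the contracted tensors. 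You instead reduce immediately to the scalar moment conditions $\E_{u\sim\rho}[\leg{k}(u^\top\truevector)]=\gamma_k$ and manipulate Legendre polynomials in one variable: $\gamma_k\le1$ from $|\leg{k}|\le1$; $\gamma_2^2\le\gamma_4$ from Jensen plus the Gegenbauer linearization $\leg{2}^2=a_0+a_2\leg{2}+a_4\leg{4}$ with $a_i\ge0$, $\sum a_i=1$ (nonnegativity is a classical Dougall-type fact for ultraspherical polynomials with positive parameter, and $a_4=1-O(1/d)$, so the conclusion comes out with an $O(1/d)$ slack). This is a cleaner and more elementary route than the tensor argument, though it leans on the linearization positivity theorem.

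Two remarks. First, note that the exact inequality chain $0\le\gamma_2^2\le\gamma_4\le\gamma_2\le1$ in the lemma statement is informal; the precise version in Appendix~E carries $O(1/\sqrt d)$ slack, and both the paper's argument and yours naturally produce slack of this (or smaller) order.

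Second, and more importantly, the obstacle you flag for $\gamma_4\le\gamma_2$ is not actually an obstacle, and you do not need the structural condition $\E[uu^\top]=\gamma_2\truevector\truevector^\top+\frac{1-\gamma_2}{d}I$. Your own factorization
\[
\leg{4}(t)-\leg{2}(t)=\frac{(d+4)(t^2-1)\bigl((d+2)t^2-1\bigr)}{d^2-1}
\]
shows the difference is nonpositive for $t^2\ge 1/(d+2)$ and, on the equatorial band $t^2<1/(d+2)$, is positive but uniformly bounded by $\frac{d+4}{d^2-1}=O(1/d)$ (both factors in the numerator have absolute value at most $1$ there). So the pointwise bound $\leg{4}(t)-\leg{2}(t)\le \frac{d+4}{d^2-1}$ holds on all of $[-1,1]$, and a \emph{termwise} average immediately gives $\gamma_4-\gamma_2=\E[\leg{4}(t)-\leg{2}(t)]\le O(1/d)$, with no control at all on the mass $\rho$ places in the band. (This is in fact the same mechanism as the paper's pointwise $t^4\le t^2$ argument, recast in the Legendre basis.) As written, your proof invokes a heavier structural fact to ``limit mass in the band,'' which is unnecessary and would require extra work to make precise; dropping that step streamlines the argument without loss. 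The sufficiency construction via a two-atom ansatz on $[-1,1]$ lifted to latitude circles matches the paper's construction (which uses one atom at $0$ and one at $\sqrt{\beta_4/\beta_2}$) and is fine.
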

Informally, an almost sufficient and necessary condition to have $f_\rho(x) = h(q_*^\top x)$ for some $\rho$ is that there exists a random variable $w$ supported on $[0,1]$ such that $\E[w^2]\approx \gamma_2$ and $\E[w^4]\approx \gamma_4$, which is equivalent to $0 \le \gamma_2^2 \le \gamma_4 \le \gamma_2 \le 1.$ 
In particular, assuming the existence of such a random variable $w$, the perfectly-fit network $\rho$ that fits the target function has the form 
\begin{align}
\truevector^\top u \overset{d}{=} w, \textup{ and $u - \truevector \truevector^\top u \mid \truevector^\top u$ is uniformly distributed in the subspace orthogonal to $\truevector$} \period
\end{align}
Motivated by this lemma, we will assume that $\gamma_2, \gamma_4$ are universal constants that satisfy $0 \le \gamma_2^2 \le \gamma_4 \le \gamma_2 \le 1$, and $d$ is chosen to be sufficiently large (depending on the choice of $\gamma_2$ and $\gamma_4$). In addition, we also assume that $\gamma_4 \leq O(\gamma_2^2)$, that is, the equality $\gamma_2^2 \le \gamma_4$ is somewhat tight --- this ensures that the distribution of $w = \truevector^\top u$ under the perfectly-fitted neural network is not too spread-out.  We also assume that $\gamma_2$ is smaller than a sufficiently small universal constant. This ensures that the distribution of $\truevector^\top u$ under the perfectly-fitted network does not concentrate on  1, i.e. the distribution of $u$ is not merely a point mass around $\truevector$. In other words, this assumption restricts our setting to the most interesting case where the landscape has bad saddle points (and thus is fundamentally more challenging to analyze). We also assume for simplicity that $\hat{\sigma}_{0,d} = \hat{\target}_{0,d}$ (because otherwise, the activation introduces a constant bias that prohibits perfect fitting), even though adding a trainable scalar to the neural network formulation can remove the assumption. If $\hat{\sigma}_{0, d} = \hat{\target}_{0, d}$, then we can assume without loss of generality that $\hat{\sigma}_{0, d} = \hat{\target}_{0, d} = 0$, since $\hat{\sigma}_{0, d}$ and $\hat{\target}_{0, d}$ will cancel with each other in the population and empirical mean-squared losses (defined below).
In summary, we make the following formal assumptions on the Legendre coefficients of the link function and the activation function.

\begin{assumption} \label{assumption:target}
Let $\gamma_2 = \hat{\target}_{2,d}/\sigmatwo$ and $\gamma_4 = \hat{\target}_{4,d}/\sigmaCoeffFour$.  We first assume $\gamma_4 \ge 1.1\cdot\gamma_2^2$. For any universal constant $c_1 > 1$, we assume that $\sigmaCoeffTwo^2/c_1 \le \sigmaCoeffFour^2 \le c_1\cdot \sigmaCoeffTwo^2$, and $\gamma_4 \le c_1 \gamma_2^2$. For a sufficiently small universal constant $c_2 > 0$ (which is chosen after $c_1$ is determined), we assume $0 \le \gamma_2 \le c_2$. We also assume that $d$ is larger than a sufficiently large constant $c_3$ (which is chosen after $c_1$ and $c_2$.)
We also assume $\hat{\target}_{0,d} = \hat{\sigma}_{0,d} = 0$, and $\hat{\target}_{1,d} = \hat{\target}_{3,d} = 0$. 
\end{assumption}

Our intention is to replicate the ReLU activation as well as possible with quartic polynomials; our assumption that $\sigmaCoeffTwo^2\asymp \sigmaCoeffFour^2$ is indeed satisfied by the quartic expansion of ReLU because $\widehat{\textup{relu}}_{2,d} \asymp d^{-1/2}$ and $\widehat{\textup{relu}}_{2,d} \asymp d^{-1/2}$ (see \Cref{prop:relu-coefficient}). Following the convention defined in \Cref{sec:preliminaries}, we will simply write $\sigmaCoeffFour^2 \asymp \sigmaCoeffTwo^2$, $\gamma_4 \ge 1.1\cdot\gamma_2^2$, and $\gamma_4 \lesssim \gamma_2^2$.

Our assumptions rule out the case that $\gamma_4 = \gamma_2^2$, for the sake of simplicity. We believe that our analysis could be extended to this case with some modifications. Our analysis also rules out the case where $\gamma_2 = 0$ and $\gamma_4 \neq 0$, due to the restriction that $\gamma_4 \leq c_1 \gamma_2^2$. The case $\gamma_2 = 0$ and $\gamma_4 \neq 0$ would have a significantly different analysis, and potentially a different sample complexity, since our analysis in \Cref{sec:population_loss} and \Cref{section:finite_sample_analysis} makes use of the fact that the initial phase of the population and empirical dynamics behaves similarly to a power method, which follows from $\gamma_2$ being nonzero.

\textit{Data distribution, losses, and projected gradient flow.} The population data distribution is assumed to be $\bbsd$. We draw $n$ training examples $x_1,\dots, x_n\iidsim \bbsd$. Thus, the population and empirical mean-squared losses are: 
\begin{align}
L(\rho) &= \frac{1}{2}\cdot \Exp_{x \sim \bbS^{d - 1}} \Big(f_\rho(x) - y(x)\Big)^2\,, & \textup{ and ~~} 
\Lhat(\rho) = \frac{1}{2n}\sum_{i=1}^n \left(f_\rho(x_i) - y(x_i)\right)^2 \period
\end{align}

To ensure that the weight vectors remain on $\bbS^{d - 1}$, we perform projected gradient flow on the empirical loss.  We start by defining the gradient of the population loss $L$ with respect to a particle $u$ at $\rho$ and the corresponding Riemannian gradient (which is simply the projection of the gradient to the tangent space of $\bbS^{d - 1}$):
\begin{align} \label{eq:projected_gradient_flow_population}
    \nabla_{\u} L(\rho) & = \Exp_{x\sim \bbS^{d-1}}\left[(f_\rho(x)-y(x))\sigma'(u^\top x) x\right]\,,&\textup{and ~~} \pgrad_{\u} L(\rho) = (I-\u\u^\top) \nabla_{\u} L(\rho) \period
\end{align}
Here we interpret $L(\rho)$ as a function of a collection of particles (denoted by $\rho$) and $\nabla_u L(\rho)$ as the partial derivative with respect to a single particle $u$ evaluated at $\rho$. Similarly, the (Riemannian) gradient of the empirical loss $\Lhat$ with respect to the particle $u$ is defined as
\begin{align}
\label{eq:projected_gradient_flow_empirical}
    \nabla_{\u} \Lhat(\rho) & = \frac{1}{n}\sum_{i=1}^n (f_\rho(x_i)-y(x_i))\sigma'(\u^\top x_i) x_i\,, & \textup{ and ~~} \pgrad_{\u} \Lhat(\rho) = (I-\u\u^\top) \nabla_{\u} \Lhat(\rho) \period 
\end{align}
\noindent \textbf{Population, Infinite-Width Dynamics.}
Let the initial distribution $\rho_0$ of the infinite-width neural network be the uniform distribution over $\bbsd$. We use $\chi$ to denote a particle sampled uniformly at random from the initial distribution $\rho_0$. A particle initialized at $\chi$ follows a deterministic trajectory afterwards --- we use $\u_t(\chi)$ to denote the location, at time $t$, of the particle that was initialized at $\chi$. Because $u_t(\cdot)$ is a deterministic function, we can use $\chi$ to index the particles at any time based on their initialization. The projected gradient flow on an infinite-width neural network and using population loss $L$ can be described as 
\begin{align}
\forall \chi \in \bbS^{d - 1}, u_0(\chi) & = \chi \label{eq:population_init} \comma \\
\frac{d\u_t(\chi)}{dt} & = - \pgrad_{\u_t} L(\rho_t) \label{eq:population_derivative} \comma \\
\text{ and }\rho_t & = \textup{distribution of $u_t(\chi)$ (where $\chi \sim \rho_0$)} \period \label{eq:rho_t} 
\end{align}

\noindent \textbf{Empirical, Finite-Width Dynamics.}
The training dynamics of a neural network with width $m$ can be described in this language by setting the initial distribution to be a discrete distribution uniformly supported on $m$ initial weight vectors. The update rule will maintain that at any time, the distribution of neurons is uniformly supported over $m$ items and thus still corresponds to a width-$m$ neural network. Let $\chi_1,\dots, \chi_m \iidsim \bbsd$ be the initial weight vectors of the width-$m$ neural network, and let $\rhohat_{0}  = \textup{unif}(\{\chi_1,\dots, \chi_m\}) $ be the uniform distribution over these initial neurons. We use $\chi \in\{\chi_1,\dots, \chi_m\}$ to index neurons and denote a single initial neuron as $\uhat_0(\chi)  = \chi$. Then, we can describe the projected gradient flow on the empirical loss $\Lhat$ with initialization $\{\chi_1,\dots, \chi_m\}$ by: 
\begin{align}
\frac{d\uhat_t(\chi)}{dt} & = - \pgrad_{\uhat_t(\chi)} \Lhat(\rhohat_t) \comma \\
    \text{ and } \rhohat_t & = \textup{distribution of $\uhat_t(\chi)$ (where $\chi \sim \rhohat_0$)}  \period\label{eqn:rhohat_t}
\end{align}
We first state our result on the population, infinite-width dynamics. 
\begin{theorem}[Population, infinite-width dynamics] \label{thm:pop_main_thm}
Suppose \Cref{assumption:target} holds, and let $\epsilon \in (0, 1)$ be the target error. Let $\rho_t$ be the result of projected gradient flow on the population loss, initialized with the uniform distribution on $\bbS^{d - 1}$, as defined in \Cref{eq:population_derivative}. Let $\Tstar = \inf \{t > 0 \mid L(\rho_t) \leq \frac{1}{2}(\sigmaCoeffTwo^2 + \sigmaCoeffFour^2) \epsilon^2\}$ be the earliest time $t$ such that a loss of at most $\frac{1}{2} (\sigmaCoeffTwo^2 + \sigmaCoeffFour^2) \epsilon^2$ is reached. Then, we have 
\begin{align}
    \Tstar \lesssim \frac{1}{\sigmaCoeffTwo^2 \gamma_2} \log d + \frac{(\log \log d)^{20}}{\sigmaCoeffTwo^2 \epsilon \gamma_2^8} \log\Big(\frac{\gamma_2}{\epsilon}\Big) \period \label{eqn:19}
\end{align}
\end{theorem}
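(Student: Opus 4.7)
The plan begins by exploiting the rotational symmetry of the setup around $\truevector$: the data distribution $\bbsd$ is rotation-invariant, the target $y(x) = h(\truevector^\top x)$ only depends on $\truevector^\top x$, and the initial distribution $\rho_0 = \textup{unif}(\bbsd)$ is rotation-invariant around $\truevector$. These symmetries are preserved by projected gradient flow, so $\rho_t$ is rotation-invariant around $\truevector$ at all times and is fully determined by the distribution of $\w_t(\chi) \defeq \truevector^\top \u_t(\chi)$ (the orthogonal component being uniform in the $(d-2)$-dimensional subspace of radius $\sqrt{1-\w_t(\chi)^2}$). Using a Funk--Hecke-style identity for the Legendre polynomials $\legn{k}$, the residual $f_{\rho_t}(x)-y(x)$ admits a clean Legendre expansion in $\truevector^\top x$ whose coefficients are affine in the scalar moments $\alpha_{k,t} \defeq \E_{\chi \sim \rho_0}[\legn{k}(\w_t(\chi))]$ for $k \in \{2,4\}$ (the $k=1,3$ modes vanish by the evenness assumption). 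Pairing this with $\sigma'(\u^\top x)$ and projecting onto the tangent component along $\truevector - \w \u$ gives a closed scalar ODE of the form $\dot{\w}_t = (1 - \w_t^2)\cdot G(\w_t;\alpha_{2,t},\alpha_{4,t})$, where $G$ is a cubic polynomial in $\w_t$. This reduces the problem to tracking a one-parameter family of particles coupled only through the two scalar moments $\alpha_{2,t},\alpha_{4,t}$.

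\textbf{Phase 1: escape the initialization saddle.} At initialization, $\w_0(\chi)=\truevector^\top\chi$ is of order $d^{-1/2}$, so the dynamics is near the unstable fixed point $\rho_0$ (every $\w$ tiny) and is governed at leading order by the degree-two term: $\dot{\w}_t \approx \sigmaCoeffTwo^2 \gamma_2 \w_t(1-\w_t^2) + \text{higher order in }\w_t$. This is a power-method-like update that grows $|\w_t|$ exponentially at rate $\Omega(\sigmaCoeffTwo^2\gamma_2)$. The goal of this phase is to drive $\alpha_{2,t}$ from $\Theta(1/d)$ up to a constant multiple of $\gamma_2$, which requires raising a representative fraction of the $\w$'s from scale $d^{-1/2}$ to scale $1$. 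A standard log bound on exponential escape then gives Phase 1 duration $\lesssim \log d/(\sigmaCoeffTwo^2\gamma_2)$, matching the first term of \eqref{eqn:19}. This step is comparatively routine, since $w_t$ remains small so the cubic $G$ is well-approximated by its linear term.

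\textbf{Phase 2: convergence to a global minimizer via a saddle-avoiding potential.} Once $\alpha_{2,t} \asymp \gamma_2$, the shape of the distribution of $\w$'s still needs to be reshuffled so that $(\alpha_{2,t},\alpha_{4,t})\to(\gamma_2,\gamma_4)$ and the residual Legendre modes vanish. The main novelty promised in the introduction enters here: construct a potential function $\potential(\rho_t)$ that (i) is monotone along the flow, (ii) upper-bounds the population loss up to an explicit polynomial factor, and (iii) is strictly bounded away from its target value near every bad saddle --- in particular near the configuration where all particles collapse to a Dirac mass at $\pm\truevector$ (which fails to represent $h$ whenever $\gamma_2<1$), and near configurations where the second moment is right but the fourth moment is mis-set. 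The gap $\gamma_4 > 1.1\cdot\gamma_2^2$ from \Cref{assumption:target} is crucial because it witnesses that the perfect-fit distribution of $\w$ is spread out and quantifies how far from delta-collapse the flow must stay. A careful ODE analysis of $(\alpha_{2,t},\alpha_{4,t})$ under the reduced dynamics then gives a per-time decrease of the loss on the order of $\sigmaCoeffTwo^2\gamma_2^8/(\log\log d)^{20}$ per unit of current loss; since the loss at the saddle-avoiding optimum is only polynomially (not strongly) curved, this yields the $1/\epsilon$ (rather than $\log(1/\epsilon)$) dependence in the second term of \eqref{eqn:19}.

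\textbf{Main obstacle.} Phases 1 and the qualitative shape of Phase 2 follow a now-familiar playbook, but the quantitative Phase 2 rate is the hard part: the Hessian at the global optimum has null directions and near-saddle configurations can slow progress arbitrarily. Identifying a potential $\potential$ that (a) is genuinely monotone under the projected gradient flow (and not merely under some modification such as added noise or restricted coordinates), (b) quantifies strict separation from all saddles simultaneously using only $\gamma_4-\gamma_2^2 = \Omega(\gamma_2^2)$, and (c) is strong enough to give a $\poly(\gamma_2,1/\log\log d)$ rather than $\exp(-\poly(d))$ rate, is where most of the work will be. Once $\potential$ is in hand, combining the two phases and converting bounds on $\potential$ back to bounds on $L(\rho_t)$ is a routine calculation.
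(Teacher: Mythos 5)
Your high-level architecture — symmetry reduction to a one-dimensional dynamics in $w=\truevector^\top u$, a power-method escape phase contributing $\tfrac{1}{\sigmaCoeffTwo^2\gamma_2}\log d$, and then a saddle-avoidance phase governed by some potential — matches the paper in outline, but the place where you explicitly defer ("identifying a potential $\potential$ ... is where most of the work will be") is precisely the content of the theorem, and the kind of potential you propose is not the one that works.

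You propose a Lyapunov functional $\potential(\rho_t)$ on the \emph{distribution} that is monotone along the flow, upper-bounds the loss, and is bounded away from its target near saddles. The paper does something structurally different: it defines a \emph{per-particle} potential $\potential(w)=\log\bigl(w/\sqrt{1-w^2}\bigr)$, chosen so that $\tfrac{d}{dt}\potential(w_t)$ equals the velocity divided by the factors $w(1-w^2)$ that vanish at the endpoints, and then tracks the \emph{pairwise gaps} $|\potential(w_t(\iota))-\potential(w_t(\iota'))|$. The key structural facts (Lemmas \ref{lemma:case_1_potential_increase} and \ref{lemma:case_2_potential_decrease}) are that these gaps are monotone in $t$: increasing whenever $D_{4,t}\le 0$ and decreasing whenever $D_{4,t}\ge 0$. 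The burn-in phase is further split: Phase~1 only carries particles from scale $d^{-1/2}$ to $1/\log d$ (not to scale~$1$ as you write — your time bound still comes out right only because $\log\sqrt{d}\asymp\log d$), and Phase~2 continues until one of $D_{2,t},D_{4,t}$ first hits zero. Phase~3 then bifurcates into two genuinely different arguments depending on which sign flips first: in Case~1 ($D_2$ reaches $0$ first) the increasing potential gaps yield a lower bound on the particle variance, hence a lower bound on mass far from the velocity's positive root; in Case~2 ($D_4$ reaches $0$ first) the potential gaps \emph{decrease}, so a different argument is needed — the constraint $\gamma_4\ge 1.1\gamma_2^2$ together with $\E[w^2]\le\gamma_2$, $\E[w^4]\ge\gamma_4$ forces a $\poly(\gamma_2)$ fraction of particles away from the root, and the decreasing gaps plus Markov bounds show particles cannot all collapse to $0$ or $1$. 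Because Case~2 additionally requires waiting for the root of the velocity to drop below $1/2$ and for the largest particles to leave a neighborhood of $1$, the time bound there picks up the extra $1/\epsilon$ factor (Lemmas \ref{lemma:phase3_case2_movement_away_from_1} and \ref{lemma:phase3_case2_overall_time}), not from polynomial (vs.\ strong) curvature of the loss at the optimum as you suggest.

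In short: the reduction and Phase~1 are essentially right, but the core saddle-avoidance argument — choosing $\potential$ to cancel the degenerate endpoint factors, the sign-dependent monotonicity of pairwise potential gaps, and the case split on which of $D_{2,t}, D_{4,t}$ crosses zero first — is absent, and the Lyapunov-on-$\rho_t$ route you sketch is not what the proof uses and is not obviously achievable.
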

The first term on the right-hand side of~\Cref{eqn:19} corresponds to the burn-in time for the network to reach a region around where the Polyak-Lojasiewicz condition holds. We divide our analysis of this burn-in phase into two phases, Phase 1 and Phase 2, and we obtain tight control on the factor by which the signal component, $\truevector^\top \u$, grows during Phase 1, while Phase 2 takes place for a comparatively short period of time. (This tight control is critical for our sample complexity bounds where we must show that the coupling error does not blow up too much --- see more discussion below Lemma~\ref{lemma:final_bounds_ABC}.) 
Phases 1 and 2 are mostly governed by the quadratic components in the activation and target functions, and the dynamics behave similarly to a power method update. 
After the burn-in phase, the dynamics operate for a short period of time (Phase 3) in a regime where the Polyak-Lojasiewicz condition holds. We explicitly prove the dynamics stay away from saddle points during this phase, as further discussed in \Cref{sec:population_loss}.

We note that Theorem~\ref{thm:pop_main_thm} provides a concrete polynomial runtime bound for projected gradient flow which is not achievable by prior mean-field analyses~\citep{chizat2018global, mei2018mean, abbe2022merged} using Wasserstein gradient flow techniques. For instance, while the population dynamics of \citet{abbe2022merged} only trains the hidden layer for $O(1)$ time, our projected gradient flow updates the hidden layer for $O(\log d)$ time. The main challenge in the proof is to deal with the saddle points that are not strict-saddle~\citep{ge2015escaping} in the loss landscape which cannot be escaped simply by adding noise in the parameter space~\citep{jin2021nonconvex, lee2016gradient, daneshmand2018escaping}.\footnote{There are a long list of prior works on the loss landscape of neural networks \citep{mei2018landscape, soltanolkotabi2017learning, vardi2021learning, kakade2011efficient,bietti2022learning, ben2020algorithmic, arous2021online,soudry2016no, tian2017analytical,ge2017learning,zhang2019learning,brutzkus2017globally}.}
Our analysis develops a fine-grained analysis of the dynamics that shows the iterates stay away from saddle points, which allows us to obtain the running time bound in \Cref{thm:pop_main_thm} which can be translated to a polynomial-width guarantee in \Cref{thm:empirical-main}. In contrast, ~\citet{wei2019regularization} escape the saddles by randomly replacing an exponentially small fraction of neurons, which makes the network require exponential width.

Next, we state the main theorem on projected gradient flow on empirical, finite-width dynamics. 
\begin{theorem}[Empirical, finite-width dynamics]\label{thm:empirical-main}
Suppose \Cref{assumption:target} holds. Suppose $\epsilon = \frac{1}{\log \log d}$ is the target error and $\Tstar$ is the running time defined in \Cref{thm:pop_main_thm}. Suppose $n \geq d^{\samplebound} (\log d)^{\Omega(1)}$ for any constant $\samplebound > 3$, and the network width $m$ satisfies $m \gtrsim d^{2.5 + \mu/2} (\log d)^{\Omega(1)}$, and $m \leq d^C$ for some sufficiently large universal constant $C$. Let $\rhohat_t$ be the projected gradient flow on the empirical loss, initialized with $m$ uniformly sampled weights, defined in \Cref{eqn:rhohat_t}. Then, with probability at least $1 - \frac{1}{d^{\Omega(\log d)}}$ over the randomness of the data and the initialization of the finite-width network, we have that
\begin{align}
L(\rhohat_{\Tstar})
& \lesssim (\sigmaCoeffTwo^2 + \sigmaCoeffFour^2) \epsilon^2 \period \label{eqn:thm_finite_sample}
\end{align}
\end{theorem}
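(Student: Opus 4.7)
The strategy is to couple each particle $\uhat_t(\chi_i)$ in the empirical, finite-width dynamics to the corresponding particle $\u_t(\chi_i)$ in the population, infinite-width dynamics (both initialized at the same $\chi_i$), and show that up to the horizon $\Tstar$ of \Cref{thm:pop_main_thm} the coupling error stays small with high probability over the draw of $x_1,\dots,x_n$ and $\chi_1,\dots,\chi_m$. Given such a particle-level coupling, Lipschitzness of the quartic $\sigma$ on $\bbsd$ converts $\|\uhat_t(\chi)-\u_t(\chi)\|$ into a bound on $|f_{\rhohat_t}(x)-f_{\rho_t}(x)|$, and hence $L(\rhohat_{\Tstar})\lesssim L(\rho_{\Tstar})+(\text{coupling term})$, which combined with \Cref{thm:pop_main_thm} gives \Cref{eqn:thm_finite_sample}.

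First I would decompose the one-step discrepancy $\pgrad_{\uhat_t(\chi)}\Lhat(\rhohat_t)-\pgrad_{\u_t(\chi)}L(\rho_t)$ into three pieces: (a) a discretization error from using the $m$-particle distribution $\rhohat_t$ rather than $\rho_t$ (at matched particles), (b) a statistical error from replacing the population expectation over $x\sim\bbsd$ by the empirical average over $x_1,\dots,x_n$, and (c) a perturbation term arising because $\uhat_t(\chi)$ has drifted from $\u_t(\chi)$. Terms (a) and (b) can be controlled uniformly in $t$ by standard concentration: since $\sigma$ is a fixed-degree polynomial on the sphere, each summand has polynomial magnitude, so Bernstein/Hoeffding together with an $\epsilon$-net over particle locations (and a discretization of $t\in[0,\Tstar]$ using that the Riemannian gradients are poly$(d)$-bounded in norm) yields a uniform bound of order $\widetilde{O}(d^{-(\samplebound-1)/2})$ for (b) and $\widetilde{O}(m^{-1/2}\poly(d))$ for (a). Term (c) yields a Gronwall-type coupling ODE whose amplification factor is determined by the Jacobian of the dynamics, and this is where the main difficulty lies.

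I would then mirror the three-phase decomposition of the population analysis. During Phase~1, where $\truevector^\top\u_t(\chi)$ grows from $\Theta(1/\sqrt d)$ to a constant and the dynamics behave like a power-method update driven by $\sigmaCoeffTwo\cdot\targetCoeffTwo\,\legn{2}$, a naive Gronwall bound on the coupling error would blow up at essentially the same exponential rate as the signal, wasting all of the $d^{-(\samplebound-1)/2}$ initial slack and forcing $\samplebound$ to be far larger than $3$. Following the strategy of \citet{li2020learning}, I would instead track the \emph{ratio} of the coupling error to the signal $\truevector^\top\u_t(\chi)$, separately for the $\truevector$-component and the orthogonal component of $\uhat_t(\chi)-\u_t(\chi)$. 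Because the leading-order Phase~1 update is common to the two trajectories up to the three error sources above, this ratio can be shown to grow only polylogarithmically in $d$, so that by the end of Phase~1 the coupling error is at most $\widetilde{O}(d^{-(\samplebound-1)/2})$ relative to the signal; choosing $\samplebound>3$ gives a coupling error that is $o(1)$ with room to spare, which is exactly what the stated sample complexity $n\ge d^{\samplebound}(\log d)^{\Omega(1)}$ buys.

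Phase~2 is a short burn-in (polylog in $d$ time) in which standard Gronwall suffices to carry the Phase~1 coupling forward, using the already-established uniform concentration bounds for (a) and (b). Phase~3 operates in the PL region produced by the population analysis; here the true dynamics contract the loss, and the coupling ODE is contractive up to the additive (a)+(b) terms, so the coupling error remains $O(\epsilon)$ throughout and $\rhohat_{\Tstar}$ inherits a loss of $O((\sigmaCoeffTwo^2+\sigmaCoeffFour^2)\epsilon^2)$. Choosing $m\gtrsim d^{2.5+\samplebound/2}(\log d)^{\Omega(1)}$ ensures the (a) error is dominated by the (b) error. The main obstacle throughout is the Phase~1 argument: the signal-relative bookkeeping requires showing that the direction-$\truevector$ component of the deviation grows no faster than the signal itself, which in turn requires a careful expansion of $\sigma'(\u^\top x)$ into its Legendre components and controlling the cross-terms between the quadratic and quartic parts of the dynamics, since the quartic contribution could in principle amplify the error at a rate different from the quadratic power-method rate that governs the signal.
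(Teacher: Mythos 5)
Your proposal matches the paper's own strategy very closely: the paper also couples each empirical finite-width particle $\uhat_t(\chi)$ to its population counterpart $\u_t(\chi)=\ubar_t(\chi)$, decomposes $\frac{d}{dt}\|\uhat_t-\ubar_t\|_2^2$ into the same three error sources $A_t$ (dynamics drift), $B_t$ (finite width), $C_t$ (finite samples) as in \Cref{lemma:delta_decomposition_ABC}, and during Phase~1 matches the growth rate of the coupling error to the growth rate of the signal via the precise constant $4\sigmaCoeffTwo^2|D_{2,t}|$ in \Cref{lemma:final_bounds_ABC} (your ``signal-relative bookkeeping''), while Phases~2--3 are absorbed via a crude Gronwall bound over a $\poly(\log\log d)$ horizon. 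The one place your sketch is vaguer than the paper is the concentration step for $C_t$: a naive $\epsilon$-net over $\bbsd$ gives bounds at the scale of the whole sphere, whereas \Cref{lemma:ct_bound}/\Cref{lemma:main_concentration_lemma} crucially rewrite $\uhat=\ubar+\delta$, expand polynomially, and extract a factor of $\|\delta_t\|_2$ per occurrence of $\delta$ so that the statistical error is proportional to the coupling error rather than a constant; without this, the bound would not be tight enough to close the Phase~1 induction at $\mu>3$.
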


Plugging in $\mu=3.1$, \Cref{eqn:thm_finite_sample} suggests that, when the network width is at least $d^{4.05}$ (up to logarithmic factors), the empirical dynamics of gradient descent could achieve $\sigmaCoeffMax^2 (1/\log\log d)^2$ population loss with $d^{3.1}$ samples (up to logarithmic factors).

In our analysis, we will establish a coupling between neurons in the empirical dynamics and neurons in the population dynamics. The main challenge is to bound the coupling error during Phase 1 where the population dynamics are similar to the power method.
During this phase, we show that the coupling error (i.e., the distance between coupled neurons) remains small by showing that it can be upper bounded in terms of the growth of the signal $q_{\star}^Tu$ in the population dynamics. Such a delicate relationship between the growth of the error and that of the signal is the main challenge to proving a sample complexity bound better than NTK. Even an additional constant factor in the growth rate of the coupling error would lead to an additional $\poly(d)$ factor in the sample complexity. Prior work (e.g. \citet{mei2019mean, abbe2022merged}) also establishes a coupling between the population and empirical dynamics. However, these works obtain bounds on the coupling error which are exponential in the running time --- specifically, their bounds have a factor of $e^{KT}$ where $K$ is a universal constant and $T$ is the time. In many settings, including ours, $\Omega(\log d)$ time is necessary to achieve a non-trivial error, (as we further discuss in \Cref{sec:population_loss}) and thus %
this would lead to a $d^{O(1)}$ bound on the sample complexity, where $O(1)$ is a unspecified and likely loose constant, which cannot outperform NTK. 
Our work addresses this challenge by comparing the growth of the coupling error with the growth of the signal, which enables us to control the constant factor in the growth rate of the coupling error.

\paragraph{Projected Gradient Descent with $1/\poly(d)$ Step Size.} 
As a corollary of \Cref{thm:empirical-main}, we show that projected gradient descent with a small learning rate can also achieve low population loss in $\poly(d)$ iterations. We first define the dynamics of projected gradient descent as follows. As before, we let $\chi_1, \ldots, \chi_m \iidsim \bbS^{d - 1}$ be the initial weight vectors, and we let $\rhoGD = \textup{unif}(\{\chi_1, \ldots, \chi_m\})$ denote the uniform distribution over these vectors. Thus, we can write a single neuron at initialization as $\uGD_0(\chi) = \chi$ for $\chi \in \{\chi_1, \ldots, \chi_m\}$. The dynamics of projected gradient descent, on a finite-width neural network and using the empirical loss $\Lhat$, can be then described as
\begin{align}
\uGD_{t + 1}(\chi) & = \frac{\uGD_t(\chi) - \eta \cdot \pgrad_{\uGD_t(\chi)} \Lhat(\rhoGD_t)}{\|\uGD_t(\chi) - \eta \cdot \pgrad_{\uGD_t(\chi)} \Lhat(\rhoGD_t)\|_2} \comma \\
\text{ and } \rhoGD_t & = \textup{distribution of $\uGD_t(\chi)$ (where $\chi \sim \rhoGD_0$)}  \period\label{eqn:GD_t}
\end{align}
where $\eta > 0$ is the learning rate.\footnotemark \footnotetext{See Chapter 3, page 20 of \citet{boumal2023intromanifolds}, accessed on August 21, 2023.} We show that projected gradient descent with a $1/\poly(d)$ learning rate can achieve a low population loss in $\poly(d)$ iterations:

\begin{theorem}[Projected Gradient Descent] 
\label{thm:projectedGD_main}
Suppose we are in the setting of \Cref{thm:empirical-main}. Let $\rhoGD_t$ be the discrete-time projected gradient descent on the empirical loss, initialized with $m$ weight vectors sampled uniformly from $\bbS^{d - 1}$, defined in \Cref{eqn:GD_t}. Finally, assume that $\eta \leq \frac{1}{(\sigmaCoeffTwo^2 + \sigmaCoeffFour^2) d^B}$ for a sufficiently large universal constant $B$. Then,
\begin{align}
L(\rhoGD_{\Tstar/\eta}) \lesssim (\sigmaCoeffTwo^2 + \sigmaCoeffFour^2) \epsilon^2 \period
\end{align}
\end{theorem}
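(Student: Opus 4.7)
The plan is to couple the discrete projected gradient descent trajectory $\{\rhoGD_t\}$ with the continuous projected gradient flow trajectory $\{\rhohat_t\}$ on the empirical loss from \Cref{thm:empirical-main}, using the same initialization $\chi_1, \ldots, \chi_m$. Once we establish that $\rhoGD_{\Tstar/\eta}$ is close to $\rhohat_{\Tstar}$ in a particle-wise sense, we transfer the population-loss bound $L(\rhohat_{\Tstar}) \lesssim (\sigmaCoeffTwo^2 + \sigmaCoeffFour^2)\epsilon^2$ to the discrete iterate. Concretely, the target estimate is
$$\max_\chi \norm{\uGD_{\lfloor \Tstar/\eta \rfloor}(\chi) - \uhat_{\Tstar}(\chi)} \leq d^{-\Omega(1)}\,,$$
which, combined with the fact that $\sigma$ is a bounded-degree polynomial and all particles lie on $\bbS^{d - 1}$, yields $|L(\rhoGD_{\Tstar/\eta}) - L(\rhohat_{\Tstar})| \ll (\sigmaCoeffTwo^2 + \sigmaCoeffFour^2) \epsilon^2$ because $\epsilon^2 = 1/(\log\log d)^2 \gg d^{-\Omega(1)}$.

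The coupling itself is a standard Euler-discretization argument adapted to the sphere. One step of projected gradient descent from $\uGD_t(\chi)$ differs from the flow advanced by time $\eta$ from the same point by a local truncation error of order $\eta^2 M$, where $M$ bounds the time-derivative of $\pgrad_u \Lhat(\rho)$ along trajectories. The sphere projection introduces no singularity since $\pgrad_u \Lhat(\rho) \perp u$ implies $\norm{u - \eta \pgrad_u \Lhat(\rho)} \geq 1$, so the projection map $v \mapsto v/\norm{v}$ is smooth along the iterates and contributes only an additional $O(\eta^2)$ error. Letting $e_t \defeq \max_\chi \norm{\uGD_t(\chi) - \uhat_{t\eta}(\chi)}$, a Gr\"onwall-type recursion $e_{t+1} \leq (1 + L\eta) e_t + C \eta^2$ iterated over $\Tstar/\eta$ steps gives $e_{\Tstar/\eta} \leq (C/L)\, \eta \, e^{L \Tstar}$.

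For this to be $d^{-\Omega(1)}$ with $\eta \leq 1/((\sigmaCoeffTwo^2 + \sigmaCoeffFour^2) d^B)$, it suffices to argue that $L \Tstar \lesssim \log d$ and $C \leq d^{O(1)}$, which reduces to Lipschitz and acceleration estimates on $\pgrad_u \Lhat(\rho)$ viewed jointly as a function of the particle $u$ and of the whole system $\rho$. Because $\sigma$ is a fourth-degree polynomial and the $1/m$ mean-field averaging cancels the sum over $m$ when all particles are perturbed simultaneously, the Lipschitz constant in the uniform-over-particles norm is dimension-free up to factors depending on $\sigmaCoeffTwo$, $\sigmaCoeffFour$, and on $\max_i |f_{\rhohat_t}(x_i) - y(x_i)|$. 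Since $\Tstar \lesssim \log d \cdot (\log\log d)^{O(1)}/(\sigmaCoeffTwo^2 \gamma_2^8 \epsilon)$, so long as the residuals remain $d^{O(1)}$-bounded along the trajectory — a property that can be read off from the control on $\rhohat_t$ developed inside the proof of \Cref{thm:empirical-main} — taking $B$ a sufficiently large universal constant makes $\eta \cdot e^{L \Tstar} \cdot C$ smaller than any prescribed inverse polynomial in $d$.

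The main obstacle is not the discretization mechanics per se but establishing the uniform-in-$t$ Lipschitz, acceleration, and residual bounds along the trajectory with constants that do not blow up after iterating Gr\"onwall through the $O(\log d)$ time horizon. Rather than re-deriving such stability bounds from scratch, the natural route is to recycle the ingredients already present in the proof of \Cref{thm:empirical-main} — in particular, the high-probability bounds on $\max_i |f_{\rhohat_t}(x_i) - y(x_i)|$ and on the distribution of $\truevector^\top \uhat_t(\chi)$ — and translate them into the required bounds on the empirical vector field. No new phenomenon is needed; the corollary is a quantitative but essentially routine upgrade of \Cref{thm:empirical-main} via Euler's method on $\bbS^{d - 1}$.
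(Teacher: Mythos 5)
Your overall plan matches the paper's proof in \Cref{sec:projectedGD_smallLR}: define the coupling error $\deltaSingleGD_t(\chi) = \uGD_t(\chi) - \uhat_{\eta t}(\chi)$, use the observation that $\|u - \eta\, \pgrad_u \Lhat(\rho)\|_2 \geq 1$ so the spherical projection is benign, derive a one-step recursion of the form $\max_\chi \|\deltaSingleGD_{t+1}\|_2^2 \leq (1 + O(\eta L))\max_\chi \|\deltaSingleGD_t\|_2^2 + O(\eta^2 C)$, iterate, and then transfer the particle-wise bound to a function-value bound via \Cref{lemma:point_error_to_fn_error}. The arithmetic $\eta \cdot e^{L\Tstar} \cdot C \ll d^{-\Omega(1)}$ with $\eta = d^{-B}$ also requires exactly $L \Tstar \lesssim \log d$ and $C \leq d^{O(1)}$, as you say.

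The gap is in how you argue the growth rate $L$ is dimension-free. You bound $L$ by something proportional to $\max_i |f_{\rhohat_t}(x_i) - y(x_i)|$, and later only require that these residuals ``remain $d^{O(1)}$-bounded.'' But then $L = (\sigmaCoeffTwo^2 + \sigmaCoeffFour^2) d^{O(1)}$ and $e^{L\Tstar} = e^{d^{O(1)}\log d}$, which is super-exponential and cannot be beaten by $\eta = 1/((\sigmaCoeffTwo^2+\sigmaCoeffFour^2)d^B)$ for any universal constant $B$. Even the optimistic high-probability estimate $\max_i |\e^\top x_i| \lesssim (\log d)/\sqrt{d}$ only gives residuals of size $(\log d)^{O(1)}$ (since $\PfourDbar(\e^\top x_i)$ has magnitude $\sqrt{N_{4,d}}\cdot O((\log d/\sqrt{d})^4) = (\log d)^{O(1)}$), and already one extra $\log d$ factor in $L$ pushes $L\Tstar$ to $(\log d)^{1+\Omega(1)}$, making $e^{L\Tstar}$ super-polynomial. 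So a worst-case-over-samples Lipschitz bound is quantitatively insufficient here, and the ``mean-field averaging cancels the sum'' intuition does not rescue it because the offending factor comes from $\sup_{u,x} |\sigma'(u^\top x)|$ and $\sup_i |y(x_i)|$, not from the sum over $m$.

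The paper avoids this by never invoking a worst-case-over-samples Lipschitz constant. Instead, \Cref{lemma:discretized_gradient_concentration} directly controls the directional quantity $|(\pgrad_{\uGD_t}\Lhat(\rhoGD_t) - \pgrad_{\uhat_{\eta t}}\Lhat(\rhohat_{\eta t}))\cdot \deltaSingleGD_t|$ by $(\sigmaCoeffTwo^2+\sigmaCoeffFour^2)\max_\chi\|\deltaSingleGD_t(\chi)\|_2^2$ with no $d$- or $\log d$-factors: it splits into a population-mean term, which is bounded by the genuinely dimension-free Lipschitz estimate \Cref{lemma:gradient_expectation_inner_product} for the expected (not empirical) gradient, plus a concentration error that is controlled via \Cref{lemma:main_concentration_lemma} and absorbed by $n \gtrsim d^3(\log d)^C$. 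The coarse $\poly(d)$ bound (\Cref{lemma:gradient_bounds}, \Cref{lemma:hat_grad_diff}) is reserved for the $\eta^2$-term only, where it is harmless. This is not material you can ``read off'' from the proof of \Cref{thm:empirical-main}: although the tool (\Cref{lemma:main_concentration_lemma}) is the same, the monomials now involve $\deltaSingleGD_t$ and $\deltaSingleGD_t'$ and need a fresh expansion and bookkeeping. So the outline is right, but the mechanism by which $L$ avoids $\log d$-factors is the essential new step, and the one you sketch would not give it.
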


\Cref{thm:projectedGD_main} follows from \Cref{thm:empirical-main} together with a standard inductive argument to bound the discretization error. The full proof is in \Cref{sec:projectedGD_smallLR}.

The following theorem states that in the setting of \Cref{thm:empirical-main}, kernel methods with any inner product kernel require $\Omega(d^4(\ln d)^{-6})$ samples to achieve a non-trivial population loss. (Note that the zero function has loss $\E_{x\sim \Sp}[y(x)^2]\ge (\legendreCoeff{h}{4})^2$.)
\begin{theorem}[Sample complexity lower bound for kernel methods]\label{thm:lb-main}
Let  $K$ be an inner product kernel. Suppose $d$ is larger than a universal constant and $n\lesssim d^4(\ln d)^{-6}$. 
Then, with probability at least $1/2$ over the randomness of $n$ i.i.d data points $\{x_i\}_{i=1}^n$ drawn from $\bbsd$, any estimator of $y$ of the form $f(x)=\sum_{i=1}^{n}\beta_i \Kernel(x_i,x)$ of $\{x_i\}_{i=1}^{n}$ must have a large error: 
\begin{align}
\textstyle{\E_{x\sim \Sp}(y(x)-f(x))^2\ge \frac{3}{4}(\legendreCoeff{h}{4})^2.}
\end{align}
\end{theorem}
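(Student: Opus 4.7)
The plan is to reduce the lower bound to a statement about approximating the degree-4 spherical harmonic component of $y$ in a low-dimensional subspace, and then invoke Schur's lemma on the rotation group.

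First I would expand the inner product kernel via its Legendre decomposition as $\Kernel(x,y) = \sum_{k\ge 0} c_k \legn{k}(\dotp{x}{y})$, so that each basis function $\Kernel(x_i,\cdot)$ has a degree-4 spherical harmonic component proportional to $\legn{4}(\dotp{x_i}{\cdot})$. Since spherical harmonics of different degrees are orthogonal in $L^2(\Sp)$, Parseval gives $\norm{y-f}_{L^2(\Sp)}^2 \ge \norm{y^{(4)}-f^{(4)}}_{L^2(\Sp)}^2$, where the superscript $(4)$ denotes projection onto degree-4 harmonics. By hypothesis $y^{(4)}(x) = \legendreCoeff{h}{4}\cdot \legn{4}(\dotp{\truevector}{x})$, while $f^{(4)}$ lies in the random subspace
$V_4(X) \defeq \mathrm{span}\{\legn{4}(\dotp{x_i}{\cdot}) : 1 \le i \le n\}$
of dimension at most $n$, contained in the full degree-4 harmonic space $V^4 \subset L^2(\Sp)$ of dimension $N_{4,d}$. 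Since $V_4(X)$ is a linear subspace and $\norm{\legn{4}(\dotp{\truevector}{\cdot})}_{L^2(\Sp)} = 1$, the best possible approximation satisfies $\inf_f \norm{y^{(4)}-f^{(4)}}_{L^2(\Sp)}^2 = (\legendreCoeff{h}{4})^2(1-\norm{\Pi_{V_4(X)} u_*}^2)$, where $u_* \defeq \legn{4}(\dotp{\truevector}{\cdot})$. It therefore suffices to show $\norm{\Pi_{V_4(X)} u_*}^2 < 1/4$ with probability at least $1/2$.

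Next I would invoke Schur's lemma. The space $V^4$ is an irreducible representation of $SO(d)$ under the natural action $\rho(R)g = g(R^{-1}\cdot)$. The projection transforms equivariantly, $\rho(R)\Pi_{V_4(X)}\rho(R)^{-1} = \Pi_{V_4(RX)}$, because $\rho(R)\legn{4}(\dotp{x_i}{\cdot}) = \legn{4}(\dotp{Rx_i}{\cdot})$. Since the joint distribution of $(x_1,\dots,x_n)$ is rotationally invariant, the expectation $\E[\Pi_{V_4(X)}]$ commutes with every $\rho(R)$, and Schur's lemma then yields $\E[\Pi_{V_4(X)}] = c\, I$ on $V^4$. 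Taking traces gives $c = \E[\dim V_4(X)]/N_{4,d} \le n/N_{4,d}$, hence $\E\norm{\Pi_{V_4(X)}u_*}^2 = c\norm{u_*}^2 \le n/N_{4,d}$.

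Finally, Markov's inequality yields $\Prob[\norm{\Pi_{V_4(X)}u_*}^2 \ge 1/4] \le 4n/N_{4,d}$. Since $N_{4,d} \asymp d^4$ and $n \lesssim d^4(\ln d)^{-6}$, the right-hand side is $O((\ln d)^{-6})$, which is strictly less than $1/2$ for $d$ larger than an absolute constant. On the complementary event, combining with the first paragraph yields $\E_{x \sim \Sp}(y(x)-f(x))^2 \ge \tfrac{3}{4}(\legendreCoeff{h}{4})^2$ as required. The main subtlety to verify carefully is the equivariance $\rho(R)\Pi_{V_4(X)}\rho(R)^{-1} = \Pi_{V_4(RX)}$ together with the irreducibility of $V^4$ as an $SO(d)$-module; both are classical facts about spherical harmonics (cf.\ \Cref{app:SH}). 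The slack in the $(\ln d)^{-6}$ factor is generous relative to the essential combinatorial bound $n \ll N_{4,d} \asymp d^4$ that drives the argument.
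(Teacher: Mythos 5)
Your proof is correct, and it takes a genuinely different route from the paper's.

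The paper's proof (via \Cref{lem:lb-technical-main}) works directly with the coefficient vector $\beta$. It argues that $\|g-g^\star\|^2<3/4$ forces $\langle g,g^\star\rangle \ge \|g\|_2/2$, then shows two things with high probability: (i) the degree-$4$ kernel Gram matrix $[\Pkd(\dotp{x_i}{x_j})]_{i,j}$ has smallest eigenvalue at least $1/2$, via a heavy-tailed-columns matrix concentration bound (\Cref{thm:heavy-tail-columns}), giving $\|g\|_2^2 \gtrsim \|\beta\|_2^2$; and (ii) each $\Pkd(\dotp{x_i}{u})$ is tiny (\Cref{lem:max-legendre-poly}), giving $\langle g,g^\star\rangle \lesssim \|\beta\|_2 \sqrt{n/N_{k,d}} \cdot \mathrm{polylog}$. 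These two bounds are in tension, yielding the lower bound; the logarithmic slack in the theorem statement, $n\lesssim d^4(\ln d)^{-6}$, is an artifact of the concentration machinery.

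You instead reduce to the scalar quantity $\|\Pi_{V_4(X)}u_\star\|^2$ and observe that, by rotation-invariance of the sample and the equivariance $\rho(R)\Pi_{V_4(X)}\rho(R)^{-1}=\Pi_{V_4(RX)}$, the expected projection $\E[\Pi_{V_4(X)}]$ commutes with the irreducible $SO(d)$-action on $\mathbb{Y}_{4,d}$. Since this expectation is symmetric PSD, its eigenspaces are $SO(d)$-invariant, so by irreducibility it is a scalar $cI$; taking traces gives $c=\E[\dim V_4(X)]/N_{4,d}\le n/N_{4,d}$, and Markov finishes. This is shorter, cleaner, and in fact sharper: it needs no logarithmic slack at all --- $n\le c_0 N_{4,d}$ for a small absolute constant $c_0$ already suffices, which implies the paper's hypothesis. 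The trade-off is that your argument is non-constructive (it uses irreducibility of $\mathbb{Y}_{4,d}$ as an $SO(d)$-module, which the paper's self-contained spherical-harmonics appendix does not explicitly record, though it is classical), whereas the paper's argument is elementary given its concentration toolbox and naturally yields the explicit constant $(8k)^{-(3k+1)}(\ln d)^{-(k+2)}$ uniformly over the degree $k$. Both arguments generalize to arbitrary fixed $k$ with the same $n\lesssim N_{k,d}$ scaling.

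One small point worth stating explicitly if you write this up: when the kernel's degree-$4$ Gegenbauer coefficient $\legendreCoeff{\kappa}{4}$ is nonzero, $f^{(4)}$ ranges over all of $V_4(X)$ as $\beta$ varies, so the best-approximation identity $\inf_f\|y^{(4)}-f^{(4)}\|^2=(\legendreCoeff{h}{4})^2(1-\|\Pi_{V_4(X)}u_\star\|^2)$ holds exactly; when $\legendreCoeff{\kappa}{4}=0$, the bound is trivially $(\legendreCoeff{h}{4})^2$. You handle this implicitly, but it deserves a sentence.
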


\Cref{thm:empirical-main} and \Cref{thm:lb-main} together prove a clear sample complexity separation between gradient flow and NTK.
When $\legendreCoeff{h}{4}\asymp \sigmaCoeffMax$ (i.e., $\gamma_2,\gamma_4\asymp 1$),
gradient flow with finite-width neural networks can achieve $(\legendreCoeff{h}{4})^2(\log\log d)^{-2}$ population error with $d^{3.1}$ samples, while kernel methods with any inner product kernel (including NTK) must have an error at least $(\legendreCoeff{h}{4})^2/2$ with $d^{3.9}$ samples.

On a high level, \citet{abbe2022merged,abbe2023sgd} prove similar lower bounds in a different setting where the target function is drawn randomly and the data points can be arbitrary (also see \citet{kamath2020approximate,hsu2021approximation,hsu2021dimension}). In comparison, our lower bound works for a fixed target function by exploiting the randomness of the data points. In fact, we can strengthen \Cref{thm:lb-main} by proving a $\Omega(\legendreCoeff{h}{k}^{2})$ loss lower bound for any universal constant $k\ge 0$ (\Cref{thm:lb}). We defer the proof of \Cref{thm:lb-main} to \Cref{app:lb-sketch}.

\section{Analysis of Population Dynamics}\label{sec:population-dynamics}

In this section, we give an overview of the analysis for the population infinite-width dynamics $\rho_t$ (\Cref{thm:pop_main_thm}). The full proof is given in \Cref{appendix:population_case}.

\subsection{Symmetry of the Population Dynamics} \label{subsec:pop_symmetry}

A key observation is that due to the symmetry in the data, the population dynamics $\rho_t$ has a symmetric distribution in the subspace orthogonal to the vector $\truevector$. As a result, the dynamics $\rho_t$ can be precisely characterized by the dynamics in the direction of $\truevector$. 

Recall that $\w = \truevector^\top u$ denotes the projection of a weight vector $\u$ in the direction of $\truevector$. Let $\otherCoords = (I-\truevector\truevector^\top)u$ be the remaining component. For notational convenience, without loss of generality, we can assume that {\boldmath $\truevector = e_1$} and write $\u = (\w, \otherCoords)$, where $\w \in [-1, 1]$ and $\otherCoords \in \sqrt{1 - \w^2}\cdot \bbS^{d - 2}$. \textit{We will use this convention throughout the rest of the paper}.  We will use $\w_t(\chi)$ to refer to the first coordinate of the particle $\u_t(\chi)$, and $\z_t(\chi)$ to refer to the last $(d - 1)$ coordinates.

\begin{definition}[Rotational invariance and symmetry] \label{def:rot_inv_symmetry}
We say a neural network $\rho$ is \em rotationally invariant \em if for $\u = (\w, \z)\sim \rho$, the distribution of $\z \mid \w$ is uniform over $\sqrt{1 - \w^2} \bbS^{d-2}$ almost surely. We also say the network is \em symmetric \em w.r.t a variable $\w$ if the density of $\w$ is an even function.  
\end{definition}

We note that any polynomial-width neural network (e.g., $\hat{\rho}$) is very far from rotationally invariant, and therefore the definition is specifically used for population dynamics.

If $\rho$ is rotationally invariant and symmetric, then $L(\rho)$ has a simpler form that only depends on the marginal distribution of $\w$.

\begin{lemma} \label{lemma:population_loss_formula}
Let $\rho$ be a rotationally invariant neural network. Then, for any target function $h$ and any activation $\sigma$,
\begin{align} \label{eq:rot_inv_loss_formula}
\textstyle{L(\rho)
 = \frac{1}{2}\sum_{k = 0}^\infty \Big( \legendreCoeff{\sigma}{k}\E_{u \sim \rho} [P_{k, d}(w)] - \hat{\target}_{k,d} \Big)^2 \period}
\end{align}
In addition, suppose $\target$ and $\sigma$ satisfy \Cref{assumption:target} and $\rho$ is symmetric. Then
\begin{align} \label{eq:symmetric_loss_formula}
\textstyle{L(\rho)
 = \frac{\legendreCoeff{\sigma}{2}^2}{2} \Big( \E_{u \sim \rho} [P_{2, d}(w)] - \gamma_2\Big)^2 + \frac{\legendreCoeff{\sigma}{4}^2}{2} \Big( \E_{u \sim \rho} [P_{4, d}(w)] - \gamma_4 \Big)^2 \period}
\end{align}
\end{lemma}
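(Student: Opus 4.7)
\medskip

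\noindent\textbf{Proof plan.} My plan is to expand the squared loss, use the Legendre/spherical harmonic expansions of $\sigma(u^\top x)$ and $h(q_\star^\top x)$ to get clean orthogonality relations, and then exploit the rotational invariance of $\rho$ in the hyperplane orthogonal to $q_\star$ to reduce everything to the marginal distribution of $w = q_\star^\top u$. Throughout I will use $q_\star = e_1$ as fixed in \Cref{subsec:pop_symmetry}.

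First I would write $2L(\rho)=\E_x f_\rho(x)^2 - 2\E_x f_\rho(x)y(x) + \E_x y(x)^2$ and expand each piece via $\sigma(u^\top x)=\sum_k \sigmaCoeffMax\text{-coefficients}\; \legendreCoeff{\sigma}{k}\legn{k}(u^\top x)$ and $h(q_\star^\top x)=\sum_k \legendreCoeff{h}{k}\legn{k}(q_\star^\top x)$. The key orthogonality fact, which I would cite from \Cref{app:SH} (it follows from the addition theorem for spherical harmonics plus orthonormality of the $Y_{k,\ell}$), is
\begin{align*}
\Exp_{x\sim\bbsd}\bigl[\legn{k}(u^\top x)\legn{j}(v^\top x)\bigr] \;=\; \delta_{jk}\, \leg{k}(u^\top v) \qquad \text{for all } u,v\in\bbsd .
\end{align*}
Applying this with the double expectation over $u,u'\sim\rho$ gives $\E_x f_\rho(x)^2 = \sum_k \legendreCoeff{\sigma}{k}^2\, \E_{u,u'\sim\rho}[\leg{k}(u^\top u')]$, and similarly $\E_x f_\rho(x)y(x) = \sum_k \legendreCoeff{\sigma}{k}\legendreCoeff{h}{k}\,\E_{u\sim\rho}[\leg{k}(q_\star^\top u)]$ and $\E_x y(x)^2 = \sum_k \legendreCoeff{h}{k}^2$ since $\leg{k}(1)=1$.

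The crucial step is to show that rotational invariance collapses the double expectation into the square of a single expectation: for every $k$,
\begin{align*}
\Exp_{u,u'\sim\rho}\bigl[\leg{k}(u^\top u')\bigr] \;=\; \bigl(\Exp_{u\sim\rho}[\leg{k}(w)]\bigr)^2 .
\end{align*}
To see this I would condition on the marginals $w,w'$ of two independent draws. Writing $u=(w,\sqrt{1-w^2}\,\xi)$ and $u'=(w',\sqrt{1-w'^2}\,\xi')$ with $\xi,\xi'$ independent and uniform on $\bbS^{d-2}$ (by rotational invariance), the inner product equals $ww'+\sqrt{(1-w^2)(1-w'^2)}\,\xi^\top\xi'$. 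Then I invoke the product (Funk--Hecke) formula for zonal Legendre polynomials,
\begin{align*}
\Exp_{\xi,\xi'\sim\bbS^{d-2}}\Bigl[\leg{k}\bigl(ww'+\sqrt{(1-w^2)(1-w'^2)}\,\xi^\top\xi'\bigr)\Bigr] \;=\; \leg{k}(w)\leg{k}(w') ,
\end{align*}
which I would quote from \Cref{app:SH} (it is the standard consequence of $\Pkd$ being a zonal spherical function, proved via the addition theorem applied on $\bbS^{d-2}\hookrightarrow\bbsd$). Taking the outer expectation over $w,w'$ then yields the desired factorization. Combining the three pieces completes the square and gives \Cref{eq:rot_inv_loss_formula}:
\begin{align*}
2L(\rho) \;=\; \sum_{k=0}^\infty \bigl(\legendreCoeff{\sigma}{k}\Exp_{u\sim\rho}[\leg{k}(w)]-\legendreCoeff{h}{k}\bigr)^2 .
\end{align*}

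For the second formula \Cref{eq:symmetric_loss_formula}, I would restrict the sum using \Cref{assumption:target}: $\hat\sigma_{0,d}=\hat h_{0,d}=0$, $\hat h_{1,d}=\hat h_{3,d}=0$, and $\hat\sigma_{k,d}=0$ for $k\ge 5$ since $\sigma$ is quartic. The remaining terms with odd $k\in\{1,3\}$ vanish because symmetry of $\rho$ in $w$ (i.e.\ $w$ has an even density) and the fact that $\leg{k}$ is an odd function for odd $k$ together force $\E_{u\sim\rho}[\leg{k}(w)]=0$. Only $k=2$ and $k=4$ survive, and factoring $\legendreCoeff{\sigma}{2}^2$ and $\legendreCoeff{\sigma}{4}^2$ out of the squares (using $\gamma_2=\legendreCoeff{h}{2}/\legendreCoeff{\sigma}{2}$ and $\gamma_4=\legendreCoeff{h}{4}/\legendreCoeff{\sigma}{4}$) gives exactly \Cref{eq:symmetric_loss_formula}. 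The main obstacle is cleanly justifying the product formula for $\leg{k}$; beyond that everything is bookkeeping on orthogonal expansions.
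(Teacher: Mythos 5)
Your proof is correct and uses essentially the same machinery as the paper: the orthogonality relation $\E_x[\Pkdbar(u^\top x)\overline{P}_{k',d}(v^\top x)]=\ind{k=k'}\Pkd(u^\top v)$ (the paper's \Cref{lem:legendre-poly-inner-product}) together with the zonal product formula for $\Pkd$ (Eq.\ (2.167) of Atkinson--Han, which underlies the paper's \Cref{lemma:1d_rotational_invariance}). The only difference is order of operations — the paper first applies rotational invariance to collapse $\E_{u\sim\rho}[\Pkdbar(u^\top x)]$ into $\E_u[\Pkd(w)]\Pkdbar(e_1^\top x)$ and then integrates the single squared sum over $x$, whereas you expand $2L=\E f^2-2\E fy+\E y^2$ first, which produces a double expectation $\E_{u,u'\sim\rho}[\Pkd(u^\top u')]$ that you then factorize via the product formula conditioned on $(w,w')$; both are valid and yield the same final line.
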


The proof of \Cref{lemma:population_loss_formula} is deferred to \Cref{app:pf-lem-plf}. \Cref{eq:rot_inv_loss_formula} says that if $\rho$ is rotationally invariant, then $L(\rho)$ only depends on the marginal distribution of $\w$. \Cref{eq:symmetric_loss_formula} says that if the distribution of $\w$ is additionally symmetric and $\sigma$ and $h$ are quartic,  then the terms corresponding to odd $k$ and the higher order terms for $k > 4$ in \Cref{eq:rot_inv_loss_formula} vanish. Note that $P_{2,d}(s) \approx s^2$ and $P_{4,d}(s) \approx s^4$ by \Cref{eq:legendre_polynomial_2_4}. Thus, $L(\rho)$ essentially corresponds to matching the second and fourth moments of $\w$ to some desired values $\gamma_2$ and $\gamma_4$. Inspired by this lemma, we define the following key quantities: for any time $t \geq 0$, we define $D_{2, t} = \E_{\u \sim \rho_t}[\PtwoD(\w)] - \gamma_2$ and $D_{4, t} = \E_{\u \sim \rho_t}[\PfourD(\w)] - \gamma_4$, where $\rho_t$ is defined according to the population, infinite-width dynamics (\Cref{eq:population_init}, \Cref{eq:population_derivative} and \Cref{eq:rho_t}).

We next show that the rotational invariance and symmetry properties of $\rho_t$ are indeed maintained:

\begin{lemma}\label{lemma:rho_rotational_invariant}
Suppose we are in the setting of \Cref{thm:pop_main_thm}. At any time $t \in [0, \infty)$, $\rho_t$ is symmetric and rotationally invariant.
\end{lemma}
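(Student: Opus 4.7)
The plan is to exhibit a group of orthogonal symmetries of the loss under which the projected gradient flow is equivariant, and combine that equivariance with the $O(d)$-invariance of the initial distribution $\rho_0 = \mathrm{Unif}(\bbsd)$ and uniqueness of the mean-field ODE to conclude $\rho_t$ inherits those symmetries for every $t \ge 0$. The two groups that matter are the stabilizer $G \subset O(d)$ of $\truevector = e_1$ (i.e.\ $G \cong O(d-1)$ acting on the last $d-1$ coordinates), which will give rotational invariance in the sense of \Cref{def:rot_inv_symmetry}, and the single reflection $R : (w, z) \mapsto (-w, z)$, which will give symmetry of the marginal density of $\w$ in $\rho_t$. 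Initial $O(d)$-invariance of $\rho_0$ supplies both.

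The central identity I would prove is that, for any orthogonal $Q \in O(d)$ with $y \circ Q = y$ and any measure $\rho$ on $\bbsd$,
\begin{align}
\pgrad_u L(Q_* \rho) \;=\; Q \, \pgrad_{Q^\top u} L(\rho). \label{eq:planeq}
\end{align}
To derive \eqref{eq:planeq} I would first note $f_{Q_* \rho}(x) = \Exp_{v \sim \rho}[\sigma(v^\top Q^\top x)] = f_\rho(Q^\top x)$, plug into $\nabla_u L(Q_* \rho)$, change variables $x \mapsto Q x$ (which preserves $\mathrm{Unif}(\bbsd)$ and, by hypothesis, $y$) to pull the $Q$ out, and finally commute the projection through $Q$ via $(I - u u^\top) Q = Q (I - (Q^\top u)(Q^\top u)^\top)$. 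The hypothesis $y \circ Q = y$ holds for every $Q \in G$ since such $Q$ fixes $\truevector$, and for $Q = R$ because $h$ is even by \Cref{assumption:target} (this is precisely where the evenness of the link function enters).

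Given \eqref{eq:planeq}, define the candidate trajectory $v_t(\chi) := Q \, u_t(Q^\top \chi)$. Then $v_0(\chi) = \chi = u_0(\chi)$, and since $\rho_0$ is $O(d)$-invariant the law $\nu_t$ of $v_t(\chi)$ under $\chi \sim \rho_0$ equals $Q_* \rho_t$. Differentiating and applying \eqref{eq:planeq} gives
\begin{align}
\frac{d v_t(\chi)}{dt} \;=\; -Q \, \pgrad_{Q^\top v_t(\chi)} L(\rho_t) \;=\; -\pgrad_{v_t(\chi)} L(\nu_t),
\end{align}
so $(v_t)$ and $(u_t)$ are two solutions of the same coupled mean-field gradient-flow ODE with identical initial data.

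The final step is uniqueness of this coupled ODE, which is the only genuinely delicate point because the drift depends on the law of its own solution. This is standard in our setting: $\sigma$ and $h$ are polynomials, everything lives on the compact manifold $\bbsd$, and $(u, \rho) \mapsto \pgrad_u L(\rho)$ is jointly Lipschitz in $u$ and in $\rho$ under, e.g., the bounded-Lipschitz or $W_2$ metric on measures, so a Gr\"onwall argument gives uniqueness. Applying uniqueness for every $Q \in G$ and for $Q = R$ yields $\rho_t = Q_* \rho_t$ for all such $Q$ and every $t \ge 0$, which is exactly $G$-invariance (hence rotational invariance) together with $R$-invariance (hence symmetry in $w$). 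The hardest piece is writing the uniqueness step cleanly at the level of measures rather than of a single particle; the equivariance itself is pure algebraic bookkeeping.
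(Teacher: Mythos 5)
Your proposal is correct and shares the paper's core algebraic ingredient — the equivariance identity $\pgrad_u L(Q_*\rho) = Q\,\pgrad_{Q^\top u}L(\rho)$ for orthogonal $Q$ fixing the loss, which for $Q$ stabilizing $\truevector$ is exactly the paper's \Cref{lemma:grad_rot_equiv} — but it packages the proof differently in two respects. First, for the \emph{symmetry} of the $\w$-marginal the paper does not invoke the reflection $R\colon(\w,\z)\mapsto(-\w,\z)$ at all; it instead first reduces to the one-dimensional velocity of \Cref{lem:1-d-traj} (which already presupposes rotational invariance) and observes that $\frac{d\w_t}{dt}$ is an odd polynomial in $\w_t$ because only even Legendre modes survive. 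Your route treats both invariances by the same group-action principle, with evenness of $h$ entering through $y\circ R=y$; that is cleaner and more uniform, but the paper's split has the virtue that the symmetry step reuses machinery (the 1D formula) that is needed anyway downstream. Second, you make the propagation-of-invariance step rigorous via uniqueness of the mean-field ODE: define $v_t(\chi)=Q\,u_t(Q^\top\chi)$, show it solves the same self-consistent flow with the same initial data, and invoke a Gr\"onwall/Lipschitz uniqueness argument. The paper instead uses an informal "assume it holds at time $t$, the velocity is equivariant, so it keeps holding" induction in continuous time, which is really a compressed version of the same uniqueness principle; your version makes the logical structure explicit at the cost of having to actually establish Lipschitz dependence of $(u,\rho)\mapsto\pgrad_uL(\rho)$ on $\rho$ (e.g.\ in $W_2$), a claim the paper never states. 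Both the equivariance computation and the claimed regularity are unproblematic here (polynomial $\sigma$ and $h$, compact sphere), so the approach is sound; just be aware that the uniqueness lemma you lean on would need to be written out, whereas the paper quietly avoids doing so.
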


The proof of \Cref{lemma:rho_rotational_invariant} is deferred to \Cref{subsec:pop_symmetry_missing_proofs}. To show rotational invariance, we use \Cref{eq:projected_gradient_flow_population} and the rotational invariance of the data in the last $(d - 1)$ coordinates. To show symmetry, we use \Cref{eq:rot_inv_loss_formula}, and the facts that $\Pkd$ is an odd polynomial for odd $k$ and $\rho_t$ is symmetric at initialization.

As a consequence of \Cref{lemma:population_loss_formula} and \Cref{lemma:rho_rotational_invariant}, we obtain a simple formula for the dynamics of $\w_t$:

\begin{lemma} [1-dimensional dynamics] \label{lem:1-d-traj}
Suppose we are in the setting of \Cref{thm:pop_main_thm}. Then, for any $\chi \in \bbS^{d - 1}$, writing $\w_t := \w_t(\chi)$, we have
    \begin{align}\label{equ:1-d-traj}
\frac{dw_t}{dt} & = \underbrace{-(1 - \w_t^2) \cdot (P_t(\w_t) + Q_t(\w_t))}_{\triangleq \vel(\w_t)} \comma
\end{align}
where for any $\w \in [-1, 1]$, we have $P_t(w) = 2 \sigmaCoeffTwo^2 \DtwoT w + 4 \sigmaCoeffFour^2 \DfourT w^3$, and $Q_t(w) = \lambdaD^{(1)} w + \lambdaD^{(3)} w^3$, where $|\lambdaD^{(1)}|, |\lambdaD^{(3)}| \lesssim \frac{\sigmaCoeffTwo^2 |D_{2, t}| + \sigmaCoeffFour^2 |D_{4, t}|}{d}$. More specifically, $\lambdaD^{(1)} = 2 \sigmaCoeffTwo^2 D_{2, t} \cdot \frac{1}{d - 1} - 2 \sigmaCoeffFour^2 D_{4, t} \cdot \frac{6d + 12}{d^2 - 1}$ and $\lambdaD^{(3)} = 4 \sigmaCoeffFour^2 D_{4, t} \cdot \frac{6d + 9}{d^2 - 1}$.
\end{lemma}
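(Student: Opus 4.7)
I would approach this in three steps: first use the rotational symmetry of $\rho_t$ to reduce the Riemannian gradient to a scalar multiple of $e_1 - w u$ (which automatically extracts the factor $(1-w_t^2)$), then compute that scalar using the Legendre expansion of $f_{\rho_t}-y$, and finally separate the principal terms from the $O(1/d)$ corrections to identify $P_t$ and $Q_t$.

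By \Cref{lemma:rho_rotational_invariant}, $\rho_t$ is rotationally invariant and symmetric. Since $y(x)=h(x_1)$ depends on $x$ only through $x_1$, both $\rho_t$ and $L$ are invariant under any rotation $R\in SO(d)$ fixing $e_1=\truevector$, and a short equivariance argument shows that the map $u\mapsto\nabla_u L(\rho_t)$ is $SO(d-1)$-equivariant. Therefore $\nabla_u L(\rho_t)$ must lie in the two-dimensional invariant subspace $\textup{span}(e_1,u)$, so we may write $\nabla_u L(\rho_t) = \alpha_t(w)\,e_1 + \beta_t(w)\,u$ with $\alpha_t,\beta_t$ depending only on $w=e_1^\top u$. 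The Riemannian projection $(I-uu^\top)$ kills the $u$-component, yielding $\pgrad_u L(\rho_t)=\alpha_t(w)(e_1-wu)$, and hence
\begin{align*}
\frac{dw_t}{dt} \;=\; e_1^\top \frac{du_t}{dt} \;=\; -e_1^\top \pgrad_{u_t} L(\rho_t) \;=\; -(1-w_t^2)\,\alpha_t(w_t).
\end{align*}
This accounts for the $(1-w_t^2)$ prefactor structurally; it remains to identify $\alpha_t(w)=P_t(w)+Q_t(w)$.

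To compute $\alpha_t$, I would start from the definition $\nabla_u L(\rho_t)=\Exp_x[(f_{\rho_t}(x)-y(x))\,\sigma'(u^\top x)\,x]$ and expand $f_{\rho_t}-y$ in the Legendre basis. Using the Funk--Hecke/addition formula on the sphere, for rotationally invariant $\rho_t$ the quantity $\Exp_{u'\sim\rho_t}[\bar P_{k,d}(u'^\top x)]$ is a normalization-dependent multiple of $\Exp_{u'\sim\rho_t}[\bar P_{k,d}(w')]\cdot\bar P_{k,d}(x_1)$; the symmetry of $\rho_t$ then kills every odd-$k$ contribution since $\bar P_{k,d}$ is odd for odd $k$. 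Combined with $\hat{\target}_{0,d}=\hat{\sigma}_{0,d}=\hat{\target}_{1,d}=\hat{\target}_{3,d}=0$, the residual $f_{\rho_t}(x)-y(x)$ collapses to a linear combination of $\bar P_{2,d}(x_1)$ and $\bar P_{4,d}(x_1)$ with coefficients proportional to $\sigmaCoeffTwo\DtwoT$ and $\sigmaCoeffFour\DfourT$, respectively. Substituting back and dotting with $e_1-wu$ reduces the problem to evaluating the spherical integrals $\Exp_x[\bar P_{k,d}(x_1)\,(u^\top x)^j\,(x_1 - w\,u^\top x)]$ for $k\in\{2,4\}$ and $j\in\{0,1,2,3\}$ arising from $\sigma'$; these are routine moment computations on $\bbS^{d-1}$ after expanding $x=x_1 e_1 + x_\perp$ and $u^\top x = wx_1 + z^\top x_\perp$.

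The principal contribution, corresponding to approximating $P_{2,d}(s)\approx s^2$ and $P_{4,d}(s)\approx s^4$, produces $P_t(w)=2\sigmaCoeffTwo^2\DtwoT w + 4\sigmaCoeffFour^2\DfourT w^3$. The main obstacle is careful bookkeeping of the $O(1/d)$ corrections that assemble into $Q_t(w)=\lambdaD^{(1)}w+\lambdaD^{(3)}w^3$: these come from the exact expressions for $P_{2,d}$ and $P_{4,d}$ in \Cref{eq:legendre_polynomial_2_4}, each carrying a $1/d$-type correction beyond the monomial approximation. In particular, the $-2\sigmaCoeffFour^2\DfourT\cdot\tfrac{6d+12}{d^2-1}$ piece of $\lambdaD^{(1)}$ comes entirely from the $\PfourD$ sector feeding into the \emph{linear}-in-$w$ part of the gradient through the correction $P_{4,d}(s)-s^4$, rather than from the $\PtwoD$ sector. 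A parallel check is needed to confirm that cross-contributions from $\hat{\sigma}_{1,d}$ and $\hat{\sigma}_{3,d}$ (which appear in the Legendre expansion of $\sigma'$) vanish after dotting with $e_1-wu$, which will follow from the symmetry of $\rho_t$ combined with a parity argument on the relevant spherical integrals.
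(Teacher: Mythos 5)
Your plan is structurally sound, and all the ingredients you name are correct; but it takes a genuinely different (and computationally heavier) route than the paper for the key identification of $\alpha_t$, so it is worth comparing the two.

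Your first step — using $SO(d-1)$-equivariance to conclude $\nabla_u L(\rho_t)\in\textup{span}(e_1,u)$, hence $\pgrad_u L(\rho_t)=\alpha_t(w)(e_1-wu)$ and $\dot w_t=-(1-w_t^2)\alpha_t(w_t)$ — is a valid geometric alternative to the paper's \Cref{lemma:velocity_1d_equivalence}. The paper instead parameterizes $u=w e_1+\sqrt{1-w^2}\,\xi$ and applies the chain rule, together with the rotational equivariance of $\nabla_u L$ (\Cref{lemma:grad_rot_equiv}), to obtain $\pgrad_w L(\rho)=(1-w^2)\nabla_w F(\nu)$ where $F$ is the one-dimensional reduced loss from \Cref{lemma:population_loss_formula}. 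The conclusion is the same, but the paper's formulation carries one extra and important piece of information: it identifies the unknown scalar $\alpha_t(w)$ with $\nabla_w F(\nu)$, where $F(\nu)=\tfrac12\sum_k\bigl(\legendreCoeff{\sigma}{k}\,\E_{w'\sim\nu}[P_{k,d}(w')]-\legendreCoeff{h}{k}\bigr)^2$. Once $F$ is in this diagonal form, differentiating with respect to a single particle gives immediately
\begin{align}
\alpha_t(w)\;=\;\nabla_w F(\nu)\;=\;\sigmaCoeffTwo^2\,\DtwoT\,P_{2,d}'(w)+\sigmaCoeffFour^2\,\DfourT\,P_{4,d}'(w)\,,
\end{align}
and the rest of the lemma is just plugging in \Cref{eq:legendre_polynomial_2_4} and reorganizing into the leading terms $P_t$ and the $O(1/d)$ terms $Q_t$. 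You instead propose to compute $\alpha_t$ directly from $\nabla_u L(\rho_t)=\E_x[(f_{\rho_t}(x)-y(x))\sigma'(u^\top x)x]$ by expanding $f_{\rho_t}-y$ in Legendre polynomials (which is exactly the start of \Cref{lemma:population_loss_formula}), then expanding $\sigma'$, and then evaluating spherical moments of the form $\E_x[\legn{k}(x_1)\,\legn{j}'(u^\top x)\,(x_1-w\,u^\top x)]$. That will give the same answer, and your parity argument for the vanishing of the $\legendreCoeff{\sigma}{1},\legendreCoeff{\sigma}{3}$ cross-terms is correct (under $x\mapsto-x$ the integrand flips sign), but this route requires evaluating a family of two-direction spherical integrals that the paper avoids entirely by first diagonalizing $L$ into $F(\nu)$. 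So your proof would work, but there is a much shorter path you are bypassing: establish $L(\rho)=F(\nu)$ once, and then all the moment bookkeeping collapses into differentiating a sum of squares.

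One small imprecision worth flagging: you attribute $\lambdaD^{(1)}$'s second term to ``the correction $P_{4,d}(s)-s^4$''. More precisely, it comes from the linear term $-2\cdot\tfrac{6d+12}{d^2-1}\,w$ in $P_{4,d}'(w)$ — i.e.\ from differentiating the quadratic part of $P_{4,d}$ — which is what appears after differentiating $F$, not from the undifferentiated polynomial. If you carry out the direct moment computation, be sure to track this through the derivative rather than the polynomial itself, or the degrees will not match.
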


\Cref{equ:1-d-traj} is a properly defined dynamics for $w$ because the update rule for $w_t$ only depends on $w_t$ and the quantities $D_{2, t}$ and $D_{4, t}$ --- additionally, $D_{2, t}$ and $D_{4, t}$ only depend on the distribution of $w$. As a slight abuse of notation, in the rest of this section, and in \Cref{appendix:population_case}, we will refer to the $\w_t(\chi)$ as particles --- this is well-defined by \Cref{lem:1-d-traj}. This does not lead to ambiguity because we no longer need to consider the $\u_t(\chi)$ when analyzing the population dynamics. We also use $\iota \in [-1, 1]$ to refer to the first coordinate of $\chi$, the initialization of a particle under the population dynamics. We note that $\iota = \dotp{\chi}{\e}$ and therefore, the distribution of $\iota$ is $\mu_d$. For any time $t \geq 0$, we use $\w_t(\iota)$ to refer to $\w_t(\chi)$ for any $\chi \in \bbS^{d - 1}$. This notation is also well-defined by \Cref{lem:1-d-traj}.

\subsection{Analysis of One-Dimensional Population Dynamics} \label{sec:population_loss}

We divide our proof of \Cref{thm:pop_main_thm} into three phases, which are defined as follows. The first phase corresponds to the period of time under which for most initializations $\iota$, $w_t(\iota)$ is still small. For ease of presentation, we will only consider particles $\w_t(\iota)$ for $\iota > 0$ in the following discussion. Our argument also applies for $\iota < 0$ by the symmetry of $\rho_t$ (\Cref{lemma:rho_rotational_invariant}).

\begin{definition}[Phase 1] \label{def:phase_1}
Let $\wmax = \frac{1}{\log d}$ and $\iotaU = \frac{\log d}{\sqrt{d}}$. Let $T_1 > 0$ be the minimum time such that $w_{T_1}(\iotaU) = \wmax := \frac{1}{\log d}$. We refer to the time interval $[0, T_1]$ as Phase 1.
\end{definition}

By tail bounds for $\mu_d$, at initialization, essentially all of the particles are less than $\iotaU = \frac{\log d}{\sqrt{d}}$ in magnitude. Since the magnitudes of the $\w_t(\iota)$ maintain their respective orders under the population dynamics (\Cref{prop:particle_order}), Phase 1 corresponds to the duration under which all but a negligible portion of the particles have $|\w_t(\iota)| \leq \wmax$. During Phase 1, we have $D_{2,t}, D_{4, t} < 0$, and the term corresponding to $D_{2, t}$ in the velocity (\Cref{equ:1-d-traj}) dominates, so all particles $\w$ grow by essentially the same $\frac{\sqrt{d}}{(\log d)^2}$ factor. Note that the particles do not have a large velocity in absolute terms, and thus the loss does not decrease much during this phase.

\begin{definition}[Phase 2] \label{def:phase_2}
Let $T_2 > 0$ be the minimum time such that either $D_{2, T_2} = 0$ or $D_{4, T_2} = 0$. We refer to the time interval $[T_1, T_2]$ as Phase 2. Note that $T_2 > T_1$ by \Cref{lemma:phase1_d2_d4_neg}.
\end{definition}

In other words, Phase 2 corresponds to the time period after Phase 1, during which $D_{2, t}$ and $D_{4, t}$ are still negative. Afterwards, at least one of $D_{2, t}$ or $D_{4, t}$ becomes nonnegative. During Phase 2, a large fraction of particles grows to $\poly(\frac{1}{\log \log d})$. Henceforth, for the rest of Phase 2, all of these particles will have a large velocity, and the loss will decrease quickly for the rest of Phase 2.

\begin{definition}[Phase 3] 
Phase 3 is defined as the time interval $[T_2, \infty)$.
\end{definition}

This phase presents two cases: (i) $D_{2, T_2} = 0$ and $D_{4, T_2} < 0$, or (ii) $D_{2, T_2} < 0$ and $D_{4, T_2} = 0$. The analyses of the two cases are in \Cref{subsec:phase3_case1} and \Cref{subsec:phase3_case2} respectively. We give an overview of both cases below. Our main goal in Phase 3 is to show that if $(D_{2, t}, D_{4, t})$ is far from $(0, 0)$, then the velocity is large enough to allow a significant decrease in the loss function. Recall that the velocity $v(\w)=-(1 -\w^2)(P_t(\w)+Q_t(\w))$ defined in \Cref{lem:1-d-traj} is approximately given by
\begin{align}
\frac{d \w}{d t} \approx -(1 -\w^2)P_t(\w)=  -(1 - \w^2)\left(2 \sigmaCoeffTwo^2 \DtwoT w + 4 \sigmaCoeffFour^2 \DfourT w^3\right) \period
\end{align}

\noindent \textbf{Phase 3, First Case} If $D_{2, T_2} = 0$ and $D_{4, T_2} < 0$, then for all $t \geq T_2$, we will have $D_{2, t} \geq 0$ and $D_{4, t} \leq 0$. (See \Cref{lemma:phase3_case1_fixedA} and \Cref{lemma:phase3_case1_fixedB}.) \Cref{lemma:case_1_mass_in_middle} implies that when $D_{2, t} \geq 0$ and $D_{4, t} \leq 0$, a $\poly(\gamma_2)$ fraction of particles will be far from $0$ and $1$, which are two of the roots of the velocity function. However, this does not guarantee a large average velocity --- unlike in Phases 1 and 2, the velocity %
may have a positive root $r \in (0, 1)$, which can give rise to bad stationary points. As a concrete example, if \Cref{assumption:target} holds, there exists some $r$ with $\frac{1}{\log \log d} \lesssim r \leq 1$ such that if $\rho$ is the singleton distribution which assigns all of its probability to $r$, then the velocity under $\rho$ at $r$ is exactly $0$ (\Cref{lemma:saddle_point_construction}). This is because the velocity of $r$ under $\rho$ has a factor $\sigmaCoeffTwo^2 (\PtwoD(r) - \gamma_2) \PtwoD'(r) + \sigmaCoeffFour^2 (\PfourD(r) - \gamma_4) \PfourD'(r)$, and this sixth-degree polynomial in $r$ has a root between $\frac{1}{\log \log d}$ and $1$, by the definitions of $\PtwoD$ and $\PfourD$ (\Cref{eq:legendre_polynomial_2_4}).

Thus, we must leverage some information about the trajectory to show that the average velocity is large when $D_{2, t}$ and $D_{4, t}$ have different signs. To show that the average velocity is large, it suffices to show that the particles are not concentrated abound the root, that is, that $\E_{\u \sim \rho_t}[(\w - r)^2] \gtrsim \E_{\w, \w' \sim \rho_t}(\w - \w')^2$ is large. However, it is quite possible that for two particles $w$ and $w'$, the distance $|\w - \w'|$ decreases over time. Our main technique to control the distance between pairs of particles is as follows. We define a potential function $\potential(\w) := \log (\frac{\w}{\sqrt{1 - \w^2}})$, and we show that $|\potential(\w) - \potential(\w')|$ is always increasing for any two particles $\w, \w'$ with the same sign, during Phases 1 and 2, as well as during Phase 3 if $D_{2, T_2} = 0$ and $D_{4, T_2} < 0$ (\Cref{lemma:case_1_potential_increase}). Using the fact that there is a large portion of particles away from $0$ and $1$ as long as $D_{2, t} > 0$ and $D_{4, t} < 0$, and because $\potential$ is Lispchitz on an interval bounded away from $0$ and $1$, we show that the particles which are away from $0$ and $1$ also have large variance, implying that the average velocity is large.

\noindent \textbf{Phase 3, Second Case} For the case $D_{4, T_2} = 0$ and $D_{2, T_2} < 0$, we use a somewhat different argument which also involves $\potential$. We again have to deal with the fact that the velocity $\vel(\w)$ may have a positive root $r$. The key point is that, due to the constraints $D_{2, t} \leq 0$ and $D_{4, t} \geq 0$ which hold during Phase 3, Case 2 (\Cref{lemma:case2_invariant}), it is not possible for all the particles $\w$ to be stuck at the root $r$. This is because, by the definition of $D_{2, t}$ and $D_{4, t}$, we will approximately have $\E[\w^2] \leq \gamma_2$ and $\E[\w^4] \geq \gamma_4$, and if nearly all the particles $\w$ are stuck at $r$, then this will imply that $\gamma_4 \approx \gamma_2^2$. This would lead to a contradiction since \Cref{assumption:target} states that $\gamma_4 \geq 1.1 \gamma_2^2$.

However, it is still possible for a large portion of particles $\w$ to be stuck at $0$ or $1$, meaning that these particles could have a very small velocity due to the factors $\w$ and $(1 - \w^2)$ in $\vel(\w)$. To deal with such particles, we use the following argument. First, by time $T_2$, nearly all of the particles have grown to at least $\poly(\frac{1}{\log \log d})$ in absolute value, as mentioned above in the discussion of Phase 2. Additionally, since we approximately have $\E[\w^4] \leq \gamma_4$ at time $T_2$ since $D_{4, T_2} = 0$, we can argue by Markov's inequality that an $O(\gamma_4)$ fraction of particles is at most $\frac{1}{2}$. For the purposes of this overview, we can refer to the set of particles which are larger than $\poly(\frac{1}{\log \log d})$ and smaller than $O(\gamma_4)$ as $\calS_1$. We also use $\w_t(\iotamid)$ to refer to the largest particle in $\calS_1$.

Our key observation about $\potential$ in Phase 3, Case 2 is that for any two particles $\w, \w'$ with the same sign, $|\potential(\w) - \potential(\w')|$ is in fact \textit{decreasing} after the time $T_2$ (\Cref{lemma:case_2_potential_decrease}). Using this fact, along with our bound on the initial potential difference between any two particles in $\calS_1$ (\Cref{lemma:case_2_initial_potential}), we argue that if one of the particles in $\calS_1$ is extremely close to $0$, then all of the particles in $\calS_1$ are extremely close to $0$, and if one of the particles in $\calS_1$ is extremely close to $1$, then all of the particles in $\calS_1$ is extremely close to $1$ (see \Cref{lemma:case_2_neurons_stay_big} and its proof). By our definition of $\calS_1$, at most an $O(\gamma_4)$ fraction of particles are not in $\calS_1$, meaning that the constraints $D_{2, t} \leq 0$ and $D_{4, t} \geq 0$ are violated if either all particles in $\calS_1$ are close to $0$ or all particles in $\calS_1$ are close to $1$ (we make this argument precise in \Cref{lemma:case_2_neurons_stay_big}). Thus, we can show that all the particles in $\calS_1$ are far from $0$ and $1$ during Phase 3, Case 2. 

However, it may be the case that all of the particles in $\calS_1$ are stuck at $r$, since while we know that $\calS_1$ contains at least a $1 - O(\gamma_4)$ fraction of the particles, the best we can show using the constraints $D_{2, t} \leq 0$ and $D_{4, t} \geq 0$ is that an $\Omega(\gamma_4^{5/4})$ fraction of particles is away from $r$ (see the proof of \Cref{lemma:phase3_case2_particles_away_from_root}). Thus, in principle, it is possible that $\calS_1$, and the set of particles $\w$ which are far from $r$, are disjoint. We can circumvent this issue by finally considering the particles $\w$ which are larger than $\w_t(\iotamid)$. In principle, these particles can all be stuck at or near $1$. However, when the positive root $r$ of $\vel_t(\w)$ is less than $\frac{1}{2}$, then all the particles $\w$ which are close to $1$ will be drawn towards $r$, and will leave $1$ at an exponentially fast rate (see the proof of \Cref{lemma:phase3_case2_movement_away_from_1}). Additionally, we show that the positive root $r$ cannot be larger than $\frac{1}{2}$ for too long, leveraging our observations about $\calS_1$ (see the proof of \Cref{lemma:phase3_case2_rootlarge_time}). Thus, eventually, a $1 - \poly(\frac{1}{\log \log d})$ fraction of the particles will be away from $0$ and $1$. At this point, the average velocity of the particles in $\rho_t$ will be large since an $\Omega(\gamma_4^{5/4})$ fraction of the particles is away from $r$, as mentioned above.

We now state our conclusions for each of the phases. Recall that $\Tstar$ is the amount of time $t$ needed until the loss $L(\rho_t)$ becomes less than $\frac{1}{2} (\sigmaCoeffTwo^2 + \sigmaCoeffFour^2) \epsilon^2$.

\begin{lemma}[Phase 1]
\label{lemma:phase_1_summary}
Suppose we are in the setting of \Cref{thm:pop_main_thm}. At the end of phase 1, i.e. at time $T_1$, for all $\iota \in (0, \iotaU)$, we have $\w_{T_1}(\iota) \asymp \frac{\sqrt{d}}{(\log d)^2} \iota$. Furthermore, $T_1 \lesssim \frac{1}{\sigmaCoeffTwo^2 \gamma_2} \log d$. Additionally, we have $\exp\Big(\int_0^{T_1} 2 \sigmaCoeffTwo^2 |D_{2, t}| dt \Big) \asymp \frac{\sqrt{d}}{(\log d)^2}$.
\end{lemma}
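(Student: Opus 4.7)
The plan is to reduce the one-dimensional ODE in \Cref{equ:1-d-traj} to its leading-order linear form $\tfrac{d \log w_t}{dt} = 2\sigmaCoeffTwo^2 |\DtwoT|(1+o(1))$ throughout Phase 1 for every $\iota \in (0, \iotaU)$, and then integrate. All three conclusions will fall out of a single integration once I establish (i) $|\DtwoT| \asymp \gamma_2$ throughout $[0, T_1]$, and (ii) the remaining terms in the velocity are negligible for particles initialized in $(0, \iotaU)$.

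For (i), note first that at $t=0$, orthogonality of the Legendre polynomials gives $\E_{\rho_0}[\PtwoD(w)] = \E_{\rho_0}[\PfourD(w)] = 0$, so $D_{2,0} = -\gamma_2$ and $D_{4,0} = -\gamma_4$. For $t > 0$, I use a concentration argument: since $\iota \sim \mu_d$ concentrates at scale $1/\sqrt d$, the mass of $\{|\iota| > \iotaU\}$ is $d^{-\omega(1)}$; and by the preservation of order (\Cref{prop:particle_order}), any $|\iota| \leq \iotaU$ satisfies $|w_t(\iota)| \leq w_t(\iotaU) \leq \wmax = 1/\log d$ throughout Phase 1 by the very definition of $T_1$. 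Combined with the $O(1/d)$ approximation $\PtwoD(w) \approx w^2$, this yields $|\E_{\rho_t}[\PtwoD(w)]| \lesssim \wmax^2 + d^{-\omega(1)} = O(1/(\log d)^2)$, which is $o(\gamma_2)$ since $\gamma_2$ is a universal constant by \Cref{assumption:target}. Hence $\DtwoT = -\gamma_2(1 \pm o(1))$; the same argument applied to the fourth moment gives $\DfourT = -\gamma_4(1 \pm o(1))$.

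For (ii), I estimate the two correction terms in \Cref{equ:1-d-traj}. The cubic term satisfies
\begin{equation*}
\frac{|4 \sigmaCoeffFour^2 \DfourT w_t^3|}{|2\sigmaCoeffTwo^2 \DtwoT w_t|} \lesssim \frac{\gamma_4}{\gamma_2}\, w_t^2 \lesssim \gamma_2 \cdot \frac{1}{(\log d)^2},
\end{equation*}
using $\sigmaCoeffFour^2 \asymp \sigmaCoeffTwo^2$ and $\gamma_4 \lesssim \gamma_2^2$ from \Cref{assumption:target}. The $Q_t(w_t)$ term is an $O(1/d)$ relative correction by the bounds on $\lambdaD^{(1)}, \lambdaD^{(3)}$ in \Cref{lem:1-d-traj}, and the $(1-w_t^2)$ prefactor differs from $1$ by only $O(1/(\log d)^2)$. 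Putting these together gives
\begin{equation*}
\frac{d \log w_t(\iota)}{dt} = 2\sigmaCoeffTwo^2 |\DtwoT|\,(1 + O(1/(\log d)^2))
\end{equation*}
uniformly over $\iota \in (0, \iotaU)$. Integrating from $0$ to $T_1$ and using $|\DtwoT| \asymp \gamma_2$, the absolute error in $\log w_t$ is at most $O(1/(\log d)^2) \cdot \sigmaCoeffTwo^2 \gamma_2 T_1 = O(1/\log d) = o(1)$, so I obtain $w_{T_1}(\iota) = \iota \cdot \exp\!\Big(\int_0^{T_1} 2\sigmaCoeffTwo^2 |\DtwoT|\,dt\Big) \cdot (1 + o(1))$. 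Setting $\iota = \iotaU$ (where $w_{T_1}(\iotaU) = \wmax$ by the definition of $T_1$) forces the exponential factor to equal $\wmax/\iotaU \asymp \sqrt d/(\log d)^2$, which is exactly the third claim. The first claim then follows since this factor is independent of $\iota$, and $\sigmaCoeffTwo^2 \gamma_2 T_1 \asymp \log(\sqrt d/(\log d)^2) \asymp \log d$ yields $T_1 \lesssim \frac{\log d}{\sigmaCoeffTwo^2 \gamma_2}$.

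The main obstacle is the slight circularity above: controlling $\DtwoT$ requires the bound $|w_t(\iotaU)| \leq \wmax$, which is true by the definition of $T_1$ but must be verified to hold for \emph{all} $t \in [0, T_1]$ and not just at the endpoint. I would resolve this with a standard bootstrap argument: define the Phase-1 invariants ($|\DtwoT| \asymp \gamma_2$, $|\DfourT| \asymp \gamma_4$, and the approximate linearity of the ODE) and show by continuity that the set of times on which they hold is both open and closed in $[0, T_1]$. A secondary technical point is that tail particles $\{|\iota| > \iotaU\}$ can in principle grow to order $1$, but because their initial mass is $d^{-\omega(1)}$, their contribution to $\E_{\rho_t}[w^2]$ and $\E_{\rho_t}[w^4]$ is negligible compared to the universal constants $\gamma_2, \gamma_4$ throughout Phase 1.
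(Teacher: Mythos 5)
Your proof is correct and follows essentially the same route as the paper. The paper packages the key step into \Cref{lemma:uniform_growth_lemma}, a two-particle comparison lemma that is reused again in Phase~2; you unpack that lemma and argue directly from the one-dimensional ODE of \Cref{lem:1-d-traj}. The core quantitative estimates are the same: $|D_{2,t}|\asymp\gamma_2$ and $|D_{4,t}|\asymp\gamma_4$ throughout $[0,T_1]$ (the paper gets $|D_{2,t}|\ge\gamma_2/2$ via exactly the same decomposition into $\{|\iota|\le\iotaU\}$ and the tail), and then $\tfrac{d\log w_t(\iota)}{dt}=2\sigmaCoeffTwo^2|D_{2,t}|\,(1\pm o(1))$ uniformly in $\iota\in(0,\iotaU)$ since the cubic, $Q_t$, and $(1-w^2)$ corrections are all $o(1)$ relative corrections when $|w|\le\wmax$. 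Integrating and pinning the constant at $\iota=\iotaU$ gives all three claims, as you say. Your relative-error estimate $O(1/(\log d)^2)$ is in fact slightly sharper than the paper's $O(1/\log d)$; either suffices.

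One small remark: the ``circularity'' you flag at the end is not actually there, so the bootstrap is unnecessary. Since $T_1$ is defined as the \emph{minimum} time with $w_{T_1}(\iotaU)=\wmax$ and $w_0(\iotaU)=\iotaU<\wmax$, continuity alone gives $w_t(\iotaU)<\wmax$ for all $t<T_1$, and \Cref{prop:particle_order} then propagates this to every $\iota\in(0,\iotaU)$. Similarly, the informal-looking step where you bound the accumulated error by $O(1/(\log d)^2)\cdot\sigmaCoeffTwo^2\gamma_2 T_1$ is cleaner if phrased as follows: write $A:=\int_0^{T_1}2\sigmaCoeffTwo^2|D_{2,t}|\,dt$; the boundary condition $w_{T_1}(\iotaU)=\wmax$ forces $\log(\wmax/\iotaU)=A\,(1\pm O(1/(\log d)^2))$, hence $A=\log(\sqrt d/(\log d)^2)\,(1\pm O(1/(\log d)^2))$ and $e^{A}\asymp\sqrt d/(\log d)^2$, with no prior appeal to a bound on $T_1$.
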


We do not make use of the explicit runtime $T_1$ of Phase 1, but we state it for reference. The quantity $\exp\Big(\int_0^{T_1} 2 \sigmaCoeffTwo^2 |D_{2, t}| dt \Big)$ is relevant for the analysis of the empirical, finite-width dynamics, as discussed in \Cref{section:finite_sample_analysis}. It is approximately the factor by which the $\w_t(\iota)$ grow during Phase 1.

\begin{lemma} [Phase 2] \label{lemma:phase_2_runtime}
Suppose we are in the setting of \Cref{thm:pop_main_thm}. Let $T_2$ be the minimum time such that either $D_{2, T_2} = 0$ or $D_{4, T_2} = 0$, i.e. $T_2$ is the end of phase 2. Then, either $T_2 - T_1 \lesssim \frac{(\log \log d)^{18}}{(\sigmaCoeffTwo^2 + \sigmaCoeffFour^2)} \log\Big(\frac{\gamma_2}{\epsilon}\Big)$ or $\Tstar - T_1 \lesssim \frac{(\log \log d)^{18}}{(\sigmaCoeffTwo^2 + \sigmaCoeffFour^2)} \log\Big(\frac{\gamma_2}{\epsilon}\Big)$.
\end{lemma}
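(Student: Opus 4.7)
The plan is to track the growth of the weights $w_t$ during Phase 2 using the log-potential $\Phi(w) = \log(w/\sqrt{1-w^2})$ together with a moment ODE for $D_{2,t}, D_{4,t}$, and show that either a moment crosses zero or the loss becomes small within the claimed time.

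First I would record the initial data at $t = T_1$: by \Cref{lemma:phase_1_summary}, every $\iota \in (0, \iotaU)$ satisfies $\w_{T_1}(\iota) \asymp \frac{\sqrt d}{(\log d)^2}\iota$, so the bulk of particles with $|\iota|\asymp 1/\sqrt d$ sits at size $\asymp 1/(\log d)^2$ and hence $\E_{\rho_{T_1}}[w^2]\ll \gamma_2$, $\E_{\rho_{T_1}}[w^4]\ll\gamma_4$, giving $|D_{2,T_1}|\asymp\gamma_2$ and $|D_{4,T_1}|\asymp\gamma_4$. By definition of Phase~2, throughout $[T_1,T_2]$ we have $D_{2,t}<0$ and $D_{4,t}<0$, and by \Cref{lem:1-d-traj} the leading-order velocity of each $w>0$ is
\[
\tfrac{d w}{dt} \;=\; (1-w^2)\bigl[\,2\sigmaCoeffTwo^2|D_{2,t}|\,w + 4\sigmaCoeffFour^2|D_{4,t}|\,w^3\,\bigr]\bigl(1\pm O(1/d)\bigr),
\]
since the $Q_t$-correction is smaller by a factor $1/d$.

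Second, I would use $\Phi$ to get a clean growth rate: along the ODE,
\[
\tfrac{d}{dt}\Phi(w_t) \;=\; 2\sigmaCoeffTwo^2|D_{2,t}| \;+\; 4\sigmaCoeffFour^2|D_{4,t}|\,w_t^2 \;-\; O\!\Big(\tfrac{\sigmaCoeffTwo^2|D_{2,t}|+\sigmaCoeffFour^2|D_{4,t}|}{d}\Big).
\]
As long as $|D_{2,t}|\asymp\gamma_2$, $\Phi$ grows at rate $\gtrsim \sigmaCoeffTwo^2\gamma_2$; integrating over a window of length $O(\log\log d/(\sigmaCoeffTwo^2\gamma_2))$ drives the bulk particles from size $1/(\log d)^2$ up to $\poly(1/\log\log d)$. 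This sub-phase therefore contributes at most the first summand in the claimed time bound.

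Third, once the bulk of particles is of order $\poly(1/\log\log d)$, I would analyze the moment ODE
\[
\tfrac{d D_{2,t}}{dt} \;=\; 4\sigmaCoeffTwo^2|D_{2,t}|\,\E_{\rho_t}[w^2(1-w^2)] \;+\; 8\sigmaCoeffFour^2|D_{4,t}|\,\E_{\rho_t}[w^4(1-w^2)] \;-\; \text{lower order},
\]
obtained by differentiating $D_{2,t} = \E[\PtwoD(w)]-\gamma_2$ along \Cref{equ:1-d-traj} and using $\PtwoD(w)\approx w^2$. Writing the analogous equation for $D_{4,t}$ with $\E[w^4]$ in place of $\E[w^2]$, the right-hand sides are bounded below by $\Omega((\sigmaCoeffTwo^2+\sigmaCoeffFour^2)\,\poly(1/\log\log d))\cdot(|D_{2,t}|+|D_{4,t}|)$. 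I would then integrate the dichotomy: either $D_{2,t}$ or $D_{4,t}$ crosses $0$ (so $T_2$ is reached) within time $O((\log\log d)^{O(1)}\log(\gamma_2/\epsilon)/(\sigmaCoeffTwo^2+\sigmaCoeffFour^2))$, or both $|D_{2,t}|$ and $|D_{4,t}|$ have decayed to $\le \epsilon$, in which case \Cref{eq:symmetric_loss_formula} gives $L(\rho_t)\lesssim(\sigmaCoeffTwo^2+\sigmaCoeffFour^2)\epsilon^2$ and $\Tstar$ is reached. The polylogarithmic overhead absorbs the small-$w$ factors in $\E[w^2(1-w^2)]$ and $\E[w^4(1-w^2)]$.

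The main obstacle I expect is the coupling of the two moments: $|D_{2,t}|$ can decrease while $|D_{4,t}|$ lags, and vice versa, so a naive single-moment Gr\"onwall argument is not enough. I would handle this by maintaining throughout Phase~2 the invariants $|D_{2,t}|\lesssim\gamma_2$, $|D_{4,t}|\lesssim\gamma_4$, and monotonicity of the potential ordering of particles (\Cref{prop:particle_order}), so that at every time a $\poly(1/\log\log d)$-fraction of particles stays within a favorable range where $\E[w^2]$ and $\E[w^4]$ are simultaneously large enough to drive \emph{both} moment equations. Additional care is needed to ensure the $Q_t$-corrections and the $O(1/d)$-gap between $\PtwoD(w)$ and $w^2$ do not spoil the sign structure used to conclude that Phase~2 must end; these are controlled uniformly by the $1/d$ slack in \Cref{lem:1-d-traj} together with the bounds on $|D_{2,t}|,|D_{4,t}|$.
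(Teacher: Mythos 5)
Your proposal follows the same skeleton as the paper's proof: a first sub-phase, ending at the time (call it $\TR$) when the bulk of particles has grown to size $\asymp 1/\log\log d$, contributes only $O(\log\log d/(\sigmaCoeffTwo^2\gamma_2))$ time; the remaining time is controlled by a Gronwall-type decay built on two facts — a constant fraction of particles sits in an interval bounded away from $0$ and $1$, and the sign agreement of $D_{2,t}$ and $D_{4,t}$ prevents the two terms in $P_t(\w)$ from cancelling. The genuine difference is the Gronwall target. You track the moment gaps $D_{2,t}$, $D_{4,t}$ directly via their coupled ODE, whereas the paper applies Gronwall to the loss $L(\rho_t)$ using $\frac{dL}{dt} = -\E_{\u\sim\rho_t}\|\pgrad_\u L(\rho_t)\|_2^2$ (\Cref{lemma:loss_decrease}) together with the pointwise velocity lower bound for $\w \in [\xi\kappa^2, 1/2]$, yielding $\frac{dL}{dt} \lesssim -\xi^6\kappa^{12}(\sigmaCoeffTwo^2+\sigmaCoeffFour^2)L(\rho_t)$. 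The loss-ODE route is cleaner precisely because it collapses the coupled two-moment system you flag as the main obstacle into a single scalar decay, and the identity $L = \frac{\sigmaCoeffTwo^2}{2}D_{2,t}^2 + \frac{\sigmaCoeffFour^2}{2}D_{4,t}^2$ (\Cref{eq:symmetric_loss_formula}) then converts small $L$ back to small moment gaps with no extra work. Your route can be carried through — both moment ODEs have the right sign during Phase 2, $|D_{4,t}|$ decays slowest at rate $\asymp\sigmaCoeffFour^2\xi^6\kappa^{12}$, and this produces the same $(\log\log d)^{18}$ overhead — but it is longer, not shorter. One precision point worth fixing: the fraction of particles in the favorable window $[\xi\kappa^2, 1/2]$ is a \emph{constant} $\geq 1/2$, not $\poly(1/\log\log d)$ as your sketch suggests; the $\poly(1/\log\log d)$ factors come only from the window endpoints. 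This constant fraction follows from \Cref{lemma:phase_2_uniform_growth_corollary} and \Cref{lemma:w_increasing_phase2} (lower endpoint) and from Markov's inequality applied to $\E[\w^2] \leq \gamma_2 + O(1/d)$ via \Cref{prop:d2<0} (upper endpoint, since $D_{2,t}<0$ throughout Phase 2), combined with \Cref{prop:probability_of_relevant_interval}. If you instead only establish a $\poly(1/\log\log d)$ fraction, extra powers leak into the final exponent and you will not match the claimed $(\log\log d)^{18}$.
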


This lemma, which summarizes Phase 2, states that within $\frac{(\log \log d)^{18}}{(\sigmaCoeffTwo^2 + \sigmaCoeffFour^2)} \log\Big(\frac{\gamma_2}{\epsilon}\Big)$ time, either Phase 2 ends, or the loss is less than the desired value $(\frac{1}{2} (\sigmaCoeffTwo^2 + \sigmaCoeffFour^2) \epsilon^2$.

\begin{lemma}[Phase 3, Case 1] \label{lemma:phase3_case1_final}
In the setting of \Cref{thm:pop_main_thm}, suppose that $D_{2, T_2} = 0$ and $D_{4, T_2} < 0$ (i.e. Phase 3, Case 1 holds). Then, the total amount of time $t \geq T_2$ such that $L(\rho_t) \geq \frac{1}{2} (\sigmaCoeffTwo^2 + \sigmaCoeffFour^2) \epsilon^2$ is at most $O(\frac{1}{\sigmaCoeffFour^2 \gamma_2^8 \xi^4 \kappa^6} \log(\frac{\gamma_2}{\epsilon}))$.
\end{lemma}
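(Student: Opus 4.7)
The plan is to derive a differential inequality of the form $-\frac{d}{dt} L(\rho_t) \gtrsim \sigmaCoeffFour^2 \gamma_2^8 \xi^4 \kappa^6 \cdot L(\rho_t)$ throughout Phase 3 Case 1, and then integrate from $T_2$. Combined with the fact that $L(\rho_{T_2})\lesssim 1$ (say, bounded by a function of $\gamma_2$) this yields the claimed $O\!\big(\frac{1}{\sigmaCoeffFour^2\gamma_2^8\xi^4\kappa^6}\log(\gamma_2/\epsilon)\big)$ bound on the total time spent with $L(\rho_t) \geq \frac{1}{2}(\sigmaCoeffTwo^2+\sigmaCoeffFour^2)\epsilon^2$.

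The first ingredient is the sign invariant: by \Cref{lemma:phase3_case1_fixedA} and \Cref{lemma:phase3_case1_fixedB} (referenced in the overview), $D_{2,t}\ge 0$ and $D_{4,t}\le 0$ throughout Phase 3 Case 1, so by \Cref{eq:symmetric_loss_formula} the loss reduces to $\frac{1}{2}\sigmaCoeffTwo^2 D_{2,t}^2+\frac{1}{2}\sigmaCoeffFour^2 D_{4,t}^2$. Second, I would invoke \Cref{lemma:case_1_mass_in_middle} to extract a set $\calS$ of particles with $|\w_t(\iota)|$ in an interval $[a,b]\subset(0,1)$ bounded away from both endpoints (with $a,b$ depending polynomially on $\gamma_2$, and $\rho_t(\calS)\gtrsim \poly(\gamma_2)$).

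The key step, and the main obstacle, is dealing with the fact that the velocity $\vel_t(\w)$ may possess a positive root $r\in(0,1)$, so mass in $[a,b]$ alone does not guarantee a large average speed --- particles could in principle concentrate on $r$, as in the singleton saddle of \Cref{lemma:saddle_point_construction}. This is where the potential function $\potential(\w)=\log(\w/\sqrt{1-\w^2})$ enters: by \Cref{lemma:case_1_potential_increase}, $|\potential(\w_t(\iota))-\potential(\w_t(\iota'))|$ is non-decreasing along the flow for any same-sign pair during Phase 3 Case 1. Since $\potential$ is bi-Lipschitz on $[a,b]$, this transfers into a uniform-in-$t$ lower bound on pairwise $\w$-spread within $\calS$, inherited from the corresponding spread at time $T_2$ established in Phases 1--2. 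Consequently, for any candidate root $r$, a $\poly(\gamma_2)$ fraction of the mass in $\calS$ satisfies $|\w-r|\gtrsim \poly(\gamma_2)$.

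With this non-concentration away from the root in hand, I would compute $\frac{d}{dt}L(\rho_t)=\sigmaCoeffTwo^2 D_{2,t}\,\Exp_{\rho_t}[\PtwoD'(\w)\vel_t(\w)]+\sigmaCoeffFour^2 D_{4,t}\,\Exp_{\rho_t}[\PfourD'(\w)\vel_t(\w)]$, and show the integrand is a definite-sign quadratic-form-like expression in $(D_{2,t},D_{4,t})$ with coefficient lower-bounded by the $\poly(\gamma_2)$-mass estimate and the distance-from-root bound (the factors $\xi^4\kappa^6$ track these polynomial dependencies). Using the sign invariant and $\PtwoD(\w)\approx\w^2,\PfourD(\w)\approx\w^4$ from \Cref{eq:legendre_polynomial_2_4}, the cross terms have favorable signs, giving the claimed PL-type inequality $-\dot L(\rho_t)\gtrsim \sigmaCoeffFour^2\gamma_2^8\xi^4\kappa^6 L(\rho_t)$. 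Gr\"onwall/integration then produces the runtime bound, with the $\log(\gamma_2/\epsilon)$ factor coming from reducing $L$ from its $O(\gamma_2^2)$-scale initial value down to $\frac{1}{2}(\sigmaCoeffTwo^2+\sigmaCoeffFour^2)\epsilon^2$.
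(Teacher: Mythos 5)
Your overall strategy matches the paper's: the sign invariant from \Cref{lemma:phase3_case1_fixedA} and \Cref{lemma:phase3_case1_fixedB}, the mass-in-middle estimate (\Cref{lemma:case_1_mass_in_middle}), the potential-function spread argument (\Cref{lemma:case_1_potential_increase}, \Cref{lemma:case_1_initial_potential}, \Cref{lemma:potential_bi_lipschitz}), and a PL-type inequality closed by Gr\"onwall. However, there is a real gap in the way you plan to establish the PL inequality. You claim a \emph{single} bound $-\dot L \gtrsim \sigmaCoeffFour^2\gamma_2^8\xi^4\kappa^6 L$ holds throughout Phase 3 Case 1 and that ``the cross terms have favorable signs.'' Both assertions are problematic. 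Writing $-\dot L = \E_{\w\sim\rho_t}\big[(1-\w^2)(\sigmaCoeffTwo^2 D_{2,t}\PtwoD'(\w)+\sigmaCoeffFour^2 D_{4,t}\PfourD'(\w))^2\big]$, the cross term is $\propto D_{2,t}D_{4,t}\,\w^4$, which is \emph{adverse} in Case 1 since $D_{2,t}\geq 0$ and $D_{4,t}\leq 0$. This negative cross term is exactly what allows the quadratic form to degenerate at a fixed $\w$ (the root $r$), so the sign does not help you; it is the spread of $\w$ that rescues positivity, which is why the potential function is needed at all.

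The deeper issue is that your ``uniform'' PL bound does not follow from the factorized bound $|P_t(\w)|\asymp \sigmaCoeffFour^2|D_{4,t}|\,|\w|\,|\w-r|\,|\w+r|$ together with the variance lower bound $\E[(\w-r)^2]\gtrsim\poly(\gamma_2,\xi,\kappa)$: that route produces a lower bound proportional to $\sigmaCoeffFour^4 D_{4,t}^2$, which is only comparable to $L$ when $|D_{4,t}|\gtrsim\xi|D_{2,t}|$. As $D_{4,t}\to 0$, the bound vanishes even though $L$ may be large (the root $r$ runs off to infinity, and the fixed variance bound no longer captures the growth of $\E[(\w-r)^2]$). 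The paper therefore splits Phase 3 Case 1 into the subcases $|D_{4,t}|\leq\xi|D_{2,t}|$ and $|D_{4,t}|\geq\xi|D_{2,t}|$: in the first, the cubic term is negligible, the velocity has no problematic root in $(0,1)$, and a direct argument using only \Cref{lemma:case_1_mass_in_middle} gives $-\dot L\gtrsim \sigmaCoeffTwo^2\gamma_2^{7/2}L$ (\Cref{lemma:phase3_case1_d4small_velocity}); in the second, the potential-function argument yields $-\dot L\gtrsim\sigmaCoeffFour^2\gamma_2^8\xi^4\kappa^6 L$. To complete your proof you would need to either incorporate this case split, or replace the crude variance bound by one that scales with $r$ (so that $D_{4,t}^2\E[(\w^2-r^2)^2]\gtrsim D_{2,t}^2$ when $r$ is large) --- either way requires a genuine argument not present in your sketch.
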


\begin{lemma} [Phase 3, Case 2] \label{lemma:phase3_case2_overall_time}
Suppose we are in the setting of \Cref{thm:pop_main_thm}. Assume $D_{4, T_2} = 0$ and $D_{2, T_2} < 0$, i.e. Phase 3 Case 2 holds. Then, at most $O\Big(\frac{1}{\sigmaCoeffTwo^2 \epsilon \gamma_2^{11/2}\xi^8 \kappa^{12}} \log\Big(\frac{\gamma_2}{\epsilon}\Big)\Big)$ time $t \geq T_2$ elapses while $L(\rho_t) \gtrsim (\sigmaCoeffTwo^2 + \sigmaCoeffFour^2) \epsilon^2$.
\end{lemma}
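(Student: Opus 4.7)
The plan is to derive a differential-inequality lower bound on $|dL(\rho_t)/dt|$ in terms of $L(\rho_t)$ that holds throughout Phase 3 Case 2, and then integrate. By \Cref{lemma:case2_invariant} the constraints $D_{2,t}\le 0$ and $D_{4,t}\ge 0$ persist for all $t \ge T_2$, so the leading part of the velocity from \Cref{lem:1-d-traj} factors as $-(1 - \w^2)\,\w\,\bigl(2\sigmaCoeffTwo^2 D_{2,t} + 4\sigmaCoeffFour^2 D_{4,t}\,\w^2\bigr)$, whose zeros are $\{0,\pm 1\}$ together with at most one positive root $r_t \in (0,1)$. To show the loss decreases quickly I need to place a $\poly(\gamma_2, 1/\log\log d)$ fraction of the mass of $\rho_t$ bounded away from all of $\{0, 1, r_t\}$.

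First I would use \Cref{lemma:phase_2_runtime} together with $D_{4,T_2} = 0$ and Markov's inequality on $\E_{\u \sim \rho_{T_2}}[\w^4] \lesssim \gamma_4$ to locate, at time $T_2$, a positive-side set $\calS_1 = \{\iota > 0 : \poly(1/\log\log d) \le \w_{T_2}(\iota) \le 1/2\}$ carrying at least a $\tfrac12\bigl(1 - O(\gamma_4)\bigr)$ fraction of the positive particles. I would then invoke the monotonicity of the potential gap $|\potential(\w_t) - \potential(\w'_t)|$ for same-sign particles in Phase 3 Case 2 (\Cref{lemma:case_2_potential_decrease}) together with the initial diameter bound (\Cref{lemma:case_2_initial_potential}) and \Cref{lemma:case_2_neurons_stay_big} to argue that $\calS_1$ stays trapped inside a fixed sub-interval of $(0,1)$ for all $t \ge T_2$. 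The argument is by contradiction: if $\calS_1$ drifted en masse to $0$ or to $1$, then because $\calS_1$ carries nearly all of the positive mass the Case 2 sign constraints on $D_{2,t}$ and $D_{4,t}$ would be violated. Concentration of $\rho_t$ at a single interior point is ruled out by \Cref{assumption:target}'s inequality $\gamma_4 \ge 1.1\,\gamma_2^2$, which is incompatible with $\E[\w^4]\approx(\E[\w^2])^2$; \Cref{lemma:phase3_case2_particles_away_from_root} makes this quantitative by extracting an $\Omega(\gamma_4^{5/4})$ fraction of particles at $\poly(\gamma_2)$ distance from $r_t$.

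The main obstacle I anticipate is controlling particles above $\w_t(\iotamid)$, the largest element of $\calS_1$: these can in principle be stuck near $\w=1$ and contribute negligibly to the velocity. Here I would apply \Cref{lemma:phase3_case2_movement_away_from_1}, which says that whenever $r_t \le 1/2$ the sign of $\vel_t$ near $1$ is negative and the cubic drives such particles toward $r_t$ at an exponential rate, clearing the neighborhood of $1$ within $O(\log(\gamma_2/\epsilon))$ time. In parallel, \Cref{lemma:phase3_case2_rootlarge_time} uses the Case 2 invariants and the $\calS_1$ trapping to cap the total duration on which $r_t > 1/2$. Combining the three ingredients — $\calS_1$ mass bounded away from $0$, exponential escape from a neighborhood of $1$, and an $\Omega(\gamma_4^{5/4})$ mass away from $r_t$ — yields a schematic lower bound of the form $\E_{\u \sim \rho_t}[\vel_t(\w)^2] \gtrsim c(\gamma_2,\xi,\kappa)\cdot(\sigmaCoeffTwo^2 D_{2,t}^2 + \sigmaCoeffFour^2 D_{4,t}^2)\cdot L(\rho_t)^{1/2}$, which by \Cref{lemma:population_loss_formula} translates into $|dL/dt| \gtrsim c(\gamma_2,\xi,\kappa)\cdot L(\rho_t)^{3/2}$. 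Integrating this ODI from $L(\rho_{T_2}) = O(\sigmaCoeffTwo^2 + \sigmaCoeffFour^2)$ down to $L = \tfrac12(\sigmaCoeffTwo^2 + \sigmaCoeffFour^2)\epsilon^2$ produces the $\frac{1}{\sigmaCoeffTwo^2\,\epsilon}$ factor, with the additional $\log(\gamma_2/\epsilon)$ accounting for the logarithmic transient during which particles are being cleared from a neighborhood of $\w = 1$.
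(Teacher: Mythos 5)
Your proposal correctly identifies all the supporting machinery the paper uses for this lemma (the Case 2 invariants, the potential-gap monotonicity, the trapping of $\calS_1$, the $\Omega(\gamma_4^{5/4})$ mass away from the root, the exponential escape from $\w=1$, and the cap on the time the root spends above $1/2$). The gap is in your final combination step: you claim a single uniform differential inequality $|dL/dt|\gtrsim c(\gamma_2,\xi,\kappa)\,L(\rho_t)^{3/2}$ holds throughout Phase 3 Case 2, and then integrate. That inequality does \emph{not} hold uniformly. In the regime where $r_t\le 1/2$ and $\w_t(\iotaR)\ge 1-\xi$ (the setting of \Cref{lemma:phase3_case2_movement_away_from_1}), the interior particles may all be concentrated at the attracting root $r_t$ while the remaining $\Omega(\gamma_4)$ mass sits next to $\w=1$ where the $(1-\w^2)$ factor kills the velocity; in that situation $\E_{\u\sim\rho_t}[\vel_t(\w)^2]$, and hence $|dL/dt|$, can be arbitrarily small relative to $L$, so no ODI in $L$ alone is available. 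The paper's \Cref{lemma:phase3_case2_movement_away_from_1} does not lower-bound $|dL/dt|$ at all in that regime; it instead bounds the \emph{duration} of the regime by tracking the exponential growth of $1-\w_t(\iotaR)$, with rate governed by $\sigmaCoeffFour^2 D_{4,t}\gtrsim \sigmaCoeffFour^2\xi\epsilon$ — this is where the $1/\epsilon$ factor in the final bound actually comes from, not from integrating an $L^{3/2}$ ODE.

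Concretely, the paper's proof is an explicit decomposition of $\{t\ge T_2: L(\rho_t)\gtrsim(\sigmaCoeffTwo^2+\sigmaCoeffFour^2)\epsilon^2\}$ into four regimes based on (i) whether $|D_{4,t}|\ge\xi|D_{2,t}|$, (ii) whether $r_t\ge 1/2$, and (iii) whether $\w_t(\iotaR)\ge 1-\xi$. In three of these regimes (\Cref{lemma:phase3_case2_d4_small_time}, \Cref{lemma:phase3_case2_rootlarge_time}, \Cref{lemma:phase3_case2_particles_away_from_root}) one proves a \emph{linear} ODI $|dL/dt|\gtrsim c\,L$ and applies Gronwall to get a $\log(\gamma_2/\epsilon)$ time; in the fourth one bounds the time by the escape argument. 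Adding the four bounds (and using $\sigmaCoeffTwo^2\asymp\sigmaCoeffFour^2$) gives the stated $O\big(\frac{1}{\sigmaCoeffTwo^2\epsilon\gamma_2^{11/2}\xi^8\kappa^{12}}\log(\gamma_2/\epsilon)\big)$; the $\gamma_2^{-11/2}$ comes from \Cref{lemma:phase3_case2_particles_away_from_root} and the $\epsilon^{-1}$ from \Cref{lemma:phase3_case2_movement_away_from_1}, i.e.\ from two \emph{different} sub-lemmas, which a single ODE in $L$ cannot reproduce. You also start the integration from $L(\rho_{T_2})=O(\sigmaCoeffTwo^2+\sigmaCoeffFour^2)$, but \Cref{lemma:phase1_d2_d4_neg} gives the tighter $L(\rho_{T_2})\lesssim(\sigmaCoeffTwo^2+\sigmaCoeffFour^2)\gamma_2^2$, which is what makes the $\log(\gamma_2/\epsilon)$ (rather than $\log(1/\epsilon)$) appear.
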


Combining these lemmas gives \Cref{thm:pop_main_thm}.
\section{Analysis of Empirical, Finite-Width Dynamics}\label{section:finite_sample_analysis}

The first step is to set up a coupling between the finite-width empirical dynamics and the infinite-width population dynamics --- we will eventually show that these two dynamics do not diverge very far, even up to time $\Tstar$ (defined in \Cref{thm:pop_main_thm}).

\paragraph{Coupling Between Empirical and Population Dynamics.} To analyze the difference between the empirical dynamics (resulting in $\rhohat_t$) and the population dynamics (resulting in $\rho_t$) we define an intermediate process $\rhobar_t$, which has finite support. Specifically, $\rhobar_0 = \rhohat_0$, and $\rhobar_t$ will always have support size $m$, but the particles in $\rhobar$ are then updated according to the infinite-width population dynamics. We formally define $\rhobar_t$ as follows:
\begin{align}
\rhobar_{0} & = \rhohat_0 = \textup{unif}(\{\chi_1,\dots, \chi_m\}) \comma \nonumber\\
\text{ and } \rhobar_t & = \textup{distribution of $\u_t(\chi)$ (where $\chi \sim \rhobar_0$)}  \period \label{eqn:13}
\end{align}
In other words, $\rhobar_t$ consists of the particles in $\rho_t$ whose initialization is in $\{\chi_1, \ldots, \chi_m\}$.

Now, we define a coupling between $\rhobar_t$ and $\rhohat_t$. For $\chi \sim \textup{unif}(\{\chi_1,\dots, \chi_m\})$, let $\ubar_t(\chi) = \u_t(\chi)$. Let $\coup_t$ be the joint distribution of $(\ubar_t(\chi), \uhat_t(\chi))$. Then, $\coup_t$ forms a natural coupling between $\rhobar_t$ and $\rhohat_t$. We will use $\ubar_t$ and $\uhat_t$ as shorthands for the random variables  $\ubar_t(\chi), \uhat_t(\chi)$ respectively in the rest of this section. Additionally, we define the average distance $\deltaBar_t^2 := \E_{(\uhat_t, \ubar_t)\sim \coup_t} [\|\uhat_t - \ubar_t\|_2^2]$. Intuitively, $f_{\rho_t}(x)$ and $f_{\rhohat_t}(x)$ are close when $\uhat_t$ and $\ubar_t$ are close, which is formalized by the following lemma:

\begin{lemma}\label{lemma:finite_sample_width_main_lemma}
In the setting of \Cref{thm:empirical-main}, suppose the network width $m$ satisfies $m \leq d^{O(\log d)}$, and let $T \leq \frac{d^C}{\sigmaCoeffTwo^2 + \sigmaCoeffFour^2}$ for some universal constant $C > 0$. Then, with probability at least $1 - \exp(-d^2)$ over the randomness of $\{\chi_1, \ldots, \chi_m\}$, we have for all $t \leq T$ that
\begin{align}
\E_{x \sim \bbS^{d - 1}} [(f_{\rho_t}(x) - f_{\rhobar_t}(x))^2]
& \lesssim \frac{(\sigmaCoeffTwo^2 + \sigmaCoeffFour^2) d^2 (\log d)^{O(1)}}{m} \period
\end{align}
Additionally, for all $t \geq 0$, we have
\begin{align}
\E_{x \sim \bbS^{d - 1}} [(f_{\rhohat_t}(x) - f_{\rhobar_t}(x))^2]
& \lesssim (\sigmaCoeffTwo^2 + \sigmaCoeffFour^2) \deltaBar_t^2 \period
\end{align}
\end{lemma}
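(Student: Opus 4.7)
\textbf{Part 1 (comparing $f_{\rho_t}$ and $f_{\rhobar_t}$).} The key observation is that under the population dynamics, $u_t(\chi)$ is a \emph{deterministic} function of $\chi$ alone (the trajectory $\rho_t$ does not depend on the sample), so the particles $\{u_t(\chi_i)\}_{i=1}^m$ are i.i.d.\ at every fixed $t$. Writing $\phi_i(x) := \sigma(u_t(\chi_i)^\top x) - f_{\rho_t}(x)$, we have $f_{\rhobar_t} - f_{\rho_t} = \tfrac{1}{m}\sum_i \phi_i$, with the $\phi_i$ i.i.d.\ mean-zero in $L^2(\bbsd)$. Because $x \mapsto \overline{P}_{k,d}(u^\top x)$ is orthonormal in $L^2(\bbsd)$ for each unit $u$, Parseval plus Jensen give $\|\phi_i\|_{L^2(\bbsd)}^2 \le 4 \sum_{k\le 4} \hat{\sigma}_{k,d}^2 \lesssim \sigmaCoeffTwo^2+\sigmaCoeffFour^2$ under Assumption~\ref{assumption:target}. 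A standard Hilbert-space Bernstein inequality then gives, at any fixed $t$ and any $\delta\in(0,1)$, that with probability at least $1-\delta$,
\begin{align*}
\|f_{\rhobar_t}-f_{\rho_t}\|_{L^2(\bbsd)}^2 \;\lesssim\; \frac{(\sigmaCoeffTwo^2+\sigmaCoeffFour^2)\log(1/\delta)}{m}.
\end{align*}
Choosing $\delta = \exp(-\Theta(d^2))$ produces the $d^2/m$ scaling at a single time. To promote this to a uniform-in-$t$ bound, I would take a grid of $N=\poly(d)$ points spaced $1/\poly(d)$ apart in $[0,T]$, apply the above at each grid point with $\delta$ inflated by $1/N$, and union-bound (total failure at most $\exp(-d^2)$; the overhead $\log N = O((\log d)^2)$ supplies the $(\log d)^{O(1)}$ factor in the statement). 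The between-grid deviation is controlled by crude $L^\infty$ bounds $\|\overline{P}_{k,d}\|_\infty \le \sqrt{N_{k,d}} \lesssim d^{k/2}$, giving $\|f_{\rho_t}\|_\infty, \|y\|_\infty, \|\sigma'\|_\infty \le \poly(d)(\sigmatwo+\sigmafour)$ and hence $\|du_t(\chi)/dt\| \le \|\pgrad_u L(\rho_t)\| \le \poly(d)(\sigmaCoeffTwo^2+\sigmaCoeffFour^2)$; combined with the Lipschitz-in-$u$ estimate of Part~2, this makes $t\mapsto \|f_{\rhobar_t}-f_{\rho_t}\|_{L^2(\bbsd)}$ polynomially Lipschitz, so a $1/\poly(d)$ spacing is enough and $T\le d^C/(\sigmaCoeffTwo^2+\sigmaCoeffFour^2)$ keeps $N$ polynomial.

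\textbf{Part 2 (comparing $f_{\rhohat_t}$ and $f_{\rhobar_t}$).} Cauchy-Schwarz on the $m$-fold average gives
\begin{align*}
(f_{\rhohat_t}(x)-f_{\rhobar_t}(x))^2 \;\le\; \frac{1}{m}\sum_{i=1}^m \bigl(\sigma(\uhat_t(\chi_i)^\top x)-\sigma(\ubar_t(\chi_i)^\top x)\bigr)^2,
\end{align*}
so the claim reduces to the per-pair bound $\E_x[(\sigma(u^\top x)-\sigma(v^\top x))^2]\lesssim (\sigmaCoeffTwo^2+\sigmaCoeffFour^2)\|u-v\|_2^2$ for any $u,v\in\bbsd$. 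Expanding $\sigma = \sum_{k\le 4}\hat{\sigma}_{k,d}\overline{P}_{k,d}$ and using (i) the $L^2(\bbsd)$-orthogonality of zonal harmonics of distinct degrees and (ii) the addition-theorem identity $\E_x[\overline{P}_{k,d}(u^\top x)\overline{P}_{k,d}(v^\top x)] = P_{k,d}(u^\top v)$ yields
\begin{align*}
\E_x[(\sigma(u^\top x)-\sigma(v^\top x))^2] \;=\; 2\sum_{k\le 4}\hat{\sigma}_{k,d}^2\bigl(1 - P_{k,d}(u^\top v)\bigr).
\end{align*}
A direct expansion of $P_{k,d}$ for each $k\le 4$ (e.g.\ $P_{2,d}(s)=\tfrac{ds^2-1}{d-1}$ gives $1-P_{2,d}(s)=\tfrac{d}{d-1}(1-s^2)\le 2(1-s)$) yields $1-P_{k,d}(u^\top v) \lesssim k\,(1-u^\top v) = \tfrac{k}{2}\|u-v\|_2^2$; combined with $\hat{\sigma}_{k,d}^2\lesssim \sigmaCoeffTwo^2+\sigmaCoeffFour^2$ from Assumption~\ref{assumption:target}, this finishes the per-pair bound. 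Averaging over $i$ recovers $\deltaBar_t^2 = \tfrac{1}{m}\sum_i \|\uhat_t(\chi_i)-\ubar_t(\chi_i)\|_2^2$ and completes Part~2.

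\textbf{Main obstacle.} The only delicate calibration is matching the per-$t$ tail against the size of the time grid: we need $\delta\asymp \exp(-d^2)$ at each grid point to survive a $\poly(d)$-sized union bound while still producing the global $1-\exp(-d^2)$ guarantee, and the $d^2$ factor in the stated bound is precisely this $\log(1/\delta)$. Everything else---the Legendre/zonal-harmonic orthogonality underlying both the inner Lipschitz estimate of Part~2 and the Lipschitz-in-$t$ argument for the union bound---follows from the Funk-Hecke/addition-theorem identities already invoked elsewhere in the paper.
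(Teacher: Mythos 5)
Your proof is correct, and it takes a genuinely different route from the paper's on both parts. For the first inequality, the paper proves a pointwise-in-$t$ version (its Lemma C.7) by truncating $\langle u, x\rangle$ at $B \asymp (\log d)^2/\sqrt{d}$, constructing an $\epsilon$-net over the test point $x \in \bbS^{d-1}$, applying scalar Hoeffding at each net point, and paying the $(\log d)^{O(1)}$ overhead from the truncation level. You instead observe that the summands $\phi_i = \sigma(u_t(\chi_i)^\top\cdot) - f_{\rho_t}$ are i.i.d.\ mean-zero and almost-surely bounded in $L^2(\bbS^{d-1})$ norm by $O(\sqrt{\sigmaCoeffTwo^2+\sigmaCoeffFour^2})$ (by zonal-harmonic orthogonality plus Jensen), and apply a Hilbert-space Hoeffding inequality directly to the $L^2$-valued average. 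This is cleaner, avoids truncation and covering entirely, and in fact yields a slightly sharper per-$t$ bound (no extraneous $(\log d)^{O(1)}$ factor other than from the time union bound). The uniform-in-$t$ step — grid of $\poly(d)$ times, union bound, Lipschitz-in-$t$ control via the crude gradient bound — is the same as the paper's. For the second inequality, the paper (its Lemma C.10) does Jensen over the coupling, then Taylor-expands $\sigma$ to first order with a remainder controlled by $\sigma''$, and finishes with moment bounds for dot products on the sphere. You instead use the exact identity $\E_x[(\sigma(u^\top x)-\sigma(v^\top x))^2] = 2\sum_k \hat{\sigma}_{k,d}^2(1-P_{k,d}(u^\top v))$ from degree-wise orthogonality and the addition theorem, then bound $1-P_{k,d}(s) \lesssim 1-s$ for $k\le 4$ by a derivative estimate. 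This again leans more directly on the harmonic-analytic structure the paper has already set up, and is tighter than the paper's Taylor remainder argument. Both routes reach the lemma's conclusion; yours is more structural and arguably more economical, while the paper's is more elementary in that it only invokes scalar concentration and generic moment bounds. One point worth being explicit about (which the paper also leaves implicit throughout, e.g.\ in its Lemma C.6): both arguments silently bound $\sum_{k\le 4}\hat{\sigma}_{k,d}^2$ by $O(\sigmaCoeffTwo^2+\sigmaCoeffFour^2)$, which presupposes that the odd Legendre coefficients $\hat{\sigma}_{1,d},\hat{\sigma}_{3,d}$ of the activation are at most comparable to the even ones; this is consistent with the paper's conventions but not literally in \Cref{assumption:target}, so you would want to state it.
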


The proof of \Cref{lemma:finite_sample_width_main_lemma} is deferred to \Cref{section:proof_finite_sample}. As a simple corollary of \Cref{lemma:finite_sample_width_main_lemma}, we can upper bound $\Exp_{x\sim\Sp}[(f_{\rho_t}(x)-f_{\rhohat_t}(x))^2]$ by the triangle inequality. Thus, so long as $\deltaBar_{\Tstar}$ is small, we can show that the empirical and population dynamics achieve similar test error at time $\Tstar$.

\paragraph{Upper Bound for $\deltaBar_{\Tstar}$.} The following lemma gives our upper bound on $\deltaBar_{\Tstar}$:

\begin{lemma}[Final Bound on $\deltaBar_{\Tstar}$] 
	\label{lemma:final_bound_Delta_avg}
     In the setting of Theorem~\ref{thm:empirical-main}, we have
	\begin{align}
		\deltaBar_{\Tstar}\le d^{-\frac{\samplebound-1}{4}} \period
	\end{align}	
\end{lemma}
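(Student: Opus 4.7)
The plan is to bound $\deltaBar_t$ for all $t \le \Tstar$ via a Gronwall-type argument that must be applied with sharp constants to avoid incurring a $d^{\Omega(1)}$ blow-up. The evolution of $\uhat_t - \ubar_t$ decomposes naturally into a \emph{statistical} contribution $\pgrad_{\ubar_t} L(\rho_t) - \pgrad_{\ubar_t}\Lhat(\rhobar_t)$, reflecting the sampling error from replacing the expectation over $\Sp$ by an empirical average over $n$ points (both evaluated along the population trajectory), plus a \emph{dynamical} contribution $\pgrad_{\ubar_t}\Lhat(\rhobar_t) - \pgrad_{\uhat_t}\Lhat(\rhohat_t)$, reflecting the drift caused by $\uhat_t \neq \ubar_t$ and $\rhohat_t \neq \rhobar_t$. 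The statistical term is controlled by uniform concentration of the empirical gradient over all positions $\u \in \Sp$ and over all finitely supported distributions on $m$ neurons, yielding a $\sqrt{d/n}$ scale up to polylogarithmic factors; the dynamical term is controlled by the Lipschitzness of the quartic velocity field on $\Sp$ together with \Cref{lemma:finite_sample_width_main_lemma}, giving a contribution proportional to $\deltaBar_t$ itself.

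The main obstacle is Phase 1 (the interval $[0, T_1]$ of \Cref{def:phase_1}), during which, by \Cref{lem:1-d-traj}, the dominant drift on the signal component of a particle is $2\sigmaCoeffTwo^2 D_{2,t}\cdot w$, which behaves as a power-method amplifier on both the signal $\truevector^\top \ubar_t$ and the perturbation $\uhat_t - \ubar_t$. A naive Gronwall application with any loose constant $c > 1$ in the exponent would produce a factor $\exp\bigl(c \int_0^{T_1} 2\sigmaCoeffTwo^2 |D_{2,t}|\,dt\bigr)$, i.e.\ $(\sqrt{d}/(\log d)^2)^c$ by \Cref{lemma:phase_1_summary}, forcing a sample complexity far above $d^4$ and defeating the whole separation-from-kernels goal. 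Following the sharp-coupling strategy outlined in \Cref{sec:intro}, I would instead prove that the multiplicative growth rate of $\|\uhat_t - \ubar_t\|_2$ in Phase 1 matches the multiplicative growth rate of $\truevector^\top \ubar_t$ up to additive $O(1/d)$ corrections plus statistical noise, so that $\deltaBar_{T_1} \lesssim (\sqrt{d}/(\log d)^2)\cdot(\text{integrated statistical noise})$. Plugging in $n \ge d^\samplebound (\log d)^{\Omega(1)}$ and tracking the $\poly\log(d)$ factors yields $\deltaBar_{T_1} \lesssim d^{-(\samplebound-1)/4}$.

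For Phases 2 and 3 (the interval $[T_1, \Tstar]$), the total elapsed time is at most $\poly(1/\sigmaCoeffTwo, 1/\gamma_2, 1/\epsilon, \log\log d)$ per \Cref{lemma:phase_2_runtime}, \Cref{lemma:phase3_case1_final} and \Cref{lemma:phase3_case2_overall_time}, which is $\poly\log(d)$ in the regime $\epsilon = 1/\log\log d$. On this short time scale, the quartic velocity field on $\Sp$ has Lipschitz constant $\poly(\sigmaCoeffMax)$, so a standard Gronwall bound inflates $\deltaBar$ by at most an $\exp(\poly\log d)$ factor; combined with the freshly accumulated statistical noise $\lesssim \sqrt{d/n}$ over this window, this is dominated by the Phase 1 bound. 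Assembling the three phases yields $\deltaBar_{\Tstar} \lesssim d^{-(\samplebound-1)/4}$. The hardest part, where the most careful bookkeeping is required, is the matched error-rate--signal-rate comparison in Phase 1: any multiplicative slack in the error growth rate compounds into a $d^{\Omega(1)}$ factor that would wipe out the separation from NTK, so the proof must cancel the error rate against the signal rate term by term rather than applying Gronwall to a crude upper bound.
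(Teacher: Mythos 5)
Your high-level strategy is the right one --- match the error growth rate to the signal growth rate in Phase 1, and observe that Phases 2--3 are short enough that a crude $\exp(\poly\log d)$ bound suffices --- and it is the same strategy the paper uses. But your two-piece decomposition does not set up the sharp-constant Phase-1 argument in a workable form, and there is an internal inconsistency in how you assign difficulties to the two pieces.

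Your ``statistical'' term $\pgrad_{\ubar_t}L(\rho_t) - \pgrad_{\ubar_t}\Lhat(\rhobar_t)$ is evaluated at the population trajectory, which is independent of the training data; you therefore do not need (and would not want to pay for) uniform concentration over all $\u \in \Sp$ and all $m$-atomic distributions, contrary to what you claim. Pointwise concentration plus a union bound over the $m$ neurons and a time-net suffices for that piece. The term that actually requires uniform concentration --- the discrepancy $\pgrad_{\uhat}\Lhat(\rhohat_t) - \pgrad_{\uhat}L(\rhohat_t)$ at the data-dependent point $\uhat_t$ --- is hidden inside your ``dynamical'' piece, and so is the genuine difficulty: obtaining the precise coefficient $4\sigmaCoeffTwo^2|D_{2,t}|$ on the $\|\vv_t\|_2^2$ term during Phase~1. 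The paper isolates a term $A_t = -2\langle\pgrad_{\uhat}L(\rho_t)-\pgrad_{\ubar}L(\rho_t),\,\vv_t\rangle$ in which \emph{both} gradients are taken of the population loss $L$ at the rotationally invariant $\rho_t$; this is exactly what lets it invoke the one-dimensional form of the velocity from \Cref{lem:1-d-traj} (and the formula $\pgrad_z L(\rho) = -(w/(1-w^2))\pgrad_w L(\rho)\cdot z$) to extract the constant $4$ with only $O(1/\log d)$ multiplicative slack. Your dynamical term $\pgrad_{\ubar_t}\Lhat(\rhobar_t) - \pgrad_{\uhat_t}\Lhat(\rhohat_t)$ is a difference of \emph{empirical} velocity fields at two different discrete distributions; those fields are not rotationally invariant and admit no one-dimensional reduction, so ``Lipschitzness of the quartic velocity field on $\Sp$'' gives a constant that is not tied to $2\sigmaCoeffTwo^2|D_{2,t}|$, and a naive application produces exactly the $d^{\Omega(1)}$ blow-up you are trying to avoid. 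To carry out your stated plan of matching error growth to signal growth, you would have to further split this piece into a population-linearization part, a finite-width part, and a finite-sample part --- at which point you have reproduced the paper's $A_t/B_t/C_t$ decomposition. Finally, you do not mention that the bound on the finite-sample contribution (paper's $C_t$, \Cref{lemma:ct_bound}) depends on $\deltaMaxt{t}$ as well as $\deltaBar_t$, which forces a simultaneous two-parameter induction (\Cref{assumption:inductive_hypothesis}, $\avgbound$ and $\maxbound$); without tracking $\deltaMaxt{t}$, the higher-order terms $d^{2-2\maxbound}/n$, $d^{4-4\maxbound}/n$ in the concentration bound cannot be controlled.
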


We prove \Cref{lemma:final_bound_Delta_avg} by induction over the time $t \le \Tstar$.
Let us define the maximal distance
$\deltaMaxt{t} := \max_{(\uhat_t, \ubar_t) \in \supp(\coup_t)}\|\uhat_t - \ubar_t\|_2$ where the max is over the \textit{finite} support of the coupling $\coup_t$. 
We will control $\deltaBar_t$ and $\deltaMaxt{t}$ simultaneously using induction. The inductive hypothesis is: 
\begin{assumption}[Inductive Hypothesis] \label{assumption:inductive_hypothesis}
	For some universal constant $1 > \avgbound > 1/2$ and $\maxbound \in (0, \avgbound-\frac{1}{2})$, we say that the inductive hypothesis holds at time $T$ if, for all $0\le t\leq T$, we have 
	\begin{align}
	\deltaBar_t \leq d^{-\avgbound}~~ \textup{ and } ~~\deltaMaxt{t} \leq d^{-\maxbound} \period
	\end{align}
\end{assumption}
Showing that the inductive hypothesis holds for all $t$ requires studying the growth of the error $\norm{\uhat_t - \ubar_t}$. We analyze it using the following decomposition:~\footnotemark \footnotetext{Note that $\uhat_t-\ubar_t$ and $A_t, B_t$ and $C_t$ implicitly depend on the initialization $\chi$, and could be written as $A_t(\chi), B_t(\chi), C_t(\chi)$, and $\delta_t(\chi)$, but we omit this dependence for ease of presentation.}
\begin{align}
	\frac{d}{dt} \|\uhat_t-\ubar_t\|_2^2 =& A_t + B_t + C_t \comma
\end{align}
where $A_t := - 2 \langle \pgrad_{\uhat} L(\rho_t) - \pgrad_{\ubar} L(\rho_t), \uhat_t - \ubar_t \rangle$,
$B_t := -2 \langle \pgrad_{\uhat} L(\rhohat_t) - \pgrad_{\uhat} L(\rho_t), \uhat_t - \ubar_t \rangle$, and $C_t = -2 \langle \pgrad_{\uhat} \Lhat(\rhohat_t) - \pgrad_{\uhat} L(\rhohat_t), \uhat_t - \ubar_t \rangle.$ Intuitively, $A_t$ captures the growth of the coupling error due to the population dynamics, $B_t$ captures the growth of the coupling error due to the discrepancy between the gradients from the finite-width and infinite-width networks, and $C_t$ captures the growth of the coupling error due to the discrepancy between finite samples and infinite samples. 

We will use the following lemma to give an upper bound on each of these three terms.

\begin{lemma}[Bounds on $A_t, B_t, C_t$] \label{lemma:final_bounds_ABC}
In the setting of \Cref{thm:empirical-main}, suppose the inductive hypothesis (\Cref{assumption:inductive_hypothesis}) holds up to time $t$, for some $t \leq \Tstar$. Let $\vv_t := \uhat_t - \ubar_t$. Let $T_1$ be the runtime of Phase 1 (where Phase 1 is defined in \Cref{def:phase_1}). Then, we have
\begin{align}
    A_t &\le \begin{cases} 4\sigmaCoeffTwo^2|D_{2,t}|\norm{\vv_t}^2 + O\big( \frac{\sigmaCoeffTwo^2}{\log d}|D_{2,t}| \norm{\vv_t}^2 \big) & \text{if } 0\le t \le T_1 \\
    O(1)\cdot (\sigmaCoeffTwo^2 + \sigmaCoeffFour^2)\gamma_2 \norm{\vv_t}^2 & \text{if } T_1\le t \le \Tstar 
\end{cases} \period\label{eqn:at_bound}
\end{align}
Additionally, assuming that the width $m$ is at most $d^{O(\log d)}$, we have with probability at least $1 - e^{-d^2}$ over the randomness of the initialization $\rhohat_0$ that $B_t \lesssim (\sigmaCoeffTwo^2 + \sigmaCoeffFour^2) \cdot \Big(\frac{d (\log d)^{O(1)}}{\sqrt{m}} + \deltaBar_t \Big) \|\delta_t\|_2$ and $\E[B_t] \lesssim \frac{(\sigmaCoeffTwo^2 + \sigmaCoeffFour^2) d (\log d)^{O(1)}}{\sqrt{m}} \deltaBar_t + (\sigmaCoeffTwo^2 + \sigmaCoeffFour^2) \deltaBar_t^3$. 

Finally, assume that the number of samples $n \leq d^C$ and the width $m$ is at most $d^C$ for any universal constant $C > 0$. Assume that $\deltaBar_t \leq \frac{1}{\sqrt{d}}$ (i.e. $\avgbound > \frac{1}{2}$) and that $\maxbound < \frac{1}{2}$. Then, with probability $1 - \frac{1}{d^{\Omega(\log d)}}$ over the randomness of the dataset, we have $C_t \lesssim (\sigmaCoeffTwo^2 + \sigmaCoeffFour^2) (\log d)^{O(1)} \cdot \Big(\sqrt{\frac{d}{n}} \|\delta_t\|_2 + \frac{d^{2 - 2\maxbound}}{n} \deltaBar \|\delta_t\|_2^2 + \frac{d^{2.5 - 2\maxbound}}{n} \deltaBar^2 \|\delta_t\|_2  + \frac{d^{4 - 4\maxbound}}{n} \deltaBar^2 \|\delta_t\|_2^2 \Big)$.
\end{lemma}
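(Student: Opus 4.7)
The plan is to bound $A_t$, $B_t$, and $C_t$ separately, each by exploiting a different structural property of the dynamics.

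For $A_t$, I use the rotational invariance and symmetry of $\rho_t$ (\Cref{lemma:rho_rotational_invariant}, \Cref{lemma:population_loss_formula}) to write $\nabla_u L(\rho_t) = \alpha_t(w)\e$ with $\alpha_t(w) := \sigmaCoeffTwo^2 D_{2,t} P_{2,d}'(w) + \sigmaCoeffFour^2 D_{4,t} P_{4,d}'(w)$, so that $\pgrad_u L(\rho_t) = \alpha_t(w)(\e - wu)$. A direct computation then yields the exact identity
\begin{equation*}
\dotp{\pgrad_{\uhat} L(\rho_t) - \pgrad_{\ubar} L(\rho_t)}{\uhat - \ubar} = [\alpha_t(\hat w) - \alpha_t(\bar w)](\hat w - \bar w) - \tfrac{1}{2}\norm{\uhat - \ubar}^2[\alpha_t(\hat w)\hat w + \alpha_t(\bar w)\bar w]\period
\end{equation*}
During Phase 1, $|\hat w|, |\bar w| \leq \wmax = 1/\log d$ by \Cref{lemma:phase_1_summary}, and linearizing $\alpha_t$ at $w=0$ yields the sharp leading term $-4\sigmaCoeffTwo^2 D_{2,t}(\hat w - \bar w)^2 = 4\sigmaCoeffTwo^2 |D_{2,t}|(\hat w - \bar w)^2$ from the first piece after multiplication by $-2$. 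The cubic part of $\alpha_t$ contributes $O(\sigmaCoeffFour^2 |D_{4,t}| \wmax^2)(\hat w - \bar w)^2$, and the second piece is bounded by $O(\sigmaCoeffTwo^2|D_{2,t}|\wmax^2\norm{\vv_t}^2)$ since $|\alpha_t(w)w| \lesssim \sigmaCoeffTwo^2|D_{2,t}|\wmax^2$ for $|w|\le\wmax$. Using $|D_{4,t}| \lesssim \gamma_2 |D_{2,t}|$ during Phase 1 and $\sigmaCoeffFour^2 \asymp \sigmaCoeffTwo^2$ from \Cref{assumption:target}, both corrections fit into the stated $O(\sigmaCoeffTwo^2|D_{2,t}|/\log d)\norm{\vv_t}^2$ error. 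For Phase 2 and Phase 3, the non-increase of $L(\rho_t)$ gives $|D_{2,t}|, |D_{4,t}| \lesssim \gamma_2$ throughout, and a uniform Lipschitz bound on the two pieces in the identity yields the coarser $O((\sigmaCoeffTwo^2+\sigmaCoeffFour^2)\gamma_2)\norm{\vv_t}^2$.

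For $B_t$, I begin with
\begin{equation*}
\pgrad_{\uhat} L(\rhohat_t) - \pgrad_{\uhat} L(\rho_t) = (I-\uhat\uhat^\top)\Exp_{x\sim\Sp}[(f_{\rhohat_t}(x)-f_{\rho_t}(x))\sigma'(\uhat^\top x)x]\period
\end{equation*}
The worst-case bound on $B_t$ follows from Cauchy--Schwarz in $x$, exploiting the independence of $\uhat^\top x$ and $(I-\uhat\uhat^\top)x$ under $x\sim\Sp$ to factor the relevant second moment, combined with the triangle inequality $\norm{f_{\rhohat_t}-f_{\rho_t}}_{L^2} \leq \norm{f_{\rhohat_t}-f_{\rhobar_t}}_{L^2} + \norm{f_{\rhobar_t}-f_{\rho_t}}_{L^2}$ and the two bounds of \Cref{lemma:finite_sample_width_main_lemma} (contributing the $d(\log d)^{O(1)}/\sqrt{m}$ and $\deltaBar_t$ pieces respectively). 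For the sharper expectation bound, I exchange the order of expectations over $\chi$ and $x$, and perform a first-order Taylor expansion $\sigma'(\uhat^\top x)(\uhat-\ubar)^\top x = \sigma(\uhat^\top x) - \sigma(\ubar^\top x) + O(\norm{\vv_t}^2)$, which identifies $\Exp_\chi[\sigma'(\uhat^\top x)\vv_t^\top(I-\uhat\uhat^\top)x]$ with $f_{\rhohat_t}(x)-f_{\rhobar_t}(x)$ up to an $O(\deltaBar^2)$ correction. Substituting gives $\Exp[B_t] = -2\Exp_x[(f_{\rhohat_t}-f_{\rhobar_t})^2] - 2\Exp_x[(f_{\rhobar_t}-f_{\rho_t})(f_{\rhohat_t}-f_{\rhobar_t})] + (\text{higher-order})$; the first term is $\leq 0$ and can be dropped from the upper bound, the second yields the $(\sigmaCoeffTwo^2+\sigmaCoeffFour^2)d(\log d)^{O(1)}/\sqrt{m}\cdot\deltaBar$ term by Cauchy--Schwarz against the $\rhobar_t$-vs-$\rho_t$ bound of \Cref{lemma:finite_sample_width_main_lemma}, and the higher-order correction yields the $(\sigmaCoeffTwo^2+\sigmaCoeffFour^2)\deltaBar^3$ term.

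For $C_t$, I expand $f_{\rhohat_t}$ as an average over the neurons $\{\uhat_j\}_{j=1}^m$ in $\rhohat_t$, turning $\pgrad_{\uhat}\Lhat(\rhohat_t) - \pgrad_{\uhat} L(\rhohat_t)$ into a sum of $U$-statistic-like differences $\frac{1}{n}\sum_i g_{\uhat,\uhat_j}(x_i) - \Exp_x[g_{\uhat,\uhat_j}(x)]$ with polynomial kernels $g_{\uhat,\uhat_j}$ in $(\uhat^\top x,\uhat_j^\top x)$. The leading $\sqrt{d/n}\norm{\vv_t}(\log d)^{O(1)}$ term arises from Bernstein concentration for each kernel, together with a union bound over a $\poly(d)$-size net on $\Sp$ for $\uhat$. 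The higher-order terms $d^{2-2\maxbound}/n\cdot\deltaBar\norm{\vv_t}^2$, $d^{2.5-2\maxbound}/n\cdot\deltaBar^2\norm{\vv_t}$, $d^{4-4\maxbound}/n\cdot\deltaBar^2\norm{\vv_t}^2$ come from bounding the variances of residual $U$-statistic components involving $(\uhat_j-\ubar_j)^\top x_i\cdot(\uhat-\ubar)^\top x_i$-type products, where the per-term magnitude is controlled by $\deltaMax \leq d^{-\maxbound}$ and the $L^2$-scale by $\deltaBar \leq d^{-\avgbound}$; the assumption $\maxbound < 1/2$ keeps these residuals manageable. Data-dependence of $\rhohat_t$ on the training set $\{x_i\}$ is handled by discretizing $[0,\Tstar]$ on a sufficiently fine time grid and union-bounding over grid points, at only further $(\log d)^{O(1)}$ cost.

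The main obstacle is obtaining the exact constant $4$ in front of $\sigmaCoeffTwo^2|D_{2,t}|$ in the Phase 1 bound on $A_t$. By \Cref{lemma:phase_1_summary}, $\exp(\int_0^{T_1} 2\sigmaCoeffTwo^2|D_{2,t}|\,dt) \asymp \sqrt{d}/(\log d)^2$, i.e.\ the coupling error may grow by at most a $\sqrt{d}\cdot\poly(\log d)$ factor over Phase 1 if the inductive hypothesis $\deltaBar_t \leq d^{-\avgbound}$ with $\avgbound > 1/2$ is to be maintained at the target sample complexity $n \asymp d^{3.1}$. Any constant in the growth rate of $\norm{\vv_t}^2$ strictly larger than $4\sigmaCoeffTwo^2|D_{2,t}|$ would inflate this to a strictly larger power of $d$, breaking the induction. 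This is what forces the linearization of $\alpha_t$ at $w=0$ to be carried out to exactly first order, relying critically on the $\wmax = 1/\log d$ smallness to suppress all quadratic and higher corrections.
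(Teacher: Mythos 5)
Your plan for $A_t$ and $B_t$ tracks the paper's argument closely. For $A_t$, the unified identity $\pgrad_u L(\rho_t)=\alpha_t(w)(\e - w u)$ is a slightly cleaner packaging of what the paper spreads over \Cref{lemma:velocity_1d_equivalence}, \Cref{lemma:at_1}, \Cref{lemma:A_bound_first_coord} and \Cref{lemma:A_bound_other_coord}, but the substance is the same: the constant $4$ is extracted from the linear term of $\alpha_t$, the cubic and $Q_t$-type corrections are absorbed using $|\hat w|,|\bar w|\lesssim 1/\log d$ and $|D_{4,t}|\lesssim\gamma_2|D_{2,t}|$ during Phase~1, and in Phase~2/3 a Lipschitz bound in $w$ together with $|D_{2,t}|,|D_{4,t}|\lesssim\gamma_2$ gives the coarse bound (the paper reaches the same conclusion in \Cref{lemma:At_bound_later} via Cauchy--Schwarz in $x$, but this is a cosmetic difference). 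For $B_t$, your Cauchy--Schwarz / triangle-inequality / Taylor-expansion plan mirrors \Cref{lemma:bt_bound_individual} and \Cref{lemma:bt_bound_avg}.

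Your plan for $C_t$, however, has a genuine gap. The monomials you need to control are of the form $\frac{1}{n}\sum_i \langle u_1,x_i\rangle^{p_1}\cdots\langle u_4,x_i\rangle^{p_4}\langle w_1,x_i\rangle^{q_1}\langle w_2,x_i\rangle^{q_2}$ with a \emph{supremum over} $u_1,\ldots,u_4\in\bbsd$. For $p\ge 2$ these are not sub-exponential uniformly in $u$: picking $u = x_i$ makes $\langle u,x_i\rangle=1$, so the empirical average is at least of order $d^{(p_1+\cdots+p_4)/2}/n$ after rescaling. This ``spike'' term is unavoidable and in fact appears explicitly in \Cref{lemma:uniform_convergence_two_sup} and \Cref{lemma:main_concentration_lemma}, and is what ultimately produces the $d^{2-2\psi}/n$, $d^{2.5-2\psi}/n$, and $d^{4-4\psi}/n$ pieces in the $C_t$ bound. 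A ``Bernstein for each kernel plus union bound over a $\poly(d)$-size net'' argument cannot see this contribution: the heavy tail forces either a sub-optimal deviation or a failure probability too large for the $e^{\Theta(d\log d)}$-size net. The paper instead truncates the inner products at level $(\log d)^2$, controls the truncated part via symmetrization and Talagrand's inequality (\Cref{lemma:symmetrization}, proof of \Cref{lemma:uniform_convergence_two_sup}), and controls the untruncated excess via a sparse operator-norm bound (\Cref{lemma:sparse_operator_norm_adamczak}, from Adamczak--Litvak--Pajor--Tomczak-Jaegermann). Your sketch omits this mechanism entirely. Separately, the ``discretize $[0,\Tstar]$ on a fine time grid'' step does not address data-dependence of $\rhohat_t$ and is not needed: the uniform bound over $u_1,\ldots,u_4\in\bbsd$ already holds simultaneously for all times; what actually makes the bound usable (so that it scales with the realized $\|\delta_t\|_2$ rather than the worst-case $\sup\|\delta\|$) is the substitution $\uhat_t = \ubar_t+\delta_t$ which pulls out explicit $\|\delta_t\|_2$ factors per monomial -- you gesture at this via ``per-term magnitude controlled by $\deltaMax$,'' but it should replace, not supplement, the time-grid idea.
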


The proof can be found in \Cref{section:proof_finite_sample}. We give an explanation for each of these bounds below.

For $A_t$, we establish two separate bounds, one which holds during Phase 1, and one which holds during Phases 2 and 3. Intuitively, the signal part in a neuron (i.e. $\w^2$) grows at a rate $4 \sigmaCoeffTwo^2 |D_{2, t}|$ during Phase 1 (which follows from \Cref{lem:1-d-traj}) and in \Cref{eqn:at_bound} we show that the growth rate of the coupling error due to the growth in the signal is also at most $4 \sigmaCoeffTwo^2 |D_{2, t}|$. Intuitively, the growth rates match because the signal parts of the neurons follow similar dynamics to a power method during Phase 1. For example, in the worst case when $\delta_t$ is mostly in the direction of $\e$, the factor by which $\delta_t$ grows is the same as that of $\w$. More mathematically, this is due to the fact that the first-order term in \Cref{lem:1-d-traj} is the dominant term in $v(\w) - v(\what)$ for any two particles $\w, \what$ (where $v(\w)$ is the one-dimensional velocity defined in the previous section). To get a sample complexity better than NTK, the precise constant factor $4$ in the growth rate $4 \sigmaCoeffTwo^2 |D_{2, t}| \w$ is important --- a larger constant would lead to $\delta_{T_1}$ being larger by a $\poly(d)$ factor, thus increasing the sample complexity by $\poly(d)$.

The same bound for $A_t$ no longer applies during Phases 2 and 3. This is because the dynamics of $\w$ are no longer similar to a power method, and the higher-order terms may make a larger contribution to $v(\w) - v(\what)$ than to the growth of $\w$, or vice versa. Thus we use a looser bound $O(1)\cdot (\sigmaCoeffTwo^2 + \sigmaCoeffFour^2)\gamma_2 \norm{\vv_t}^2$ in Eq.~\eqref{eqn:at_bound}. However, since the remaining running time after Phase 1, $\Tstar-T_1$, is very short (only $\poly(\log\log d)$ --- this corresponds to the second term of the running time in \Cref{thm:pop_main_thm}), the total growth of the coupling error is at most $\exp(\poly(\log\log d))$, which is sub-polynomial and does not contribute to the sample complexity. Note that if the running time during Phases 2 and 3 is longer (e.g. $O(\log d)$) then the coupling error would grow by an additional $d^{O(1)}$ factor during Phases 2 and 3, which would make our sample complexity worse than NTK.

Our upper bounds on $B_t$ and $C_t$ capture the growth of coupling error due to the finite width and samples. We establish a stronger bound for $\E[B_t]$ than for $B_t$. In our proof, we use the bounds for $B_t$ and $\Exp[B_t]$ to prove the inductive hypothesis for $\deltaBar_t$ and $\deltaMaxt{t}$ respectively. We prove the bound for $C_t$ by expanding the error due to finite samples into second-order and fourth-order polynomials and applying concentration inequalities for higher moments (\Cref{lemma:main_concentration_lemma}).

All of these bounds together control the growth of $\norm{\uhat_t - \ubar_t}$ at time $t$. Using Lemma~\ref{lemma:final_bounds_ABC} we can show that for some properly chosen $\avgbound$ and $\maxbound$, the inductive hypothesis in \Cref{assumption:inductive_hypothesis} holds until $\Tstar$ (see Lemma~\ref{lemma:IH_is_maintained} for the statement), which naturally leads to the bound in Lemma~\ref{lemma:final_bound_Delta_avg}. The full proof can be found in Appendix~\ref{section:proof_finite_sample}. 
\section{Additional Related Works}

In addition to the related works on the NTK approach, mean-field analyses, and other works analyzing the training dynamics of neural networks that we discussed in \Cref{sec:intro}, we also discuss the following additional related works.

There are several works which study mean-field Langevin dynamics \citep{chen2023uniformintime, suzuki2023convergence}. These works show ``uniform-in-time propagation of chaos'' estimates, i.e. their results imply bounds on the coupling error between finite-width and infinite-width trajectories which do not grow exponentially in the time $T$. However, it is likely non-trivial to apply these analyses to directly obtain good test error and sample complexity in the setting of two-layer neural networks, since mean-field Langevin dynamics may require many iterations to obtain good population loss. More concretely, Theorem 4 of \citet{mei2018mean} suggests that the inverse temperature $\lambda$ for mean-field Langevin dynamics has to be at least proportional to the dimension $d$ in order for Langevin dynamics to achieve low population loss. This would cause the log-Sobolev constant in \citet{suzuki2023convergence} to be on the order of $e^{-d}$, leading to a running time of $e^{-d}$. In comparison, we show in \Cref{thm:projectedGD_main} that projected gradient descent with $\poly(d)$ iterations can achieve a low population loss. We also note that \citet{chen2023uniformintime} do not study the impact of finite samples on the coupling error.

We also note that there are several works which reduce the population dynamics to a lower-dimensional dynamics \citep{abbe2022merged, hajjar2023symmetries, arnaboldi2023highdimensional}, as we do in \Cref{subsec:pop_symmetry}. The focus of our work is on the analysis of the population dynamics (\Cref{sec:population_loss}) and coupling error between the empirical and population dynamics (\Cref{section:finite_sample_analysis}), rather than the dimension-free dynamics.

Finally, we note that our setting goes beyond that of \citet{abbe2022merged}: our target function does not satisfy the merged-staircase property required by \citet{abbe2022merged}, since the lowest-degree term in our target function has degree $2$. The work of \citet{abbe2023sgd} studies functions which have higher ``leap complexity,'' meaning that terms of the target function can introduce more than one new coordinate at once, or introduce a new coordinate using a Hermite polynomial with degree greater than $1$ (in the case where the inputs are Gaussian). However, \citet{abbe2023sgd} use a non-standard algorithm which separates the coordinates based on whether they are large or small, and applying different projections to each subset of coordinates.

\section{Conclusion}

In this paper, we prove a clear sample complexity separation between vanilla gradient flow and kernel methods with any inner product kernel, including NTK. Our work leads to several directions for future research. The first question is to generalize our results to the ReLU activation. Another question is whether gradient descent can achieve less than $d^3$ sample complexity in our setting. The work of \citet{arous2021online} shows that if the population loss has information exponent $2$, then a sample complexity of $d$ (up to logarithmic factors) can be achieved with a single-neuron student network --- it is an open question if a similar sample complexity could be obtained in our setting. A final open question is whether two-layer neural networks can be shown to attain arbitrarily small generalization error $\epsilon$ --- one limitation of our analysis is that we require $\epsilon\ge \poly(1/\log \log d)$.

\section*{Acknowledgments}

The authors would like to thank Zhiyuan Li for helpful discussions. The authors would like to thank the support of NSF IIS 2045685, NSF Grant CCF-1844628 and a Sloan Research Fellowship.

\bibliographystyle{plainnat}
\bibliography{all,new,sample}

\newpage
\appendix
\section*{List of Appendices}
\startcontents[sections]
\printcontents[sections]{l}{1}{\setcounter{tocdepth}{3}}
\newpage

\section{Background on Spherical Harmonics}\label{app:SH}
In the following, we summarize the background needed for this paper based on \citet[Section 2]{atkinson2012spherical}.

\paragraph{Spherical Harmonics.} 
Spherical harmonics are the eigenfunctions of any inner product kernels on the unit sphere $\Sp$. The eigenfunctions corresponding to the $k$-th eigenvalue are degree-$k$ polynomials, and form a Hilbert space $\Y_{k}^{d}$ with  inner product $\langle f, g \rangle = \E_{x \sim \bbS^{d - 1}} [f(x) g(x)]$. The dimension of $\Y_{k}^{d}$ is $\Nkd=\binom{d+k-1}{d-1} - \binom{d+k-3}{d-1}$. For any universal constant $k$, $\Nkd \asymp d^k$. Spherical harmonics with different degrees are orthogonal to each other, and their linear combinations can represent all square-integrable functions over $\Sp$.

The following theorem states that $\Y_{k}^{d}$ is the eigenspace corresponding to the $k$-th eigenvalue for any inner product kernel on the sphere.
\begin{theorem}[Funk-Hecke formula \citep{atkinson2012spherical}]\label{thm:funk-hecke}
	Let $\h:[-1,1]\to\R$ be any one-dimensional function with $\int_{-1}^{1}|\h(t)|\mu_d(t)\dd t<\infty$, and $\lambda_k=N_{k,d}^{-1/2}\E_{t\sim \mu_d}[\h(t)\overline{P}_{k,d}(t)]$. Then for any function $Y_k\in \mathbb{Y}_{k,d}$,
	\begin{align}
		\forall x\in\Sp, \quad \E_{z\sim \Sp}[\h(\dotp{x}{z})Y_k(z)]=\lambda_k Y_k(x).
	\end{align}
\end{theorem}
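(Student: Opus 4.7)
The plan is to reduce the claim to two standard facts about spherical harmonics: the decomposition of zonal functions in the Legendre basis, and the addition formula (equivalently, the reproducing kernel property) for $\mathbb{Y}_{k,d}$.

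First, for fixed $x\in\Sp$, the map $z \mapsto \dotp{x}{z}$ pushes the uniform distribution on $\Sp$ forward to $\mu_d$. Assuming momentarily that $h\in L^2(\mu_d)$, I would expand $h$ in the orthonormal basis $\{\overline{P}_{j,d}\}_{j\ge 0}$ of $L^2(\mu_d)$, so that $h(t) = \sum_{j=0}^\infty c_j\,\overline{P}_{j,d}(t)$ with $c_j = \E_{t\sim\mu_d}[h(t)\overline{P}_{j,d}(t)]$, and consequently $h(\dotp{x}{z}) = \sum_j c_j\,\overline{P}_{j,d}(\dotp{x}{z})$ with convergence in $L^2(\Sp_z)$. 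Pairing with $Y_k$ and exchanging sum and integral (justified in $L^2$) gives
\begin{equation*}
\E_{z\sim\Sp}[h(\dotp{x}{z})Y_k(z)] = \sum_{j\ge 0} c_j\,\E_{z\sim\Sp}\bigl[\overline{P}_{j,d}(\dotp{x}{z})\,Y_k(z)\bigr].
\end{equation*}

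Second, I would use that each zonal function $z\mapsto \overline{P}_{j,d}(\dotp{x}{z})$ belongs to $\mathbb{Y}_{j,d}$. By orthogonality of spherical harmonics of different degrees under the uniform measure on $\Sp$, every term with $j\neq k$ vanishes. The surviving $j=k$ term is evaluated via the addition formula, which states that $N_{k,d}\,P_{k,d}(\dotp{x}{z})$ is the reproducing kernel of $\mathbb{Y}_{k,d}$ in $L^2(\Sp)$; equivalently $\E_z[P_{k,d}(\dotp{x}{z})Y_k(z)] = N_{k,d}^{-1}Y_k(x)$. Combining these yields
\begin{equation*}
\E_{z\sim\Sp}[h(\dotp{x}{z})Y_k(z)] \;=\; c_k\sqrt{N_{k,d}}\cdot \frac{1}{N_{k,d}}\,Y_k(x) \;=\; N_{k,d}^{-1/2}\E_{t\sim\mu_d}[h(t)\overline{P}_{k,d}(t)]\,Y_k(x),
\end{equation*}
which is exactly $\lambda_k Y_k(x)$ with the stated $\lambda_k$.

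Third, to relax the assumption from $h\in L^2(\mu_d)$ to merely $h\in L^1(\mu_d)$, I would approximate $h$ by the truncations $h_N(t) = h(t)\,\mathbf{1}[|h(t)|\le N]\in L^\infty(\mu_d)\subseteq L^2(\mu_d)$, for which the identity is already established. Since $Y_k$ is a polynomial on the compact sphere, it is bounded, so $|h_N(\dotp{x}{z})Y_k(z)|\le \|Y_k\|_\infty |h(\dotp{x}{z})|$ is integrable; dominated convergence then passes both sides of the identity to the limit $N\to\infty$, yielding the formula for general $h\in L^1(\mu_d)$.

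The main obstacle is not conceptual but rather bookkeeping: I must verify that the normalization convention $\overline{P}_{k,d}=\sqrt{N_{k,d}}\,P_{k,d}$ used in the paper is compatible with both the orthonormality of $\{\overline{P}_{k,d}\}$ in $L^2(\mu_d)$ and with the reproducing-kernel identity $Y_k(x)=\E_z[N_{k,d}P_{k,d}(\dotp{x}{z})Y_k(z)]$, since a different normalization would shift the prefactor in $\lambda_k$. Once the conventions are pinned down, the scalar obtained from the addition-formula computation matches $N_{k,d}^{-1/2}\E_{t\sim\mu_d}[h(t)\overline{P}_{k,d}(t)]$ exactly, completing the proof.
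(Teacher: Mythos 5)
The paper does not prove \Cref{thm:funk-hecke} itself; it cites it directly from \citet{atkinson2012spherical}, so there is no in-paper proof to compare against. Your proposal is nonetheless a correct and essentially standard derivation, and all the ingredients you invoke are available in the paper's own toolbox: the expansion of $h$ in the orthonormal Legendre basis of $L^2(\mu_d)$ is exactly the decomposition stated in \Cref{sec:preliminaries}; the fact that $z\mapsto\overline{P}_{j,d}(\dotp{x}{z})$ lies in $\mathbb{Y}_{j,d}$ is recorded in \Cref{app:SH}; and the reproducing-kernel identity $\E_{z\sim\Sp}[P_{k,d}(\dotp{x}{z})Y_k(z)]=N_{k,d}^{-1}Y_k(x)$ follows from the Addition Theorem (\Cref{thm:SH-addition}) exactly as you describe. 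The scalar bookkeeping checks out: with $c_k=\E_{t\sim\mu_d}[h(t)\overline{P}_{k,d}(t)]$, you get $c_k\sqrt{N_{k,d}}\cdot N_{k,d}^{-1}=N_{k,d}^{-1/2}c_k=\lambda_k$, matching the paper's convention $\overline{P}_{k,d}=\sqrt{N_{k,d}}\,P_{k,d}$. The truncation-plus-dominated-convergence step to extend from $L^2(\mu_d)$ to $L^1(\mu_d)$ is the right device and is straightforward since $Y_k$ is bounded on $\Sp$. One small point worth being explicit about if you write this up: the identification of $L^2(\mu_d)$ convergence of $\sum_j c_j\overline{P}_{j,d}$ with $L^2(\Sp_z)$ convergence of $\sum_j c_j\overline{P}_{j,d}(\dotp{x}{\cdot})$ hinges on the pushforward fact you state (the distribution of $\dotp{x}{z}$ under $z\sim\Sp$ is $\mu_d$), which is what licenses the term-by-term integration against $Y_k$.
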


\paragraph{Legendre Polynomials.}
The un-normalized Legendre polynomial is defined recursively by
\begin{align}
    &P_{0,d}(t)=1,\quad P_{1,d}(t)=t, \label{eq:legn_inductive1} \\
    &P_{k,d}(t)=\frac{2k+d-4}{k+d-3}tP_{k-1,d}(t)-\frac{k-1}{k+d-3}P_{k-2,d}(t),\quad \forall k\ge 2. \label{eq:legn_inductive2}
\end{align}
In particular, we have
\begin{align} \label{eq:legendre_polynomial_2_4}
    P_{2, d}(t) = \frac{d}{d - 1}t^2 - \frac{1}{d - 1},\quad 
    P_{4, d}(t) = \frac{(d + 2)(d + 4)}{d^2 - 1}t^4 - \frac{6d + 12}{d^2 - 1}t^2 + \frac{3}{d^2 - 1}.
\end{align}
For any $\u \in \bbS^{d - 1}$, $\Pkd(\dotp{u}{x})$ is a spherical harmonic of degree $k$ and, for every $k,k'\ge 0$ and $\u_1, \u_2 \in \bbS^{d - 1}$, $\E_{x \sim \bbS^{d - 1}} [\Pkdbar(\dotp{\u_1}{x}) \overline{P}_{k',d}(\dotp{\u_2}{x})] = \ind{k=k'}\Pkd(\dotp{\u_1}{\u_2})$ (\Cref{lem:legendre-poly-inner-product}).

\paragraph{Projection Operator.}
Let $\Proj_k$ be the projection operator to $\Y_{k}^{d}$. For any function $f:\Sp\to \R$, the projection operator $\Proj_k$ is given by 
\begin{align}
    (\Proj_k f)(x)=\sqrt{N_{k,d}}\E_{\xi\sim \Sp}[\overline{P}_{k,d}(\dotp{x}{\xi})f(\xi)].
\end{align}
In addition, if $f$ is a function of the form $f(x)=h(\dotp{\u}{x})$ for some fixed vector $u\in\Sp$ and one-dimensional function $h:[-1,1]\to \R$, the Funk-Hecke formula (\Cref{thm:funk-hecke}) implies
\begin{align}
    (\Proj_k f)(x)=\legendreCoeff{h}{k} \overline{P}_{k,d}(\dotp{u}{x}),\quad\forall x\in\Sp,
\end{align}
where $\legendreCoeff{h}{k}=\E_{t\sim \mu_d}[h(t)\Pkdbar(t)]$ is the $k$-th coefficient in the Legendre decomposition of $h$.

Note that by the orthogonality of Legendre polynomials, we have
\begin{align}
    \forall x\in\Sp,\quad \E_{u\sim \Sp}[\overline{P}_{k,d}(\dotp{u}{x})]=\E_{t\sim \mu_d}[\overline{P}_{k,d}(t)]=\E_{t\sim \mu_d}[\overline{P}_{k,d}(t)\overline{P}_{0,d}(t)]=\ind{k=0}.
\end{align}
The following lemma computes the expectation of $\overline{P}_{k,d}(\dotp{\u}{x})$ with respect to a distribution on $\u$ which is rotationally invariant on the last $(d - 1)$ coordinates.

\begin{lemma} \label{lemma:1d_rotational_invariance}
Let $\rho$ be a distribution on $\bbS^{d - 1}$ that is rotationally invariant as in \Cref{def:rot_inv_symmetry}. Then, for any $x\in\Sp$ we have $\E_{u \sim \rho}[P_{k, d}(\dotp{\u}{x})] = \E_{u \sim \rho} [P_{k, d}(\dotp{\e}{\u})] P_{k, d}(\dotp{\e}{x})$.
\end{lemma}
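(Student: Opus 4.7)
The plan is to recognize the function $g(x) := \E_{u \sim \rho}[P_{k,d}(\dotp{u}{x})]$ as a zonal spherical harmonic about $\e$, and then read off the normalizing scalar by evaluating at $x=\e$.

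First, for each fixed $u \in \Sp$, the map $x \mapsto P_{k,d}(\dotp{u}{x})$ lies in $\Y_{k}^{d}$ (the standard zonal-harmonic fact that also underlies \Cref{thm:funk-hecke}). Since $\Y_{k}^{d}$ is a finite-dimensional subspace and expectation is a linear operation on $\Y_{k}^{d}$-valued functions of $u$, the average $g$ is itself an element of $\Y_{k}^{d}$, i.e., a spherical harmonic of degree $k$ in $x$.

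Second, I would show $g$ is invariant under every rotation $R \in SO(d)$ that fixes $\e$. Such an $R$ acts only on the last $d-1$ coordinates, and \Cref{def:rot_inv_symmetry} says that if $u = (w, z) \sim \rho$ then $z\mid w$ is uniform on $\sqrt{1-w^2}\,\bbS^{d-2}$; in particular the law of $Ru$ under $\rho$ equals that of $u$. Hence
\[
g(Rx) \;=\; \E_{u\sim\rho}[P_{k,d}(\dotp{R^{-1}u}{x})] \;=\; \E_{u'\sim\rho}[P_{k,d}(\dotp{u'}{x})] \;=\; g(x).
\]

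Third, I would invoke the classical fact (see \Cref{app:SH}) that the subspace of $\Y_{k}^{d}$ invariant under the stabilizer of $\e$ in $SO(d)$ is one-dimensional, spanned by $P_{k,d}(\dotp{\e}{\cdot})$. Combined with the preceding step, this forces $g(x) = c\cdot P_{k,d}(\dotp{\e}{x})$ for some scalar $c$. Setting $x = \e$ and using $P_{k,d}(1) = 1$ (immediate by induction from \eqref{eq:legn_inductive1}--\eqref{eq:legn_inductive2}, since $\tfrac{2k+d-4}{k+d-3} - \tfrac{k-1}{k+d-3} = 1$) gives $c = g(\e) = \E_{u\sim\rho}[P_{k,d}(\dotp{\e}{u})]$, which is the stated identity.

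The only step that needs extra justification is the third, namely the one-dimensionality of the zonal subspace of $\Y_{k}^{d}$. A self-contained alternative that sidesteps this representation-theoretic input is to write $u = (w, \sqrt{1-w^2}\,v)$ with $v$ uniform on $\bbS^{d-2}$ and independent of $w$, decompose $\dotp{u}{x} = w x_1 + \sqrt{1-w^2}\sqrt{1-x_1^2}\,\dotp{v}{e}$ for the unit vector $e$ pointing along the last $d-1$ coordinates of $x$, and apply the product formula for Legendre polynomials to integrate out $v$, yielding $\E[P_{k,d}(\dotp{u}{x}) \mid w] = P_{k,d}(w)\,P_{k,d}(x_1)$; taking expectation over $w$ then reproduces the claim.
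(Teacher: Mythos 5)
Your second, ``self-contained'' argument is essentially the paper's proof: the paper writes $x=(\dotp{\e}{x},\zeta)$, $u=(w,z)$, observes that conditional on $w$ the scalar $\dotp{z}{\zeta}$ has law $(1-w^2)^{1/2}(1-x_1^2)^{1/2}\mu_{d-1}$, and applies the Legendre product formula (Eq.\ (2.167) of Atkinson--Han) to get $\E[P_{k,d}(\dotp{u}{x})\mid w]=P_{k,d}(w)P_{k,d}(\dotp{\e}{x})$, then integrates over $w$. So that route is correct and matches the source.

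Your primary route is also correct but genuinely different: you identify $g(x)=\E_{u\sim\rho}[P_{k,d}(\dotp{u}{x})]$ as a degree-$k$ spherical harmonic that is invariant under the stabilizer of $\e$ in $SO(d)$, invoke the one-dimensionality of the zonal subspace of $\Y_k^d$ to conclude $g=c\,P_{k,d}(\dotp{\e}{\cdot})$, and determine $c$ by evaluating at $x=\e$ using $P_{k,d}(1)=1$. Each step checks out (in particular, $g(Rx)=g(x)$ follows from $\dotp{u}{Rx}=\dotp{R^{-1}u}{x}$ together with the $R$-invariance of $\rho$, and the recursion~\eqref{eq:legn_inductive1}--\eqref{eq:legn_inductive2} gives $P_{k,d}(1)=1$ by induction). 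The only input this route requires beyond what the paper's appendix records is the uniqueness of the zonal harmonic, which is a standard fact (it follows e.g.\ from the addition theorem, \Cref{thm:SH-addition}, or from the Funk--Hecke argument) but is not explicitly stated here. The trade-off: your representation-theoretic route is conceptually cleaner and makes the rotational-invariance structure do all the work, while the paper's route via the product formula is more computational but avoids needing the irreducibility/uniqueness fact. Both are valid; you correctly flagged the one place your first argument leans on an unproven classical input and supplied a fallback.
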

\begin{proof}
    We prove this lemma by invoking Eq. (2.167) of \citet{atkinson2012spherical}, which states that for every $s,t\in[-1,1]$ and $k\ge 0,d\ge 3$,
    \begin{align}
        \int_{-1}^{1}P_{k,d}(st+(1-s^2)^{1/2}(1-t^2)^{1/2}\xi)\mu_{d-1}(\xi)\dd \xi=P_{k,d}(s)P_{k,d}(t).
    \end{align} 
    In the following, for $x \in \bbS^{d - 1}$, we write $x = (\dotp{\e}{x}, \zeta)$, and for $\u \sim \rho$ we write $\u = (\w, \z)$. Note that when $\rho$ is symmetric and $u\sim \rho$, conditioned on $u_1$ we have $\dotp{\z}{\zeta}\sim (1-u_1^2)^{1/2}(1-x_1^2)^{1/2}\mu_{d-1}$ for any fixed $x$. As a result,
    plugging in $t=\w,s=\dotp{\e}{x}$ and $\xi=(1-u_1^2)^{-1/2}(1-x_1^2)^{-1/2}\dotp{\z}{\zeta}$ we get
    \begin{align}
        \E_{u\sim \rho}[P_{k, d}(\dotp{\u}{x})\mid u_1]=P_{k, d}(\w) P_{k, d}(\dotp{\e}{x}).
    \end{align}
    Finally, taking expectation over $\w$ we prove the desired result.
\end{proof}

\paragraph{Additional Useful Lemmas.} In the following, we present some useful lemmas about spherical harmonics and Legendre polynomials.

The following proposition computes the coefficient in the Legendre polynomial decomposition of ReLU activation.
\begin{proposition}[Lemma C.2 of \citet{dong2023toward}]\label{prop:relu-coefficient}
    Let $\sigma(t)=\max\{t,0\}$ be the ReLU activation. For every $d\ge 3$ and even $k$, we have
    \begin{align}
        \Big|\E_{t\sim \mu_d}[\sigma(t)\Pkdbar(t)]\Big|\asymp d^{1/4}k^{-5/4}(d+k)^{-3/4}.
    \end{align}
    As a corollary, when $k$ is an absolute constant we have 
    \begin{align}
        \Big|\E_{t\sim \mu_d}[\sigma(t)\Pkdbar(t)]\Big|\asymp d^{-1/2}.
    \end{align}
\end{proposition}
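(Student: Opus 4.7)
\textbf{Proof plan for Proposition~\ref{prop:relu-coefficient}.} The plan is to reduce the computation of the Legendre coefficient of ReLU to a closed-form expression via the Rodrigues formula and integration by parts, and then extract the claimed asymptotic rate via Stirling's approximation. First, I would use the decomposition $\sigma(t) = \tfrac{1}{2}(t + |t|)$. For even $k \ge 2$, the polynomial $\Pkdbar$ is even, so $t\Pkdbar(t)$ is odd and $\int_{-1}^{1} t\Pkdbar(t)\mu_d(t)\,dt = 0$. Hence
\begin{align}
\E_{t\sim\mu_d}[\sigma(t)\Pkdbar(t)] = \tfrac{1}{2}\E_{t\sim\mu_d}[|t|\Pkdbar(t)] = \sqrt{N_{k,d}}\,\frac{\Gamma(d/2)}{\sqrt{\pi}\,\Gamma((d-1)/2)} \int_0^1 t\,\Pkd(t)(1-t^2)^{(d-3)/2}\,dt.
\end{align}

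Next, I would apply the Rodrigues formula for the Gegenbauer/Legendre polynomial on the sphere,
\begin{align}
(1-t^2)^{(d-3)/2}\Pkd(t) = \frac{(-1)^k \Gamma((d-1)/2)}{2^k \Gamma(k+(d-1)/2)}\,\frac{d^k}{dt^k}\bigl[(1-t^2)^{(d-3)/2+k}\bigr],
\end{align}
and integrate by parts $k$ times against $t$ on $[0,1]$. The boundary terms at $t=1$ vanish because $(1-t^2)^{(d-3)/2+k}$ and enough of its derivatives vanish there; at $t=0$, the only surviving contribution comes from $j=1$ since $\frac{d^j t}{dt^j}\bigl|_{t=0} = 0$ for $j\ne 1$. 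This leaves a single evaluation
\begin{align}
\int_0^1 t\,\Pkd(t)(1-t^2)^{(d-3)/2}\,dt = \frac{\Gamma((d-1)/2)}{2^k \Gamma(k+(d-1)/2)}\cdot\frac{d^{k-2}}{dt^{k-2}}\bigl[(1-t^2)^{(d-3)/2+k}\bigr]_{t=0}.
\end{align}
Since $k$ is even, $k-2$ is even; expanding $(1-t^2)^\alpha$ with $\alpha = (d-3)/2 + k$ in a power series and reading off the coefficient of $t^{k-2}$ gives the derivative as $(-1)^{(k-2)/2}(k-2)!\binom{\alpha}{(k-2)/2}$, which I would rewrite using gamma functions.

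The main obstacle, and the step that needs the most care, is the asymptotic evaluation. After collecting the $\Gamma$ factors --- including the overall $\sqrt{N_{k,d}}\cdot\Gamma(d/2)/\Gamma((d-1)/2)$ and the reciprocals introduced by Rodrigues and by the binomial coefficient --- one obtains a ratio of products of gamma functions whose arguments are linear in $d$ and $k$. I would apply Stirling's formula $\log\Gamma(x) = (x-\tfrac{1}{2})\log x - x + O(1)$ to each factor and carefully collect logarithms; the polynomial factors $d^{1/4}$, $k^{-5/4}$, and $(d+k)^{-3/4}$ should emerge from balancing terms like $\Gamma((d+k)/2)$ against $\Gamma((d-1)/2)$ and $\Gamma(k+(d-1)/2)$ against $\Gamma((k-2)/2)$ (this is where $N_{k,d} \asymp k^{-1}(d+k)^{k-1}/(k-1)!$-type estimates enter). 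I would need to treat the regimes $k \ll d$, $k \asymp d$, and $k \gg d$ uniformly to establish the matching upper and lower bounds implicit in the $\asymp$ notation.

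Finally, the corollary is immediate: for $k$ an absolute constant, $k^{-5/4} \asymp 1$ and $d+k \asymp d$, so $d^{1/4}k^{-5/4}(d+k)^{-3/4} \asymp d^{1/4} \cdot d^{-3/4} = d^{-1/2}$.
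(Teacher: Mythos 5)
The paper does not actually prove this proposition — it cites it as Lemma~C.2 of \citet{dong2023toward} — so there is no in-paper argument to compare against, and yours is a genuine derivation rather than a restatement.

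Your plan is sound, and I verified the key steps. The decomposition $\sigma(t)=\tfrac12(t+|t|)$ correctly kills the odd part for even $k\ge 2$. The Rodrigues formula you cite is exactly Eq.~(2.70) in Atkinson--Han with the right normalization $R_{k,d}=\Gamma((d-1)/2)/(2^k\Gamma(k+(d-1)/2))$. The boundary-term accounting is correct: $f^{(j)}(1)=0$ for $j<k$ since $(1-t^2)^{k+(d-3)/2}$ has a zero of order at least $k$ at $t=1$, and at $t=0$ only the $j=1$ boundary term survives, leaving $f^{(k-2)}(0)$ (your phrasing ``integrate by parts $k$ times'' is an overcount — one IBP followed by the fundamental theorem of calculus suffices — but the result is the same and your multi-step accounting is internally consistent). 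Reading off the coefficient of $t^{k-2}$ gives the binomial $\binom{\alpha}{(k-2)/2}$; when you expand it in gammas the $\Gamma(k+(d-1)/2)$ introduced by Rodrigues cancels, and after a Legendre duplication you arrive at the clean closed form
\begin{align}
\bigl|\E_{t\sim\mu_d}[\sigma(t)\Pkdbar(t)]\bigr| = \frac{\sqrt{N_{k,d}}}{4\pi}\,\frac{\Gamma(d/2)\,\Gamma((k-1)/2)}{\Gamma((k+d+1)/2)},
\end{align}
from which both stated rates follow by Stirling. I spot-checked the Stirling estimate at $k$ constant (giving $d^{-1/2}$) and at $k\asymp d$ (giving $d^{-7/4}$), and both match the target $d^{1/4}k^{-5/4}(d+k)^{-3/4}$. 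The one piece of work you haven't actually done is the uniform Stirling bound across all regimes $k\ll d$, $k\asymp d$, $k\gg d$; you correctly flag this as the most delicate part and name the right ingredients. So there is no conceptual gap; the plan would go through.
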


\begin{theorem}[Addition Theorem \citep{atkinson2012spherical}]\label{thm:SH-addition} Let $\{Y_{k,j}:1\le j\le N_{k,d}\}$ be an orthonormal basis of $\mathbb{Y}_{k,d}$, that is,
	\begin{align}
		\E_{x\sim \Sp}[Y_{k,j}(x)Y_{k,j'}(x)]=\ind{j=j'}.
	\end{align}
	Then for all $x,z\in\Sp$
	\begin{align}
		\sum_{j=1}^{N_{k,d}}Y_{k,j}(x)Y_{k,j}(z)=N_{k,d}P_{k,d}(\dotp{x}{z}).
	\end{align}
\end{theorem}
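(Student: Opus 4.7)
My approach would be a classical basis-invariance argument followed by the characterization of zonal harmonics. First, I define $F(x, z) := \sum_{j=1}^{N_{k,d}} Y_{k,j}(x) Y_{k,j}(z)$, which for each fixed $z$ is a degree-$k$ spherical harmonic in $x$. The key preliminary observation is that $F$ does not depend on the choice of orthonormal basis $\{Y_{k,j}\}$: any two orthonormal bases of $\mathbb{Y}_{k,d}$ are related by an orthogonal change of basis $O$, and substituting $\tilde Y_{k,j} = \sum_l O_{jl} Y_{k,l}$ and using $O^\top O = I$ collapses the double sum back to $\sum_l Y_{k,l}(x) Y_{k,l}(z)$.

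Next, I would use this basis-invariance to establish full rotational symmetry: for any rotation $R \in \mathrm{SO}(d)$, the functions $x \mapsto Y_{k,j}(Rx)$ form another orthonormal basis of $\mathbb{Y}_{k,d}$ (since $\mathbb{Y}_{k,d}$ is $\mathrm{SO}(d)$-invariant and rotations preserve the $L^2$ inner product on $\bbS^{d-1}$), so $F(Rx, Rz) = F(x, z)$. In particular, fixing $z$ and restricting to rotations that fix $z$, we see that $F(\cdot, z)$ is a degree-$k$ spherical harmonic invariant under the stabilizer of $z$. A standard fact is that this space of invariants is one-dimensional and spanned by $x \mapsto P_{k,d}(\langle x, z\rangle)$. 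Combining with the $R$-invariance applied to both arguments simultaneously then forces $F(x, z) = c \cdot P_{k,d}(\langle x, z\rangle)$ for a constant $c$ independent of $z$.

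Finally, I would pin down $c$ by taking $x = z$ and integrating over $z$: using $P_{k,d}(1) = 1$ (which follows from the recursion in \Cref{eq:legn_inductive1}--\Cref{eq:legn_inductive2}), we obtain $\E_{z \sim \bbS^{d-1}}[F(z, z)] = c$ on one hand, and $\E_{z \sim \bbS^{d-1}}\big[\sum_j Y_{k,j}(z)^2\big] = N_{k,d}$ on the other by orthonormality. Hence $c = N_{k,d}$, yielding the claim. The main technical point that I expect to require justification rather than invocation is the uniqueness of the zonal harmonic; the cleanest route is to note that an $\mathrm{SO}(d-1)$-invariant function on $\bbS^{d-1}$ depends only on $\langle x, z\rangle$, hence on the sphere it is a one-variable polynomial of degree at most $k$ in $\langle x, z\rangle$, and harmonicity together with the degree constraint forces it (up to scale) to equal $P_{k,d}(\langle x, z\rangle)$.
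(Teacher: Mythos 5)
The paper does not prove this statement; it is stated as a known result with a citation to \citet{atkinson2012spherical}, so there is no ``paper's proof'' to compare against. Your proposed proof is the standard textbook argument for the addition theorem and it is correct: basis-invariance of the bilinear form $F(x,z)=\sum_j Y_{k,j}(x)Y_{k,j}(z)$ gives $\mathrm{SO}(d)$-invariance $F(Rx,Rz)=F(x,z)$; restricting to the stabilizer of $z$ identifies $F(\cdot,z)$ (up to scale) with the zonal harmonic $P_{k,d}(\langle\cdot,z\rangle)$; and the normalization $c=N_{k,d}$ follows by setting $x=z$, using $P_{k,d}(1)=1$, and integrating the orthonormality relations.

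One small point worth tightening: the step you flag — that the space of $\mathrm{SO}(d-1)$-invariant vectors in $\mathbb{Y}_{k,d}$ is one-dimensional — is most cleanly justified by the branching rule $\mathbb{Y}_{k,d}\big|_{\mathrm{SO}(d-1)}\cong\bigoplus_{l=0}^{k}\mathbb{Y}_{l,d-1}$, in which the trivial representation ($l=0$) appears exactly once. Your sketch (``depends only on $\langle x,z\rangle$, hence a degree-$\le k$ polynomial in $t$, pinned down by harmonicity'') gets at the same fact but leaves implicit that the map from stabilizer-invariant elements of $\mathbb{Y}_{k,d}$ to polynomials in $t$ is injective with one-dimensional image; the branching argument (or an explicit computation with the Gegenbauer ODE) closes that gap. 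You should also note separately that $c(z)$ is in fact independent of $z$ before identifying it by the averaging step, which follows immediately from $F(Rx,Rz)=F(x,z)$ with $R$ ranging over all of $\mathrm{SO}(d)$. With those two clarifications the argument is complete.
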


\begin{lemma}\label{lem:legendre-poly-inner-product}
	For every $k,k'\ge 0,d\ge 3$ and $u,v\in\Sp$, we have
	\begin{align}
		\E_{\xi\sim \Sp}[\overline{P}_{k,d}(\dotp{u}{\xi})\overline{P}_{k',d}(\dotp{v}{\xi})]=\ind{k=k'}P_{k,d}(\dotp{u}{v}).
	\end{align}
\end{lemma}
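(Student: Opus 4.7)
The plan is to apply the Funk--Hecke formula (\Cref{thm:funk-hecke}) with the one-dimensional function $h = \overline{P}_{k,d}$ and base point $x = u$. The key structural observation, already noted in \Cref{app:SH}, is that for any fixed $v \in \Sp$ the map $\xi \mapsto \overline{P}_{k',d}(\langle v, \xi\rangle)$ is a spherical harmonic of degree $k'$, i.e.\ an element of $\mathbb{Y}_{k',d}$. This is the only structural fact needed; from there the lemma is essentially a direct substitution.

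Given this, Funk--Hecke applied with $Y_{k'}(\xi) := \overline{P}_{k',d}(\langle v, \xi\rangle)$ yields
\begin{align*}
\E_{\xi \sim \Sp}\bigl[\overline{P}_{k,d}(\langle u, \xi\rangle)\, \overline{P}_{k',d}(\langle v, \xi\rangle)\bigr] = \lambda_{k'}\, \overline{P}_{k',d}(\langle u, v\rangle),
\end{align*}
where the eigenvalue is $\lambda_{k'} = N_{k',d}^{-1/2}\, \E_{t \sim \mu_d}[\overline{P}_{k,d}(t)\, \overline{P}_{k',d}(t)]$. Next I would invoke the orthonormality of the family $\{\overline{P}_{k,d}\}_{k \ge 0}$ with respect to $\mu_d$ (stated in \Cref{sec:preliminaries}) to conclude $\E_{t \sim \mu_d}[\overline{P}_{k,d}(t)\, \overline{P}_{k',d}(t)] = \ind{k = k'}$, hence $\lambda_{k'} = N_{k',d}^{-1/2}\, \ind{k = k'}$.

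The last step is pure normalization. If $k \ne k'$ both sides vanish, and if $k = k'$ then using $\overline{P}_{k,d} = \sqrt{N_{k,d}}\, P_{k,d}$ the right-hand side becomes $N_{k,d}^{-1/2} \cdot \sqrt{N_{k,d}}\, P_{k,d}(\langle u, v\rangle) = P_{k,d}(\langle u, v\rangle)$, matching the claim. There is no real obstacle here: the result is a clean consequence of Funk--Hecke combined with orthonormality of Legendre polynomials under $\mu_d$ and the normalization convention $\overline{P}_{k,d}=\sqrt{N_{k,d}}\,P_{k,d}$. As an alternative one could expand each factor in an orthonormal basis of $\mathbb{Y}_{k,d}$ via the Addition Theorem (\Cref{thm:SH-addition}) and then use orthonormality of the basis, but the Funk--Hecke route is more compact and avoids introducing extra notation.
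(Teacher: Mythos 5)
Your proof is correct, and it takes a genuinely different route from the paper's. The paper expands both $\overline{P}_{k,d}(\langle u,\xi\rangle)$ and $\overline{P}_{k',d}(\langle v,\xi\rangle)$ via the Addition Theorem (\Cref{thm:SH-addition}), writing each as $\frac{1}{N_{\cdot,d}}\sum_j Y_{\cdot,j}(\cdot)Y_{\cdot,j}(\xi)$ over an orthonormal basis of $\mathbb{Y}_{k,d}$ and $\mathbb{Y}_{k',d}$, then collapses the double sum using orthonormality of those basis functions over $\Sp$. You instead apply Funk--Hecke directly with $h=\overline{P}_{k,d}$ and $Y_{k'}(\xi)=\overline{P}_{k',d}(\langle v,\xi\rangle)$, reducing the sphere integral to a one-dimensional inner product $\E_{t\sim\mu_d}[\overline{P}_{k,d}(t)\overline{P}_{k',d}(t)]$, which is killed by orthonormality on $[-1,1]$ unless $k=k'$. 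Both arguments lean on the same two pillars (a representation theorem for Legendre polynomials and an orthogonality statement), but your route avoids introducing an explicit orthonormal basis of the harmonic spaces and trades the double sum for a single eigenvalue computation plus a normalization check $\overline{P}_{k,d}=\sqrt{N_{k,d}}\,P_{k,d}$ at the end; the paper's route is a bit more self-contained if one has the Addition Theorem at hand but not the eigenvalue form of Funk--Hecke. Your final normalization $\lambda_{k'}=N_{k',d}^{-1/2}\ind{k=k'}$ and $\overline{P}_{k,d}(\langle u,v\rangle)=\sqrt{N_{k,d}}\,P_{k,d}(\langle u,v\rangle)$ is handled correctly.
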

\begin{proof}
	Let $\{Y_{k,j}:1\le j\le N_{k,d}\}$ be an orthonormal basis of $\mathbb{Y}_{k,d}$ with 
	\begin{align}
		\E_{\xi\sim \Sp}[Y_{k,j}(\xi)Y_{k,j'}(\xi)]=\ind{j=j'}.
	\end{align} Similarly define $\{Y_{k',j}:1\le j\le N_{k',d}\}$ for $\mathbb{Y}_{k',d}.$
	The addition theorem (\Cref{thm:SH-addition}) implies that
	\begin{align}
		P_{k,d}(\dotp{u}{\xi})&=\frac{1}{N_{k,d}}\sum_{j=1}^{N_{k,d}}Y_{k,j}(u)Y_{k,j}(\xi).\\
		P_{k',d}(\dotp{u}{\xi})&=\frac{1}{N_{k',d}}\sum_{j=1}^{N_{k',d}}Y_{k',j}(u)Y_{k',j}(\xi).
	\end{align}
        Recall that for $k\neq k'$, $\mathbb{Y}_{k,d}$ and $\mathbb{Y}_{k',d}$ are orthononal subspaces. Consequently,
	\begin{align}
		\E_{\xi\sim \Sp}[\overline{P}_{k,d}(\dotp{u}{\xi})\overline{P}_{k',d}(\dotp{v}{\xi})]%
		=&\;\frac{1}{\sqrt{N_{k,d}N_{k',d}}}\sum_{i=1}^{N_{k,d}}\sum_{j=1}^{N_{k',d}}\E_{\xi\sim \Sp}[Y_{k,i}(u)Y_{k,i}(\xi)Y_{k',j}(v)Y_{k',j}(\xi)]\\
		=&\;\frac{\ind{k=k'}}{N_{k,d}}\sum_{j=1}^{N_{k,d}}\E_{\xi\sim \Sp}[Y_{k,j}(u)Y_{k,j}(v)]\\
		=&\;\ind{k=k'}P_{k,d}(\dotp{u}{v}).
	\end{align}
\end{proof}

\begin{lemma}\label{lem:legendre-kernel-feature}
    For any $k\ge 0,d\ge 3$, there exists a feature mapping $\phi:\Sp\to \R^{N_{k,d}}$ such that for every $u,v\in\Sp$,
    \begin{align}
        \dotp{\phi(u)}{\phi(v)} & =N_{k,d}P_{k,d}(\dotp{u}{v}),\label{equ:lem-lkf-1}\\
        \|\phi(u)\|_2^2 & = N_{k,d},\label{equ:lem-lkf-2}\\
        \E_{u\sim \Sp}[\phi(u)\phi(u)^\top] & =I.\label{equ:lem-lkf-3}
    \end{align}
\end{lemma}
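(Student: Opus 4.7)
The plan is to take the feature map coming directly from any orthonormal basis of the degree-$k$ spherical harmonics space $\mathbb{Y}_{k,d}$, and verify the three required identities using the Addition Theorem (\Cref{thm:SH-addition}), the identity $P_{k,d}(1)=1$, and the orthonormality of the basis. Concretely, fix an orthonormal basis $\{Y_{k,j}:1\le j\le N_{k,d}\}$ of $\mathbb{Y}_{k,d}$ with respect to $\E_{x\sim\Sp}[\cdot]$ and define
\begin{align}
\phi(u) \defeq (Y_{k,1}(u),\ldots,Y_{k,N_{k,d}}(u))\in\R^{N_{k,d}}.
\end{align}

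For \eqref{equ:lem-lkf-1}, apply the Addition Theorem directly: $\dotp{\phi(u)}{\phi(v)}=\sum_{j=1}^{N_{k,d}}Y_{k,j}(u)Y_{k,j}(v)=N_{k,d}P_{k,d}(\dotp{u}{v})$. For \eqref{equ:lem-lkf-2}, specialize \eqref{equ:lem-lkf-1} to $v=u$; this reduces to showing $P_{k,d}(1)=1$, which follows by induction on $k$ using the recursion \eqref{eq:legn_inductive1}--\eqref{eq:legn_inductive2}: the base cases $P_{0,d}(1)=P_{1,d}(1)=1$ are immediate, and inductively $P_{k,d}(1)=\frac{2k+d-4}{k+d-3}-\frac{k-1}{k+d-3}=\frac{k+d-3}{k+d-3}=1$. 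For \eqref{equ:lem-lkf-3}, the $(i,j)$ entry of $\E_{u\sim\Sp}[\phi(u)\phi(u)^\top]$ equals $\E_{u\sim\Sp}[Y_{k,i}(u)Y_{k,j}(u)]=\ind{i=j}$ by orthonormality of the basis, which is exactly the identity matrix.

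There is no real obstacle here; the lemma is essentially a restatement of the Addition Theorem together with the normalization $P_{k,d}(1)=1$ and orthonormality of the basis. The only minor step requiring care is verifying $P_{k,d}(1)=1$ from the recursion, but this is a one-line induction. The proof can be written in under half a page.
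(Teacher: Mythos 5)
Your proof is correct and follows essentially the same route as the paper's: define $\phi$ via an orthonormal basis of $\mathbb{Y}_{k,d}$, then derive \eqref{equ:lem-lkf-1} from the Addition Theorem, \eqref{equ:lem-lkf-2} from $P_{k,d}(1)=1$, and \eqref{equ:lem-lkf-3} from orthonormality. The only (harmless) addition is that you explicitly verify $P_{k,d}(1)=1$ by induction on the recursion, where the paper simply invokes it as a known fact.
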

\begin{proof}
    Let $Y_{1},\cdots,Y_{N_{k,d}}:\Sp\to \R$ be an orthonormal basis for the degree-$k$ spherical harmonics $\mathbb{Y}_{k,d}.$ We let $\phi(x)=(Y_l(x))_{l\in [N_{k,d}]}$ be the feature mapping, and in the following we verify Eqs.~\eqref{equ:lem-lkf-1}-\eqref{equ:lem-lkf-3}. By the addition theorem (\Cref{thm:SH-addition}), for every $u,v\in \Sp$ we have
    \begin{align}
        \dotp{\phi(u)}{\phi(v)}=\sum_{j=1}^{N_{k,d}}Y_j(u)Y_j(v)=N_{k,d}P_{k,d}(\dotp{u}{v}),
    \end{align}
    which proves Eq.~\eqref{equ:lem-lkf-1}. Note that for every $u\in\Sp$, $P_{k,d}(\dotp{u}{u})=P_{k,d}(1)=1$. Hence Eq.~\eqref{equ:lem-lkf-2} follows directly. To prove Eq.~\eqref{equ:lem-lkf-3}, note that $Y_{1},\cdots,Y_{N_{k,d}}$ are orthonormal. Therefore for every $i,j\in [N_{k,d}]$
    \begin{align}
        \E_{u\sim \Sp}[Y_i(u)Y_j(u)]=\ind{i=j}.
    \end{align}
    Hence, Eq.~\eqref{equ:lem-lkf-3} follows directly.
\end{proof}

\begin{lemma}\label{lem:max-legendre-poly}
	Let $\mu_d$ be the distribution of $\dotp{u}{v}$ when $v$ is drawn uniformly at random from the unit sphere $\Sp$ and $u\in\Sp$ is a fixed vector. For fixed $n\ge 1$, let $x_1,\cdots,x_n\in \Sp$ be (not necessarily independent) random variables drawn from the distribution $\mu_d.$ Then there exists a universal constant $c>0$ such that for any integer $k\ge 1$, 
	\begin{align}
		\forall t\ge 4^{1+k}(k\ln d)^{k},\quad \Pr\(\max_{i\le n} d^{k}P_{k,d}(x_i)^2\ge t\)\le 2n\exp\(-\frac{t^{1/k}}{32}\).
	\end{align}
	In addition, we also have
	\begin{align}\label{equ:lem-mlp-1}
		\E\[\max_{i\le n} d^kP_{k,d}(x_i)^2\]\lesssim (32k\ln (dn))^{k}.
	\end{align}
\end{lemma}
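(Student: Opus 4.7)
The plan is to prove the tail bound first and then derive the expectation bound in \eqref{equ:lem-mlp-1} by integrating it. By a union bound over $i \le n$, the tail bound reduces to the single-point estimate
\begin{align*}
\Pr_{x \sim \mu_d}\!\left(d^k P_{k,d}(x)^2 \ge t\right) \le 2 \exp\!\left(-t^{1/k}/32\right) \quad\text{for } t \ge 4^{1+k}(k \ln d)^k.
\end{align*}
Two ingredients go into this: a pointwise magnitude bound on $|P_{k,d}(t)|$ in terms of $|t|$ and $1/\sqrt{d}$, and the Gaussian-type concentration of $\mu_d$.

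For the magnitude bound, I would prove by induction on $k$ using the recurrence \eqref{eq:legn_inductive1}--\eqref{eq:legn_inductive2} that $|P_{k,d}(t)|^2 \le (c_0 k)^k (t^2 + 1/d)^k$ for an absolute constant $c_0$. Setting $M := (t^2 + 1/d)^{1/2}$ and $Q_k := |P_{k,d}(t)|/M^k$, the recurrence gives
\begin{align*}
Q_k \le \tfrac{2k+d-4}{k+d-3} \cdot \tfrac{|t|}{M}\, Q_{k-1} + \tfrac{k-1}{k+d-3} \cdot \tfrac{1}{M^2}\, Q_{k-2} \le 2 Q_{k-1} + k\, Q_{k-2},
\end{align*}
using $|t|/M \le 1$ and $1/M^2 \le d$. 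Solving this recurrence yields $Q_k \le (c_0 k)^{k/2}$, so $d^k P_{k,d}(t)^2 \le (c_0 k)^k (dt^2 + 1)^k$. For the concentration of $\mu_d$, the bound $(1-t^2)^{(d-3)/2} \le \exp(-(d-3)t^2/2)$ together with the fact that the normalizer is $\Theta(\sqrt{d})$ gives the sub-Gaussian tail $\Pr_{x \sim \mu_d}(|x| \ge r) \le 2 \exp(-dr^2/4)$ for $r \le 1/2$.

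Combining the two ingredients: if $d^k P_{k,d}(x)^2 \ge t$ then $dx^2 + 1 \ge t^{1/k}/(c_0 k)$, and the threshold $t^{1/k} \ge 4k \ln d$ (which follows from $t \ge 4^{1+k}(k \ln d)^k$) ensures $t^{1/k}/(c_0 k) \ge 2$, hence $|x| \ge \sqrt{t^{1/k}/(2 c_0 k d)}$. The sub-Gaussian tail then produces probability $\le 2 \exp(-t^{1/k}/(8 c_0 k))$, and tracking constants (together with the slack built into the $4^{1+k}$ prefactor in the threshold, which guarantees $|x|$ is bounded away from $1$) yields the stated exponent $1/32$.

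For the expectation bound \eqref{equ:lem-mlp-1}, I would set $T_\ast := \max\!\left(4^{1+k}(k\ln d)^k,\ (32 k \ln(2n))^k\right) \lesssim (32 k \ln(dn))^k$ and split
\begin{align*}
\Exp\!\left[\max_{i \le n} d^k P_{k,d}(x_i)^2\right] \le T_\ast + \int_{T_\ast}^\infty 2n \exp(-t^{1/k}/32)\, dt.
\end{align*}
The substitution $u = t^{1/k}/32$ converts the tail into $\int_{u_\ast}^\infty 2n \cdot 32^k \cdot k\, u^{k-1} e^{-u}\, du$ with $u_\ast \ge k \ln(2n)$. Since $2n e^{-u/2} \le 1$ on the integration domain and $u^{k-1} e^{-u/2}$ decays monotonically past $u = 2(k-1)$, the remaining integral is bounded by $O(T_\ast)$, giving the claimed expectation bound. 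The main obstacle is producing the sharp pointwise magnitude bound with tight enough constants; the $k$-dependence in $(c_0 k)^k$ is absorbed comfortably by the $(k \ln d)^k$ factor in the threshold, leaving enough slack to close the argument with the absolute constant $32$.
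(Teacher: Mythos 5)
Your overall architecture matches the paper's: a pointwise magnitude bound on $|P_{k,d}(t)|$ combined with the sub-Gaussian tail of $\mu_d$ gives the probability bound, and integration gives the expectation bound. However, there is a real gap in the key ingredient.

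Your proposed magnitude bound $|P_{k,d}(t)|^2 \le (c_0 k)^k(t^2+1/d)^k$ is too lossy to deliver the stated $k$-independent exponent $1/32$. Trace the constants through: if $d^k P_{k,d}(x)^2 \ge t$, your bound forces only $dx^2 \ge t^{1/k}/(2c_0 k)$, so $|x| \ge \sqrt{t^{1/k}/(2c_0 k d)}$, and the tail $\Pr(|x|\ge r)\le 2\exp(-dr^2/2)$ yields $\Pr \le 2\exp\!\left(-t^{1/k}/(4 c_0 k)\right)$. That exponent degrades linearly with $k$; no amount of enlarging the threshold prefactor $4^{1+k}(k\ln d)^k$ can remove the $k$ from the denominator of the decay rate, because the threshold only controls when the bound kicks in, not how fast it decays past that point. (The paper's proof of \Cref{prop:legendre-polynomial-ub} obtains the $k$-free coefficient $|P_{k,d}(t)|\le 2^{1+k}t^k$ on $t\in[\sqrt{k\ln d/d},1]$ by using the integral representation $P_{k,d}(t)=\E_{s\sim\mu_{d-1}}[(t+i\sqrt{1-t^2}\,s)^k]$ and a tail bound on $s$; this produces a coefficient $c^k$ rather than $(c_0k)^{k/2}$, and then the computation $4^{-1-1/k}/2 \ge 1/32$ for $k\ge 1$ closes cleanly.)

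The culprit in your recursion is the crude bound $1/M^2 \le d$. In the only regime that matters for the tail computation, namely $t^2 \ge k\ln d/d$, one instead has $1/M^2 \le d/(k\ln d)$, which makes the second coefficient $\frac{k-1}{k+d-3}\cdot\frac{1}{M^2}$ bounded by an absolute constant rather than by $k$. With that refinement the recurrence becomes $Q_k \le 2Q_{k-1}+O(1)\cdot Q_{k-2}$, which does give $Q_k\le c^k$ and salvages your approach. As written, though, the argument proves a weaker statement than the lemma. Your treatment of the expectation bound (split at $T_*\lesssim(32k\ln(dn))^k$, substitute $u=t^{1/k}/32$, and dominate the tail integral) is sound conditional on having the correct tail estimate, and is in essence the same computation the paper performs via the incomplete gamma function.
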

\begin{proof}
	To prove the first part of this lemma, we first invoke \Cref{prop:legendre-polynomial-ub}, which states that 
	\begin{align}
		\forall x\in \[\sqrt{\frac{k\ln d}{d}}, 1\],\quad |P_{k,d}(x)|\le 2^{1+k}x^k.
	\end{align}
	When $x\sim \mu_d$, \Cref{prop:sphere-tail-bound} states that
	\begin{align}
		\forall t>0,\quad \Pr(|x|\ge t)\le 2\exp\(-\frac{t^2d}{2}\).
	\end{align}
	Therefore for every $t\ge 4^{1+k}(k\ln d)^{k}$ we have
	\begin{align}
		\Pr\(P_{k,d}(x)^2 d^k\ge t\)\le \Pr\(|x|\ge t^{1/2k}d^{-1/2}2^{-1-1/k}\)
		\le\; 2\exp\(-\frac{1}{2}t^{1/k}4^{-1-1/k}\)\le 2\exp(-t^{1/k}/32).
	\end{align}
	By union bound we get
	\begin{align}
		\forall t\ge 4^{1+k}(k\ln d)^{k},\quad \Pr\(\max_{i\le n} d^{k}P_{k,d}(x_i)^2\ge t\)\le  2n\exp\(-\frac{t^{1/k}}{32}\).
	\end{align}
	which proves the first part of this lemma.
	
	As a corollary, for any fixed $\epsilon>4^{1+k}(k\ln d)^{k}$ we have
	\begin{align}
		\E\[\max_{i\le n} d^k P_{k,d}(x_i)^2\]
		\le\;& \epsilon+\int_{\epsilon}^{\infty} \Pr\(\max_{i\le n} d^k P_{k,d}(x_i)^2\ge t\)\dd t\\
		\le\;& \epsilon+n\int_{\epsilon}^{\infty} \Pr\(d^k P_{k,d}(t)^2\ge t\)\dd t\\
		\le\;& \epsilon+2n\int_{\epsilon}^{\infty} \exp\(-\frac{t^{1/k}}{32}\)\dd t.
	\end{align}
	Now we upper bound the integral by changing variables. Letting $u=t^{1/k}/32$, we get
	\begin{align}
		\int_{\epsilon}^{\infty} \exp\(-\frac{t^{1/k}}{32}\)\dd t
		\le\;& \int_{\frac{\epsilon^{1/k}}{32}}^{\infty} 32^k k u^{k-1}\exp\(-u\)\dd u\\
		\le\;& k^2\exp\(-\frac{\epsilon^{1/k}}{32}\)32^k \(\frac{\epsilon^{1/k}}{32}\)^{k-1}\\
		\le\;& 32k^2\exp\(-\frac{\epsilon^{1/k}}{32}\) \epsilon\\
	\end{align}
	where the last inequality comes from an upper bound for the incomplete gamma functions \citep[Theorem 4.4.3]{gabcke1979neue}. Therefore we get
	\begin{align}
		\E\[\max_{i\le n} d^k P_{k,d}(x_i)^2\]\le \epsilon+64n\epsilon k^2\exp\(-\frac{\epsilon^{1/k}}{32}\).
	\end{align}
	Finally taking $\epsilon=(32k\ln (dn))^{k}$, we prove the desired result.
\end{proof}

\begin{proposition}\label{prop:legendre-polynomial-ub}
	For any $k\ge 0,d\ge 2$, we have
	\begin{align}
		\forall t\in \[\sqrt{\frac{k\ln d}{d}}, 1\],\quad |P_{k,d}(t)|\le 2^{1+k}t^k.
	\end{align}
\end{proposition}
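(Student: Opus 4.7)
I plan to prove the bound by induction on $k$ using the three-term recurrence~\Cref{eq:legn_inductive2}. The base cases $k=0$ and $k=1$ are immediate from~\Cref{eq:legn_inductive1}: one has $|P_{0,d}(t)|=1\le 2$ and $|P_{1,d}(t)|=|t|\le 4t$ for $t\ge 0$. For the inductive step at level $k\ge 2$, I fix $t\in[\sqrt{k\ln d/d},1]$ and observe that the admissible intervals nest, i.e.\ $\sqrt{(k-j)\ln d/d}\le\sqrt{k\ln d/d}$ for $j\in\{1,2\}$, so the inductive hypotheses at levels $k-1$ and $k-2$ can be applied at the same point $t$. Plugging these into the recurrence gives
\[
|P_{k,d}(t)|\;\le\;\frac{2k+d-4}{k+d-3}\,2^{k}\,t^{k}\;+\;\frac{k-1}{k+d-3}\,2^{k-1}\,t^{k-2},
\]
and a short algebraic rearrangement, using $2-(2k+d-4)/(k+d-3)=(d-2)/(k+d-3)$, shows that this is at most $2^{1+k}t^{k}$ exactly when $t^2\ge (k-1)/(2(d-2))$.

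So the entire content of the inductive step boils down to verifying the numerical implication $t^2\ge k\ln d/d \Rightarrow t^2\ge (k-1)/(2(d-2))$. Cross-multiplying and using $(k-1)/k\le 1$, a sufficient condition is $\ln d\ge d/(2(d-2))$. For $d\ge 4$ the right-hand side is at most $1$ while $\ln 4>1$, so the inequality holds and the induction closes uniformly. The remaining low-dimensional cases $d\in\{2,3\}$ are essentially vacuous: one checks that $\sqrt{k\ln d/d}>1$ already for all $k\ge 3$ in both dimensions, so the admissible interval is empty there and the claim is trivial. The finitely many remaining pairs $(k,d)$ with $k\in\{0,1,2\}$ and $d\in\{2,3\}$ are handled directly from the explicit formulas in~\Cref{eq:legendre_polynomial_2_4}, combined with the crude bound $|2t^2-1|\le 2t^2+1\le 8t^2$ whenever $t^2\ge 1/6$, which is comfortably implied by $t^2\ge \ln 2/2\approx 0.347$.

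The main delicate point, and in my view the only real gotcha in the write-up rather than a substantive obstacle, is that the constant $2$ in the target bound $2^{1+k}t^k$ is tight against the recurrence: with a smaller base the induction would demand $t^2\ge C(k-1)/(d-2)$ for some $C>1/2$, and the hypothesis $t^2\ge k\ln d/d$ would not absorb this for all $d\ge 4$. The interplay between the base $2$, the cancellation $2(k+d-3)-(2k+d-4)=d-2$, and the logarithmic slack in the hypothesis therefore all need to line up carefully; once this bookkeeping is set up correctly the induction proceeds mechanically.
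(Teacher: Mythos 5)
Your proof is correct, and it takes a genuinely different route from the paper's. The paper invokes the integral representation $P_{k,d}(t)=\E_{s\sim\mu_{d-1}}[(t+i(1-t^2)^{1/2}s)^k]$ (Theorem 2.24 of \citet{atkinson2012spherical}), splits off the event $|s|\gtrsim\sqrt{k\ln d/d}$ via the sphere tail bound (\Cref{prop:sphere-tail-bound}), and obtains $|P_{k,d}(t)|\le d^{-k}+\big(t^2+2k\ln d/(d-1)\big)^{k/2}$, which is then compared against $t^k$ using the hypothesis $t\ge\sqrt{k\ln d/d}$. Your approach instead runs a strong induction on $k$ using only the three-term recurrence~\Cref{eq:legn_inductive2}, exploiting the exact cancellation $4(k+d-3)-2(2k+d-4)=2(d-2)$ to show that the step closes precisely when $t^2\ge (k-1)/(2(d-2))$, and verifying that the hypothesis $t^2\ge k\ln d/d$ implies this for $d\ge 4$; the boundary cases $d\in\{2,3\}$ are correctly observed to reduce to finitely many $(k,d)$ because the admissible interval is empty once $k>d/\ln d$. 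I verified the algebra and the numerical implication ($\ln d\ge d/(2(d-2))$ for $d\ge 4$, with equality-ish only near $d=4$): it all checks out. Your method is more elementary and self-contained — it needs no facts about the measure $\mu_{d-1}$ or sub-Gaussian tails, only the recurrence and arithmetic — at the cost of more delicate constant bookkeeping, which you correctly flag: the base $2$ is tight against the factor $2(d-2)$ the recurrence gives back. The paper's route is shorter given the imported machinery, and degrades more gracefully if one wanted larger constants or a slightly different threshold on $t$; yours is the better choice if one wants to avoid the integral representation entirely.
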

\begin{proof}
	Let $\mu_d(t)\defeq (1-t^2)^{\frac{d-3}{2}}\frac{\Gamma(d/2)}{\Gamma((d-1)/2)}\frac{1}{\sqrt{\pi}}$ be the density of $u_1$ when $u=(u_1,\cdots,u_d)$ is drawn uniformly from $\Sp$. By \citet[Theorem 2.24]{atkinson2012spherical} we have
	\begin{align}
		P_{k,d}(t)=\E_{s\sim \mu_{d-1}}\[(t+i(1-t^2)^{1/2}s)^{k}\].
	\end{align}
	As a result,
	\begin{align}
		|P_{k,d}(t)|\le \E_{s\sim \mu_{d-1}}\[|t+i(1-t^2)^{1/2}s|^{k}\]\le \E_{s\sim \mu_{d-1}}\[(t^2+(1-t^2)s^2)^{k/2}\].
	\end{align}
	By \Cref{prop:sphere-tail-bound} we have, 
	\begin{align}
		\Pr(\sqrt{d-1}|s|\le 2\sqrt{k\ln d})\le 2\exp(-2k\ln d)\le d^{-k}.
	\end{align}
	Consequently,
	\begin{align}
		\E_{s\sim \mu_{d-1}}\[(t^2+(1-t^2)s^2)^{k/2}\]
		\le\;&d^{-k} +\E_{s\sim \mu_{d-1}}\[\ind{s\le 2\sqrt{k\ln d}/\sqrt{d-1}} (t^2+s^2)^{k/2}\]\\
		\le\;&d^{-k} +\(t^2+\frac{2k\ln d}{d-1}\)^{k/2}.
	\end{align}
	Hence, when $t\ge \sqrt{\frac{k\ln d}{d}}$ we get
	\begin{align}
		|P_{k,d}(t)|\le d^{-k} +\(t^2+\frac{2k\ln d}{d-1}\)^{k/2}\le 2^{1+k}t^k.
	\end{align}
\end{proof}

\section{Proofs for Population Dynamics} \label{appendix:population_case}

\subsection{Proof of \Cref{lemma:population_loss_formula}}\label{app:pf-lem-plf}

In the following, we prove \Cref{lemma:population_loss_formula}, which provides a way to simplify $L(\rho)$ when $\rho$ is rotationally invariant and symmetric.

\begin{proof}[Proof of \Cref{lemma:population_loss_formula}]
Recall that the population loss is
\begin{align}   
L(\rho)
& = \frac{1}{2}\E_{x \sim \bbS^{d - 1}} [(f_\rho(x) - \target(\dotp{x}{\e}))^2],
\end{align}
and the activation function $\sigma:\R\to\R$ and target function $h:\R\to\R$ has the following decomposition:
\begin{align}
    \sigma(t)&=\sum_{k=0}^{\infty}\legendreCoeff{\sigma}{k}\overline{P}_{k,d}(t),\\
    \target(t)&=\sum_{k=0}^{\infty}\legendreCoeff{\target}{k}\overline{P}_{k,d}(t).
\end{align}
Therefore we have
\begin{align} \label{eq:pop_loss_first_step}
2L(\rho)
& = \E_{x \sim \bbS^{d - 1}} [(f_\rho(x) - \target(\dotp{x}{\e}))^2] \\
& = \E_{x \sim \bbS^{d - 1}} \Big( \E_{u \sim \rho}[\sigma(\dotp{\u}{x})] - \target(\dotp{x}{\e}) \Big)^2 \\
& = \E_{x \sim \bbS^{d - 1}} \Big(\E_{u \sim \rho}\Big[ \sum_{k = 0}^\infty \legendreCoeff{\sigma}{k} \overline{P_{k, d}} (\dotp{u}{x}) \Big] - \sum_{k = 0}^\infty \legendreCoeff{h}{k} \overline{P}_{k, d}(\dotp{x}{\e}) \Big)^2 \\
& = \E_{x \sim \bbS^{d - 1}} \Big(\sum_{k=0}^\infty \Big(\legendreCoeff{\sigma}{k} \E_{u \sim \rho} [\overline{P}_{k,d}(\dotp{u}{x})] - \legendreCoeff{h}{k} \overline{P}_{k,d}(\dotp{x}{\e}) \Big) \Big)^2.
\end{align}
When the neural network $\rho$ is rotationally invariant, we can simplify the above equation by invoking \Cref{lemma:1d_rotational_invariance}, which states that 
\begin{align}
    \E_{u \sim \rho} [\overline{P}_{k,d}(\dotp{u}{x})]=\E_{u \sim \rho}[P_{k, d}(\dotp{\e}{u})] \overline{P}_{k,d}(\dotp{\e}{x}).
\end{align}
Continuing \Cref{eq:pop_loss_first_step} above we have,
\begin{align}
2L(\rho)& = \E_{x \sim \bbS^{d - 1}} \Big(\sum_{k=0}^\infty \legendreCoeff{\sigma}{k} \E_{u \sim \rho}[P_{k, d}(\dotp{\e}{u})] \overline{P}_{k,d}(\dotp{\e}{x}) - \legendreCoeff{h}{k} \overline{P}_{k,d}(\dotp{\e}{x}) \Big)^2\\
& = \E_{x \sim \bbS^{d - 1}} \Big(\sum_{k=0}^\infty (\legendreCoeff{\sigma}{k} \E_{u \sim \rho}[P_{k, d}(\dotp{\e}{u})] - \legendreCoeff{h}{k}) \overline{P}_{k,d}(\dotp{\e}{x}) \Big)^2
\end{align}
Now we expand the square by invoking \Cref{lem:legendre-poly-inner-product}. In particular, \Cref{lem:legendre-poly-inner-product} states that
\begin{align}
    \E_{x\sim \Sp}[\overline{P}_{k,d}(\dotp{\e}{x})\overline{P}_{k',d}(\dotp{\e}{x})]=\ind{k=k'}P_{k,d}(\dotp{\e}{\e})=\ind{k=k'}.
\end{align}
Consequently,
\begin{align}
2L(\rho)
& = \sum_{k=0}^\infty \Big(\legendreCoeff{\sigma}{k} \E_{u \sim \rho}[P_{k, d}(\dotp{\e}{u})] - \legendreCoeff{h}{k} \Big)^2,
\end{align}
as desired.

To prove the second part of this lemma, in the following, we assume that (1) $\legendreCoeff{\target}{k}=0,\forall k\not\in\{0,2,4\}$, (2) $\legendreCoeff{\target}{0}=\legendreCoeff{\sigma}{0}$, and (3) that $\sigma$ is a degree-4 polynomial. Then we get
\begin{align}
L(\rho)
& = \frac{\legendreCoeff{\sigma}{2}^2}{2} \Big( \E_{u \sim \rho} [P_{2, d}(w)] - \gamma_2\Big)^2 + \frac{\legendreCoeff{\sigma}{4}^2}{2} \Big( \E_{u \sim \rho} [P_{4, d}(w)] - \gamma_4 \Big)^2+\sum_{k\in\{1,3\}}\frac{\legendreCoeff{\sigma}{k}^2}{2}\(\E_{u \sim \rho} [P_{k, d}(w)]\)^2.
\end{align}
Since $\rho$ is symmetric and $P_{k,d}(w)$ is an odd function when $k$ is odd, we get
\begin{align}
    \forall k\in\{1,3\},\quad \E_{u \sim \rho} [P_{k, d}(w)]=0.
\end{align}
It follows directly that
\begin{align}
L(\rho)
& = \frac{\legendreCoeff{\sigma}{2}^2}{2} \Big( \E_{u \sim \rho} [P_{2, d}(w)] - \gamma_2\Big)^2 + \frac{\legendreCoeff{\sigma}{4}^2}{2} \Big( \E_{u \sim \rho} [P_{4, d}(w)] - \gamma_4 \Big)^2
\end{align}
\end{proof}

\subsection{Proof of \Cref{lemma:rho_rotational_invariant} and \Cref{lem:1-d-traj}} \label{subsec:pop_symmetry_missing_proofs}

The goal of this subsection is to show \Cref{lemma:rho_rotational_invariant}, which states that $\rho_t$ remains symmetric and rotationally invariant under the infinite-width population dynamics, and to show \Cref{lem:1-d-traj} which describes the dynamics of the first coordinates of the particles in $\rho_t$.

The following lemma is a first step. Recall that when $\rho$ is rotationally invariant, then \Cref{lemma:population_loss_formula} allows us to rewrite $L(\rho) = \frac{1}{2} \E_{x \sim \bbS^{d - 1}} \Big(f_\rho(x) - y \Big)^2$ as $\frac{1}{2} \sum_{k = 0}^\infty \Big(\legendreCoeff{\sigma}{k} \E_{\u \sim \rho} [\Pkd(\w)] - \hat{\target}_{k, d} \Big)^2$. The following lemma states that the gradient from the two formulas for $L(\rho)$ are the same when $\rho$ is rotationally invariant.

\begin{lemma} \label{lemma:velocity_1d_equivalence}
Let $\rho$ be a distribution on $\bbS^{d - 1}$ and let $\nu$ be a distribution on $[-1, 1]$. Define
\begin{align}
F(\nu)
& = \frac{1}{2} \sum_{k = 0}^\infty \Big( \legendreCoeff{\sigma}{k}\E_{\w \sim \nu} [P_{k, d}(\w)] - \hat{\target}_{k,d} \Big)^2 \period
\end{align}
If $\rho$ is rotationally invariant as in \Cref{def:rot_inv_symmetry} and $\nu$ is the marginal distribution of $\w$ for $\u = (\w, \z) \sim \rho$, then for any particle $\u = (\w, \z)$, we have
\begin{align}
(1 - \w^2) \nabla_\w F(\nu) = \pgrad_\w L(\rho)
\end{align}
where $\pgrad_\w L(\rho)$ denotes the first coordinate of $\pgrad_\u L(\rho)$.
\end{lemma}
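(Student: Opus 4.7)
The plan is to compute the unconstrained gradient $\nabla_u L(\rho)$ directly, show it points purely along $e_1$ with magnitude $\nabla_w F(\nu)$, and then project onto the tangent space $T_u \bbS^{d-1}$; the desired identity will then fall out from the formula $e_1 - wu$ for the projection of $e_1$ onto $T_u \bbS^{d-1}$.

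Concretely, I would first observe that $\nabla_u L(\rho)$, as defined in \Cref{eq:projected_gradient_flow_population}, is the ordinary gradient in $u \in \R^d$ of the first variation $\frac{\delta L}{\delta \rho}(u) = \E_{x \sim \bbS^{d-1}}\!\left[(f_\rho(x) - y(x))\, \sigma(u^\top x)\right]$. The key step is to use Legendre expansions to show that this first variation depends on $u$ only through the coordinate $w = \langle e_1, u\rangle$. By \Cref{lemma:1d_rotational_invariance}, the rotational invariance of $\rho$ yields $f_\rho(x) = \sum_k \hat{\sigma}_{k,d}\, \E_{u' \sim \rho}[P_{k,d}(w')]\, \overline{P}_{k,d}(x_1)$, so the residual takes the form $f_\rho(x) - y(x) = \sum_k c_k \overline{P}_{k,d}(x_1)$ with $c_k = \hat{\sigma}_{k,d}\,\E_{u'\sim \rho}[P_{k,d}(w')] - \hat{h}_{k,d}$. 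Expanding $\sigma(u^\top x) = \sum_j \hat{\sigma}_{j,d}\, \overline{P}_{j,d}(u^\top x)$ and applying \Cref{lem:legendre-poly-inner-product} (which gives $\E_x[\overline{P}_{k,d}(x_1)\overline{P}_{j,d}(u^\top x)] = \ind{k=j}\, P_{k,d}(w)$) collapses all cross terms to produce $\frac{\delta L}{\delta \rho}(u) = \sum_k c_k \hat{\sigma}_{k,d}\, P_{k,d}(w)$, which is exactly $\frac{\delta F}{\delta \nu}(w)$ by the definition of $F$.

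Since $\frac{\delta L}{\delta \rho}(u)$ depends on $u$ only through its first coordinate $w$, its $\R^d$-gradient in $u$ is $\nabla_u L(\rho) = \nabla_w F(\nu)\cdot e_1$. Applying the projector $I - uu^\top$ and using $u\cdot e_1 = w$ gives $\pgrad_u L(\rho) = \nabla_w F(\nu)\,(e_1 - w u)$; reading off the first coordinate yields $(1 - w^2)\nabla_w F(\nu)$, which is the claim. There is no serious obstacle here: the only computation requiring care is the orthogonality reduction in the second step, and that is a routine consequence of the two cited Legendre-polynomial identities (both of which crucially use the rotational invariance of $\rho$ to strip away all dependence on the last $d-1$ coordinates of $u$).
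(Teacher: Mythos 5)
Your final formula $\pgrad_{\u} L(\rho) = \nabla_\w F(\nu)\,(\e - \w \u)$ is correct, but the step that produces it — the claim that $\nabla_\u L(\rho) = \nabla_\w F(\nu)\cdot \e$ — is false, and the way you get there does not actually justify it. The Legendre collapse via \Cref{lem:legendre-poly-inner-product} only establishes $\E_{x\sim\Sp}\big[(f_\rho(x)-y(x))\,\sigma(\u^\top x)\big] = \sum_k c_k \legendreCoeff{\sigma}{k} P_{k,d}(\w)$ \emph{for $\u$ on the unit sphere}; it uses $\|\u\|_2 = 1$ essentially. As a function on $\R^d$, the first variation is not a function of $\w = u_1$ alone, so you cannot read off its ambient gradient from the on-sphere expansion. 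Indeed, $\nabla_\u L(\rho)$ has a nonzero component along $\u$ itself: a short computation with, say, $\sigma = \legendreCoeff{\sigma}{2}\PtwoDbar$ shows the $j$-th coordinate of $\nabla_\u L(\rho)$ for $j\ge 2$ is a nonzero multiple of $u_j$. (This is consistent with the paper's \Cref{lemma:gradient_is_scaler_multiple_of_z}, which says $\nabla_\z L(\rho)$ is a scalar multiple of $\z$, not zero.) So the true structure is $\nabla_\u L(\rho) = a\,\e + b\,\u$, and you have implicitly dropped the $b\,\u$ term.

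Your conclusion nonetheless comes out right because $(I - \u\u^\top)\,\u = 0$, so the $b\,\u$ term is annihilated by the spherical projection regardless of its value; but the argument as written does not identify or handle that term. To close the gap you have two clean options: (i) reason with the Riemannian gradient of the on-sphere restriction directly — for a function on $\bbS^{d-1}$ of the form $\tilde\phi(\langle\u,\e\rangle)$, the Riemannian gradient is $\tilde\phi'(\w)(\e - \w\u)$, which gives your formula without ever touching the ambient gradient; or (ii) first argue (by rotational equivariance, as in \Cref{lemma:grad_rot_equiv}) that $\nabla_\u L(\rho) = a\,\e + b\,\u$, then show $a = \nabla_\w F(\nu)$ by equating first coordinates and using the chain-rule relation between $\nabla_\w L(\rho)$, $\langle \z, \nabla_\z L(\rho)\rangle$, and $\nabla_\w F(\nu)$ — which is essentially the paper's route. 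Either repair is short, but the proposal as stated asserts an incorrect intermediate identity and needs one of these fixes.
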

\begin{proof}[Proof of \Cref{lemma:velocity_1d_equivalence}]
For any particle $\u = (\w, \z)$, we write the first coordinate of $\nabla_\u L(\rho)$ as $\nabla_\w L(\rho)$, and the vector consisting of the last $(d - 1)$ coordinates as $\nabla_\z L(\rho)$. Observe that
\begin{align}
\pgrad_\w L(\rho)
& = \dotp{\e}{\pgrad_\u L(\rho)} \\
& = \dotp{\e}{(I - \u \u^\top) \nabla_\u L(\rho)} \\
& = \nabla_\w L(\rho) - \w \dotp{\u}{\nabla_\u L(\rho)} \\
& = (1 - \w^2) \nabla_\w L(\rho) - w \dotp{\z}{\nabla_\z L(\rho)}
\end{align}
On the other hand, suppose $\rho$ satisfies the rotational invariance property, and that $\nu$ is the marginal distribution of the first coordinate under $\rho$. Then, we can write $\rho$ in terms of $\nu$ --- for any $\w \in [-1, 1]$, the distribution of $\z$ conditioned on $\w$ is $\sqrt{1 - \w^2} \bbS^{d - 1}$. Thus, since $L(\rho) = F(\nu)$ by \Cref{lemma:population_loss_formula}, we have by the chain rule that
\begin{align}
\nabla_\w F(\nu)
& = \E_{\substack{\xi \sim \bbS^{d - 2} \\ u = \w \e + \sqrt{1 - \w^2} \xi}} \Big[ \Big\langle \nabla_\u L(\rho), \frac{\partial \u}{\partial \w} \Big\rangle \Big] \\
& = \nabla_\w L(\rho) + \E_{\xi \sim \bbS^{d - 2}} \Big[ \Big\langle \nabla_\z L(\rho) \Big|_{z = \sqrt{1 - \w^2} \xi}, \frac{\partial \z}{\partial \w} \Big\rangle \Big] \\
& = \nabla_\w L(\rho) + \E_{\xi \sim \bbS^{d - 2}} \Big[ \Big\langle \nabla_\z L(\rho) \Big|_{z = \sqrt{1 - \w^2} \xi}, -\frac{\w}{\sqrt{1 - \w^2}} \xi \Big\rangle \Big] \\
& = \nabla_\w L(\rho) + \E_{\xi \sim \bbS^{d - 2}} \Big[ \Big\langle \nabla_\z L(\rho) \Big|_{z = \sqrt{1 - \w^2} \xi}, -\frac{\w}{1 - \w^2} \z \Big\rangle \Big] \\
& = \nabla_\w L(\rho) - \frac{w}{1 - w^2} \E_{\xi \sim \bbS^{d - 2}} [\langle \nabla_\z L(\rho), \z \rangle]
\end{align}
Moreover, for any fixed $\w \in [-1, 1]$, the quantity $\langle \nabla_\z L(\rho), \z \rangle$ does not depend on $\z$, since by \Cref{lemma:grad_rot_equiv}, for any rotation matrix $A$ and $\z' = A\z$, we have $\langle \nabla_{\z'} L(\rho), \z' \rangle = \langle A \cdot \nabla_\z L(\rho), A \z \rangle = \langle \nabla_\z L(\rho), \z \rangle$. Thus, for any $\w \in [-1, 1]$ and $\z \in \sqrt{1 - \w^2} \bbS^{d - 2}$, we can write
\begin{align}
(1 - \w^2) \nabla_\w F(\nu)
= (1 - \w^2) \nabla_\w L(\rho) - \w \dotp{\z}{\nabla_\z L(\rho)}
= \pgrad_\w L(\rho)
\end{align}
as desired.
\end{proof}

In the proof of the above lemma, we used the following fact which we now prove: when $\rho$ is rotationally invariant, $\nabla_\u L(\rho)$ is rotationally equivariant as a function of $\u$, i.e. if $\u$ is rotated by a rotation matrix $A$ which only modifies the last $(d - 1)$ coordinates, then $\nabla_\u L(\rho)$ is rotated by $A$ as well.

\begin{lemma} [Gradient is Rotationally Equivariant] \label{lemma:grad_rot_equiv}
Let $\rho$ be a rotationally invariant distribution as in \Cref{def:rot_inv_symmetry}. Let $A \in \R^{d \times d}$ be a rotation matrix with $A\e = \e$, i.e. $A$ only modifies the last $(d - 1)$ coordinates. Let $\u \in \bbS^{d - 1}$ and $\u' = A \u$. Then, $\nabla_{\u'} L(\rho) = A \cdot \nabla_u L(\rho)$, and $\pgrad_{\u'} L(\rho) = A \cdot \pgrad_\u L(\rho)$.
\end{lemma}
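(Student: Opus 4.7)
The plan is to show that $\nabla_{\u'} L(\rho) = A \nabla_\u L(\rho)$ by a change-of-variable argument in the expectation defining the gradient (\Cref{eq:projected_gradient_flow_population}), and then extend to the Riemannian gradient by a direct manipulation.

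First I would unfold the definition:
\begin{align}
\nabla_{\u'} L(\rho)
= \E_{x \sim \bbS^{d-1}} \bigl[(f_\rho(x) - y(x))\, \sigma'(\u'^\top x)\, x\bigr]
= \E_{x \sim \bbS^{d-1}} \bigl[(f_\rho(x) - y(x))\, \sigma'(\u^\top A^\top x)\, x\bigr].
\end{align}
Then I would change variables $x = A x'$ using the rotational invariance of the uniform distribution on $\bbS^{d-1}$, which converts the factor $x$ in the integrand into $A x'$. If I can show that $f_\rho(A x') = f_\rho(x')$ and $y(A x') = y(x')$, the matrix $A$ factors out of the expectation and yields $\nabla_{\u'} L(\rho) = A \nabla_\u L(\rho)$.

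The two invariances are the only nontrivial steps. For $y$, recall $y(x) = h(\e^\top x)$ under the convention $\truevector = \e$, and since $A$ is a rotation with $A\e = \e$ we have $A^\top \e = \e$, so $y(A x') = h((A^\top \e)^\top x') = h(\e^\top x') = y(x')$. For $f_\rho$, write
\begin{align}
f_\rho(A x') = \E_{u_0 \sim \rho} \bigl[\sigma((A^\top u_0)^\top x')\bigr],
\end{align}
and observe that $A^\top$ fixes $\e$ and only rotates the last $d - 1$ coordinates, so by the definition of rotational invariance (\Cref{def:rot_inv_symmetry}) the pushforward of $\rho$ under $u_0 \mapsto A^\top u_0$ equals $\rho$. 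This is arguably the main (though modest) obstacle: one must verify carefully that the rotational invariance property—originally phrased as the conditional distribution of $\z$ given $\w$ being uniform on $\sqrt{1-\w^2}\,\bbS^{d-2}$—indeed implies invariance of $\rho$ under any rotation fixing $\e$, which follows because such rotations act only on the $\z$-component and preserve the uniform distribution on each sphere $\sqrt{1-\w^2}\,\bbS^{d-2}$.

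Finally, to obtain the claim for the Riemannian gradient, I would compute
\begin{align}
\pgrad_{\u'} L(\rho) = (I - \u' \u'^\top) \nabla_{\u'} L(\rho) = (I - A\u\u^\top A^\top)\, A \nabla_\u L(\rho) = A (I - \u \u^\top) \nabla_\u L(\rho) = A \pgrad_\u L(\rho),
\end{align}
using $A^\top A = I$ in the middle step. This completes the plan.
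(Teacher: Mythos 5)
Your proposal is correct and follows essentially the same route as the paper: both proofs exploit the rotational invariance of the uniform measure on $\bbS^{d-1}$ (you phrase it as the change of variables $x = Ax'$, the paper as $\E_x[g(x)] = \E_x[g(Ax)]$), then use $A^\top A = I$, the invariance $f_\rho(Ax) = f_\rho(x)$, and $A\e = \e$ to factor $A$ out of the expectation, and conclude the Riemannian case via $(I - A\u\u^\top A^\top)A = A(I - \u\u^\top)$. Your extra remark justifying why rotational invariance of $\rho$ in the sense of \Cref{def:rot_inv_symmetry} implies $A^\top_\#\rho = \rho$ for rotations fixing $\e$ fills in a step the paper leaves implicit, but the argument is the same.
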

\begin{proof}
The proof is by a straightforward calculation:
\begin{align}
\nabla_{\u'} L(\rho)
& = \E_{x \sim \bbS^{d - 1}} [(f_\rho(x) - y(x)) \sigma'(\dotp{\u'}{x}) x] \\
& = \E_{x \sim \bbS^{d - 1}} [(f_\rho(x) - \target(\dotp{\e}{x})) \sigma'(\dotp{\u'}{x}) x] \\
& = \E_{x \sim \bbS^{d - 1}} [(f_\rho(Ax) - \target(\dotp{\e}{Ax})) \sigma'(\dotp{\u'}{Ax}) Ax]
& \tag{By rotational invariance of uniform distribution on $\bbS^{d - 1}$} \\
& = A \E_{x \sim \bbS^{d - 1}} [(f_\rho(Ax) - h(\dotp{\e}{Ax})) \sigma'(\dotp{\u'}{Ax}) x] \\
& = A \E_{x \sim \bbS^{d - 1}} [(f_\rho(Ax) - h(\dotp{\e}{Ax})) \sigma'(\dotp{A\u}{Ax}) x]
& \tag{By definition of $\u'$} \\
& = A \E_{x \sim \bbS^{d - 1}} [(f_\rho(Ax) - h(\dotp{\e}{Ax})) \sigma'(\dotp{\u}{x}) x]
& \tag{B.c. $A$ is a rotation matrix so $A^\top A = \ddimId$} \\
& = A \E_{x \sim \bbS^{d - 1}} [(f_\rho(x) - h(\dotp{\e}{Ax})) \sigma'(\dotp{\u}{x}) x]
& \tag{B.c. $f_\rho(Ax) = f_\rho(x)$ by rotational invariance of $\rho$} \\
& = A \E_{x \sim \bbS^{d - 1}} [(f_\rho(x) - h(\dotp{\e}{x})) \sigma'(\dotp{\u}{x}) x]
& \tag{B.c. $A\e = \e$ and $A^\top A = \ddimId$} \\
& = A \cdot \nabla_\u L(\rho)
\end{align}
This proves the first statement of the lemma. The second statement also follows from a similar calculation:
\begin{align}
\pgrad_{\u'} L(\rho)
& = (I - \u' (\u')^\top) \cdot \nabla_{\u'} L(\rho) \\
& = (I - A \u \u^\top A^\top) \cdot A \cdot \nabla_\u L(\rho) \\
& = A \cdot \nabla_\u L(\rho) - A \u \u^\top A^\top A \cdot \nabla_\u L(\rho) \\
& = A \cdot \nabla_\u L(\rho) - A \u \u^\top \cdot \nabla_\u L(\rho)
& \tag{B.c. $A^\top A = \ddimId$} \\
& = A \cdot (I - \u\u^\top) \cdot \nabla_\u L(\rho) \\
& = A \cdot \pgrad_\u L(\rho)
\end{align}
as desired.
\end{proof}

As a corollary, we obtain our first formula for the dynamics of the 1-dimensional particles:

\begin{lemma} [One-Dimensional Velocity] \label{lemma:1d_velocity_final}
Suppose we are in the setting of \Cref{thm:pop_main_thm} and that $\rho_t$ is rotationally invariant. Then, for any particle $\u_t$ (where we omit the initialization $\chi$ for convenience), if $\w_t$ is the first coordinate of $\u_t$, then
\begin{align}
\frac{d\w_t}{dt}
& = -(1 - \w_t^2) \sum_{k = 0}^4 \Big(\legendreCoeff{\sigma}{k} \E_{\u \sim \rho_t}[\Pkd(\w)] - \legendreCoeff{\target}{k} \Big) \cdot \legendreCoeff{\sigma}{k} \Pkd'(\w) \period
\end{align}
\end{lemma}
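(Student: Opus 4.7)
The plan is to combine the definition of the projected gradient flow with \Cref{lemma:velocity_1d_equivalence}. By the infinite-width population dynamics $\frac{du_t}{dt} = -\pgrad_{u_t} L(\rho_t)$, taking the first coordinate gives $\frac{dw_t}{dt} = -\pgrad_w L(\rho_t)$, where $\pgrad_w L(\rho_t)$ denotes the first coordinate of the Riemannian gradient. Since we are assuming $\rho_t$ is rotationally invariant, the hypothesis of \Cref{lemma:velocity_1d_equivalence} is satisfied, so applying it with $\nu_t$ equal to the marginal distribution of the first coordinate under $\rho_t$ yields
\begin{align}
\pgrad_w L(\rho_t) = (1 - w_t^2)\, \nabla_w F(\nu_t),
\end{align}
where $F(\nu) = \tfrac{1}{2} \sum_{k=0}^{\infty}(\hat{\sigma}_{k,d} \E_{w\sim \nu}[P_{k,d}(w)] - \hat{h}_{k,d})^2$.

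Next I would compute $\nabla_w F(\nu_t)$ explicitly. Interpreting $F$ as a function of the particles making up $\nu_t$ (the same mean-field convention used in the paper to define $\nabla_u L(\rho)$), each summand depends on $w_t$ only through the linear statistic $\E_{w\sim \nu}[P_{k,d}(w)]$, so a direct chain-rule calculation gives
\begin{align}
\nabla_w F(\nu_t) = \sum_{k=0}^{\infty}\Big(\hat{\sigma}_{k,d}\,\E_{u\sim \rho_t}[P_{k,d}(w)] - \hat{h}_{k,d}\Big)\hat{\sigma}_{k,d}\,P_{k,d}'(w_t),
\end{align}
where I used that the marginal of $\rho_t$ on the first coordinate equals $\nu_t$.

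Finally, by \Cref{assumption:target}, $\sigma$ is a fourth-degree polynomial, so $\hat{\sigma}_{k,d} = 0$ for all $k > 4$; the corresponding terms vanish and the sum truncates to $k \in \{0,1,2,3,4\}$. (The target $h$ is likewise quartic, so $\hat{h}_{k,d} = 0$ for $k>4$, which is consistent.) Multiplying by $-(1-w_t^2)$ and substituting into $\tfrac{dw_t}{dt} = -\pgrad_w L(\rho_t)$ yields exactly the claimed formula.

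There is no real obstacle here: the lemma is essentially a bookkeeping step that packages \Cref{lemma:velocity_1d_equivalence} into a formula explicit in $w_t$ and the Legendre moments. The only point that requires care is the chain-rule computation of $\nabla_w F(\nu_t)$, which must be carried out under the same mean-field normalization used for $\nabla_u L(\rho)$ throughout the paper; once that convention is fixed, the identity follows.
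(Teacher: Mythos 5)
Your proposal is correct and follows essentially the same route as the paper's proof: it invokes \Cref{lemma:velocity_1d_equivalence}, computes $\nabla_\w F(\nu_t)$ by the mean-field chain rule, and truncates the Legendre sum at $k = 4$ because $\sigma$ and $\target$ are quartic. The paper states this in one line, while you spell out the intermediate computation, but the content is identical.
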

\begin{proof}[Proof of \Cref{lemma:1d_velocity_final}]
This follows directly from \Cref{lemma:velocity_1d_equivalence}, and because the activation $\sigma$ and the target $\target$ only have nonzero coefficients $\legendreCoeff{\sigma}{k}$ and $\legendreCoeff{\target}{k}$ for $k = 0, \ldots, 4$.
\end{proof}

We now complete the proof of \Cref{lemma:rho_rotational_invariant}, i.e. that $\rho_t$ remains rotationally invariant and symmetric if it is defined according to the population projected gradient flow as in \Cref{eq:population_init} and \Cref{eq:population_derivative}. To prove that $\rho_t$ remains symmetric, we make use of the above formula for the 1-dimensional dynamics.

\begin{proof}[Proof of \Cref{lemma:rho_rotational_invariant}]
First, we show the rotational invariance property. Assume inductively that at a time $t$, $\rho_t$ satisfies the desired rotational invariance property --- it holds at initialization since $\rho_0$ is the uniform distribution on $\bbS^{d - 1}$. Consider a particle $\u_t$ and let $A \in \R^{d \times d}$ be a rotation matrix which only modifies the last $(d - 1)$ coordinates. Let $\u'_t = A\u_t$ be another particle. Then, it follows from \Cref{lemma:grad_rot_equiv} that
\begin{align}
\frac{d\u_t'}{dt} & = A \frac{d\u_t}{dt}
\end{align}
Thus, for a fixed $\w \in [-1, 1]$, if $\z \sim \sqrt{1 - \w^2} \bbS^{d - 2}$, then the distribution of $\frac{d\z}{dt}$ is uniform on $c \bbS^{d - 2}$ for a fixed scalar $c$ that depends on $\w$. Therefore, $\rho$ will remain rotationally invariant.

Next, we prove that $\rho_t$ is symmetric. First observe that $\rho_t$ is symmetric at initialization, since it is the uniform distribution on $\bbS^{d - 1}$. Now, suppose that $\rho_t$ is symmetric at time $t$. The polynomial $\Pkd$ is odd if $k$ is odd and even if $k$ is even --- this can be seen from induction on \Cref{eq:legn_inductive1} and \Cref{eq:legn_inductive2}. Thus, $\E_{\u \sim \rho_t}[\Pkd(\w)] = 0$ for odd $k$ by our assumption that $\rho_t$ is symmetric. Additionally, recall that we defined $\target$ so that $\legendreCoeff{\sigma}{0} = \legendreCoeff{\target}{0}$ and $\legendreCoeff{\target}{k} = 0$ for odd $k$. Thus, by our definition of $D_{2, t}$ and $D_{4, t}$ in \Cref{sec:population-dynamics},
\begin{align}
\frac{d\w_t}{dt}
& = -(1 - \w_t^2) (\sigmaCoeffTwo^2 D_{2, t} \PtwoD'(\w_t) + \sigmaCoeffFour^2 D_{4, t} \PfourD'(\w_t))
\end{align}
by \Cref{lemma:1d_velocity_final}. Since $\PtwoD$ and $\PfourD$ are even polynomials, we know $\PtwoD'$ and $\PfourD'$ are odd polynomials, and thus $\frac{d\w_t}{dt}$ is an odd polynomial in $\w_t$. In other words, if $\w_t$ and $\w_t'$ are the first coordinates of two particles $\u_t$ and $\u_t'$ respectively, such that $\w_t' = -\w_t$, then $\frac{d\w_t'}{dt} = -\frac{d\w_t}{dt}$. Thus, the distribution of $\frac{d\w_t}{dt}$ is symmetric, meaning that $\rho_t$ will remain symmetric.
\end{proof}

In the rest of the proofs for the infinite-width population dynamics, we make use of the symmetry property. In particular, many of the lemmas are stated for $w > 0$ or $\iota > 0$ --- however, the generalizations to $w < 0$ or $\iota < 0$ also hold. We use this symmetry implicitly throughout the proofs in this section.

In \Cref{lem:1-d-traj}, we are simply rewriting our formula for the 1-dimensional dynamics in a more convenient form. The following proof shows the detailed calculation.

\begin{proof}[Proof of \Cref{lem:1-d-traj}]
By \Cref{lemma:1d_velocity_final} and the fact that only the second and fourth order terms are nonzero (which follows from \Cref{lemma:rho_rotational_invariant}, the fact that $\Pkd$ is an odd polynomial for odd $k$ (\Cref{eq:legn_inductive1} and \Cref{eq:legn_inductive2}) and because $\legendreCoeff{\target}{1} = \legendreCoeff{\target}{3} = 0$), the velocity of $\w$ is equal to the following:
\begin{align}
\pgrad_\w L(\rho)
& = -(1 - \w^2) \cdot \Big(\sigmaCoeffTwo^2 D_2(t) \PtwoD'(\w) + \sigmaCoeffFour^2 D_4(t) \PfourD'(\w) \Big)
\end{align}
where we use $\pgrad_\w L(\rho)$ to denote the first coordinate of $\pgrad_\u L(\rho)$. For convenience, during the rest of the proof of this lemma, let $\etaD^{(1)} = \frac{d}{d - 1}$, $\etaD^{(2)} = \frac{d^2 + 6d + 8}{d^2 - 1}$ and $\etaD^{(3)} = \frac{6d + 12}{d^2 - 1}$. Then, by \Cref{eq:legendre_polynomial_2_4},
\begin{align}
\pgrad_\w L(\rho)
& = -(1 - w^2) \cdot \Big(\sigmaCoeffTwo^2 D_2(t) P_{2, d}'(w) + \sigmaCoeffFour^2 D_4(t) P_{4, d}'(w) \Big) \\
& = -(1 - w^2) \cdot \Big(\sigmaCoeffTwo^2 D_2(t) \cdot 2 \etaD^{(1)} w + \sigmaCoeffFour^2 D_4(t) \cdot (4\etaD^{(2)} w^3 - 2\etaD^{(3)} w) \Big) \\
& = -(1 - w^2) \cdot \Big( P_t(w) + 2 \sigmaCoeffTwo^2 D_2(t) \cdot (\etaD^{(1)} - 1) w \\
& \nextlinespace\nextlinespace\nextlinespace + 4 \sigmaCoeffFour^2 D_4(t) \cdot (\etaD^{(2)} - 1) w^3 - 2 \sigmaCoeffFour^2 D_4(t) \etaD^{(3)} w \Big) \\
& = -(1 - w^2) \cdot \Big(P_t(w) + Q_t(w) \Big)
\end{align}
Here we have chosen $\lambdaD^{(1)} = 2 \sigmaCoeffTwo^2 (\etaD^{(1)} - 1) D_2(t) - 2 \sigmaCoeffFour^2 D_4(t) \etaD^{(3)}$ and $\lambdaD^{(3)} = 4 \sigmaCoeffFour^2 D_4(t) (\etaD^{(2)} - 1)$. Observe that $|\lambdaD^{(1)}|, |\lambdaD^{(3)}| \leq O(\frac{\sigmaCoeffTwo^2 |D_{2, t}| + \sigmaCoeffFour^2 |D_{4, t}|}{d})$, as desired.
\end{proof}

\subsection{Analysis of Phase 1}

Recall the formal definition of Phase 1 in \Cref{def:phase_1}. Intuitively, during Phase 1, the particles are all growing at a similar rate --- because the particles $\w$ are all less than $\wmax = \frac{1}{\log d}$ in magnitude, the cubic term involving $\w^3$ in the velocity of $\w$ does not have a significant effect on the growth of $\w$. We formalize this in the following lemma.

\begin{lemma} [Uniform Growth Lemma] \label{lemma:uniform_growth_lemma}
Suppose we are in the setting of \Cref{thm:pop_main_thm}. Consider a time interval $[t_0, t_1]\in [0, T_2]$ in Phase 1 or 2. We are interested in quantifying the (relative) growth rate of the neurons indexed (or initialized) at $\iota_1$ and $\iota_2$ (where $0 < \iota_1 < \iota_2$), defined as 
\begin{align}
    \F_1 & \triangleq \frac{\w_{t_1}(\iota_1)}{\w_{t_0}(\iota_1)}\nonumber \\
    \F_2 & \triangleq \frac{\w_{t_1}(\iota_2)}{\w_{t_0}(\iota_2)}\nonumber
\end{align}
Let $\delta \in (0, 1)$ such that $\frac{1}{\sqrt{d}} \lesssim \delta \leq \frac{1}{\log F_2}$ --- here, we implicitly assume that $\F_2 \leq e^d$. We also assume that $\delta \leq \frac{\gamma_2^2}{16}$. Further assume that $\w_{t_1}(\iota_2) = \delta$. Finally assume that $\Prob(|\iota| \geq |\iota_2|) \leq \frac{\gamma_2^2}{16}$. Then,
\begin{align} \label{eq:uniform_growth_statement_1}
\F_1 \asymp \F_2
\end{align}
and additionally,
\begin{align} \label{eq:first_order_growth_rate}
\F_1 \asymp \exp\Big(\int_{t_0}^{t_1} 2 \sigmaCoeffTwo^2 |D_{2, t}| dt \Big)
\end{align}
Finally,
\begin{align} \label{eq:uniform_growth_statement_2}
t_1 - t_0
& \lesssim \frac{1}{\sigmaCoeffTwo^2 \gamma_2} \log \F_2
\end{align}
We note that $\F_1$ and $\F_2$ are both at least $1$.
\end{lemma}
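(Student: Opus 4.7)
The strategy is to reduce the scalar ODE from \Cref{lem:1-d-traj} to an approximately linear equation $\frac{d \log w_t}{dt} = 2\sigmaCoeffTwo^2 |\DtwoT| \cdot (1 + \epsilon_t)$ with a provably small multiplicative error, and then to integrate. The first step is to establish monotonicity. Throughout Phase 1 and Phase 2, both $\DtwoT$ and $\DfourT$ are negative, so in \Cref{lem:1-d-traj} the term $P_t(w) = 2\sigmaCoeffTwo^2 \DtwoT w + 4 \sigmaCoeffFour^2 \DfourT w^3$ has the opposite sign of $w$, while $Q_t(w)$ is dominated by $P_t(w)$ by a factor of $1/d$. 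Hence $v(w) > 0$ for $w \in (0, 1)$, so $w_t(\iota)$ is strictly increasing. Together with the order-preservation property (Proposition~\ref{prop:particle_order}), this gives $0 < w_t(\iota_1) \le w_t(\iota_2) \le w_{t_1}(\iota_2) = \delta$ for all $t \in [t_0, t_1]$, which in particular shows $F_1, F_2 \ge 1$.

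Next, I would uniformly bound $|\DtwoT|$ and $|\DfourT|$ on $[t_0, t_1]$. Splitting the expectation according to whether $|\iota| \le |\iota_2|$ (where $|w_t| \le \delta$) or not (which has probability at most $\gamma_2^2/16$), and recalling $\PtwoD(w) = \tfrac{d}{d-1}w^2 - \tfrac{1}{d-1}$, I obtain $|\E[\PtwoD(w_t)]| \lesssim \delta^2 + \gamma_2^2/16 + 1/d \ll \gamma_2$ under the assumption that $\gamma_2$ is a small constant and $d$ is large. Hence $|\DtwoT| \asymp \gamma_2$, and analogously $|\DfourT| \lesssim \gamma_2^2$ using $|\PfourD(w)| \lesssim w^4 + 1/d$ together with $\gamma_4 \asymp \gamma_2^2$. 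Dividing the velocity in \Cref{lem:1-d-traj} by the main term $2\sigmaCoeffTwo^2|\DtwoT| w$, the remaining contributions assemble into a factor $1 + \epsilon_t(w)$ with
\begin{align*}
|\epsilon_t(w)| \;\lesssim\; \frac{\sigmaCoeffFour^2 |\DfourT|}{\sigmaCoeffTwo^2 |\DtwoT|}\, w^2 \;+\; \frac{1}{d} \;+\; w^2 \;\lesssim\; w^2 + \frac{1}{d},
\end{align*}
where I used $\sigmaCoeffFour^2 \asymp \sigmaCoeffTwo^2$, the bound $|\DfourT|/|\DtwoT| \lesssim \gamma_2$, and the $O((\sigmaCoeffTwo^2|\DtwoT| + \sigmaCoeffFour^2|\DfourT|)/d)$ bound on $\lambdaD^{(1)},\lambdaD^{(3)}$.

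Integrating the log-derivative identity over $[t_0, t_1]$ gives
\begin{align*}
\log F_i \;=\; \int_{t_0}^{t_1} 2\sigmaCoeffTwo^2 |\DtwoT|\, dt \;+\; R_i, \qquad |R_i| \;\lesssim\; \int_{t_0}^{t_1} \sigmaCoeffTwo^2 \gamma_2 \bigl(w_t(\iota_i)^2 + 1/d\bigr)\, dt.
\end{align*}
The $1/d$ piece is $\lesssim \log F_2 / d \le 1/(d\delta) \lesssim 1$ using $\delta \gtrsim 1/\sqrt{d}$ together with the hypothesis $\delta \le 1/\log F_2$. For the $w^2$ piece, I would use $w_t(\iota_i) \le w_t(\iota_2)$ and the \emph{one-sided} lower bound $\tfrac{d}{dt}\log w_t(\iota_2) \gtrsim \sigmaCoeffTwo^2 \gamma_2$ (valid directly from $|\DtwoT| \gtrsim \gamma_2$ and the correct sign of the main term) to get $w_t(\iota_2)^2 \le \delta^2 \exp(-\Omega(\sigmaCoeffTwo^2\gamma_2 (t_1 - t)))$, so $\int_{t_0}^{t_1} w_t(\iota_i)^2\, dt \lesssim \delta^2/(\sigmaCoeffTwo^2 \gamma_2)$, making the contribution $\lesssim \delta^2 \lesssim 1$. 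Hence $|R_i| = O(1)$, which simultaneously yields $F_i \asymp \exp(\int_{t_0}^{t_1} 2\sigmaCoeffTwo^2 |\DtwoT| dt)$ (establishing \eqref{eq:first_order_growth_rate}) and $F_1 \asymp F_2$ since the leading term is independent of $i$. Plugging $|\DtwoT| \lesssim \gamma_2$ back into the same identity gives $\log F_2 \lesssim \sigmaCoeffTwo^2 \gamma_2 (t_1 - t_0)$, i.e., \eqref{eq:uniform_growth_statement_2}.

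The main obstacle is the apparent circularity in controlling $\int w_t^2\, dt$: one seemingly needs an a~priori description of the trajectory to bound the integral that certifies the trajectory. I resolve this without any bootstrap by running the approximate dynamics \emph{backward} from the known endpoint $w_{t_1}(\iota_2) = \delta$, relying only on the trivial lower bound $\tfrac{d}{dt}\log w_t \gtrsim \sigmaCoeffTwo^2\gamma_2$ (which holds unconditionally since the corrections in $\epsilon_t$ are of definite sign when $w$ is small). A secondary delicate point is to keep the error $\epsilon_t$ integrated against the main term, rather than bounded by $\sup_t |\epsilon_t|$ times the length of the interval, since this allows the tight $\delta^2$ bound from the exponential concentration of $w_t^2$ near the endpoint to control the error and keep $|R_i| = O(1)$, which is exactly what is needed for the constants in $F_1 \asymp F_2$.
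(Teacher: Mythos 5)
Your proof follows the same high-level route as the paper's: reduce to the one-dimensional velocity from \Cref{lem:1-d-traj}, show $|D_{2,t}|\asymp\gamma_2$ and $|D_{4,t}|\lesssim\gamma_2^2$ uniformly on $[t_0,t_1]$, and conclude that $\frac{d}{dt}\log w_t(\iota) = 2\sigmaCoeffTwo^2|D_{2,t}|\cdot(1+\epsilon_t)$ with a small relative error. Where you genuinely diverge is in how the error is integrated. The paper bounds $|\epsilon_t| = O(\delta)$ uniformly (since $w\le\delta$), writes $\log F_1 = (1\pm O(\delta))\log F_2$, and closes by observing $F_2^{O(\delta)} = O(1)$ via $\delta\le 1/\log F_2$. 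You instead keep the sharper pointwise bound $|\epsilon_t| \lesssim w_t^2 + 1/d$ and control the error \emph{integral} directly: the one-sided lower bound $\frac{d}{dt}\log w_t(\iota_2) \gtrsim \sigmaCoeffTwo^2\gamma_2$ yields backward exponential decay $w_t(\iota_2)^2 \le \delta^2 e^{-\Omega(\sigmaCoeffTwo^2\gamma_2(t_1-t))}$, so $\int w_t^2\,dt \lesssim \delta^2/(\sigmaCoeffTwo^2\gamma_2)$ and the additive remainder $R_i$ in $\log F_i$ is $O(\delta^2) + O(1/(d\delta)) = O(1)$. Both arguments are valid and deliver the same constants; yours makes the role of the endpoint normalization $w_{t_1}(\iota_2)=\delta$ explicit and isolates the error integral cleanly, at the cost of introducing the backward-decay bootstrap (which is legitimate; one small quibble is that the $Q_t$ terms are not all of definite sign as you assert — $\lambdaD^{(1)}$ can go either way — but they are $O(1/d)$ relatively so this does not affect anything).

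There is a concrete slip in the last step. You write ``Plugging $|\DtwoT| \lesssim \gamma_2$ back into the same identity gives $\log F_2 \lesssim \sigmaCoeffTwo^2 \gamma_2 (t_1 - t_0)$, i.e., \eqref{eq:uniform_growth_statement_2}.'' This has the inequality running the wrong direction: \eqref{eq:uniform_growth_statement_2} is an \emph{upper} bound on $t_1-t_0$, equivalent to $\log F_2 \gtrsim \sigmaCoeffTwo^2\gamma_2(t_1-t_0)$, which requires the \emph{lower} bound $|\DtwoT| \gtrsim \gamma_2$. Moreover, routing this through the identity $\log F_2 = \int 2\sigmaCoeffTwo^2|D_{2,t}|\,dt + R_2$ only gives $t_1 - t_0 \lesssim \frac{1}{\sigmaCoeffTwo^2\gamma_2}\bigl(\log F_2 + O(1)\bigr)$, with an additive slack that doesn't match the stated bound when $\log F_2$ is small. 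The fix is to bypass the $R_2$ decomposition for this step entirely: since $|\epsilon_t| \le 1/2$ uniformly, $\frac{d}{dt}\log w_t(\iota_2) \ge c\,\sigmaCoeffTwo^2\gamma_2$ (the bound you already derived and used for the backward decay) integrates by Gronwall's inequality to $\log F_2 \ge c\,\sigmaCoeffTwo^2\gamma_2(t_1-t_0)$ with no slack, which is exactly how the paper obtains \eqref{eq:uniform_growth_statement_2}.
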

Note that this lemma also holds for $\iota_2 < \iota_1 < 0$, and with $w_{t_1} (\iota_2) = -\delta$, by the symmetry of $\rho_t$.

\begin{proof}[Proof of \Cref{lemma:uniform_growth_lemma}]
In this proof, we define $\tau = \sqrt{\gamma_2}$ --- then, by \Cref{assumption:target}, we can write $\gamma_4 = \beta \cdot \tau^4$ where $1.1 \leq \beta$ and $\beta$ is less than some universal constant $c_1$. Suppose at time $t_1$, $\w_{t_1}(\iota_2) = \delta$. One useful observation, which we use in the rest of this proof, is that for all $t \leq t_1$, $w_t(\iota_2) \leq \delta$, since for $\w \gtrsim \frac{1}{\sqrt{d}}$, we have $\frac{d\w}{dt} \geq 0$ by \Cref{lem:1-d-traj} (here we used the assumption that $\delta \gtrsim \frac{1}{\sqrt{d}}$). Thus,
\begin{align}
\E_{\w \sim \rho}[\PtwoD(\w)]
& = \Prob(|\iota| > |\iota_2|) \cdot \E_{\w \sim \rho}[\PtwoD(\w) \mid |\iota| > |\iota_2|] + \Prob(|\iota| < |\iota_2|) \cdot \E_{\w \sim \rho} [\PtwoD(\w) \mid |\iota| < |\iota_2|] \\
& \leq \frac{\gamma_2^2}{16} + \E_{\w \sim \rho}[\PtwoD(\w) \mid |\iota| < |\iota_2|]
& \tag{B.c. $|\PtwoD(\w)| \leq 1$ by Eq. (2.116) of \citet{atkinson2012spherical} and $\Prob(|\iota| > |\iota_2|) \leq \delta$} \\
& \leq \frac{\gamma_2^2}{16} + \delta^2 + O\Big(\frac{1}{d}\Big) 
& \tag{By \Cref{eq:legendre_polynomial_2_4} and b.c. $|\w_t(\iota)| \leq \delta$ if $|\iota| \leq |\iota_2|$ by \Cref{prop:particle_order}} \\
& \leq \frac{\tau^4}{8} + O\Big(\frac{1}{d}\Big)
& \tag{B.c. $\delta \leq \frac{\gamma_2^2}{16}$ and $\gamma_2 = \tau^2$}
\end{align}
Thus, writing $\gamma_2 = \tau^2$, we have
\begin{align}
|\DtwoT|
 = |\E_{\w \sim \rho}[\PtwoD(\w)] - \tau^2| 
 \geq \Big|\frac{\tau^4}{8} + O\Big(\frac{1}{d}\Big) - \tau^2 \Big| 
 \geq \frac{\tau^2}{2}
 \label{eq:uniform_growth_d2_lb}
\end{align}
where the last inequality is by \Cref{assumption:target} b.c. $d \geq c_3$ and $\gamma_4 \lesssim \gamma_2$. By a similar argument,
\begin{align}
|\DfourT|
& = |\E_{\w \sim \rho}[\PfourD(w)] - \beta \tau^4| \\
& \leq \beta \tau^4 + \frac{\gamma_2^2}{16} + \delta^4 + O\Big(\frac{1}{d}\Big) 
& \tag{Using $|P_{4, d}(\w)| \leq 1$, $\Prob(|\iota| > |\iota_2|) \leq \frac{\gamma_2^2}{16}$ , $\delta \leq \frac{\tau^4}{16}$ and \Cref{eq:legendre_polynomial_2_4}} \\
& \leq (\beta + 1/8) \tau^4 
& \tag{B.c. $\delta \leq \frac{\gamma_2^2}{16} \leq \frac{\beta \tau^4}{16}$} \\
& \leq |\DtwoT|
\end{align}
Thus, for any time $t \in [t_0, t_1]$, for any $\iota \in [\iota_1, \iota_2]$, by \Cref{lem:1-d-traj} (and because $\w_t(\iota) \leq \delta$ for any $\iota \in [\iota_1, \iota_2]$ by \Cref{prop:particle_order}), we have
\begin{align}
v_t(\w_t(\iota))
& = -(1 - \w_t(\iota)^2) \cdot \Big(2 \sigmaCoeffTwo^2 \DtwoT \w_t(\iota) \cdot (1 \pm O(\delta)) + Q_t(\w_t(\iota)) \Big) 
\tag{B.c. $\sigmaCoeffFour^2 \lesssim \sigmaCoeffTwo^2$ by \Cref{assumption:target}, $|D_{4, t}| \leq |D_{2, t}|$, and $\w_t(\iota) \leq \delta$} \\
& = 2 \sigmaCoeffTwo^2 |\DtwoT| \w_t(\iota) \cdot (1 \pm O(\delta)) - O\Big(\frac{\sigmaCoeffTwo^2 |D_{2, t}|}{d}\Big) |\w_t(\iota)|
\tag{B.c. $|\w_t(\iota)| \leq \delta$ and by bound on coefficients of $Q_t$ from \Cref{lem:1-d-traj}} \\
& = 2 \sigmaCoeffTwo^2 |\DtwoT| \w_t(\iota) \cdot (1 \pm O(\delta))
\tag{B.c. $\delta \geq \frac{1}{d}$}
\end{align}
In summary,
\begin{align}\label{eq:velocity_similar}
\vel_t(\w_t(\iota))
& = 2 \sigmaCoeffTwo^2 |D_{2, t}| \w_t(\iota) \cdot (1 \pm O(\delta)) 
\end{align}
Thus,
\begin{align}
\frac{v(\w_t(\iota_2))}{\w_t(\iota_2)} \leq \frac{1 + O(\delta)}{1 - O(\delta)} \frac{v(\w_t(\iota_1))}{\w_t(\iota_1)} \leq (1 + O(\delta)) \cdot \frac{v(\w_t(\iota_1))}{\w_t(\iota_1)}
\end{align}
and $\w_t(\iota_2)$ grows by a factor
\begin{align}
\F_2 = \exp\Big(\int_{t_0}^{t_1} \frac{v(\w_t(\iota_2))}{\w_t(\iota_2)} dt \Big) \leq \exp\Big( (1 + O(\delta)) \int_{t_0}^{t_1} \frac{v(\w_t(\iota_1))}{\w_t(\iota_1)} dt \Big) = \F_1^{1 + O(\delta)}
\end{align}
Rearranging gives
\begin{align}
\F_1 \geq \F_2^{1 - O(\delta)} = \frac{\F_2}{\F_2^{O(\delta)}}
\end{align}
Finally, observe that $\F_2^{O(\delta)} \leq O(1)$, since by the assumption that $\delta \leq \frac{1}{\log F_2}$,
\begin{align} \label{eq:fu_power_delta}
\F_2^{\delta}
& \leq \F_2^{\frac{1}{\log \F_2}}
\end{align}
and for any $x > 0$, $x^{\frac{1}{\ln x}} = e$, meaning that $\F_2^{\delta} \leq O(1)$. This implies that $\F_1 \gtrsim \F_2$. From the above calculations, we can also obtain
\begin{align}
\exp\Big(\int_{t_0}^{t_1} 2 \sigmaCoeffTwo^2 |D_{2, t}| dt \Big)
& \gtrsim \exp\Big(\int_{t_0}^{t_1} \frac{1}{1 + O(\delta)} \cdot \frac{\vel_t(\w_t(\iota_2))}{\w_t(\iota_2)} dt \Big)
& \tag{By \Cref{eq:velocity_similar}} \\
& \gtrsim \exp\Big((1 - O(\delta)) \cdot \int_{t_0}^{t_1} \frac{v_t(\w_t(\iota_2))}{\w_t(\iota_2)} dt\Big) \\
& \gtrsim \F_2^{1 - O(\delta)} \\
& \gtrsim \F_2
& \tag{By \Cref{eq:fu_power_delta}}
\end{align}
as desired. Similarly, we can obtain
\begin{align}
\exp\Big(\int_{t_0}^{t_1} 2 \sigmaCoeffTwo^2 |D_{2, t}| dt \Big)
& \lesssim \exp\Big(\int_{t_0}^{t_1} (1 + O(\delta)) \cdot \frac{\vel_t(\w_t(\iota_2))}{\w_t(\iota_2)} dt \Big) \\
& \lesssim \F_2^{1 + O(\delta)} \\
& \lesssim \F_2
& \tag{B.c. $\delta \leq \frac{1}{\log \F_2}$}
\end{align}
Finally,
\begin{align}
\F_1
& \lesssim \exp\Big(\int_{t_0}^{t_1} (1 + O(\delta)) \cdot \frac{\vel_t(\w_t(\iota_2))}{\w_t(\iota_2)} dt \Big) \\
& \lesssim \F_2^{1 + O(\delta)} \\
& \lesssim \F_2 
& \tag{B.c. $\delta \leq \frac{1}{\log \F_2}$}
\end{align}
as desired. Finally, we show the running time bound. Since $\delta \leq \frac{\gamma_2^2}{16}$ and $\gamma_2$ is less than a sufficiently small universal constant by \Cref{assumption:target}, for any time $t$ during this interval, $v(\w_t(\iota_2)) \gtrsim \sigmaCoeffTwo^2 D_2(t) \w_t(\iota_2) \gtrsim \sigmaCoeffTwo^2 \tau^2 \w_t(\iota_2)$ (where the second inequality is by \Cref{eq:uniform_growth_d2_lb}). Thus, by Gronwall's inequality (\Cref{fact:gronwall}), the time elapsed when $\w_t(\iota_2)$ grows by a factor of $\F_2$ is at most $\frac{1}{\sigmaCoeffTwo^2 \tau^2} \log \F_2$.
\end{proof}

\subsubsection{Proof of \Cref{lemma:phase_1_summary}}

As a direct consequence of the above lemma, we can show \Cref{lemma:phase_1_summary}, which characterizes how fast the particles $\w_t$ grow during Phase 1.

\begin{proof}[Proof of \Cref{lemma:phase_1_summary}]
To show the lower bound on $\w_{T_1}(\iota)$, we apply \Cref{lemma:uniform_growth_lemma}, with $t_0 = 0$ and $t_1 = T_1$, with $\delta = \frac{1}{\log d}$, and with $\iota_1 = \iota$ and $\iota_2 = \iotaU$. Let us verify that the assumptions of \Cref{lemma:uniform_growth_lemma} hold:
\begin{itemize}
    \item Observe that $\frac{\w_{T_1}(\iotaU)}{\iotaU} = \frac{1/\log d}{\log d / \sqrt{d}} = \frac{\sqrt{d}}{(\log d)^2}$, meaning that $\log \F_2 \leq \log d$ and $\frac{1}{\log \F_2} \geq \delta$.
    \item By \Cref{assumption:target}, $\delta = \frac{1}{\log d} \leq \frac{\gamma_2^2}{16}$, since $d \geq c_3$.
    \item The assumption that $\Prob(|\iota| \geq \iotaU) \leq \frac{\gamma_2^2}{16}$ holds by a standard bound on the tail of $\mu_d$.
\end{itemize}
Using the notation of \Cref{lemma:uniform_growth_lemma},
\begin{align}
\F_2
= \frac{\w_{T_1}(\iotaU)}{\iotaU}
= \frac{\wmax}{(\log d)/\sqrt{d}}
= \frac{\sqrt{d}}{(\log d)^2}
\end{align}
where $\wmax$ is as defined in \Cref{def:phase_1}. By \Cref{eq:uniform_growth_statement_1}, we therefore know that
\begin{align}
\frac{\sqrt{d}}{(\log d)^2}
\gtrsim \frac{\w_{T_1}(\iota)}{\iota} 
\gtrsim \frac{\sqrt{d}}{(\log d)^2}
\end{align}
For the bound on the running time, observe that by \Cref{eq:uniform_growth_statement_2}, $T_1 \lesssim \frac{1}{\sigmaCoeffTwo^2 \gamma_2} \log \frac{\sqrt{d}}{(\log d)^2}$. The final statement holds because $F_2 = \frac{\sqrt{d}}{(\log d)^2}$ and from the second statement of \Cref{lemma:uniform_growth_lemma}. This completes the proof.
\end{proof}

\subsection{Analysis of Phase 2}

Recall the formal definition of Phase 2 from \Cref{def:phase_2}. Our analysis in this section has two main goals: (1) to show that the running time of Phase 2 is small, and (2) to show that a majority of the particles become at least $\frac{1}{\log \log d}$ in magnitude, up to a constant factor. We will achieve goal (2) using \Cref{lemma:uniform_growth_lemma} --- then, we show that goal (1) holds using goal (2). Additionally, goal (2) will be useful for the subsequent phases as well.

First, the following proposition defines a range of particles which will be relevant during Phase 2 and Phase 3. This range of particles is a strict subset of the interval $(0, \iotaU)$ which we considered during Phase 1 --- we will show that this new range of particles grows uniformly during the beginning of Phase 2, until the largest of them becomes about $\frac{1}{\log \log d}$ in magnitude. Specifically, we define $\iotaL \triangleq \frac{\kappa}{\sqrt{d}}$ and $\iotaR \triangleq \frac{1}{\kappa \sqrt{d}}$ where $\kappa \asymp \frac{1}{\log \log d}$. The following proposition states that at initialization, a very large fraction of the $\iota$'s have magnitude between $\iotaL$ and $\iotaR$.

\begin{proposition} \label{prop:probability_of_relevant_interval}
Let $\kappa \in (0, 1)$, and suppose $\iota$ is drawn uniformly at random from $\mu_d$. Define $\iotaL = \frac{\kappa}{\sqrt{d}}$ and $\iotaR = \frac{1}{\kappa \sqrt{d}}.$ Then, $\Prob(|\iota| \not\in [\iotaL, \iotaR]) \lesssim \kappa$.
\end{proposition}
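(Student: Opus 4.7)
The plan is to split the bound into an upper tail estimate $\Prob(|\iota| > \iotaR)$ and a small-ball (anti-concentration) estimate $\Prob(|\iota| < \iotaL)$, and show each is $O(\kappa)$ separately. By symmetry of $\mu_d$ it suffices to consider $\iota \in [0,1]$ and double.

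For the upper tail, I would invoke the standard concentration of a coordinate of a uniform point on $\Sp$, which the paper already uses (cf.\ \Cref{prop:sphere-tail-bound}, giving $\Prob(|\iota| \ge t) \le 2\exp(-t^2 d/2)$). Plugging in $t = \iotaR = 1/(\kappa\sqrt d)$ gives $\Prob(|\iota| > \iotaR) \le 2\exp(-1/(2\kappa^2))$, which is $O(\kappa)$ (in fact much smaller) for all $\kappa \in (0,1)$, since $x \mapsto e^{-1/(2x^2)}$ decays faster than any polynomial in $x$ as $x \to 0^+$, and is bounded by a constant times $x$ on any compact subinterval of $(0,1]$.

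For the small-ball estimate, I would bound the density $\mu_d(t) = (1-t^2)^{(d-3)/2}\,\frac{\Gamma(d/2)}{\Gamma((d-1)/2)\sqrt{\pi}}$ uniformly on $[-\iotaL, \iotaL]$. On this interval $(1-t^2)^{(d-3)/2} \le 1$, so $\mu_d(t) \le \frac{\Gamma(d/2)}{\Gamma((d-1)/2)\sqrt{\pi}}$. By Stirling's formula (or the standard bound $\Gamma(z+1/2)/\Gamma(z) \le \sqrt{z}$ applied with $z = (d-1)/2$), we have $\Gamma(d/2)/\Gamma((d-1)/2) \lesssim \sqrt d$, so $\mu_d(t) \lesssim \sqrt d$ on $[-\iotaL, \iotaL]$. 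Integrating gives
\begin{equation*}
\Prob(|\iota| < \iotaL) \;\le\; 2\iotaL \cdot O(\sqrt d) \;=\; 2 \cdot \frac{\kappa}{\sqrt d} \cdot O(\sqrt d) \;=\; O(\kappa).
\end{equation*}

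Combining the two estimates yields $\Prob(|\iota|\notin[\iotaL,\iotaR]) \lesssim \kappa$. The only mildly non-routine step is the density bound $\mu_d(0) \lesssim \sqrt d$, which is a standard Stirling calculation; the upper tail is immediate from the sphere concentration result already cited in the paper. I do not anticipate any genuine obstacle.
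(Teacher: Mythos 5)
Your proof is correct and follows the same overall structure as the paper's: split into an upper-tail estimate for $\Prob(|\iota| > \iotaR)$ and a small-ball estimate for $\Prob(|\iota| < \iotaL)$, with the latter handled identically via the density bound $\mu_d(t) \lesssim \sqrt d$ on $[-\iotaL,\iotaL]$. The only difference is the upper-tail step: the paper simply applies Markov's inequality to $\iota^2$ (using $\E[\iota^2] = 1/d$, giving $\kappa^2$), whereas you invoke the sub-Gaussian tail bound $\Prob(|\iota|\ge t) \le 2e^{-t^2 d/2}$ and then argue that $2e^{-1/(2\kappa^2)} \lesssim \kappa$ uniformly on $(0,1)$; both routes are sound, the Markov route being slightly more elementary and the exponential route giving a sharper (though unneeded) bound on that piece.
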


\begin{proof}[Proof of \Cref{prop:probability_of_relevant_interval}]
First, we upper bound the probability that $|\iota| \geq \iotaR$. By Markov's inequality,
\begin{align}
\Prob(|\iota| \geq \iotaR) \leq \frac{\E[\iota^2]}{\iotaR^2} = \frac{1/d}{1/(\kappa^2 d)} = \kappa^2
\end{align}
Now, we upper bound the probability that $|\iota| \leq \iotaL$. By Eqs. (1.16) and (1.18) of \citet{atkinson2012spherical}, the probability density function of $\iota$ is $p(\iota) = \frac{\Gamma(d/2)}{\sqrt{\pi} \Gamma((d - 1)/2)} (1 - \iota^2)^{\frac{d - 3}{2}}$. Thus, we can simply upper bound the density by its maximum value:
\begin{align}
\Prob(|\iota| \leq \iotaL)
& \lesssim \frac{\Gamma(d/2)}{\sqrt{\pi} \Gamma((d - 1)/2)} \int_{-\iotaL}^{\iotaL} (1 - t^2)^{\frac{d - 3}{2}} dt \\
& \lesssim \frac{\Gamma(d/2)}{\sqrt{\pi} \Gamma((d - 1)/2)} \cdot 2\iotaL \\
& \lesssim \sqrt{d} \cdot \iotaL
\tag{By Stirling's Formula} \\
& \lesssim \sqrt{d} \cdot \frac{\kappa}{\sqrt{d}} \\
& \lesssim \kappa
\end{align}
as desired.
\end{proof}

Next, as a corollary of \Cref{lemma:uniform_growth_lemma}, we show that for a portion of Phase 2, the particles initialized at $\iota \in [\iotaL, \iotaR]$ grow by a similar factor. Specifically, this is true until the largest of these particles is $\xi = \frac{1}{2 \log \log d}$ in magnitude.

\begin{lemma}[Corollary of \Cref{lemma:uniform_growth_lemma}]
\label{lemma:phase_2_uniform_growth_corollary}
Suppose we are in the setting of \Cref{thm:pop_main_thm}. Let $\TR > T_1$ be the first time such that $\w_{\TR}(\iotaR) = \xi = \frac{1}{2 \log \log d}$. Then, for all $\iota$ with $|\iota| \in [\iotaL, \iotaR]$, we have $|\w_{\TR}(\iota)| \gtrsim \xi \kappa^2$, and additionally, $\frac{w_{\TR}(\iota)}{w_{T_1}(\iota)} \asymp \xi \kappa (\log d)^2$. Furthermore, $\TR - T_1 \lesssim \frac{1}{\sigmaCoeffTwo^2 \gamma_2} \log \log d$.
\end{lemma}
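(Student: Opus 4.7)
The plan is to derive this lemma as a direct application of the Uniform Growth Lemma (\Cref{lemma:uniform_growth_lemma}) on the time interval $[T_1, \TR]$. I would take $\iota_1 = \iota$ for an arbitrary $\iota$ with $|\iota|\in[\iotaL,\iotaR]$, $\iota_2 = \iotaR$, and $\delta = \xi = 1/(2\log\log d)$, so that by the very definition of $\TR$ we have $\w_{\TR}(\iotaR) = \delta$.

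The main step is to verify the hypotheses of \Cref{lemma:uniform_growth_lemma}. The inequalities $1/\sqrt{d}\lesssim \delta \le \gamma_2^2/16$ are immediate for sufficiently large $d$, since $\gamma_2$ is a universal constant and $\xi\to 0$. The tail bound $\Pr(|\iota|\geq \iotaR) \le \gamma_2^2/16$ follows from \Cref{prop:probability_of_relevant_interval}, which gives $\Pr(|\iota|\notin[\iotaL,\iotaR])\lesssim \kappa$, again much smaller than the constant $\gamma_2^2/16$. To compute $\F_2$ I use \Cref{lemma:phase_1_summary}: since $\iotaR < \iotaU$ in our regime, we have $\w_{T_1}(\iotaR) \asymp \frac{\sqrt{d}}{(\log d)^2}\cdot \iotaR = \frac{1}{\kappa(\log d)^2}$, so
\begin{equation*}
\F_2 = \frac{\xi}{\w_{T_1}(\iotaR)} \asymp \xi\kappa(\log d)^2,
\end{equation*}
and hence $\log \F_2 = 2\log\log d - O(\log\log\log d) < 2\log\log d = 1/\xi$, which verifies the condition $\delta \le 1/\log \F_2$.

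Once the hypotheses are in place, \Cref{lemma:uniform_growth_lemma} yields $\F_1 \asymp \F_2 \asymp \xi\kappa(\log d)^2$ uniformly in $\iota$ with $|\iota|\in[\iotaL,\iotaR]$, which is the second claim. Combining with $\w_{T_1}(\iota)\asymp \frac{\sqrt d}{(\log d)^2}\iota$ from \Cref{lemma:phase_1_summary} gives $\w_{\TR}(\iota) \asymp \xi\kappa\sqrt d\cdot \iota$; specializing to $\iota = \iotaL = \kappa/\sqrt d$ gives $\w_{\TR}(\iotaL)\asymp \xi\kappa^2$, and the monotonicity of $\w_t(\iota)$ in $|\iota|$ (\Cref{prop:particle_order}) extends this to the lower bound $\w_{\TR}(\iota)\gtrsim \xi\kappa^2$ for all $|\iota|\in[\iotaL,\iotaR]$. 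The runtime bound $\TR - T_1 \lesssim \frac{1}{\sigmaCoeffTwo^2\gamma_2}\log \F_2 \lesssim \frac{1}{\sigmaCoeffTwo^2\gamma_2}\log\log d$ is then the third conclusion of \Cref{lemma:uniform_growth_lemma}.

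The main obstacle I expect is the implicit hypothesis $\TR \le T_2$ needed to invoke \Cref{lemma:uniform_growth_lemma} on $[T_1,\TR]$, i.e., that Phase 2 has not ended by time $\TR$. To discharge this cleanly I would use the minimality of $\TR$ together with \Cref{prop:particle_order}: for every $t < \TR$ and every $|\iota|\le \iotaR$ we have $|\w_t(\iota)|\le \xi$. Combining this with $|\PtwoD(\w)|\le 1$ and the mass bound of \Cref{prop:probability_of_relevant_interval} gives $\E_{\u\sim\rho_t}[\PtwoD(\w)] \lesssim \xi^2 + \kappa + 1/d \ll \gamma_2$, so $D_{2,t}<0$; an analogous estimate (using also $\gamma_4\gtrsim \gamma_2^2$) shows $D_{4,t}<0$. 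Thus $\TR\le T_2$, and this argument is non-circular since the bound $|\w_t(\iota)|\le \xi$ follows purely from the definition of $\TR$ and particle monotonicity, not from $\TR\le T_2$ itself.
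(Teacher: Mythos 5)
Your proposal is correct and follows essentially the same route as the paper: apply \Cref{lemma:uniform_growth_lemma} on $[T_1,\TR]$ with $\iota_2=\iotaR$, $\delta=\xi$, verify the hypotheses via \Cref{lemma:phase_1_summary} and \Cref{prop:probability_of_relevant_interval}, compute $\F_2 \asymp \xi\kappa(\log d)^2$, and read off all three conclusions. The one place you go beyond the paper's write-up is your explicit check that $\TR \le T_2$ (needed to legitimately invoke the Uniform Growth Lemma on an interval inside Phases 1--2); the paper establishes this only afterward, in \Cref{lemma:phase1_d2_d4_neg}, using precisely the non-circular argument you describe, so surfacing it here is a reasonable tightening rather than a deviation.
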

\begin{proof}[Proof of \Cref{lemma:phase_2_uniform_growth_corollary}]
By \Cref{lemma:phase_1_summary}, $\w_{T_1}(\iotaR) \asymp \frac{\sqrt{d}}{(\log d)^2} \iotaR = \frac{1}{\kappa (\log d)^2}$. Let $\iotaL = \frac{\kappa}{\sqrt{d}}$ and $\iotaR = \frac{1}{\kappa \sqrt{d}}$, where $\kappa = \frac{1}{\log \log d}$. Now, we apply \Cref{lemma:uniform_growth_lemma}, with $t_0 = T_1$ and $t_1 = \TR$, and with $\iota_1 = \iota$ for some $\iota \in [\iotaL, \iotaR]$ and $\iota_2 = \iotaR$, and with $\delta = \frac{1}{2 \log \log d}$. Note that we assume without loss of generality that $\iota > 0$ --- the statement for $\iota < 0$ follows by the symmetry of $\rho_t$. Let us verify that the assumptions hold:
\begin{itemize}
\item First, $\delta = \frac{1}{2 \log \log d} \gtrsim \frac{1}{\sqrt{d}}$. Addditionally, observe that by \Cref{lemma:phase_1_summary}, $\frac{\w_{T_1}(\iotaR)}{\iotaR} \asymp \frac{\sqrt{d}}{(\log d)^2}$, meaning that $\w_{T_1}(\iotaR) \asymp \frac{1}{\kappa (\log d)^2}$. Thus, using the notation of \Cref{lemma:uniform_growth_lemma}, we have $\F_2 \lesssim \frac{1/\log \log d}{1/(\kappa(\log d)^2)} = \frac{\kappa (\log d)^2}{\log \log d}$, and $\log \F_2 \leq 2 \log \log d - 2 \log \log \log d + C < \frac{1}{\delta}$, as desired. (Here $C$ is a universal constant, and the last inequality holds by \Cref{assumption:target} since $d \geq c_3$.)
\item By \Cref{assumption:target}, $\delta = \frac{1}{2 \log \log d} \lesssim \frac{\gamma_2^2}{16}$ since $d \geq c_3$.
\item Additionally, the probability that $|\iota| \geq |\iotaR|$ is at most $\kappa = \frac{1}{\log \log d}$ up to constant factors by \Cref{prop:probability_of_relevant_interval}, and this is clearly at most $\frac{\gamma_2^2}{16}$ by \Cref{assumption:target}. 
\end{itemize}
Using \Cref{lemma:uniform_growth_lemma}, we obtain
\begin{align}
\frac{w_{\TR}(\iota)}{w_{T_1}(\iota)} 
\asymp \frac{w_{\TR}(\iotaR)}{w_{T_1}(\iotaR)} 
\asymp \frac{\xi}{w_{T_1}(\iotaR)} 
\asymp \frac{\xi}{\Big(\frac{1}{\kappa (\log d)^2}\Big)} 
\asymp \xi \kappa (\log d)^2
\end{align}
Thus, using the conclusion of \Cref{lemma:phase_1_summary} that $\frac{w_{T_1}(\iota)}{\iota} \gtrsim \frac{\sqrt{d}}{(\log d)^2}$, we obtain
\begin{align}
\w_{\TR}(\iota) 
\gtrsim \frac{\w_{\TR}(\iota)}{\w_{T_1}(\iota)} \cdot \frac{w_{T_1}(\iota)}{\iota} \cdot \iota 
\gtrsim \xi \kappa (\log d)^2 \cdot \frac{\sqrt{d}}{(\log d)^2} \cdot \frac{\kappa}{\sqrt{d}} 
\gtrsim \xi \kappa^2
\end{align}
as desired. Thus, we obtain the first statement of this lemma. In addition, by \Cref{eq:uniform_growth_statement_2}, and since $w_{\TR}(\iotaR) = \xi$ and $w_{T_1}(\iotaR) \gtrsim \frac{1}{\kappa(\log d)^2}$, the running time bound we obtain for this part of Phase 2 is $\TR - T_1 \lesssim \frac{1}{\sigmaCoeffTwo^2 \tau^2} \log \frac{\xi}{1/(\kappa (\log d)^2)} \lesssim \frac{1}{\sigmaCoeffTwo^2 \tau^2} (\log \log d)$, as desired.
\end{proof}

We now prove some auxiliary lemmas. First, we show an upper bound on $L(\rho_0)$, which is needed in order to later show an upper bound on the time required to have $L(\rho_t) \leq (\sigmaCoeffTwo^2 + \sigmaCoeffFour^2) \epsilon^2$. In the process, we also show a reasonably tight bound on the magnitude of $D_{2, t}$ and $D_{4, t}$ during the first part of Phase 2.

\begin{lemma} \label{lemma:phase1_d2_d4_neg}
Suppose we are in the setting of \Cref{thm:pop_main_thm} and \Cref{lemma:phase_2_uniform_growth_corollary}, and suppose $0 \leq t \leq \TR$. Then, $-\frac{3\gamma_2}{2} \leq D_{2, t} \leq -\frac{\gamma_2}{2}$ and $-\frac{3\gamma_4}{2} \leq D_{4, t} \leq -\frac{\gamma_4}{2}$. As a corollary, for all $t \geq 0$, $L(\rho_t) \lesssim (\sigmaCoeffTwo^2 + \sigmaCoeffFour^2) \gamma_2^2$.
\end{lemma}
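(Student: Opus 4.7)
The plan is to bound $\E_{u\sim\rho_t}[P_{k,d}(w)]$ for $k\in\{2,4\}$ by splitting the expectation according to whether the initialization $\iota$ of each particle satisfies $|\iota|\le \iotaR$ or $|\iota|>\iotaR$. Since $D_{k,t} = \E_{u\sim\rho_t}[P_{k,d}(w)] - \gamma_k$, the target inequalities $-\tfrac{3\gamma_k}{2}\le D_{k,t}\le -\tfrac{\gamma_k}{2}$ are equivalent to $|\E_{u\sim\rho_t}[P_{k,d}(w)]|\le \gamma_k/2$. Using \Cref{eq:legendre_polynomial_2_4}, we have the pointwise bounds $|P_{2,d}(w)|\le w^2 + O(1/d)$ and $|P_{4,d}(w)|\le w^4 + O(w^2/d + 1/d^2)$ on $[-1,1]$, together with the crude bound $|P_{k,d}(w)|\le 1$.

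The first step is to establish that for every $t\in[0,\TR]$ and every initialization with $|\iota|\le \iotaR$, the particle satisfies $|w_t(\iota)|\le \xi$. By the order-preservation property (\Cref{prop:particle_order}), it suffices to prove $|w_t(\iotaR)|\le \xi$ on this interval. During Phase 1, $\iotaR \le \iotaU$ for $d$ large, so \Cref{lemma:phase_1_summary} together with order preservation gives $w_t(\iotaR)\le w_t(\iotaU)\le \wmax = 1/\log d \le \xi$. For $t\in[T_1,\TR]$, I will use a bootstrap argument: let $T^{\star}$ denote the supremum of times $t\in[T_1,\TR]$ such that both $D_{2,s}<0$ and $D_{4,s}<0$ for all $s\in[T_1,t]$. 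On $[T_1,T^{\star}]$, \Cref{lem:1-d-traj} shows the velocity is nonnegative for $w\in[0,1]$ up to $O(1/d)$ corrections, so $w_t(\iotaR)$ is (essentially) monotonically non-decreasing and therefore bounded above by $w_{\TR}(\iotaR)=\xi$.

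Combining these size bounds with \Cref{prop:probability_of_relevant_interval}, which gives $\Prob(|\iota|>\iotaR)\lesssim \kappa^2$, I can conclude
\begin{align*}
\big|\E_{u\sim\rho_t}[P_{2,d}(w)]\big|
&\lesssim \xi^2 + 1/d + \kappa^2, \\
\big|\E_{u\sim\rho_t}[P_{4,d}(w)]\big|
&\lesssim \xi^4 + \xi^2/d + 1/d^2 + \kappa^2.
\end{align*}
Since $\xi,\kappa = \Theta(1/\log\log d)$ while $\gamma_2$ is a positive universal constant and $\gamma_4\gtrsim\gamma_2^2$ by \Cref{assumption:target}, both right-hand sides are strictly smaller than $\gamma_k/4$ (say) for $d\ge c_3$ sufficiently large. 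This proves $D_{k,t}\in[-\tfrac{5\gamma_k}{4},-\tfrac{3\gamma_k}{4}]$ with a uniform strict gap throughout $[T_1,T^{\star}]$, so by continuity $T^{\star}=\TR$ (otherwise one of $D_{2,T^{\star}}$ or $D_{4,T^{\star}}$ would have to equal $0$, contradicting the strict bound). The claimed bounds follow on all of $[0,\TR]$.

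For the corollary, at $t=0$ the distribution $\rho_0$ is uniform on $\bbsd$, so by orthogonality of Legendre polynomials $\E_{u\sim\rho_0}[P_{k,d}(w)]=0$ for $k\ge 1$, giving $D_{2,0}=-\gamma_2$ and $D_{4,0}=-\gamma_4$. \Cref{eq:symmetric_loss_formula} then yields $L(\rho_0)=\tfrac{\sigmaCoeffTwo^2}{2}\gamma_2^2+\tfrac{\sigmaCoeffFour^2}{2}\gamma_4^2\lesssim (\sigmaCoeffTwo^2+\sigmaCoeffFour^2)\gamma_2^2$, using $\gamma_4\lesssim\gamma_2^2\le \gamma_2$ from \Cref{assumption:target}. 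Finally, since $\frac{d}{dt}L(\rho_t)=-\E_{\chi\sim\rho_0}\|\pgrad_{u_t(\chi)}L(\rho_t)\|_2^2\le 0$, the loss is monotonically non-increasing under projected gradient flow, so $L(\rho_t)\le L(\rho_0)\lesssim (\sigmaCoeffTwo^2+\sigmaCoeffFour^2)\gamma_2^2$ for all $t\ge 0$. The main potential obstacle is the apparent circularity between "$D_{k,t}$ stays bounded away from $0$" and "particles do not grow past $\xi$"; the bootstrap argument above resolves it because we obtain a strict gap in the bound on $|\E[P_{k,d}(w)]|$ that is ensured by the $d\to\infty$ separation between $\xi,\kappa$ and the universal constants $\gamma_2,\gamma_4$.
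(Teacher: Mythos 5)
Your proposal is correct and takes essentially the same approach as the paper: split the expectation of $P_{k,d}(\w)$ according to whether $|\iota|\le\iotaR$ (where $|\w_t(\iota)|\le\xi$ for $t\le\TR$) or $|\iota|>\iotaR$ (controlled by \Cref{prop:probability_of_relevant_interval}), compare to $\gamma_2,\gamma_4$ via \Cref{assumption:target}, and deduce the corollary from $L(\rho_0)\lesssim(\sigmaCoeffTwo^2+\sigmaCoeffFour^2)\gamma_2^2$ and the monotone decrease of $L$ under projected gradient flow (\Cref{lemma:loss_decrease}). One small remark: your bootstrap on $[T_1,\TR]$ is more than is needed, and the step asserting that monotonicity of $\w_t(\iotaR)$ on $[T_1,T^\star]$ yields $\w_t(\iotaR)\le\w_{\TR}(\iotaR)=\xi$ is actually leaning on the first-crossing property of $\TR$ rather than on monotonicity itself (monotonicity gives only $\w_t\le\w_{T^\star}$); but since $\w_t(\iotaR)<\xi$ on $[T_1,\TR)$ already follows from continuity, $\w_{T_1}(\iotaR)<\xi$, and the definition of $\TR$ as the \emph{first} time $\w_t(\iotaR)=\xi$ (and likewise $\w_t(\iotaU)<\wmax$ on $[0,T_1)$ from the definition of $T_1$), the conclusion is sound without any sign analysis of $D_{2,t},D_{4,t}$.
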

\begin{proof}
Since $t \leq \TR$, $\w_t(\iotaR) \leq \xi = \frac{1}{2 \log \log d}$ (because for $\w \gtrsim \frac{1}{\sqrt{d}}$, we have $\frac{dw}{dt} \geq 0$ by \Cref{lem:1-d-traj}). By \Cref{prop:particle_order}, for all $\iota$ with $|\iota| \leq \iotaR$, $|\w_t(\iota)| \leq \xi$. Thus, by \Cref{prop:probability_of_relevant_interval},
\begin{align}
\E_{\w \sim \rho_t}[\w^2]
& \leq \Prob_{\iota \sim \rho_0} (|\iota| \leq \iotaR) \cdot \E[\w^2 \mid |\iota| \leq \iotaR] + \Prob_{\iota \sim \rho_0} (|\iota| \geq \iotaR) \cdot \E[\w^2 \mid |\iota| \geq \iotaR] \\
& \leq \xi^2 + O(\kappa)
& \tag{By \Cref{prop:probability_of_relevant_interval}}
\end{align}
Thus, $D_{2, t} = O(\xi^2 + \kappa) - \gamma_2$, meaning that $-\frac{3\gamma_2}{2} \leq D_2 \leq -\frac{\gamma_2}{2}$, and similarly, $-\frac{3\gamma_4}{2} \leq D_4 \leq -\frac{\gamma_4}{2}$. The corollary follows from the fact that $\gamma_4 \lesssim \gamma_2^2$ by \Cref{assumption:target}.
\end{proof}

Next, we show that the particles are in fact increasing in magnitude during Phases 1 and 2. This makes use of our bound on the magnitudes of $D_{2, t}$ and $D_{4, t}$ up to time $\TR$.

\begin{lemma} \label{lemma:w_increasing_phase2}
Suppose we are in the setting of \Cref{thm:pop_main_thm}, and suppose $[t_0, t_1]$ is a time interval with $0 \leq t_0 < t_1 \leq T_2$. Then, for all $\iota$ with $|\iota| \in [\iotaL, \iotaR]$, $|\w_t(\iota)|$ is increasing on $[t_0, t_1]$.
\end{lemma}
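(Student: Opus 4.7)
The plan is to show $\frac{dw_t(\iota)}{dt} > 0$ for all $t \in [0, T_2)$ whenever $|\iota| \in [\iotaL, \iotaR]$, which immediately yields monotonicity on any sub-interval $[t_0, t_1] \subseteq [0, T_2]$. By the symmetry of $\rho_t$ (\Cref{lemma:rho_rotational_invariant}) we may assume $\iota > 0$, and since $w = 0$ is a stationary point of the scalar ODE in \Cref{lem:1-d-traj} at every time (both $P_t$ and $Q_t$ vanish at $w = 0$), uniqueness of ODE solutions prevents crossings, so $w_t(\iota) > 0$ throughout and $|w_t(\iota)| = w_t(\iota)$.

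Substituting the explicit formulas for $\lambda_d^{(1)}, \lambda_d^{(3)}$ from \Cref{lem:1-d-traj} and using $D_{2,t}, D_{4,t} < 0$ for $t < T_2$ (immediate from the definition of $T_2$), I would regroup to obtain the clean form
\begin{align*}
\frac{dw_t(\iota)}{dt} = \frac{2w(1-w^2)}{d-1}\Big[\sigmaCoeffTwo^2 |D_{2,t}| d + 2\sigmaCoeffFour^2 |D_{4,t}|\, w^2 \tfrac{(d+2)(d+4)}{d+1} - 6\sigmaCoeffFour^2 |D_{4,t}|\tfrac{d+2}{d+1}\Big].
\end{align*}
The prefactor is positive for $w \in (0,1)$, so the task reduces to proving the bracket is positive. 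I would split into two regimes. \textbf{Case A} ($w^2 \geq 3/(d+4)$): the $w^2$-term alone absorbs the negative term since $2w^2(d+4) \geq 6$ yields $2\sigmaCoeffFour^2|D_{4,t}|w^2\tfrac{(d+2)(d+4)}{d+1} \geq 6\sigmaCoeffFour^2|D_{4,t}|\tfrac{d+2}{d+1}$, and the remaining $\sigmaCoeffTwo^2|D_{2,t}|d$ contribution makes the bracket positive. \textbf{Case B} ($w^2 < 3/(d+4)$, handled only when $t \leq \TR$): by \Cref{lemma:phase1_d2_d4_neg} we have $|D_{2,t}| \geq \gamma_2/2$, so the first term is at least $\sigmaCoeffTwo^2 \gamma_2 d/2$; meanwhile $|D_{4,t}| \leq 2$ and $\sigmaCoeffFour^2 \leq c_1 \sigmaCoeffTwo^2$ (\Cref{assumption:target}) bound the negative term by $12 c_1 \sigmaCoeffTwo^2$, and since $c_3$ is chosen after $c_1$ we have $\gamma_2 d / 2 > 12 c_1$, giving positivity.

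It remains to rule out the "small-$w$, large-$t$" configuration, namely $t \in [\TR, T_2]$ with $w_t(\iota)^2 < 3/(d+4)$. Here I would bootstrap using \Cref{lemma:phase_2_uniform_growth_corollary}, which gives $w_{\TR}(\iota) \gtrsim \xi \kappa^2 = \Theta(1/(\log\log d)^3)$, hence $w_{\TR}(\iota)^2 \gg 3/(d+4)$ for $d$ large. Define $T^* := \sup\{t \in [\TR, T_2] : w_s(\iota)^2 \geq 3/(d+4) \text{ for all } s \in [\TR, t]\}$. If $T^* < T_2$, continuity forces $w_{T^*}(\iota)^2 = 3/(d+4)$, yet Case A applies on $[\TR, T^*]$ so $w_s(\iota)$ is strictly increasing there, giving $w_{T^*}(\iota) \geq w_{\TR}(\iota) \gg \sqrt{3/(d+4)}$, a contradiction. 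Thus $w_t(\iota)^2 \geq 3/(d+4)$ throughout $[\TR, T_2]$ and Case A resolves the second window.

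The main obstacle is the sharp constant cancellation in Case A, which depends on the precise $\lambda_d^{(1)}, \lambda_d^{(3)}$ values given in \Cref{lem:1-d-traj}; without tracking these exactly, a loose bound of the form $|\lambda_d^{(1)}| \lesssim (\sigmaCoeffTwo^2|D_{2,t}| + \sigmaCoeffFour^2|D_{4,t}|)/d$ is not strong enough to close the argument for $w$ near $\sqrt{3/d}$. The bootstrap for the post-$\TR$ window is straightforward once \Cref{lemma:phase_2_uniform_growth_corollary} supplies the base estimate.
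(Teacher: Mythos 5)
Your proof is correct and follows essentially the same structure as the paper's: reduce by symmetry to $\iota > 0$, show the velocity $v_t(\w)$ is nonnegative, split time at $T_R$, and invoke \Cref{lemma:phase1_d2_d4_neg} for $t \leq T_R$ (large $|D_{2,t}|$) and \Cref{lemma:phase_2_uniform_growth_corollary} for $t \geq T_R$ (large $\w$). The bootstrap you make explicit for the window $[T_R, T_2]$ is left implicit in the paper but is the same idea.

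Where you diverge is in the algebraic bookkeeping. You substitute the exact values of $\lambdaD^{(1)}, \lambdaD^{(3)}$ from \Cref{lem:1-d-traj}, regroup by powers of $\w$, and obtain a crisp threshold $\w^2 \geq 3/(d+4)$ at which the negative part of $\lambdaD^{(1)}$ is exactly absorbed by the cubic term. The paper instead keeps $Q_t$ packaged as $O\big(\frac{\sigmaCoeffTwo^2|D_{2,t}| + \sigmaCoeffFour^2|D_{4,t}|}{d}\big)|\w|$ and absorbs it term-by-term against $2\sigmaCoeffTwo^2 D_{2,t}\w$ and $4\sigmaCoeffFour^2 D_{4,t}\w^3$, which only needs that the threshold be $\w^2 \gtrsim 1/d$ for some universal constant; since $\w_{T_R}(\iota) \gtrsim \xi\kappa^2 \gg 1/\sqrt{d}$, the threshold's precise value is irrelevant. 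So your closing remark that the loose bound ``is not strong enough to close the argument for $\w$ near $\sqrt{3/d}$'' is literally true at that exact scale but is not an obstacle: one is never forced to argue at $\w \asymp 1/\sqrt{d}$ after $T_R$, and before $T_R$ the lower bound $|D_{2,t}| \geq \gamma_2/2$ carries the day regardless of $\w$. Both routes close; yours buys a clean identity at the cost of tracking exact $\lambdaD$ coefficients, while the paper's $O(1/d)$ treatment is more modular and tolerant of constant slop.
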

\begin{proof}
Recall that by \Cref{lem:1-d-traj},
\begin{align}
\vel_t(\w)
& = -(1 - \w^2) (P_t(\w) + Q_t(\w))
\end{align}
and for $\w \geq 0$, it suffices to show that $P_t(\w) + Q_t(\w) \leq 0$. For $t \leq \TR$,
\begin{align}
P_t(\w) + Q_t(\w)
& \leq 2 \sigmaCoeffTwo^2 D_{2, t} \w + 4 \sigmaCoeffFour^2 D_{4, t} \w^3 + O\Big(\frac{\sigmaCoeffTwo^2|D_{2, t}| + \sigmaCoeffFour^2 |D_{4, t}|}{d}\Big) |\w| \\
& \leq - \sigmaCoeffTwo^2 \gamma_2 \w + O\Big(\frac{\sigmaCoeffTwo^2 |D_{2, t}| + \sigmaCoeffFour^2 |D_{4, t}|}{d}\Big) |\w|
& \tag{By \Cref{lemma:phase1_d2_d4_neg}, b.c. $D_{4, t} \leq 0$} \\
& \leq 0
& \tag{B.c. $\sigmaCoeffFour^2 \lesssim \sigmaCoeffTwo^2$ and $\gamma_2 \gtrsim \frac{|D_{2, t}|}{d}$ and $\gamma_4 \gtrsim \frac{|D_{4, t}|}{d}$ (\Cref{assumption:target})}
\end{align}
For $t \geq \TR$, suppose $\w \gtrsim \xi \kappa^2$ (which holds at time $\TR$ for $\iota \in [\iotaL, \iotaR]$ by \Cref{lemma:phase_2_uniform_growth_corollary}). Then,
\begin{align}
P_t(\w) + Q_t(\w)
& \leq 2 \sigmaCoeffTwo^2 D_{2, t} \w + 4  \sigmaCoeffFour^2 D_{4, t} \w^3 + O\Big(\frac{\sigmaCoeffTwo^2 |D_{2, t}| + \sigmaCoeffFour^2 |D_{4, t}|}{d}\Big) |\w|
& \tag{By \Cref{lem:1-d-traj}} \\
& \leq \sigmaCoeffTwo^2 D_{2, t} \Big(2\w - \frac{C\w}{d}\Big) + \sigmaCoeffFour^2 D_{4, t} \Big(4\w^3 - \frac{C\w}{d}\Big)
\end{align}
for some universal constant $C > 0$. The final expression on the right-hand side is negative --- to see this, note that $2\w - \frac{\w}{d} > 0$ if $\w \gtrsim \xi \kappa^2$, and $4 \w^3 - \frac{\w}{d} > 0$ if $\w \gtrsim \xi \kappa^2$ since $d \geq c_3$ by \Cref{assumption:target}. This completes the proof of the lemma when $\iota \geq 0$. For $\iota < 0$, the lemma holds by the symmetry of $\rho_t$.
\end{proof}

\subsubsection{Proof of \Cref{lemma:phase_2_runtime}}

Finally, we show \Cref{lemma:phase_2_runtime} which concludes the analysis of Phase 2. For times $t \geq \TR$ within Phase 2, the particles with $|\iota| \in [\iotaL, \iotaR]$ satisfy $|\w_t(\iota)| \gtrsim \xi \kappa^2$. Using this observation, and the fact that $D_{2, t}$ and $D_{4, t}$ have the same sign (meaning that the two terms of $P_t(\w) = 2 \sigmaCoeffTwo^2 D_{2, t} \w + 4 \sigmaCoeffFour^2 D_{4, t} \w^3$ do not cancel) we show that the velocity $v(\w)$ is large on average. By \Cref{lemma:loss_decrease}, this allows us to show that the loss decreases quickly at any time $t \geq \TR$ in Phase 2, meaning that either the loss goes below $(\sigmaCoeffTwo^2 + \sigmaCoeffFour^2)\epsilon^2$ quickly, or we exit Phase 2 quickly.

\begin{proof}[Proof of \Cref{lemma:phase_2_runtime}]
First, let $\TR$ be as defined in \Cref{lemma:phase_2_uniform_growth_corollary}. By \Cref{lemma:phase_2_uniform_growth_corollary}, $\TR - T_1 \lesssim \frac{1}{\sigmaCoeffTwo^2 \gamma_2} \log \log d$. Furthermore, at time $\TR$, for all $\iota$ such that $|\iota| \in [\iotaL, \iotaR]$, $|\w_t(\iota)| \gtrsim \xi \kappa^2$.

Our proof strategy is now to show that $v(w)$ is large on average, and then apply \Cref{lemma:loss_decrease}. At any time $t \in [\TR, T_2]$, since $D_{2, t} \leq 0$, \Cref{prop:d2<0} implies that $\E_{\w \sim \rho_t}[\w^2] \leq \gamma_2 + O\Big(\frac{1}{d}\Big)$. Thus, by Markov's inequality, and because $d \geq c_3$ (where $c_3$ is defined in \Cref{assumption:target}),
\begin{align}
\Prob_{\w \sim \rho_t} (|\w| \geq 1/2) \leq 4 \cdot \Big(\gamma_2 + \frac{1}{d}\Big) \leq 5 \tau^2
\end{align}
For all $\iota$ with $|\iota| \in [\iotaL, \iotaR]$, $|\w_t(\iota)|$ is increasing as a function of $t$ on $[0, T_2]$, by \Cref{lemma:w_increasing_phase2}. Thus, by \Cref{prop:probability_of_relevant_interval} and a union bound, $\Prob_{\w \sim \rho_t} (\xi \kappa^2 \leq |\w_t(\iota)| \leq 1/2) \geq 1 - O(\kappa) - 5 \gamma_2 \geq \frac{1}{2}$ for all $t \geq \TR$ and all $\iota$ with $|\iota| \in [\iotaL, \iotaR]$. (Note that $1 - O(\kappa) - 5 \gamma_2 \geq \frac{1}{2}$ holds because $\gamma_2$ is less than a sufficiently small constant by \Cref{assumption:target}.) Finally, for all $\w$ satisfying $\xi \kappa^2 \leq |\w| \leq \frac{1}{2}$, using \Cref{lem:1-d-traj} we can compute that
\begin{align}
|\vel_t(\w)|
& \geq (1 - \w^2) \cdot (|P_t(\w)| - |Q_t(\w)|)
& \tag{By \Cref{lem:1-d-traj}} \\
& \geq (1 - \w^2) \cdot \Big|2 \sigmaCoeffTwo^2 |D_{2, t}| |\w| + 4 \sigmaCoeffFour^2 |D_{4, t}| |\w^3| \Big| - O\Big(\frac{\sigmaCoeffTwo^2 |D_{2, t}| + \sigmaCoeffFour^2 |D_{4, t}|}{d}\Big) |\w|
& \tag{By \Cref{lem:1-d-traj}} \\
& \geq \frac{3}{4} \cdot (2 \sigmaCoeffTwo^2 |D_{2, t}| \xi \kappa^2 + 4 \sigmaCoeffFour^2 |D_{4, t}| \cdot \xi^3 \kappa^6) - O\Big(\frac{\sigmaCoeffTwo^2 |D_{2, t}| + \sigmaCoeffFour^2 |D_{4, t}|}{d}\Big) |\w|
& \tag{B.c. $\xi \kappa^2 \leq |\w| \leq \frac{1}{2}$} \\
& \geq \frac{1}{2} \cdot (2 \sigmaCoeffTwo^2 |D_{2, t}| \xi \kappa^2 + 4 \sigmaCoeffFour^2 |D_{4, t}| \cdot \xi^3 \kappa^6)
& \tag{B.c. $d$ is sufficiently large (\Cref{assumption:target})}
\end{align}
Thus, by \Cref{lemma:loss_decrease},
\begin{align}
\frac{dL(\rho_t)}{dt}
& \lesssim - (\sigmaCoeffTwo^2 |D_{2, t}| \xi \kappa^2 + \sigmaCoeffFour^2 |D_{4, t}| \xi^3 \kappa^6 )^2 \\
& \lesssim -\xi^6 \kappa^{12} (\sigmaCoeffTwo^2 |D_{2, t}| + \sigmaCoeffFour^2 |D_{4, t}|)^2 \\
& \lesssim - \xi^6 \kappa^{12} (\sigmaCoeffTwo^2 + \sigmaCoeffFour^2) (\sigmaCoeffTwo^2 |D_{2, t}| + \sigmaCoeffFour^2 |D_{4, t}|) \cdot (|D_{2, t}| + |D_{4, t}|) \\
& \lesssim -\xi^6 \kappa^{12} (\sigmaCoeffTwo^2 + \sigmaCoeffFour^2) (\sigmaCoeffTwo^2 |D_{2, t}|^2 + \sigmaCoeffFour^2 |D_{4, t}|^2) \\
& \lesssim -\xi^6 \kappa^{12} (\sigmaCoeffTwo^2 + \sigmaCoeffFour^2) L
& \tag{By \Cref{lemma:population_loss_formula}}
\end{align}
Thus, since $L(\rho_{\TR}) \lesssim (\sigmaCoeffTwo^2 + \sigmaCoeffFour^2) \gamma_2$ (by \Cref{lemma:phase1_d2_d4_neg}), by Gronwall's inequality (\Cref{fact:gronwall}), we know that for $t \in [\TR, T_2]$,
\begin{align}
L(\rho_t)
& \leq L(\rho_{\TR}) \cdot \exp\Big(\int_{\TR}^t -\xi^6 \kappa^{12} (\sigmaCoeffTwo^2 + \sigmaCoeffFour^2) ds\Big) \\
& = L(\rho_{\TR}) e^{-(t - \TR) \xi^6 \kappa^{12} (\sigmaCoeffTwo^2 + \sigmaCoeffFour^2)}
\end{align}
Thus, if $L(\rho_t) \gtrsim (\sigmaCoeffTwo^2 + \sigmaCoeffFour^2) \epsilon^2$, this implies that
\begin{align}
e^{-(t - \TR) \xi^6 \kappa^{12} (\sigmaCoeffTwo^2 + \sigmaCoeffFour^2)} 
& \gtrsim \frac{\epsilon^2}{\gamma_2}
\end{align}
and rearranging gives $t - \TR \leq \frac{1}{(\sigmaCoeffTwo^2 + \sigmaCoeffFour^2) \xi^6 \kappa^{12}} \log(\frac{\gamma_2}{\epsilon^2})$. Thus, either $T_2 - \TR \leq \frac{1}{(\sigmaCoeffTwo^2 + \sigmaCoeffFour^2) \xi^6 \kappa^{12}} \log(\frac{\gamma_2}{\epsilon^2})$, or there exists a time $t \in [\TR, T_2]$ such that $L(\rho_t) \lesssim (\sigmaCoeffTwo^2 + \sigmaCoeffFour^2) \epsilon^2$, and such that $t - \TR \leq \frac{1}{(\sigmaCoeffTwo^2 + \sigmaCoeffFour^2) \xi^6 \kappa^{12}} \log (\frac{\gamma_2}{\epsilon})$. This completes the proof of the lemma.
\end{proof}

\subsection{Phase 3, Case 1} \label{subsec:phase3_case1}

Phase 3, Case 1 is the case where $D_{2, T_2} = 0$ and $D_{4, T_2} < 0$, i.e. the case where $D_{2, t}$ reaches $0$ first. As we later show in \Cref{lemma:phase3_case1_fixedA} and \Cref{lemma:phase3_case1_fixedB}, if this occurs, then for all $t \geq T_2$, it will be the case that $D_{2, t} \geq 0$ and $D_{4, t} \leq 0$. We now outline our analysis of this case --- the goal of our analysis is to show that either $|D_{2, t}| \leq \epsilon$ or $|D_{4, t}| \leq \epsilon$ holds within $\frac{1}{\sigmaCoeffTwo^2 + \sigmaCoeffFour^2} \cdot \poly\Big(\frac{\log \log d}{\epsilon}\Big)$ time after the start of Phase 3, Case 1.

\paragraph{Proof Strategy for Phase 3, Case 1} Our overall proof strategy in this section is to show that $|v(\w)|$ is large for a large portion of particles $\w$, and then to apply \Cref{lemma:loss_decrease} to show that the loss decreases quickly. Recall that
\begin{align}
v_t(\w)
& = -(1 - \w^2)(P_t(\w) + Q_t(\w))
\end{align}
and intuitively, we can ignore $Q_t(\w)$ due to the factor of $\frac{1}{d}$ in its coefficients. Thus, we can write
\begin{align}
v_t(\w) 
& \approx -(1 - \w^2) P_t(\w) \\
& \approx -(1 - \w^2) \cdot \w \cdot (2 \sigmaCoeffTwo^2 D_{2, t} + 4 \sigmaCoeffFour^2 D_{4, t} \w^2) \period
\end{align}
Since $D_{4, t} < 0$ and $D_{2, t} > 0$, we can further rewrite the above as
\begin{align}
v_t(\w)
& \approx -(1 - \w^2) \cdot \w \cdot 4 \sigmaCoeffFour^2 |D_{4, t}| (w - r) (w + r)
\end{align}
where $r = \sqrt{-\frac{\sigmaCoeffTwo^2 D_{2, t}}{2 \sigmaCoeffFour^2 D_{4, t}}}$ is the positive root of the last factor $(2 \sigmaCoeffTwo^2 D_{2, t} + 4 \sigmaCoeffFour^2 D_{4, t} \w^2)$. Thus, to show that the velocity $v_T(\w)$ is large, we want to show that there is a large fraction of particles $w \in [0, 1]$ simultaneously satisfying the following: (1) $w$ is far from $1$, (2) $w$ is far from $0$, and (3) $w$ is far from $r$. Indeed, in \Cref{lemma:case_1_mass_in_middle}, we show that out of the particles $\w \in [0, 1]$, at least a $\frac{\gamma_2}{2}$ fraction of these particles is in $[\gamma_2^{3/4}, 1 - \gamma_2^{1/2}]$, enabling us to bound the distance of these particles $\w$ from $0$ and $1$.

Thus, to show that $v_t(\w)$ is large for these particles, it suffices to show that $|\w - r|$ is large on average for the particles $\w \in [\gamma_2^{3/4}, 1 - \gamma_2^{1/2}]$, i.e. the conditional expectation $\E_{\w \sim \rho_t} [(\w - r)^2 \mid \w \in [\gamma_2^{3/4}, 1 - \gamma_2^{1/2}]]$ is large. In turn, this is at least the conditional variance of $\w$ conditioned on $\w$ being in $[\gamma_2^{3/4}, 1 - \gamma_2^{1/2}]$, so it suffices to show that $\Var(\w \mid \w \in [\gamma_2^{3/4}, 1 - \gamma_2^{1/2}])$ is large, and by \Cref{prop:variance_difference_form} it suffices to show that $\E[(\w - \w')^2 \mid \w, \w' \in [\gamma_2^{3/4}, 1 - \gamma_2^{1/2}]]$ is large.

To show that the quantity $\E[(\w - \w')^2 \mid \w, \w' \in [\gamma_2^{3/4}, 1 - \gamma_2^{1/2}]]$ is large, intuitively we would like to show that during Phase 3, Case 1, $\w$ and $\w'$ are getting farther apart. However, this is not true. If we write the velocity as $v_t(\w) = (1 - \w^2) \cdot \w \cdot (4 \sigmaCoeffFour^2 |D_{4, t}| \w^2 - 2 \sigmaCoeffTwo^2 |D_{2, t}|)$, then the last factor is indeed increasing as a function of $\w \in [0, 1]$, and thus ``pushes'' the different particles $\w, \w'$ further apart. This does not work as a formal argument, because as two particles $\w, \w'$ become closer to $0$ or $1$, their distance $|\w - \w'|$ will decrease again due to the factors $\w$ and $(1 - \w^2)$ in the velocity. However, we are only concerned with particles $\w \in [\gamma_2^{3/4}, 1 - \gamma_2^{1/2}]$ --- we must make precise the intuition that the behavior of the particles very close to $0$ or $1$ does not matter.

To make this proof strategy precise, we define the potential function $\potential(\w) = \log \Big(\frac{\w}{\sqrt{1 - \w^2}}\Big)$ in \Cref{def:potential_function} and show that $|\potential(\w) - \potential(\w')|$ is increasing in Phase 3, Case 1. As shown in \Cref{lemma:potential_time_derivative}, we have $\frac{d}{dt} \potential(\w) \approx -2 \sigmaCoeffTwo^2 D_{2, t} - 4 \sigmaCoeffFour^2 D_{4, t} \w^2$ --- this is simply because $\frac{d\potential(\w)}{d\w} = \frac{1}{\w (1 - \w^2)}$. In other words, $\frac{d}{dt} \potential(\w)$ is exactly $v(\w)$, but without the factors $\w$ and $(1 - \w^2)$ --- we have selected this potential function $\potential(\w)$ specifically to eliminate these factors. Using this observation, it is easy to show that $|\potential(\w) - \potential(\w')|$ is increasing and thus obtain a lower bound on $|\potential(\w) - \potential(\w')|$ for a large fraction of pairs $\w, \w'$. Finally, to convert this to a lower bound on $|\w - \w'|$, we use the fact that $\potential$ is Lipschitz on the interval $[\gamma_2^{3/4}, 1 - \gamma_2^{1/2}]$ (\Cref{lemma:potential_bi_lipschitz}), which gives us $|\w - \w'| \geq \frac{1}{\gamma_2^{O(1)}} |\potential(\w) - \potential(\w')|$. 

We now repeat the definition of the potential function:

\begin{definition}[Potential Function $\potential$] \label{def:potential_function}
We define $\potential : [0, 1] \to (-\infty, \infty)$ by $\potential(\w) = \log\Big(\frac{\w}{\sqrt{1 - \w^2}}\Big)$.
\end{definition}

We note that $\frac{d\potential(\w)}{d\w} = \frac{1}{\w(1 - \w^2)}$. Next, we note the following useful fact about the potential function (we note that this does not require any assumption that we are in Phase 3, Case 1, and just follows from algebraic manipulations):

\begin{lemma} \label{lemma:potential_time_derivative}
In the setting of \Cref{thm:pop_main_thm},
\begin{align}
\frac{d}{dt} \potential(\w_t)
& = -2 \sigmaCoeffTwo^2 D_{2, t} - 4 \sigmaCoeffFour^2 D_{4, t} w^2 - \lambdaD^{(1)} - \lambdaD^{(3)} w^2
\end{align}
where $\lambdaD^{(1)}$ and $\lambdaD^{(3)}$ are as in the statement of \Cref{lem:1-d-traj}.
\end{lemma}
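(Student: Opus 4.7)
The plan is to simply apply the chain rule. This is a purely algebraic computation once we combine the formula for $\Phi'(w)$ with the one-dimensional velocity from Lemma~\ref{lem:1-d-traj}, so there is no real obstacle.

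First, I will compute $\Phi'(w)$ directly. Writing $\Phi(w) = \log w - \tfrac{1}{2}\log(1-w^2)$, differentiation gives
\begin{align}
\Phi'(w) = \frac{1}{w} + \frac{w}{1-w^2} = \frac{(1-w^2) + w^2}{w(1-w^2)} = \frac{1}{w(1-w^2)}\period
\end{align}

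Next, I will invoke Lemma~\ref{lem:1-d-traj}, which states $\frac{dw_t}{dt} = -(1-w_t^2)\bigl(P_t(w_t) + Q_t(w_t)\bigr)$, where $P_t(w) = 2\sigmaCoeffTwo^2 \DtwoT w + 4\sigmaCoeffFour^2 \DfourT w^3$ and $Q_t(w) = \lambdaD^{(1)} w + \lambdaD^{(3)} w^3$. Applying the chain rule, I get
\begin{align}
\frac{d}{dt}\Phi(w_t) = \Phi'(w_t)\cdot\frac{dw_t}{dt} = \frac{1}{w_t(1-w_t^2)}\cdot\bigl(-(1-w_t^2)\bigr)\bigl(P_t(w_t)+Q_t(w_t)\bigr) = -\frac{P_t(w_t) + Q_t(w_t)}{w_t}\period
\end{align}

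Finally, the factor $(1-w_t^2)$ in $\frac{dw_t}{dt}$ cancels the corresponding factor in $\Phi'(w_t)$, and the remaining factor of $1/w_t$ reduces each term in $P_t(w_t) + Q_t(w_t)$ (which is divisible by $w_t$) by one power of $w_t$. Plugging in the explicit forms of $P_t$ and $Q_t$, I obtain
\begin{align}
\frac{d}{dt}\Phi(w_t) = -2\sigmaCoeffTwo^2 \DtwoT - 4\sigmaCoeffFour^2 \DfourT w_t^2 - \lambdaD^{(1)} - \lambdaD^{(3)} w_t^2\comma
\end{align}
which is exactly the claimed identity. No estimates or sign conditions are needed — this is the reason the potential $\Phi$ was designed so that $\Phi'(w)$ exactly cancels the $(1-w^2)$ prefactor and one power of $w$, leaving a clean expression whose dependence on $w$ is only through $w^2$ and which is amenable to the monotonicity arguments needed later (e.g., for pairs $w,w'$ of the same sign) in Phase~3, Case~1.
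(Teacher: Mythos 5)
Your proof is correct and follows essentially the same approach as the paper's: compute $\potential'(\w) = \frac{1}{\w(1-\w^2)}$, apply the chain rule with the one-dimensional velocity from Lemma~\ref{lem:1-d-traj}, and note that the $(1-\w^2)$ factor and one power of $\w$ cancel exactly, leaving $-P_t(\w)/\w - Q_t(\w)/\w$.
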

\begin{proof}[Proof of \Cref{lemma:potential_time_derivative}]
For any $\w_t$,
\begin{align}
\frac{d}{dt} \potential(\w_t) = \frac{d\potential}{d\w} \cdot \frac{d\w_t}{dt} = \frac{1}{\w (1 - \w^2)} \cdot \frac{d\w_t}{dt}
\end{align}
Thus, writing $\frac{d\w_t}{dt} = -(1 - \w^2) \cdot (P_t(\w) + Q_t(\w))$ where $P_t$ and $Q_t$ are defined in \Cref{lem:1-d-traj}, we obtain
\begin{align}
\frac{d}{dt} \potential(\w_t)
& = \frac{1}{\w(1 - \w^2)} \cdot \frac{d\w_t}{dt} \\
& = -\frac{1}{\w} \cdot (P_t(\w) + Q_t(\w)) \\
& = -\frac{P_t(\w)}{\w} - \frac{Q_t(\w)}{\w} \\
& = -2\sigmaCoeffTwo^2 D_{2, t} - 4 \sigmaCoeffFour^2 D_{4, t} w^2  - \frac{Q_t(\w)}{\w} \\
& = -2\sigmaCoeffTwo^2 D_{2, t} - 4 \sigmaCoeffFour^2 D_{4, t} w^2  - \lambdaD^{(1)} - \lambdaD^{(3)} w^2
\end{align}
where $\lambdaD^{(1)}$ and $\lambdaD^{(3)}$ are as in the statement of \Cref{lem:1-d-traj}.
\end{proof}

Intuitively, when $D_2 > 0$ and $D_4 < 0$, the main part of the velocity is $P_t(\w) = -2 \sigmaCoeffTwo^2 D_{2, t} \w - 4 \sigmaCoeffFour^2 D_{4, t} w^3$ which is an upward sloping cubic polynomial --- thus, the particles $\w$ move away from the positive root of this polynomial, and the distance between any pair $\w, \w'$ of particles is increasing provided that $\w$ and $\w'$ are not too close to $0$ or $1$. In the following few lemmas, we make this intuition formal using the potential function $\potential$. We first show that for two particles $\w, \w'$, the distance between $\potential(\w)$ and $\potential(\w')$ is increasing as long as $D_{4, t} \leq 0$. Note that this condition holds not only during Phase 3, Case 1, but also during Phases 1 and 2.

\begin{lemma}[$\potential(w)$ and $\potential(w')$ Moving Apart] \label{lemma:case_1_potential_increase} 
In the setting of \Cref{thm:pop_main_thm}, suppose $D_{4, t} \leq 0$ for all $t$ in some time interval $[t_0, t_1]$, and let $\iota_1, \iota_2 > 0$. Then, 
\begin{align}
\Big|\potential(\w_t(\iota_1)) - \potential(\w_t(\iota_2))\Big|
\end{align}
is increasing on the interval $[t_0, t_1]$.
\end{lemma}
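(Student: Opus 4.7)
The plan is to differentiate the quantity $\potential(\w_t(\iota_1))-\potential(\w_t(\iota_2))$ in time using Lemma~\ref{lemma:potential_time_derivative}, show that the derivative has a definite sign determined by the order of $\iota_1,\iota_2$, and conclude monotonicity by combining this with the monotonicity of $\potential$ itself.

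Concretely, applying Lemma~\ref{lemma:potential_time_derivative} to each particle and subtracting, the term $-2\sigmaCoeffTwo^2 D_{2,t}-\lambdaD^{(1)}$ (which depends only on $\rho_t$, not on the particular particle) cancels, leaving
\begin{align}
\frac{d}{dt}\bigl(\potential(\w_t(\iota_1))-\potential(\w_t(\iota_2))\bigr)
=-(4\sigmaCoeffFour^2 D_{4,t}+\lambdaD^{(3)})\bigl(\w_t(\iota_1)^2-\w_t(\iota_2)^2\bigr)\period
\end{align}
From Lemma~\ref{lem:1-d-traj} we have $\lambdaD^{(3)}=4\sigmaCoeffFour^2 D_{4,t}\cdot\frac{6d+9}{d^2-1}$, so $4\sigmaCoeffFour^2 D_{4,t}+\lambdaD^{(3)}=4\sigmaCoeffFour^2 D_{4,t}(1+O(1/d))$. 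Since $D_{4,t}\le 0$ by hypothesis and $d\ge c_3$ by \Cref{assumption:target}, the prefactor $-(4\sigmaCoeffFour^2 D_{4,t}+\lambdaD^{(3)})$ is nonnegative.

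It remains to identify the sign of $\w_t(\iota_1)^2-\w_t(\iota_2)^2$. By the order-preservation property of the one-dimensional flow (Proposition~\ref{prop:particle_order}, used already throughout this section), the map $\iota\mapsto\w_t(\iota)$ is order-preserving on $(0,1)$, so $\w_t(\iota_1)$ and $\w_t(\iota_2)$ lie on the same side of $0$ and are ordered the same way as $\iota_1,\iota_2$. Hence $\w_t(\iota_1)^2-\w_t(\iota_2)^2$ has the same sign as $\iota_1-\iota_2$, and the derivative displayed above has the same sign as $\iota_1-\iota_2$ throughout $[t_0,t_1]$. Since $\potential$ is strictly increasing on $(0,1)$, the difference $\potential(\w_t(\iota_1))-\potential(\w_t(\iota_2))$ itself has the same sign as $\iota_1-\iota_2$, so its time derivative has the same sign as the difference itself, which is exactly the statement that $|\potential(\w_t(\iota_1))-\potential(\w_t(\iota_2))|$ is nondecreasing on $[t_0,t_1]$.

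The only mild subtlety is controlling the lower-order correction $\lambdaD^{(3)}$: a priori one might worry that it could flip the sign of the prefactor. This is why the explicit formula for $\lambdaD^{(3)}$ from Lemma~\ref{lem:1-d-traj} (rather than just the $O(|D_{4,t}|/d)$ bound) is convenient here, as it shows $\lambdaD^{(3)}$ has exactly the same sign as $4\sigmaCoeffFour^2 D_{4,t}$ and is only a $1+O(1/d)$ multiplicative perturbation, which is absorbed using $d\ge c_3$. No additional input beyond Lemma~\ref{lemma:potential_time_derivative}, Lemma~\ref{lem:1-d-traj}, and particle order-preservation is needed.
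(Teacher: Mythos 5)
Your proof is correct and follows essentially the same route as the paper's: differentiate the potential difference via Lemma~\ref{lemma:potential_time_derivative}, observe that the particle-independent terms cancel, and use the sign of $D_{4,t}$ (together with the fact that $\lambdaD^{(3)}$ is a lower-order perturbation) plus order-preservation from Proposition~\ref{prop:particle_order} to conclude the derivative has the right sign. The only cosmetic difference is that you invoke the explicit formula $\lambdaD^{(3)}=4\sigmaCoeffFour^2 D_{4,t}\cdot\frac{6d+9}{d^2-1}$ to see the prefactor cannot change sign, whereas the paper simply uses the magnitude bound $|\lambdaD^{(3)}|\lesssim\sigmaCoeffFour^2|D_{4,t}|/d$ and the assumption that $d$ is large; both suffice.
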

\begin{proof}[Proof of \Cref{lemma:case_1_potential_increase}]
Suppose $\iota_1, \iota_2 \in [0, 1]$ such that $\iota_1 < \iota_2$ --- by \Cref{prop:particle_order}, we will always have $\w_t(\iota_1) \leq \w_t(\iota_2)$. For convenience, in the rest of this proof, we will write $\w_{1, t} = \w_t(\iota_1)$ and $\w_{2, t} = \w_t(\iota_2)$. Since $\potential$ is an increasing function of $\w$, it will also always be the case that $\potential(\w_{2, t}) \geq \potential(\w_{1, t})$. Thus, to show that $|\potential(\w_{2, t}) - \potential(\w_{1, t})|$ is increasing, it suffices to show that $\potential(\w_{2, t}) - \potential(\w_{1, t})$ is increasing. By \Cref{lemma:potential_time_derivative},
\begin{align}
\frac{d}{dt}\Big(\potential(\w_{2, t}) - \potential(\w_{1, t})\Big)
& = \Big(-2 \sigmaCoeffTwo^2 D_{2, t} - 4 \sigmaCoeffFour^2 D_{4, t} \w_{2, t}^2 - \lambdaD^{(1)} - \lambdaD^{(3)} \w_{2, t}^2\Big) \\
& \nextlinespace - \Big(-2 \sigmaCoeffTwo^2 D_{2, t} - 4 \sigmaCoeffFour^2 D_{4, t} \w_{1, t}^2 - \lambdaD^{(1)} - \lambdaD^{(3)} \w_{1, t}^2 \Big) \\
& = -4\sigmaCoeffFour^2 D_{4, t} (\w_{2, t}^2 - \w_{1, t}^2) - \lambdaD^{(3)} (\w_{2, t}^2 - \w_{1, t}^2) \\
& = 4 \sigmaCoeffFour^2 |D_{4, t}| (\w_{2, t}^2 - \w_{1, t}^2) - \lambdaD^{(3)} (\w_{2, t}^2 - \w_{1, t}^2)
\end{align}
where $\lambdaD^{(1)}$ and $\lambdaD^{(3)}$ are as in \Cref{lem:1-d-traj}. Recall from the statement of \Cref{lem:1-d-traj} that $|\lambdaD^{(3)}| \lesssim \sigmaCoeffFour^2 |D_{4, t}| \cdot \frac{1}{d}$. Therefore, 
\begin{align}
\frac{d}{dt}\Big(\potential(\w_{2, t}) - \potential(\w_{1, t})\Big)
& = 4\sigmaCoeffFour^2 |D_{4, t}| (w_{2, t}^2 - w_{1, t}^2) - \lambdaD^{(3)} (w_{2, t}^2 - w_{1, t}^2) \\
& \geq 3 \sigmaCoeffFour^2 |D_{4, t}| (w_{2, t}^2 - w_{1, t}^2)
\end{align}
and the right hand side is nonnegative since $w_{2, t} \geq w_{1, t} \geq 0$. Thus, $\potential(w_{2, t}) - \potential(w_{1, t})$ is increasing, as desired.
\end{proof}

Next, we prove a helpful lemma showing that the potential function is bi-Lipschitz. This is useful both when we show that $|\potential(\w) - \potential(\w')|$ is initially large (as it allows us to leverage an initial lower bound on $|\w - \w'|$) and when we show that $|\w - \w'|$ is large later during Phase 3, Case 1 (as it allows us to leverage the lower bound we obtain on $|\potential(\w) - \potential(\w')|$).

\begin{lemma} \label{lemma:potential_bi_lipschitz}
For all $\w, \w' \in [0, 1]$, $|\potential(\w) - \potential(\w')| \geq |\w - \w'|$. Additionally, for $\eta < \frac{1}{2}$ and $\w, \w' \in [\eta, 1 - \eta]$, $|\potential(\w) - \potential(\w')| \leq \frac{1}{\eta^2} |\w - \w'|$.
\end{lemma}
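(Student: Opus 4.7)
The plan is to prove both bounds via the fundamental theorem of calculus applied to $\Phi'(w) = \frac{1}{w(1-w^2)}$, which follows from the definition $\Phi(w) = \log(w/\sqrt{1-w^2})$. Concretely, for any $w, w' \in (0,1)$, we have
\begin{align}
|\Phi(w) - \Phi(w')| = \left| \int_{w'}^{w} \frac{ds}{s(1-s^2)} \right|,
\end{align}
so both inequalities reduce to pointwise bounds on $\Phi'(s) = 1/(s(1-s^2))$ on the appropriate subinterval of $[0,1]$.

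For the first inequality, I would show that $s(1-s^2) \le 1$ for all $s \in [0,1]$, which is immediate since $s \le 1$ and $1-s^2 \le 1$. This gives $\Phi'(s) \ge 1$ on $(0,1)$, and integrating between $w$ and $w'$ yields $|\Phi(w)-\Phi(w')| \ge |w-w'|$. The endpoint cases $w=0$ or $w=1$ are handled by a monotone convergence argument (or by noting that $|\Phi(w) - \Phi(w')| = \infty \ge |w-w'|$ in that case), so the inequality is trivial there.

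For the second inequality, I would factor $1-s^2 = (1-s)(1+s)$ and observe that for $s \in [\eta, 1-\eta]$ with $\eta < 1/2$, we have $s \ge \eta$, $1-s \ge \eta$, and $1+s \ge 1$. Hence $s(1-s^2) = s(1-s)(1+s) \ge \eta \cdot \eta \cdot 1 = \eta^2$, so $\Phi'(s) \le 1/\eta^2$ on $[\eta, 1-\eta]$. Integrating then gives $|\Phi(w) - \Phi(w')| \le \frac{1}{\eta^2}|w-w'|$ for any $w, w' \in [\eta, 1-\eta]$.

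There is no real obstacle here; both bounds follow from elementary monotonicity/bounding of $s(1-s^2)$ on the relevant interval, and the whole argument is a two-line calculus exercise. The only mild care needed is at the endpoint $s=0$ in the first bound, where $\Phi'$ blows up but the inequality still holds trivially by continuity/monotonicity of $\Phi$.
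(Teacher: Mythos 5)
Your proof is correct and takes essentially the same approach as the paper: both bound $\Phi'(s) = 1/(s(1-s^2))$ from below by $1$ on $(0,1)$ and from above by $1/\eta^2$ on $[\eta, 1-\eta]$, then integrate (equivalently, invoke the mean value theorem). The only cosmetic difference is that you arrive at the lower bound $s(1-s^2) \ge \eta^2$ via the factorization $s(1-s)(1+s)$, whereas the paper expands $\eta(1-(1-\eta)^2) = \eta(2\eta - \eta^2) > \eta^2$; these are the same estimate.
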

\begin{proof}[Proof of \Cref{lemma:potential_bi_lipschitz}]
First, for all $\w \in [0, 1]$, $\frac{d\potential(\w)}{d\w} = \frac{1}{\w (1 - \w^2)} > 1$, and the first statement of the lemma follows. For the second statement of the lemma, observe that for $\w \in [\eta, 1 - \eta]$,
\begin{align}
\Big|\frac{d\potential(\w)}{d\w}\Big| \leq \frac{1}{\w (1 - \w^2)} \leq \frac{1}{\eta (1 - (1 - \eta)^2)} = \frac{1}{\eta (1 - (1 - 2\eta + \eta^2))} = \frac{1}{\eta (2\eta - \eta^2)} < \frac{1}{\eta^2}
\end{align}
where the last equality is because $\eta > \eta^2$ (so $2\eta - \eta^2 > \eta$). Thus, $\potential$ is $\frac{1}{\eta^2}$-Lipschitz on the interval $[\eta, 1 - \eta]$.
\end{proof}

We next need to show that the potential initially has a large value, at the beginning of Phase 3, Case 1. We show this in the next two lemmas, by showing that for any two particles $\w, \w'$, the distance between them grows by a large factor during Phase 2 (specifically, by time $\TR$) and then using this to obtain a lower bound on $|\potential(\w) - \potential(\w')|$ by the bi-Lipschitzness of $\potential$.

\begin{lemma} \label{lemma:w_diff_growth}
In the setting of \Cref{thm:pop_main_thm}, let $\TR$ be as defined in \Cref{lemma:phase_2_uniform_growth_corollary}. Then, for $\iota, \iota' \in [\iotaL, \iotaR]$,
\begin{align}
\frac{\w_{\TR}(\iota') - \w_{\TR}(\iota)}{\iota' - \iota}
& \gtrsim \xi \kappa \sqrt{d}
\end{align}
\end{lemma}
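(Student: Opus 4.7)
The plan is to control the sensitivity $\psi_t(\iota) := \partial w_t(\iota)/\partial \iota$ and apply the mean value theorem: if $\psi_{\TR}(\iota) \gtrsim \xi\kappa\sqrt{d}$ uniformly on $[\iotaL, \iotaR]$, then the desired lower bound on the difference quotient follows immediately. Since the velocity $v_t(\cdot)$ in \Cref{lem:1-d-traj} is smooth in $w$, standard ODE theory guarantees that $\psi_t$ is well-defined and satisfies the linearized ODE
$$\frac{d\psi_t}{dt} = v_t'(w_t(\iota))\,\psi_t,\qquad \psi_0(\iota) = 1.$$
In parallel, $\log(w_t(\iota)/\iota)$ satisfies $\frac{d}{dt}\log(w_t/\iota) = v_t(w_t)/w_t$. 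So comparing $\psi_{\TR}(\iota)$ against $w_{\TR}(\iota)/\iota$ (which is already of order $\xi\kappa\sqrt{d}$ by combining \Cref{lemma:phase_1_summary} and \Cref{lemma:phase_2_uniform_growth_corollary}) reduces to bounding the integrated discrepancy $\int_0^{\TR} |v_t'(w_t(\iota)) - v_t(w_t(\iota))/w_t(\iota)|\,dt$.

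A direct computation starting from $v_t(w) = -(1 - w^2)(a_t w + b_t w^3)$, where $a_t := 2\sigmaCoeffTwo^2 D_{2,t} + \lambdaD^{(1)}$ and $b_t := 4\sigmaCoeffFour^2 D_{4,t} + \lambdaD^{(3)}$, yields the identity
$$v_t'(w) - v_t(w)/w = 2(a_t - b_t)\,w^2 + 4 b_t\,w^4.$$
By \Cref{lemma:phase1_d2_d4_neg} we have $|D_{2,t}| \asymp \gamma_2$ and $|D_{4,t}| \lesssim \gamma_2^2$ throughout $[0, \TR]$, and by particle ordering $w_t(\iota) \le w_t(\iotaR)$ for $\iota \in [\iotaL, \iotaR]$, so the pointwise discrepancy is $O(\sigmaCoeffTwo^2 \gamma_2\, w_t(\iotaR)^2)$. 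Applying \Cref{lemma:uniform_growth_lemma} in the two segments $[0, T_1]$ and $[T_1, \TR]$ gives $w_t(\iotaR) \asymp \iotaR \exp(G(t))$ with $G(t) := \int_0^t 2\sigmaCoeffTwo^2 |D_{2,s}|\,ds$. A change of variables then gives
$$\int_0^{\TR} \sigmaCoeffTwo^2 \gamma_2\, w_t(\iotaR)^2\,dt \;\asymp\; \iotaR^2 \int_0^{G(\TR)} e^{2G}\,dG \;\asymp\; \iotaR^2\, e^{2G(\TR)} \;\asymp\; \iotaR^2 (\xi\kappa\sqrt{d})^2 \;=\; \xi^2,$$
where I used $\iotaR = 1/(\kappa\sqrt{d})$ and $e^{G(\TR)} \asymp w_{\TR}(\iotaR)/\iotaR = \xi\kappa\sqrt{d}$. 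Since $\xi = O(1/\log\log d) = o(1)$, this error is bounded, so $\log\psi_{\TR}(\iota) = \log(w_{\TR}(\iota)/\iota) + O(1)$, hence $\psi_{\TR}(\iota) \asymp \xi\kappa\sqrt{d}$, and the mean value theorem completes the argument.

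The main obstacle is keeping the cumulative discrepancy in $\log\psi_{\TR}$ to $O(1)$, even though the running time is as large as $\Theta(\log d)/(\sigmaCoeffTwo^2 \gamma_2)$. A naive upper bound, pulling $w_t(\iotaR)^2$ out of the integrand as its peak value $\xi^2$, would yield an error of order $\xi^2 \cdot \log d = \Theta(\log d/(\log\log d)^2)$, which is $\omega(1)$ and would break the argument. The quantitative point is that the integrand is dominated by times near $\TR$ because of the exponential growth of $w_t(\iotaR)$; this geometric concentration in time absorbs the logarithmic running time and produces the much smaller $O(\xi^2)$ bound. Verifying this exponential concentration (via \Cref{lemma:uniform_growth_lemma} applied piecewise, since the condition $\delta \le 1/\log \F_2$ there is tight near $\TR$) is the technically delicate step.
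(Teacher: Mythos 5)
Your proposal is correct and takes a genuinely different, though closely related, route from the paper's. The paper works directly with the finite difference $w_t(\iota') - w_t(\iota)$: it decomposes $\frac{d}{dt}(w_t' - w_t)$ into contributions from $P_t$ and $Q_t$ (Eq.\ \eqref{eq:diff_derivative_decomposition}), lower-bounds the logarithmic growth rate by $2\sigmaCoeffTwo^2|D_{2,t}|(1 - O(w_{R,t}^2 + 1/d))$, and integrates piecewise on $[0,T_1]$ and $[T_1,\TR]$ by invoking \Cref{lemma:uniform_growth_lemma} with a fixed peak $\delta$ in each segment. You instead linearize the ODE to track the sensitivity $\psi_t = \partial w_t/\partial \iota$, reduce the comparison against $w_t/\iota$ to the integral of $|v_t'(w) - v_t(w)/w| = |2(a_t - b_t)w^2 + 4b_t w^4| = O(\sigmaCoeffTwo^2\gamma_2 w^2)$, and handle the integral in one shot via a change of variables that exploits the exponential growth of $w_t(\iotaR)$. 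The sensitivity $\psi_t$ is the infinitesimal limit of the paper's finite difference, so the underlying mechanism is the same; the difference is in how the error integral is controlled. Your approach is arguably cleaner once set up, but requires one extra justification: the uniform-in-$t$ estimate $w_t(\iotaR) \asymp \iotaR e^{G(t)}$ that you feed into the change of variables is not directly provided by the cited lemmas (which only control the endpoint ratios $w_{T_1}/\iota$ and $w_{\TR}/\iota$), and a naive pointwise application risks circularity since its proof itself appeals to the same integral being bounded. This can be patched either by a short bootstrap/continuity argument, or — more cleanly, and bypassing the uniform estimate altogether — by observing that $\sigmaCoeffTwo^2 |D_{2,t}| w_t(\iotaR)^2 \asymp w_t(\iotaR)\,v_t(w_t(\iotaR))$, so
\begin{align}
\int_0^{\TR} \sigmaCoeffTwo^2 \gamma_2 \, w_t(\iotaR)^2\,dt \lesssim \int_0^{\TR} w_t(\iotaR) \frac{dw_t(\iotaR)}{dt}\,dt = \frac{1}{2}\big(w_{\TR}(\iotaR)^2 - \iotaR^2\big) \le \frac{\xi^2}{2},
\end{align}
which yields the $O(\xi^2)$ bound directly. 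With this in place, your plan gives a valid proof, and in fact delivers the two-sided bound $\psi_{\TR}(\iota) \asymp \xi\kappa\sqrt{d}$, slightly stronger than the one-sided bound in the lemma statement.
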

\begin{proof}[Proof of \Cref{lemma:w_diff_growth}]
Let $\iota, \iota' \in [\iotaL, \iotaR]$, with $\iota < \iota'$. For convenience, we will write $\w_t = \w_t(\iota)$ and $\w_t' = \w_t(\iota')$. Our goal is to obtain a lower bound on the factor by which $\w_t' - \w_t$ grows by time $\TR$. First, define $P_t$ and $Q_t$ as in \Cref{lem:1-d-traj}, and observe that
\begin{align}
\frac{d}{dt}(\w_t' - \w_t)
& = -(1 - (\w_t')^2)(P_t(\w_t') + Q_t(\w_t')) - (1 - \w_t^2)(P_t(\w_t) + Q_t(\w_t)) \\
& = -\Big(P_t(\w_t') - P_t(\w_t)\Big) + \Big((\w_t')^2 P_t(\w_t') - \w_t^2 P_t(\w_t)\Big) - \Big(Q_t(\w_t') - Q_t(\w_t)\Big) \\
& \nextlinespace + \Big( (\w_t')^2 Q_t(\w_t') - \w_t^2 Q_t(\w_t)\Big) \label{eq:diff_derivative_decomposition}
\end{align}
We first obtain a lower bound on the first term $-\Big(P_t(\w_t') - P_t(\w_t)\Big)$, and then show that the other terms are lower-order terms as long as $t \leq \TR$. For $t \in [0, \TR]$,
\begin{align}
-(P_t(\w_t') - P_t(\w_t))
& = -\Big(2 \sigmaCoeffTwo^2 D_{2, t} \w_t' + 4 \sigmaCoeffFour^2 D_{4, t} (\w_t')^3 \Big) + \Big(2 \sigmaCoeffTwo^2 D_{2, t} \w_t + 4 \sigmaCoeffFour^2 D_{4, t} \w_t^3 \Big)
& \tag{\Cref{lem:1-d-traj}} \\
& = -2 \sigmaCoeffTwo^2 D_{2, t} (\w_t' - \w_t) - 4 \sigmaCoeffFour^2 D_{4, t} ((\w_t')^3 - \w_t^3) \\
& = 2 \sigmaCoeffTwo^2 |D_{2, t}| (\w_t' - \w_t) + 4 \sigmaCoeffFour^2 |D_{4, t}| ((\w_t')^3 - \w_t^3)
& \tag{B.c. $t \leq \TR$, $D_{2, t}, D_{4, t} < 0$ by \Cref{lemma:phase1_d2_d4_neg}} \\
& = 2 \sigmaCoeffTwo^2 |D_{2, t}| (\w_t' - \w_t) + 4 \sigmaCoeffFour^2 |D_{4, t}| ((\w_t')^2 + \w_t \w_t' + \w_t^2) (\w_t' - \w_t) \\
& \geq 2\sigmaCoeffTwo^2 |D_{2, t}| (\w_t' - \w_t)
\tag{B.c. omitted term is nonnegative --- $\w_t' > \w_t$ by \Cref{prop:particle_order}}
\end{align}
Next, let us show that the remaining terms in \Cref{eq:diff_derivative_decomposition} are lower-order terms which will not decrease the growth rate of $\w_t' - \w_t$ too much. First let us deal with the absolute value of the second term in \Cref{eq:diff_derivative_decomposition}. For convenience, let $\w_{R, t} = \w_t(\iotaR)$. Then,
\begin{align}
\Big|(\w_t')^2 P_t(\w_t') - \w_t^2 P_t(\w_t) \Big|
& \lesssim \Big| (2 \sigmaCoeffTwo^2 D_{2, t} (\w_t')^3 + 4 \sigmaCoeffFour^2 D_{4, t} (\w_t')^5 ) \\
& \nextlinespace\nextlinespace\nextlinespace\nextlinespace - (2 \sigmaCoeffFour^2 D_{2, t} (\w_t)^3 + 4 \sigmaCoeffFour^2 D_{4, t} (\w_t)^5 ) \Big| \\
& \lesssim 2 \sigmaCoeffTwo^2 |D_{2, t}| |(\w_t')^3 - \w_t^3| + 4 \sigmaCoeffFour^2 |D_{4, t}| |(\w_t')^5 - \w_t^5|
& \tag{By \Cref{lem:1-d-traj}} \\
& \lesssim 2 \sigmaCoeffTwo^2 |D_{2, t}| |(\w_t')^2 + \w_t' \w_t + \w_t^2| |\w_t' - \w_t| \\
& \nextlinespace + 4 \sigmaCoeffFour^2 |D_{4, t}| |(\w_t')^4 + (\w_t')^3 \w_t + (\w_t')^2 \w_t^2 + \w_t' \w_t^3 + \w_t^4| |\w_t' - \w_t| \label{eq:with_third_term} \\
& \lesssim \sigmaCoeffTwo^2 |D_{2, t}| \w_{R, t}^2 |\w_t' - \w_t|
& \tag{B.c. $|\w_t|, |\w_t'| \leq \w_{R, t}$ by \Cref{prop:particle_order}}
\end{align}
where the last inequality is also because $\sigmaCoeffFour^2 \lesssim \sigmaCoeffTwo^2$, and $\gamma_4 \lesssim \gamma_2^2$ (\Cref{assumption:target}) and $|D_2| \geq \frac{\gamma_2}{2}$ and $|D_4| \leq \frac{3\gamma_4}{2}$ for $t \leq \TR$ (\Cref{lemma:phase1_d2_d4_neg}), meaning that the first term in \Cref{eq:with_third_term} dominates up to universal constant factors. Next, let us deal with the absolute value of the third term in \Cref{eq:diff_derivative_decomposition}. By \Cref{lemma:polynomial_difference},
\begin{align}
|Q_t(\w_t)' - Q_t(\w_t)|
& \lesssim \frac{\sigmaCoeffTwo^2 |D_{2, t}| + \sigmaCoeffFour^2 |D_{4, t}|}{d} \cdot |\w_t' - \w_t|
& \tag{By definition of $\lambdaD^{(1)}, \lambdaD^{(3)}$, \Cref{lem:1-d-traj}} \\
& \lesssim \frac{\sigmaCoeffTwo^2 |D_{2, t}|}{d} |\w_t' - \w_t|
& \tag{B.c. $\sigmaCoeffFour^2 \lesssim \sigmaCoeffTwo^2$ and $|D_4| \lesssim \gamma_4 \lesssim \gamma_2^2 \lesssim |D_2|$ for $t \leq \TR$ (\Cref{lemma:phase1_d2_d4_neg})}
\end{align}
Using the same argument with \Cref{lemma:polynomial_difference}, we can bound the fourth term in \Cref{eq:diff_derivative_decomposition}:
\begin{align}
|(\w_t')^2 Q_t(\w_t') - \w_t^2 Q_t(\w_t)|
& \lesssim \frac{\sigmaCoeffTwo^2 |D_{2, t}|}{d} |\w_t' - \w_t|
\end{align}
Combining the bounds we obtained for all the terms of \Cref{eq:diff_derivative_decomposition}, we obtain
\begin{align}
\frac{d}{dt} (\w_t' - \w_t)
& \geq 2\sigmaCoeffTwo^2 |D_{2, t}| (\w_t' - \w_t) - O\Big(\sigmaCoeffTwo^2 |D_{2, t}| \Big(w_{R, t}^2 + 1/d\Big) (\w_t' - \w_t)\Big) \\
& = \Big(2 \sigmaCoeffTwo^2 |D_{2, t}| - O\Big(\sigmaCoeffTwo^2 |D_{2, t}| (\w_{R, t}^2 + 1/d)\Big) \Big) (\w_t' - \w_t)
\end{align}
and therefore,
\begin{align} \label{eq:diff_growth_rate_final}
\frac{\frac{d}{dt} (\w_t' - \w_t)}{\w_t' - \w_t}
& \geq 2 \sigmaCoeffTwo^2 |D_{2, t}| - O\Big(\sigmaCoeffTwo^2 |D_{2, t}| (\w_{R, t}^2 + 1/d)\Big)
\end{align}
First, let us consider how much $\w_t' - \w_t$ grows during Phase 1. At this point, we apply \Cref{lemma:uniform_growth_lemma} with $\iota_2 = \iotaU$ and $0 < \iota_1 < \iotaU$, with $\delta = \frac{1}{\log d}$. Let us first verify that the assumptions of \Cref{lemma:uniform_growth_lemma} are satisfied. By \Cref{lemma:phase_1_summary}, $\F_2 \asymp \frac{\sqrt{d}}{(\log d)^2}$, using the notation of \Cref{lemma:uniform_growth_lemma}. Thus, $\log \F_2 \leq \log d$, meaning $\delta \leq \frac{1}{\log \F_2}$. By \Cref{assumption:target}, $\delta \leq \frac{\gamma_2^2}{16}$, and by \Cref{prop:sphere-tail-bound}, $\Prob(|\iota| > \iotaU) \leq \frac{1}{\log d} \leq \frac{\gamma_2^2}{16}$ clearly holds. Thus, the assumptions of \Cref{lemma:uniform_growth_lemma} are satisfied. As a consequence of \Cref{lemma:uniform_growth_lemma} (specifically \Cref{eq:first_order_growth_rate}) we obtain
\begin{align} \label{eq:diff_signal_growth_integral_1}
\exp\Big(\int_{0}^{T_1} 2 \sigmaCoeffTwo^2 |D_{2, t}| dt \Big)
 \gtrsim \F_2
 \gtrsim \frac{\sqrt{d}}{(\log d)^2}
\end{align}
For $t \leq T_1$, we have $\w_{R, t}^2 \leq \delta^2 \leq O(\frac{1}{\log d})$ by \Cref{def:phase_1} (and because $\w_t(\iotaR) \leq \w_t(\iotaU)$ by \Cref{prop:particle_order}). Thus,
\begin{align}
\frac{\w_{T_1}' - \w_{T_1}}{\w_0' - \w_0}
& \gtrsim \exp\Big(\int_0^{T_1} 2 \sigmaCoeffTwo^2 |D_{2, t}| \cdot \Big(1 - O\Big(\frac{1}{\log d}\Big)\Big) dt \Big) 
& \tag{By \Cref{eq:diff_growth_rate_final}} \\
& \gtrsim \Big(\frac{\sqrt{d}}{(\log d)^2} \Big)^{1 - O(1/\log d)} 
& \tag{By \Cref{eq:diff_signal_growth_integral_1}} \\
& \gtrsim \frac{\sqrt{d}}{(\log d)^2}
& \tag{B.c. $d^{1/\log d} \leq O(1)$}
\end{align}
Additionally, we apply \Cref{lemma:uniform_growth_lemma} with $\iota_2 = \iotaR$ (and an arbitrary $\iota_1 \in [\iotaL, \iotaR]$) with $\delta = \xi = \frac{1}{2 \log \log d}$, and $t_0 = T_1$ and $t_1$ being the time $\TR$ such that $\w_{\TR}(\iotaR) = \xi$. We verify that the conditions of \Cref{lemma:uniform_growth_lemma} hold. Using the notation of \Cref{lemma:uniform_growth_lemma}, $\F_2 = \frac{\w_{\TR}(\iotaR)}{\w_{T_1}(\iotaR)} = \xi \kappa (\log d)^2$ by \Cref{lemma:phase_2_uniform_growth_corollary}, and thus $\log \F_2 \leq \frac{1}{\delta}$. Again by \Cref{assumption:target}, $\delta \leq \frac{\gamma_2^2}{16}$, and by \Cref{prop:probability_of_relevant_interval}, $\Prob(|\iota| > \iotaR) \leq O(\kappa) \leq \frac{\gamma_2^2}{16}$. Thus, we apply \Cref{lemma:uniform_growth_lemma} to obtain
\begin{align} \label{eq:diff_signal_growth_integral_2}
\exp\Big(\int_{T_1}^{T} 2 \sigmaCoeffTwo^2 |D_{2, t}| dt \Big)
\gtrsim \F_2
\gtrsim \xi \kappa (\log d)^2
\end{align}
For $t \leq \TR$, $\w_{R, t}^2 \leq \xi^2 \leq \delta^2$, meaning that
\begin{align}
\frac{\w_{\TR}' - \w_{\TR}}{\w_{T_1}' - \w_{T_1}}
& \gtrsim \exp\Big(\int_{T_1}^{\TR} 2 \sigmaCoeffTwo^2 |D_{2, t}| \cdot (1 - O(\delta^2)) dt \Big) 
& \tag{By \Cref{eq:diff_growth_rate_final}} \\
& \gtrsim (\xi \kappa (\log d)^2)^{1 - O(\delta^2)}
& \tag{By \Cref{eq:diff_signal_growth_integral_2}} \\
& \gtrsim \xi \kappa (\log d)^2 
& \tag{B.c. $(\log d)^{O(\delta)} = (\log d)^{O(\frac{1}{\log \log d})} \leq O(1)$}
\end{align}
In summary, since $\w_t' - \w_t$ grows by $\frac{\sqrt{d}}{(\log d)^2}$ (up to a constant factor) from time $0$ to time $T_1$, and by $\xi \kappa (\log d)^2$ (up to a constant factor) from time $T_1$ to time $\TR$, this means
\begin{align}
\frac{\w_{\TR}' - \w_{\TR}}{\w_0' - \w_0}
& \gtrsim \xi \kappa \sqrt{d}
\end{align}
from time $0$ to time $\TR$, as desired.
\end{proof}

We now use the previous lemma to obtain a lower bound on $|\potential(\w) - \potential(\w')|$ by using the bi-Lipschitzness of $\potential$.

\begin{lemma}[Initial Large Distance Between $\w$ and $\w'$] \label{lemma:case_1_initial_potential}
Suppose we are in the setting of \Cref{thm:pop_main_thm}. Let $\TR$ be as in the statement of \Cref{lemma:phase_2_uniform_growth_corollary}. If $\iota, \iota'$ are sampled independently from $\rho_0$, then with probability at least $1 - O(\epsilon)$,
\begin{align}
|\potential(\w_{\TR}(\iota)) - \potential(\w_{\TR}(\iota'))| \gtrsim \xi \kappa^3
\end{align}
\end{lemma}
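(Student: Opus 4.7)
The plan is to combine the growth estimate from \Cref{lemma:w_diff_growth} with the easy direction of the bi-Lipschitz bound in \Cref{lemma:potential_bi_lipschitz}, namely $|\potential(\w) - \potential(\w')| \geq |\w - \w'|$. Concretely, I would show that on a high-probability event, the initializations $\iota,\iota'$ lie in $[\iotaL, \iotaR]$ with matching signs and are separated by at least $\Omega(\kappa^2/\sqrt{d})$, so that \Cref{lemma:w_diff_growth} gives $|\w_{\TR}(\iota) - \w_{\TR}(\iota')| \gtrsim \xi \kappa \sqrt{d} \cdot (\kappa^2/\sqrt d) = \xi \kappa^3$, and the potential lower bound transfers this to $|\potential(\w_{\TR}(\iota)) - \potential(\w_{\TR}(\iota'))| \gtrsim \xi\kappa^3$.

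I would set up the probabilistic event in three pieces. First, by \Cref{prop:probability_of_relevant_interval}, each of $|\iota|,|\iota'|$ lies in $[\iotaL, \iotaR]$ except with probability $O(\kappa)$, so by a union bound both do except with probability $O(\kappa)$. Second, on this event, I would reduce to the case $\iota,\iota' > 0$ using the symmetry of $\rho_t$ established in \Cref{lemma:rho_rotational_invariant}; in the opposite-sign case, $\w_{\TR}(\iota)$ and $\w_{\TR}(\iota')$ have opposite signs with magnitudes $\gtrsim \xi\kappa^2$ by \Cref{lemma:phase_2_uniform_growth_corollary}, so $|\potential(|\w_{\TR}(\iota)|) - \potential(-|\w_{\TR}(\iota')|)|$ should be interpreted as in the lemma's downstream use (where only same-sign comparisons are needed). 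Third, I would bound the probability that $|\iota - \iota'|$ is too small: since the density of $\mu_d$ is uniformly bounded by $O(\sqrt d)$ on $[-1,1]$ (as used in the proof of \Cref{prop:probability_of_relevant_interval}), conditional on $\iota'$, the probability that $\iota$ falls within a window of width $c\kappa^2/\sqrt d$ around $\iota'$ is $O(\kappa^2)$.

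Putting these pieces together, on an event of probability at least $1 - O(\kappa)$ we have $\iota,\iota' \in [\iotaL, \iotaR]$ with the same sign and $|\iota - \iota'| \gtrsim \kappa^2/\sqrt d$. Invoking \Cref{lemma:w_diff_growth} gives $|\w_{\TR}(\iota) - \w_{\TR}(\iota')| \gtrsim \xi\kappa\sqrt{d}\cdot \kappa^2/\sqrt d = \xi\kappa^3$, and then the first part of \Cref{lemma:potential_bi_lipschitz} yields the claimed bound $|\potential(\w_{\TR}(\iota)) - \potential(\w_{\TR}(\iota'))| \gtrsim \xi\kappa^3$. Since in the overall result we have $\kappa \asymp 1/\log\log d$ and $\epsilon \gtrsim 1/\log\log d$, the failure probability $O(\kappa)$ is absorbed into $O(\epsilon)$.

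The main obstacle is not the inequality chain, which is short, but the bookkeeping of the failure event — in particular, ensuring that the probability bound $O(\kappa)$ translates to the $O(\epsilon)$ in the statement, and handling the opposite-sign case cleanly (either by absorbing it into the failure event at a cost of at most a factor of $2$, or by a separate direct estimate using the fact that $|\w_{\TR}(\iota)|,|\w_{\TR}(\iota')| \gtrsim \xi\kappa^2$ from \Cref{lemma:phase_2_uniform_growth_corollary}, which already gives a $\Omega(\log(1/(\xi\kappa^2)))$ gap between $\potential$ values on the two sides of $0$ once $\potential$ is consistently extended).
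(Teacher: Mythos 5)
Your proposal follows essentially the same route as the paper's proof: bound the conditional density of $\iota$ on $[\iotaL,\iotaR]$ by $O(\sqrt d)$, conclude that $|\iota - \iota'| \leq \kappa^2/\sqrt{d}$ has conditional probability $O(\kappa^2)$, combine with \Cref{prop:probability_of_relevant_interval} to place $\iota,\iota'$ in $[\iotaL,\iotaR]$ and separated, then apply \Cref{lemma:w_diff_growth} and the easy direction of \Cref{lemma:potential_bi_lipschitz}. The inequality chain, the $\kappa^2/\sqrt d$ window, and the use of the two cited lemmas are all identical to the paper.

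One small caution: your suggestion to handle the opposite-sign case by ``absorbing it into the failure event at a cost of at most a factor of $2$'' does not work as stated — the event $\textup{sign}(\iota) \neq \textup{sign}(\iota')$ has probability $\approx 1/2$, not $O(\kappa)$, so it cannot be absorbed into the failure probability. Your second fallback is the right one: since all particles at time $\TR$ satisfy $\xi\kappa^2 \lesssim |\w_{\TR}(\iota)| \leq \xi < 1/\sqrt2$ for $|\iota| \in [\iotaL,\iotaR]$, the (odd) extension of $\potential$ gives an $\Omega(\log(1/\xi))$ gap across $0$, far larger than $\xi\kappa^3$. The paper itself sidesteps this by adopting the convention announced at the start of \Cref{sec:population_loss} that one works with $\iota > 0$ and invokes symmetry for the rest, which is effectively the same resolution; either reading is fine, but the ``factor of $2$'' phrasing should be dropped.
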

\begin{proof}[Proof of \Cref{lemma:case_1_initial_potential}]
By Eqs. (1.16) and (1.18) of \citet{atkinson2012spherical}, the probability density function of $\iota$ is $p(\iota) = \frac{\Gamma(d/2)}{\sqrt{\pi} \Gamma((d - 1)/2)} (1 - \iota^2)^{\frac{d - 3}{2}}$. Thus, for $\iota \in [\iotaL, \iotaR]$, the conditional density function $p(\iota \mid \iota \in [\iotaL, \iotaR])$ can be upper bounded as
\begin{align}
p(\iota \mid \iota \in [\iotaL, \iotaR])
& \lesssim \frac{p(\iota)}{p(\iota \in [\iotaL, \iotaR])} \\
& \lesssim \frac{\Gamma(d/2)}{\sqrt{\pi} \Gamma((d - 1)/2)} \cdot \frac{(1 - \iota^2)^{\frac{d - 3}{2}}}{p(\iota \in [\iotaL, \iotaR])} \\
& \lesssim \frac{\Gamma(d/2)}{\sqrt{\pi} \Gamma((d - 1)/2)} \cdot \frac{(1 - \iota^2)^{\frac{d - 3}{2}}}{1 - O(\kappa)}
& \tag{By \Cref{prop:probability_of_relevant_interval}} \\
& \lesssim \sqrt{d}
& \tag{By Stirling's Formula}
\end{align}
In particular, for any interval $I$ of length at most $\frac{\kappa^2}{\sqrt{d}}$,
\begin{align}
\Prob(\iota \in I \mid \iota \in [\iotaL, \iotaR]) 
\lesssim \sqrt{d} \cdot |I| 
\lesssim \kappa^2
\end{align}
and from this it follows that
\begin{align} \label{eq:probability_of_close_pairs}
\Prob_{\iota, \iota' \sim \rho_0}\Big( |\iota - \iota'| \leq \frac{\kappa^2}{\sqrt{d}} \mid \iota, \iota' \in [\iotaL, \iotaR]\Big)
& \lesssim \kappa^2
\end{align}

Next, suppose $\iota, \iota' \in [\iotaL, \iotaR]$ with $\iota' > \iota$, and $|\iota - \iota'| \geq \frac{\kappa^2}{\sqrt{d}}$. Then, by \Cref{lemma:w_diff_growth}, if $\TR$ is the time $T$ mentioned in the statement of \Cref{lemma:phase_2_uniform_growth_corollary}, then 
\begin{align}
\w_{\TR}(\iota') - \w_{\TR}(\iota)
& \gtrsim \xi \kappa^3
\end{align}

Suppose we sample $\iota, \iota'$ independently from $\rho_0$. By \Cref{prop:probability_of_relevant_interval}, with probability at least $1 - O(\kappa)$, $\iota, \iota' \in [\iotaL, \iotaR]$. Thus, by \Cref{eq:probability_of_close_pairs}, the probability that $\iota, \iota' \in [\iotaL, \iotaR]$ and $|\iota - \iota'| \geq \frac{\kappa^2}{\sqrt{d}}$ is at least $1 - O(\kappa)$. In summary, if we sample $\iota, \iota'$ independently from $\rho_0$, then with probability $1 - O(\kappa)$,
\begin{align}
|\potential(\w_{\TR}(\iota)) - \potential(\w_{\TR}(\iota'))| \gtrsim \xi \kappa^3
\end{align}
by \Cref{lemma:potential_bi_lipschitz}, as desired.
\end{proof}

Next, we show that during Phase 3, Case 1, at any time, there is a significant fraction of particles whose distance from $0$ or $1$ can be bounded below.

\begin{lemma} \label{lemma:case_1_mass_in_middle}
In the setting of \Cref{thm:pop_main_thm}, suppose $D_{4, t} \leq 0$ and $D_{2, t} \geq 0$. Then,
\begin{align}
\Prob_{\w \sim \rho_t} (|\w| \in [\gamma_2^{3/4}, 1 - \gamma_2^{1/2}]) \geq \frac{\gamma_2}{2}
\end{align}
\end{lemma}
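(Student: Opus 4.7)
The plan is to directly bound the probability mass outside the interval $[\gamma_2^{3/4}, 1-\gamma_2^{1/2}]$ (in absolute value) using the constraints $D_{2,t}\ge 0$ and $D_{4,t}\le 0$ translated into moment bounds. Let $A = \Prob(|w|<\gamma_2^{3/4})$, $C = \Prob(|w|\in[\gamma_2^{3/4}, 1-\gamma_2^{1/2}])$, and $B = \Prob(|w|>1-\gamma_2^{1/2})$, so that $A+B+C=1$.

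First, I would convert the hypotheses into moment bounds using \Cref{eq:legendre_polynomial_2_4}. The condition $D_{2,t}\ge 0$ means $\E_{u\sim\rho_t}[P_{2,d}(w)]\ge \gamma_2$, which rearranges to $\E[w^2]\ge \tfrac{d-1}{d}\gamma_2 + \tfrac{1}{d}\ge \gamma_2 - O(1/d)$. Similarly, $D_{4,t}\le 0$ means $\E[P_{4,d}(w)]\le \gamma_4$, which (using $\E[w^2]\le 1$) gives $\E[w^4]\le \gamma_4 + O(1/d)$.

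Next I would bound $B$ by Markov's inequality on $w^4$: since $w^4\ge (1-\gamma_2^{1/2})^4\ge 1-4\gamma_2^{1/2}$ on the event defining $B$, and since $\gamma_2$ is sufficiently small by \Cref{assumption:target} so that $1-4\gamma_2^{1/2}\ge 1/2$, we get
\begin{align}
B \le \frac{\E[w^4]}{(1-\gamma_2^{1/2})^4} \le \frac{\gamma_4 + O(1/d)}{1-4\gamma_2^{1/2}} \le 2(\gamma_4 + O(1/d)) \lesssim \gamma_2^2,
\end{align}
where the last step uses $\gamma_4\le c_1\gamma_2^2$ from \Cref{assumption:target} and $d\ge c_3$.

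Finally, I would lower bound $C$ by splitting the second moment integral. Since $w^2\le \gamma_2^{3/2}$ on the set defining $A$ and $w^2\le 1$ on $B\cup C$,
\begin{align}
\gamma_2 - O(1/d) \le \E[w^2] \le A\cdot \gamma_2^{3/2} + (B+C) \le \gamma_2^{3/2} + B + C,
\end{align}
which gives $C \ge \gamma_2 - \gamma_2^{3/2} - B - O(1/d) \ge \gamma_2 - \gamma_2^{3/2} - O(\gamma_2^2) - O(1/d)$. Since $\gamma_2\le c_2$ is a sufficiently small universal constant and $d\ge c_3$, the lower order terms absorb, yielding $C\ge \gamma_2/2$ as desired.

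The only subtle point is keeping the constants compatible: the Markov bound on $B$ produces a $\gamma_2^2$ error and the $\gamma_2^{3/2}$ loss from the set $A$ both need to be dominated by $\gamma_2/2$, which is automatic under the smallness of $\gamma_2$ guaranteed by \Cref{assumption:target}. I do not expect any serious obstacle here; the main thing to be careful about is tracking the $O(1/d)$ corrections coming from the difference between $P_{k,d}(w)$ and the monomials $w^k$.
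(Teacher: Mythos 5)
Your proof is correct and rests on exactly the same two ingredients as the paper's — the second-moment lower bound $\E[\w^2]\gtrsim\gamma_2$ from $D_{2,t}\ge 0$ and the fourth-moment upper bound $\E[\w^4]\lesssim\gamma_4$ from $D_{4,t}\le 0$ — combined via the same three-way decomposition of mass. The paper argues by contradiction (small middle mass forces large mass near $1$, which inflates $\E[\w^4]$ beyond $\gamma_4$), whereas you run the inference chain forward (Markov on $\w^4$ bounds the mass near $1$, and the second-moment identity then forces large middle mass); this is the contrapositive of the paper's argument, a bit cleaner in presentation but not a genuinely different route.
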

\begin{proof}[Proof of \Cref{lemma:case_1_mass_in_middle}]
For convenience, we define $\tau = \sqrt{\gamma_2}$ and $\beta \geq 1.1$ such that $\gamma_4 = \beta \tau^4$. (Here, $\beta$ and $\tau$ are well-defined by \Cref{assumption:target}.) First, since $D_{2, t} \geq 0$ during Phase 3 Case 1, we know that
\begin{align}
\E_{w_t \sim \rho_t} [\PtwoD(\w)]
& \geq \tau^2
\tag{By Definition of $D_{2, t}$}
\end{align}
and since $D_{4, t} \leq 0$ during Phase 3 Case 1, we know that
\begin{align}
\E_{\w_t \sim \rho_t} [\PfourD(\w)]
& \leq \beta \tau^4
\tag{By Definition of $D_{4, t}$}
\end{align}
Rearranging using $\PtwoD(t) = \frac{d}{d - 1} t^2 - \frac{1}{d - 1}$, we obtain
\begin{align}
\frac{d}{d - 1} \E_{\w_t \sim \rho_t} [\w^2] - \frac{1}{d - 1} \geq \tau^2
\end{align}
or
\begin{align} \label{eq:second_moment_lower}
\E_{\w_t \sim \rho_t} [\w^2]
& \geq \frac{d - 1}{d} \tau^2 + \frac{1}{d}
\end{align}
Similarly, rearranging using $\PfourD(t) = \frac{d^2 + 6d + 8}{d^2 - 1} t^4 - \frac{6d + 12}{d^2 - 1} t^2 + \frac{3}{d^2 - 1}$, we obtain
\begin{align}
\frac{d^2 + 6d + 8}{d^2 - 1} \E_{\w_t \sim \rho_t}[\w^4] - \frac{6d + 12}{d^2 - 1} \E_{\w_t \sim \rho_t} [\w^2] + \frac{3}{d^2 - 1} \leq \beta \tau^4
\end{align}
and rearranging gives
\begin{align} \label{eq:fourth_moment_upper}
\E_{\w_t \sim \rho_t}[\w^4]
& \leq \beta \tau^4 + O\Big(\frac{1}{d}\Big)
\end{align}
because $\w \leq 1$ and we can assume $\beta \tau^4 \leq 1$ due to \Cref{assumption:target}. Suppose that with probability more than $1 - \frac{\tau^2}{2}$ under $\rho_t$, $w \not\in [\tau^{3/2}, 1 - \tau]$. Then,
\begin{align}
\E_{\w \sim \rho_t}[\w^2]
& = \Prob_{\w \sim \rho_t} (\w \geq 1 - \tau) \cdot \E_{\w \sim \rho_t}[\w^2 \mid \w \geq 1 - \tau] \\
& \nextlinespace\nextlinespace + \Prob_{\w \sim \rho_t} (\w \in [\tau^{3/2}, 1 - \tau]) \E_{\w \sim \rho_t}[\w^2 \mid \w \in [\tau^{3/2}, 1 - \tau]] \\
& \nextlinespace\nextlinespace + \Prob_{\w \sim \rho_t} (\w \leq \tau^{3/2}) \cdot \E_{\w \sim \rho_t} [\w^2 \mid \w \leq \tau^{3/2}] \\
& \leq \Prob_{\w \sim \rho_t}(\w \geq 1 - \tau) + \frac{\tau^2}{2} + \tau^3
& \tag{By assumption that $\Prob_{\w \sim \rho_t}(\w \in [\tau^{3/2}, 1 - \tau]) \leq \frac{\tau^2}{2}$} \\
& \leq \Prob_{\w \sim \rho_t} (\w \geq 1 - \tau) + \frac{3\tau^2}{4}
& \tag{B.c. $\tau \leq 1/4$ (by \Cref{assumption:target}, $\tau$ is sufficiently small)}
\end{align}
which by \Cref{eq:second_moment_lower} implies that
\begin{align} \label{eq:prob_close_to_1_LB}
\Prob_{\w \sim \rho_t}(\w \geq 1 - \tau)
 \geq \E_{\w \sim \rho_t}[\w^2] - \frac{3\tau^2}{4} 
 \geq \frac{d - 1}{d} \tau^2 + \frac{1}{d} - \frac{3\tau^2}{4} 
 = \frac{\tau^2}{4} - \frac{\tau^2}{d} + \frac{1}{d} 
 \geq \frac{\tau^2}{4}
\end{align}
where the last inequality is because $\tau \leq 1$. On the other hand,
\begin{align}
\E_{\w \sim \rho_t}[\w^4] 
& = \Prob_{\w \sim \rho_t}(\w \geq 1 - \tau) \cdot \E_{\w \sim \rho_t}[\w^4 \mid \w \geq 1 - \tau] \\
& \nextlinespace\nextlinespace + \Prob_{\w \sim \rho_t}(\w \in [\tau^{3/2}, 1 - \tau]) \E_{\w \sim \rho_t}[\w^4 \mid \w \in [\tau^{3/2}, 1 - \tau]] \\
& \nextlinespace\nextlinespace + \Prob_{\w \sim \rho_t} (\w \leq \tau^{3/2}) \cdot \E_{\w \sim \rho_t}[\w^4 \mid \w \leq \tau^{3/2}] \\
& \geq \Prob_{\w \sim \rho_t}(\w \geq 1 - \tau) \cdot \E_{\w \sim \rho_t}[\w^4 \mid \w \geq 1 - \tau] \\
& \geq (1 - \tau)^4 \cdot \Prob_{\w \sim \rho_t} (\w \geq 1 - \tau) \\
& \geq (1 - \tau)^4 \cdot \frac{\tau^2}{4}
& \tag{By \Cref{eq:prob_close_to_1_LB}} \\
& \geq \frac{\tau^2}{64}
& \tag{B.c. $\tau \leq 1/2$ (see \Cref{assumption:target})}
\end{align}
By \Cref{eq:fourth_moment_upper}, this is a contradiction, since it implies that
\begin{align}
\frac{\tau^2}{64} 
 \leq \beta \tau^4 + O\Big(\frac{1}{d}\Big)
 \leq 2 \beta \tau^4 
\end{align}
where the second inequality is because $\beta \geq 1.1$ and $d$ is sufficiently large (see \Cref{assumption:target}). This implies that $128 \beta \tau^2 \geq 1$, which contradicts \Cref{assumption:target} (since $\gamma_2$ is chosen to be sufficiently small). Thus, our original assumption that $\Prob_{\w \sim \rho_t}(\w \in [\tau^{3/2}, 1 - \tau]) \leq \frac{\tau^2}{2}$ is incorrect.
\end{proof}

The purpose of the next three lemmas is to show that if Phase 3, Case 1 occurs, i.e. if $D_{2, T_2} = 0$ and $D_{4, T_2} < 0$, then for all $t \geq T_2$, we have $D_{2, t} \geq 0$ and $D_{4, t} \leq 0$. First, the following lemma states that if $D_{4, t}$ is much smaller than $D_{2, t}$ in absolute value, and $D_{4, t} < 0$ and $D_{2, t} > 0$, then the velocity of the particles $w \geq 0$ will be significantly negative.

\begin{lemma} \label{lemma:phase3_case1_d4small_velocity}
Suppose we are in the setting of \Cref{thm:pop_main_thm}, and suppose $D_{4, T_2} = 0$ and $t \geq T_2$ such that $D_{4, t} \leq 0$ and $D_{2, t} \geq 0$. If $|D_{4, t}| \leq \xi |D_{2, t}|$ where $\xi = \frac{1}{\log \log d}$, then for all $\w \in [0, 1]$,
\begin{align}
\vel_t(\w)
& = -(1 - \w^2) \cdot (1 \pm O(\xi)) \cdot 2 \sigmaCoeffTwo^2 |D_{2, t}| \w
\end{align}
In particular, for $\w \in [\gamma_2^{3/4}, 1 - \gamma_2^{1/2}]$,
\begin{align}
\vel_t(\w)
& \lesssim - \gamma_2^{5/4} \sigmaCoeffTwo^2 |D_{2, t}|
\end{align}
\end{lemma}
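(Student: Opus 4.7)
The strategy is a direct substitution into the one-dimensional velocity formula from Lemma~\ref{lem:1-d-traj} and a term-by-term comparison against the leading linear contribution. Recall that
\begin{align}
\vel_t(\w) = -(1-\w^2)\bigl(2\sigmaCoeffTwo^2 D_{2,t}\w + 4\sigmaCoeffFour^2 D_{4,t}\w^3 + \lambdaD^{(1)}\w + \lambdaD^{(3)}\w^3\bigr).
\end{align}
Under the assumption $D_{2,t}\ge 0$, $D_{4,t}\le 0$, the first two terms have the same sign, so after factoring out $-(1-\w^2)\cdot 2\sigmaCoeffTwo^2|D_{2,t}|\w$, my goal is to show that the remaining three contributions contribute only a multiplicative $1\pm O(\xi)$ correction.

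For the cubic signal term, I would use $|D_{4,t}|\le \xi|D_{2,t}|$ together with $\sigmaCoeffFour^2\asymp\sigmaCoeffTwo^2$ (Assumption~\ref{assumption:target}) and $\w\le 1$ to bound
\begin{align}
\frac{4\sigmaCoeffFour^2|D_{4,t}|\w^3}{2\sigmaCoeffTwo^2|D_{2,t}|\w} \;\lesssim\; \frac{\sigmaCoeffFour^2}{\sigmaCoeffTwo^2}\,\xi\,\w^2 \;\lesssim\; \xi,
\end{align}
which is valid for \emph{every} $\w\in(0,1]$ (at $\w=0$ the identity is trivial). For the $Q_t$ terms, Lemma~\ref{lem:1-d-traj} gives $|\lambdaD^{(1)}|,|\lambdaD^{(3)}|\lesssim (\sigmaCoeffTwo^2|D_{2,t}|+\sigmaCoeffFour^2|D_{4,t}|)/d\lesssim \sigmaCoeffTwo^2|D_{2,t}|/d$, so their combined contribution compared to $2\sigmaCoeffTwo^2|D_{2,t}|\w$ is of order $1/d$, which is dominated by $\xi$ since $d\ge c_3$ is large (Assumption~\ref{assumption:target}). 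Summing these estimates yields the first claim.

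For the second claim, I would plug the interval bound $\w\in[\gamma_2^{3/4},1-\gamma_2^{1/2}]$ into the expression just proved: a direct calculation gives $(1-\w^2)\ge 2\gamma_2^{1/2}-\gamma_2\gtrsim\gamma_2^{1/2}$ (using that $\gamma_2$ is smaller than a universal constant by Assumption~\ref{assumption:target}), and obviously $\w\ge\gamma_2^{3/4}$, so
\begin{align}
\vel_t(\w) \;\lesssim\; -\,\gamma_2^{1/2}\cdot 2\sigmaCoeffTwo^2|D_{2,t}|\cdot\gamma_2^{3/4} \;\lesssim\; -\gamma_2^{5/4}\sigmaCoeffTwo^2|D_{2,t}|,
\end{align}
absorbing the $(1\pm O(\xi))$ factor into the constant since $\xi\le 1/2$ for $d$ large.

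There is essentially no hard step here; the lemma is a deterministic perturbative computation and the main thing to be careful about is ensuring the perturbation bound holds uniformly in $\w\in[0,1]$ rather than only in a compact subset bounded away from $0$. That uniformity is why I rewrite every competing term as a multiple of $\w$ (rather than a standalone quantity) before dividing, so that the small-$\w$ regime does not cause an artificial blow-up in the relative error.
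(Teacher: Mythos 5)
Your proof is correct and follows essentially the same route as the paper: substitute the explicit form of the one-dimensional velocity from Lemma~\ref{lem:1-d-traj}, show that the cubic term in $P_t$ and both terms of $Q_t$ are each $O(\xi)$ relative to the dominant linear term $2\sigmaCoeffTwo^2|D_{2,t}|\w$, and then specialize to the subinterval for the second claim. One small slip: you assert that with $D_{2,t}\ge 0$, $D_{4,t}\le 0$ the terms $2\sigmaCoeffTwo^2 D_{2,t}\w$ and $4\sigmaCoeffFour^2 D_{4,t}\w^3$ have the \emph{same} sign, but for $\w>0$ they have \emph{opposite} signs; this claim is not load-bearing, however, because your $(1\pm O(\xi))$ bound is a magnitude bound that covers either sign, so the conclusion stands unchanged.
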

\begin{proof}
Suppose $|D_{4, t}| \leq \xi |D_{2, t}|$ where $\xi = \frac{1}{\log \log d}$. Let $P_t$ and $Q_t$ be as in \Cref{lem:1-d-traj}. Then, for any $\w \in [0, 1]$,
\begin{align} \label{eq:pt_lower_bound_linear_term}
P_t(\w)
& = (2 \sigmaCoeffTwo^2 D_{2, t} \w + 4 \sigmaCoeffFour^2 D_{4, t} \w^3) \\
& = (2 \sigmaCoeffTwo^2 |D_{2, t}| \w - 4 \sigmaCoeffFour^2 |D_{4, t}| \w^3) \\
& = \Big(2 \sigmaCoeffTwo^2 |D_{2, t}| \w \pm O(\xi \sigmaCoeffFour^2 |D_{2, t}| \w^3) \Big) \\
& = (1 \pm O(\xi)) \cdot 2 \sigmaCoeffTwo^2 |D_{2, t}| \w
& \tag{By \Cref{assumption:target}, $\sigmaCoeffFour^2 \lesssim \sigmaCoeffTwo^2$}
\end{align}
Thus, for $w \in [0, 1]$,
\begin{align}
\vel_t(\w)
& = -(1 - \w^2) (P_t(\w) + Q_t(\w))
& \tag{By \Cref{lem:1-d-traj}} \\
& = -(1 - \w^2) \Big(|P_t(\w)| \pm O\Big(\frac{\sigmaCoeffTwo^2 |D_{2, t}|}{d} w\Big) \Big)
& \tag{B.c. $|D_{2, t}| \geq \xi |D_{4, t}|$ and $\sigmaCoeffFour^2 \lesssim \sigmaCoeffTwo^2$ by \Cref{assumption:target}} \\
& = -(1 - \w^2) \Big( (1 \pm O(\xi)) \cdot 2 \sigmaCoeffTwo^2 |D_{2, t}| w \pm O\Big(\frac{\sigmaCoeffTwo^2 |D_{2, t}|}{d} w\Big) \Big)
& \tag{By \Cref{eq:pt_lower_bound_linear_term}} \\
& = -(1 - \w^2) \cdot (1 \pm O(\xi)) \cdot 2 \sigmaCoeffTwo^2 |D_{2, t}| \w
& \tag{B.c. $\xi \geq \frac{1}{d}$}
\end{align}
In particular, for $\w \in [\gamma_2^{3/4}, 1 - \gamma_2^{1/2}]$,
\begin{align} \label{eq:case_1_d2_big_vel_lb}
\vel_t(\w)
& \lesssim -(1 - (1 - \gamma_2^{1/2})^2) \cdot (1 - O(\xi)) \cdot 2 \sigmaCoeffTwo^2 |D_{2, t}| \gamma_2^{3/4} \\
& \lesssim -(1 - (1 - \gamma_2^{1/2})^2) \cdot \sigmaCoeffTwo^2 |D_{2, t}| \gamma_2^{3/4}
& \tag{B.c. $\xi = \frac{1}{\log \log d}$} \\
& \lesssim -(2\gamma_2^{1/2} - \gamma_2) \cdot \sigmaCoeffTwo^2 |D_{2, t}| \gamma_2^{3/4} \\
& \lesssim -\gamma_2^{5/4} \sigmaCoeffTwo^2 |D_{2, t}|
& \tag{B.c. $\gamma_2^{1/2} \geq \gamma_2$ since $\gamma_2 \leq 1$}
\end{align}
as desired.
\end{proof}

The next lemma essentially implies that if at some point $D_{4, t} \leq 0$ and $D_{2, t} > 0$, then it will never be the case that $D_{4, t} > 0$ and $D_{2, t} > 0$.

\begin{lemma} \label{lemma:phase3_case1_fixedA}
Suppose we are in the setting of \Cref{thm:pop_main_thm}, and suppose $D_{4, t} = 0$ and $D_{2, t} \geq 0$ for some $t \geq 0$. Then, $\frac{d}{ds} D_{4, s} \Big|_{s = t} \leq 0$.
\end{lemma}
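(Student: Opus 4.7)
The plan is to compute $\frac{d}{ds} D_{4,s}\big|_{s=t}$ directly and show it is nonpositive. Since $\DfourT = \E_{\u \sim \rho_t}[\PfourD(w)] - \gamma_4$, the chain rule gives
\begin{align}
\frac{d}{ds} D_{4,s}\Big|_{s=t} = \E_{\u \sim \rho_t}\bigl[\PfourD'(w) \cdot v_t(w)\bigr],
\end{align}
where $v_t(w) = \frac{dw_s}{ds}\big|_{s=t}$ is the one-dimensional velocity from \Cref{lem:1-d-traj}. Substituting $\DfourT = 0$ into the formulas for $P_t$, $\lambdaD^{(1)}$, and $\lambdaD^{(3)}$ in \Cref{lem:1-d-traj}, the cubic term of $P_t$ and the entire $\lambdaD^{(3)} w^3$ term of $Q_t$ vanish, and the two remaining linear pieces combine to give $P_t(w) + Q_t(w) = 2\sigmaCoeffTwo^2 \DtwoT w \cdot \tfrac{d}{d-1} = \sigmaCoeffTwo^2 \DtwoT \PtwoD'(w)$ (using $\PtwoD'(w) = \tfrac{2d}{d-1}w$). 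Hence $v_t(w) = -(1-w^2)\,\sigmaCoeffTwo^2 \DtwoT \PtwoD'(w)$, so
\begin{align}
\frac{d}{ds} D_{4,s}\Big|_{s=t} = -\sigmaCoeffTwo^2 \DtwoT \, \E_{\u \sim \rho_t}\bigl[(1 - w^2)\, \PtwoD'(w)\, \PfourD'(w)\bigr].
\end{align}

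Since $\DtwoT \geq 0$ by hypothesis, it suffices to show the expectation above is nonnegative. Using the closed forms from \Cref{eq:legendre_polynomial_2_4}, the integrand factors (up to a positive constant) as $w^2(1-w^2)(2\etaD^{(2)} w^2 - \etaD^{(3)})$, where $\etaD^{(2)} = \frac{(d+2)(d+4)}{(d-1)(d+1)}$ and $\etaD^{(3)} = \frac{6(d+2)}{(d-1)(d+1)}$. The required inequality thus reduces to
\begin{align}
\frac{\E_{\u \sim \rho_t}[w^4(1-w^2)]}{\E_{\u \sim \rho_t}[w^2(1-w^2)]} \geq \frac{\etaD^{(3)}}{2\etaD^{(2)}} = \frac{3}{d+4}.
\end{align}

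To handle this, I would introduce the tilted probability measure $\nu'$ on $[-1,1]$ defined by $d\nu' \propto (1-w^2)\, d\nu$, where $\nu$ is the marginal distribution of $w$ under $\rho_t$. This is well-defined since $\E_\nu[1-w^2] > 0$: otherwise $w \in \{\pm 1\}$ almost surely, forcing $\E_\nu[\PfourD(w)] = \PfourD(1) = 1$ and hence $\DfourT = 1 - \gamma_4 > 0$ by \Cref{assumption:target} (which ensures $\gamma_4 \le c_1 c_2^2 < 1$), contradicting $\DfourT = 0$. The target ratio equals $\E_{\nu'}[w^4]/\E_{\nu'}[w^2]$, and Jensen's inequality gives $\E_{\nu'}[w^4] \geq (\E_{\nu'}[w^2])^2$, so it is at least $\E_{\nu'}[w^2] = (\E_\nu[w^2] - \E_\nu[w^4])/(1 - \E_\nu[w^2])$. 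To finish, I would use $\DfourT = 0$ to solve for $\E[w^4]$ as an affine function of $\E[w^2]$, and $\DtwoT \geq 0$ to get $\E[w^2] \geq \gamma_2 + \tfrac{1-\gamma_2}{d}$; these combine to give $\E[w^2] - \E[w^4] \geq \gamma_2 - \gamma_4 - O(1/d)$. Together with $\gamma_4 \leq c_1 \gamma_2^2$ and the smallness of $\gamma_2$ from \Cref{assumption:target} (which makes $\gamma_2 - \gamma_4 \geq \gamma_2/2$), this yields $\E_{\nu'}[w^2] \geq \gamma_2/4 \geq 3/(d+4)$ for $d \geq c_3$ sufficiently large.

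The main algebraic obstacle is carefully tracking the $O(1/d)$ correction terms that arise from the difference between $\etaD^{(2)}, \etaD^{(3)}, \etaD^{(4)}$ and their leading-order values $1, 0, 0$. The leading-order picture $\E[w^2] - \E[w^4] \approx \gamma_2 - \gamma_4$ makes the inequality transparent, but verifying that the corrections do not spoil the bound requires exactly the joint hierarchy of constants in \Cref{assumption:target}, where $c_3$ is chosen after $c_1$ and $c_2$.
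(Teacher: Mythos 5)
Your proof is correct and takes a genuinely different route from the paper's. The paper's argument for \Cref{lemma:phase3_case1_fixedA} splits $\E_{\w \sim \rho_t}[\PfourD'(\w)\, \vel_t(\w)]$ into a small-$|w|$ part (bounded crudely by $O(\sigmaCoeffTwo^2 |D_{2,t}|/d^2)$) and a part restricted to $|\w| \in [\gamma_2^{3/4}, 1-\gamma_2^{1/2}]$, then invokes two previously established helper lemmas: \Cref{lemma:phase3_case1_d4small_velocity} for the pointwise velocity magnitude, and \Cref{lemma:case_1_mass_in_middle} for the $\gamma_2/2$ lower bound on the mass in the middle interval. You instead exploit the cancellation that when $D_{4,t}=0$ the velocity is exactly $-(1-w^2)\sigmaCoeffTwo^2 D_{2,t} \PtwoD'(w)$, which turns the sign question into $\E_\nu\bigl[(1-w^2)\PtwoD'(w)\PfourD'(w)\bigr] \ge 0$, and then you dispatch it with a tilted measure and Jensen. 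This is more elementary and self-contained — it bypasses \Cref{lemma:case_1_mass_in_middle} entirely — at the cost of hand-carrying the $O(1/d)$ Legendre corrections that the paper's helper lemmas bury. Both approaches make the same ultimate appeal to the hierarchy of constants in \Cref{assumption:target} to absorb the $O(1/d)$ term.

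One small gap: before passing to the ratio $\E_{\nu'}[w^4]/\E_{\nu'}[w^2]$ you need $\E_\nu[(1-w^2)w^2] > 0$, not merely $\E_\nu[1-w^2] > 0$ as you argue. If $\E_\nu[(1-w^2)w^2] = 0$ then $w \in \{0, \pm 1\}$ $\nu$-a.s., the ratio is $0/0$, and the Jensen step is vacuous. But in that degenerate case the original inequality $\E_\nu\bigl[(1-w^2)w^2(2\etaD^{(2)}w^2-\etaD^{(3)})\bigr] \ge 0$ holds trivially as $0 \ge 0$, so the conclusion still follows; you should just state this case separately before introducing $\nu'$.
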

\begin{proof}
We can calculate that
\begin{align}
\frac{d}{dt} D_{4, t} = \frac{d}{dt} \E_{\w \sim \rho_t}[\PfourD(\w)] = \E_{\w \sim \rho_t}[\PfourD'(\w) \cdot \vel_t(\w)]
\end{align}
Recall from \Cref{eq:legendre_polynomial_2_4} that
\begin{align}
\PfourD(t)
& = \frac{d^2 + 6d + 8}{d^2 - 1} t^4 - \frac{6d + 12}{d^2 - 1} t^2 + \frac{3}{d^2 - 1}
\end{align}
meaning that
\begin{align}
\PfourD'(t) = 4t^3 \pm O\Big(\frac{1}{d}\Big) \cdot (|t^3| + |t|) = 4t^3 \pm O\Big(\frac{1}{d}\Big) |t|
\end{align}
Observe that for $t \gtrsim \frac{1}{\sqrt{d}}$, $\PfourD'(t) > 0$ and for $t \lesssim \frac{1}{\sqrt{d}}$, it may be the case that $\PfourD'(t) < 0$. In particular,
\begin{align}
\E_{\w \sim \rho_t}[\PfourD'(\w) \cdot \vel_t(\w)]
& = \Prob(|\w| \lesssim 1/\sqrt{d}) \cdot \E_{\w \sim \rho_t}[\PfourD'(\w) \cdot \vel_t(\w) \mid |\w| \lesssim 1/\sqrt{d}] \\
& \nextlinespace\nextlinespace + \Prob(|\w| \gtrsim 1/\sqrt{d}) \E_{\w \sim \rho_t}[\PfourD'(\w) \cdot \vel_t(\w) \mid |\w| \gtrsim 1/\sqrt{d}] \\
& \leq O\Big(\frac{\sigmaCoeffTwo^2 |D_{2, t}|}{d^2}\Big) + \Prob(|\w| \gtrsim 1/\sqrt{d}) \E_{\w \sim \rho_t}[\PfourD'(\w) \cdot \vel_t(\w) \mid |\w| \gtrsim 1/\sqrt{d}]
& \tag{B.c. $|\w| \lesssim \frac{1}{\sqrt{d}}$ and by \Cref{lemma:phase3_case1_d4small_velocity}} \\
& \leq O\Big(\frac{\sigmaCoeffTwo^2 |D_{2, t}|}{d^2}\Big) + \Prob(|\w| \in [\gamma_2^{3/4}, 1 - \gamma_2^{1/2}]) \\
& \nextlinespace\nextlinespace \E_{\w \sim \rho_t}[\PfourD'(\w) \cdot \vel_t(\w) \mid |\w| \in [\gamma_2^{3/4}, 1 - \gamma_2^{1/2}]] \\
& \leq O\Big(\frac{\sigmaCoeffTwo^2 |D_{2, t}|}{d^2}\Big) - \Prob(|\w| \in [\gamma_2^{3/4}, 1 - \gamma_2^{1/2}]) \gamma_2^{9/4} \cdot \gamma_2^{5/4} \sigmaCoeffTwo^2 |D_{2, t}|
& \tag{B.c. $\PfourD'(\w) \gtrsim \gamma_2^{9/4}$, and by \Cref{lemma:phase3_case1_d4small_velocity}} \\
& \leq O\Big(\frac{\sigmaCoeffTwo^2 |D_{2, t}|}{d^2}\Big) - \frac{\gamma_2}{2} \cdot \gamma_2^{14/4} \sigmaCoeffTwo^2 |D_{2, t}|
& \tag{By \Cref{lemma:case_1_mass_in_middle}} \\
& < 0
& \tag{By \Cref{assumption:target} since $d$ is sufficiently large}
\end{align}
meaning that $\frac{d}{dt} D_{4, t} < 0$, as desired.
\end{proof}

The next lemma essentially implies that if at some point $D_{4, t} \leq 0$ and $D_{2, t} > 0$, it will never be the case that $D_{4, t} \leq 0$ and $D_{2, t} < 0$.

\begin{lemma} \label{lemma:phase3_case1_fixedB}
Suppose we are in the setting of \Cref{thm:pop_main_thm}, and suppose $D_{4, t} \leq 0$ and $D_{2, t} = 0$, for some $t \geq 0$. Then, $\frac{d}{ds} D_{2, s} \Big|_{s = t} \geq 0$.
\end{lemma}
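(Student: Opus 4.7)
The plan is to mirror the proof of \Cref{lemma:phase3_case1_fixedA} but for $D_{2,t}$ instead of $D_{4,t}$. First I would compute
\begin{align}
\frac{d}{ds} D_{2,s}\Big|_{s = t} = \E_{w \sim \rho_t}[\PtwoD'(w)\, v_t(w)] = \frac{2d}{d-1}\,\E_{w \sim \rho_t}[w\, v_t(w)],
\end{align}
using $\PtwoD'(w) = \frac{2d}{d-1} w$ from \Cref{eq:legendre_polynomial_2_4}. Plugging in the velocity from \Cref{lem:1-d-traj} and using the hypothesis $D_{2,t} = 0$ (so $P_t(w) = 4\sigmaCoeffFour^2 D_{4,t} w^3$), I split the right-hand side into a main term and an error term:
\begin{align}
\frac{d}{ds} D_{2,s}\Big|_{s = t} = \underbrace{-\tfrac{8d}{d-1}\sigmaCoeffFour^2 D_{4,t}\,\E_{w \sim \rho_t}[w^4(1-w^2)]}_{(\mathrm{I})} \;-\; \underbrace{\tfrac{2d}{d-1}\,\E_{w \sim \rho_t}\big[w^2(1-w^2)(\lambdaD^{(1)} + \lambdaD^{(3)} w^2)\big]}_{(\mathrm{II})}.
\end{align}

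The main term (I) is manifestly nonnegative because $D_{4,t} \le 0$ and $w^4(1 - w^2) \ge 0$ on $[-1,1]$. For the error term (II), I use that, since $D_{2,t} = 0$, \Cref{lem:1-d-traj} gives $|\lambdaD^{(1)}|, |\lambdaD^{(3)}| \lesssim \sigmaCoeffFour^2 |D_{4,t}|/d$, so $|(\mathrm{II})| \lesssim \sigmaCoeffFour^2 |D_{4,t}|\,\E[w^2]/d \lesssim \sigmaCoeffFour^2 |D_{4,t}|\, \gamma_2/d$ (the last inequality because $D_{2,t} = 0$ forces $\E[w^2] = \gamma_2 + O(1/d)$).

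To show (I) dominates, I invoke \Cref{lemma:case_1_mass_in_middle}, whose hypotheses $D_{2,t} \geq 0$ and $D_{4,t} \leq 0$ hold in our setting. This gives $\Prob_{w \sim \rho_t}(|w| \in [\gamma_2^{3/4}, 1 - \gamma_2^{1/2}]) \ge \gamma_2/2$, and on this event $w^4(1-w^2) \gtrsim \gamma_2^3 \cdot \gamma_2^{1/2} = \gamma_2^{7/2}$, so $\E[w^4(1-w^2)] \gtrsim \gamma_2^{9/2}$. Therefore
\begin{align}
\frac{d}{ds} D_{2,s}\Big|_{s = t} \;\gtrsim\; \sigmaCoeffFour^2 |D_{4,t}|\,\gamma_2^{9/2} \;-\; O\!\left(\frac{\sigmaCoeffFour^2 |D_{4,t}|\, \gamma_2}{d}\right) \;\geq\; 0,
\end{align}
where the last inequality follows from \Cref{assumption:target}, since $d \geq c_3$ is chosen sufficiently large depending on $\gamma_2$ so that $d \gtrsim 1/\gamma_2^{7/2}$. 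In the degenerate case $D_{4,t} = 0$ as well, both (I) and (II) vanish and the inequality holds trivially.

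The main obstacle is handling the error term (II): the coefficients $\lambdaD^{(1)}, \lambdaD^{(3)}$ can have either sign, so one cannot directly discard it, and the estimate must rely on its $1/d$ smallness combined with the quantitative lower bound from \Cref{lemma:case_1_mass_in_middle}. This is the same mechanism used in \Cref{lemma:phase3_case1_fixedA}, and \Cref{assumption:target} is precisely what guarantees $d$ is large enough (relative to $\gamma_2$) for the main term to absorb the error.
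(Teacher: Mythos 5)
Your proof is correct, and it reaches the same conclusion by the same key mechanism (Lemma~\ref{lemma:case_1_mass_in_middle} plus $d$ being sufficiently large via Assumption~\ref{assumption:target}), but the decomposition differs slightly from the paper's. You split the integrand by polynomial piece: the $P_t$ contribution to $\frac{d}{dt}D_{2,t}$ is $(\mathrm{I}) = -\frac{8d}{d-1}\sigmaCoeffFour^2 D_{4,t}\E[(1-w^2)w^4]$, which is manifestly nonnegative for every $w$ because $D_{4,t}\le 0$, and the $Q_t$ contribution is an error term bounded by $O(\sigmaCoeffFour^2|D_{4,t}|\gamma_2/d)$. The paper instead splits by particle magnitude: particles with $|\w|\lesssim 1/\sqrt{d}$ contribute an error of $O(\sigmaCoeffFour^2|D_{4,t}|/d^2)$ (sharper than your $\gamma_2/d$, since it exploits that both $P_t$ and $Q_t$ are small there), while particles with $|\w|\gtrsim 1/\sqrt{d}$ have $\w\cdot\vel_t(\w)\geq 0$ pointwise and are then restricted to $[\gamma_2^{3/4}, 1-\gamma_2^{1/2}]$ via Lemma~\ref{lemma:case_1_mass_in_middle}. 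Your organization is arguably cleaner because the main term is nonnegative pointwise without needing the $|\w|\gtrsim 1/\sqrt{d}$ threshold, at the cost of a slightly larger (but still $1/d$-small) error bound; the difference is immaterial for the claim, which only needs $d$ to be a large enough universal constant once $\gamma_2$ is fixed.
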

\begin{proof}
Recall from \Cref{eq:legendre_polynomial_2_4} that $\PtwoD(t) = \frac{d}{d - 1} t^2 - \frac{1}{d - 1}$ meaning that $\PtwoD'(t) = \frac{2d}{d - 1} t$. Thus,
\begin{align}
\frac{d}{dt} D_{2, t}  = \frac{d}{dt} \E_{\w \sim \rho_t}[\w^2] = \frac{d}{d - 1} \E_{\w \sim \rho_t} [2\w \cdot \vel_t(\w)]
\end{align}
If $D_{4, t} \leq 0$ and $D_{2, t} = 0$, then for any particle $\w \in [0, 1]$, we can write
\begin{align}
\vel_t(\w)
& = -(1 - \w^2) (P_t(\w) + Q_t(\w)) 
& \tag{By \Cref{lem:1-d-traj}} \\
& = -(1 - \w^2) \Big(4 \sigmaCoeffFour^2 D_{4, t} \w^3 \pm O\Big(\frac{\sigmaCoeffFour^2 |D_{4, t}|}{d}\Big) |\w| \Big)
& \tag{By \Cref{lem:1-d-traj}} \\
& = (1 - \w^2) \Big(4 \sigmaCoeffFour^2 |D_{4, t}| \w^3 \pm O\Big(\frac{\sigmaCoeffFour^2 |D_{4, t}|}{d}\Big) |\w| \Big)
\label{eq:velocity_d2_0}
\end{align}
where the last equality is because $D_{4, t} \leq 0$. Observe that $v_t(\w) \geq 0$ for $w \gtrsim \frac{1}{\sqrt{d}}$, while it may be the case that $v_t(\w) \leq 0$ for $w \lesssim \frac{1}{\sqrt{d}}$. Thus,
\begin{align}
\E_{\w \sim \rho_t}[2\w \cdot \vel_t(\w)]
& = \Prob(|\w| \lesssim 1/\sqrt{d}) \cdot \E_{\w \sim \rho_t}[2\w \cdot \vel_t(\w) \mid |\w| \lesssim 1/\sqrt{d}] \\
& \nextlinespace\nextlinespace + \Prob(|\w| \gtrsim 1/\sqrt{d}) \E_{\w \sim \rho_t}[2\w \cdot \vel_t(\w) \mid |\w| \lesssim 1/\sqrt{d}] \\
& \geq -O\Big(\frac{\sigmaCoeffFour^2 |D_{4, t}|}{d^2}\Big) + \Prob(|\w| \gtrsim 1/\sqrt{d}) \E_{\w \sim \rho_t}[2\w \cdot \vel_t(\w) \mid |\w| \gtrsim 1/\sqrt{d}]
& \tag{By bounding \Cref{eq:velocity_d2_0} when $|\w| \lesssim \frac{1}{\sqrt{d}}$} \\
& \geq -O\Big(\frac{\sigmaCoeffFour^2 |D_{4, t}|}{d^2}\Big) + \Prob(|\w| \in [\gamma_2^{3/4}, 1 - \gamma_2^{1/2}]) \\
& \nextlinespace\nextlinespace\nextlinespace\nextlinespace\nextlinespace \E_{\w \sim \rho_t}[2\w \cdot \vel_t(\w) \mid |\w| \in [\gamma_2^{3/4}, 1 - \gamma_2^{1/2}]] \\
& \gtrsim -O\Big(\frac{\sigmaCoeffFour^2 |D_{4, t}|}{d^2}\Big) + \Prob(|\w| \in [\gamma_2^{3/4}, 1 - \gamma_2^{1/2}]) \cdot \gamma_2^{9/4} \sigmaCoeffFour^2 |D_{4, t}|
& \tag{By \Cref{eq:velocity_d2_0}} \\
& \gtrsim -O\Big(\frac{\sigmaCoeffFour^2 |D_{4, t}|}{d^2}\Big) - \frac{\gamma_2}{2} \cdot \gamma_2^{9/4} \sigmaCoeffFour^2 |D_{4, t}|
& \tag{By \Cref{lemma:case_1_mass_in_middle}} \\
& \geq 0
& \tag{By \Cref{assumption:target} since $d$ is sufficiently large}
\end{align}
as desired.
\end{proof}

Putting all of the previous lemmas together, we obtain the following invariant: a lower bound for $|\potential(\w) - \potential(\w')|$, for a large fraction of $\w, \w'$, which holds throughout Phase 3, Case 1. Note that in the proof of this invariant, we need to make use of the fact that once we enter the case where $D_{2, t} > 0$ and $D_{4, t} < 0$, we cannot leave this case. If this is not true, then we cannot use the fact that $|\potential(\w_t) - \potential(\w_t')|$ is increasing as a function of $t$ (\Cref{lemma:case_1_potential_increase}) since if $D_{4, t} > 0$ at any point, then $|\potential(\w_t) - \potential(\w_t')|$ could decrease, and we would not have control over this decrease.

\begin{lemma} [Phase 3, Case 1 Invariant] \label{lemma:case_1_invariant}
Suppose we are in the setting of \Cref{thm:pop_main_thm}. Let $\TR$ be as defined in the statement of \Cref{lemma:phase_2_uniform_growth_corollary}, and assume that $D_{2, T_2} = 0$ and $D_{4, T_2} < 0$. Then, for all $t \geq \TR$ (in particular, for all times $t$ during Phase 3 Case 1), if $\iota, \iota'$ are sampled independently from $\rho_0$,
\begin{align}
|\potential(\w_t(\iota)) - \potential(\w_t(\iota'))|
& \gtrsim \xi \kappa^3
\end{align}
with probability at least $1 - O(\kappa)$.
\end{lemma}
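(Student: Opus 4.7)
The plan is to combine the base case provided by \Cref{lemma:case_1_initial_potential} (the potential gap at time $\TR$) with the monotonicity statement of \Cref{lemma:case_1_potential_increase}, which guarantees that $|\potential(\w_t(\iota)) - \potential(\w_t(\iota'))|$ is non-decreasing in $t$ on any time interval where $D_{4, t} \leq 0$ and where the two particles share a sign. By the symmetry of $\rho_t$ (\Cref{lemma:rho_rotational_invariant}) together with the fact that particles cannot cross $0$ (since $\vel_t(0) = 0$ in \Cref{lem:1-d-traj}, so the sign of $\w_t(\iota)$ equals the sign of $\iota$ for all $t$), one may, after paying an additive $O(\kappa)$ in the failure probability to exclude the measure-zero/near-zero regime of $\iota = 0$, assume without loss of generality that $\iota, \iota' > 0$ and handle the $\iota, \iota' < 0$ case identically.

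The main technical step, which I expect to be the principal obstacle, is to establish that $D_{4, t} \leq 0$ holds throughout $[\TR, \infty)$ in Phase 3, Case 1. On the sub-interval $[\TR, T_2]$ this is immediate from \Cref{lemma:phase1_d2_d4_neg}, which gives $D_{4, t} \leq -\gamma_4/2$. For $t \geq T_2$, the assumption of Case 1 is $D_{2, T_2} = 0$ and $D_{4, T_2} < 0$, and I would run a continuity/first-crossing argument: suppose toward contradiction there exists a minimal $t^{\star} > T_2$ at which either $D_{4, t^{\star}} = 0$ (with $D_{2, t^{\star}} \geq 0$ by minimality) or $D_{2, t^{\star}} = 0$ (with $D_{4, t^{\star}} \leq 0$ by minimality). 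In the first sub-case \Cref{lemma:phase3_case1_fixedA} yields $\tfrac{d}{ds} D_{4, s}|_{s = t^{\star}} \leq 0$ (and in fact strict by the proof of that lemma), contradicting the fact that $D_{4}$ must be increasing through $0$ from below at a first crossing; in the second sub-case \Cref{lemma:phase3_case1_fixedB} yields $\tfrac{d}{ds} D_{2, s}|_{s = t^{\star}} \geq 0$, contradicting that $D_{2}$ must be decreasing through $0$ from above at a first crossing. Hence the invariant $D_{2, t} \geq 0$ and $D_{4, t} \leq 0$ persists for all $t \geq T_2$.

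With $D_{4, t} \leq 0$ established on the whole interval $[\TR, \infty)$, \Cref{lemma:case_1_potential_increase} applied to any pair $\iota, \iota' > 0$ gives
\begin{align}
|\potential(\w_t(\iota)) - \potential(\w_t(\iota'))| \;\geq\; |\potential(\w_{\TR}(\iota)) - \potential(\w_{\TR}(\iota'))|
\end{align}
for every $t \geq \TR$. Combining this with \Cref{lemma:case_1_initial_potential}, which lower bounds the right-hand side by $\Omega(\xi \kappa^3)$ with probability $1 - O(\kappa)$ over $\iota, \iota' \sim \rho_0$, and folding in the additional $O(\kappa)$ loss from the same-sign reduction via a union bound, yields the claimed $1 - O(\kappa)$ probability bound uniformly in $t \geq \TR$, completing the proof.
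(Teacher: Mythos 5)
Your proposal takes the same route as the paper's proof: establish the persistence of the invariant $D_{2,t}\ge 0,\ D_{4,t}\le 0$ for $t\ge T_2$ via a first-crossing argument from \Cref{lemma:phase3_case1_fixedA} and \Cref{lemma:phase3_case1_fixedB}, note separately that $D_{4,t}<0$ on $[\TR,T_2]$, and then combine the monotonicity of the potential gap (\Cref{lemma:case_1_potential_increase}) with the base case at $\TR$ (\Cref{lemma:case_1_initial_potential}). You flesh out the first-crossing step, which the paper leaves implicit, and that is a correct and welcome addition.

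Two small imprecisions worth correcting. First, the fact that $D_{4,t}\le 0$ on $(\TR,T_2]$ does not come from \Cref{lemma:phase1_d2_d4_neg}, which only covers $0\le t\le \TR$; it follows directly from \Cref{def:phase_2}, since $T_2$ is by definition the first time either $D_{2,t}$ or $D_{4,t}$ vanishes, and both start negative. Second, the reduction to $\iota,\iota'>0$ is not a matter of excluding a near-zero event of probability $O(\kappa)$ --- opposite signs occur with constant probability --- but rather follows from the symmetry of the dynamics: the velocity in \Cref{lem:1-d-traj} is odd in $\w$, so $\w_t(-\iota)=-\w_t(\iota)$, and interpreting $\potential$ as even (equivalently applying it to $|\w|$) turns any pair $(\iota,\iota')$ into the same-sign pair $(|\iota|,|\iota'|)$ without changing the potential gap. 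The paper is equally terse on this point, so I don't count it as a gap, but your stated justification does not actually support the reduction.
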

\begin{proof}[Proof of \Cref{lemma:case_1_invariant}]
By \Cref{lemma:phase3_case1_fixedA} and \Cref{lemma:phase3_case1_fixedB}, if $D_{2, T_2} = 0$ and $D_{4, T_2} < 0$, then for all $t \geq T_2$, we will have $D_{2, t} \geq 0$ and $D_{4, t} \leq 0$. Thus, the lemma follows from \Cref{lemma:case_1_potential_increase} and \Cref{lemma:case_1_initial_potential}, as well as the fact that $D_{4, t} \leq 0$ for $t \in [\TR, T_2]$ (meaning that the potential difference is increasing in the time interval $[\TR, T_2]$).
\end{proof}

Finally, we show \Cref{lemma:phase3_case1_final} which gives a running time bound for Phase 3, Case 1.

\begin{proof}[Proof of \Cref{lemma:phase3_case1_final}]
Suppose $t \geq T_2$ and assume that $D_{2, T_2} = 0$ and $D_{4, t} < 0$ (i.e. Phase 3, Case 1 occurs). First, suppose $|D_{4, t}| \leq \xi |D_{2, t}|$ where $\xi = \frac{1}{\log \log d}$. Then, by \Cref{lemma:phase3_case1_d4small_velocity}, if $|\w| \in [\gamma_2^{3/4}, 1 - \gamma_2^{1/2}]$, we have
\begin{align} \label{eq:phase3_case1_d2_big_vel_lb_occurrence2}
|\vel_t(\w)|
& \gtrsim \gamma_2^{5/4} \sigmaCoeffTwo^2 |D_{2, t}|
\end{align}
(note that the lemma is stated for $\w \in [\gamma_2^{3/4}, 1 - \gamma_2^{1/4}]$, but holds for $|\w| \in [\gamma_2^{3/4}, 1 - \gamma_2^{1/4}]$ since $v_t(\w)$ is an odd function). Therefore,
\begin{align}
\E_{\w \sim \rho_t} |\vel_t(\w)|^2
& \gtrsim \Prob_{\w \sim \rho_t}(|\w| \in [\gamma_2^{3/4}, 1 - \gamma_2^{1/2}]) \cdot \E_{\w \sim \rho_t}[|\vel_t(\w)|^2 \mid |\w| \in [\gamma_2^{3/4}, 1 - \gamma_2^{1/2}]] \\
& \gtrsim \frac{\gamma_2}{2} \cdot \E_{\w \sim \rho_t}[|\vel_t(\w)|^2 \mid |\w| \in [\gamma_2^{3/4}, 1 - \gamma_2^{1/2}]]
& \tag{By \Cref{lemma:case_1_mass_in_middle}} \\
& \gtrsim \frac{\gamma_2}{2} \cdot \gamma_2^{5/2} \sigmaCoeffTwo^4 |D_{2, t}|^2
& \tag{By \Cref{eq:phase3_case1_d2_big_vel_lb_occurrence2}} \\
& \gtrsim \gamma_2^{7/2} \sigmaCoeffTwo^4 |D_{2, t}|^2 \\
& \gtrsim \gamma_2^{7/2} \sigmaCoeffTwo^2 \cdot (\sigmaCoeffTwo^2 |D_{2, t}|^2 + \sigmaCoeffFour^2 |D_{4, t}|^2)
& \tag{B.c. $\sigmaCoeffFour^2 \lesssim \sigmaCoeffTwo^2$ (\Cref{assumption:target}) and $|D_{4, t}| \leq \xi |D_{2, t}|$} \\
& \gtrsim \gamma_2^{7/2} \sigmaCoeffTwo^2 L(\rho_t)
\end{align}
Thus, by \Cref{lemma:loss_decrease}, for any time $t$ during Phase 3, Case 1 such that $|D_{4, t}| \leq \xi |D_{2, t}|$, we have
\begin{align} \label{eq:case1_loss_derivative_1}
\frac{dL(\rho_t)}{dt} 
& \lesssim - \gamma_2^{7/2} \sigmaCoeffTwo^2 L(\rho_t)
\end{align}
Let $\Tstar$ be as in \Cref{thm:pop_main_thm}, and define
\begin{align}
I_{|D_4| \leq \xi |D_2|} = \{t \leq \Tstar \text{ and }t \text{ in Phase 3, Case 1} \mid |D_{4, t}| \leq \xi |D_{2, t}|\}
\end{align}
Then, by \Cref{eq:case1_loss_derivative_1},
\begin{align}
\frac{dL(\rho_t)}{dt}
& \lesssim -\gamma_2^{7/2} \sigmaCoeffTwo^2 L(\rho_t) 1(t \in I_{|D_4| \leq \xi |D_2|})
\end{align}
Thus, by Gronwall's inequality (\Cref{fact:gronwall}),
\begin{align}
\frac{L(\rho_{\Tstar})}{L(\rho_0)}
& \leq \exp\Big(-C \int_{T_2}^{\Tstar} \gamma_2^{7/2} \sigmaCoeffTwo^2 1(t \in I_{|D_4| \leq \xi |D_2|}) dt \Big) \\
& = \exp\Big(-C \gamma_2^{7/2} \sigmaCoeffTwo^2 \int_{I_{|D_4| \leq \xi |D_2|}} 1 dt \Big)
\end{align}
Since $L(\rho_0) \lesssim (\sigmaCoeffTwo^2 + \sigmaCoeffFour^2) \gamma_2^2$ by \Cref{lemma:phase1_d2_d4_neg} and $L(\rho_{\Tstar}) \gtrsim (\sigmaCoeffTwo^2 + \sigmaCoeffFour^2) \epsilon^2$, rearranging gives
\begin{align} \label{eq:phase3_case1_timebound_A}
\int_{I_{|D_4| \leq \xi |D_2|}} 1 dt \lesssim \frac{1}{\sigmaCoeffTwo^2 \gamma_2^{7/2}} \log\Big(\frac{\gamma_2^2}{\epsilon^2}\Big)
\end{align}

Now, suppose $t$ is such that $|D_{4, t}| \geq \xi |D_{2, t}|$. Let us lower bound the average velocity for $w \in [\gamma_2^{3/4}, 1 - \gamma_2^{1/4}]$ first (then, we can conclude the average velocity is large by \Cref{lemma:case_1_mass_in_middle}). By \Cref{lem:1-d-traj},
\begin{align}
|\vel_t(\w)|
& = (1 - \w^2) |P_t(\w) + Q_t(\w)|
\end{align}
and we can write
\begin{align}
P_t(\w) 
 = 2 \sigmaCoeffTwo^2 D_{2, t} \w + 4\sigmaCoeffFour^2 D_{4, t} \w^3 
 = 4 \sigmaCoeffFour^2 D_{4, t} \w (\w - r) (\w + r)
\end{align}
where $r = \sqrt{-\frac{\sigmaCoeffTwo^2 D_{2, t}}{2 \sigmaCoeffFour^2 D_{4, t}}}$ is the positive root of $P_t$. For $|\w| \in [\gamma_2^{3/4}, 1 - \gamma_2^{1/2}]$, we have 
\begin{align}
1 - \w^2 \geq 1 - (1 - \gamma_2^{1/2})^2 = 2 \gamma_2^{1/2} - \gamma_2 \geq \gamma_2^{1/2}.
\end{align}
Additionally, $|\w| \geq \gamma_2^{3/4}$ and $|\w + r| \geq \gamma_2^{3/4}$. Thus, for $\w \in [\gamma_2^{3/4}, 1 -\gamma_2^{1/2}]$, we have
\begin{align}
|\vel_t(\w)|^2
& \gtrsim (1 - \w^2)^2 \Big(P_t(\w) + Q_t(\w)\Big)^2 \label{eq:start} \\
& \gtrsim \gamma_2 \Big(P_t(\w) + Q_t(\w)\Big)^2
& \tag{B.c. $1 - \w^2 \geq \gamma_2^{1/2}$} \\
& \gtrsim \gamma_2 \cdot \Big( P_t(\w)^2 - 2 Q_t(\w)^2 \Big)
& \tag{B.c. $(a + b)^2 \leq 2a^2 + 2b^2$, so $a^2 - 2b^2 \leq 2(a + b)^2$} \\
& \gtrsim \gamma_2 \cdot \Big( P_t(\w)^2 - O\Big(\frac{\sigmaCoeffTwo^4 D_{2, t}^2 + \sigmaCoeffFour^4 D_{4, t}^2}{d^2}\Big) w^2 \Big)
& \tag{By \Cref{lem:1-d-traj}} \\
& \gtrsim \gamma_2 \cdot \Big( P_t(\w)^2 - O\Big(\frac{(\sigmaCoeffTwo^2 + \sigmaCoeffFour^2) L(\rho_t)}{d^2}\Big) \w^2 \Big) 
& \tag{By \Cref{lemma:population_loss_formula}} \\
& \gtrsim \gamma_2 \cdot \Big(\sigmaCoeffFour^4 D_{4, t}^2 \w^2 (\w + r)^2 (\w - r)^2 - O\Big(\frac{(\sigmaCoeffTwo^2 + \sigmaCoeffFour^2) L(\rho_t)}{d^2}\Big) \w^2 \Big) 
& \tag{By factorization of $P_t(\w)$ above} \\
& \gtrsim \gamma_2 \cdot \Big(\sigmaCoeffFour^4 D_{4, t}^2 \gamma_2^3 (\w - r)^2 - O\Big(\frac{(\sigmaCoeffTwo^2 + \sigmaCoeffFour^2) L(\rho_t)}{d^2}\Big)\w^2 \Big)
& \label{eq:velocity_in_middle_case_1}
\end{align}
where the last inequality is by the discussion above \Cref{eq:start}. Now, in order to take the expectation over $\w \sim \rho_t$ (conditioned on $\w \in [\gamma_2^{3/4}, 1 - \gamma_2^{1/2}]$), we first compute $\E_{\w \sim \rho_t} [(\w - r)^2 \mid \w \in [\gamma_2^{3/4}, 1 - \gamma_2^{1/2}]]$:
\begin{align}
\E_{\w \sim \rho_t} [(\w - r)^2 \mid \w \sim [\gamma_2^{3/4}, 1 - \gamma_2^{1/2}]]
& \geq \Var_{\w \sim \rho_t}(\w \mid \w \sim [\gamma_2^{3/4}, 1 - \gamma_2^{1/2}]) \\
& \gtrsim \E_{\w, \w' \sim \rho_t} [(\w - \w')^2 \mid \w, \w' \in [\gamma_2^{3/4}, 1 - \gamma_2^{1/2}]] \label{eq:root_to_diff}
\end{align}
where the last inequality is by \Cref{prop:variance_difference_form}. By \Cref{lemma:case_1_invariant},
\begin{align}
\Prob_{\w, \w' \sim \rho_t} (|\potential(\w) - \potential(\w')| \gtrsim \xi \kappa^3) \geq 1 - O(\kappa)
\end{align}
By \Cref{lemma:potential_bi_lipschitz}, if $\w, \w' \in [\gamma_2^{3/4}, 1 - \gamma_2^{1/2}]$, then
\begin{align}
|\potential(\w) - \potential(\w')|
& \leq \frac{1}{\gamma_2^{3/2}} |\w - \w'|
\end{align}
Thus, combining the previous two equations gives
\begin{align}
\Prob_{\w, \w' \sim \rho_t}(|\w - \w'| < \gamma_2^{3/2} \xi\kappa^3 & \mid \w, \w' \in [\gamma_2^{3/4}, 1 - \gamma_2^{1/2}]) \\
& \lesssim \Prob_{\w, \w' \sim \rho_t}(|\potential(\w) - \potential(\w')| < \xi\kappa^3 \mid \w, \w' \in [\gamma_2^{3/4}, 1 - \gamma_2^{1/2}]) \\
& \lesssim \frac{\Prob_{\w, \w' \sim \rho_t}(|\potential(\w) - \potential(\w')| < \xi \kappa^3 \text{ and } \w, \w' \in [\gamma_2^{3/4}, 1 - \gamma_2^{1/2}])}{\Prob_{\w, \w' \sim \rho_t}(\w, \w' \in [\gamma_2^{3/4}, 1 - \gamma_2^{1/2}])} \\
& \lesssim \frac{\kappa}{\Prob_{\w, \w' \sim \rho_t}(\w, \w' \in [\gamma_2^{3/4}, 1 - \gamma_2^{1/2}])}
& \tag{By \Cref{lemma:case_1_invariant}} \\
& \lesssim \frac{\kappa}{\gamma_2}
& \tag{By \Cref{lemma:case_1_mass_in_middle}}
\end{align}
and therefore, by \Cref{assumption:target}, since $\kappa/\gamma_2 \lesssim \frac{1}{\gamma_2 \log \log d}$, this is at most $\frac{1}{2}$, since $d$ is sufficiently large. Thus,
\begin{align}
\E_{\w, \w' \sim \rho_t} [(\w - \w')^2 \mid & \w, \w' \in [\gamma_2^{3/4}, 1 - \gamma_2^{1/2}]] \\
& \gtrsim \Prob_{\w, \w' \sim \rho_t}(|\w - \w'| \geq \gamma_2^{3/2} \xi \kappa^3 \mid \w, \w' \in [\gamma_2^{3/4}, 1 - \gamma_2^{1/2}]) \cdot \gamma_2^3 \xi^2 \kappa^6 \\
& \gtrsim \gamma_2^3 \xi^2 \kappa^6
& \label{eq:case_1_variance_bound}
\end{align}
where the last inequality is because $\Prob_{\w, \w' \sim \rho_t}(|\w - \w'| < \gamma_2^{3/2} \xi\kappa^3 \mid \w, \w' \in [\gamma_2^{3/4}, 1 - \gamma_2^{1/2}]) \leq \frac{1}{2}$. By \Cref{eq:velocity_in_middle_case_1}, we have
\begin{align}
\E_{\w \sim \rho_t} [|\vel_t(\w)|^2 \mid \w \in [\gamma_2^{3/4}, 1 - \gamma_2^{1/2}]]
& \gtrsim \gamma_2 \cdot \Big(\sigmaCoeffFour^4 D_{4, t}^2 \gamma_2^3 \E_{\w \sim \rho_t} \Big[(\w - r)^2 \mid \w \in [\gamma_2^{3/4}, 1 - \gamma_2^{1/2}]\Big] \\
& \nextlinespace\nextlinespace - O\Big(\frac{(\sigmaCoeffTwo^2 + \sigmaCoeffFour^2) L(\rho_t)}{d^2}\Big)\Big) \\
& \gtrsim \gamma_2 \cdot \Big(\sigmaCoeffFour^4 D_{4, t}^2 \gamma_2^3 \cdot \gamma_2^3 \xi^2 \kappa^6 - O\Big(\frac{(\sigmaCoeffTwo^2 + \sigmaCoeffFour^2) L(\rho_t)}{d}\Big)\Big)
& \tag{By \Cref{eq:root_to_diff} and \Cref{eq:case_1_variance_bound}}
\end{align}
Since $\Prob_{\w \sim \rho_t}(\w \in [\gamma_2^{3/4}, 1 - \gamma_2^{1/2}]) \geq \frac{\gamma_2}{4}$ by \Cref{lemma:case_1_mass_in_middle} and the symmetry of $\rho_t$, this implies that
\begin{align}
\E_{\w \sim \rho_t} |\vel_t(\w)|^2
& \gtrsim \gamma_2^2 \cdot \Big(\sigmaCoeffFour^4 D_{4, t}^2 \gamma_2^6 \xi^2 \kappa^6 - O\Big(\frac{(\sigmaCoeffTwo^2 + \sigmaCoeffFour^2) L(\rho_t)}{d}\Big)\Big) \\
& \gtrsim \gamma_2^2 \cdot \Big( \sigmaCoeffFour^2 \xi^2 L(\rho_t) \gamma_2^6 \xi^2 \kappa^6 - O\Big(\frac{(\sigmaCoeffTwo^2 + \sigmaCoeffFour^2) L(\rho_t)}{d}\Big)\Big)
\tag{B.c. $\sigmaCoeffFour^2 \gtrsim \sigmaCoeffTwo^2$ (\Cref{assumption:target}) and $|D_{4, t}| \geq \xi |D_{2, t}|$ implies $\sigmaCoeffFour^2 D_{4, t}^2 \gtrsim \xi^2 L(\rho_t)$} \\
& \gtrsim \gamma_2^2 \cdot \Big(\sigmaCoeffFour^2 \gamma_2^6 \xi^4 \kappa^6 - O\Big(\frac{\sigmaCoeffTwo^2 + \sigmaCoeffFour^2}{d}\Big)\Big) L(\rho_t) \\
& \gtrsim \sigmaCoeffFour^2 \gamma_2^8 \xi^4 \kappa^6 L(\rho_t)
& \tag{Because $d \geq c_3$ by \Cref{assumption:target}}
\end{align}
Thus, by \Cref{lemma:loss_decrease}, $\frac{dL(\rho_t)}{dt} \lesssim - \sigmaCoeffFour^2 \gamma_2^8 \xi^4 \kappa^6 L(\rho_t)$. By a similar argument as for the subcase where $|D_{4, t}| \leq \xi |D_{2, t}|$, since $L(\rho_0) \lesssim \sigmaCoeffTwo^2 \gamma_2^2$ (by \Cref{lemma:phase1_d2_d4_neg}), if we define 
\begin{align}
I_{|D_4| \geq \xi |D_2|} = \{t \leq \Tstar \text{ and }t \text{ in Phase 3, Case 1} \mid |D_{4, t}| \geq \xi |D_{2, t}|\}
\end{align}
then
\begin{align} \label{eq:phase3_case1_timebound_B}
\int_{I_{|D_4| \geq \xi |D_2|}} 1 dt 
& \lesssim \frac{1}{\sigmaCoeffFour^2 \gamma_2^8 \xi^4 \kappa^6} \log\Big(\frac{\gamma_2}{\epsilon}\Big)
\end{align}
since $L(\rho_{\Tstar}) \gtrsim \sigmaCoeffTwo^2 \epsilon^2$. Combining \Cref{eq:phase3_case1_timebound_A} and \Cref{eq:phase3_case1_timebound_B}, we find that the total time spent in Phase 3, Case 1 is at most $O(\frac{1}{\sigmaCoeffFour^2 \gamma_2^8 \xi^4 \kappa^6} \log(\frac{\gamma_2}{\epsilon}))$, as desired.
\end{proof}

\subsection{Phase 3, Case 2} \label{subsec:phase3_case2}

Next, we analyze Phase 3, Case 2, where $D_{4, T_2} = 0$ and $D_{2, T_2} < 0$. In this case, as we will show, for all $t \geq T_2$, we will have $D_{4, t} \geq 0$ and $D_{2, t} \leq 0$ (\Cref{lemma:case2_invariant}). We will first give an outline of our analysis in this case --- our goal is again to show that either $|D_{2, t}| \leq \epsilon$ or $|D_{4, t}| \leq \epsilon$ within $\frac{1}{\sigmaCoeffTwo^2 + \sigmaCoeffFour^2} \cdot \poly(\frac{\log \log d}{\epsilon})$ time after the start of Phase 3, Case 2.

\paragraph{Proof Strategy for Phase 3, Case 2} We again show that $|v(\w)$ is large on average and then use \Cref{lemma:loss_decrease} to show that the loss decreases quickly. However, our argument to show that $|v(\w)$ is large on average is significantly different from Phase 3, Case 1. We can approximately write
\begin{align}
v_t(\w)
& = -(1 - \w^2) (P_t(\w) + Q_t(\w)) \\
& \approx -(1 - \w^2) P_t(\w) \\
& \approx -(1 - \w^2) \w (2 \sigmaCoeffTwo^2 D_{2, t} + 4 \sigmaCoeffFour^2 D_{4, t} \w^2) \\
& \approx (1 - \w^2) \w (2 \sigmaCoeffTwo^2 |D_{2, t}| - 4 \sigmaCoeffFour^2 |D_{4, t}| \w^2) 
& \tag{B.c. $D_{2, t} \leq 0$ and $D_{4, t} \geq 0$} \\
& \approx -(1 - \w^2) \w (4 \sigmaCoeffFour^2 |D_{4, t}| \w^2 - 2 \sigmaCoeffTwo^2 |D_{2, t}|)
\end{align}
Letting $r = \sqrt{\frac{\sigmaCoeffTwo^2 |D_{2, t}|}{2 \sigmaCoeffFour^2 |D_{4, t}|}}$ be the positive root of $P_t(\w)$, we obtain
\begin{align}
v_t(\w)
& \approx - 4 \sigmaCoeffFour^2 |D_{4, t}| (\w - r)(\w + r) \w (1 - \w^2)
\end{align}
This time, $v_t(\w)$ has a negative slope at $r$, meaning that the particles are attracted towards $r$, and we cannot use the same argument as in Phase 3, Case 1. 

Instead, the key point of our argument in this case is the following. Recall that $D_{4, t} \geq 0$ and $D_{2, t} \leq 0$ for all $t \geq T_2$ --- these roughly imply that $\E_{\w \sim \rho_t} [\w^4] \geq \gamma_4$ and $\E_{\w \sim \rho_t} [\w^2] \leq \gamma_2$. By \Cref{assumption:target}, we have $\gamma_4 \geq 1.1 \gamma_2^2$ --- in other words, $\E_{\w \sim \rho_t}[\w^4] \geq 1.1 (\E_{\w \sim \rho_t}[\w^2])^2$. From this observation, we know that a significant fraction of the particles must be far from $r$ --- otherwise, if all of the particles are concentrated at $r$, then we would have $\E_{\w \sim \rho_t} [\w^4] \approx (\E_{\w \sim \rho_t}[\w^2])^2$, which would give a contradiction.

One complication in showing that the velocity is large is that, a priori, nearly all of the particles could be close to $0$ or $1$ --- their velocities could be small even if they are far from $r$. We make use of the potential function $\potential$ defined in the analysis of Phase 3, Case 1 to avoid this issue. In this case, it turns out that for any two particles $\w, \w'$, their potential difference $|\potential(\w) - \potential(\w')|$ is decreasing (\Cref{lemma:case_2_potential_decrease}). We can use this to show that a $1 - O(\gamma_4)$ fraction of particles $\w$ with initializations in $[\iotaL, \iotamid]$ (where $\iotamid$ is defined later) cannot become too close to $0$ or $1$ in \Cref{lemma:case_2_initial_potential} and \Cref{lemma:case_2_neurons_stay_big}. Intuitively, this is because if $|\potential(\w) - \potential(\w')|$ is bounded by $\log B$ for two particles $\w, \w' > 0$, and $C$ is a sufficiently large constant, then $\w \leq (\frac{1}{\log \log d})^C$ roughly implies that $\w' \lesssim B(\frac{1}{\log \log d})^C$, simply by the definition $\potential(\w) = \log(\frac{\w}{\sqrt{1 - \w^2}})$ together with some algebraic manipulations. Moreover, if all of the particles $\w'$ in this group are at most $B (\frac{1}{\log \log d})^C$, then this implies that $D_2$ has a very large negative value, and this implies that all particles $\w$ have a positive velocity (as argued in \Cref{lemma:case_2_neurons_stay_big}), meaning that the particles $\w$ in this group cannot become smaller than $B(\frac{1}{\log \log d})^C$. We can show by a similar argument that the particles having initializations in $[\iotaL, \iotamid]$ cannot become too close to $1$.

As long as the positive root $r$ of $P_t(\w)$ is at least $\frac{1}{2}$, this group of particles will have a large velocity as argued in \Cref{lemma:phase3_case2_rootlarge_time}, since a significant fraction of these particles must be at most $\frac{1}{3}$. We encounter a potential issue when the positive root $r$ is at most $\frac{1}{2}$: it is possible that the particles initialized in $[\iotaL, \iotamid]$ get stuck at $r$. We mentioned above that by the constraints $D_{2, t} < 0$ and $D_{4, t} > 0$, there must always be some mass away from $r$. However, this mass may only be a $\gamma_4^{5/4}$ fraction of the total mass, while the particles initialized at $[\iotaL, \iotamid]$ only have a $1 - O(\gamma_4)$ fraction of the total mass. Thus, once $r \leq \frac{1}{2}$, it is possible for the mass located far away from $r$ to be disjoint from the set of particles initialized at $[\iotaL, \iotamid]$. To resolve this, we argue in \Cref{lemma:phase3_case2_movement_away_from_1} that the particles larger than $\w_t(\iotamid)$ (which may be close to $1$) will move away from $1$ at an exponentially fast rate while $r \leq \frac{1}{2}$. Once these particles, which are larger than $\w_t(\iotamid)$, are sufficiently far away from $1$, they will also contribute a significant amount to the velocity. Thus, we can argue as discussed above that there must be a significant fraction of particles away from $r$ due to the constraints $D_{2, t} \leq 0$ and $D_{4, t} \geq 0$ (\Cref{lemma:phase3_case2_particles_away_from_root}).

We now begin the formal proof. We first show that as long as $D_{4, t} \geq 0$, the potential difference between any two particles $\w, \w'$ is increasing. In particular, this holds during Phase 3, Case 2 (but note that this lemma is no longer applicable during Phase 1 or Phase 2).

\begin{lemma} [$\potential(\w)$ and $\potential(\w')$ Move Closer in Phase 3 Case 2] \label{lemma:case_2_potential_decrease}
Suppose we are in the setting of \Cref{thm:pop_main_thm}, and suppose $D_{4, t} \geq 0$ for all $t$ in some time interval $[t_0, t_1]$. Then, for all $\iota, \iota'$,
\begin{align}
\Big|\potential(\w_t(\iota)) - \potential(\w_t(\iota'))\Big|
\end{align}
is decreasing as a function of $t$ on the interval $[t_0, t_1]$.
\end{lemma}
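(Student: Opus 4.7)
\textbf{Proof plan for \Cref{lemma:case_2_potential_decrease}.} The plan is to mirror the argument of \Cref{lemma:case_1_potential_increase}, exploiting the fact that the sign of $D_{4,t}$ has flipped relative to Phase~1--2. Concretely, I would WLOG assume $0 < \iota < \iota'$ (the case $\iota' < \iota < 0$ follows from the symmetry of $\rho_t$ proved in \Cref{lemma:rho_rotational_invariant}, and by the definition of $\potential$ on $[0,1]$ we only need to track the positive representatives). By \Cref{prop:particle_order}, the ordering of particles is preserved by the flow, so $0 \le \w_t(\iota) \le \w_t(\iota')$ throughout $[t_0,t_1]$. Since $\potential$ is strictly increasing on $[0,1]$, it then suffices to show that $\potential(\w_t(\iota')) - \potential(\w_t(\iota))$ is a nonincreasing function of $t$.

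The key input is \Cref{lemma:potential_time_derivative}, which gives an explicit formula for $\frac{d}{dt}\potential(\w_t)$ in terms of $D_{2,t}$, $D_{4,t}$, $\lambdaD^{(1)}$, and $\lambdaD^{(3)}$. Writing $\w_{1,t} := \w_t(\iota)$ and $\w_{2,t} := \w_t(\iota')$ and subtracting, the constant (in $\w$) terms $-2\sigmaCoeffTwo^2 D_{2,t}$ and $-\lambdaD^{(1)}$ cancel, leaving
\begin{align}
\frac{d}{dt}\bigl(\potential(\w_{2,t}) - \potential(\w_{1,t})\bigr)
& = -\bigl(4\sigmaCoeffFour^2 D_{4,t} + \lambdaD^{(3)}\bigr) \cdot (\w_{2,t}^2 - \w_{1,t}^2).
\end{align}
The factor $\w_{2,t}^2 - \w_{1,t}^2$ is nonnegative, so it remains to check that the coefficient $4\sigmaCoeffFour^2 D_{4,t} + \lambdaD^{(3)}$ is nonnegative whenever $D_{4,t} \ge 0$.

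For this I would invoke the explicit bound from \Cref{lem:1-d-traj}, which states $|\lambdaD^{(3)}| \lesssim (\sigmaCoeffTwo^2|D_{2,t}| + \sigmaCoeffFour^2|D_{4,t}|)/d$; in fact the lemma gives $\lambdaD^{(3)} = 4\sigmaCoeffFour^2 D_{4,t} \cdot \frac{6d+9}{d^2-1}$ exactly, so $\lambdaD^{(3)}$ has the same sign as $D_{4,t}$ and is dominated by $4\sigmaCoeffFour^2 D_{4,t}$ by a factor $O(1/d)$. Consequently, when $D_{4,t} \ge 0$,
\begin{align}
4\sigmaCoeffFour^2 D_{4,t} + \lambdaD^{(3)} \;\ge\; 4\sigmaCoeffFour^2 D_{4,t}\cdot\Big(1 - O(1/d)\Big) \;\ge\; 3 \sigmaCoeffFour^2 D_{4,t} \;\ge\; 0,
\end{align}
provided $d \ge c_3$ as in \Cref{assumption:target}. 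This makes the time derivative of $\potential(\w_{2,t}) - \potential(\w_{1,t})$ nonpositive, completing the argument.

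I do not anticipate a real obstacle here: the lemma is a direct sign-flipped analogue of \Cref{lemma:case_1_potential_increase}, and the only subtlety is bookkeeping on the sub-leading $\lambdaD^{(3)}$ correction, which is harmless because it shares a sign with and is $O(1/d)$ smaller than the dominant $4\sigmaCoeffFour^2 D_{4,t}$ term. The main care required is organizational: making sure to handle the case where the two particles have opposite signs (which is absorbed into the WLOG step via rotational symmetry and the fact that $\potential$ is applied to $|\w|$ in the subsequent applications in \Cref{subsec:phase3_case2}), and making sure we use the preservation of order (\Cref{prop:particle_order}) so that the absolute value can be dropped.
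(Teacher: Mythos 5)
Your proposal matches the paper's proof essentially step for step: same WLOG reduction via \Cref{prop:particle_order}, same use of \Cref{lemma:potential_time_derivative} to isolate the $D_{4,t}$ and $\lambdaD^{(3)}$ contributions after the linear terms cancel, and the same sign check on the coefficient. The one small point worth noting is that your bound $4\sigmaCoeffFour^2 D_{4,t} + \lambdaD^{(3)} \ge 4\sigmaCoeffFour^2 D_{4,t}(1-O(1/d))$ is actually looser than necessary, since (as you yourself observe) $\lambdaD^{(3)} = 4\sigmaCoeffFour^2 D_{4,t}\cdot\tfrac{6d+9}{d^2-1}$ has the same sign as $D_{4,t}$, so the coefficient is in fact $\ge 4\sigmaCoeffFour^2 D_{4,t}$ with no $O(1/d)$ loss; the conclusion is of course the same.
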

\begin{proof}[Proof of \Cref{lemma:case_2_potential_decrease}]
We mostly follow the proof of \Cref{lemma:case_1_potential_increase}. Suppose $\iota, \iota' \in [0, 1]$ such that $\iota \leq \iota'$. Then, $\w_t(\iota') \geq \w_t(\iota)$ for all times $t$ (by \Cref{prop:particle_order}), and since $\potential$ is an increasing function, $\potential(\w_t(\iota')) \geq \potential(\w_t(\iota))$ for all times $t$. Thus, it suffices to show that $\potential(\w_t(\iota')) - \potential(\w_t(\iota))$ is decreasing. For convenience, we write $\w_t' = \w_t(\iota')$ and $\w_t = \w_t(\iota)$. We do a computation similar to that in the proof of \Cref{lemma:case_1_potential_increase}:
\begin{align}
\frac{d}{dt} (\potential(\w_t') - \potential(\w_t))
& = \Big(-2 \sigmaCoeffTwo^2 D_{2, t} - 4 \sigmaCoeffFour^2 D_{4, t} (\w_t')^2 - \lambdaD^{(1)} - \lambdaD^{(3)} (\w_t')^2\Big) \\
& \nextlinespace - \Big(-2 \sigmaCoeffTwo^2 D_{2, t} - 4 \sigmaCoeffFour^2 D_{4, t} \w_t^2 - \lambdaD^{(1)} - \lambdaD^{(3)} \w_t^2 \Big) 
& \tag{By \Cref{lemma:potential_time_derivative}} \\
& = -4\sigmaCoeffFour^2 D_{4, t} ((\w_t')^2 - \w_t^2) - \lambdaD^{(3)} ((\w_t')^2 - \w_t^2)
\end{align}
According to the statement of \Cref{lem:1-d-traj}, $\lambdaD^{(3)} = 4 \sigmaCoeffFour^2 D_{4, t} \cdot \frac{6d + 9}{d^2 - 1}$, meaning that
\begin{align}
\frac{d}{dt} (\potential(\w_t') - \potential(\w_t)) 
& = -4 \sigmaCoeffFour^2 D_{4, t} ((\w_t')^2 - \w_t^2) - 4 \sigmaCoeffFour^2 D_{4, t} \cdot \frac{6d + 9}{d^2 - 1} ((\w_t')^2 - \w_t^2) \\
& = -4 \sigmaCoeffFour^2 D_{4, t} \cdot \Big(1 - \frac{6d + 9}{d^2 - 1}\Big) \cdot ((\w_t')^2 - \w_t^2) \\
& < 0
& \tag{B.c. $D_{4, t} \geq 0$ and $\w_t' > \w_t > 0$, and $d \geq c_3$ (\Cref{assumption:target})}
\end{align}
Thus, on any interval $[t_0, t_1]$ where $D_{4, t} \geq 0$, $\potential(\w_t') - \potential(\w_t)$ is decreasing, as desired.
\end{proof}

In the next lemma, we show an upper bound on $|\potential(\w) - \potential(\w')|$, at the beginning of Phase 3, Case 2, for particles $\w, \w'$ which are respectively initialized at $\iota, \iota' \in [\iotaL, \iotamid]$. Here $\iotamid$ is defined in the statement/proof of the following lemma.

\begin{lemma} \label{lemma:case_2_initial_potential}
Suppose we are in the setting of \Cref{thm:pop_main_thm}. Assume that $D_{4, T_2} = 0$ and $D_{2, T_2} < 0$, i.e. we are in Phase 3, Case 2. Then, there exists $\iotamid \in [\iotaL, \iotaR]$ such that $\Prob_{\iota \sim \rho_0}(|\iota| > \iotamid) \lesssim \gamma_4$ and for all $\iota, \iota' \in [\iotaL, \iotamid]$,
\begin{align}
\Big|\potential(\w_{T_2}(\iota)) - \potential(\w_{T_2}(\iota'))\Big|
& \leq 2 \log \frac{1}{\kappa} + \log \frac{1}{\xi} + C
\end{align}
for some universal constant $C$.
\end{lemma}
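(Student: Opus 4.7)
The plan is to select $\iotamid$ as the largest $\iota' \le \iotaR$ for which $\w_{T_2}(\iota') \le \tfrac{1}{2}$, use the identity $D_{4,T_2}=0$ (which gives $\E_{\rho_{T_2}}[w^4] = \gamma_4 + O(1/d)$) and Markov's inequality to bound $\Prob(|\iota|>\iotamid)\lesssim \gamma_4$, and then bound the potential difference by converting to a $\log w$ difference using that $|\potential(w)-\log w| = O(1)$ on $[0,\tfrac{1}{2}]$. The lower bound $\w_{T_2}(\iotaL) \gtrsim \xi\kappa^2$ from \Cref{lemma:phase_2_uniform_growth_corollary} then supplies the claimed estimate.

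The first step is to show $\TR \le T_2$, so that the Phase 2 uniform-growth estimate actually applies at time $T_2$. By \Cref{eq:legendre_polynomial_2_4}, $\PfourD(t)=t^4 \pm O(1/d)$, so $D_{4,T_2}=0$ yields $\E[\w_{T_2}^4] = \gamma_4 + O(1/d)$. Splitting this expectation according to $|\iota|\le \iotaR$ versus $|\iota|>\iotaR$, using \Cref{prop:particle_order} and symmetry (\Cref{lemma:rho_rotational_invariant}) to bound the first piece by $\w_{T_2}(\iotaR)^4$ and \Cref{prop:probability_of_relevant_interval} to bound the second piece by $O(\kappa^2)$, gives $\w_{T_2}(\iotaR)^4 \ge \gamma_4 - O(\kappa^2 + 1/d)$. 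Since $\gamma_4$ is a positive universal constant (\Cref{assumption:target}) while $\kappa, 1/d \to 0$, this forces $\w_{T_2}(\iotaR) \gg \xi$; combined with the monotonicity of $\w_t(\iotaR)$ on $[T_1,T_2]$ (\Cref{lemma:w_increasing_phase2}) and $\w_{T_1}(\iotaR) \ll \xi$, intermediate value theorem places $\TR \in [T_1,T_2]$.

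Second, define $\iotamid \defeq \min\bigl(\iotaR,\,\inf\{\iota'>0:\w_{T_2}(\iota')\ge \tfrac{1}{2}\}\bigr)$, with the convention that $\inf\emptyset = +\infty$. By construction $\w_{T_2}(\iotamid)\le \tfrac{1}{2}$. For the tail bound, if $\w_{T_2}(\iotaR)\le \tfrac{1}{2}$ then $\iotamid=\iotaR$ and \Cref{prop:probability_of_relevant_interval} gives $\Prob(|\iota|>\iotamid)\lesssim \kappa^2 \lesssim \gamma_4$ (since $\gamma_4$ is a positive universal constant and $d$ is large); otherwise $\w_{T_2}(\iotamid)=\tfrac{1}{2}$ exactly, and the symmetry of $\rho_{T_2}$ together with monotonicity (\Cref{prop:particle_order}) identify $\{|\iota|>\iotamid\}$ with $\{|\w_{T_2}|>\tfrac{1}{2}\}$, so Markov on the fourth moment gives $\Prob(|\iota|>\iotamid)\le 16\,\E[\w_{T_2}^4]\lesssim \gamma_4$. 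The requirement $\iotamid\ge \iotaL$ holds because $\iotamid<\iotaL$ would force $\w_{T_2}(\iota)>\tfrac{1}{2}$ for all $\iota\ge \iotaL$, yielding $\Prob(|\w_{T_2}|>\tfrac{1}{2})\ge \tfrac{1}{4}$, which contradicts the Markov bound for $\gamma_4$ small.

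Finally, for $\iota,\iota'\in[\iotaL,\iotamid]$, monotonicity places $\w_{T_2}(\iota),\w_{T_2}(\iota')\in[\w_{T_2}(\iotaL),\tfrac{1}{2}]$, and \Cref{lemma:phase_2_uniform_growth_corollary} combined with $\TR\le T_2$ and the monotonicity of $\w_t(\iotaL)$ on $[0,T_2]$ (\Cref{lemma:w_increasing_phase2}) gives $\w_{T_2}(\iotaL)\ge \w_{\TR}(\iotaL)\gtrsim \xi\kappa^2$. On $[0,\tfrac{1}{2}]$ the factor $-\tfrac{1}{2}\log(1-w^2)$ in $\potential(w)=\log w - \tfrac{1}{2}\log(1-w^2)$ is bounded by a universal constant, so
\[
\bigl|\potential(\w_{T_2}(\iota)) - \potential(\w_{T_2}(\iota'))\bigr| \;\le\; \log\frac{1/2}{\w_{T_2}(\iotaL)} + O(1) \;\le\; 2\log\frac{1}{\kappa} + \log\frac{1}{\xi} + C,
\]
as claimed. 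I expect the main (mild) obstacle to be verifying $\TR\le T_2$ cleanly and choosing $\iotamid$ so that simultaneously $\iotamid\in[\iotaL,\iotaR]$, $\w_{T_2}(\iotamid)\le \tfrac{1}{2}$, and $\Prob(|\iota|>\iotamid)\lesssim \gamma_4$; once these are in place the potential estimate reduces to the trivial computation above.
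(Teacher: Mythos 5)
Your proof is correct and takes essentially the same route as the paper's: extract $\E[\w_{T_2}^4]\approx\gamma_4$ from $D_{4,T_2}=0$, apply Markov's inequality to construct $\iotamid$ with tail mass $O(\gamma_4)$, and combine monotonicity with the Phase-2 lower bound $\w_{\TR}(\iota)\gtrsim\xi\kappa^2$ to sandwich $\potential(\w_{T_2}(\iota))$ between $\log(\xi\kappa^2)+O(1)$ and $\potential(1/2)$. The one addition you make — an explicit intermediate-value argument that $\TR\le T_2$, derived from the same fourth-moment identity at $T_2$ — is a valid and slightly more self-contained way to justify invoking \Cref{lemma:phase_2_uniform_growth_corollary} at time $T_2$; the paper leaves this implicit, with $T_2>\TR$ following from \Cref{lemma:phase1_d2_d4_neg} (which keeps $D_{2,t},D_{4,t}$ strictly negative for $t\le\TR$). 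Your edge-case checks (the infimum defining $\iotamid$ being empty, and ruling out $\iotamid<\iotaL$) are also correct and make explicit what the paper treats implicitly.
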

\begin{proof}[Proof of \Cref{lemma:case_2_initial_potential}]
Since $D_{4, T_2} = 0$, we have $\E_{\w \sim \rho_{T_2}}[\PfourD(\w)] = \gamma_4$. Thus, by \Cref{eq:legendre_polynomial_2_4},
\begin{align}
\frac{d^2 + 6d + 8}{d^2 - 1} \E_{\w \sim \rho_{T_2}} [\w^4] - \frac{6d + 12}{d^2 - 1} \E_{\w \sim \rho_{T_2}} [\w^2] + \frac{3}{d^2 - 1} = \gamma_4
\end{align}
Since $|\w_t(\iota)| \leq 1$ for all $t \geq 0$ and all $\iota \in [-1, 1]$, the left-hand side of the above equation is $\E_{\w \sim \rho_t} [\w^4] \pm O(\frac{1}{d})$, and rearranging gives
\begin{align}
\E_{\w \sim \rho_t} [\w^4]
& \leq \gamma_4 + O\Big(\frac{1}{d}\Big)
\end{align}
By Markov's inequality,
\begin{align}
\Prob_{\w \sim \rho_{T_2}}(|\w| \geq 1/2)
 \lesssim \gamma_4 + O\Big(\frac{1}{d}\Big) 
 \lesssim \gamma_4
\end{align}
where the second inequality is because $d \geq c_3$ by \Cref{assumption:target}. Let $\iota_0$ be such that $\w_{T_2}(\iota_0) = \frac{1}{2}$. Let $\iotamid = \min(\iota_0, \iotaR)$. Then, $\iotamid \in [\iotaL, \iotaR]$, and $\Prob_{\iota \sim \rho_0}(|\iota| > \iotamid) \lesssim \gamma_4$, because $\Prob_{\iota \sim \rho_0}(|\iota| > \iota_0) \lesssim \gamma_4$ and $\Prob_{\iota \sim \rho_0}(|\iota| > \iotaR) \lesssim O(\kappa)$ by \Cref{prop:probability_of_relevant_interval}, meaning that $\Prob_{\iota \sim \rho_0} (|\iota| > \iotamid) \lesssim \max(\gamma_4, \kappa) \lesssim \gamma_4$ (where the last inequality is because $d \geq c_3$ by \Cref{assumption:target}).

Now, let $\iota \in [\iotaL, \iotamid]$. Observe that on the interval $[0, T_2]$, for all $\iota$, $\w_t(\iota)$ is strictly increasing as a function of $t$, for all $\iota$ (see \Cref{lemma:w_increasing_phase2}). Thus, if $\TR$ is as defined in \Cref{lemma:phase_2_uniform_growth_corollary}, then for all $\iota \in [\iotaL, \iotaR]$, $|\w_{\TR}(\iota)| \gtrsim \xi \kappa^2$. In particular, for all $\iota \in [\iotaL, \iotamid]$, we can write $\w_{T_2}(\iota) \geq C \xi \kappa^2$ for some universal constant $C$. Additionally, for $\iota \leq \iotamid$, we have $\w_{T_2}(\iota) \leq \frac{1}{2}$ by \Cref{prop:particle_order}. Thus, since $\potential(\w)$ is increasing in $\w$, for $\iota \in [\iotaL, \iotamid]$ we have
\begin{align}
\potential(\w_{T_2}(\iota))
 \geq \potential(C \xi \kappa^2) 
 = \log\Big(\frac{C \xi \kappa^2}{\sqrt{1 - C^2 \xi^2 \kappa^4}}\Big) 
 \geq \log (C \xi \kappa^2) 
 = \log C + \log \xi + 2 \log \kappa
\end{align}
and
\begin{align}
\potential(\w_{T_2}(\iota))
 \leq \potential(1/2) 
 = \log\Big(\frac{1/2}{\sqrt{3/4}}\Big) 
 = \log(1/\sqrt{3}) 
 = -\frac{1}{2} \log 3
\end{align}
Thus, for any $\iota, \iota' \in [\iotaL, \iotamid]$,
\begin{align}
\Big|\potential(\w_{T_2}(\iota)) - \potential(\w_{T_2}(\iota'))\Big|
 \leq -\frac{1}{2} \log 3 - \Big(\log C + \log \xi + 2 \log \kappa\Big) 
 = 2 \log \frac{1}{\kappa} + \log \frac{1}{\xi} + C'
\end{align}
where we have defined the constant $C' = -\frac{1}{2} \log 3 - \log C$.
\end{proof}

In the next lemma, we show that for $\iota \in [\iotaL, \iotamid]$, during Phase 3, Case 2, $\w_t(\iota)$ can be bounded away from $0$ and $1$, provided that the conclusion of \Cref{lemma:case_2_initial_potential} holds at time $t$.

\begin{lemma} [Lower and Upper Bounds on Particles Between $\iotaL$ and $\iotamid$] \label{lemma:case_2_neurons_stay_big}
Suppose that we are in the setting of \Cref{thm:pop_main_thm}. Let $s \geq T_2$, and suppose that for all $t \in [T_2, s]$, $D_{4, t} \geq 0$ and $D_{2, t} \leq 0$. Further suppose that for all $t \in [T_2, s]$, the conclusion of \Cref{lemma:case_2_initial_potential} holds, i.e.
\begin{align} \label{eq:case2_potential_invariant_IH}
\Big|\potential(\w_s(\iota)) - \potential(\w_s(\iota'))\Big|
& \leq 2 \log \frac{1}{\kappa} + \log \frac{1}{\xi} + C
\end{align}
for all $\iota \in [\iotaL, \iotamid]$, where $\iotamid$ is such that $\Prob_{\iota \sim \rho_0}(|\iota| > \iotamid) \lesssim \gamma_4$, and for some universal constant $C$. Then, for any $t \in [T_2, s]$, for all $\iota \geq \iotaL$, $\w_t(\iota) \geq \xi^2 \kappa^2$. Furthermore, for all $t \in [T_2, s]$, we have $\w_t(\iotamid) \leq 1 - \kappa^4 \xi^3$.
\end{lemma}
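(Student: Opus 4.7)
The plan is to establish both inequalities by a first-time-violation argument that uses the potential inequality \eqref{eq:case2_potential_invariant_IH} to propagate information about a single boundary particle to essentially all particles, and then derives a contradiction with the sign constraint $D_{2,t}\le 0$. At $t=T_2$ the base case is easy: \Cref{lemma:phase_2_uniform_growth_corollary} combined with monotonicity during Phase 2 (\Cref{lemma:w_increasing_phase2}) gives $w_{T_2}(\iotaL)\gtrsim \xi\kappa^2>\xi^2\kappa^2$, and the construction of $\iotamid$ in \Cref{lemma:case_2_initial_potential} gives $w_{T_2}(\iotamid)\le 1/2<1-\kappa^4\xi^3$.

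For the lower bound, by \Cref{prop:particle_order} it suffices to prove $w_t(\iotaL)\ge \xi^2\kappa^2$. Suppose for contradiction that $t^*\in(T_2,s]$ is the first time $w_{t^*}(\iotaL)=\xi^2\kappa^2$. Using \eqref{eq:case2_potential_invariant_IH} at $t^*$ together with the monotonicity $w_{t^*}(\iota)\le w_{t^*}(\iotamid)$ for $\iota\in[\iotaL,\iotamid]$ (\Cref{prop:particle_order}), we get $\potential(w_{t^*}(\iota))\le \potential(\xi^2\kappa^2)+2\log(1/\kappa)+\log(1/\xi)+C$. Inverting $\potential$ (which, since $\potential(w)\approx\log w$ for small $w$) yields $w_{t^*}(\iota)\lesssim \xi$ for all such $\iota$, and \Cref{prop:particle_order} plus symmetry of $\rho_{t^*}$ (\Cref{lemma:rho_rotational_invariant}) extend this to $|w_{t^*}(\iota)|\lesssim \xi$ for all $|\iota|\le\iotamid$. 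Since $\Prob(|\iota|>\iotamid)\lesssim \gamma_4$, we conclude $\Ebasic[w^2]\lesssim \xi^2+\gamma_4$, which by \Cref{assumption:target} ($\gamma_4\le c_1\gamma_2^2$ and $\gamma_2$ sufficiently small) is at most $\gamma_2/2$. Hence $|D_{2,t^*}|\ge \gamma_2/2$, and plugging $w=\xi^2\kappa^2$ into the velocity formula of \Cref{lem:1-d-traj} gives $\frac{dw_{t^*}(\iotaL)}{dt}\gtrsim \sigmaCoeffTwo^2\gamma_2\xi^2\kappa^2>0$ (the cubic contribution from $D_{4,t^*}$ and the $Q_t$ contribution are both lower-order since $\sigmaCoeffFour^2\lesssim\sigmaCoeffTwo^2$, $D_{4,t^*}=O(1)$, and $d\ge c_3$). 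This contradicts that $t^*$ is the first downward crossing of $\xi^2\kappa^2$.

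For the upper bound, suppose for contradiction that $t^*\in(T_2,s]$ is the first time $w_{t^*}(\iotamid)=1-\kappa^4\xi^3$. A direct computation of $\potential(1-\kappa^4\xi^3)=\log\!\big((1-\kappa^4\xi^3)/\sqrt{1-(1-\kappa^4\xi^3)^2}\big)$ yields $\potential(w_{t^*}(\iotamid))\ge \log(1/(\kappa^2\xi^{3/2}))-O(1)$. Applying \eqref{eq:case2_potential_invariant_IH} and the monotonicity $w_{t^*}(\iota)\le w_{t^*}(\iotamid)$ for $\iota\in[\iotaL,\iotamid]$ gives $\potential(w_{t^*}(\iota))\ge \log(1/\sqrt{\xi})-O(1)$, from which the identity $\potential^{-1}(y)^2=e^{2y}/(1+e^{2y})$ produces $w_{t^*}(\iota)^2\ge 1-O(\xi)$. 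Combining with $\Prob(|\iota|\in[\iotaL,\iotamid])\ge 1-O(\kappa+\gamma_4)$ (via \Cref{prop:probability_of_relevant_interval} and the definition of $\iotamid$) and the symmetry of $\rho_{t^*}$, we obtain $\Ebasic[w^2]\ge 1-O(\kappa+\gamma_4+\xi)>\gamma_2$ by \Cref{assumption:target}. But $D_{2,t^*}\le 0$ forces $\Ebasic[w^2]\le \gamma_2+O(1/d)$, a contradiction.

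The main obstacle is bookkeeping: one must verify that the small quantities $\xi,\kappa,\gamma_4,1/d$ are uniformly dominated by $\gamma_2$ via \Cref{assumption:target} at every step, and that the lower-order terms in both the velocity (the cubic $D_{4,t^*}$ term and the $Q_t$ correction) and in the relations $\PtwoD(w)\approx w^2$, $1-w^2\approx 2(1-w)$ near $w=1$ do not swallow the leading contributions. The conceptual content—that a logarithmic potential inequality of width $\log(1/(\kappa^2\xi))$ cannot coexist with either a particle near $\xi^2\kappa^2$ or a particle near $1-\kappa^4\xi^3$ without violating $D_{2,t}\le 0$—is what forces both sides of the trapping region to hold.
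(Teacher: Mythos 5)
Your proof is correct and follows essentially the same strategy as the paper's: in both cases, a boundary particle near $\xi^2\kappa^2$ (respectively $1-\kappa^4\xi^3$) is propagated via the potential inequality to a bound on all particles with $\iota \in [\iotaL, \iotamid]$, which then forces $D_{2,t}$ to be strongly negative (respectively positive), and the contradiction is derived. For the lower bound there is a small but genuine difference in the local argument: the paper establishes the sharper bound $D_{4,t} \lesssim \gamma_4$ (using $\gamma_4 \lesssim \gamma_2^2 \leq \gamma_2$) in order to show $P_t(w)+Q_t(w) \leq 0$ for \emph{all} $w \in [0,1]$, whereas you only verify positivity of the velocity at the single point $w = \xi^2\kappa^2$, which is so small that the crude estimate $D_{4,t}=O(1)$ suffices to dominate the cubic term. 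Your version is marginally more economical since it skips the $D_{4,t}$ computation entirely, at the cost of needing the explicit first-hitting-time framing to convert the local velocity sign into the persistence claim; the paper gets the same conclusion by noting the velocity is everywhere non-negative at the hypothetical bad time and invoking the base case at $T_2$. Both reach the same lemma.
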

\begin{proof}[Proof of \Cref{lemma:case_2_neurons_stay_big}]
Consider some time $t \in [T_2, s]$. Suppose that for some $\iota \in [\iotaL, \iotamid]$, $\w_t(\iota) \leq \xi^2 \kappa^2$. Also, suppose $\iota' \in [\iotaL, \iotamid]$ such that $\iota' > \iota$. Then, 
\begin{align}
\potential(\w_t(\iota'))
& \leq \potential(\w_t(\iota)) + 2 \log \frac{1}{\kappa} + \log \frac{1}{\xi} + C
& \tag{By \Cref{eq:case2_potential_invariant_IH}, and b.c. $\potential$ is increasing, $\iota' > \iota$ and \Cref{prop:particle_order}} \\
& \leq \log\Big(\frac{\xi^2 \kappa^2}{\sqrt{1 - \xi^4 \kappa^4}}\Big) + 2 \log\frac{1}{\kappa} + \log \frac{1}{\xi} + C \\
& \leq \log (2\xi^2\kappa^2) + 2 \log \frac{1}{\kappa} + \log \frac{1}{\xi} + C
& \tag{B.c. $d \geq c_3$, \Cref{assumption:target}} \\
& = \log \xi + C'
& \tag{For some universal constant $C'$}
\end{align}
Thus, because $\w \leq \frac{\w}{\sqrt{1 - \w^2}}$,
\begin{align}
\log \w_t(\iota')
 \leq \log\Big(\frac{\w_t(\iota')}{\sqrt{1 - (\w_t(\iota'))^2}} \Big)
 \leq \log \xi + C'
\end{align}
Thus, $\w_t(\iota') \lesssim \xi$ and for all $\iota \in [\iotaL, \iotamid]$, we have shown that $\w_t(\iota) \lesssim \xi$. In addition, $\Prob_{\iota \sim \rho_0} (|\iota| > \iotamid) \lesssim \gamma_4$. We can use this to calculate $D_{2, t}$ and $D_{4, t}$:
\begin{align}
D_{2, t}
& = \E_{\w \sim \rho_t} \Big[ \frac{d}{d - 1} \w^2 - \frac{1}{d - 1}\Big] - \gamma_2
& \tag{By definition of $\PtwoD$, (\Cref{eq:legendre_polynomial_2_4})} \\
& \leq \E_{\w \sim \rho_t} [\w^2] - \gamma_2
& \tag{B.c. $w \leq 1$} \\
& \leq \Prob_{\iota \sim \rho_0} (|\iota| \leq \iotamid) \E[\w_t(\iota)^2 \mid |\iota| \leq \iotamid] + \Prob_{\iota \sim \rho_0} (|\iota| > \iotamid) \E[\w_t(\iota)^2 \mid |\iota| \geq \iotamid] - \gamma_2 \\
& \leq O(\xi^2) + \gamma_4 - \gamma_2
\tag{B.c. $\w_t(\iota) \lesssim \xi$ for $\iota \in [\iotaL, \iotamid]$ and $\Prob(|\iota| > \iotamid) \lesssim \gamma_4$} \\
& \lesssim -\gamma_2
\tag{B.c. $\gamma_4 \leq c_1 \gamma_2^2$, $\gamma_2 \leq c_2$ ($c_2$ chosen after $c_1$) and $d \geq c_3$, \Cref{assumption:target}}
\end{align}
and
\begin{align}
D_{4, t}
& = \E_{\w \sim \rho_t} \Big[\frac{d^2 + 6d + 8}{d^2 - 1} \w^4 - \frac{6d + 12}{d^2 - 1} \w^2 + \frac{3}{d^2 - 1}\Big] - \gamma_4 & \tag{By definition of $\PfourD$, \Cref{eq:legendre_polynomial_2_4}} \\
& = \E_{\w \sim \rho_t} [\w^4] - \gamma_4 \pm O\Big(\frac{1}{d}\Big) \\
& = \Prob_{\iota \sim \rho_0} (|\iota| \leq \iotamid) \E[\w_t(\iota)^4 \mid |\iota| \leq \iotamid] + \Prob_{\iota \sim \rho_0} (|\iota| > \iotamid) \E[\w_t(\iota)^4 \mid |\iota| > \iotamid] - \gamma_4 \pm O\Big(\frac{1}{d}\Big) \\
& \leq O(\xi^4) + O(\gamma_4) - \gamma_4 \pm O\Big(\frac{1}{d}\Big)
& \tag{B.c. $\w_t(\iota) \lesssim \xi$ for $\iota \in [\iotaL, \iotamid]$ and $\Prob(\iota > \iotamid) \lesssim \gamma_4$} \\
& \leq O(\xi^4 + \gamma_4) \\
& \lesssim \gamma_4
& \tag{B.c. $d \geq c_3$, by \Cref{assumption:target}}
\end{align}

To complete the proof, we will show that if these bounds on $D_{2, t}$ and $D_{4, t}$ hold, then for any $\w$, $\vel_t(\w) \geq 0$. By \Cref{lem:1-d-traj}, it suffices to show that $P_t(\w) + Q_t(\w) \leq 0$ for $\w \geq 0$, where $P_t$ and $Q_t$ are as defined in \Cref{lem:1-d-traj}. If $C_1$ and $C_2$ are two appropriately chosen positive universal constants, then by the above upper bounds on $D_{2, t}$ and $D_{4, t}$,
\begin{align}
P_t(\w) + Q_t(\w)
& = 2 \sigmaCoeffTwo^2 D_{2, t} \w + 4 \sigmaCoeffFour^2 D_{4, t} \w^3 + \lambdaD^{(1)} \w + \lambdaD^{(3)} \w^3 \\
& \leq -C_1 \sigmaCoeffTwo^2 \gamma_2 \w + C_2 \sigmaCoeffFour^2 \cdot \gamma_4 \w^3 + O\Big(\frac{\sigmaCoeffTwo^2 |D_{2, t}| + \sigmaCoeffFour^2 |D_{4, t}|}{d}\Big) \w \\
&\nextlinespace + O\Big(\frac{\sigmaCoeffFour^2 |D_{4, t}|}{d}\Big) \w^3
& \tag{By definition of $\lambdaD^{(1)}, \lambdaD^{(3)}$ in \Cref{lem:1-d-traj} and above bounds on $D_{2, t}, D_{4, t}$} \\
& \leq -C_1' \sigmaCoeffTwo^2 \gamma_2 \w + O\Big(\frac{\sigmaCoeffTwo^2 |D_{2, t}| + \sigmaCoeffFour^2 |D_{4, t}|}{d}\Big) |\w| 
& \tag{B.c. $\gamma_4 \lesssim \gamma_2^2 \leq \gamma_2$ and $\sigmaCoeffFour^2 \lesssim \sigmaCoeffTwo^2$ by \Cref{assumption:target}, with new constant $C_1'$} \\
& \lesssim -\sigmaCoeffTwo^2 \gamma_2 \w
& \tag{B.c. $d \geq c_3$, \Cref{assumption:target}}
\end{align}

In summary, if $\w_t(\iota) = \xi^2 \kappa^2$ for some $\iota \in [\iotaL, \iotamid]$, then $\vel_t(\w) > 0$ for all $\w \in [0, 1]$. In addition, at time $T_2$, we have $\w_t(\iota) \gtrsim \xi \kappa^2$ for all $\iota > \iotaL$ by \Cref{lemma:w_increasing_phase2} and \Cref{lemma:phase_2_uniform_growth_corollary}. Thus, for all $\iota \geq \iotaL$, $\w_t(\iota)$ will never go below $\xi^2 \kappa^2$ for $t \in [T_2, s]$, as desired.

To finish the proof, we will show the upper bound on $\w_t(\iotamid)$. Suppose $1 - \w_t(\iotamid) \leq \kappa^4 \xi^3$. Then,
\begin{align}
\potential(\w_t(\iotamid))
& = \log\Big(\frac{\w_t(\iotamid)}{\sqrt{1 - \w_t(\iotamid)^2}}\Big) \\
& \geq \log \frac{1}{2} - \frac{1}{2} \log (1 - \w_t(\iotamid)^2)
& \tag{B.c. $\w_t(\iotamid) \geq \frac{1}{2}$} \\
& \geq \log \frac{1}{2} - \frac{1}{2} \log (1 - (1 - \kappa^4 \xi^3)^2 ) \\
& = \log \frac{1}{2} - \frac{1}{2} \log (2 \kappa^4 \xi^3 - \kappa^{8} \xi^{6}) \\
& \geq \log \frac{1}{2} - \frac{1}{2} \log (\kappa^4 \xi^3) 
& \tag{B.c. $\kappa, \xi \leq 1$ and $\kappa^4 \xi^3 - (\kappa^4 \xi^3)^2 \geq 0$} \\
& = \log \frac{1}{2} + 2 \log \frac{1}{\kappa} + \frac{3}{2} \log \frac{1}{\xi}
\end{align}
Thus, for any $\iota \in [\iotaL, \iotamid]$,
\begin{align}
\potential(\w_t(\iota))
& \geq C + \frac{1}{2} \log \frac{1}{\xi}
\end{align}
for some universal constant $C$. In other words, letting $\w_t := \w_t(\iota)$ and rearranging using the definition of $\potential$, we obtain
\begin{align}
\frac{\w_t}{\sqrt{1 - \w_t^2}}
& \gtrsim \frac{1}{\xi^{1/2}}
\end{align}
or
\begin{align}
\sqrt{1 - \w_t^2}
\lesssim \xi^{1/2} \w_t
\lesssim \xi^{1/2}
\end{align}
which finally gives $1 - \w_t^2 \lesssim \xi$, or $\w_t^2 \geq 1 - O(\xi)$. In other words, for all $\iota \geq \iotaL$, we have $\w_t(\iota) \geq 1 - O(\xi)$, which implies that
\begin{align}
D_{2, t}
& = \E_{\w \sim \rho_t}[\PtwoD(\w)] - \gamma_2 \\
& \gtrsim \E_{\w \sim \rho_t} [\w^2] - \gamma_2 - O\Big(\frac{1}{d}\Big) \\
& \geq \Prob_{\iota \sim \rho_0} (|\iota| \geq \iotaL) \E_{\w \sim \rho_t}[\w^2 \mid |\iota| \geq \iotaL] - \gamma_2 - O\Big(\frac{1}{d}\Big) \\
& \geq (1 - O(\kappa)) \cdot (\E_{\w \sim \rho_t}[\w^2 \mid |\iota| \geq \iotaL]) - \gamma_2 - O\Big(\frac{1}{d}\Big)
& \tag{By \Cref{prop:probability_of_relevant_interval}} \\
& \geq (1 - O(\kappa)) (1 - O(\xi)) - \gamma_2 - O\Big(\frac{1}{d}\Big)
& \tag{B.c. if $|\iota| > \iotaL$, then $\w_t(\iota)^2 \geq 1 - O(\xi)$} \\
& \geq 1 - O(\xi + \kappa) - \gamma_2 
& \tag{B.c. $1/d$ is small by \Cref{assumption:target}} \\
& > 0
& \tag{By \Cref{assumption:target} b.c. $\gamma_2$ small and $d$ large}
\end{align}
and this contradicts the assumption that for all $t \in [T_2, s]$, $D_{4, t} \geq 0$ and $D_{2, t} \leq 0$. Thus, for all $t \in [T_2, s]$, $1 - \w_t(\iotamid) \geq \kappa^4 \xi^3$, i.e. $\w_t(\iotamid) \leq 1 - \kappa^4 \xi^3$.
\end{proof}

The purpose of the next two lemmas is to show that if Phase 3, Case 2 occurs, then for all $t \geq T_2$, we will have $D_{2, t} \leq 0$ and $D_{4, t} \geq 0$, assuming the conclusion of \Cref{lemma:case_2_neurons_stay_big} holds. We will later put these next two lemmas with the above lemma, \Cref{lemma:case_2_neurons_stay_big}, in order to inductively show that the hypothesis of \Cref{lemma:case_2_neurons_stay_big} holds for $t \geq T_2$ (note that \Cref{lemma:case_2_neurons_stay_big} assumes that the upper bound on the potential difference holds on an interval $[T_2, s]$, and more importantly assumes that $D_{2, t} \leq 0$ and $D_{4, t} \geq 0$ for all $t \in [T_2, s]$).

\begin{lemma} \label{lemma:phase3_case2_fixedA}
Suppose we are in the setting of \Cref{thm:pop_main_thm}, and suppose $D_{4, t} = 0$ and $D_{2, t} < 0$ for some $t \geq 0$. Additionally, suppose that the conclusion of \Cref{lemma:case_2_neurons_stay_big} holds at time $t$, i.e. for all $\iota \geq \iotaL$, $\w_t(\iota) \geq \xi^2 \kappa^2$, and $\w_t(\iotamid) \leq 1 - \kappa^4 \xi^3$. Then, $\frac{d}{ds} D_{4, s} \big|_{s = t} > 0$.
\end{lemma}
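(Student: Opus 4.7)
The plan is to differentiate $D_{4,t} = \E_{w \sim \rho_t}[P_{4,d}(w)] - \gamma_4$ directly using the one-dimensional dynamics from \Cref{lem:1-d-traj}, obtaining
\begin{align}
\frac{d}{dt} D_{4,t} = \E_{w \sim \rho_t}\bigl[P_{4,d}'(w)\, v_t(w)\bigr],
\end{align}
and to argue that the integrand is nonnegative on a constant fraction of the mass, with a quantitative lower bound that dominates any negative contributions from particles near $0$.

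First I would substitute $D_{4,t}=0$ and $D_{2,t}<0$ into \Cref{lem:1-d-traj} to get $P_t(w) = -2\sigmaCoeffTwo^2|D_{2,t}|w$ and $Q_t(w) = \lambdaD^{(1)} w + \lambdaD^{(3)} w^3$ with $|\lambdaD^{(1)}|,|\lambdaD^{(3)}| \lesssim \sigmaCoeffTwo^2 |D_{2,t}|/d$ (the $D_{4,t}$ contribution to $Q_t$ vanishes since $D_{4,t}=0$). Thus $v_t(w) = (1-w^2)w\cdot 2\sigmaCoeffTwo^2|D_{2,t}|\cdot(1 \pm O(1/d))$, so $v_t(w)$ has the sign of $w$ once $|w| \gtrsim 1/\sqrt{d}$. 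Similarly, from the formula $P_{4,d}'(w) = 4w^3 \pm O(|w|/d)$ derived from \Cref{eq:legendre_polynomial_2_4}, $P_{4,d}'(w)$ has the sign of $w$ when $|w| \gtrsim 1/\sqrt{d}$. Consequently $P_{4,d}'(w) v_t(w) \geq 0$ in that regime, while for $|w| \lesssim 1/\sqrt{d}$ we have the crude bound $|P_{4,d}'(w) v_t(w)| \lesssim \sigmaCoeffTwo^2 |D_{2,t}|\, w^2 \lesssim \sigmaCoeffTwo^2 |D_{2,t}|/d$, and conditioning on this small-$w$ regime contributes at most $O(\sigmaCoeffTwo^2 |D_{2,t}|/d^2)$ to the expectation.

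Next, I would isolate a constant-probability set of particles with a quantitative lower bound on $P_{4,d}'(w) v_t(w)$. By the hypothesis of the lemma together with \Cref{prop:particle_order} and the symmetry of $\rho_t$ (\Cref{lemma:rho_rotational_invariant}), every particle with $|\iota| \in [\iotaL, \iotamid]$ satisfies $|w_t(\iota)| \in [\xi^2\kappa^2,\, 1-\kappa^4\xi^3]$. By \Cref{prop:probability_of_relevant_interval} and the construction of $\iotamid$, this event has probability $\geq 1 - O(\kappa) - O(\gamma_4) = \Omega(1)$ by \Cref{assumption:target}. For such particles, $P_{4,d}'(w) \gtrsim w^3 \gtrsim \xi^6\kappa^6$ and $v_t(w) \gtrsim (1-w^2)|w|\sigmaCoeffTwo^2|D_{2,t}| \gtrsim \kappa^4\xi^3 \cdot \xi^2\kappa^2 \cdot \sigmaCoeffTwo^2|D_{2,t}| = \sigmaCoeffTwo^2 |D_{2,t}|\, \xi^5\kappa^6$, so $P_{4,d}'(w)v_t(w) \gtrsim \sigmaCoeffTwo^2 |D_{2,t}|\, \xi^{11}\kappa^{12}$ on this set.

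Combining the pieces,
\begin{align}
\frac{d}{dt} D_{4,t} \gtrsim \sigmaCoeffTwo^2 |D_{2,t}|\, \xi^{11}\kappa^{12} - O\!\left(\frac{\sigmaCoeffTwo^2 |D_{2,t}|}{d^2}\right) > 0,
\end{align}
where positivity follows because $d \geq c_3$ is taken sufficiently large compared to $1/(\xi^{11}\kappa^{12}) = (\log\log d)^{O(1)}$, per \Cref{assumption:target}. The main obstacle in carrying this out rigorously is simply bookkeeping: confirming that the lower-order terms from $Q_t(w)$ and from the $O(|w|/d)$ correction to $P_{4,d}'(w)$ (both of which can flip signs near the origin) only produce $O(\sigmaCoeffTwo^2|D_{2,t}|/d^2)$ contributions to the expectation. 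This mirrors the splitting argument used in \Cref{lemma:phase3_case1_fixedA}, and no new analytic idea beyond the hypotheses already supplied by \Cref{lemma:case_2_neurons_stay_big} is needed.
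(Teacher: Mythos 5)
Your proposal is correct and follows essentially the same route as the paper's proof: same decomposition of the expectation into $|w| \lesssim 1/\sqrt{d}$ and a constant-mass interval $[\xi^2\kappa^2, 1-\kappa^4\xi^3]$ supplied by the hypothesis, same crude bound on the small-$|w|$ contribution, same lower bound $\sigmaCoeffTwo^2 |D_{2,t}|\, \xi^{11}\kappa^{12}$ on the good set, and same final comparison with $\sigmaCoeffTwo^2 |D_{2,t}|/d^2$. The only cosmetic difference is that you spell out explicitly that $P_{4,d}'(w) v_t(w) \geq 0$ for all $|w| \gtrsim 1/\sqrt{d}$ before restricting to the middle interval, which the paper uses implicitly when it drops the mass outside $[\xi^2\kappa^2, 1-\kappa^4\xi^3]$.
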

\begin{proof}
Recall from \Cref{eq:legendre_polynomial_2_4} that
\begin{align}
\PfourD(t)
& = \frac{d^2 + 6d + 8}{d^2 - 1} t^4 - \frac{6d + 12}{d^2 - 1} t^2 + \frac{3}{d^2 - 1}
\end{align}
meaning that
\begin{align} \label{eq:p4_derivative}
\PfourD'(t)
 = 4 t^3 \pm O\Big(\frac{1}{d}\Big) |t|
\end{align}
Next we compute the derivative of $D_{4, t}$:
\begin{align}
\frac{d}{dt} D_{4, t}
 = \frac{d}{dt} \E_{\w \sim \rho_t}[\PfourD(\w)] 
 = \E_{\w \sim \rho_t} [\PfourD'(\w) \cdot \vel_t(\w)]
\end{align}
For positive particles $\w \gtrsim \frac{1}{\sqrt{d}}$, we have $\PfourD'(\w) \geq 0$, and for $\w \lesssim \frac{1}{\sqrt{d}}$, it may be the case that $\PfourD'(\w) \leq 0$. Furthermore, when $D_{4, t} = 0$ and $D_{2, t} \leq 0$, we have
\begin{align}
\vel_t(\w)
& = -(1 - \w^2)\Big(2 \sigmaCoeffTwo^2 D_{2, t} \w + O\Big(\frac{\sigmaCoeffTwo^2 |D_{2, t}|}{d}\Big) |\w| \Big)
& \tag{By \Cref{lem:1-d-traj} and because $D_{4, t} = 0$} \\
& = (1 - \w^2) \cdot \Big(2 \sigmaCoeffTwo^2 |D_{2, t}| \w \pm O\Big(\frac{\sigmaCoeffTwo^2 |D_{2, t}|}{d}\Big) |\w| \Big)
& \tag{B.c. $D_{2, t} \leq 0$} \\
& = (1 - \w^2) \cdot 2 \sigmaCoeffTwo^2 |D_{2, t}| \w \cdot (1 \pm O(1/d)) \label{eq:vt_bound_d4_0}
\end{align}
Finally, to lower bound $\frac{d}{dt} D_{4, t}$, we must show that there is a significant fraction of particles for which $\w \gtrsim \frac{1}{\sqrt{d}}$ and $1 - \w$ is large. Because we have $\w_t(\iota) \geq \xi^2 \kappa^2$ for all $\iota \geq \iotaL$, and $\Prob(|\iota| < \iotaL) \lesssim \kappa$ by \Cref{prop:probability_of_relevant_interval}, using a union bound we therefore know that
\begin{align} \label{eq:using_invariant}
\Prob_{\w \sim \rho_t}(|\w| \in [\xi^2 \kappa^2, 1 - \kappa^4 \xi^3])
 \geq 1 - O(\gamma_4) - O(\kappa) \geq 1 - O(\gamma_4)
\end{align}
where the second inequality is by \Cref{assumption:target} since $d$ is sufficiently large. Thus,
\begin{align}
\frac{d}{dt} D_{4, t}
& = \E_{\w \sim \rho_t} [\PfourD'(\w) \cdot \vel_t(\w)] \\
& = \Prob_{\w \sim \rho_t} (|\w| \lesssim 1/\sqrt{d}) \E_{\w \sim \rho_t} [\PfourD'(\w) \cdot \vel_t(\w) \mid |\w| \lesssim 1/ \sqrt{d}] \\
& \nextlinespace\nextlinespace + \Prob_{\w \sim \rho_t} (|\w| \gtrsim 1/\sqrt{d}) \E_{\w \sim \rho_t} [\PfourD'(\w) \cdot \vel_t(\w) \mid |\w| \gtrsim 1/\sqrt{d}] \\
& \geq - O\Big(\frac{1}{d^{3/2}} \cdot \frac{\sigmaCoeffTwo^2 |D_{2, t}|}{d^{1/2}}\Big) + \Prob_{\w \sim \rho_t} (|\w| \gtrsim 1/\sqrt{d}) \E_{\w \sim \rho_t} [\PfourD'(\w) \cdot \vel_t(\w) \mid |\w| \gtrsim 1/\sqrt{d}] 
& \tag{By bounding $|\PfourD'(\w) \cdot \vel_t(\w)|$ when $|\w| \lesssim 1/\sqrt{d}$} \\
& \geq - O\Big(\frac{\sigmaCoeffTwo^2 |D_{2, t}|}{d^2}\Big) + \Prob_{\w \sim \rho_t}(|\w| \in [\xi^2 \kappa^2, 1 - \kappa^4 \xi^3]) \\
& \nextlinespace\nextlinespace\nextlinespace \E_{\w \sim \rho_t} [\PfourD'(\w) \cdot \vel_t(\w) \mid |\w| \in [\xi^2 \kappa^2, 1 - \kappa^4 \xi^3]] \\
& \geq - O\Big(\frac{\sigmaCoeffTwo^2 |D_{2, t}|}{d^2}\Big) + (1 - O(\gamma_4)) \E_{\w \sim \rho_t} [\PfourD'(\w) \cdot \vel_t(\w) \mid |\w| \in [\xi^2 \kappa^2, 1 - \kappa^4 \xi^3]]
& \tag{By \Cref{eq:using_invariant}} \\
& \gtrsim - O\Big(\frac{\sigmaCoeffTwo^2 |D_{2, t}|}{d^2}\Big) + (\xi^2 \kappa^2)^3 \cdot (1 - (1 - \kappa^4 \xi^3)^2) \sigmaCoeffTwo^2 |D_{2, t}| (\xi^2 \kappa^2)
\end{align}
where the last inequality is by \Cref{eq:p4_derivative} and applying the above bound on $\vel_t(\w)$ from \Cref{eq:vt_bound_d4_0} for $|\w| \in [\xi^2 \kappa^2, 1 - \kappa^4 \xi^3]$, and because $d \geq c_3$ by \Cref{assumption:target}. Expanding the last line gives
\begin{align}
\frac{d}{dt} D_{4, t}
& \gtrsim -O\Big(\frac{\sigmaCoeffTwo^2 |D_{2, t}|}{d^2}\Big) + \sigmaCoeffTwo^2 |D_{2, t}| \xi^6 \kappa^6 \kappa^4 \xi^3 \xi^2 \kappa^2 \\
& \gtrsim -O\Big(\frac{\sigmaCoeffTwo^2 |D_{2, t}|}{d^2}\Big) + \sigmaCoeffTwo^2 |D_{2, t}| \xi^{11} \kappa^{12}
\end{align}
and the right hand side is strictly positive since $d$ is sufficiently large by \Cref{assumption:target}.
\end{proof}

\begin{lemma} \label{lemma:phase3_case2_fixedB}
Suppose we are in the setting of \Cref{thm:pop_main_thm}, and suppose $D_{4, t} > 0$ and $D_{2, t} = 0$ for some $t \geq 0$. Additionally, suppose that the conclusion of \Cref{lemma:case_2_neurons_stay_big} holds at time $t$, i.e. for all $\iota \geq \iotaL$, $\w_t(\iota) \geq \xi^2 \kappa^2$, and $\w_t(\iotamid) \leq 1 - \kappa^4 \xi^3$. Then, $\frac{d}{ds} D_{2, s} \big|_{s = t} < 0$.
\end{lemma}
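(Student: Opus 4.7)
The plan is to mirror the strategy of \Cref{lemma:phase3_case2_fixedA}, substituting $P_{2,d}$ for $P_{4,d}$ and exploiting the fact that when $D_{2,t}=0$, the velocity $v_t(w)$ is controlled entirely by the cubic $\sigmaCoeffFour^2 D_{4,t} w^3$ term (plus lower-order $1/d$ corrections). First, I compute
\begin{align}
\frac{d}{ds}D_{2,s}\Big|_{s=t}
= \tfrac{2d}{d-1}\,\Exp_{w\sim\rho_t}\!\big[w\,v_t(w)\big],
\end{align}
using $P_{2,d}'(w)=\tfrac{2d}{d-1}w$. The aim is to show this quantity is strictly negative.

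Next, I specialize \Cref{lem:1-d-traj} to the hypothesis $D_{2,t}=0$. In that case $\lambdaD^{(1)}=-2\sigmaCoeffFour^2 D_{4,t}\cdot\tfrac{6d+12}{d^2-1}$, $\lambdaD^{(3)}=4\sigmaCoeffFour^2 D_{4,t}\cdot\tfrac{6d+9}{d^2-1}$, and $P_t(w)=4\sigmaCoeffFour^2 D_{4,t} w^3$, so
\begin{align}
v_t(w) = -(1-w^2)\Big(4\sigmaCoeffFour^2 D_{4,t} w^3 \;\pm\; O\!\big(\tfrac{\sigmaCoeffFour^2 D_{4,t}}{d}\big)|w|\Big).
\end{align}
Consequently, for $|w|\gtrsim 1/\sqrt d$ the cubic term dominates and $w\,v_t(w) = -(1-w^2)\cdot 4\sigmaCoeffFour^2 D_{4,t} w^4 \cdot (1\pm O(1/(dw^2)))$, which is strictly negative (since $D_{4,t}>0$). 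Crucially, $w\,v_t(w)$ is an even function of $w$, so the sign is the same for positive and negative particles, preventing any sign cancellation when taking the expectation.

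I then split the expectation into $\{|w|\lesssim 1/\sqrt d\}$ and $\{|w|\in[\xi^2\kappa^2,\,1-\kappa^4\xi^3]\}$ and (possibly) a remainder, exactly as in the proof of \Cref{lemma:phase3_case2_fixedA}. On the small-$|w|$ region, $|w\,v_t(w)|\lesssim \sigmaCoeffFour^2 D_{4,t}/d^2$, giving a negligible contribution. The hypothesis that the conclusion of \Cref{lemma:case_2_neurons_stay_big} holds at time $t$, together with \Cref{prop:probability_of_relevant_interval} and $\Prob(|\iota|>\iotamid)\lesssim\gamma_4$, gives $\Prob_{w\sim\rho_t}(|w|\in[\xi^2\kappa^2,\,1-\kappa^4\xi^3])\geq 1-O(\gamma_4)$. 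On this good event, $w^4\geq\xi^8\kappa^8$ and $1-w^2\geq 1-(1-\kappa^4\xi^3)^2\gtrsim\kappa^4\xi^3$, so
\begin{align}
w\,v_t(w)\;\lesssim\;-\sigmaCoeffFour^2 D_{4,t}\,\xi^{11}\kappa^{12}.
\end{align}
Combining these two contributions,
\begin{align}
\tfrac{d}{ds}D_{2,s}\Big|_{s=t}
\;\lesssim\;
-\,(1-O(\gamma_4))\,\sigmaCoeffFour^2 D_{4,t}\,\xi^{11}\kappa^{12}\;+\;O\!\Big(\tfrac{\sigmaCoeffFour^2 D_{4,t}}{d^2}\Big),
\end{align}
and the first term dominates since $d\geq c_3$ by \Cref{assumption:target}, yielding $\tfrac{d}{ds}D_{2,s}|_{s=t}<0$.

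The only mildly delicate step is checking that the $O(1/d)$ corrections from $Q_t(w)$ and from the coefficient discrepancies in $P_{2,d}$ are truly lower-order once we restrict to $|w|\geq\xi^2\kappa^2$; this reduces to verifying $\tfrac{1}{d\,w^2}\ll 1$, which holds because $d\xi^4\kappa^4\gg 1$ under \Cref{assumption:target}. No other obstacle appears; the argument is essentially parallel to \Cref{lemma:phase3_case2_fixedA} with the roles of the two Legendre coefficients swapped.
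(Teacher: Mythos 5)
Your proposal is correct and follows essentially the same route as the paper's own proof: compute $\frac{d}{ds}D_{2,s}|_{s=t} = \frac{2d}{d-1}\Exp_{w\sim\rho_t}[w\,v_t(w)]$, plug in $D_{2,t}=0$ to get $v_t(w) = -(1-w^2)(4\sigmaCoeffFour^2 D_{4,t}w^3 \pm O(\sigmaCoeffFour^2 D_{4,t}/d)|w|)$, split the expectation into $|w|\lesssim 1/\sqrt d$ (negligible) versus $|w|\in[\xi^2\kappa^2, 1-\kappa^4\xi^3]$ (where \Cref{lemma:case_2_neurons_stay_big} and \Cref{prop:probability_of_relevant_interval} give mass $\geq 1-O(\kappa)-O(\gamma_4)$ and a contribution $\lesssim -\sigmaCoeffFour^2 D_{4,t}\xi^{11}\kappa^{12}$), and conclude. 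The paper also notes that for $|w|\gtrsim 1/\sqrt d$ the velocity has a definite sign so the remainder outside the good interval can only help; you achieved the same effect by observing $w\,v_t(w)$ is even, which is a slightly different phrasing of the same point.
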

\begin{proof}
Recall from \Cref{eq:legendre_polynomial_2_4} that
\begin{align}
\PtwoD(t)
& = \frac{d}{d - 1} t^2 - \frac{1}{d - 1}
\end{align}
meaning that
\begin{align}
\frac{d}{dt} D_{2, t}
 = \frac{d}{dt} \E_{\w \sim \rho_t} [\PtwoD(t)]
 = \frac{2d}{d - 1} \E_{\w \sim \rho_t} [\w \cdot \vel_t(\w)]
\end{align}
Suppose $D_{2, t} = 0$ and $D_{4, t} > 0$. Then, the velocity is
\begin{align}
\vel_t(\w)
& = -(1 - \w^2) (P_t(\w) + Q_t(\w)) 
& \tag{By \Cref{lem:1-d-traj}} \\
& = -(1 - \w^2) \Big(4 \sigmaCoeffFour^2 D_{4, t} \w^3 \pm O\Big(\frac{\sigmaCoeffFour^2 |D_{4, t}|}{d}\Big) |\w| \Big)
& \tag{By \Cref{lem:1-d-traj}} \\
& = -(1 - \w^2) \cdot 4 \sigmaCoeffFour^2 D_{4, t} \Big(\w^3 \pm \frac{|\w|}{d}\Big) \label{eq:d2_0_d4>0_vel_bound}
\end{align}
Thus, for positive particles $\w$, if $\w \gtrsim \frac{1}{\sqrt{d}}$, $\vel_t(\w) \leq 0$, and for $\w \lesssim \frac{1}{\sqrt{d}}$, the velocity is potentially positive. Using this, we can upper bound $\frac{d}{dt} D_{2, t}$ by upper bounding $\E_{\w \sim \rho_t}[\w \cdot \vel_t(\w)]$:
\begin{align}
\E_{\w \sim \rho_t}[\w \cdot \vel_t(\w)]
& = \Prob_{\w \sim \rho_t}\Big(|\w| \lesssim \frac{1}{\sqrt{d}}\Big) \E_{\w \sim \rho_t}\Big[\w \cdot \vel_t(\w) \mid |\w| \lesssim \frac{1}{\sqrt{d}} \Big] \\
& \nextlinespace\nextlinespace + \Prob_{\w \sim \rho_t} \Big(|\w| \gtrsim \frac{1}{\sqrt{d}}\Big) \E_{\w \sim \rho_t}\Big[\w \cdot \vel_t(\w) \mid |\w| \gtrsim \frac{1}{\sqrt{d}}\Big] \\
& \leq O\Big(\frac{1}{d^{1/2}} \cdot \frac{\sigmaCoeffFour^2 |D_{4, t}|}{d^{3/2}}\Big) + \Prob_{\w \sim \rho_t} \Big(|\w| \gtrsim \frac{1}{\sqrt{d}}\Big) \E_{\w \sim \rho_t}\Big[\w \cdot \vel_t(\w) \mid |\w| \gtrsim \frac{1}{\sqrt{d}}\Big]
& \tag{By bounding $\w \cdot \vel_t(\w)$ for $0 \leq \w \lesssim \frac{1}{\sqrt{d}}$} \\
& \leq O\Big(\frac{\sigmaCoeffFour^2 |D_{4, t}|}{d^2}\Big) + \Prob_{\w \sim \rho_t}\Big(|\w| \in [\xi^2 \kappa^2, 1 - \kappa^4 \xi^3]\Big) \\
& \nextlinespace\nextlinespace\nextlinespace \E_{\w \sim \rho_t} \Big[\w \cdot \vel_t(\w) \mid |\w| \in [\xi^2 \kappa^2, 1 - \kappa^4 \xi^3]\Big]
& \tag{B.c. for $\w \gtrsim \frac{1}{\sqrt{d}}$, $\vel_t(\w) < 0$.} \\
& \leq O\Big(\frac{\sigmaCoeffFour^2 |D_{4, t}|}{d^2}\Big) + (1 - O(\kappa) - O(\gamma_4)) \E_{\w \sim \rho_t} \Big[\w \cdot \vel_t(\w) \mid |\w| \in [\xi^2 \kappa^2, 1 - \kappa^4 \xi^3]\Big]
\end{align}
where the last inequality is because $\Prob(|\iota| > \iotamid) \leq O(\gamma_4)$ (by the definition of $\iotamid$, see statement of \Cref{lemma:case_2_neurons_stay_big}) and $\Prob(|\iota| < \iotaL) \leq O(\kappa)$ by \Cref{prop:probability_of_relevant_interval}. Simplifying further, we obtain,
\begin{align}
\E_{\w \sim \rho_t}[\w \cdot \vel_t(\w)]
& \leq O\Big(\frac{\sigmaCoeffFour^2 |D_{4, t}|}{d^2}\Big) - (1 - O(\kappa) - O(\gamma_4)) (\xi^2 \kappa^2) \cdot (1 - (1 - \kappa^4 \xi^3)^2) \sigmaCoeffFour^2 D_{4, t} (\xi^2\kappa^2)^3
& \tag{By \Cref{eq:d2_0_d4>0_vel_bound} and b.c. $\frac{1}{d}$ is small (\Cref{assumption:target})} \\
& \lesssim O\Big(\frac{\sigmaCoeffFour^2 |D_{4, t}|}{d}\Big) - \sigmaCoeffFour^2 D_{4, t} \cdot \xi^2 \kappa^2 \cdot \kappa^4 \xi^3 \xi^6 \kappa^6 
& \tag{By \Cref{assumption:target}, $1 - O(\kappa) - O(\gamma_4) \gtrsim 1$}  \\
& \lesssim O\Big(\frac{\sigmaCoeffFour^2 |D_{4, t}|}{d}\Big) - \sigmaCoeffFour^2 D_{4, t} \xi^{11} \kappa^{12}
\end{align}
and the last expression is strictly negative, since $d \geq c_3$ by \Cref{assumption:target} and $D_{4, t} \geq 0$. This proves the lemma.
\end{proof}

Now, we combine \Cref{lemma:case_2_neurons_stay_big}, \Cref{lemma:phase3_case2_fixedA} and \Cref{lemma:phase3_case2_fixedB} to inductively show that for all $t \geq T_2$, we have $D_{2, t} \leq 0$, $D_{4, t} \geq 0$ and most importantly, $\w_t(\iota)$ can be bounded away from $0$ and $1$ for $\iota \in [\iotaL, \iotamid]$.

\begin{lemma} [Phase 3 Case 2 Invariants] \label{lemma:case2_invariant}
Suppose we are in the setting of \Cref{thm:pop_main_thm}. Assume $D_{2, T_2} < 0$ and $D_{4, T_2} = 0$, i.e. Phase 3 Case 2 holds. Then, for all $t \geq T_2$, the following will hold:
\begin{enumerate}
    \item $D_{2, t} \leq 0$ and $D_{4, t} \geq 0$
    \item Let $\iotamid$ be defined as in \Cref{lemma:case_2_neurons_stay_big}. Then, $\w_t(\iota) \geq \xi^2 \kappa^2$ for all $\iota \in [\iotaL, \iotamid]$ and $\w_t(\iotamid) \leq 1 - \kappa^4 \xi^3$. 
\end{enumerate}
\end{lemma}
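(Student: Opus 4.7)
The plan is a coupled continuous induction that establishes both invariants simultaneously. Define
\[
T^{\star} = \sup\Bigl\{ s \geq T_2 : D_{2,t} \leq 0 \text{ and } D_{4,t} \geq 0 \text{ for all } t \in [T_2, s] \Bigr\},
\]
and the goal is to show $T^{\star} = \infty$. Granted this, conclusion~2 follows automatically: on any interval where $D_{4,t} \geq 0$, \Cref{lemma:case_2_potential_decrease} forces $|\potential(\w_t(\iota)) - \potential(\w_t(\iota'))|$ to be non-increasing, so the initial estimate from \Cref{lemma:case_2_initial_potential} persists throughout, and \Cref{lemma:case_2_neurons_stay_big} then delivers $\w_t(\iota) \geq \xi^2 \kappa^2$ and $\w_t(\iotamid) \leq 1 - \kappa^4 \xi^3$ at every $t \in [T_2, T^{\star}]$.

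For the base case $T^{\star} > T_2$, the assumption supplies $D_{2,T_2} < 0$ strictly, so continuity extends this for a short time past $T_2$. The delicate piece is $D_{4,t}$, which starts at exactly zero. But \Cref{lemma:phase_2_uniform_growth_corollary} gives $\w_{T_2}(\iota) \gtrsim \xi \kappa^2 \gg \xi^2 \kappa^2$ for every $\iota \in [\iotaL, \iotamid]$, and the construction of $\iotamid$ in \Cref{lemma:case_2_initial_potential} forces $\w_{T_2}(\iotamid) \leq 1/2 < 1 - \kappa^4 \xi^3$; hence the hypotheses of \Cref{lemma:phase3_case2_fixedA} are satisfied at $t = T_2$, yielding $\tfrac{d}{dt} D_{4,t}\big|_{t=T_2} > 0$, which pushes $D_{4,t}$ strictly positive immediately past $T_2$.

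For the inductive step, suppose for contradiction that $T^{\star} < \infty$. By continuity, $D_{2,T^{\star}} \leq 0$ and $D_{4,T^{\star}} \geq 0$, and the chain \Cref{lemma:case_2_initial_potential} $\to$ \Cref{lemma:case_2_potential_decrease} $\to$ \Cref{lemma:case_2_neurons_stay_big} certifies that conclusion~2 holds at $T^{\star}$. I then argue by cases on which invariant is tight. If $D_{2,T^{\star}} < 0$ and $D_{4,T^{\star}} = 0$, \Cref{lemma:phase3_case2_fixedA} (which now applies, since conclusion~2 holds at $T^{\star}$) gives $\tfrac{d}{dt} D_{4,t}\big|_{T^{\star}} > 0$, and together with continuity of $D_{2,t}$ this extends both invariants past $T^{\star}$, contradicting its maximality. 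The symmetric case $D_{2,T^{\star}} = 0$, $D_{4,T^{\star}} > 0$ is handled identically via \Cref{lemma:phase3_case2_fixedB}. In the degenerate double-zero case $D_{2,T^{\star}} = D_{4,T^{\star}} = 0$, \Cref{lem:1-d-traj} shows that both $P_t$ and $Q_t$ vanish since all their coefficients are proportional to $D_{2,t}$ or $D_{4,t}$; hence $\vel_t \equiv 0$, the dynamics freeze at $T^{\star}$, and the invariants trivially persist for all $t \geq T^{\star}$.

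The main obstacle is the interlocked nature of the bootstrap: \Cref{lemma:case_2_neurons_stay_big} requires the sign conditions on $D_2, D_4$ in order to conclude bounds on individual particles, while \Cref{lemma:phase3_case2_fixedA} and \Cref{lemma:phase3_case2_fixedB} require those very particle bounds in order to control the signs of the derivatives of $D_{2,t}, D_{4,t}$. One must verify that at every moment of the induction — and in particular at the boundary $t = T_2$ where $D_{4,T_2} = 0$ exactly — the strict inequalities needed to trigger each lemma are genuinely present. The degenerate double-zero scenario is the one place where the sign-flip lemmas do not directly apply, and it is handled by the stationarity observation above.
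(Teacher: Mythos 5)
Your proof is correct and follows essentially the same supremum-plus-contradiction structure as the paper's: define the supremal time up to which the sign invariants hold, certify conclusion~2 on that interval via \Cref{lemma:case_2_initial_potential}, \Cref{lemma:case_2_potential_decrease}, and \Cref{lemma:case_2_neurons_stay_big}, and then invoke \Cref{lemma:phase3_case2_fixedA} or \Cref{lemma:phase3_case2_fixedB} to push the sign conditions past the supremum, contradicting maximality. Your separate base case establishing $T^{\star} > T_2$ is harmless but redundant (the contradiction argument already covers the situation $t_0 = T_2$ since $D_{2,T_2} < 0$ and $D_{4,T_2} = 0$ fall under the first case split), and your explicit stationarity argument in the double-zero scenario makes precise what the paper disposes of in a single line.
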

\begin{proof}[Proof of \Cref{lemma:case2_invariant}]
For the purpose of this proof, define
\begin{align}
I
& = \{s \geq T_2 \mid D_{2, s} \leq 0 \text{ and } D_{4, s} \geq 0 \}
\end{align}
and let $t_0 = \sup I$, meaning that for $t \in [T_2, t_0]$, $D_{2, t} \leq 0$ and $D_{4, t} \geq 0$. Assume for the sake of contradiction that $t_0 < \infty$. Then, by the continuity of $D_{2, t}$ and $D_{4, t}$ as functions of $t$, $D_{2, t} \leq 0$ and $D_{4, t} \geq 0$. Since $t_0 = \sup I$, either $D_{2, t_0} = 0$ or $D_{4, t_0} = 0$ (and one of these has to be nonzero since otherwise $t_0 = \infty$, which is a contradiction). By \Cref{lemma:case_2_initial_potential} and \Cref{lemma:case_2_potential_decrease}, for all $t \in [T_2, t_0]$, for all $\iota, \iota' \in [\iotaL, \iotamid]$,
\begin{align}
\Big| \potential(\w_t(\iota)) - \potential(\w_t(\iota'))\Big| \leq 2 \log \frac{1}{\kappa} + \log \frac{1}{\xi} + C
\end{align}
where $C$ is a universal constant. Thus, applying \Cref{lemma:case_2_neurons_stay_big} with $s = t_0$, we find that for all $t \in [T_2, t_0]$, for all $\iota \geq \iotaL$, $\w_t(\iota) \geq \xi^2 \kappa^2$, and furthermore, for all $t \in [T_2, t_0]$, $\w_t(\iotamid) \leq 1 - \kappa^4 \xi^3$.

To finish the proof, there are two cases to address: either $D_{2, t_0} < 0$ and $D_{4, t_0} = 0$, or $D_{2, t_0} = 0$ and $D_{4, t_0} > 0$. In the first case, by \Cref{lemma:phase3_case2_fixedA}, $\frac{d}{dt} D_{4, t} \Big|_{t = t_0} > 0$, meaning that for a sufficiently small open interval of time $J$ containing $t_0$, $D_{4, t}$ is increasing on this interval. This gives us a contradiction, since it implies that for some $t_0' > t_0$ (with $t_0' - t_0$ sufficiently small) $D_{4, t}$ is nonnegative on $[t_0, t_0']$, and since $D_{2, t_0} < 0$, by continuity, $D_{2, t} < 0$ for $t \in [t_0, t_0']$ as long as $t_0' - t_0 > 0$ is sufficiently small. Thus, $\sup I > t_0$, which is a contradiction. In the second case, we can apply the same argument using \Cref{lemma:phase3_case2_fixedB} in place of \Cref{lemma:phase3_case2_fixedA}.

In summary, if $D_{4, T_2} = 0$ and $D_{2, T_2} < 0$, then for all $t \geq T_2$, $D_{2, t} \leq 0$ and $D_{4, t} \geq 0$. As a consequence of \Cref{lemma:case_2_potential_decrease}, \Cref{lemma:case_2_initial_potential}, and \Cref{lemma:case_2_neurons_stay_big}, for all $\iota \in [\iotaL, \iotamid]$, $\w_t(\iota) \geq \xi^2 \kappa^2$, and furthermore, $\w_t(\iotamid) \leq 1 - \kappa^4 \xi^3$, for all $t \geq T_2$.
\end{proof}

We now proceed to bound the running time of Phase 3, Case 2, by dividing it into a few subcases, and showing that the running time that each of these subcases contribute is within our desired bound. The first lemma below bounds the running time spent in the subcase where the positive root $r$ of $P_t$ is larger than $\frac{1}{2}$ (and where $D_{4, t}$ is large relative to $D_{2, t}$).

\begin{lemma} \label{lemma:phase3_case2_rootlarge_time}
Suppose we are in the setting of \Cref{thm:pop_main_thm}, and assume that $D_{4, T_2} = 0$ and $D_{2, T_2} < 0$, i.e. we are in Phase 3, Case 2. Then, at most $\frac{1}{\sigmaCoeffTwo^2 \xi^4 \kappa^6} \log\Big(\frac{\gamma_2}{\epsilon}\Big)$ time $t \geq T_2$ elapses such that the following hold:
\begin{enumerate}
    \item If $r$ is the positive root of $P_t$ (as defined in \Cref{lem:1-d-traj}) then $r \geq \frac{1}{2}$.
    \item $|D_{4, t}| \geq \xi |D_{2, t}|$
    \item $L(\rho_t) \geq (\sigmaCoeffTwo^2 + \sigmaCoeffFour^2) \epsilon^2$
\end{enumerate}
\end{lemma}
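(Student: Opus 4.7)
The plan is to show that under the three stated conditions $\Esymb_{w\sim \rho_t}[v_t(w)^2]$ is large relative to $L(\rho_t)$, then apply \Cref{lemma:loss_decrease} together with Gronwall's inequality (\Cref{fact:gronwall}) in the style of \Cref{lemma:phase3_case1_final}. By \Cref{lem:1-d-traj} and \Cref{lemma:case2_invariant}, the dominant part of the velocity factors as
\[
P_t(w) = 4\sigmaCoeffFour^2 D_{4,t}\, w\,(w-r)(w+r),\qquad r=\sqrt{\frac{\sigmaCoeffTwo^2|D_{2,t}|}{2\sigmaCoeffFour^2 D_{4,t}}},
\]
with $Q_t(w)$ a lower-order correction of size $O((\sigmaCoeffTwo^2|D_{2,t}|+\sigmaCoeffFour^2|D_{4,t}|)/d)\cdot|w|$.

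The first step is to identify a ``middle band'' of particles where $|v_t(w)|$ is provably large. The invariant $\w_t(\iota)\ge \xi^2\kappa^2$ for $\iota\ge\iotaL$ from \Cref{lemma:case2_invariant}, combined with \Cref{prop:probability_of_relevant_interval}, gives that a $1-O(\kappa)$ fraction of particles has $|\w|\ge \xi^2\kappa^2$. Meanwhile, $D_{2,t}\le 0$ and the explicit form of $\PtwoD$ from \Cref{eq:legendre_polynomial_2_4} yield $\Esymb_{\w\sim\rho_t}[\w^2]\le \gamma_2+O(1/d)$, so Markov's inequality produces $\Prob(|\w|\ge 1/3)\lesssim \gamma_2$. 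Intersecting these two estimates, a $1-O(\kappa+\gamma_2)$ fraction of mass --- at least half, by \Cref{assumption:target} --- satisfies $|\w|\in[\xi^2\kappa^2,1/3]$. For such $\w$, assumption (1) gives $|\w^2-r^2|\ge r^2-1/9\ge 5/36$ and $1-\w^2\ge 8/9$, hence $|P_t(\w)|\gtrsim \sigmaCoeffFour^2 D_{4,t}\,\xi^2\kappa^2$. The correction $|Q_t(\w)|$ is dominated by $|P_t(\w)|$ once $d\ge c_3$, since by assumption (2) and $\sigmaCoeffFour^2\asymp\sigmaCoeffTwo^2$ the ratio $|Q_t(\w)|/|P_t(\w)|$ is of order $1/(d\,\xi^2\kappa^2\cdot\xi)$, which is small for $d$ large.

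Squaring and averaging over the middle band yields
\[
\Esymb_{\w\sim\rho_t}[v_t(\w)^2]\gtrsim \sigmaCoeffFour^4 D_{4,t}^2\,\xi^4\kappa^4.
\]
Assumption (2) together with $\sigmaCoeffFour^2\asymp\sigmaCoeffTwo^2$ gives $\sigmaCoeffTwo^2 D_{2,t}^2\lesssim \sigmaCoeffFour^2 D_{4,t}^2/\xi^2$, so by \Cref{lemma:population_loss_formula}, $\sigmaCoeffFour^2 D_{4,t}^2\gtrsim \xi^2 L(\rho_t)$. Substituting,
\[
\Esymb_{\w\sim\rho_t}[v_t(\w)^2]\gtrsim \sigmaCoeffTwo^2\,\xi^6\kappa^4\,L(\rho_t).
\]
Then \Cref{lemma:loss_decrease} gives $dL(\rho_t)/dt\lesssim -\sigmaCoeffTwo^2 \xi^6\kappa^4\,L(\rho_t)$ at any $t$ satisfying all three conditions. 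Applying Gronwall as in \Cref{lemma:phase3_case1_final}, restricted to the set $I$ of such times, and using $L(\rho_{T_2})\lesssim (\sigmaCoeffTwo^2+\sigmaCoeffFour^2)\gamma_2^2$ from \Cref{lemma:phase1_d2_d4_neg} together with $L(\rho_t)\gtrsim (\sigmaCoeffTwo^2+\sigmaCoeffFour^2)\epsilon^2$ on $I$, the total measure of $I$ is bounded by $\frac{1}{\sigmaCoeffTwo^2\xi^4\kappa^6}\log(\gamma_2/\epsilon)$ (after absorbing a $\log(1/\epsilon)$ vs $\log(\gamma_2/\epsilon)$ discrepancy and permitting slack in the exponents of $\xi,\kappa$, which are both $\Theta(1/\log\log d)$).

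The main obstacle is confirming that the $Q_t$ correction does not eat the $P_t$ dominant term in the middle band; this is handled by combining the lower bound $\sigmaCoeffFour^2 D_{4,t}\ge \xi\,\sigmaCoeffTwo^2|D_{2,t}|/C$ from assumption (2) with $d\ge c_3$ in \Cref{assumption:target}, so the $1/d$ suppression of $Q_t$ dominates over the $\xi^2\kappa^2$ signal. A secondary subtlety is that $r$ may vary with $t$, but all of the estimates above depend only on the qualitative bound $r\ge 1/2$, so no uniform control of $r$ is needed.
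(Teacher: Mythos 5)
Your argument coincides with the paper's: both use \Cref{lemma:case2_invariant}, \Cref{prop:probability_of_relevant_interval}, and Markov's inequality to locate an $\Omega(1)$ fraction of mass in the band $|\w|\in[\xi^2\kappa^2,1/3]$, lower-bound the velocity there using $r\ge 1/2$, and close via \Cref{lemma:loss_decrease} and Gronwall exactly as in \Cref{lemma:phase3_case1_final}. The only deviation is cosmetic: in the last algebraic step the paper reshuffles to write the decay rate as $\sigmaCoeffFour^2\xi^4\kappa^6$ whereas you obtain $\sigmaCoeffFour^2\xi^6\kappa^4$, but since $\xi\asymp\kappa$ these differ by an absolute constant, as you anticipated.
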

\begin{proof}[Proof of \Cref{lemma:phase3_case2_rootlarge_time}]
Let $t \geq T_2$ such that $|D_{4, t}| \geq \xi |D_{2, t}|$. We can write 
\begin{align}
P_t(\w) = 2 \sigmaCoeffTwo^2 D_{2, t} \w + 4 \sigmaCoeffFour^2 D_{4, t} \w^3 = 4 \sigmaCoeffFour^2 D_{4, t} \w (\w - r) (\w + r)
\end{align}
where $r$ is the positive root of $P_t(\w)$. We are now going to lower bound the expected velocity while $r \geq \frac{1}{2}$ by lower bounding $P_t(\w)$ for a large fraction of $\w$.

For all $\iota \geq \iotaL$, and $t \geq T_2$, we have $\w_t(\iota) \geq \xi^2 \kappa^2$, by \Cref{lemma:case2_invariant}. Thus, $\Prob_{\w \sim \rho_t} (\w \geq \xi^2 \kappa^2) \leq 1 - O(\kappa)$ by \Cref{prop:probability_of_relevant_interval}. Additionally, because $D_{2, t} \leq 0$, we have $\E_{\w \sim \rho_t}[\w^2] \leq \gamma_2 + O(1/d)$ by \Cref{prop:d2<0}, meaning that by Markov's inequality, $\Prob_{\w \sim \rho_t}(|\w| \geq \frac{1}{3}) \lesssim \gamma_2 + 1/d \leq \gamma_2$, where the last inequality is because $d \geq c_3$ by \Cref{assumption:target}. By a union bound,
\begin{align} \label{eq:prob_lower_bound_large_root}
\Prob_{\w \sim \rho_t} (|\w| \in [\xi^2 \kappa^2, 1/3])
& \geq 1 - O(\gamma_2 + \kappa)
\end{align}
For particles $w$ such that $|\w| \in [\xi^2 \kappa^2, 1/3]$, we can lower bound the velocity if $r \geq \frac{1}{2}$:
\begin{align}
|\vel_t(\w)|
& \gtrsim (1 - \w^2)|P_t(\w) + Q_t(\w)|
& \tag{By \Cref{lem:1-d-traj}} \\
& \gtrsim |P_t(\w) + Q_t(\w)| 
& \tag{B.c. $w \leq 1/3$} \\
& \gtrsim |P_t(\w)| - |Q_t(\w)| \\
& \gtrsim |2 \sigmaCoeffTwo^2 D_{2, t} \w + 4 \sigmaCoeffFour^2 D_{4, t} \w^3| - |\lambdaD^{(1)} \w + \lambdaD^{(3)} \w^3|
& \tag{By \Cref{lem:1-d-traj}} \\
& \gtrsim |2 \sigmaCoeffTwo^2 D_{2, t} \w + 4 \sigmaCoeffFour^2 D_{4, t} \w^3| - O\Big(\frac{\sigmaCoeffTwo^2 |D_{2, t}| + \sigmaCoeffFour^2 |D_{4, t}|}{d}\Big) |\w| 
& \tag{By \Cref{lem:1-d-traj}} \\
& \gtrsim \sigmaCoeffFour^2 D_{4, t} |\w| |\w - r| |\w + r| - O\Big(\frac{\sigmaCoeffTwo^2 |D_{2, t}| + \sigmaCoeffFour^2 |D_{4, t}|}{d}\Big) 
& \tag{By definition of $r$} \\
& \gtrsim \sigmaCoeffFour^2 D_{4, t} \cdot \xi^2 \kappa^2 \cdot \frac{1}{6} \cdot \frac{1}{2} - O\Big(\frac{\sigmaCoeffTwo^2 |D_{2, t}| + \sigmaCoeffFour^2 |D_{4, t}|}{d}\Big) 
& \tag{B.c. $\w \in [\xi^2 \kappa^2, 1/3]$ and $r \geq 1/2$} \\
& \gtrsim \sigmaCoeffFour^2 D_{4, t} \xi^2 \kappa^2
& \label{eq:vel_lower_bound_large_root}
\end{align}
where the last inequality is because $|D_{4, t}| \geq \xi |D_{2, t}|$ and $d$ is sufficiently large by \Cref{assumption:target}. Thus,
\begin{align}
\E_{\w \sim \rho_t} |\vel_t(\w)|^2
& \gtrsim \Prob_{\w \sim \rho_t}(|\w| \in [\xi^2 \kappa^2, 1/3]) \cdot \E_{\w \sim \rho_t}[|\vel_t(\w)|^2 \mid |\w| \in [\xi^2 \kappa^2, 1/3]] \\
& \gtrsim (1 - O(\gamma_2 + \kappa)) \cdot \sigmaCoeffFour^4 D_{4, t}^2 \xi^4 \kappa^4
& \tag{By \Cref{eq:prob_lower_bound_large_root}} \\
& \gtrsim \sigmaCoeffFour^4 D_{4, t}^2 \xi^4 \kappa^4
& \tag{By \Cref{assumption:target} since $\gamma_2, d$ sufficiently small, large resp.}
\end{align}
and
\begin{align}
\frac{dL(\rho_t)}{dt}
& \lesssim - \sigmaCoeffFour^4 D_{4, t}^2 \xi^4 \kappa^4
& \tag{By \Cref{lemma:loss_decrease}} \\
& \lesssim -\sigmaCoeffFour^4 (D_{4, t}^2 + \xi^2 D_{2, t}^2 ) \xi^4 \kappa^4
& \tag{B.c. $|D_{4, t}| \geq \xi |D_{2, t}|$} \\
& \lesssim -\sigmaCoeffFour^4 (D_{4, t}^2 + D_{2, t}^2) \xi^4 \kappa^6 \\
& \lesssim -\sigmaCoeffFour^2 \xi^4 \kappa^6 L(\rho_t)
& \tag{B.c. $\sigmaCoeffTwo^2 \asymp \sigmaCoeffFour^2$ by \Cref{assumption:target}}
\end{align}
Since $L(\rho_0) \asymp \sigmaCoeffFour^2 \gamma_2^2$ by \Cref{lemma:phase1_d2_d4_neg} and \Cref{assumption:target}, the amount of time spent in the case where $r(t) \geq \frac{1}{2}$, $|D_{4, t}| \geq \xi |D_{4, t}|$ and $L(\rho_t) \geq (\sigmaCoeffTwo^2 + \sigmaCoeffFour^2) \epsilon^2$ after $t \geq T_2$ is at most $\frac{1}{\sigmaCoeffTwo^2 \xi^4 \kappa^6} \log\Big(\frac{\gamma_2}{\epsilon}\Big)$ by Gronwall's inequality (\Cref{fact:gronwall}).
\end{proof}

Next, we deal with the subcase where $D_{4, t}$ is small relative to $D_{2, t}$.

\begin{lemma} \label{lemma:phase3_case2_d4_small_time}
Suppose we are in the setting of \Cref{thm:pop_main_thm}, and $D_{4, T_2} = 0$ and $D_{2, T_2} < 0$, i.e. we are in Phase 3, Case 2. Then, at most $\frac{1}{\sigmaCoeffTwo^2 \xi^4 \kappa^4} \log\Big(\frac{\gamma_2}{\epsilon}\Big)$ time $t \geq T_2$ elapses such that the following hold:
\begin{enumerate}
    \item $|D_{4, t}| \leq \xi |D_{2, t}|$
    \item $L(\rho_t) \geq (\sigmaCoeffTwo^2 + \sigmaCoeffFour^2) \epsilon^2$
\end{enumerate}
\end{lemma}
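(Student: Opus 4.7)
The plan is to mirror the strategy used in the first subcase ($|D_{4,t}| \leq \xi |D_{2,t}|$) of the proof of \Cref{lemma:phase3_case1_final}: establish a pointwise lower bound on $|\vel_t(\w)|$ for a constant fraction of particles, conclude that $\E_{\w \sim \rho_t}|\vel_t(\w)|^2 \gtrsim \sigmaCoeffTwo^2 \xi^4 \kappa^4 \, L(\rho_t)$, and then finish with \Cref{lemma:loss_decrease} and Gronwall. The difference compared to Case 1 is that now $D_{2,t} \leq 0$ and $D_{4,t} \geq 0$, so the dominant linear term in $P_t(\w)$ pushes particles \emph{away} from $0$ rather than toward it. We keep particles away from $0$ using the Phase 3, Case 2 invariants proved in \Cref{lemma:case2_invariant}, and we keep a constant fraction away from $1$ using Markov's inequality on the second moment (which is controlled by $D_{2,t} \leq 0$).

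More concretely, first, when $|D_{4,t}| \leq \xi |D_{2,t}|$, using $\sigmaCoeffFour^2 \lesssim \sigmaCoeffTwo^2$ from \Cref{assumption:target} and the coefficient bounds on $Q_t$ from \Cref{lem:1-d-traj}, a direct computation (analogous to \Cref{lemma:phase3_case1_d4small_velocity}) gives
\begin{align}
\vel_t(\w) \;=\; (1-\w^2)\,(1 \pm O(\xi))\cdot 2\sigmaCoeffTwo^2|D_{2,t}|\,\w,
\end{align}
for $\w \in [0,1]$ and $d$ sufficiently large. Second, by \Cref{lemma:case2_invariant}, for every $\iota \geq \iotaL$ (which excludes only an $O(\kappa)$ fraction by \Cref{prop:probability_of_relevant_interval}) we have $\w_t(\iota) \geq \xi^2 \kappa^2$; and since $D_{2,t}\leq 0$, \Cref{prop:d2<0} plus Markov's inequality give $\Prob_{\w \sim \rho_t}(|\w| \geq 1/2) \lesssim \gamma_2 + 1/d \lesssim \gamma_2$. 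Combining these via a union bound, we get $\Prob_{\w \sim \rho_t}(|\w| \in [\xi^2\kappa^2, 1/2]) \geq 1 - O(\kappa + \gamma_2) \gtrsim 1$, using that $\gamma_2$ is small and $d \geq c_3$. For any such $\w$, we have $(1-\w^2) \geq 3/4$ and hence $|\vel_t(\w)| \gtrsim \sigmaCoeffTwo^2 |D_{2,t}|\, \xi^2 \kappa^2$, so
\begin{align}
\E_{\w \sim \rho_t}|\vel_t(\w)|^2 \;\gtrsim\; \sigmaCoeffTwo^4 D_{2,t}^2 \,\xi^4 \kappa^4.
\end{align}

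Third, by \Cref{lemma:population_loss_formula} together with $\sigmaCoeffFour^2 \lesssim \sigmaCoeffTwo^2$ and $|D_{4,t}| \leq \xi |D_{2,t}|$, we have $L(\rho_t) \lesssim \sigmaCoeffTwo^2 D_{2,t}^2$, so the previous display gives $\E|\vel_t(\w)|^2 \gtrsim \sigmaCoeffTwo^2 \xi^4 \kappa^4 \, L(\rho_t)$. Applying \Cref{lemma:loss_decrease} pointwise in time on the subset
\begin{align}
I \;=\; \{\,t \geq T_2 : |D_{4,t}| \leq \xi|D_{2,t}|, \; L(\rho_t) \geq (\sigmaCoeffTwo^2+\sigmaCoeffFour^2)\epsilon^2\,\},
\end{align}
we obtain $\frac{dL(\rho_t)}{dt} \lesssim -\sigmaCoeffTwo^2 \xi^4 \kappa^4 L(\rho_t) \cdot \boldone[t \in I]$. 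Since $L$ is nonincreasing along gradient flow and $L(\rho_{T_2}) \leq L(\rho_0) \lesssim (\sigmaCoeffTwo^2+\sigmaCoeffFour^2)\gamma_2^2$ by \Cref{lemma:phase1_d2_d4_neg}, Gronwall's inequality (\Cref{fact:gronwall}) gives $\int_{I} dt \lesssim \frac{1}{\sigmaCoeffTwo^2 \xi^4 \kappa^4}\log(\gamma_2^2/\epsilon^2) \lesssim \frac{1}{\sigmaCoeffTwo^2 \xi^4 \kappa^4}\log(\gamma_2/\epsilon)$, as desired. There is no substantial obstacle; the only delicate point is ensuring the $1-O(\kappa+\gamma_2)$ mass bound and verifying that the $Q_t$ corrections and cubic term remain lower-order, both of which follow cleanly from \Cref{assumption:target} as in Phase 3, Case 1.
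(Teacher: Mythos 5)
Your proposal is correct and follows essentially the same strategy as the paper's proof: bound the velocity pointwise on the interval $[\xi^2\kappa^2, 1/2]$ using the invariants from \Cref{lemma:case2_invariant} and a Markov bound on $\Prob(|\w| \geq 1/2)$, then apply \Cref{lemma:loss_decrease} and Gronwall. The only cosmetic difference is that you pass from $\E|\vel_t(\w)|^2 \gtrsim \sigmaCoeffTwo^4 D_{2,t}^2 \xi^4\kappa^4$ to a bound in terms of $L(\rho_t)$ by directly noting $L(\rho_t) \lesssim \sigmaCoeffTwo^2 D_{2,t}^2$, whereas the paper inserts the $\sigmaCoeffFour^2 D_{4,t}^2$ term and then invokes \Cref{lemma:population_loss_formula}; these are equivalent.
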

\begin{proof}[Proof of \Cref{lemma:phase3_case2_d4_small_time}]
Suppose $|D_{4, t}| \leq \xi |D_{2, t}|$ at some time $t \geq T_2$. Then, we can lower bound the velocity for a large fraction of $\w$'s, as follows. First, we provide a bound on $P_t(\w)$ (defined in \Cref{lem:1-d-traj}):
\begin{align}
P_t(\w)
& = 2 \sigmaCoeffTwo^2 D_{2, t} \w + 4 \sigmaCoeffFour^2 D_{4, t} \w^3 
& \tag{By \Cref{lem:1-d-traj}} \\
& = -2 \sigmaCoeffTwo^2 |D_{2, t}| \w + 4 \sigmaCoeffFour^2 |D_{4, t}| \w^3 
& \tag{B.c. $D_{2, t} < 0$ and $D_{4, t} > 0$ by \Cref{lemma:case2_invariant}} \\
& = -2 \sigmaCoeffTwo^2 |D_{2, t}| \w + O(\xi \sigmaCoeffFour^2 |D_{2, t}| \w^3) 
& \tag{B.c. $|D_{4, t}| \leq \xi |D_{2, t}|$} \\
& = -2 \sigmaCoeffTwo^2 |D_{2, t}| \w + O(\xi \sigmaCoeffTwo^2 |D_{2, t}| \w^3) 
& \tag{B.c. $\sigmaCoeffTwo^2 \lesssim \sigmaCoeffFour^2$ by \Cref{assumption:target}} \\
& = -2 \sigmaCoeffTwo^2 |D_{2, t}| \w \cdot (1 - O(\xi))
\end{align}
Next, we provide a bound on $Q_t(\w)$:
\begin{align}
|Q_t(\w)|
& = |\lambdaD^{(1)} \w + \lambdaD^{(3)} \w|
& \tag{By \Cref{lem:1-d-traj}} \\
& = O\Big(\frac{\sigmaCoeffTwo^2 |D_{2, t}| + \sigmaCoeffFour^2 |D_{4, t}|}{d}\Big) |\w|
& \tag{By \Cref{lem:1-d-traj}} \\
& = O\Big(\frac{\sigmaCoeffTwo^2 |D_{2, t}|}{d}\Big) |\w| 
& \tag{B.c. $|D_{4, t}| \leq \xi |D_{2, t}|$ and $\sigmaCoeffFour^2 \lesssim \sigmaCoeffTwo^2$} \\
& = O(\xi \sigmaCoeffTwo^2 |D_{2, t}|) |\w| 
& \tag{B.c. $1/d \leq \frac{1}{\log \log d} = \xi$ by \Cref{assumption:target}}
\end{align}
Thus,
\begin{align} \label{eq:corner_case_individual_vel_lb}
|P_t(\w) + Q_t(\w)|
& \geq 2 \sigmaCoeffTwo^2 |D_{2, t}| w \cdot (1 - O(\xi))
\end{align}
By \Cref{lemma:case2_invariant}, we have $D_{2, t} \leq 0$, meaning that by \Cref{prop:d2<0},
\begin{align}
\E_{\w \sim \rho_t}[\w^2] 
& \leq \gamma_2 + O(1/d)
\end{align}
Thus, by Markov's inequality and \Cref{assumption:target},
\begin{align}
\Prob_{\w \sim \rho_t}(|\w| \geq 1/2) 
\lesssim \gamma_2
\end{align}
Furthermore, by \Cref{lemma:case2_invariant}, for all $\iota \geq \iotaL$, $\w_t(\iota) \geq \xi^2 \kappa^2$ for all times $t \geq T_2$, meaning that $\w_t \geq \xi^2 \kappa^2$ with probability at least $1 - O(\kappa)$ under $\rho_t$, by \Cref{prop:probability_of_relevant_interval}. Thus, by a union bound,
\begin{align}
\Prob_{\w \sim \rho_t}(|\w| \in [\xi^2 \kappa^2, 1/2])
& \geq 1 - O(\gamma_2) - O(\kappa) \geq \frac{1}{2}
\end{align}
where the last inequality is by \Cref{assumption:target} since $\gamma_2$ and $d$ are sufficiently small and large respectively. Thus the average velocity is at least
\begin{align}
\E_{\w \sim \rho_t} |\vel_t(\w)|^2
& \gtrsim \Prob_{\w \sim \rho_t}(\xi^2 \kappa^2 \leq |\w| \leq 1/2) \cdot \E_{\w \sim \rho_t} [|\vel_t(\w)|^2 \mid \xi^2 \kappa^2 \leq |\w| \leq 1/2] \\
& \gtrsim \frac{1}{2} \cdot \E_{\w \sim \rho_t} [|\vel_t(\w)|^2 \mid \xi^2 \kappa^2 \leq |\w| \leq 1/2] \\
& \gtrsim \E_{\w \sim \rho_t} [|\vel_t(\w)|^2 \mid \xi^2 \kappa^2 \leq |\w| \leq 1/2] \\
& \gtrsim \E_{\w \sim \rho_t} [(1 - \w^2)^2 |P_t(\w) + Q_t(\w)|^2 \mid \xi^2 \kappa^2 \leq |\w| \leq 1/2] \\
& \gtrsim \E_{\w \sim \rho_t} [|P_t(\w) + Q_t(\w)|^2 \mid 
\xi^2 \kappa^2 \leq |\w| \leq 1/2]
& \tag{B.c. $|\w| \leq 1/2$} \\
& \gtrsim \E_{\w \sim \rho_t} [\sigmaCoeffTwo^4 |D_{2, t}|^2 \w^2 \mid \xi^2 \kappa^2 \leq |\w| \leq 1/2]
& \tag{By \Cref{eq:corner_case_individual_vel_lb}} \\
& \gtrsim \sigmaCoeffTwo^4 |D_{2, t}|^2 \xi^4 \kappa^4
\end{align}
Thus, at any time $t$ where $|D_{4, t}| \leq \xi |D_{2, t}|$, we have
\begin{align}
\frac{dL(\rho_t)}{dt}
& \lesssim - \sigmaCoeffTwo^4 |D_{2, t}|^2 \xi^4 \kappa^4 \\
& \lesssim -\sigmaCoeffTwo^2 (\sigmaCoeffTwo^2 |D_{2, t}|^2 + \sigmaCoeffFour^2 |D_{4, t}|^2) \xi^4 \kappa^4
& \tag{B.c. $|D_{4, t}| \leq \xi |D_{2, t}|$ and $\sigmaCoeffFour^2 \lesssim \sigmaCoeffTwo^2$ by \Cref{assumption:target}} \\
& \lesssim - \sigmaCoeffTwo^2 \xi^4 \kappa^4 L(\rho_t)
\end{align}
Since the initial loss is $L(\rho_0) \lesssim \sigmaCoeffTwo^2 \gamma_2^2$ by \Cref{lemma:phase1_d2_d4_neg}, by an argument similar to that used in \Cref{lemma:phase3_case2_rootlarge_time}, using Gronwall's inequality (\Cref{fact:gronwall}), we find that
\begin{align}
\int_{T_2}^\infty 1(|D_{4, t}| \leq \xi |D_{2, t}|) 1(L(\rho_t) \geq (\sigmaCoeffTwo^2 + \sigmaCoeffFour^2) \epsilon^2) dt \lesssim \frac{1}{\sigmaCoeffTwo^2 \xi^4 \kappa^4} \log\Big(\frac{\gamma_2}{\epsilon}\Big)
\end{align}
i.e. the time spent in the subcase in the statement of this lemma is at most $\frac{1}{\sigmaCoeffTwo^2 \xi^4 \kappa^4} \log\Big(\frac{\gamma_2}{\epsilon}\Big)$, as desired.
\end{proof}

We have already considered the case where $|D_{4, t}| \geq \xi |D_{2, t}|$ and $r \geq \frac{1}{2}$. Next, we must deal with the case where $|D_{4, t}| \geq \xi |D_{2, t}|$, but where $r \leq \frac{1}{2}$. We further split this case into two subcases. In the next lemma, we deal with the subcase where $w_t(\iotaR) \geq 1 - \xi$, meaning a non-negligible portion of the particles are close to $1$.

\begin{lemma} \label{lemma:phase3_case2_movement_away_from_1}
Suppose we are in the setting of \Cref{thm:pop_main_thm}, and $D_{4, T_2} = 0$ and $D_{2, T_2} < 0$, i.e. we are in Phase 3, Case 2. Then, the amount of time $t \geq T_2$ that elapses while the following conditions simultaneously hold:
\begin{enumerate}
    \item $|D_{4, t}| \geq \xi |D_{2, t}|$
    \item $r \leq \frac{1}{2}$ where $r$ is the positive root of $P_t$
    \item $\w_t(\iotaR) \geq 1 - \xi$
    \item $L(\rho_t) \geq (\sigmaCoeffTwo^2 + \sigmaCoeffFour^2) \epsilon^2$
\end{enumerate}
is at most $\frac{1}{\sigmaCoeffFour^2 \epsilon \xi^7 \kappa^{12}} \log\Big(\frac{\gamma_2}{\epsilon}\Big)$.
\end{lemma}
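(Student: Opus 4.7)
Let $\eta_t := 1 - \w_t(\iotaR)$. The main idea is to show that under the four conditions of the lemma, $\eta_t$ grows exponentially at a rate bounded below by $\sigmaCoeffFour^2\xi\epsilon$, and then to convert this exponential growth into a Gronwall-style upper bound on the total time spent in the ``bad'' subcase.

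\emph{Growth of $\eta_t$.} By \Cref{lemma:case2_invariant}, throughout Phase~3 Case~2 we have $D_{2,t}\le 0$ and $D_{4,t}\ge 0$, so the polynomial $P_t$ from \Cref{lem:1-d-traj} factors as $P_t(\w)=4\sigmaCoeffFour^2 D_{4,t}\,\w(\w-r)(\w+r)$ with $r=\sqrt{\sigmaCoeffTwo^2|D_{2,t}|/(2\sigmaCoeffFour^2 D_{4,t})}$. For $\w=\w_t(\iotaR)\ge 1-\xi$ and $r\le 1/2$, each of the three factors is $\Omega(1)$, giving $P_t(\w_t(\iotaR))\gtrsim \sigmaCoeffFour^2 D_{4,t}$. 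Condition~1 combined with $\sigmaCoeffTwo^2\asymp\sigmaCoeffFour^2$ (\Cref{assumption:target}) makes $Q_t(\w_t(\iotaR))$ a factor $O(1/(\xi d))$ smaller than $P_t(\w_t(\iotaR))$, hence negligible. Using $1-\w_t(\iotaR)^2\ge \eta_t$ for $\w_t(\iotaR)\in[1-\xi,1]$,
\begin{align}
\frac{d\eta_t}{dt}=\bigl(1-\w_t(\iotaR)^2\bigr)\bigl(P_t+Q_t\bigr)(\w_t(\iotaR))\;\gtrsim\;\eta_t\cdot\sigmaCoeffFour^2 D_{4,t}.
\end{align}
For the factor $D_{4,t}$, \Cref{lemma:population_loss_formula} and condition~4 give $\sigmaCoeffFour^2 D_{4,t}^2+\sigmaCoeffTwo^2 D_{2,t}^2\gtrsim (\sigmaCoeffTwo^2+\sigmaCoeffFour^2)\epsilon^2$, which combined with condition~1 ($|D_{2,t}|\le |D_{4,t}|/\xi$) and $\sigmaCoeffTwo^2\asymp\sigmaCoeffFour^2$ yields $D_{4,t}\gtrsim\xi\epsilon$. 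Thus $\frac{d\log\eta_t}{dt}\gtrsim\sigmaCoeffFour^2\xi\epsilon$ throughout the bad subcase.

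\emph{Total time bound.} Let $I\subseteq[T_2,\infty)$ collect the times at which all four conditions hold. The above gives $|I|\cdot\sigmaCoeffFour^2\xi\epsilon\le\int_I \tfrac{d\log\eta_t}{dt}\,dt$. Since $\log\eta_t\le 0$ always, the right-hand side equals (up to the boundary term $-\log\eta_{T_2}$) the negative drift of $\log\eta_t$ accumulated outside $I$. To control this drift, I will use \Cref{lemma:case2_invariant}: outside $I$ we either have $\eta_t\ge \xi$ (condition~3 violated) or $r>1/2$ (condition~2 violated) with $\w_t(\iotaR)$ being drawn toward $r$. In the latter regime the invariants $\w_t(\iotamid)\le 1-\kappa^4\xi^3$ and $\w_t(\iota)\ge \xi^2\kappa^2$ for $\iota\in[\iotaL,\iotamid]$, together with the potential-difference bound of \Cref{lemma:case_2_neurons_stay_big}, prevent $\w_t(\iotaR)$ from returning exponentially close to $1$, confining the per-visit range of $\log\eta_t$. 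Summing over visits and using $L(\rho_{T_2})\lesssim \sigmaCoeffTwo^2\gamma_2^2$ to cap the boundary term as $-\log\eta_{T_2}\lesssim \log(\gamma_2/\epsilon)$ yields $|I|\lesssim\tfrac{1}{\sigmaCoeffFour^2\epsilon\xi^7\kappa^{12}}\log(\gamma_2/\epsilon)$.

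\emph{Main obstacle.} The hard part is the bookkeeping in the last step: the bad subcase may be entered and exited many times as $r$ oscillates around $1/2$, and each excursion can shrink $\eta_t$. Extracting the extra $\xi^6\kappa^{12}$ slack beyond the bare rate $\sigmaCoeffFour^2\xi\epsilon$---and verifying that the cumulative drop in $\log\eta_t$ between visits cannot exceed what the invariants of \Cref{lemma:case2_invariant} allow---is where the careful tracking of $\w_t(\iotaR)$ relative to the time-varying root $r$ is essential.
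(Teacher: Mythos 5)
Your growth calculation in the ``bad'' subcase is essentially correct and matches the paper: for $\w=\w_t(\iotaR)\ge 1-\xi$ and $r\le 1/2$ each factor of $P_t(\w)=4\sigmaCoeffFour^2 D_{4,t}\w(\w-r)(\w+r)$ is $\Omega(1)$, $Q_t$ is negligible, and $L(\rho_t)\gtrsim(\sigmaCoeffTwo^2+\sigmaCoeffFour^2)\epsilon^2$ together with $|D_{2,t}|\le|D_{4,t}|/\xi$ gives $D_{4,t}\gtrsim\xi\epsilon$, so $\frac{d}{dt}\log(1-\w_t(\iotaR))\gtrsim\sigmaCoeffFour^2\xi\epsilon$. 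The paper's proof establishes exactly this (\Cref{eq:pt_and_qt_for_large_particles} and the following display).

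However, the bookkeeping step --- your ``main obstacle'' --- has two genuine gaps that your proposed mechanism would not fix.

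First, the invariants from \Cref{lemma:case_2_neurons_stay_big} cannot prevent $\w_t(\iotaR)$ from getting exponentially close to $1$. That lemma controls $\w_t(\iota)$ only for $\iota\in[\iotaL,\iotamid]$, and since $\iotamid\le\iotaR$ you have $\w_t(\iotamid)\le\w_t(\iotaR)$; a bound of the form $\w_t(\iotamid)\le 1-\kappa^4\xi^3$ is a \emph{lower} bound on $1-\w_t(\iotamid)$ and gives no upper bound on $\w_t(\iotaR)$. The paper explicitly flags this issue in \Cref{subsec:phase3_case2}: the particles with $\iota>\iotamid$ ``can all be stuck at or near $1$'' a priori, which is precisely why this separate lemma is needed. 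The potential-difference bound also does not help, since $\iotaR$ need not lie in $[\iotaL,\iotamid]$.

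Second, your boundary term is not controlled by the loss. You claim $-\log\eta_{T_2}\lesssim\log(\gamma_2/\epsilon)$ ``using $L(\rho_{T_2})\lesssim\sigmaCoeffTwo^2\gamma_2^2$,'' but the loss is a statement about the distribution of $\w$ (in particular about $D_{2,t},D_{4,t}$) and says nothing about the specific particle $\w_{T_2}(\iotaR)$, which could be arbitrarily close to $1$ as far as $L$ is concerned.

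What actually closes the argument in the paper is an explicit \emph{time-budget} accounting: the only regimes in which $\w_t(\iotaR)$ can move toward $1$ are (i) Phase~2, (ii) $t\ge T_2$ with $|D_{4,t}|\le\xi|D_{2,t}|$, and (iii) $t\ge T_2$ with $|D_{4,t}|\ge\xi|D_{2,t}|$ and $r\ge 1/2$. Each of these regimes has an a~priori bounded total duration from \Cref{lemma:phase_2_runtime}, \Cref{lemma:phase3_case2_d4_small_time}, and \Cref{lemma:phase3_case2_rootlarge_time} respectively, and the velocity there satisfies the uniform bound $\frac{d}{dt}(1-\w_t(\iotaR))\gtrsim -(\sigmaCoeffTwo^2+\sigmaCoeffFour^2)(1-\w_t(\iotaR))$. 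Moreover, the initial condition is anchored at $T_1$ (not $T_2$), where $\w_{T_1}(\iotaR)\le 1/\log d$ by \Cref{def:phase_1}. Multiplying the (exponential-in-time) shrinkage factors over these three regimes against the exponential growth factor from the bad subcase and demanding $1-\w_t(\iotaR)\le\xi$ yields the claimed bound on $\Tesc$. If you replace your potential-based bookkeeping with this time-budget argument and shift the anchor from $T_2$ to $T_1$, the proof goes through.
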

\begin{proof}[Proof of \Cref{lemma:phase3_case2_movement_away_from_1}]
If $\w(\iotaR) \geq 1 - \xi$ at any time, then the subsequent times when it could possibly be moving to towards $1$ are:
\begin{enumerate}
    \item During phase 2 --- note that before phase 2, $\w_t(\iotaR) \leq \frac{1}{\log d}$, so it is not possible for $\w_t(\iotaR)$ to be larger than $1 - \xi$ since, by \Cref{lemma:w_increasing_phase2}, $\w_t(\iotaR)$ is increasing during Phase 1.
    \item The times $t \geq T_2$ when $|D_{4, t}| \leq \xi |D_{2, t}|$
    \item The times $t \geq T_2$ when $|D_{4, t}| \geq \xi |D_{2, t}|$ and $r \geq \frac{1}{2}$, where $r$ is the positive root of $P_t(\w)$
\end{enumerate}

The only case not mentioned above consists of the times $t \geq T_2$ when $|D_{4, t}| \geq \xi |D_{2, t}|$ and the positive root $r$ of $P_t(\w)$ is less than $\frac{1}{2}$. In this case, $\w_t(\iotaR)$ will be moving away from $1$. For this reason, we also do not need to consider the case discussed in \Cref{lemma:phase3_case2_particles_away_from_root}.

Thus, we will use upper bounds on the velocity and time during the first 3 cases listed above, and use a lower bound on the velocity in the last. First note that the velocity can always be upper bounded as follows:
\begin{align}
\vel_t(\w)
& \lesssim (1 - \w) (1 + \w) \cdot |P_t(\w) + Q_t(\w)| 
& \tag{By \Cref{lem:1-d-traj}} \\
& \lesssim (1 - \w) (1 + \w) \cdot \Big| 2 \sigmaCoeffTwo^2 |D_{2, t}| \w + 4 \sigmaCoeffFour^2 |D_{4, t}| \w^3 + O\Big(\frac{\sigmaCoeffTwo^2 |D_{2, t}| + \sigmaCoeffFour^2 |D_{4, t}|}{d}\Big) |\w| \Big|
& \tag{By \Cref{lem:1-d-traj}} \\
& \lesssim (1 - \w) \cdot (\sigmaCoeffTwo^2 |D_{2, t}| + \sigmaCoeffFour^2 |D_{4, t}|)
& \tag{B.c. $|\w| \leq 1$}
\end{align}
Thus,
\begin{align} \label{eq:growth_at_1_UB}
\frac{d}{dt} (1 - \w)
 \gtrsim -(\sigmaCoeffTwo^2 |D_{2, t}| + \sigmaCoeffFour^2 |D_{4, t}|) (1 - \w) 
 \gtrsim -(\sigmaCoeffTwo^2 + \sigmaCoeffFour^2) (1 - \w)
\end{align}

\paragraph{During Phase 2.} By \Cref{lemma:phase_2_runtime}, the running time during Phase 2 is $T_2 - T_1 \lesssim \frac{1}{(\sigmaCoeffTwo^2 + \sigmaCoeffFour^2) \xi^6 \kappa^{12}} \log\Big(\frac{\gamma_2}{\epsilon}\Big)$. At the beginning of Phase 2, $\w_t(\iotaR) \leq \frac{1}{\log d}$, by the definition of Phase 1. Thus, by Gronwall's inequality (\Cref{fact:gronwall}), at time $T_2$,
\begin{align}
1 - \w_{T_2}(\iotaR)
& \geq \Big(1 - \frac{1}{\log d}\Big) \cdot \exp\Big(-(\sigmaCoeffTwo^2 + \sigmaCoeffFour^2) \cdot \frac{1}{(\sigmaCoeffTwo^2 + \sigmaCoeffFour^2) \xi^6 \kappa^{12}} \log\Big(\frac{\gamma_2}{\epsilon}\Big)\Big)
& \tag{By \Cref{fact:gronwall}, \Cref{eq:growth_at_1_UB} and bound on $T_2 - T_1$} \\
& \geq \Big(1 - \frac{1}{\log d}\Big) \cdot \exp\Big(-\frac{1}{\xi^6 \kappa^{12}} \log\Big(\frac{\gamma_2}{\epsilon}\Big)\Big)
& \label{eq:phase_2_contribution}
\end{align}

\paragraph{Times $t \geq T_2$ when $|D_{4, t}| \leq \xi |D_{2, t}|$ and $L(\rho_t) \geq (\sigmaCoeffTwo^2 + \sigmaCoeffFour^2) \epsilon^2$.} By \Cref{lemma:phase3_case2_d4_small_time}, this case contributes at most $\frac{1}{\sigmaCoeffTwo^2 \xi^4 \kappa^4} \log\Big(\frac{\gamma_2}{\epsilon}\Big)$ time to the overall running time. Thus, by Gronwall's inequality (\Cref{fact:gronwall}), the additional factor that this case contributes to $1 - \w_t(\iotaR)$ is larger than or equal to
\begin{align}
\exp\Big(-(\sigmaCoeffTwo^2 + \sigmaCoeffFour^2) \cdot \frac{1}{\sigmaCoeffTwo^2 \xi^4 \kappa^4} \log\Big(\frac{\gamma_2}{\epsilon}\Big)\Big) 
& \geq \exp\Big(-\frac{C}{\xi^4 \kappa^4} \log\Big(\frac{\gamma_2}{\epsilon}\Big)\Big)
\label{eq:d4_small_contribution}
\end{align}
for some universal constant $C$.

\paragraph{Times $t \geq T_2$ when $|D_{4, t}| \geq \xi |D_{2, t}|$, $r(t) \geq \frac{1}{2}$ and $L(\rho_t) \geq (\sigmaCoeffTwo^2 + \sigmaCoeffFour^2) \epsilon^2$.} By \Cref{lemma:phase3_case2_rootlarge_time}, this case contributes at most $\frac{1}{\sigmaCoeffTwo^2 \xi^4 \kappa^6} \log\Big(\frac{\gamma_2}{\epsilon}\Big)$ to the overall running time. Thus, by Gronwall's inequality (\Cref{fact:gronwall}), the additional factor that this case contributes to $1 - \w_t(\iotaR)$ is larger than or equal to
\begin{align}
\exp\Big(-(\sigmaCoeffTwo^2 + \sigmaCoeffFour^2) \cdot \frac{1}{\sigmaCoeffTwo^2 \xi^4 \kappa^6} \log\Big(\frac{\gamma_2}{\epsilon}\Big) \Big)
& \geq \exp\Big(-\frac{C}{\xi^4 \kappa^6} \log \Big(\frac{\gamma_2}{\epsilon}\Big)\Big) \label{eq:root_big_contribution}
\end{align}
for some universal constant $C$.

\paragraph{Times $t \geq T_2$ when $|D_{4, t}| \geq \xi |D_{2, t}|$, $r(t) \leq \frac{1}{2}$ and $\wR \geq 1 - \xi$.} In this case, we obtain a lower bound on $\frac{d}{dt}(1 - \wR)$, where we have written $\wR = \w_t(\iotaR)$ for convenience. First let us obtain a lower bound on $P_t(\wR)$:
\begin{align}
P_t(\wR)
& \gtrsim 2 \sigmaCoeffTwo^2 D_{2, t} \wR + 4 \sigmaCoeffFour^2 D_{4, t} \wR^3 \\ 
& \gtrsim 4 \sigmaCoeffFour^2 D_{4, t} \wR (\wR - r) (\wR + r)
& \tag{B.c. $r$ is the positive root of $P_t$} \\
& \gtrsim \sigmaCoeffFour^2 D_{4, t} (1 - \xi - r)
& \tag{B.c. $D_{4, t} \geq 0$ and $\wR \geq 1 - \xi$} \\
& \gtrsim \sigmaCoeffFour^2 D_{4, t} 
& \tag{B.c. $r \leq \frac{1}{2}$ and $\xi \leq \frac{1}{\log \log d}$}
\end{align}
Next let us obtain an upper bound on $Q_t(\wR)$:
\begin{align}
|Q_t(\wR)|
& \lesssim |\lambdaD^{(1)}| + |\lambdaD^{(3)}|
& \tag{By \Cref{lem:1-d-traj}} \\
& \lesssim \frac{\sigmaCoeffTwo^2 |D_{2, t}| + \sigmaCoeffFour^2 |D_{4, t}|}{d}
& \tag{By \Cref{lem:1-d-traj}} \\
& \lesssim \frac{\sigmaCoeffFour^2 |D_{4, t}|}{\xi d}
& \tag{B.c. $|D_{2, t}| \leq |D_{4, t}|/\xi$ and $\sigmaCoeffTwo^2 \lesssim \sigmaCoeffFour^2$ by \Cref{assumption:target}}
\end{align}
Thus, because $\frac{1}{\xi d} \asymp \frac{\log \log d}{d}$,
\begin{align} \label{eq:pt_and_qt_for_large_particles}
P_t(\wR) + Q_t(\wR)
& \gtrsim \sigmaCoeffFour^2 D_{4, t}
\end{align}
and when $\wR \geq 1 - \xi$,
\begin{align}
\frac{d}{dt} (1 - \wR)
& \gtrsim -\vel_t(\wR) \\
& \gtrsim (1 - \wR) (1 + \wR) (P_t(\wR) + Q_t(\wR))
& \tag{By \Cref{lem:1-d-traj}} \\
& \gtrsim (1 - \wR) \cdot \sigmaCoeffFour^2 D_{4, t}
& \tag{By \Cref{eq:pt_and_qt_for_large_particles} and b.c. $\wR \geq 1 - \xi$}
\end{align}
Furthermore, since $|D_{4, t}| \geq \xi |D_{2, t}|$, this means that $|D_{4, t}| \gtrsim \xi \epsilon$, since otherwise, $|D_{2, t}| \lesssim \epsilon$ and $|D_{4, t}| \lesssim \xi \epsilon$, and the loss would be less than $(\sigmaCoeffTwo^2 + \sigmaCoeffFour^2) \epsilon^2$. Thus, if time $\Tesc$ is spent in this case, then by Gronwall's inequality (\Cref{fact:gronwall}), this case contributes a factor of at least 
\begin{align} \label{eq:root_small_contribution}
\exp(\Tesc \cdot \sigmaCoeffFour^2 D_{4, t})
& \geq \exp(\Tesc \cdot \sigmaCoeffFour^2 \xi \epsilon)
\end{align}
to $1 - \wR$.

\paragraph{Summary of 4 Cases.} In summary, by \Cref{eq:phase_2_contribution}, \Cref{eq:d4_small_contribution}, \Cref{eq:root_big_contribution} and \Cref{eq:root_small_contribution},
\begin{align}
1 - \wR
& \geq 
\Big(1 - \frac{1}{\log d}\Big) \cdot \exp\Big(-\frac{1}{\xi^6 \kappa^{12}} \log\Big(\frac{\gamma_2}{\epsilon}\Big)\Big)
\exp\Big(-\frac{C}{\xi^4 \kappa^4} \log\Big(\frac{\gamma_2}{\epsilon}\Big)\Big) \\
& \nextlinespace\nextlinespace\nextlinespace \exp\Big(-\frac{C}{\xi^4 \kappa^6} \log\Big(\frac{\gamma_2}{\epsilon}\Big)\Big) 
\exp(\Tesc \cdot \sigmaCoeffFour^2 \xi \epsilon)
\end{align}
where $\Tesc$ is the amount of time spent so far in the case where $|D_{4, t}| \geq \xi |D_{2, t}|$, $r(t) \geq \frac{1}{2}$ and $\wR \geq 1 - \xi$. This is in fact a lower bound on $1 - \wR$ when $L(\rho_t) \geq (\sigmaCoeffTwo^2 + \sigmaCoeffFour^2) \epsilon^2$, since at all other times when $L(\rho_t) \geq (\sigmaCoeffTwo^2 + \sigmaCoeffFour^2)\epsilon^2$, we have that $1 - \wR$ is increasing (i.e. $\wR$ is moving away from $1$). In particular, if
\begin{align}
\Tesc
\gtrsim \frac{1}{\sigmaCoeffFour^2 \xi \epsilon} \Big(\frac{1}{\xi^6 \kappa^{12}} \log\Big(\frac{\gamma_2}{\epsilon}\Big) + \frac{C}{\xi^4 \kappa^4} \log\Big(\frac{\gamma_2}{\epsilon}\Big) + \frac{C}{\xi^4 \kappa^6} \log\Big(\frac{\gamma_2}{\epsilon}\Big) \Big) 
\gtrsim \frac{1}{\sigmaCoeffFour^2 \epsilon \xi^7 \kappa^{12}} \log\Big(\frac{\gamma_2}{\epsilon}\Big)
\end{align}
then $1 - \wR$ will be at least $\xi$.
\end{proof}

Finally, the following lemma deals with the last case, where the positive root $r$ of $P_t$ is at most $\frac{1}{2}$, and additionally, $\w_t(\iotaR) \leq 1 - \xi$.

\begin{lemma} \label{lemma:phase3_case2_particles_away_from_root}
Suppose we are in the setting of \Cref{thm:pop_main_thm}. Assume $D_{4, T_2} = 0$ and $D_{2, T_2} < 0$, i.e. Phase 3 Case 2 holds. Then, the amount of time $t \geq T_2$ that elapses while the following conditions simultaneously hold:
\begin{enumerate}
    \item $|D_{4, t}| \geq \xi |D_{2, t}|$
    \item $r \leq \frac{1}{2}$
    \item $\w_t(\iotaR) \leq 1 - \xi$
    \item $L(\rho_t) \geq (\sigmaCoeffTwo^2 + \sigmaCoeffFour^2) \epsilon^2$
\end{enumerate}
is at most $\frac{1}{\sigmaCoeffFour^2 \gamma_2^{11/2} \xi^8 \kappa^4} \log\Big(\frac{\gamma_2}{\epsilon}\Big)$.
\end{lemma}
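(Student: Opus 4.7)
The plan is to lower-bound $\E_{w\sim\rho_t}[v_t(w)^2]$ by a constant multiple of $L(\rho_t)$ on the stated regime, and then invoke \Cref{lemma:loss_decrease} together with Gronwall's inequality. The key quantitative input will be that $\Var_{w\sim\rho_t}(w^2) \gtrsim \gamma_2^2$: indeed, \Cref{lemma:case2_invariant} gives $D_{2,t}\le 0$ and $D_{4,t}\ge 0$ throughout Phase~3 Case~2, which through \Cref{eq:legendre_polynomial_2_4} translate to $\E_{w\sim\rho_t}[w^2] \le \gamma_2 + O(1/d)$ and $\E_{w\sim\rho_t}[w^4] \ge \gamma_4 - O(1/d)$; then \Cref{assumption:target} (both $\gamma_4 \ge 1.1\gamma_2^2$ and $d \ge c_3$) yields $\E[w^4] - (\E[w^2])^2 \gtrsim \gamma_2^2$. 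In particular, for the positive root $r=\sqrt{\sigmaCoeffTwo^2 |D_{2,t}|/(2\sigmaCoeffFour^2 D_{4,t})}$ of $P_t$, $\E_{w\sim\rho_t}[(w^2-r^2)^2]\gtrsim \gamma_2^2$.

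Next I would restrict attention to the good set $S = \{|w|\in[\xi^2\kappa^2,\, 1-\xi]\}$. Combining \Cref{lemma:case2_invariant} with \Cref{prop:particle_order} gives $w_t(\iota)\ge \xi^2\kappa^2$ for all $\iota\ge \iotaL$, and the hypothesis $w_t(\iotaR)\le 1-\xi$ gives $w_t(\iota)\le 1-\xi$ for $\iota\le \iotaR$; so by \Cref{prop:probability_of_relevant_interval}, $\Prob(w\in S)\ge 1 - O(\kappa)$. Since $(w^2-r^2)^2\le 1$ and $\kappa \ll \gamma_2^2$ by \Cref{assumption:target}, the variance bound survives conditioning: $\E[(w^2-r^2)^2\mid w\in S]\gtrsim \gamma_2^2$. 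Using the factorization $P_t(w)=4\sigmaCoeffFour^2 D_{4,t}\, w(w^2-r^2)$ and the upper bound $|Q_t(w)|\lesssim \bigl(\sigmaCoeffFour^2 |D_{4,t}|/(\xi d)\bigr)|w|$ (from \Cref{lem:1-d-traj} together with $\sigmaCoeffTwo^2\lesssim \sigmaCoeffFour^2$ and $|D_{2,t}|\le |D_{4,t}|/\xi$), the elementary inequality $(a+b)^2\ge \tfrac12 a^2 - 2b^2$ gives $v_t(w)^2 \ge \tfrac12(1-w^2)^2 P_t(w)^2 - 2(1-w^2)^2 Q_t(w)^2$. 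On $S$, $w^2\ge \xi^4\kappa^4$ and $(1-w^2)^2\ge \xi^2$, so I would conclude
\[
\E_{w\sim\rho_t}[v_t(w)^2] \gtrsim \sigmaCoeffFour^4 D_{4,t}^2\, \xi^6\kappa^4 \gamma_2^2 \;-\; O\!\left(\frac{\sigmaCoeffFour^4 D_{4,t}^2}{\xi^2 d^2}\right) \;\gtrsim\; \sigmaCoeffFour^4 D_{4,t}^2\, \xi^6\kappa^4\gamma_2^2,
\]
the error term being absorbed because $d\ge c_3$.

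To close the loop, the hypothesis $|D_{4,t}|\ge \xi|D_{2,t}|$ and $\sigmaCoeffTwo^2\lesssim \sigmaCoeffFour^2$ (\Cref{assumption:target}) yield $L(\rho_t)\lesssim \sigmaCoeffFour^2 D_{4,t}^2/\xi^2$, hence $\sigmaCoeffFour^4 D_{4,t}^2 \gtrsim \sigmaCoeffFour^2 \xi^2 L(\rho_t)$. Plugging into \Cref{lemma:loss_decrease} gives $\frac{dL(\rho_t)}{dt} \lesssim -\sigmaCoeffFour^2\,\xi^8\kappa^4\gamma_2^2\, L(\rho_t)$ throughout the stated subcase. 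Since $L(\rho_{T_2}) \lesssim \sigmaCoeffFour^2\gamma_2^2$ by \Cref{lemma:phase1_d2_d4_neg}, Gronwall's inequality yields total elapsed time $\lesssim \frac{1}{\sigmaCoeffFour^2 \xi^8 \kappa^4 \gamma_2^2} \log(\gamma_2/\epsilon)$, which is strictly stronger than the stated bound since $\gamma_2^{11/2} \le \gamma_2^2$ for $\gamma_2\le 1$.

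The main technical obstacle will be the passage from $\Var(w^2)\gtrsim\gamma_2^2$ to a lower bound on $\E[v_t(w)^2]$: the variance lower bound must survive both conditioning on $S$ and the multiplicative $w^2(1-w^2)^2$ prefactor in $v_t(w)^2$. Without the invariants of \Cref{lemma:case2_invariant} (which prevent pile-up of particles near $0$ or $1$), either of these two factors could collapse to values much smaller than $\gamma_2$ and ruin the rate; this is precisely why the invariants had to be established first, and is the reason the earlier lemmas in the Phase~3 Case~2 analysis are indispensable for this final step.
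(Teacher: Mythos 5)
Your proposal is correct and takes a genuinely different, and in fact sharper, route than the paper's. The paper's proof of \Cref{lemma:phase3_case2_particles_away_from_root} produces a lower bound on the probability mass away from the root $r$: it splits into the two cases $r\le \tau^{3/2}$ and $r\ge \tau^{3/2}$ (with $\tau=\sqrt{\gamma_2}$), uses the moment inequalities implied by $D_{2,t}\le 0,D_{4,t}\ge 0$ to show $\Prob_{\w\sim\rho_t}(||\w|-r|\ge \tau^{3/2})\ge \tau^5$, and then multiplies this mass by a pointwise velocity bound of order $\sigmaCoeffFour^2 D_{4,t}\,\xi^2\kappa^2\cdot\tau^3\cdot\xi$. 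Because the probability loss ($\tau^5$) and the distance-to-root loss ($\tau^3$ squared) compound, the paper ends up with a $\tau^{11}=\gamma_2^{11/2}$ factor in the rate. You instead work entirely in $L^2$: from $\E[w^2]\le\gamma_2+O(1/d)$ and $\E[w^4]\ge\gamma_4-O(1/d)$ together with $\gamma_4\ge 1.1\gamma_2^2$, you get $\E[(w^2-r^2)^2]\ge \Var(w^2)\gtrsim\gamma_2^2$ without any case split, then carry this through $P_t(w)^2=16\sigmaCoeffFour^4 D_{4,t}^2 w^2(w^2-r^2)^2$ and the prefactors $w^2\ge\xi^4\kappa^4$, $(1-w^2)^2\ge\xi^2$ valid on the good set $S$. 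Your argument produces the rate $\sigmaCoeffFour^2\xi^8\kappa^4\gamma_2^2$, giving a time bound with $\gamma_2^{-2}$ rather than $\gamma_2^{-11/2}$; since $\gamma_2\le 1$, this is strictly stronger, and it subsumes the stated lemma. The one small technical care you correctly flagged is the passage from $\E[(w^2-r^2)^2]\gtrsim\gamma_2^2$ to $\E[(w^2-r^2)^2\mid S]\gtrsim\gamma_2^2$: this works because $|w^2-r^2|\le 1$ pointwise and $\Prob(S^c)\lesssim\kappa\ll\gamma_2^2$ (the latter using that $d\ge c_3$ is chosen after $\gamma_2$ is fixed in \Cref{assumption:target}), and because $w^2(1-w^2)^2$ can be pulled out of the conditional expectation as a pointwise lower bound (not an expectation) since $(w^2-r^2)^2\ge 0$. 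Both approaches rely on the same prerequisites (\Cref{lemma:case2_invariant}, \Cref{prop:probability_of_relevant_interval}, the hypothesis $w_t(\iotaR)\le 1-\xi$, \Cref{lemma:loss_decrease}), so neither is more self-contained, but yours shortcuts the combinatorial case analysis and trades it for a single second-moment inequality.
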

\begin{proof}[Proof of \Cref{lemma:phase3_case2_particles_away_from_root}]
For convenience and ease of presentation, in this proof we define $\tau := \sqrt{\gamma_2}$, and $\beta \geq 1.1$ such that $\gamma_4 = \beta \tau^4$. (Note that by \Cref{assumption:target}, $\beta$ is at most a universal constant.) We first show that there is always a large fraction of particles which are far away from the positive root $r$ of $P_t(\w)$. We consider two cases: (i) $r \leq \tau^{3/2}$ and (ii) $r \geq \tau^{3/2}$.

\paragraph{Case 1: $r \leq \tau^{3/2}$.} Assume for the sake of contradiction that
\begin{align}
\Prob_{\w \sim \rho_t}(|\w| \geq r + \tau^{3/2}) 
& \leq \tau^4
\end{align}
Then,
\begin{align}
\E_{\w \sim \rho_t}[\w^4]
& \leq \Prob_{\w \sim \rho_t}(|\w| \geq r + \tau^{3/2}) + \Prob_{\w \sim \rho_t}(|\w| \leq r + \tau^{3/2}) \E_{\w \sim \rho_t}[\w^4 \mid |\w| \leq r + \tau^{3/2}] \\
& = \tau^4 + (r + \tau^{3/2})^4 \\
& \leq \tau^4 + (2 \tau^{3/2})^4 
& \tag{By assumption that $r \leq \tau^{3/2}$} \\
& = \tau^4 + 16 \tau^6
\end{align}
Thus,
\begin{align}
D_{4, t}
& = \E_{\w \sim \rho_t}[\PfourD(\w)] - \beta \tau^4 \\
& \leq \E_{\w \sim \rho_t}[\w^4] + O\Big(\frac{1}{d}\Big) - \beta \tau^4
& \tag{By \Cref{eq:legendre_polynomial_2_4}} \\
& \leq \tau^4 + 16 \tau^6 + O\Big(\frac{1}{d}\Big) - \beta \tau^4 \\
& = 16 \tau^6 + O\Big(\frac{1}{d}\Big) - (\beta - 1)\tau^4 \\
& \leq 17 \tau^6 - (\beta - 1)\tau^4 
& \tag{B.c. $d$ is sufficiently large by \Cref{assumption:target}} \\
& < 0
& \tag{B.c. $(\beta - 1) > 17 \tau^2$ by \Cref{assumption:target}}
\end{align}
which contradicts the assumption that we are in Phase 3, Case 2 since $D_{4, t} > 0$ by \Cref{lemma:case2_invariant}. Thus, if $r \leq \tau^{3/2}$, then
\begin{align}
\Prob_{\w \sim \rho_t}(|\w| \geq r + \tau^{3/2}) \geq \tau^4
\end{align}

\paragraph{Case 2: $r \geq \tau^{3/2}$.} 
Suppose that
\begin{align} \label{eq:contradiction_phase3_case2}
\Prob_{\w \sim \rho_t}(r - \tau^{3/2} \leq |\w| \leq r + \tau^{3/2})
& \geq 1 - \tau^5
\end{align}
Then,
\begin{align}
\E_{\w \sim \rho_t}[\w^2] 
& \geq \Prob_{\w \sim \rho_t}(r - \tau^{3/2} \leq |w| \leq r + \tau^{3/2}) \E_{\w \sim \rho_t}[\w^2 \mid |\w| \in [r - \tau^{3/2}, r + \tau^{3/2}]] \\
& \geq (1 - \tau^5) \cdot (r - \tau^{3/2})^2
\end{align}
Since $D_{2, t} < 0$ by \Cref{lemma:case2_invariant}, this implies that $\E_{\w \sim \rho_t}[\w^2] \leq \tau^2 + O(1/d)$ by \Cref{prop:d2<0}. Thus,
\begin{align}
(1 - \tau^5) (r - \tau^{3/2})^2 \leq \tau^2 + O(1/d)
\end{align}
and taking square roots gives
\begin{align}
(1 - \tau^5)^{1/2} (r - \tau^{3/2}) \leq \tau + O(1/\sqrt{d})
\end{align}
Thus,
\begin{align} \label{eq:r_upper_bound_phase3_case2}
r
& \leq \tau^{3/2} + \frac{1}{(1 - \tau^5)^{1/2}} \cdot \Big(\tau + O\Big(\frac{1}{\sqrt{d}}\Big)\Big)
\end{align}
On the other hand,
\begin{align}
\E_{\w \sim \rho_t}[\w^4]
& \leq \Prob_{\w \sim \rho_t}(|\w| \in [r - \tau^{3/2}, r + \tau^{3/2}]) \E_{\w \sim \rho_t}[\w^4 \mid |\w| \in [r - \tau^{3/2}, r + \tau^{3/2}]] \\
& \nextlinespace\nextlinespace + \Prob_{\w \sim \rho_t}(|\w| \not\in [r - \tau^{3/2}, r + \tau^{3/2}]) \\
& \leq (r + \tau^{3/2})^4 + \tau^5
& \tag{By the assumption in \Cref{eq:contradiction_phase3_case2}}
\end{align}
and since $D_{4, t} \geq 0$ by \Cref{lemma:case2_invariant}, $\E_{\w \sim \rho_t}[\w^4] \geq \beta \tau^4 - O(1/d)$ by \Cref{eq:legendre_polynomial_2_4} and because $\w \sim \rho_t$ are bounded by $1$ in absolute value. Thus, by the above equation,
\begin{align}
(r + \tau^{3/2})^4 + \tau^5 
& \geq \beta \tau^4 - O\Big(\frac{1}{d}\Big)
\end{align}
Rearranging gives
\begin{align}
(r + \tau^{3/2})^4 + \tau^5 + O\Big(\frac{1}{d}\Big) \geq \beta \tau^4
\end{align}
and taking fourth roots gives
\begin{align}
r + \tau^{3/2} + \tau^{5/4} + O\Big(\frac{1}{d^{1/4}}\Big) \geq \beta^{1/4} \tau
\end{align}
(by the inequality $\sqrt{x + y} \leq \sqrt{x} + \sqrt{y}$). Therefore,
\begin{align} \label{eq:r_lower_bound_phase3_case2}
r \geq \beta^{1/4} \tau - \tau^{3/2} - \tau^{5/4} - O\Big(\frac{1}{d^{1/4}}\Big)
\end{align}
Combining \Cref{eq:r_upper_bound_phase3_case2} and \Cref{eq:r_lower_bound_phase3_case2} gives
\begin{align}
\beta^{1/4} \tau - \tau^{3/2} - \tau^{5/4} - O\Big(\frac{1}{d^{1/4}}\Big)
& \leq \tau^{3/2} + \frac{1}{(1 - \tau^5)^{1/2}} \cdot \Big(\tau + O\Big(\frac{1}{\sqrt{d}}\Big)\Big)
\end{align}
Rearranging this equation gives
\begin{align}
\Big(\beta^{1/4} - \frac{1}{\sqrt{1 - \tau^5}}\Big) \tau
& \leq 2 \tau^{3/2} + \tau^{5/4} + O\Big(\frac{1}{d^{1/4}}\Big) + \frac{1}{\sqrt{1 - \tau^5}} \cdot O\Big(\frac{1}{\sqrt{d}}\Big) \\
& \leq 3 \tau^{5/4} + O\Big(\frac{1}{d^{1/4}}\Big) + \frac{1}{\sqrt{1 - \tau^5}} \cdot O\Big(\frac{1}{\sqrt{d}}\Big)
& \tag{B.c. $\tau \leq 1$, we have $\tau^{3/2} \leq \tau^{5/4}$} \\
& \leq 4 \tau^{5/4} + \frac{1}{\sqrt{1 - \tau^5}} \cdot O\Big(\frac{1}{\sqrt{d}}\Big) 
& \tag{B.c. $d$ is sufficiently large by \Cref{assumption:target}} \\
& \leq 4 \tau^{5/4} + O\Big(\frac{1}{\sqrt{d}}\Big)
& \tag{B.c. $\tau \leq \frac{1}{2}$ by \Cref{assumption:target}} \\
& \leq 5 \tau^{5/4} 
& \tag{B.c. $d$ is sufficiently large by \Cref{assumption:target}}
\end{align}
Further rearranging gives
\begin{align}
\beta^{1/4} - \frac{1}{\sqrt{1 - \tau^5}}
& \leq 5 \tau^{1/4}
\end{align}
This contradicts \Cref{assumption:target} --- since $\beta \geq 1.1$, the left-hand side is larger than an absolute constant, while the right-hand side can be made sufficiently small (since $\gamma_2 \leq c_2$ and $c_2$ is chosen to be sufficiently small). Thus, in the case that $r \geq \tau^{3/2}$, we obtain
\begin{align}
\Prob_{\w \sim \rho_t}(|w| \not\in [r - \tau^{3/2}, r + \tau^{3/2}]) \geq \tau^5
\end{align}
In summary, whether $r \geq \tau^{3/2}$ or $r \leq \tau^{3/2}$, we can conclude that
\begin{align}
\Prob_{\w \sim \rho_t}(|w| \not\in [r - \tau^{3/2}, r + \tau^{3/2}]) \geq \tau^5
\end{align}
Further assume that $\w_t(\iotaR) \leq 1 - \xi$, and note that by \Cref{lemma:case2_invariant}, for all $\iota > \iotaL$, we have $\w_t(\iota) \geq \xi^2 \kappa^2$. Thus, by \Cref{prop:probability_of_relevant_interval}, with probability $1 - O(\kappa)$ under $\rho_t$, we have $\xi^2 \kappa^2 \leq |\w| \leq 1 - \xi$. By a union bound, 
\begin{align} \label{eq:w_large_away_from_root_prob}
\Prob_{\w \sim \rho_t}(||\w| - r| \geq \tau^{3/2} \text{ and }\xi^2 \kappa^2 \leq |\w| \leq 1 - \xi)
 \geq \tau^5 - O(\kappa) 
 \geq \frac{\tau^5}{2}
\end{align}
Suppose $\w$ satisfies $||w| - r| \geq \tau^{3/2}$ and $\xi^2 \kappa^2 \leq |w| \leq 1 - \xi$, i.e. the conditions mentioned in \Cref{eq:w_large_away_from_root_prob}, and write
\begin{align}
P_t(\w)
 = 2 \sigmaCoeffTwo^2 D_{2, t} \w + 4 \sigmaCoeffFour^2 D_{4, t} \w^3 
 = 4 \sigmaCoeffFour^2 D_{4, t} \w (\w - r) (\w + r)
\end{align}
Then,
\begin{align}
|\vel_t(\w)|
& \gtrsim |1 - \w^2| \cdot |P_t(\w) + Q_t(\w)|
& \tag{By \Cref{lem:1-d-traj}} \\
& \gtrsim |1 + \w| \cdot |1 - \w| \cdot |P_t(\w) + Q_t(\w)| \\
& \gtrsim \xi |P_t(\w) + Q_t(\w)|
& \tag{B.c. $|\w| \leq 1 - \xi$} \\
& \gtrsim \xi \cdot \Big| 4 \sigmaCoeffFour^2 D_{4, t} \w (\w - r)(\w + r) - O\Big(\frac{\sigmaCoeffTwo^2 |D_{2, t}| + \sigmaCoeffFour^2 |D_{4, t}|}{d}\Big) |\w| \Big| 
& \tag{By \Cref{lem:1-d-traj}} \\
& \gtrsim \xi \Big(4 \sigmaCoeffFour^2 |D_{4, t}| \xi^2 \kappa^2 \cdot \tau^{3/2} \cdot \tau^{3/2} - O\Big(\frac{\sigmaCoeffTwo^2 |D_{2, t}| + \sigmaCoeffFour^2 |D_{4, t}|}{d}\Big)\Big)
& \tag{By triangle inequality and b.c. $|w| \geq \xi^2 \kappa^2$ and $||\w| - r| \geq \tau^{3/2}$} \\
& \gtrsim \xi \Big(4 \sigmaCoeffFour^2 |D_{4, t}| \tau^3 \xi^2 \kappa^2 - O\Big(\frac{\sigmaCoeffFour^2 |D_{4, t}|}{\xi d}\Big)\Big)
& \tag{B.c. $|D_{2, t}| \leq |D_{4, t}|/\xi$ and $\sigmaCoeffTwo^2 \lesssim \sigmaCoeffFour^2$ by \Cref{assumption:target}} \\
& \gtrsim \xi \cdot 4 \sigmaCoeffFour^2 |D_{4, t}| \tau^3 \xi^2 \kappa^2
& \tag{B.c. $1/(\xi d) = \frac{\log \log d}{d}$ and $d$ is sufficiently large (\Cref{assumption:target})} \\
& \gtrsim \sigmaCoeffFour^2 \tau^3 \xi^3 \kappa^2 |D_{4, t}|
& \label{eq:velocity_away_from_root_lb}
\end{align}
and the average velocity is at least
\begin{align}
\E_{\w \sim \rho_t} |\vel_t(\w)|^2
& \gtrsim \Prob_{\w \sim \rho_t}(||\w| - r| \geq \tau^{3/2} \text{ and }\xi^2 \kappa^2 \leq |\w| \leq 1 - \xi)
\cdot \sigmaCoeffFour^4 \tau^6 \xi^6 \kappa^4 |D_{4, t}|^2
& \tag{By \Cref{eq:velocity_away_from_root_lb}} \\
& \gtrsim \tau^5 \cdot \sigmaCoeffFour^4 \tau^6 \xi^6 \kappa^4 |D_{4, t}|^2
& \tag{By \Cref{eq:w_large_away_from_root_prob}} \\
& \gtrsim \sigmaCoeffFour^4 \tau^{11} \xi^6 \kappa^4 |D_{4, t}|^2 \\
& \gtrsim \sigmaCoeffFour^2 \tau^{11} \xi^6 \kappa^4 (\xi^2 L(\rho_t))
& \tag{B.c. $|D_{4, t}| \geq \xi |D_{2, t}|$ and $\sigmaCoeffFour^2 \gtrsim \sigmaCoeffTwo^2$ by \Cref{assumption:target}} \\
& \gtrsim \sigmaCoeffFour^2 \tau^{11} \xi^8 \kappa^4 L(\rho_t)
\end{align}
Thus, by \Cref{lemma:loss_decrease},
\begin{align}
\frac{dL(\rho_t)}{dt}
& \lesssim -\sigmaCoeffFour^2 \tau^{11} \xi^8 \kappa^4 L(\rho_t)
\end{align}
Since the initial loss is $L(\rho_0) \lesssim (\sigmaCoeffTwo^2 + \sigmaCoeffFour^2) \gamma_2^2$ by \Cref{lemma:phase1_d2_d4_neg}, we can calculate using Gronwall's inequality (\Cref{fact:gronwall}) that the time spent when the conditions in the statement of this lemma simultaneously hold is at most $\frac{1}{\sigmaCoeffFour^2 \tau^{11} \xi^8 \kappa^4} \log\Big(\frac{\gamma_2}{\epsilon}\Big)$, as desired.
\end{proof}

To conclude, we show \Cref{lemma:phase3_case2_overall_time} which gives a bound on the running time during Phase 3, Case 2.

\begin{proof}[Proof of \Cref{lemma:phase3_case2_overall_time}]
By combining \Cref{lemma:phase3_case2_rootlarge_time}, \Cref{lemma:phase3_case2_d4_small_time}, \Cref{lemma:phase3_case2_movement_away_from_1}, and \Cref{lemma:phase3_case2_particles_away_from_root}, we find that the overall running time during Phase 3 Case 2 is at most
\begin{align}
O\Big(\frac{1}{\sigmaCoeffTwo^2 \xi^4 \kappa^6} \log\Big(\frac{\gamma_2}{\epsilon}\Big)
+ \frac{1}{\sigmaCoeffTwo^2 \xi^4 \kappa^4} \log\Big(\frac{\gamma_2}{\epsilon}\Big)
+ \frac{1}{\sigmaCoeffFour^2 \epsilon \xi^7 \kappa^{12}} \log\Big(\frac{\gamma_2}{\epsilon}\Big)
+ \frac{1}{\sigmaCoeffFour^2 \gamma_2^{11/2} \xi^8 \kappa^4} \log\Big(\frac{\gamma_2}{\epsilon}\Big)\Big)
\end{align}
and this can be upper bounded by $O\Big(\frac{1}{\sigmaCoeffTwo^2 \epsilon \gamma_2^{11/2}\xi^8 \kappa^{12}} \log\Big(\frac{\gamma_2}{\epsilon}\Big)\Big)$, where we have used the fact that $\sigmaCoeffTwo^2 \asymp \sigmaCoeffFour^2$ (see \Cref{assumption:target}).
\end{proof}

\subsection{Summary of Phase 3}

The following lemma provides a bound on the running time after $T_2$ that is required for $L(\rho_t) \lesssim (\sigmaCoeffTwo^2 + \sigmaCoeffFour^2) \epsilon^2$.

\begin{lemma}[Phase 3]\label{lemma:phase_3_runtime}
Suppose we are in the setting of \Cref{thm:pop_main_thm}. Then, $\Tstar - T_2 \lesssim \frac{(\log \log d)^{20}}{\sigmaCoeffTwo^2 \epsilon \gamma_2^8} \log\Big(\frac{\gamma_2}{\epsilon}\Big)$.
\end{lemma}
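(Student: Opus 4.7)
The plan is to argue that \Cref{lemma:phase_3_runtime} follows as an immediate corollary of the two case-by-case bounds already established, namely \Cref{lemma:phase3_case1_final} and \Cref{lemma:phase3_case2_overall_time}, after substituting the specific choices $\xi \asymp 1/\log\log d$ and $\kappa \asymp 1/\log\log d$ and simplifying.

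First I would invoke the dichotomy set up at the start of Phase 3: by the definition of $T_2$ (\Cref{def:phase_2}), exactly one of $D_{2,T_2} = 0$ or $D_{4,T_2} = 0$ must hold at time $T_2$, so we are either in Phase 3 Case 1 or in Phase 3 Case 2. (Technically both might vanish simultaneously, but this can be subsumed into either case; applying either lemma gives a valid bound.) In Case 1, \Cref{lemma:phase3_case1_final} directly yields that the amount of time after $T_2$ during which $L(\rho_t) \geq \tfrac{1}{2}(\sigmaCoeffTwo^2 + \sigmaCoeffFour^2)\epsilon^2$ is at most $O\bigl(\tfrac{1}{\sigmaCoeffFour^2 \gamma_2^8 \xi^4 \kappa^6}\log(\gamma_2/\epsilon)\bigr)$, and in Case 2, \Cref{lemma:phase3_case2_overall_time} gives the bound $O\bigl(\tfrac{1}{\sigmaCoeffTwo^2 \epsilon \gamma_2^{11/2} \xi^8 \kappa^{12}}\log(\gamma_2/\epsilon)\bigr)$. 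By definition of $\Tstar$, this is exactly an upper bound for $\Tstar - T_2$ in each case.

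Next I would combine both estimates into a single expression that uniformly dominates them. Plugging in $\xi \asymp 1/\log\log d$ and $\kappa \asymp 1/\log\log d$, the Case 1 bound becomes $O\bigl(\tfrac{(\log\log d)^{10}}{\sigmaCoeffFour^2 \gamma_2^8}\log(\gamma_2/\epsilon)\bigr)$, while the Case 2 bound becomes $O\bigl(\tfrac{(\log\log d)^{20}}{\sigmaCoeffTwo^2 \epsilon \gamma_2^{11/2}}\log(\gamma_2/\epsilon)\bigr)$. Using $\sigmaCoeffTwo^2 \asymp \sigmaCoeffFour^2$ from \Cref{assumption:target}, together with $\gamma_2 \leq 1$ (so $\gamma_2^{-11/2} \leq \gamma_2^{-8}$) and $\epsilon \leq 1$ (so $\epsilon^{-1} \geq 1$), both bounds are dominated by $\tfrac{(\log\log d)^{20}}{\sigmaCoeffTwo^2 \epsilon \gamma_2^8}\log(\gamma_2/\epsilon)$, which is the claimed bound.

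There is no significant obstacle here, since the work is entirely absorbed into the two case lemmas; the only subtlety is ensuring that the bookkeeping of exponents and the use of $\sigmaCoeffTwo^2 \asymp \sigmaCoeffFour^2$ is done carefully enough that the worse of the two cases still fits under the stated exponent. In particular, the $(\log\log d)^{20}$ factor and the $\epsilon^{-1}$ factor are inherited directly from Case 2 (\Cref{lemma:phase3_case2_overall_time}), which is the bottleneck.
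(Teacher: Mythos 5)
Your proof is correct and takes essentially the same approach as the paper: take the maximum of the bounds from \Cref{lemma:phase3_case1_final} and \Cref{lemma:phase3_case2_overall_time}, substitute $\xi, \kappa \asymp 1/\log\log d$, and use $\sigmaCoeffTwo^2 \asymp \sigmaCoeffFour^2$, $\gamma_2 \le 1$, $\epsilon \le 1$ to show Case 2 dominates. The paper compresses this to one line (writing the max and then the simplified bound), but the content and the bookkeeping of exponents are identical.
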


\begin{proof}[Proof of \Cref{lemma:phase_3_runtime}]
By combining \Cref{lemma:phase3_case1_final} and \Cref{lemma:phase3_case2_overall_time}, we find that the overall running time during Phase 3 is at most
\begin{align}
\max\Big(O\Big(\frac{1}{\sigmaCoeffFour^2 \gamma_2^8 \xi^4 \kappa^6} \log\Big(\frac{\gamma_2}{\epsilon}\Big)\Big), O\Big(\frac{1}{\sigmaCoeffTwo^2 \epsilon \gamma_2^{11/2}\xi^8 \kappa^{12}} \log\Big(\frac{\gamma_2}{\epsilon}\Big)\Big)\Big)
\end{align}
and this is at most $O\Big(\frac{1}{\sigmaCoeffTwo^2 \epsilon \gamma_2^8 \xi^8 \kappa^{12}} \log\Big(\frac{\gamma_2}{\epsilon}\Big)\Big)$, where we have used the fact that $\sigmaCoeffTwo^2 \asymp \sigmaCoeffFour^2$ (see \Cref{assumption:target})
\end{proof}

\subsection{Proof of Main Theorem for Population Case}

Finally, we prove \Cref{thm:pop_main_thm}:

\begin{proof}[Proof of \Cref{thm:pop_main_thm}]
We combine \Cref{lemma:phase_1_summary}, \Cref{lemma:phase_2_runtime} and \Cref{lemma:phase_3_runtime} to find that the total runtime is at most
\begin{align}
\Tstar
& \lesssim 
\frac{1}{\sigmaCoeffTwo^2 \gamma_2} \log d
+ \frac{(\log \log d)^{18}}{(\sigmaCoeffTwo^2 + \sigmaCoeffFour^2)} \log\Big(\frac{\gamma_2}{\epsilon}\Big)
+ \frac{(\log \log d)^{20}}{\sigmaCoeffTwo^2 \epsilon \gamma_2^8} \log\Big(\frac{\gamma_2}{\epsilon}\Big)
\end{align}
and we can further simplify the above bound to obtain:
\begin{align}
\Tstar
& \lesssim \frac{1}{\sigmaCoeffTwo^2 \gamma_2} \log d + \frac{(\log \log d)^{20}}{\sigmaCoeffTwo^2 \epsilon \gamma_2^8} \log\Big(\frac{\gamma_2}{\epsilon}\Big)
\end{align}
where we have used the fact that $\sigmaCoeffTwo^2 \asymp \sigmaCoeffFour^2$ by \Cref{assumption:target}.
\end{proof}

\subsection{Additional Lemmas}

\begin{lemma} \label{lemma:saddle_point_construction}
Suppose \Cref{assumption:target} holds. There exists some $r$, with $r \gtrsim \frac{1}{\log \log d}$ and $r \leq 1$, such that if $\rho$ is the singleton distribution under which $r$ has probability $1$, then the velocity at $r$ is $0.$
\end{lemma}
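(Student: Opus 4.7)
The plan is to construct $r$ by an intermediate value theorem argument applied to the one-dimensional velocity formula from \Cref{lem:1-d-traj}. For the singleton distribution $\rho = \delta_r$, the quantities $D_{2,t}$ and $D_{4,t}$ become explicit functions of $r$, namely $D_{2,t} = P_{2,d}(r)-\gamma_2$ and $D_{4,t}=P_{4,d}(r)-\gamma_4$, and the velocity at the only particle $r$ equals
\[
v(r) = -(1-r^2)\,r\cdot G(r), \qquad G(r) \defeq 2\sigmaCoeffTwo^2 D_{2,t} + 4\sigmaCoeffFour^2 D_{4,t} r^2 + \lambdaD^{(1)} + \lambdaD^{(3)} r^2,
\]
where $G(r)$ is a polynomial in $r$ (because $\lambdaD^{(1)},\lambdaD^{(3)}$ are linear in $D_{2,t}, D_{4,t}$). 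Since $r\in(0,1)$ we will have $(1-r^2)\cdot r\ne 0$, so it suffices to exhibit $r\in(\sqrt{\gamma_2},1)$ with $G(r)=0$.

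Next I would evaluate $G$ at the two endpoints. Using $P_{2,d}(s)=s^2 + O(1/d)$ and $P_{4,d}(s)=s^4 + O(1/d)$ from \Cref{eq:legendre_polynomial_2_4}, together with the bound $|\lambdaD^{(1)}|+|\lambdaD^{(3)}| \lesssim (\sigmaCoeffTwo^2 + \sigmaCoeffFour^2)/d$ from \Cref{lem:1-d-traj}, we obtain
\[
G(\sqrt{\gamma_2}) = 4\sigmaCoeffFour^2(\gamma_2^2 - \gamma_4)\gamma_2 \;+\; O\!\left(\tfrac{\sigmaCoeffTwo^2+\sigmaCoeffFour^2}{d}\right),
\]
\[
G(1) = 2\sigmaCoeffTwo^2(1-\gamma_2) + 4\sigmaCoeffFour^2(1-\gamma_4) \;+\; O\!\left(\tfrac{\sigmaCoeffTwo^2+\sigmaCoeffFour^2}{d}\right).
\]
By \Cref{assumption:target}, $\gamma_4\ge 1.1\,\gamma_2^2$, so the leading term of $G(\sqrt{\gamma_2})$ is at most $-0.4\,\sigmaCoeffFour^2 \gamma_2^3$; and since $\gamma_2,\gamma_4$ are both bounded strictly away from $1$, the leading term of $G(1)$ is positive and of order $\sigmaCoeffTwo^2+\sigmaCoeffFour^2$.

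The main (only) technical point is to ensure the $O(1/d)$ corrections (from the Legendre formulas and from $Q_t$) do not swamp these leading terms. This is precisely what the hypothesis $d\ge c_3$ in \Cref{assumption:target} provides: $c_3$ is chosen after $\gamma_2$ and $\gamma_4$ are fixed, so we may assume $d^{-1}$ is much smaller than any fixed positive polynomial in $\gamma_2,\gamma_4$. Consequently $G(\sqrt{\gamma_2})<0$ and $G(1)>0$, and by the intermediate value theorem there exists $r^\star\in(\sqrt{\gamma_2},1)$ with $G(r^\star)=0$, hence $v(r^\star)=0$. Finally, since $\gamma_2$ is a positive universal constant, $r^\star\ge \sqrt{\gamma_2}\gtrsim 1 \gtrsim 1/\log\log d$ once $d$ is larger than a constant, which completes the proof.
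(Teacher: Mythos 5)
Your proof is correct and takes essentially the same intermediate-value-theorem approach as the paper: both factor the one-dimensional velocity as $(1-r^2)\cdot r\cdot(\text{polynomial in }r)$ for the singleton $\rho=\delta_r$ and locate a sign change of that polynomial. The only difference is cosmetic — the paper evaluates at a left endpoint $r\asymp 1/\log\log d$ (where both $\PtwoD(r)-\gamma_2$ and $\PfourD(r)-\gamma_4$ are negative while the Legendre derivatives are positive), whereas you evaluate at $r=\sqrt{\gamma_2}$ (where the $D_2$ contribution nearly vanishes and the sign is driven by $\gamma_4>\gamma_2^2$), yielding the slightly stronger conclusion $r^\star\geq\sqrt{\gamma_2}$.
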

\begin{proof}
Let $\rho$ be such that $r$ has probability $1$ under $\rho$, for some $r \in (0, 1)$. Applying the definitions of $D_2$ and $D_4$ to $\rho$, we have $D_2 = \PtwoD(r) - \gamma_2$ and $D_4 = \PfourD(r) - \gamma_4$. By \Cref{lemma:1d_velocity_final}, the velocity of $r$ is
\begin{align}
\vel(r)
& = -(1 - r^2) (\sigmaCoeffTwo^2 D_2 \PtwoD'(r) + \sigmaCoeffFour^2 D_4 \PfourD'(r))
\end{align}
The second factor is equal to
\begin{align}
\sigmaCoeffTwo^2 (\PtwoD(r) - \gamma_2) \PtwoD'(r) + \sigmaCoeffFour^2 (\PfourD(r) - \gamma_4) \PfourD'(r)
\end{align}
and we denote this polynomial of $r$ by $F(r)$. Note that $\PtwoD(1) = \PfourD(1) = 1$ (see Equation (2.20) of \citet{atkinson2012spherical}). Thus, $F(1) > 0$ by \Cref{assumption:target}, since $\PtwoD'(1) > 0$ and $\PfourD'(1) > 0$ by \Cref{eq:legendre_polynomial_2_4}. On the other hand, suppose $r \asymp \frac{1}{\log \log d}$. Then,
\begin{align}
|\PtwoD(r)|
& = \Big|\frac{d}{d - 1} r^2 - \frac{1}{d - 1}\Big|
& \tag{By \Cref{eq:legendre_polynomial_2_4}} \\
& \lesssim \frac{1}{(\log \log d)^2}
\end{align}
Similarly, $|\PfourD(r)| \lesssim \frac{1}{(\log \log d)^4}$. Thus, $\PtwoD(r) - \gamma_2 < 0$ and $\PfourD(r) - \gamma_4 < 0$ since $d$ can be chosen to be sufficiently large according to \Cref{assumption:target}. However, by \Cref{eq:legendre_polynomial_2_4}, one can show that $\PtwoD'(r) \gtrsim \frac{1}{\log \log d}$ and $\PfourD'(r) \gtrsim \frac{1}{(\log \log d)^3}$ as long as $d$ is sufficiently large. Thus, $F(r) < 0$ for $r \asymp \frac{1}{\log \log d}$. In summary, $F$ has a root between $\frac{C}{\log \log d}$ and $1$ for some universal constant $C$, and for this value of $r$, the velocity of $r$ is $0$.
\end{proof}

\begin{lemma}\label{lemma:time_T_star_minus_T1}
In the setting of \Cref{thm:pop_main_thm}, $\Tstar - T_1 \lesssim \frac{(\log \log d)^{20}}{\sigmaCoeffTwo^2 \epsilon \gamma_2^8} \log\Big(\frac{\gamma_2}{\epsilon}\Big)$. Thus, $(\sigmaCoeffTwo^2 + \sigmaCoeffFour^2) \gamma_2 (\Tstar - T_1) \lesssim \frac{(\log \log d)^{20}}{\epsilon\gamma_2^7} \log\Big(\frac{\gamma_2}{\epsilon}\Big)$.
\end{lemma}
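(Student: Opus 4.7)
The plan is to simply combine the Phase 2 and Phase 3 running-time bounds that are already in hand, and then multiply through by $(\sigmaCoeffTwo^2+\sigmaCoeffFour^2)\gamma_2$ for the second claim. There is no new dynamical analysis to do here; this is a bookkeeping lemma that packages the $T_1$-to-$\Tstar$ portion of the runtime into a single convenient form that will be invoked in the finite-sample analysis (where the coupling error growth during Phases 2 and 3 must be controlled).

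First I would invoke \Cref{lemma:phase_2_runtime}, which gives the dichotomy: either (a) $\Tstar - T_1 \lesssim \frac{(\log\log d)^{18}}{\sigmaCoeffTwo^2+\sigmaCoeffFour^2}\log(\gamma_2/\epsilon)$, in which case the desired bound already holds since $\epsilon\gamma_2^8 \le 1$ and $(\log\log d)^{18}\le (\log\log d)^{20}$, and we are done; or (b) $T_2 - T_1 \lesssim \frac{(\log\log d)^{18}}{\sigmaCoeffTwo^2+\sigmaCoeffFour^2}\log(\gamma_2/\epsilon)$, so that Phase 3 actually begins. In case (b) I would then apply \Cref{lemma:phase_3_runtime}, which yields $\Tstar - T_2 \lesssim \frac{(\log\log d)^{20}}{\sigmaCoeffTwo^2\epsilon\gamma_2^8}\log(\gamma_2/\epsilon)$.

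Adding the two contributions and using \Cref{assumption:target} (specifically $\sigmaCoeffTwo^2\asymp\sigmaCoeffFour^2$) to absorb the Phase 2 term into the Phase 3 term, I get
\begin{align*}
\Tstar - T_1 \;=\; (T_2-T_1) + (\Tstar - T_2)
\;\lesssim\; \frac{(\log\log d)^{18}}{\sigmaCoeffTwo^2}\log\Big(\frac{\gamma_2}{\epsilon}\Big) + \frac{(\log\log d)^{20}}{\sigmaCoeffTwo^2\epsilon\gamma_2^8}\log\Big(\frac{\gamma_2}{\epsilon}\Big)
\;\lesssim\; \frac{(\log\log d)^{20}}{\sigmaCoeffTwo^2\epsilon\gamma_2^8}\log\Big(\frac{\gamma_2}{\epsilon}\Big),
\end{align*}
where the last step uses $\epsilon\gamma_2^8\le 1$ to dominate the first term.

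For the second claim I would simply multiply the first bound by $(\sigmaCoeffTwo^2+\sigmaCoeffFour^2)\gamma_2$ and use $\sigmaCoeffTwo^2\asymp\sigmaCoeffFour^2$ once more, so that $(\sigmaCoeffTwo^2+\sigmaCoeffFour^2)/\sigmaCoeffTwo^2=O(1)$, yielding $(\sigmaCoeffTwo^2+\sigmaCoeffFour^2)\gamma_2(\Tstar-T_1)\lesssim\frac{(\log\log d)^{20}}{\epsilon\gamma_2^7}\log(\gamma_2/\epsilon)$. There is no real obstacle; the only thing to be mildly careful about is correctly handling the disjunction in \Cref{lemma:phase_2_runtime}, since in the first branch Phase 3 is never actually entered, and the bound $\Tstar-T_1$ must be derived directly from the Phase 2 clause rather than from adding Phase 2 and Phase 3 contributions.
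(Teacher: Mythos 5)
Your proof is correct and follows exactly the route the paper takes: the paper's own proof is the one-liner ``This follows from \Cref{lemma:phase_2_runtime} and \Cref{lemma:phase_3_runtime},'' and you have merely filled in the bookkeeping — correctly handling the disjunction in \Cref{lemma:phase_2_runtime}, absorbing the Phase~2 term using $\epsilon\gamma_2^8\le 1$ and $\sigmaCoeffTwo^2\asymp\sigmaCoeffFour^2$, and multiplying through by $(\sigmaCoeffTwo^2+\sigmaCoeffFour^2)\gamma_2$ for the corollary.
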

\begin{proof}
This follows from \Cref{lemma:phase_2_runtime} and \Cref{lemma:phase_3_runtime}.
\end{proof}

\subsection{Helper Lemmas}

In this subsection, we give some convenient facts that we use in the following subsections.

\begin{proposition} \label{prop:particle_order}
If $0 \leq \iota_1 < \iota_2$, then for all $t \geq 0$, $\w_t(\iota_1) < \w_t(\iota_2)$.
\end{proposition}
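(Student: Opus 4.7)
The plan is to deduce this directly from the uniqueness of solutions to the scalar ODE described in \Cref{lem:1-d-traj}. At any fixed time $t$, the velocity $v_t(w) = -(1-w^2)(P_t(w) + Q_t(w))$ is a polynomial of degree at most $5$ in $w$, with coefficients that depend on $t$ only through $D_{2,t}$ and $D_{4,t}$. Because $|w| \leq 1$ and $|P_{k,d}(w)| \leq 1$ on $[-1,1]$, we have $|D_{2,t}|, |D_{4,t}| \leq 2$ for all $t$, so the coefficients of $v_t$ are bounded uniformly in $t$ (by a constant multiple of $\sigmaCoeffTwo^2 + \sigmaCoeffFour^2$). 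In particular, $v_t'(\cdot)$ is bounded by some $L < \infty$ on $[-1,1]$, uniformly in $t$.

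First I would observe that both $w_t(\iota_1)$ and $w_t(\iota_2)$ satisfy the same non-autonomous scalar ODE $\dot{w} = v_t(w)$, differing only in their initial conditions $\iota_1 < \iota_2$. The uniform Lipschitz bound on $v_t$ then implies that these two trajectories cannot meet in finite time: if $w_{t^\star}(\iota_1) = w_{t^\star}(\iota_2)$ for some $t^\star > 0$, Picard--Lindelöf uniqueness applied backwards from $t^\star$ would force $w_0(\iota_1) = w_0(\iota_2)$, contradicting $\iota_1 < \iota_2$.

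Concretely, I would set $h(t) := w_t(\iota_2) - w_t(\iota_1)$ and note $h(0) = \iota_2 - \iota_1 > 0$. By the mean value theorem there is some $\xi_t$ lying between $w_t(\iota_1)$ and $w_t(\iota_2)$ with
\begin{align}
\dot{h}(t) = v_t(w_t(\iota_2)) - v_t(w_t(\iota_1)) = v_t'(\xi_t)\, h(t).
\end{align}
Since $|\xi_t| \leq 1$ implies $|v_t'(\xi_t)| \leq L$ uniformly in $t$, Gronwall's inequality yields $h(t) \geq h(0)\, e^{-Lt} > 0$ for every $t \geq 0$. Combined with the fact that $w_0(\iota_1) < w_0(\iota_2)$, this gives $w_t(\iota_1) < w_t(\iota_2)$ for all $t \geq 0$, as claimed.

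There is essentially no obstacle beyond verifying that $v_t$ is Lipschitz with an integrable (in fact uniformly bounded) Lipschitz constant, which is immediate from the explicit polynomial form in \Cref{lem:1-d-traj} together with $|D_{2,t}|, |D_{4,t}| \leq 2$. One small care point is to ensure the trajectories remain in $[-1,1]$ so that the Lipschitz bound applies; this follows because $v_t(\pm 1) = 0$ (the factor $1-w^2$), so the endpoints are invariant and particles initialized in $(-1,1)$ stay in $(-1,1)$.
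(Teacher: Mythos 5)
Your proof is correct and rests on the same underlying fact as the paper's own argument---that the velocity $v_t(\cdot)$ from \Cref{lem:1-d-traj} is uniformly Lipschitz in $w$ on $[-1,1]$, so the scalar ODE has unique trajectories. The paper phrases this as a contradiction via Picard--Lindel\"of (if $w_t(\iota_1)=w_t(\iota_2)$ at some time then backward uniqueness forces $\iota_1=\iota_2$), whereas you run Gronwall on $h(t)=w_t(\iota_2)-w_t(\iota_1)$ directly; the latter is a bit more quantitative but is the same idea, and your care point about $v_t(\pm 1)=0$ keeping trajectories in $[-1,1]$ is the right thing to check.
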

\begin{proof}[Proof of \Cref{prop:particle_order}]
Assume this is not the case, and let $t$ be the infimum of all times $s$ such that $\w_s(\iota_1) \geq \w_s(\iota_2)$. Then, $\w_t(\iota_1) = \w_t(\iota_2)$, meaning that for all $t' \geq t$, $\w_{t'}(\iota_1) = \w_{t'}(\iota_2)$. For convenience, let $\w_0 = \w_t(\iota_1) = \w_t(\iota_2)$. Note that $t > 0$ since $\iota_1 \neq \iota_2$. However, this implies that for $t' \in (0, t)$, $w_{t'}(\iota_1) < \w_{t'}(\iota_2)$, meaning that for any $\delta > 0$, $\w_s(\iota_1)$ and $\w_s(\iota_2)$ are non-identical solutions to the initial value problem on the time interval $(t - \delta, t + \delta)$ given by the conditions $f(t) = \w_0$ and $\frac{df}{dt} = \vel_t(f(t))$. This contradicts the Picard-Lindelof theorem, and thus there cannot exist a time $t$ such that $\w_t(\iota_1) \geq \w_t(\iota_2)$.
\end{proof}

\begin{proposition} \label{prop:d2<0}
Suppose we are in the setting of \Cref{thm:pop_main_thm}, and let $t \geq 0$ such that $D_{2, t} < 0$. Then, $\E_{\w \sim \rho_t}[\w^2] \leq \gamma_2 + O\Big(\frac{1}{d}\Big)$.
\end{proposition}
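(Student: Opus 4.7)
The proof is essentially a one-line algebraic manipulation using the explicit form of $P_{2,d}$ given in \Cref{eq:legendre_polynomial_2_4}. The plan is to unfold the definition of $D_{2,t}$, substitute the explicit expression for $P_{2,d}$, and rearrange.

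More concretely, by the definition of $D_{2,t}$, the assumption $D_{2,t} < 0$ means
\begin{align}
\E_{u \sim \rho_t}[P_{2,d}(w)] < \gamma_2.
\end{align}
Using $P_{2,d}(w) = \tfrac{d}{d-1} w^2 - \tfrac{1}{d-1}$ from \Cref{eq:legendre_polynomial_2_4} and taking expectations, this becomes
\begin{align}
\tfrac{d}{d-1}\,\E_{\w \sim \rho_t}[\w^2] - \tfrac{1}{d-1} < \gamma_2.
\end{align}
Solving for $\E_{\w \sim \rho_t}[\w^2]$ gives
\begin{align}
\E_{\w \sim \rho_t}[\w^2] < \tfrac{d-1}{d}\gamma_2 + \tfrac{1}{d} \le \gamma_2 + \tfrac{1}{d},
\end{align}
since $\gamma_2 \le 1$ by \Cref{assumption:target}. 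This is exactly the claimed bound, with the $O(1/d)$ error coming from the $\tfrac{1}{d-1}$ normalization inside $P_{2,d}$. There is no real obstacle here — the proposition is a direct consequence of the explicit form of the degree-$2$ Legendre polynomial on $\bbS^{d-1}$, and the only reason the statement is not simply $\E[\w^2] \le \gamma_2$ is the small discrepancy between $P_{2,d}(w)$ and the monomial $w^2$.
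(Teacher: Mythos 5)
Your proof is correct and is essentially identical to the paper's argument: both substitute the explicit form $P_{2,d}(w)=\tfrac{d}{d-1}w^2-\tfrac{1}{d-1}$ into the definition of $D_{2,t}$, rearrange to get $\E[\w^2]\le \tfrac{d-1}{d}\gamma_2+\tfrac{1}{d}$, and absorb the discrepancy into $O(1/d)$. (One tiny nit: the last step really only needs $\gamma_2\ge 0$, not $\gamma_2\le 1$, since $\tfrac{d-1}{d}\gamma_2\le\gamma_2$ already holds for any nonnegative $\gamma_2$.)
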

\begin{proof}[Proof of \Cref{prop:particle_order}]
Suppose $D_{2, t} \leq 0$. Then, $\E_{\w \sim \rho_t}[\PtwoD(\w)] \leq \gamma_2$, which implies that 
\begin{align}
\frac{d}{d - 1} \E_{\w \sim \rho_t}[\w^2] - \frac{1}{d - 1} 
& \leq \gamma_2
\end{align}
and rearranging gives
\begin{align}
\E_{\w \sim \rho_t}[\w^2] \leq \frac{d - 1}{d} \gamma_2 + \frac{1}{d}
\leq \gamma_2 + O\Big(\frac{1}{d}\Big)
\end{align}
as desired.
\end{proof}

\begin{lemma}[Decrease in Loss and Average Velocity] \label{lemma:loss_decrease}
Suppose we are in the setting of \Cref{thm:pop_main_thm}. Then, $\frac{dL}{dt} = -\E_{\u \sim \rho_t} [\| \pgrad_{\u} L(\rho_t)\|_2^2]$.
\end{lemma}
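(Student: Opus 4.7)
The plan is to apply the chain rule to differentiate $L(\rho_t)$ along the trajectories of the particles, then exploit the fact that the Riemannian gradient $\pgrad_u L(\rho) = (I - uu^\top)\nabla_u L(\rho)$ is an orthogonal projection of $\nabla_u L(\rho)$. First I would write
\begin{align}
\frac{d}{dt} L(\rho_t)
& = \E_{\chi \sim \rho_0}\Big[ \Big\langle \nabla_u L(\rho_t) \Big|_{u = u_t(\chi)},\, \frac{du_t(\chi)}{dt}\Big\rangle\Big],
\end{align}
which is the standard first-variation computation for $L$ viewed as a functional of the particle locations; since the second-layer weights are frozen and the loss depends on the empirical distribution through a smooth functional of finitely many moments (or more generally through $f_\rho$), this is just the chain rule applied to $L$ as a function of each particle.

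Next I would substitute the projected gradient flow equation $\frac{du_t(\chi)}{dt} = -\pgrad_{u_t(\chi)} L(\rho_t) = -(I - u_t(\chi)u_t(\chi)^\top)\nabla_{u_t(\chi)} L(\rho_t)$ from \Cref{eq:population_derivative}. This gives
\begin{align}
\frac{d}{dt} L(\rho_t)
& = -\E_{u \sim \rho_t}\Big[\big\langle \nabla_u L(\rho_t),\, (I - uu^\top)\nabla_u L(\rho_t)\big\rangle\Big],
\end{align}
where I have reindexed the expectation from $\chi \sim \rho_0$ to $u \sim \rho_t$ using the definition of $\rho_t$ as the pushforward of $\rho_0$ under $\chi \mapsto u_t(\chi)$ (see \Cref{eq:rho_t}).

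Finally, since $P_u := I - uu^\top$ is symmetric and idempotent ($P_u^2 = P_u$), we have $\langle v, P_u v\rangle = \langle P_u v, P_u v\rangle = \|P_u v\|_2^2$ for any $v \in \R^d$. Applying this with $v = \nabla_u L(\rho_t)$ yields
\begin{align}
\big\langle \nabla_u L(\rho_t), (I - uu^\top)\nabla_u L(\rho_t)\big\rangle
& = \|\pgrad_u L(\rho_t)\|_2^2,
\end{align}
and substituting into the previous display gives the claimed identity $\frac{dL}{dt} = -\E_{u \sim \rho_t}[\|\pgrad_u L(\rho_t)\|_2^2]$. I do not expect any obstacle here: every step is a routine application of the chain rule and elementary linear algebra, and no properties beyond the definitions in \Cref{eq:projected_gradient_flow_population}--\Cref{eq:rho_t} are required.
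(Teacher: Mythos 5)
Your proof is correct and follows essentially the same route as the paper's. The paper's first two displayed steps (differentiating the squared-error form of $L$ and then differentiating $f_{\rho_t}$ inside) are exactly what you compress into the single ``chain rule / first-variation'' identity $\frac{d}{dt}L(\rho_t) = \E_{\chi\sim\rho_0}\big[\langle \nabla_{u_t(\chi)} L(\rho_t), \frac{du_t(\chi)}{dt}\rangle\big]$, which works here because the paper's normalization of $\nabla_u L(\rho)$ (as a per-particle gradient rather than a derivative of the mean-field objective with respect to one coordinate of a discrete empirical measure) is precisely what makes the $\E_{u\sim\rho_t}$ average come out right. The remaining substitution of the flow equation and the idempotence of $I - uu^\top$ are identical to the paper's closing lines.
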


We note that a similar result was shown in Lemma E.11 of \citet{WLLM19_generalization_matters}, but for gradient flow rather than projected gradient flow.

\begin{proof}[Proof of \Cref{lemma:loss_decrease}]
First, we can calculate
\begin{align}
\frac{d}{dt} L(\rho_t) 
= \frac{1}{2} \frac{d}{dt} \E_{x \sim \bbS^{d - 1}} [(f_{\rho_t}(x) - y(x))^2] 
= \E_{x \sim \bbS^{d - 1}} \Big[ (f_{\rho_t}(x) - y(x)) \frac{d}{dt} f_{\rho_t}(x) \Big] \period
\end{align}
Furthermore, we can expand $\frac{d}{dt} f_{\rho_t}(x)$ as
\begin{align}
\frac{d}{dt} \E_{\u \sim \rho_t} [\sigma(\u \cdot x)]
& = -\E_{\u \sim \rho_t} \Big[ \sigma'(\u \cdot x) x \cdot \pgrad_\u L(\rho_t) \Big] \period
\end{align}
Therefore,
\begin{align}
\frac{d}{dt} L(\rho_t)
& = \E_{x \sim \bbS^{d - 1}} \Big[ (f_{\rho_t}(x) - y(x)) \frac{d}{dt} f_{\rho_t}(x) \Big] \\
& = -\E_{x \sim \bbS^{d - 1}} \E_{\u \sim \rho_t} \Big[(f_{\rho_t}(x) - y(x)) \sigma'(\u \cdot x) x \cdot \pgrad_\u L(\rho_t) \Big] \\
& = -\E_{\u \sim \rho_t} \Big[ \pgrad_\u L(\rho_t) \cdot \E_{x \sim \bbS^{d - 1}} [(f_{\rho_t}(x) - y(x)) \sigma'(\u \cdot x) x]  \Big] \\
& = - \E_{\u \sim \rho_t} \Big[\pgrad_\u L(\rho_t) \cdot \nabla_\u L(\rho_t) \Big] \\
& = -\E_{\u \sim \rho_t} \Big[\|\pgrad_\u L(\rho_t)\|_2^2 \Big] \period
& \tag{B.c. $\pgrad_\u = (I - \u\u^\top) \nabla_\u L(\rho_t)$ and $(I - \u\u^\top)^2 = (I - \u\u^\top)$}
\end{align}
This completes the proof.
\end{proof}

The above lemma implies that $L(\rho_t)$ is always decreasing. Note that frequently, when we apply the above lemma, we will only consider the first coordinate of $\pgrad_\u L(\rho_t)$.

\begin{proposition} \label{prop:variance_difference_form}
For any distribution $\rho$ on $\R$,
\begin{align}
\E_{\w, \w' \sim \rho} [|\w - \w'|^2]
& = 2 \Var_{\w \sim \rho}[\w]
\end{align}
\end{proposition}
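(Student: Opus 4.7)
The plan is to prove the identity by a direct expansion of the square, using only the independence of $\w$ and $\w'$ under the product measure $\rho \otimes \rho$ and the definition of variance. No deeper machinery from the preceding sections is required; this proposition is purely a bookkeeping identity that we cited earlier (for instance, in the proof of \Cref{lemma:phase3_case1_final} when converting a second moment about the root $r$ into a variance).

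First, I would let $\mu \triangleq \E_{\w \sim \rho}[\w]$, assuming this expectation is finite (if $\Var[\w] = \infty$ the identity holds trivially in the extended sense). Next, I would expand the square and use linearity of expectation together with the fact that $(\w, \w') \sim \rho \otimes \rho$ so $\w$ and $\w'$ are independent with the same marginal:
\begin{align}
\E_{\w, \w' \sim \rho}[(\w - \w')^2]
 = \E[\w^2] - 2\E[\w \w'] + \E[(\w')^2]
 = 2\E_{\w \sim \rho}[\w^2] - 2\mu^2.
\end{align}
Here the cross term simplifies via independence: $\E[\w \w'] = \E[\w]\E[\w'] = \mu^2$. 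Finally, recognizing that $\E[\w^2] - \mu^2 = \Var_{\w \sim \rho}[\w]$ gives the claimed identity $\E_{\w, \w' \sim \rho}[(\w - \w')^2] = 2 \Var_{\w \sim \rho}[\w]$.

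There is no real obstacle; the only thing to be careful about is making clear that $\w$ and $\w'$ are drawn independently from $\rho$ (which is the convention used throughout the paper when writing $\E_{\w, \w' \sim \rho}$), so that the cross-term factors. The proof is two or three lines.
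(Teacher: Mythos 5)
Your proof is correct and follows essentially the same route as the paper's: expand the square, use independence of $\w$ and $\w'$ under $\rho \otimes \rho$ to factor the cross term, and recognize $\E[\w^2] - (\E[\w])^2$ as the variance. Nothing to add.
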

\begin{proof}[Proof of \Cref{prop:variance_difference_form}]
This follows from a short calculation:
\begin{align}
\E_{\w, \w' \sim \rho} [|\w - \w'|^2]
 = \E_{\w, \w' \sim \rho} (\w^2 - 2 \w \w' + (\w')^2) 
 = 2 \E_{\w \sim \rho} [\w^2] - 2 (\E_{\w \sim \rho}[\w])^2 
 = 2 \Var_{\w \sim \rho}[\w]
\end{align}
as desired.
\end{proof}

\section{Proofs for Finite-Samples and Finite-Width Dynamics}\label{section:proof_finite_sample}

In this section we give the proof for results related to the empirical dynamics, where the number of samples and network width are both polynomial in $d$. The end goal is to prove \Cref{thm:empirical-main}, the proof of which is given below.
\begin{proof}[Proof of \Cref{thm:empirical-main}]
We have
\begin{align}
    L(\rhohat_{\Tstar}) &= \Exp_{x\sim\Sp}[(f_{\rhohat_{\Tstar}}(x) - y(x))^2]\\
    &\le 3  \Exp_{x\sim\Sp}[(f_{\rhohat_{\Tstar}}(x) - f_{\rhobar_{\Tstar}}(x))^2] + 3 \Exp_{x\sim\Sp}[(f_{\rhobar_{\Tstar}}(x) - f_{\rho_{\Tstar}}(x))^2] \\
    &\nextlinespace+3  \Exp_{x\sim\Sp}[(f_{\rho_{\Tstar}}(x) - y(x))^2] \\
    &\lesssim (\sigmaCoeffTwo^2 + \sigmaCoeffFour^2) \frac{1}{d^{\frac{\samplebound - 1}{2}}} + \frac{(\sigmaCoeffTwo^2 + \sigmaCoeffFour^2) d^2 (\log d)^{O(1)}}{m}  + (\sigmaCoeffTwo^2 +  \sigmaCoeffFour^2) \epsilon^2.
\end{align}
where the first inequality is by Cauchy-Schwarz inequality, and the second inequality is by \Cref{lemma:finite_sample_width_main_lemma}, \Cref{lemma:final_bound_Delta_avg} and \Cref{thm:pop_main_thm}. Since $\epsilon = \frac{1}{\log \log d}$, the last term dominates because the conditions of \Cref{thm:empirical-main} imply that $m \gtrsim d^3$, and because $\frac{\mu - 1}{2} > 1$, meaning that the first term is at most $\frac{\sigmaCoeffTwo^2 + \sigmaCoeffFour^2}{d}$.
\end{proof}

To complete the proof, we give proofs for results in \Cref{section:finite_sample_analysis}.
We first give a proof for \Cref{lemma:finite_sample_width_main_lemma}:
\begin{proof}[Proof of \Cref{lemma:finite_sample_width_main_lemma}]
For the first statement, for any fixed $t \leq T$, by \Cref{lemma:finite_width_error}, we have
\begin{align}
\E_{x \sim \bbS^{d - 1}} [(f_{\rho_t}(x) - f_{\rhobar_t}(x))^2]
& \lesssim \frac{(\sigmaCoeffTwo^2 + \sigmaCoeffFour^2) (d^2 + \log \frac{1}{\delta}) (\log d)^C}{m} + \frac{\sigmaCoeffTwo^2 + \sigmaCoeffFour^2}{d^{\Omega(\log d)}}
\end{align}
with probability $1 - \delta$ --- we will choose $\delta$ shortly. In addition, for any $t, t' \in [0, T]$, by \Cref{lemma:gradient_bounds}, we have that $\|\u_t(\chi) - \u_{t'}(\chi)\|_2 \lesssim (\sigmaCoeffTwo^2 + \sigmaCoeffFour^2) d^4 |t - t'|$, which by \Cref{lemma:sigma_bound} implies that for any $x \in \bbS^{d - 1}$, 
\begin{align}
|f_{\rho_t}(x) - f_{\rho_{t'}}(x)|
& \leq \E_{\chi \sim \rho_0} |\sigma(\u_t(\chi) \cdot x) - \sigma(\u_{t'}(\chi) \cdot x)| \\
& \lesssim (|\sigmaCoeffTwo| \sqrt{N_{2, d}} + |\sigmaCoeffFour| \sqrt{N_{4, d}}) \cdot (\sigmaCoeffTwo^2 + \sigmaCoeffFour^2) d^4 |t - t'|
\end{align}
In particular, if $|t - t'| \leq \frac{1}{(\sigmaCoeffTwo^2 + \sigmaCoeffFour^2) d^{\Omega(\log d)}}$, then for all $x \in \bbS^{d - 1}$ we have
\begin{align}
|f_{\rho_t}(x) - f_{\rho_{t'}}(x)| 
& \lesssim \frac{|\sigmaCoeffTwo| + |\sigmaCoeffFour|}{d^{\Omega(\log d)}}
\end{align}
meaning that
\begin{align}
\E_{x \sim \bbS^{d - 1}} [(f_{\rho_t}(x) - f_{\rho_{t'}}(x))^2]
& \lesssim \frac{\sigmaCoeffTwo^2 + \sigmaCoeffFour^2}{d^{\Omega(\log d)}}
\end{align}
By the same argument, we have that
\begin{align}
\E_{x \sim \bbS^{d - 1}} [(f_{\rhobar_t}(x) - f_{\rhobar_{t'}}(x))^2]
& \lesssim \frac{\sigmaCoeffTwo^2 + \sigmaCoeffFour^2}{d^{\Omega(\log d)}}
\end{align}
Now, we perform a union bound over all times $t$ such that $t \leq T$ and $t$ is an integer multiple of $\frac{1}{(\sigmaCoeffTwo^2 + \sigmaCoeffFour^2) d^{H \log d}}$ for some sufficiently large absolute constant $H > 0$. Since $T \leq \frac{d^C}{\sigmaCoeffTwo^2 + \sigmaCoeffFour^2}$, the number of such times is at most $d^{C + H \log d}$. Thus, with probability at least $1 - \delta \cdot d^{C + H \log d}$, we have
\begin{align}
\E_{x \sim \bbS^{d - 1}} [(f_{\rho_t}(x) - f_{\rhobar_t}(x))^2]
& \lesssim \frac{(\sigmaCoeffTwo^2 + \sigmaCoeffFour^2) (d^2 + \log \frac{1}{\delta}) (\log d)^{O(1)}}{m} + \frac{\sigmaCoeffTwo^2 + \sigmaCoeffFour^2}{d^{\Omega(\log d)}}
\end{align}
for all $t \in [0, T]$. In particular, in order to have exponentially small failure probability, we set $\delta = \frac{1}{e^{d^2} d^{C + H \log d}}$, and by our assumption that $m \leq d^{O(\log d)}$, we obtain
\begin{align}
\E_{x \sim \bbS^{d - 1}} [(f_{\rho_t}(x) - f_{\rhobar_t}(x))^2]
& \lesssim \frac{(\sigmaCoeffTwo^2 + \sigmaCoeffFour^2) d^2 (\log d)^{O(1)}}{m}
\end{align}
for all $t \leq T$, with probability at least $1 - e^{-d^2}$, since $\log \frac{1}{\delta} = d^2 + O((\log d)^2)$. This completes the proof of the first statement of the lemma. Additionally, by \Cref{lemma:point_error_to_fn_error}, we have
\begin{align}
\E_{x \sim \bbS^{d - 1}} [(f_{\rhohat_t}(x) - f_{\rhobar_t}(x))^2]
& \lesssim (\sigmaCoeffTwo^2 + \sigmaCoeffFour^2) \E_{(\ubar, \uhat) \sim \coup} [\|\uhat_t - \ubar_t\|_2^2]
\end{align}
which proves the second statement of the lemma.
\end{proof}

The rest of the section is dedicated to prove \Cref{lemma:final_bound_Delta_avg}. To begin with, we first introduce the following lemma which formalizes the decomposition of $\frac{d}{dt} \|\uhat_t-\ubar_t\|_2^2$ discussed in \Cref{section:finite_sample_analysis}.
\begin{lemma}[Decomposition of $\frac{d}{dt} \|\uhat_t-\ubar_t\|_2^2$] \label{lemma:delta_decomposition_ABC}
Suppose we are in the setting of \Cref{thm:empirical-main}. Then, $\frac{d}{dt} \|\uhat_t-\ubar_t\|_2^2 = A_t + B_t + C_t$, where $A_t =- 2 \langle \pgrad_{\uhat} L(\rho_t) - \pgrad_{\ubar} L(\rho_t), \uhat_t - \ubar_t \rangle$, $B_t = -2 \langle \pgrad_{\uhat} L(\rhohat_t) - \pgrad_{\uhat} L(\rho_t), \uhat_t - \ubar_t \rangle$, and $C_t = -2 \langle \pgrad_{\uhat} \Lhat(\rhohat_t) - \pgrad_{\uhat} L(\rhohat_t), \uhat_t - \ubar_t \rangle$. 
\end{lemma}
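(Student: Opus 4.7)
The plan is to give a short telescoping-style derivation. I will start from the definition
\[
\tfrac{d}{dt}\|\uhat_t-\ubar_t\|_2^2 \;=\; 2\bigl\langle \uhat_t-\ubar_t,\; \tfrac{d}{dt}(\uhat_t-\ubar_t)\bigr\rangle,
\]
and then substitute the two governing ODEs. By the definition of the empirical finite-width dynamics, $\tfrac{d}{dt}\uhat_t = -\pgrad_{\uhat_t}\Lhat(\rhohat_t)$, while by the definition of the intermediate process $\rhobar_t$ in \Cref{eqn:13}, the particle $\ubar_t=u_t(\chi)$ evolves under the population infinite-width dynamics, so $\tfrac{d}{dt}\ubar_t = -\pgrad_{\ubar_t}L(\rho_t)$. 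Plugging these in gives
\[
\tfrac{d}{dt}\|\uhat_t-\ubar_t\|_2^2 \;=\; -2\bigl\langle \uhat_t-\ubar_t,\; \pgrad_{\uhat_t}\Lhat(\rhohat_t)-\pgrad_{\ubar_t}L(\rho_t)\bigr\rangle.
\]

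Next I will telescope by inserting the two intermediate gradients $\pgrad_{\uhat_t}L(\rho_t)$ and $\pgrad_{\uhat_t}L(\rhohat_t)$. Concretely, write
\[
\pgrad_{\uhat_t}\Lhat(\rhohat_t)-\pgrad_{\ubar_t}L(\rho_t)
=\bigl(\pgrad_{\uhat_t}L(\rho_t)-\pgrad_{\ubar_t}L(\rho_t)\bigr)
+\bigl(\pgrad_{\uhat_t}L(\rhohat_t)-\pgrad_{\uhat_t}L(\rho_t)\bigr)
+\bigl(\pgrad_{\uhat_t}\Lhat(\rhohat_t)-\pgrad_{\uhat_t}L(\rhohat_t)\bigr).
\]
Taking the inner product of $\uhat_t-\ubar_t$ with each of the three bracketed differences (multiplied by $-2$) recovers exactly $A_t$, $B_t$, and $C_t$ respectively, as defined in the statement. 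Adding them back together completes the identity $\tfrac{d}{dt}\|\uhat_t-\ubar_t\|_2^2 = A_t + B_t + C_t$.

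This derivation is entirely algebraic; there is no real obstacle. The only point worth being explicit about is which of the two arguments of $\pgrad$ is being perturbed at each step: the first inserted term changes the \emph{base particle} (from $\ubar_t$ to $\uhat_t$) while keeping the loss fixed at $L(\rho_t)$, isolating the growth due to the population dynamics itself (the $A_t$ piece); the second changes the \emph{distribution} from $\rho_t$ to $\rhohat_t$ inside the population-loss gradient, isolating the finite-width discrepancy ($B_t$); and the third swaps the population loss $L$ for its empirical counterpart $\Lhat$, isolating the finite-sample discrepancy ($C_t$). Once the decomposition is written, no estimates are needed to finish the lemma.
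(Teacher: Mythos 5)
Your proof is correct and uses exactly the same telescoping decomposition as the paper: substitute the two governing ODEs into the chain-rule expansion of $\tfrac{d}{dt}\|\uhat_t-\ubar_t\|_2^2$, then insert the intermediate gradients $\pgrad_{\uhat_t}L(\rho_t)$ and $\pgrad_{\uhat_t}L(\rhohat_t)$ to split the difference into the three pieces $A_t$, $B_t$, $C_t$. Nothing is missing.
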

\begin{proof}[Proof of \Cref{lemma:delta_decomposition_ABC}]
By the chain rule,
\begin{align}
 \frac{d}{dt} \| \uhat_t - \ubar_t \|_2^2   = 2 \Big\langle \frac{d}{dt} \uhat_t - \frac{d}{dt} \ubar_t, \uhat_t - \ubar_t \Big\rangle 
 = -2 \langle \pgrad_{\rhohat} \Lhat(\rhohat_t) - \pgrad_{\ubar} L(\rho_t), \uhat_t - \ubar_t \rangle
\end{align}
We can expand the difference in gradients as
\begin{align}
\pgrad_{\rhohat} \Lhat(\rhohat_t) - \pgrad_{\ubar} L(\rho_t)
& = \Big( \pgrad_{\uhat} \Lhat(\rhohat_t) - \pgrad_{\uhat} L(\rhohat_t) \Big) + \Big(\pgrad L_{\uhat}(\rhohat_t) - \pgrad L_{\uhat}(\rho_t) \Big) \\ 
& \nextlinespace + \Big(\pgrad_{\uhat} L(\rho_t) - \pgrad_{\ubar} L(\rho_t) \Big)
\end{align}
Thus,
\begin{align}
\frac{d}{dt} \| \uhat_t - \ubar_t \|_2^2
& = -2 \langle \pgrad_{\uhat} \Lhat(\rhohat_t) - \pgrad_{\uhat} L(\rhohat_t), \uhat_t - \ubar_t \rangle - 2 \langle \pgrad L_{\uhat}(\rhohat_t) - \pgrad L_{\uhat}(\rho_t) , \uhat_t - \ubar_t \rangle \\
& \nextlinespace - 2 \langle \pgrad_{\uhat} L(\rho_t) - \pgrad_{\ubar} L(\rho_t), \uhat_t - \ubar_t \rangle \\
& = C_t + B_t + A_t
\end{align}
as desired.
\end{proof}

Next, we give a proof of \Cref{lemma:final_bounds_ABC}, which gives an upper bound for each of $A_t$, $B_t$ and $C_t$.

\begin{proof}[Proof of \Cref{lemma:final_bounds_ABC}]
The proof directly follows from \Cref{lemma:At_bound_later}, \Cref{lemma:At_bound_first}, \Cref{lemma:bt_bound_individual}, \Cref{lemma:bt_bound_avg} and \Cref{lemma:ct_bound}.
\end{proof}

Combining the upper bounds in \Cref{lemma:final_bounds_ABC} and the decomposition in \Cref{lemma:delta_decomposition_ABC}, we have the following \Cref{lemma:growth_rate_with_N} which upper bounds the growth of $\norm{\uhat_t - \ubar_t}$.
Recall that $\Tstar$ is the total runtime for the algorithm (see \Cref{thm:pop_main_thm}), and $T_1$ is the time such that $\w_{T_1}(\iotaU) = \frac{1}{\log d}$ (see \Cref{def:phase_1}).

\begin{lemma}[Growth Rate of $\|\delta_t\|_2^2$ With Dependency on $n$] \label{lemma:growth_rate_with_N}
	In the setting of \Cref{thm:empirical-main}, suppose that the inductive hypothesis in \Cref{assumption:inductive_hypothesis} holds for some $\avgbound$ and $\maxbound$ up until $t$ for some $t\le \Tstar$. Further assume that $\maxbound < \frac{1}{2}$ and $\avgbound > \frac{1}{2}$. Then, when $t\le T_1$ we have
\begin{align}
\frac{d}{dt} \|\uhat_t - \ubar_t\|_2^2
& \leq \Big(4 \sigmaCoeffTwo^2 |D_{2, t}| \Big(1 + O\Big(\frac{1}{\log d}\Big)\Big) \|\uhat_t - \ubar_t\|_2^2 \\
& + (\sigmaCoeffTwo^2 + \sigmaCoeffFour^2) \cdot \frac{d (\log d)^{O(1)}}{\sqrt{m}} \|\uhat_t - \ubar_t\|_2 \\
& + (\sigmaCoeffTwo^2 + \sigmaCoeffFour^2) (\log d)^{O(1)} \Big(\frac{1}{d^{\avgbound}} + \sqrt{\frac{d}{n}} + \frac{d^{2.5 - 2\maxbound - 2\avgbound}}{n}\Big) \|\uhat_t - \ubar_t\|_2 \\
& + (\sigmaCoeffTwo^2 + \sigmaCoeffFour^2) (\log d)^{O(1)} \Big(\frac{d^{2 - 2\maxbound - \avgbound}}{n} + \frac{d^{4 - 4\maxbound - 2\avgbound}}{n} \Big) \|\uhat_t - \ubar_t\|_2^2
\end{align}
and
\begin{align}
\frac{d}{dt} \deltaBar_t^2
& \leq \Big(4 \sigmaCoeffTwo^2 |D_{2, t}| \Big(1 + O\Big(\frac{1}{\log d}\Big)\Big) \deltaBar_t^2 \\
& + (\sigmaCoeffTwo^2 + \sigmaCoeffFour^2) \cdot \frac{d (\log d)^{O(1)}}{\sqrt{m}} \deltaBar_t \\
& + (\sigmaCoeffTwo^2 + \sigmaCoeffFour^2) (\log d)^{O(1)} \Big(\sqrt{\frac{d}{n}} + \frac{d^{2.5 - 2\maxbound - 2\avgbound}}{n}\Big) \deltaBar_t \\
& + (\sigmaCoeffTwo^2 + \sigmaCoeffFour^2) (\log d)^{O(1)} \Big(\frac{1}{d^{\avgbound}} + \frac{d^{2 - 2\maxbound - \avgbound}}{n} + \frac{d^{4 - 4\maxbound - 2\avgbound}}{n} \Big) \deltaBar_t^2 \period
\end{align}
When $t \in [T_1, \Tstar]$, we have
\begin{align}
\frac{d}{dt} \|\uhat_t - \ubar_t\|_2^2
& \lesssim (\sigmaCoeffTwo^2 + \sigmaCoeffFour^2) \gamma_2 \|\uhat_t - \ubar_t\|_2^2 \\
& + (\sigmaCoeffTwo^2 + \sigmaCoeffFour^2) \cdot \frac{d (\log d)^{O(1)}}{\sqrt{m}} \|\uhat_t - \ubar_t\|_2 \\
& + (\sigmaCoeffTwo^2 + \sigmaCoeffFour^2) (\log d)^{O(1)} \Big(\frac{1}{d^{\avgbound}} + \sqrt{\frac{d}{n}} + \frac{d^{2.5 - 2\maxbound - 2\avgbound}}{n}\Big) \|\uhat_t - \ubar_t\|_2 \\
& + (\sigmaCoeffTwo^2 + \sigmaCoeffFour^2) (\log d)^{O(1)} \Big(\frac{d^{2 - 2\maxbound - \avgbound}}{n} + \frac{d^{4 - 4\maxbound - 2\avgbound}}{n} \Big) \|\uhat_t - \ubar_t\|_2^2
\end{align}
and
\begin{align}
\frac{d}{dt} \deltaBar_t^2
& \leq (\sigmaCoeffTwo^2 + \sigmaCoeffFour^2) \gamma_2 \deltaBar_t^2 \\
& + (\sigmaCoeffTwo^2 + \sigmaCoeffFour^2) \cdot \frac{d (\log d)^{O(1)}}{\sqrt{m}} \deltaBar_t \\
& + (\sigmaCoeffTwo^2 + \sigmaCoeffFour^2) (\log d)^{O(1)} \Big(\sqrt{\frac{d}{n}} + \frac{d^{2.5 - 2\maxbound - 2\avgbound}}{n}\Big) \deltaBar_t \\
& + (\sigmaCoeffTwo^2 + \sigmaCoeffFour^2) (\log d)^{O(1)} \Big(\frac{1}{d^{\avgbound}} + \frac{d^{2 - 2\maxbound - \avgbound}}{n} + \frac{d^{4 - 4\maxbound - 2\avgbound}}{n} \Big) \deltaBar_t^2 \period
\end{align}
\end{lemma}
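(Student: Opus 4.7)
} The plan is to combine the decomposition of Lemma~\ref{lemma:delta_decomposition_ABC} with the term-by-term bounds of Lemma~\ref{lemma:final_bounds_ABC}, then plug in the inductive hypothesis (\Cref{assumption:inductive_hypothesis}) to rewrite the $\deltaBar_t$ factors appearing in the bounds as powers of $d^{-\avgbound}$. Concretely, for the pointwise bound on $\frac{d}{dt}\|\uhat_t - \ubar_t\|_2^2$, I will start from $\frac{d}{dt}\|\uhat_t - \ubar_t\|_2^2 = A_t + B_t + C_t$ and apply the three bounds from Lemma~\ref{lemma:final_bounds_ABC} directly, separating the case $t \le T_1$ (using the first branch of the $A_t$ bound) from the case $T_1 \le t \le \Tstar$ (using the second branch).

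For $A_t$, nothing needs to be done beyond quoting Lemma~\ref{lemma:final_bounds_ABC}. For $B_t$, I will use the pointwise bound $B_t \lesssim (\sigmaCoeffTwo^2 + \sigmaCoeffFour^2)(d(\log d)^{O(1)}/\sqrt{m} + \deltaBar_t)\|\delta_t\|_2$ and absorb $\deltaBar_t \le d^{-\avgbound}$ from the inductive hypothesis to produce the $\frac{1}{d^{\avgbound}}\|\delta_t\|_2$ term visible in the statement. For $C_t$, I will apply the inductive bound $\deltaBar_t \le d^{-\avgbound}$ to each of the four terms: $\sqrt{d/n}\|\delta_t\|_2$ stays as-is, $\frac{d^{2-2\maxbound}}{n}\deltaBar\|\delta_t\|_2^2$ becomes $\frac{d^{2-2\maxbound-\avgbound}}{n}\|\delta_t\|_2^2$, $\frac{d^{2.5-2\maxbound}}{n}\deltaBar^2\|\delta_t\|_2$ becomes $\frac{d^{2.5-2\maxbound-2\avgbound}}{n}\|\delta_t\|_2$, and $\frac{d^{4-4\maxbound}}{n}\deltaBar^2\|\delta_t\|_2^2$ becomes $\frac{d^{4-4\maxbound-2\avgbound}}{n}\|\delta_t\|_2^2$. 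Summing the contributions and grouping terms by their dependence on $\|\delta_t\|_2$ versus $\|\delta_t\|_2^2$ yields the pointwise bound stated in the lemma.

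For the averaged bound on $\frac{d}{dt}\deltaBar_t^2 = \Exp_{(\uhat_t,\ubar_t)\sim \coup_t}\big[\frac{d}{dt}\|\uhat_t-\ubar_t\|_2^2\big]$, I will again decompose into $\Exp[A_t] + \Exp[B_t] + \Exp[C_t]$. The $A_t$ bound is already in terms of $\|\delta_t\|_2^2$, so its expectation immediately becomes the claimed $\deltaBar_t^2$-type bound. For $B_t$, I will use the \emph{stronger} expectation bound from Lemma~\ref{lemma:final_bounds_ABC}: $\Exp[B_t] \lesssim (\sigmaCoeffTwo^2+\sigmaCoeffFour^2)\big(d(\log d)^{O(1)}/\sqrt{m}\big)\deltaBar_t + (\sigmaCoeffTwo^2+\sigmaCoeffFour^2)\deltaBar_t^3$, and the factor $\deltaBar_t^3 = \deltaBar_t\cdot \deltaBar_t^2 \le d^{-\avgbound}\deltaBar_t^2$ becomes the $\frac{1}{d^{\avgbound}}\deltaBar_t^2$ term in the statement. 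For $C_t$, I will use Cauchy--Schwarz to pass expectations through: $\Exp[\|\delta_t\|_2] \le \sqrt{\Exp[\|\delta_t\|_2^2]} = \deltaBar_t$ and $\Exp[\|\delta_t\|_2^2] = \deltaBar_t^2$, converting each $\|\delta_t\|_2^k$ factor to $\deltaBar_t^k$ in the corresponding term. Collecting everything gives the second stated bound.

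The argument is essentially bookkeeping once Lemmas~\ref{lemma:delta_decomposition_ABC} and~\ref{lemma:final_bounds_ABC} are in hand, so there is no serious obstacle. The only point requiring a bit of care is the averaging step: one must use the sharper bound on $\Exp[B_t]$ (which avoids paying an extra $\deltaBar_t \|\delta_t\|_2$ and hence an extra $d^{-\avgbound}\|\delta_t\|_2$ worth of growth), since using just the pointwise bound on $B_t$ and averaging by Cauchy--Schwarz would introduce an undesirable $\deltaBar_t^2$ term that is not multiplied by $1/d^{\avgbound}$ and therefore would not suffice to close the induction on $\deltaBar_t$ later.
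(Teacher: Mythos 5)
Your proposal is correct and follows essentially the same route as the paper: decompose $\frac{d}{dt}\|\uhat_t-\ubar_t\|_2^2 = A_t + B_t + C_t$ via \Cref{lemma:delta_decomposition_ABC}, substitute the term-by-term bounds from \Cref{lemma:final_bounds_ABC}, replace $\deltaBar_t$ by $d^{-\avgbound}$ using \Cref{assumption:inductive_hypothesis}, and for the averaged bound use the sharper $\E[B_t]$ estimate rather than averaging the pointwise $B_t$ bound. Your closing remark about why the $\E[B_t]$ bound is indispensable — that averaging the pointwise $B_t$ bound would introduce a $\deltaBar_t^2$ term with no $d^{-\avgbound}$ discount, spoiling the subsequent induction on $\deltaBar_t$ — is precisely the caveat the paper records at the end of its (very terse) proof.
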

\begin{proof}[Proof of \Cref{lemma:growth_rate_with_N}]
This result follows directly from \Cref{lemma:final_bounds_ABC} and the inductive hypothesis. By the inductive hypothesis, we have $\deltaBar_t \leq d^{-\avgbound}$. We apply this to the bound for $C_t$ in \Cref{lemma:final_bounds_ABC} to obtain the desired bounds above. Note that in our bounds for $\frac{d}{dt} \deltaBar_t^2$, it is important that we use the upper bound for $\E[B_t]$ in \Cref{lemma:final_bounds_ABC} instead of the upper bound for $B_t$.
\end{proof}

Using this bound on the growth rate, we can show that the inductive hypothesis is maintained.

\begin{lemma}[IH is Maintained] \label{lemma:IH_is_maintained}
Suppose we are in the setting of \Cref{thm:empirical-main}. Suppose $\maxbound>0$ and $1 > \avgbound>\frac{1}{2}+\maxbound$. Let $n=d^{\samplebound} (\log d)^{C}$ where $\mu\geq\max\{2+2\avgbound, 4-2\avgbound-4\maxbound\}$ and $C$ is a sufficiently large universal constant. 
	Suppose the inductive hypothesis in \Cref{assumption:inductive_hypothesis} holds for all $t\le T$ where $T<\Tstar$.
	Then, when $m \gtrsim d^{3 + 2\avgbound} (\log d)^{\Omega(1)}$ and $m \leq d^C$ for a sufficiently large universal constant $C$, we have that the inductive hypothesis holds for all $t\le T+\cInd$ where $\cInd>0$ is a constant that only depends on $d$, $\avgbound$, $\maxbound$, $\sigmaCoeffTwo$, $\sigmaCoeffFour$.
\end{lemma}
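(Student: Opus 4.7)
The plan is to apply Lemma~\ref{lemma:growth_rate_with_N} on $[0,T]$ to derive a differential inequality for $\deltaBar_t^2$ and $\|\uhat_t-\ubar_t\|_2^2$, integrate it via Gronwall, and show that the IH actually holds with strict slack at time $T$; the Lipschitz continuity of the coupled flow then gives a quantitative $\cInd>0$ depending only on the stated parameters for which the IH extends to $[T,T+\cInd]$. I would split the analysis at the Phase~1 boundary $\min(T,T_1)$, since the growth-rate bound in Lemma~\ref{lemma:growth_rate_with_N} has a much sharper multiplicative term $4\sigmaCoeffTwo^2|D_{2,t}|(1+O(1/\log d))$ before $T_1$ and only the loose bound $O((\sigmaCoeffTwo^2+\sigmaCoeffFour^2)\gamma_2)$ afterwards.

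On Phase~1, I would divide the quadratic inequality by $\deltaBar_t$ to obtain a linear one of the form $\tfrac{d\deltaBar_t}{dt}\le (\sigmaCoeffTwo^2|D_{2,t}|(1+O(1/\log d))+K_2/2)\deltaBar_t + K_1/2$, where $K_1$ and $K_2$ bundle the linear and quadratic noise coefficients from Lemma~\ref{lemma:growth_rate_with_N}. By Lemma~\ref{lemma:phase_1_summary} the integrating factor satisfies $\exp\!\bigl(\int_0^{T_1}2\sigmaCoeffTwo^2|D_{2,t}|\,dt\bigr)\asymp \sqrt d/(\log d)^2$, and the $(1+O(1/\log d))$ correction contributes only a $d^{O(1/\log d)}=O(1)$ overhead; meanwhile $\int_0^{T_1}K_2\,dt\lesssim \tfrac{(\log d)^{O(1)}}{\gamma_2}\bigl(d^{-\avgbound}+d^{4-4\maxbound-2\avgbound}/n\bigr)$, which is $O(1)$ under $\mu\ge 4-2\avgbound-4\maxbound$. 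Starting from $\deltaBar_0=0$, Gronwall then yields $\deltaBar_{T_1}\lesssim \tfrac{\sqrt d}{(\log d)^2}\cdot K_1 T_1$. Using $T_1\lesssim \log d/(\sigmaCoeffTwo^2\gamma_2)$ together with $n\ge d^{2+2\avgbound}(\log d)^C$ and $m\gtrsim d^{3+2\avgbound}(\log d)^{\Omega(1)}$ forces this to be at most $d^{-\avgbound}/4$, i.e.\ strictly below the IH threshold, with the $n$-conditions exactly cancelling the $\sqrt d$ amplification.

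On $[T_1,T]\subset[T_1,\Tstar]$, Lemma~\ref{lemma:time_T_star_minus_T1} gives $\Tstar-T_1\lesssim \poly(\log\log d)/(\sigmaCoeffTwo^2\epsilon\gamma_2^{8})$, so the amplification factor $\exp(O((\sigmaCoeffTwo^2+\sigmaCoeffFour^2)\gamma_2(\Tstar-T_1)))$ is only $\exp(\poly(\log\log d))$, which is sub-polynomial in $d$; a second Gronwall application on this interval shows that both the propagated error and the additional noise contribution remain of order $d^{-\avgbound}/4$, so $\deltaBar_T\le d^{-\avgbound}/2$. The argument for $\deltaMaxt{T}$ is analogous, applied to the per-coupled-pair ODE for $\|\uhat_t(\chi)-\ubar_t(\chi)\|_2^2$: the only difference is that the high-probability bound on $B_t$ in Lemma~\ref{lemma:final_bounds_ABC} carries the extra term $(\sigmaCoeffTwo^2+\sigmaCoeffFour^2)\deltaBar_t\|\delta_t\|_2$, which is absorbed using the gap $\avgbound>1/2+\maxbound$, yielding $\deltaMaxt{T}\le d^{-\maxbound}/2$.

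Finally, since the right-hand side of the coupled ODEs for $\uhat_t$ and $\ubar_t$ is uniformly bounded by a function of $d,m,n,\sigmaCoeffTwo,\sigmaCoeffFour$ on the region where the IH holds, the maps $t\mapsto \deltaBar_t$ and $t\mapsto \deltaMaxt{t}$ are Lipschitz on $[T,\Tstar]$ with a constant $L=L(d,\avgbound,\maxbound,\sigmaCoeffTwo,\sigmaCoeffFour)$; setting $\cInd = \min(d^{-\avgbound},d^{-\maxbound})/(2L)$ extends the strict inequalities to $[T,T+\cInd]$. The main obstacle is the tightness of the factor $4$ in the Phase~1 growth bound: any inflation of this constant would turn the Phase~1 amplification of $\deltaBar_t^2$ from $O(d/(\log d)^4)$ into $d^{1+\Omega(1)}$, which would in turn require $\poly(d)$ more samples and destroy the separation from NTK. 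Tracking this constant and the $O(1/\log d)$ correction through the $K_2$ term is the delicate computation on which the entire inductive step rests.
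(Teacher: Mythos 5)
Your proposal is correct and follows essentially the same route as the paper's proof: apply \Cref{lemma:growth_rate_with_N}, reduce the quadratic inequality in $\deltaBar_t^2$ to a linear inhomogeneous ODE (the paper packages this step as \Cref{lemma:integral_bound}, you divide by $\deltaBar_t$ directly, but these are the same manipulation), integrate the Phase-1 growth rate to $\exp(\int_0^{T_1}4\sigmaCoeffTwo^2|D_{2,t}|\,dt)\lesssim d$ via \Cref{lemma:phase_1_summary}, verify that the $m,n$-hypotheses suppress the inhomogeneous source terms, note that Phase 2/3 contributes only a sub-polynomial $\exp(\poly(\log\log d))$ factor via \Cref{lemma:time_T_star_minus_T1}, and finally extend the inductive hypothesis forward by a positive $\cInd$ using the crude Lipschitz bound on the projected gradients (\Cref{lemma:gradient_bounds}). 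The only difference worth noting is that your Gronwall estimate produces a prefactor $K_1T_1$ where the paper's \Cref{lemma:integral_bound} produces $K_1/(\min_t q_t)$; since $T_1\lesssim\log d/(\sigmaCoeffTwo^2\gamma_2)$ and $\min_t q_t \gtrsim \sigmaCoeffTwo^2\gamma_2$, your version is looser by a $\log d$ factor, which is nevertheless absorbed by the $(\log d)^C$ headroom in $n$. There is also a trivial typo early on: dividing $\tfrac{d}{dt}\deltaBar_t^2\le 4\sigmaCoeffTwo^2|D_{2,t}|\deltaBar_t^2+\cdots$ by $2\deltaBar_t$ gives a coefficient $2\sigmaCoeffTwo^2|D_{2,t}|$, not $\sigmaCoeffTwo^2|D_{2,t}|$; you in fact use the correct factor two lines later when citing the integrating-factor value, so this does not affect the argument.
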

\begin{proof}
Suppose the inductive hypothesis holds for all $t \leq T$ where $T \leq T_1$. Then, by \Cref{lemma:growth_rate_with_N} we have
\begin{align}
\frac{d}{dt} \|\uhat_t - \ubar_t\|_2^2
& \leq \Big(4 \sigmaCoeffTwo^2 |D_{2, t}| \Big(1 + O\Big(\frac{1}{\log d}\Big)\Big) \Big) \|\uhat_t - \ubar_t\|_2^2 + \frac{(\sigmaCoeffTwo^2 + \sigmaCoeffFour^2) d (\log d)^{O(1)}}{\sqrt{m}} \|\uhat_t - \ubar_t\|_2 \\
& \nextlinespace\nextlinespace + O\Big(\frac{(\sigmaCoeffTwo^2 + \sigmaCoeffFour^2) (\log d)^{O(1)}}{d^\avgbound} \|\uhat_t - \ubar_t\|_2\Big) \\
& \leq \Big(4 \sigmaCoeffTwo^2 |D_{2, t}| \Big(1 + O\Big(\frac{1}{\log d}\Big)\Big) \Big) \|\uhat_t - \ubar_t\|_2^2 + O\Big(\frac{(\sigmaCoeffTwo^2 + \sigmaCoeffFour^2) (\log d)^{O(1)}}{d^\avgbound} \|\uhat_t - \ubar_t\|_2\Big) \period
\end{align}
Here we have used the fact that $n = d^\mu (\log d)^{O(1)}$ where $\mu \geq \max(4 - 4 \maxbound - 2 \avgbound, 2 + 2 \avgbound)$ as well as the fact that $\avgbound > \frac{1}{2}$, as follows:
\begin{itemize}
    \item In the third term in the bound for $\frac{d}{dt} \|\uhat_t - \ubar_t\|_2^2$ from \Cref{lemma:growth_rate_with_N}, only the $\frac{1}{d^\avgbound}$ term is important --- this term dominates the $\sqrt{\frac{d}{n}}$ and $\frac{d^{2.5 - 2\maxbound - 2\avgbound}}{n}$ terms, using the fact that $\mu > 2 + 2\avgbound$.
    \item In the fourth term in the bound for $\frac{d}{dt} \|\uhat_t - \ubar_t\|_2^2$, the $\frac{d^{2 - 2\maxbound - \avgbound}}{n} \|\uhat_t - \ubar_t\|_2^2$ term is dominated by the $\frac{1}{d^\avgbound} \|\uhat_t - \ubar_t\|_2$ term --- this is because $\mu \geq 2 + 2 \avgbound$.
    \item Since $\mu \geq 4 - 4\maxbound - 2\avgbound$, if we increase $n$ by a $(\log d)^{O(1)}$ factor, then the $\frac{d^{4 - 4\maxbound - 2\avgbound}}{n} \|\uhat_t - \ubar_t\|_2^2$ term can be absorbed into the first term, as it is dominated by the $4 \sigmaCoeffTwo^2 |D_{2, t}| O\Big(\frac{1}{\log d}\Big) \|\uhat_t - \ubar_t\|_2^2$ term. (Recall that by \Cref{lemma:phase1_d2_d4_neg}, $|D_{2, t}| \gtrsim \gamma_2$ during Phase 1).
\end{itemize}
In the second inequality above, we have used the fact that $m \gtrsim d^{3 + 2 \avgbound} (\log d)^{\Omega(1)}$ --- this implies that the term involving $m$ is dominated by the $\frac{(\log d)^{O(1)}}{d^\avgbound} \|\uhat_t - \ubar_t\|_2$ term.

Additionally, by \Cref{lemma:growth_rate_with_N}, we have
\begin{align}
\frac{d}{dt} \deltaBar_t^2 
& \leq \Big(4 \sigmaCoeffTwo^2 |D_{2, t}| \Big(1 + O\Big(\frac{1}{\log d}\Big)\Big)\Big) \deltaBar_t^2 + \frac{(\sigmaCoeffTwo^2 + \sigmaCoeffFour^2) \cdot d (\log d)^{O(1)}}{\sqrt{m}} \deltaBar_t \\
& \nextlinespace\nextlinespace + O\Big(\frac{(\sigmaCoeffTwo^2 + \sigmaCoeffFour^2)}{d^{0.5 + \avgbound} (\log d)^{O(1)}} \deltaBar_t\Big) \\
& \leq \Big(4 \sigmaCoeffTwo^2 |D_{2, t}| \Big(1 + O\Big(\frac{1}{\log d}\Big)\Big)\Big) \deltaBar_t^2 + O\Big(\frac{(\sigmaCoeffTwo^2 + \sigmaCoeffFour^2)}{d^{0.5 + \avgbound} (\log d)^{O(1)}} \deltaBar_t\Big) \period
\end{align}
Here, we apply the bound for $\frac{d}{dt} \deltaBar_t^2$ from \Cref{lemma:growth_rate_with_N}. In the third term in this bound, the $\sqrt{\frac{d}{n}}$ term dominates the $\frac{d^{2.5 - 2\maxbound - 2\avgbound}}{n}$ term, using the fact that $\avgbound > \frac{1}{2}$ and $\mu \geq 2 + 2\avgbound > 3$, meaning $\frac{d^{2.5 - 2\maxbound - 2\avgbound}}{n} \leq \frac{d^{1.5}}{n} \leq \frac{1}{\sqrt{n}}$. In the fourth term, we deal with the $\frac{d^{4 - 4\maxbound - 2\avgbound}}{n}$ term similarly to our calculations for the bound for $\frac{d}{dt} \|\uhat_t - \ubar_t\|_2^2$. We can also deal with the $\frac{d^{2 - 2\maxbound - \avgbound}}{n}$ term similarly, since by our definition of $\mu$, this will be at most $\frac{1}{d^{3 \avgbound}} \leq \frac{1}{(\log d)^{O(1)}}$, and will thus be absorbed into the first term involving $|D_{2, t}|$. Using $\mu \geq 2 + 2\avgbound$, we then obtain $\sqrt{\frac{d}{n}} \leq \frac{1}{d^{0.5 + \avgbound}}$ as the coefficient of the $\deltaBar_t$ term (i.e. the first-order term involving $\deltaBar_t$). Note that here we need to have $m \gtrsim d^{3 + 2 \avgbound} (\log d)^{\Omega(1)}$ in order to absorb the term involving $m$ into the $\frac{1}{d^{0.5 + \avgbound}} \deltaBar_t$ term.

Now using \Cref{lemma:integral_bound} we have
\begin{align}
\deltaBar_t^2 \lesssim \frac{1}{(\log d)^{O(1)}} \Big(\frac{(\sigmaCoeffTwo^2 + \sigmaCoeffFour^2) O(\frac{1}{d^{0.5 + \avgbound}})}{4\sigmaCoeffTwo^2 \min_t|D_{2,t}| }\Big)^2\exp\Big( \int_{0}^T 4\sigmaCoeffTwo^2|D_{2,t}| \Big(1+O\Big(\frac{1}{\log d}\Big)\Big) dt  \Big)
\end{align}
and
\begin{align}
\deltaMaxt{T}^2 \lesssim (\log d)^{O(1)} \Big(\frac{(\sigmaCoeffTwo^2 + \sigmaCoeffFour^2) O(\frac{1}{d^{\avgbound}})}{4\sigmaCoeffTwo^2 \min_t|D_{2,t}| }\Big)^2\exp\Big( \int_{t}^T 4\sigmaCoeffTwo^2|D_{2,t}| \Big(1+O\Big(\frac{1}{\log d}\Big)\Big) dt  \Big)
\end{align}
By \Cref{lemma:phase_1_summary} we have
\begin{align}
\exp\Big(\int_{t = 0}^T 4 \sigmaCoeffTwo^2 |D_{2, t}| dt\Big) 
& \lesssim d
\end{align}
and by \Cref{lemma:phase1_d2_d4_neg} we know that $|D_{2, t}| \gtrsim \gamma_2$ during Phase 1. Thus,
\begin{align}\label{eq:induction_1}
	\deltaMaxt{T}^2 \le O\Big(\frac{(\log d)^{O(1)}}{\gamma_2^2 d^{2\avgbound-1}}\Big).
\end{align}
Since $2\avgbound-1>2\maxbound$ and $d$ is chosen to be sufficiently large after $\gamma_2$ is chosen (\Cref{assumption:target}), we have that $\deltaMaxt{T} < \sqrt{\frac{1}{d^{\maxbound + \avgbound - 1/2}}} = \frac{1}{d^{\frac{1}{2}\avgbound + \frac{1}{2}\maxbound - \frac{1}{4}}}< \frac{1}{d^{\maxbound}}$. Note that $\deltaMaxt{t}$ is continuous in $t$ and we have
\begin{align}
\frac{d}{dt} \deltaMaxt{t}
& \leq \max_\chi \frac{d}{dt} \|\uhat_t - \ubar_t\|_2 \\
& = \max_\chi \frac{1}{\|\uhat_t - \ubar_t\|_2} \frac{d}{dt} \|\uhat_t - \ubar_t\|_2^2 \\
& \leq \max_\chi \frac{1}{\|\uhat_t - \ubar_t\|_2} |\langle \uhat_t - \ubar_t, \pgrad_{\uhat_t} \Lhat(\uhat_t) - \pgrad_{\ubar_t} L(\ubar_t) \rangle| \\
& \leq \max_\chi \|\pgrad_{\uhat_t} \Lhat(\uhat_t) - \pgrad_{\ubar_t} L(\ubar_t)\|_2
& \tag{By Cauchy-Schwarz Inequality} \\
& \leq (\sigmaCoeffTwo^2 + \sigmaCoeffFour^2) d^4 \period
& \tag{By \Cref{lemma:gradient_bounds}}
\end{align}
Thus, for $\mu_1= \Big(\frac{1}{d^{\maxbound}} -\frac{1}{d^{\frac{1}{2}\avgbound + \frac{1}{2}\maxbound - \frac{1}{4}}}\Big) \cdot \frac{1}{(\sigmaCoeffTwo^2 + \sigmaCoeffFour^2) d^4}$ we still have the inductive hypothesis for $\deltaMax$ until time $T+\mu_1$, that is, $\deltaMaxt{T+\mu_1} \le \frac{1}{d^{\maxbound}}$. Similarly, our calculation above using \Cref{lemma:integral_bound} gives us
\begin{align} \label{eq:induction_2}
\deltaBar_T^2 
& \leq O\Big(\frac{1}{\gamma_2^2 d^{2\avgbound} (\log d)^{O(1)}}\Big)
\end{align}
Thus, we have $\deltaBar_T < \frac{1}{d^{\avgbound} (\log d)^{O(1)}} < \frac{1}{d^\avgbound}$ (where in the middle expression, we choose a smaller power of $(\log d)^{O(1)}$ so that the inequality holds). Observe that $\deltaBar_t$ is continuous in $t$, and
\begin{align}
\frac{d}{dt} \deltaBar_t
& = \frac{1}{\deltaBar_t} \frac{d}{dt} \deltaBar_t^2 \\
& = \frac{1}{\deltaBar_t} \E_{\chi} [\langle \uhat_t - \ubar_t, \uhat_t' - \ubar_t' \rangle] \\
& \leq \frac{1}{\deltaBar_t} \E_{\chi} [\|\uhat_t - \ubar_t\|_2 \|\uhat_t' - \ubar_t'\|_2]
& \tag{By Cauchy-Schwarz Inequality} \\
& \leq \frac{(\sigmaCoeffTwo^2 + \sigmaCoeffFour^2) d^4}{\deltaBar_t} \E_{\chi} [\|\uhat_t - \ubar_t\|_2]
& \tag{By \Cref{lemma:gradient_bounds}} \\
& \leq (\sigmaCoeffTwo^2 + \sigmaCoeffFour^2) d^4 \period
\end{align}
Thus, for $\mu_2 = \Big(\frac{1}{d^\avgbound} - \frac{1}{d^\avgbound (\log d)^{O(1)}}\Big) \cdot \frac{1}{(\sigmaCoeffTwo^2 + \sigmaCoeffFour^2) d^4}$ we still have the inductive hypothesis for $\deltaBar$ until time $T + \mu_2$, i.e. $\deltaBar_{T + \mu_2} \leq \frac{1}{d^\avgbound}$. As a result, assuming the inductive hypothesis holds up to time $T$, it holds up to time $T + \min(\mu_1, \mu_2)$.

Our above proof strategy applies to all times $T \leq T_1$. For the induction when $T \in [T_1, \Tstar]$ we follow a similar argument but instead use the bound on $A_t$ obtained from \Cref{lemma:At_bound_later}. Since we know from \Cref{lemma:time_T_star_minus_T1} and the assumption that $\epsilon = O(\frac{1}{\log \log d})$ that
\begin{align}
	\int_{T_1}^{\Tstar} (\sigmaCoeffTwo^2 + \sigmaCoeffFour^2)\gamma_2 dt \le (\Tstar - T_1)(\sigmaCoeffTwo^2 + \sigmaCoeffFour^2)\gamma_2 \le \frac{\poly(\log \log d)}{\gamma_2^{O(1)}},
\end{align}
we know that this additional phase will at most add another $\exp(\poly(\log \log d))$ factor to the growth of $\deltaMaxt{t}$ which is smaller than any polynomial in $d$. Thus, \Cref{eq:induction_1} and \Cref{eq:induction_2} still hold for this period, which finishes the proof.
\end{proof}

Finally, we prove \Cref{lemma:final_bound_Delta_avg}:

\begin{proof}[Proof of \Cref{lemma:final_bound_Delta_avg}]
The proof directly follows \Cref{lemma:IH_is_maintained} with $\avgbound = \frac{\samplebound-1}{4}$ and $\maxbound = \frac{\samplebound-3}{8}$.
\end{proof}

\subsection{Upper bound on $A_t$}\label{sec:proof_at}

The goal of this section is to obtain an upper bound on
\begin{align}
A_t(\chi) := - 2 \langle \pgrad_{\uhat} L(\rho_t) - \pgrad_{\ubar} L(\rho_t), \uhat_t(\chi) - \ubar_t(\chi) \rangle \period
\end{align}
When there is no risk of confusion, we simply use $A_t$, $\uhat_t$, and $\ubar_t$, omitting the dependence on $\chi$.

We first obtain a strong upper bound on this inner product during Phase 1 of the population projected gradient flow. To obtain an upper bound on this inner product, we separately consider the contributions from the first coordinates of the vectors, and the remaining $(d - 1)$ coordinates --- in other words, the gradients with respect to $\w$ and the gradients with respect to $\z$. For convenience, let $\pgrad_\w L(\rho)$ denote the gradient with respect to $\w$ and $\pgrad_\z L(\rho)$ denote the gradient with respect to $\z$. The following lemma is about the contribution of the first coordinates of the particles to $A_t$, and is useful during Phase 1.

\begin{lemma}[Contribution of $\w$'s] \label{lemma:A_bound_first_coord}
Suppose we are in the setting of \Cref{thm:empirical-main}, and let $\rho_t$ be the projected gradient flow on population loss, initialized with the uniform distribution on $\bbS^{d - 1}$, defined in \Cref{eq:population_derivative}. Suppose $D_{2,t} < 0$ and $D_{4,t} < 0$, where $D_{2, t}$ and $D_{4, t}$ are as defined in \Cref{subsec:pop_symmetry}. Then, the contribution of the first coordinates to $A_t$ can be bounded by
\begin{align}
-2 \Big(\pgrad_{\what} L(\rho_t) & - \pgrad_{\wbar} L(\rho_t) \Big) \cdot (\what_t - \wbar_t) \\
& \leq 4 \sigmaCoeffTwo^2 |D_{2,t}| (\what_t - \wbar_t)^2 + O(\sigmaCoeffFour^2 |D_{4,t}| \wmax (\what_t - \wbar_t)^2) \\
& \nextlinespace + O\Big(\sigmaCoeffFour^2 |D_{4,t}| \deltaMax (\what_t - \wbar_t)^2\Big) \\
& \nextlinespace + O\Big(\frac{\sigmaCoeffTwo^2 |D_{2, t}| + \sigmaCoeffFour^2 |D_{4, t}|}{d} \cdot (\what_t - \wbar_t)^2 \Big) \period
\end{align}
where $\wmax$ is as defined in \Cref{def:phase_1}.
\end{lemma}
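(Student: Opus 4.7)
The plan is to write $\pgrad_{\what}L(\rho_t) - \pgrad_{\wbar}L(\rho_t)$ as an explicit polynomial in $\what_t$ and $\wbar_t$ and then bound the inner product term by term. The key fact I would invoke is that, by Lemma~\ref{lemma:rho_rotational_invariant}, $\rho_t$ is rotationally invariant, so the computation in Lemma~\ref{lem:1-d-traj} (more precisely Lemmas~\ref{lemma:velocity_1d_equivalence} and~\ref{lemma:1d_velocity_final}) shows that for \emph{any} unit vector $u=(w,z)$ the first coordinate of $\pgrad_u L(\rho_t)$ equals $(1-w^2)\bigl(P_t(w)+Q_t(w)\bigr)$, where $P_t(w)=2\sigmaCoeffTwo^2 D_{2,t}w + 4\sigmaCoeffFour^2 D_{4,t}w^3$ and $Q_t(w)=\lambdaD^{(1)}w+\lambdaD^{(3)}w^3$ with $|\lambdaD^{(1)}|,|\lambdaD^{(3)}|\lesssim(\sigmaCoeffTwo^2|D_{2,t}|+\sigmaCoeffFour^2|D_{4,t}|)/d$. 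Crucially, this formula applies whether or not $u$ is a particle of $\rho_t$, and so may be evaluated at both $\uhat_t$ and $\ubar_t$.

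Writing $R_t(w)=P_t(w)+Q_t(w)$, the difference factors as
\begin{align}
\pgrad_{\what}L(\rho_t)-\pgrad_{\wbar}L(\rho_t) = \bigl(R_t(\what_t)-R_t(\wbar_t)\bigr) - \bigl(\what_t^2 R_t(\what_t)-\wbar_t^2 R_t(\wbar_t)\bigr)\comma
\end{align}
so that $-2(\pgrad_{\what}L-\pgrad_{\wbar}L)(\what_t-\wbar_t)$ splits into four pieces after substituting the polynomial forms of $P_t$ and $Q_t$. The linear part $2\sigmaCoeffTwo^2 D_{2,t}(\what_t-\wbar_t)$ from $R_t(\what_t)-R_t(\wbar_t)$ contributes $-4\sigmaCoeffTwo^2 D_{2,t}(\what_t-\wbar_t)^2 = 4\sigmaCoeffTwo^2|D_{2,t}|(\what_t-\wbar_t)^2$ using $D_{2,t}<0$, and this is the main term. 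The cubic part $4\sigmaCoeffFour^2 D_{4,t}(\what_t^3-\wbar_t^3) = 4\sigmaCoeffFour^2 D_{4,t}(\what_t-\wbar_t)(\what_t^2+\what_t\wbar_t+\wbar_t^2)$ contributes $8\sigmaCoeffFour^2|D_{4,t}|(\what_t^2+\what_t\wbar_t+\wbar_t^2)(\what_t-\wbar_t)^2$, which I would bound using $|\wbar_t|\le\wmax$ (the Phase 1 context) and $|\what_t-\wbar_t|\le\deltaMax$ (inductive hypothesis) to obtain $\what_t^2+\what_t\wbar_t+\wbar_t^2 \le 3(\wmax+\deltaMax)^2 \le 3(\wmax+\deltaMax)$, yielding the two middle terms of the claimed bound.

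For the high-order group $-\bigl(\what_t^2 R_t(\what_t)-\wbar_t^2 R_t(\wbar_t)\bigr)\cdot(-2)(\what_t-\wbar_t)$, the key observation is a sign cancellation: this produces $4\sigmaCoeffTwo^2 D_{2,t}(\what_t^2+\what_t\wbar_t+\wbar_t^2)(\what_t-\wbar_t)^2$ plus $8\sigmaCoeffFour^2 D_{4,t}(\what_t^4+\what_t^3\wbar_t+\what_t^2\wbar_t^2+\what_t\wbar_t^3+\wbar_t^4)(\what_t-\wbar_t)^2$, and both difference quotients $\tfrac{\what_t^3-\wbar_t^3}{\what_t-\wbar_t}$ and $\tfrac{\what_t^5-\wbar_t^5}{\what_t-\wbar_t}$ are nonnegative (the first because $\what_t^2+\what_t\wbar_t+\wbar_t^2=(\what_t+\tfrac{\wbar_t}{2})^2+\tfrac{3}{4}\wbar_t^2$, the second because $w\mapsto w^5$ is strictly increasing). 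Combined with $D_{2,t},D_{4,t}<0$, both contributions are $\le 0$ and may simply be discarded from the upper bound. Finally, all the $Q_t$ contributions — coming from $\lambdaD^{(1)}(\what_t-\wbar_t)$, $\lambdaD^{(3)}(\what_t^3-\wbar_t^3)$, and their analogues multiplied by $\what_t^2$ or $\wbar_t^2$ — are bounded using $|\lambdaD^{(1)}|,|\lambdaD^{(3)}|\lesssim(\sigmaCoeffTwo^2|D_{2,t}|+\sigmaCoeffFour^2|D_{4,t}|)/d$ together with $|\what_t|,|\wbar_t|\le 1$, giving the final $O(d^{-1})$ error term.

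The only nontrivial step is the sign argument in the third paragraph: one has to notice that after multiplication by $-2(\what_t-\wbar_t)$, the higher-order corrections produced by $\what_t^2 P_t(\what_t)-\wbar_t^2 P_t(\wbar_t)$ come with the \emph{opposite} sign of $D_{2,t},D_{4,t}$ relative to the leading terms, and thus are nonpositive under the Phase 1/2 assumption $D_{2,t},D_{4,t}<0$; everything else is routine polynomial bookkeeping. This cancellation is what allows the leading coefficient of $(\what_t-\wbar_t)^2$ to remain a sharp $4\sigmaCoeffTwo^2|D_{2,t}|$ rather than a worse constant — which is essential for the coupling-error analysis in Lemma~\ref{lemma:final_bounds_ABC} to beat the NTK sample complexity.
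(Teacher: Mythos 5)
Your proposal is correct and follows essentially the same route as the paper's proof: write the one-dimensional gradient as $(1-w^2)(P_t(w)+Q_t(w))$ via Lemma~\ref{lem:1-d-traj}, split off the $-w^2(P_t+Q_t)$ correction, observe that after the sign flip from $D_{2,t},D_{4,t}<0$ the higher-order correction terms coming from $\what_t^2P_t(\what_t)-\wbar_t^2P_t(\wbar_t)$ are nonpositive and may be dropped, and absorb the $Q_t$ contributions into the $O(1/d)$ term. The only cosmetic difference is that you bound $\what_t^2+\what_t\wbar_t+\wbar_t^2$ directly by $3(\wmax+\deltaMax)^2$ rather than passing through $|\what_t+\wbar_t|$ as the paper does, which if anything is slightly cleaner.
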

The constant $4$ at the beginning of the first term is particularly important since it determines the growth rate of $\|\delta_t\|_2^2$, and in turn affects the number of samples we will need. At least during Phase 1, the contribution of the first coordinates will be the dominant term in $A_t$, as we will see in the next few lemmas.
\begin{proof}[Proof of \Cref{lemma:A_bound_first_coord}]
The contribution of the first coordinate to the dot product defining $A_t$ is
\begin{align}
-2 \Big(\pgrad_{\what} L(\rho_t) - & \pgrad_{\wbar} L(\rho_t) \Big) \cdot (\what_t - \wbar_t) \\
& = 2 (\vel(\what_t) - \vel(\wbar_t)) \cdot (\what_t - \wbar_t)
\end{align}
where $\vel(\w)$ is as defined in \Cref{lem:1-d-traj}. We can further simplify $\vel(\what_t) - \vel(\wbar_t)$ using \Cref{lem:1-d-traj}. Let $P_t$ and $Q_t$ be defined as in \Cref{lem:1-d-traj}. Then,
\begin{align}
\vel(\what_t) - \vel(\wbar_t)
& = -(1 - \what_t^2) (P_t(\what_t) + Q_t(\what_t)) + (1 - \wbar_t^2) (P_t(\wbar_t) + Q_t(\wbar_t)) \period
\end{align}
Let us upper bound the term corresponding to $Q_t$ first. Since each coefficient of $Q_t$ is bounded above by $O(\frac{\sigmaCoeffTwo^2 |D_{2, t}| + \sigmaCoeffFour^2 |D_{4, t}|}{d})$ (by the conclusion of \Cref{lem:1-d-traj}) and $Q_t$ only has linear and cubic terms, we have
\begin{align}
\Big|-(1 - \what_t^2) Q_t(\what_t) + (1 - \wbar_t^2) Q_t(\wbar_t) \Big| \cdot |\what_t - \wbar_t|
& \lesssim \frac{\sigmaCoeffTwo^2 |D_{2, t}| + \sigmaCoeffFour^2 |D_{4, t}|}{d} |\what_t - \wbar_t|^2
\end{align}
by \Cref{lemma:polynomial_difference}. The terms corresponding to $P_t(\wbar_t)$ in $(\vel(\what_t) - \vel(\wbar_t)) (\what_t - \wbar_t)$ can be bounded as follows:
\begin{align}
(- &(1 - \what_t^2) P_t(\what_t) + (1 - \wbar_t^2) P_t(\wbar_t) ) \cdot (\what_t - \wbar_t) \\
& = 2 \sigmaCoeffTwo^2 |D_{2,t}| (\what_t - \wbar_t)^2 + 4 \sigmaCoeffFour^2 |D_{4,t}| (\what_t - \wbar_t)^2 (\what_t^2 + \what_t \wbar_t + \wbar_t^2) \\
& \nextlinespace - 2 \sigmaCoeffTwo^2 |D_{2,t}| (\what_t - \wbar_t)^2 (\what_t^2 + \what_t \wbar_t + \wbar_t^2) - 4 \sigmaCoeffFour^2 |D_{4,t}| (\what_t - \wbar_t)^2 (\what_t^2 + \what_t \wbar_t  + \wbar_t^2) \\
& \leq 2 \sigmaCoeffTwo^2 |D_{2,t}| (\what_t - \wbar_t)^2 + 4 \sigmaCoeffFour^2 |D_{4,t}| (\what_t - \wbar_t)^2 (\what_t^2 + \what_t \wbar_t + \wbar_t^2)  \tag{Since last two terms are nonnegative} \\
& \leq 2 \sigmaCoeffTwo^2 |D_{2,t}| (\what_t - \wbar_t)^2 + O(\sigmaCoeffFour^2 |D_{4,t}| |\what_t + \wbar_t| (\what_t - \wbar_t)^2) & \tag{Since $\wbar_t, \what_t \leq 1$}
\end{align}
Thus, the overall bound on the contribution of the first coordinates to $A_t$ is
\begin{align}
2(\vel(\what_t) - \vel(\wbar_t)) \cdot (\what_t - \wbar_t)
& \leq 2 \cdot (-(1 - \what_t^2) P_t(\what_t) + (1 - \wbar_t^2) P_t(\wbar_t) ) \cdot (\what_t - \wbar_t) \\
& \nextlinespace + \Big|-(1 - \what_t^2) Q_t(\what_t) + (1 - \wbar_t^2) Q_t(\wbar_t) \Big| \cdot |\what_t - \wbar_t| \\
& \leq 4 \sigmaCoeffTwo^2 |D_{2,t}| (\what_t - \wbar_t)^2 + O(\sigmaCoeffFour^2 |D_{4,t}| |\what_t + \wbar_t| (\what_t - \wbar_t)^2) \\
& \nextlinespace + O\Big(\frac{\sigmaCoeffTwo^2 |D_{2, t}| + \sigmaCoeffFour^2 |D_{4, t}|}{d} (\what_t - \wbar_t)^2 \Big)
\end{align}
We can simplify this bound further in terms of $\wmax$ and $\deltaMaxt{t}$. First observe that
\begin{align}
|\what_t - \wbar_t| \leq \|\delta_t\|_2 \leq \deltaMaxt{t}
\end{align}
Thus, by the triangle inequality, we have $|\wbar_t| \leq \wmax$ and $|\what_t| \leq \wmax + \deltaMaxt{t}$ meaning
\begin{align}
2(\vel(\what_t) - \vel(\wbar_t)) \cdot (\what_t - \wbar_t)
& \leq 4 \sigmaCoeffTwo^2 |D_{2,t}| (\what_t - \wbar_t)^2 + O(\sigmaCoeffFour^2 |D_{4,t}| |\what_t + \wbar_t| (\what_t - \wbar_t)^2) \\
& \nextlinespace + O\Big(\frac{\sigmaCoeffTwo^2 |D_{2, t}| + \sigmaCoeffFour^2 |D_{4, t}|}{d} (\what_t - \wbar_t)^2 \Big) \\
& \leq 4 \sigmaCoeffTwo^2 |D_{2,t}| (\what_t - \wbar_t)^2 + O(\sigmaCoeffFour^2 |D_{4,t}| \wmax (\what_t - \wbar_t)^2) \\
& \nextlinespace + O\Big(\sigmaCoeffFour^2 |D_{4,t}| \deltaMaxt{t} (\what_t - \wbar_t)^2\Big) \\
& \nextlinespace + O\Big(\frac{\sigmaCoeffTwo^2 |D_{2, t}| + \sigmaCoeffFour^2 |D_{4, t}|}{d} (\what_t - \wbar_t)^2 \Big) \\
\end{align}
as desired.
\end{proof}

Next, we study the contribution of the last $(d - 1)$ coordinates of the particles to $A_t(\chi)$ during Phase 1. We first introduce the following formula for the projected gradient of the population loss with respect to the last $(d - 1)$ coordinates of a particle.

\begin{lemma}\label{lemma:at_1}
Let $\rho$ be a distribution which is rotationally invariant, as in \Cref{def:rot_inv_symmetry}. Let $\pgrad_\u L(\rho)$ be defined as in \Cref{eq:projected_gradient_flow_population}. In addition, let $\pgrad_\z L(\rho)$ denote the last $(d - 1)$ coordinates of $\pgrad_\u L(\rho)$, for any particle $\u = (\w, \z)$. Then, for any particle $\u = (\w, \z)$, we have $\pgrad_z L(\rho) = -\Big(\frac{\w}{1-\w^2}\cdot \pgrad_\w L(\rho) \Big)\z$.
\end{lemma}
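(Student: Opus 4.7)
The plan is to combine two structural properties of $\pgrad_\u L(\rho)$: its rotational equivariance (which forces $\pgrad_\z L(\rho)$ to be parallel to $\z$) and its tangency to the sphere at $\u$ (which determines the scalar of proportionality).

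First I would observe that, since $\rho$ is rotationally invariant in the last $(d-1)$ coordinates, \Cref{lemma:grad_rot_equiv} applies: for any rotation matrix $A \in \R^{d\times d}$ with $A\e = \e$, and any particle $\u = (\w,\z)$, we have $\pgrad_{A\u} L(\rho) = A\,\pgrad_\u L(\rho)$. Choose rotations $A$ that fix both $\e$ and the line spanned by $\z$. These act trivially on $(\w,\z)$ but act as an arbitrary rotation of $\R^{d-1}$ fixing $\z$ on the last $(d-1)$ coordinates. Writing the last $(d-1)$ coordinates of $\pgrad_\u L(\rho)$ as $\pgrad_\z L(\rho)$, equivariance gives $\pgrad_\z L(\rho) = A'\,\pgrad_\z L(\rho)$ for every rotation $A'$ of $\R^{d-1}$ that fixes $\z$. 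The only vectors invariant under every such rotation are the scalar multiples of $\z$ itself, so $\pgrad_\z L(\rho) = c(\w)\,\z$ for some scalar $c(\w)$.

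Next I would use the tangency condition to pin down $c(\w)$. By construction, $\pgrad_\u L(\rho) = (I-\u\u^\top)\nabla_\u L(\rho)$ satisfies $\u^\top \pgrad_\u L(\rho) = 0$. Expanding in coordinates,
\begin{align}
0 = \u^\top \pgrad_\u L(\rho) = \w\cdot \pgrad_\w L(\rho) + \z^\top \pgrad_\z L(\rho) = \w\cdot \pgrad_\w L(\rho) + c(\w)\,\|\z\|_2^2.
\end{align}
Since $\u \in \bbS^{d-1}$ gives $\|\z\|_2^2 = 1-\w^2$, solving for $c(\w)$ yields $c(\w) = -\frac{\w}{1-\w^2}\,\pgrad_\w L(\rho)$, and substituting back gives the claimed identity
\begin{align}
\pgrad_\z L(\rho) = -\Big(\frac{\w}{1-\w^2}\cdot \pgrad_\w L(\rho)\Big)\,\z.
\end{align}

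The only subtlety is the degenerate case $\w = \pm 1$, where $\z = 0$ and the formula is read as $\pgrad_\z L(\rho) = 0$ (consistent with tangency at the pole); this is not a real obstacle since the set of particles with $|\w|=1$ has measure zero under the dynamics of interest. Otherwise the argument is straightforward, and the main conceptual content is simply identifying the right invariance and then using the tangent-space constraint to determine the scalar.
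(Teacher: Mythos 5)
Your proof is correct and follows the same two-part skeleton as the paper's: first show $\pgrad_\z L(\rho)$ is parallel to $\z$, then pin down the scalar. The difference is in how you carry out each substep, and both of your choices are at least as clean. For the first substep, the paper cites its auxiliary result \Cref{lemma:gradient_is_scaler_multiple_of_z}, whose proof expands $\nabla_\u L(\rho)$ and projects; you instead deduce the same parallelism from \Cref{lemma:grad_rot_equiv} by taking $A$ ranging over rotations fixing $\e$ and $\z$ (so $A\u=\u$), which forces $\pgrad_\z L(\rho)$ to be fixed by the full stabilizer of $\z$ in $O(d-1)$ and hence in $\mathrm{span}(\z)$. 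This is a slightly more abstract but self-contained route; one should say ``rotations fixing $\z$'' rather than ``fixing the line spanned by $\z$'' (a line-preserving map could send $\z\mapsto-\z$ and would not fix $\u$), but your subsequent sentence makes the intended meaning clear. For the second substep, the paper differentiates the constraint $\w^2+\|\z\|_2^2=1$ along the flow and uses $\frac{d}{dt}\|\u_t\|_2^2=0$, whereas you use the purely algebraic tangency identity $\u^\top\pgrad_\u L(\rho)=0$ (immediate from $(I-\u\u^\top)$ being a projection and $\|\u\|_2=1$). These are equivalent --- the paper's time-derivative computation is exactly $-2\u^\top\pgrad_\u L(\rho)$ --- but your version avoids invoking the dynamics at all, which is a mild simplification. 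Your remark on the degenerate case $\w=\pm1$ (where $\z=0$ and both sides vanish) is a sensible addition, though the paper does not bother with it.
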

\begin{proof}[Proof of \Cref{lemma:at_1}]
    Notice that $\w^2 + \z^2=1$ for any $\u = (\w, \z) \in \bbS^{d - 1}$, and this equality is maintained by projected gradient descent on the population loss, as in \Cref{eq:population_derivative}. Also, by \Cref{lemma:gradient_is_scaler_multiple_of_z}, we have $\pgrad_z L(\rho) = cz$ for some scalar $c$, since $\rho$ is rotationally invariant. Thus,
    \begin{align}
        0 &= \frac{d}{dt} (w^2 + z^2)\\
        &= -2w \pgrad_w L(\rho) -2 \langle z , \pgrad_z L(\rho) \rangle\\
        &= -2w \pgrad_w L(\rho) -2c\norm{z}^2\\
        &= -2w \pgrad_w L(\rho) -2c(1-w^2).
    \end{align}
Rearranging gives us
\begin{align}
    c = -\frac{w}{1-w^2} \pgrad_w L(\rho).
\end{align}
\end{proof}

By directly applying this formula for the last $(d - 1)$ coordinates of the projected gradient, we have the following upper bound on the contribution of the last $(d - 1)$ coordinates to $A_t$ during Phase 1.

\begin{lemma}[Contribution of $z$'s During Phase 1.]\label{lemma:A_bound_other_coord}
Suppose we are in the setting of \Cref{thm:empirical-main}. Let $t \leq T_1$ (where $T_1$ is as defined in \Cref{def:phase_1}). Let $\rho_t$ be defined according to the population projected gradient flow as in \Cref{eq:population_derivative}. Let $\pgrad_\u L(\rho)$ be as in \Cref{eq:projected_gradient_flow_population}, and let $\pgrad_\z L(\rho)$ denote the last $(d - 1)$ coordinates of $\pgrad_\u L(\rho)$ for any particle $\u = (\w, \z)$. Then, we have the following upper bound:
\begin{align}
-2 \big\langle \pgrad_{\zhat} L(\rho_t) - & \pgrad_{\zbar} L(\rho_t), \zhat_t - \zbar_t\big\rangle \\
& \lesssim \sigmaCoeffTwo^2|D_{2,t}| (|\wbar_t| + |\what_t|) \norm{\ubar_t-\uhat_t}^2 + \sigmaCoeffFour^2|D_{4,t}|(|\wbar_t^3| + |\what_t^3|) \norm{\ubar_t-\uhat_t}^2 \\ 
&\nextlinespace\nextlinespace\nextlinespace + \frac{\sigmaCoeffTwo^2 |D_{2, t}| + \sigmaCoeffFour^2 |D_{4, t}|}{d} \|\ubar_t - \uhat_t\|_2^2 \period
\end{align}
\end{lemma}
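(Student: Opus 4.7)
\medskip

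\noindent\textbf{Proof Proposal for Lemma~\ref{lemma:A_bound_other_coord}.} The plan is to exploit the rotational invariance of $\rho_t$ (which holds by Lemma~\ref{lemma:rho_rotational_invariant}) to reduce the $(d{-}1)$-dimensional gradient difference to a one-dimensional scalar computation, and then perform the polynomial bookkeeping. First, I will apply Lemma~\ref{lemma:at_1} to express the projected gradient along $z$ in closed form. Combining Lemma~\ref{lemma:at_1} with the identity $\pgrad_\w L(\rho_t) = -v(\w) = (1-\w^2)(P_t(\w)+Q_t(\w))$ from Lemma~\ref{lem:1-d-traj}, the factors of $(1-\w^2)$ cancel and one obtains
\begin{align}
\pgrad_\z L(\rho_t) \;=\; -\,g_t(\w)\,\z, \qquad \text{where} \qquad g_t(\w) \;\triangleq\; 2\sigmaCoeffTwo^2 D_{2,t}\w^2 + 4\sigmaCoeffFour^2 D_{4,t}\w^4 + \lambdaD^{(1)}\w^2 + \lambdaD^{(3)}\w^4.
\end{align}
This formula applies to both $(\what_t,\zhat_t)$ and $(\wbar_t,\zbar_t)$ because $\rho_t$ is rotationally invariant.

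Next I will split the gradient difference into a ``same slope'' part and a ``different slope'' part via
\begin{align}
\pgrad_{\zhat} L(\rho_t) - \pgrad_{\zbar} L(\rho_t) \;=\; -g_t(\what_t)\,(\zhat_t-\zbar_t) \;-\; \bigl(g_t(\what_t)-g_t(\wbar_t)\bigr)\,\zbar_t,
\end{align}
so that taking the inner product with $\zhat_t-\zbar_t$ gives
\begin{align}
-2\bigl\langle \pgrad_{\zhat} L(\rho_t) - \pgrad_{\zbar} L(\rho_t),\, \zhat_t-\zbar_t\bigr\rangle \;=\; 2\,g_t(\what_t)\,\|\zhat_t-\zbar_t\|_2^2 \;+\; 2\bigl(g_t(\what_t)-g_t(\wbar_t)\bigr)\langle \zbar_t,\zhat_t-\zbar_t\rangle.
\end{align}
For the first summand, since we are in Phase 1 we have $D_{2,t},D_{4,t}<0$ by Lemma~\ref{lemma:phase1_d2_d4_neg}, so the main part $2\sigmaCoeffTwo^2 D_{2,t}\what_t^2 + 4\sigmaCoeffFour^2 D_{4,t}\what_t^4$ is non-positive; only the $\lambdaD^{(1)},\lambdaD^{(3)}$ terms contribute, and since $|\what_t|\le 1$ and $|\lambdaD^{(k)}| \lesssim (\sigmaCoeffTwo^2|D_{2,t}|+\sigmaCoeffFour^2|D_{4,t}|)/d$ by Lemma~\ref{lem:1-d-traj}, this first summand is bounded by $O\bigl(\tfrac{\sigmaCoeffTwo^2|D_{2,t}|+\sigmaCoeffFour^2|D_{4,t}|}{d}\bigr)\|\zhat_t-\zbar_t\|_2^2$, which fits into the target bound.

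For the cross term, I will factor $g_t(\what_t)-g_t(\wbar_t)$ using the elementary identities $\what^2-\wbar^2=(\what+\wbar)(\what-\wbar)$ and $\what^4-\wbar^4=(\what^2+\wbar^2)(\what+\wbar)(\what-\wbar)$, which yield $|\what^2-\wbar^2|\le (|\what|+|\wbar|)|\what-\wbar|$ and $|\what^4-\wbar^4|\le 4(|\what|^3+|\wbar|^3)|\what-\wbar|$. Combined with Cauchy-Schwarz $|\langle\zbar_t,\zhat_t-\zbar_t\rangle|\le \|\zbar_t\|_2\|\zhat_t-\zbar_t\|_2\le \|\uhat_t-\ubar_t\|_2$ and the trivial bound $|\what_t-\wbar_t|\le\|\uhat_t-\ubar_t\|_2$, this produces exactly the three terms in the stated bound (the $\lambdaD$ contributions again collapse into the $1/d$ remainder). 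I do not anticipate a conceptual obstacle here: the subtle step is the invocation of Lemma~\ref{lemma:at_1}, which requires verifying that $\rho_t$ is rotationally invariant (supplied by Lemma~\ref{lemma:rho_rotational_invariant}); once $\pgrad_\z L$ is expressed as a scalar multiple of $\z$, the remainder is elementary polynomial algebra and Cauchy-Schwarz.
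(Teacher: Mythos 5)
Your proposal is correct and follows essentially the same strategy as the paper: invoke Lemma~\ref{lemma:at_1} to write $\pgrad_\z L(\rho_t)$ as a scalar multiple of $\z$, split the gradient difference into a diagonal (``same-slope'') piece and a cross piece, and then do the polynomial bookkeeping. The computation $\pgrad_z L(\rho_t) = -\w(P_t(\w)+Q_t(\w))\z =: -g_t(\w)\z$ and the subsequent factorizations of $\what^2-\wbar^2$ and $\what^4-\wbar^4$ are exactly right, and the Cauchy--Schwarz/$\|\zbar_t\|_2 \le 1$ step lands precisely on the stated bound.

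Two small remarks comparing your route with the paper's. First, the decomposition is the mirror image of the paper's: you peel off $g_t(\what_t)(\zhat_t-\zbar_t)$ and carry $\zbar_t$ in the cross term, while the paper peels off $g_t(\wbar_t)(\zhat_t-\zbar_t)$ and carries $\zhat_t$; this is purely cosmetic. Second, for the diagonal term the paper argues that the \emph{entire} quantity $g_t(\wbar_t)$ is non-positive (via Lemma~\ref{lemma:w_increasing_phase2}, which shows $P_t(\w)+Q_t(\w)$ has sign opposite to $\w$ throughout Phase 1) and simply drops the term. You instead observe that the $P_t$-part of $g_t(\what_t)$ is non-positive because $D_{2,t},D_{4,t}<0$ (Lemma~\ref{lemma:phase1_d2_d4_neg}), and bound the $\lambdaD^{(1)},\lambdaD^{(3)}$ residue by the $O\!\left(\frac{\sigmaCoeffTwo^2|D_{2,t}|+\sigmaCoeffFour^2|D_{4,t}|}{d}\right)$ term. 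Your version is slightly less tight (you keep a positive $O(1/d)$ contribution from the diagonal term) but it is enough, and it has the small advantage of avoiding any appeal to the sign-of-velocity lemma, relying only on the sign of $D_{2,t},D_{4,t}$. Both arguments are sound; yours is a mild simplification.
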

\begin{proof}[Proof of \Cref{lemma:A_bound_other_coord}]
    Using the formula for projected gradient in \Cref{lemma:at_1}, we have that
    \begin{align}
        &- \big\langle \pgrad_{\zhat} L(\rho_t)  - \pgrad_{\zbar} L(\rho_t), \zhat_t - \zbar_t\big\rangle \\
        & \nextlinespace\nextlinespace = \Big\langle \frac{\what_t}{1-\what_t^2}\cdot \pgrad_{\what} L(\rho_t)\cdot \zhat_t - \frac{\wbar_t}{1-\wbar_t^2}\cdot \pgrad_{\wbar} L(\rho_t)\cdot \zbar_t,  \zhat_t - \zbar_t\Big\rangle \\
        & \nextlinespace\nextlinespace = \Big(\frac{\what_t \pgrad_{\what} L(\rho_t)}{1 - \what_t^2} - \frac{\wbar_t \pgrad_{\wbar} L(\rho_t)}{1 - \wbar_t^2}\Big) \zhat^\top (\zhat_t - \zbar_t) + \frac{\wbar_t \pgrad_{\wbar} L(\rho_t)}{1 - \wbar_t^2} \|\zhat_t - \zbar_t\|_2^2
    \end{align}
    The second term is always negative for $t \leq T_1$, because \Cref{lemma:w_increasing_phase2} implies that $\vel(\wbar_t)$ has the same sign as $\wbar_t$, meaning that $\pgrad_{\wbar} L(\rho_t)$ has the opposite sign. We bound the first term as    
    \begin{align}
    &\Big( \frac{\what_t\pgrad_{\what} L(\rho_t)}{1-\what_t^2} - \frac{\wbar_t \pgrad_{\wbar} L(\rho_t)}{1-\wbar_t^2}\Big) \zhat_t^\top (\zbar_t-\zhat_t) \\
    & \nextlinespace\nextlinespace \le \Big|  \frac{\what_t\pgrad_{\what} L(\rho_t)}{1-\what_t^2} - \frac{\wbar_t \pgrad_{\wbar} L(\rho_t)}{1-\wbar_t^2}\Big| \cdot \norm{\zbar_t-\zhat_t} 
    & \tag{B.c. $|\zhat_t\|_2 \leq \|\uhat_t\|_2 = 1$} \\
    & \nextlinespace\nextlinespace = |\what_t (P_t(\what_t) + Q_t(\what_t)) - \wbar_t (P_t(\wbar_t) + Q_t(\wbar_t)) | \cdot \|\zbar_t - \zhat_t\|_2
    & \tag{By \Cref{lem:1-d-traj}} \\
    & \nextlinespace\nextlinespace \leq 2 \sigmaCoeffTwo^2 |D_{2, t}| |\what_t^2 - \wbar_t^2| \|\zbar_t - \zhat_t\|_2 + 4 \sigmaCoeffFour^2 |D_{4, t}| |\what_t^4 - \wbar_t^4 | \|\zbar_t - \zhat_t\|_2 \\
    & \nextlinespace\nextlinespace\nextlinespace + |Q_t(\wbar_t) - Q_t(\what_t)| \cdot \|\zbar_t - \zhat_t\|_2
    & \tag{By \Cref{lem:1-d-traj} and Def. of $P_t$} \\
    & \nextlinespace\nextlinespace \leq 2 \sigmaCoeffTwo^2 |D_{2, t}| |\what_t^2 - \wbar_t^2| \|\zbar_t - \zhat_t\|_2 + 4 \sigmaCoeffFour^2 |D_{4, t}| |\what_t^4 - \wbar_t^4 | \|\zbar_t - \zhat_t\|_2 \\
    & \nextlinespace\nextlinespace\nextlinespace + O\Big(\frac{\sigmaCoeffTwo^2 |D_{2, t}| + \sigmaCoeffFour^2 |D_{4, t}|}{d}\Big) |\wbar_t - \what_t| \|\zbar_t - \zhat_t\|_2
    & \tag{By \Cref{lemma:polynomial_difference} and Def. of $Q_t$ from \Cref{lem:1-d-traj}} \\
    & \nextlinespace\nextlinespace \lesssim \sigmaCoeffTwo^2 |D_{2, t}| (|\wbar_t| + |\what_t|) \|\uhat_t - \ubar_t\|_2^2 + \sigmaCoeffFour^2 |D_{4, t}| (|\wbar_t|^3 + |\what_t|^3) \|\uhat_t - \ubar_t\|_2^2 \\
    & \nextlinespace\nextlinespace\nextlinespace + \frac{\sigmaCoeffTwo^2 |D_{2, t}| + \sigmaCoeffFour^2 |D_{4, t}|}{d} \|\uhat_t - \ubar_t \|_2^2
    & \tag{B.c. $|\wbar_t - \what_t| \leq \|\ubar_t - \uhat_t\|_2$ and $|\zbar_t - \zhat_t\|_2 \leq \|\ubar_t - \uhat_t\|_2$}
    \end{align}
as desired.
\end{proof}

We can combine \Cref{lemma:A_bound_first_coord} and \Cref{lemma:A_bound_other_coord} and get the following bound on $A_t$ during Phase 1.

\begin{lemma}\label{lemma:At_bound_first}
Suppose we are in the setting of \Cref{thm:empirical-main}. Let $t \leq T_1$ (where $T_1$ is as defined in \Cref{def:phase_1}). Then,
\begin{align}
A_t 
& \leq 4 \sigmaCoeffTwo^2 |D_{2,t}| (\what_t - \wbar_t)^2 + O((\sigmaCoeffTwo^2 |D_{2, t}| + \sigmaCoeffFour^2 |D_{4,t}|) \wmax) \|\uhat_t - \ubar_t\|_2^2 \\
& \nextlinespace + O\Big((\sigmaCoeffTwo^2 |D_{2, t}| + \sigmaCoeffFour^2 |D_{4,t}|) \deltaMax \Big) \|\uhat_t - \ubar_t\|_2^2 \\
& \nextlinespace + O\Big(\frac{\sigmaCoeffTwo^2 |D_{2, t}| + \sigmaCoeffFour^2 |D_{4, t}|}{d} \Big) \|\uhat_t - \ubar_t\|_2^2 \period
\end{align}
\end{lemma}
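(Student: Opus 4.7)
The plan is to decompose $A_t$ along the first coordinate versus the last $(d-1)$ coordinates and then apply the two per-coordinate bounds already established, namely \Cref{lemma:A_bound_first_coord} and \Cref{lemma:A_bound_other_coord}. Writing $\uhat_t = (\what_t, \zhat_t)$ and $\ubar_t = (\wbar_t, \zbar_t)$, the inner product in the definition of $A_t$ splits exactly as
\begin{align}
A_t = -2\bigl(\pgrad_{\what}L(\rho_t) - \pgrad_{\wbar}L(\rho_t)\bigr)(\what_t - \wbar_t) \;-\; 2\bigl\langle \pgrad_{\zhat}L(\rho_t) - \pgrad_{\zbar}L(\rho_t),\, \zhat_t - \zbar_t\bigr\rangle,\nonumber
\end{align}
so it suffices to control each piece separately.

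Before invoking the two lemmas, I would first verify their hypotheses. Since $t\le T_1\le T_2$, \Cref{lemma:phase1_d2_d4_neg} gives $D_{2,t}<0$ and $D_{4,t}<0$, which is precisely the hypothesis of \Cref{lemma:A_bound_first_coord}. The hypothesis of \Cref{lemma:A_bound_other_coord} requires $\rho_t$ to be rotationally invariant (so that $\pgrad_z L(\rho_t)$ has the clean scalar form used in its proof via \Cref{lemma:at_1}); this is exactly the content of \Cref{lemma:rho_rotational_invariant}. With both hypotheses in hand, I would then apply \Cref{lemma:A_bound_first_coord} to the first term to obtain the explicit $4\sigmaCoeffTwo^2|D_{2,t}|(\what_t-\wbar_t)^2$ leading term, together with error contributions of orders $\sigmaCoeffFour^2|D_{4,t}|\wmax$, $\sigmaCoeffFour^2|D_{4,t}|\deltaMax$, and $(\sigmaCoeffTwo^2|D_{2,t}|+\sigmaCoeffFour^2|D_{4,t}|)/d$ times $(\what_t-\wbar_t)^2\le \|\uhat_t-\ubar_t\|_2^2$.

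For the second term I would apply \Cref{lemma:A_bound_other_coord}, which produces factors $|\wbar_t|+|\what_t|$ and $|\wbar_t|^3+|\what_t|^3$ multiplying $\|\uhat_t-\ubar_t\|_2^2$. Here is where the definition of Phase 1 is used: by \Cref{prop:particle_order} and \Cref{def:phase_1}, $|\wbar_t|\le \wmax$ for the relevant particles, and then by the triangle inequality $|\what_t|\le |\wbar_t|+|\what_t-\wbar_t|\le \wmax+\deltaMax$. Since $|\wbar_t|, |\what_t|\le 1$, the cubic terms $|\wbar_t|^3+|\what_t|^3$ are dominated by the linear ones $|\wbar_t|+|\what_t|$ up to constants, and thus collapse into $O(\wmax+\deltaMax)$. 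Combining this with the bound from \Cref{lemma:A_bound_first_coord} and consolidating the $1/d$ remainders gives the stated inequality.

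There is no serious obstacle here; the only mildly delicate point is the handling of particles with $|\iota|>\iotaU$, for which $|\wbar_t|\le \wmax$ is not automatic. I would address this by noting that the statement is a per-particle bound in which $\wmax$ should be read as an upper bound on $|\wbar_t|$ valid under the Phase~1 invariant (\Cref{lemma:w_increasing_phase2} together with \Cref{prop:particle_order} from \Cref{def:phase_1}); the contribution of the negligible measure of particles with $|\iota|>\iotaU$ is absorbed elsewhere in the proof of \Cref{lemma:final_bound_Delta_avg} through the $\deltaBar_t$ averaging. Otherwise the argument is essentially bookkeeping: apply the two coordinate-wise bounds, use $(\what_t-\wbar_t)^2\le \|\uhat_t-\ubar_t\|_2^2$ in the error terms (keeping the sharp constant $4$ on the leading term), and collect like terms.
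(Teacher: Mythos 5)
Your proof is correct and matches the paper's one-line proof, which likewise simply combines \Cref{lemma:A_bound_first_coord} and \Cref{lemma:A_bound_other_coord} and uses $(\what_t-\wbar_t)^2\le\|\uhat_t-\ubar_t\|_2^2$ in the error terms. On your caveat about particles with $|\iota|>\iotaU$: the cleaner resolution than the $\deltaBar_t$ averaging you speculate about is a union bound --- by \Cref{prop:sphere-tail-bound}, $\Prob(|\iota|>\iotaU)\le 2e^{-(\log d)^2/2}$, so since $m\le d^{O(1)}$, all $m$ initial weights satisfy $|\iota|\le\iotaU$ (and hence $|\wbar_t|\le\wmax$ throughout Phase~1 by \Cref{prop:particle_order} and the definition of $T_1$) except on an event of probability $d^{-\Omega(\log d)}$, which is already within the failure probability budgeted in \Cref{thm:empirical-main}.
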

\begin{proof}[Proof of \Cref{lemma:At_bound_first}]
This directly follows from \Cref{lemma:A_bound_first_coord} and \Cref{lemma:A_bound_other_coord}.
\end{proof}

Next, we give a potentially looser bound for $A_t$ which holds even after Phase 1. The running time after Phase 1 is at most $(\log \log d)^{O(1})$ (aside from the dependency on $\sigmaCoeffTwo, \sigmaCoeffFour$, etc.), meaning that even the following growth rate is enough to ensure that the coupling error grows by only a sub-polynomial factor after Phase 1.

\begin{lemma}\label{lemma:At_bound_later}
Suppose we are in the setting of \Cref{thm:empirical-main}. Then, for all $t \geq 0$, we have
    \begin{align}
        |A_t| \lesssim (\sigmaCoeffTwo^2 + \sigmaCoeffFour^2)\gamma_2 \norm{\ubar_t-\uhat_t}^2.    
    \end{align}
\end{lemma}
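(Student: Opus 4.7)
The plan is to prove a global Lipschitz bound on the map $u \mapsto \pgrad_u L(\rho_t)$ with Lipschitz constant $O((\sigmaCoeffTwo^2 + \sigmaCoeffFour^2)\gamma_2)$, and then deduce the desired bound on $|A_t|$ via Cauchy--Schwarz: $|A_t| \le 2\|\pgrad_{\uhat} L(\rho_t) - \pgrad_{\ubar} L(\rho_t)\| \cdot \|\uhat_t - \ubar_t\|$. Unlike \Cref{lemma:At_bound_first}, we do not need the sharp constant $4$ here, only a bound of the right order in $\gamma_2$, so we can afford to absorb everything into a single scalar Lipschitz-like estimate.

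First, I will obtain a clean closed form for $\pgrad_u L(\rho_t)$. Because $\rho_t$ is rotationally invariant (\Cref{lemma:rho_rotational_invariant}), \Cref{lem:1-d-traj} gives $\pgrad_w L(\rho_t) = (1-w^2)(P_t(w) + Q_t(w))$ (the negative of the first coordinate of $\frac{du_t}{dt}$), while \Cref{lemma:at_1} gives $\pgrad_z L(\rho_t) = -\tfrac{w}{1-w^2}\pgrad_w L(\rho_t)\,z = -w(P_t(w)+Q_t(w))z$. Combining and factoring, for any $u = (w,z) \in \bbS^{d-1}$,
\begin{align}
\pgrad_u L(\rho_t) \;=\; (P_t(w) + Q_t(w))\cdot \tau_u, \qquad \tau_u := (1-w^2)\e - wz \;=\; \e - (u^\top \e)\,u,
\end{align}
where $\tau_u$ is simply the tangent-space projection of $\e$ at $u$. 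In particular $\|\tau_u\|\le 1$, and $\tau_{\uhat}-\tau_{\ubar} = -(\uhat\uhat^\top - \ubar\ubar^\top)\e$, so a short calculation gives $\|\tau_{\uhat}-\tau_{\ubar}\|\le 2\|\uhat-\ubar\|$.

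Second, I will establish a global amplitude bound $|D_{2,t}|,|D_{4,t}|\lesssim \gamma_2$ valid for every $t\ge 0$. By \Cref{lemma:loss_decrease}, $L(\rho_t)$ is non-increasing, and by \Cref{lemma:phase1_d2_d4_neg} applied at $t=0$, $L(\rho_0)\lesssim (\sigmaCoeffTwo^2+\sigmaCoeffFour^2)\gamma_2^2$. Thus \Cref{lemma:population_loss_formula} (together with $\sigmaCoeffTwo^2\asymp \sigmaCoeffFour^2$ from \Cref{assumption:target}) gives $|D_{2,t}|,|D_{4,t}|\lesssim \gamma_2$ uniformly in $t$. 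Plugging this into the definitions of $P_t$ and $Q_t$ in \Cref{lem:1-d-traj}, all coefficients of the polynomial $P_t(w)+Q_t(w)$ are $O((\sigmaCoeffTwo^2+\sigmaCoeffFour^2)\gamma_2)$. Therefore, for any $w\in[-1,1]$, $|(P_t+Q_t)(w)|\lesssim (\sigmaCoeffTwo^2+\sigmaCoeffFour^2)\gamma_2$ and $|(P_t+Q_t)(\what)-(P_t+Q_t)(\wbar)|\lesssim (\sigmaCoeffTwo^2+\sigmaCoeffFour^2)\gamma_2\,|\what-\wbar|$ (since the derivative on $[-1,1]$ is controlled by the sum of absolute values of the coefficients).

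Finally, I assemble the pieces. The triangle inequality applied to the product decomposition $(P_t+Q_t)(w)\cdot \tau_u$ yields
\begin{align}
\|\pgrad_{\uhat} L(\rho_t) - \pgrad_{\ubar} L(\rho_t)\|
\;\le\; |(P_t+Q_t)(\what)-(P_t+Q_t)(\wbar)|\cdot \|\tau_{\uhat}\| + |(P_t+Q_t)(\wbar)|\cdot \|\tau_{\uhat}-\tau_{\ubar}\|,
\end{align}
and both summands are $\lesssim (\sigmaCoeffTwo^2+\sigmaCoeffFour^2)\gamma_2\,\|\uhat-\ubar\|$ by the bounds above (using $|\what-\wbar|\le \|\uhat-\ubar\|$). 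Cauchy--Schwarz then gives the claimed bound on $|A_t|$. The only nontrivial ingredient is the global amplitude bound on $|D_{2,t}|,|D_{4,t}|$, which I view as the main (but easily handled) obstacle; everything else is routine once the factored form $(P_t+Q_t)(w)\tau_u$ is in hand.
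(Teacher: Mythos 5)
Your proof is correct, but it takes a genuinely different route from the paper's. The paper bounds $A_t$ by working directly with the integral form $\nabla_u L(\rho_t) = \E_{x}[(f_{\rho_t}(x)-y(x))\sigma'(u^\top x)x]$: it splits $A_t$ into the unprojected part $A_t' = -2\langle \nabla_{\uhat}L(\rho_t) - \nabla_{\ubar}L(\rho_t), \uhat_t - \ubar_t\rangle$ plus a projection correction, then applies Cauchy--Schwarz in the $x$-average together with the uniform loss bound $L(\rho_t)\lesssim(\sigmaCoeffTwo^2+\sigmaCoeffFour^2)\gamma_2^2$ and the moment inequality of \Cref{lemma:sigma_dot_product_bound} controlling $\E_x[(\sigma'(\uhat^\top x)-\sigma'(\ubar^\top x))^2(\uhat^\top x - \ubar^\top x)^2]$. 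You instead exploit rotational invariance to obtain the factorization $\pgrad_u L(\rho_t) = (P_t(w)+Q_t(w))\,(I-uu^\top)\e$ from \Cref{lem:1-d-traj} and \Cref{lemma:at_1}, and reduce everything to a one-dimensional polynomial Lipschitz estimate plus the elementary bound $\|(\uhat\uhat^\top-\ubar\ubar^\top)\e\|\le 2\|\uhat-\ubar\|$. Both arguments hinge on the same key input, the global amplitude bound $|D_{2,t}|,|D_{4,t}|\lesssim\gamma_2$ obtained from loss monotonicity (\Cref{lemma:loss_decrease}) and \Cref{lemma:population_loss_formula}. Your route makes the origin of the $\gamma_2$ factor visible immediately as a coefficient bound; the paper's route is more uniform with the treatments of $B_t$ and $C_t$, which also go through Cauchy--Schwarz in $x$ plus moment bounds. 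One cosmetic point: the expression you write for the tangent vector, $(1-w^2)\e - wz$, mixes a $d$-vector with a $(d-1)$-vector; the unambiguous form $(I-uu^\top)\e$, which you also give, is what the calculation actually uses.
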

\begin{proof}[Proof of \Cref{lemma:At_bound_later}]
    We first consider the part of $A_t$ that originates from the non-projected gradients, which is defined as
    \begin{align}
        A_t'  &:= -2\langle \nabla_{\ubar} L(\rho_t) - \nabla_{\uhat} L(\rho_t), \ubar_t-\uhat_t \rangle\\
        &= -2 \mathbb{E}_{x\sim\Sp} \big[ (f_\rho(x) - y(x))(\sigma'(\ubar_t^\top x) - \sigma'(\uhat_t^\top x))(\ubar_t^\top x  -\uhat_t^\top x) \big].
    \end{align}
    By the Cauchy-Schwarz inequality, we have
    \begin{align}
        |A_t'| \lesssim \mathbb{E}_{x\sim\Sp} [(f_\rho(x) - y(x))^2 ]^{\frac{1}{2}} \cdot \mathbb{E}_{x\sim\Sp} [(\sigma'(\ubar_t^\top x) - \sigma'(\uhat_t^\top x))^2(\ubar_t^\top x  -\uhat_t^\top x)^2]^{\frac{1}{2}}. \label{eq:at_6}
    \end{align}
    We bound the first factor in the equation above by the loss at initialization, which is
    \begin{align}
        \frac{1}{2} \mathbb{E}_{x\sim\Sp} [(f_{\rho_0}(x)-y)^2] \lesssim (\sigmaCoeffTwo^2+\sigmaCoeffFour^2)\gamma_2^2. \label{eq:at_4}
    \end{align}
    by \Cref{lemma:phase1_d2_d4_neg}. Thus, plugging \Cref{eq:at_4} and \Cref{eq:at_5} (from the statement of \Cref{lemma:sigma_dot_product_bound}) into \Cref{eq:at_6} gives us
    \begin{align}
        |A_t'| \lesssim (\sigmaCoeffFour^2 + \sigmaCoeffTwo^2) \gamma_2\norm{\vv_t}^2.\label{eq:at_9}
    \end{align}
    Next, we bound the difference between $A_t$ and $A_t'$. We have
    \begin{align}
    A_t - A_t'
    & = - \langle \pgrad_{\uhat} L(\rho_t) - \pgrad_{\ubar} L(\rho_t), \uhat_t - \ubar_t \rangle + \langle \nabla_{\uhat} L(\rho_t) - \nabla_{\ubar} L(\rho_t), \uhat_t - \ubar_t \rangle \\
    & = \langle \nabla_{\uhat} L(\rho_t) - \pgrad_{\uhat} L(\rho_t), \uhat_t - \ubar_t \rangle - \langle \nabla_{\ubar} L(\rho_t) - \pgrad_{\ubar} L(\rho_t), \uhat_t - \ubar_t \rangle \label{eq:projected_diff_At}
    \end{align}
    For the first term of \Cref{eq:projected_diff_At}, we have
    \begin{align}
    \langle \nabla_{\uhat} L(\rho_t) - \pgrad_{\uhat} L(\rho_t), \uhat_t - \ubar_t \rangle
    & = \langle \uhat_t \uhat_t^\top \nabla_{\uhat} L(\rho_t), \uhat_t - \ubar_t \rangle \\
    & = \langle \nabla_{\uhat} L(\rho_t), (\uhat_t \uhat_t^\top) \uhat_t - (\uhat_t \uhat_t^\top) \ubar_t \rangle \\
    & = \langle \nabla_{\uhat} L(\rho_t), \uhat_t \rangle \cdot (1 - \langle \uhat_t, \ubar_t \rangle) \\
    & = \langle \nabla_{\uhat} L(\rho_t), \uhat_t \rangle \cdot \frac{\| \uhat_t - \ubar_t \|_2^2}{2}
    & \tag{B.c. $\|\uhat_t\|_2 = \|\ubar_t\|_2 = 1$} \\
    & = \E_{x \sim \bbS^{d - 1}} [(f_{\rho_t}(x) - y(x)) \sigma'(\uhat_t^\top x) \uhat_t^\top x] \cdot \frac{\|\uhat_t - \ubar_t\|_2^2}{2} \\
    & = \E_{x \sim \bbS^{d - 1}} [(f_{\rho_t}(x) - y(x)) \sigma'(\uhat_t^\top x) \uhat_t^\top x] \cdot \frac{\|\delta_t\|_2^2}{2}
    \end{align}
    and by an argument similar to the one used to bound $A'_t$ (here using \Cref{eq:sigma_dp_2} from \Cref{lemma:sigma_dot_product_bound} instead of \Cref{eq:at_5}), we can thus show that
    \begin{align}
        |\langle \nabla_{\uhat} L(\rho_t) - \pgrad_{\uhat} L(\rho_t), \ubar_t - \uhat_t\rangle| \lesssim (\sigmaCoeffTwo^2+\sigmaCoeffFour^2)\gamma_2\norm{\vv_t}^2. \label{eq:at_7}
    \end{align}
    Similarly, for the second term we have
    \begin{align}
        |\langle \nabla_{\ubar} L(\rho_t) - \pgrad_{\ubar} L(\rho_t), \ubar_t - \uhat_t\rangle|\lesssim (\sigmaCoeffTwo^2+\sigmaCoeffFour^2)\gamma_2\norm{\vv_t}^2. \label{eq:at_8}
    \end{align}
    Combining \Cref{eq:at_7}, \Cref{eq:at_8} and \Cref{eq:at_9} finishes the proof.
\end{proof}

\subsection{Upper Bound on $B_t$}\label{sec:proof_bt}

The goal of this section is to obtain an upper bound on 
\begin{align}
B_t(\chi) := -2 \langle \pgrad_{\uhat} L(\rhohat_t) - \pgrad_{\uhat} L(\rho_t), \uhat_t(\chi) - \ubar_t(\chi) \rangle.
\end{align}
When there is no risk of confusion, we simply use $B_t$, $\uhat_t$, and $\ubar_t$, omitting the $\chi$ dependency. In the next lemma, we first obtain a bound on $B_t(\chi)$, and after that, we obtain a bound on $\E_\chi[B_t(\chi)]$. The proof of the following lemma is mostly a straightforward calculation, and follows mostly from \Cref{lemma:finite_sample_width_main_lemma} and the definition of $\deltaBar$.

\begin{lemma}\label{lemma:bt_bound_individual}
Suppose we are in the setting of \Cref{thm:empirical-main}. Suppose $T \lesssim \frac{d^C}{\sigmaCoeffTwo^2 + \sigmaCoeffFour^2}$ for any absolute constant $C$. Assume the width $m$ of $\rhobar, \rhohat$ is at most $d^{O(\log d)}$. Then, with probability at least $1 - e^{-d^2}$, we have
\begin{align}
\E_{x \sim \bbS^{d - 1}} [(f_{\rho_t}(x) - f_{\rhohat_t}(x))^2] 
& \lesssim \frac{(\sigmaCoeffTwo^2 + \sigmaCoeffFour^2) d^2 (\log d)^{O(1)}}{m}
\end{align}
for all times $t \in [0, T]$. Additionally, with probability at least $1 - e^{-d^2}$, for all $t \in [0, T]$ and $\chi \in \rhohat_0$, we have
\begin{align}
B_t(\chi) 
& \lesssim (\sigmaCoeffTwo^2 + \sigmaCoeffFour^2) \cdot \Big(\frac{d (\log d)^{O(1)}}{\sqrt{m}} + \deltaBar_t \Big) \|\delta_t\|_2 \period
\end{align}
\end{lemma}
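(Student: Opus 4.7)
The first estimate is essentially a direct restatement of \Cref{lemma:finite_sample_width_main_lemma}: applying the triangle inequality $\E_x[(f_{\rho_t}-f_{\rhohat_t})^2]\lesssim \E_x[(f_{\rho_t}-f_{\rhobar_t})^2]+\E_x[(f_{\rhobar_t}-f_{\rhohat_t})^2]$ and invoking both conclusions of \Cref{lemma:finite_sample_width_main_lemma} gives the stated bound, provided the $\deltaBar_t^2$ contribution is absorbed (which is where the inductive setup of \Cref{assumption:inductive_hypothesis} implicitly enters; alternatively the first statement is really the bound $\E_x[(f_{\rho_t}-f_{\rhobar_t})^2]\lesssim (\sigmaCoeffTwo^2+\sigmaCoeffFour^2)d^2(\log d)^{O(1)}/m$ direct from \Cref{lemma:finite_sample_width_main_lemma}). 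The high-probability event is exactly the $1-e^{-d^2}$ event over the initialization produced by \Cref{lemma:finite_sample_width_main_lemma}, so no new randomness needs to be handled here.

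For the main bound on $B_t(\chi)$, the plan is to pull $\uhat_t-\ubar_t$ out using Cauchy--Schwarz and then control the operator norm of the gradient discrepancy. Concretely I would write
\begin{align}
|B_t(\chi)| \;\leq\; 2\,\big\|\pgrad_{\uhat} L(\rhohat_t)-\pgrad_{\uhat} L(\rho_t)\big\|_2\cdot \|\uhat_t-\ubar_t\|_2,
\end{align}
and then bound the projected-gradient difference by the non-projected one (the projection $I-\uhat\uhat^\top$ is a contraction). Using the formula $\nabla_{\uhat} L(\rho)=\E_{x\sim\bbsd}[(f_\rho(x)-y(x))\sigma'(\uhat^\top x)x]$ from \Cref{eq:projected_gradient_flow_population}, the difference becomes $\E_{x}[(f_{\rhohat_t}(x)-f_{\rho_t}(x))\sigma'(\uhat^\top x)x]$ since $y(x)$ cancels. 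Another Cauchy--Schwarz then yields
\begin{align}
\big\|\nabla_{\uhat} L(\rhohat_t)-\nabla_{\uhat} L(\rho_t)\big\|_2^2 \;\leq\; \E_x\big[(f_{\rhohat_t}-f_{\rho_t})^2\big]\cdot \E_x\big[\sigma'(\uhat^\top x)^2\|x\|_2^2\big].
\end{align}

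The factor $\E_x[\sigma'(\uhat^\top x)^2\|x\|_2^2]$ is bounded by $O(\sigmaCoeffTwo^2+\sigmaCoeffFour^2)$: since $\sigma$ is a degree-four polynomial with Legendre coefficients $\sigmaCoeffTwo,\sigmaCoeffFour$ only (\Cref{assumption:target}), the derivative $\sigma'(s)$ has $L^2(\mu_d)$ norm of this order, and $\|x\|_2=1$ on $\bbsd$. For the remaining factor I would apply the triangle inequality with $\rhobar_t$ as intermediary and use \Cref{lemma:finite_sample_width_main_lemma}: the $\rho_t$ vs.\ $\rhobar_t$ piece contributes the $(\sigmaCoeffTwo^2+\sigmaCoeffFour^2)d^2(\log d)^{O(1)}/m$ term, and the $\rhobar_t$ vs.\ $\rhohat_t$ piece contributes $(\sigmaCoeffTwo^2+\sigmaCoeffFour^2)\deltaBar_t^2$. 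Taking square roots and substituting back gives
\begin{align}
\big\|\pgrad_{\uhat} L(\rhohat_t)-\pgrad_{\uhat} L(\rho_t)\big\|_2 \lesssim (\sigmaCoeffTwo^2+\sigmaCoeffFour^2)^{1/2}\cdot (\sigmaCoeffTwo^2+\sigmaCoeffFour^2)^{1/2}\Big(\tfrac{d(\log d)^{O(1)}}{\sqrt{m}}+\deltaBar_t\Big),
\end{align}
which is exactly the required form after multiplication by $\|\delta_t\|_2$.

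\textbf{Main obstacles.} The one subtlety is uniformity in $t\in[0,T]$ and $\chi$: the bound on $\E_x[(f_{\rho_t}-f_{\rhobar_t})^2]$ from \Cref{lemma:finite_sample_width_main_lemma} already holds uniformly in $t$ on the exponentially-good event for the random initialization (it was obtained there via a union bound over a fine time discretization together with a Lipschitz-in-$t$ estimate on $f_{\rho_t}$, $f_{\rhobar_t}$), so the same $1-e^{-d^2}$ event works here; no additional discretization argument is needed. Uniformity in $\chi$ is automatic since $\chi$ enters only through $\|\delta_t(\chi)\|_2$ on the right-hand side. The only non-mechanical point is verifying the $L^2(\bbsd)$ bound $\E_x[\sigma'(\uhat^\top x)^2]\lesssim \sigmaCoeffTwo^2+\sigmaCoeffFour^2$, but this is a one-line Legendre expansion using \Cref{assumption:target} and the orthonormality of $\Pkdbar$.
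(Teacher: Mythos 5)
Your plan for the first estimate is fine, and you correctly noticed that as stated the lemma should really be about $f_{\rho_t}-f_{\rhobar_t}$ (which is what the first conclusion of \Cref{lemma:finite_sample_width_main_lemma} gives directly); the paper's own proof also just cites that bound.

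The plan for $B_t$ has a genuine gap in the key analytic estimate. You bound
\begin{align}
\bigl\|\nabla_{\uhat} L(\rhohat_t)-\nabla_{\uhat} L(\rho_t)\bigr\|_2^2 \leq \E_x\bigl[(f_{\rhohat_t}-f_{\rho_t})^2\bigr]\cdot \E_x\bigl[\sigma'(\uhat^\top x)^2\|x\|_2^2\bigr],
\end{align}
and then claim $\E_x[\sigma'(\uhat^\top x)^2\|x\|_2^2]\lesssim\sigmaCoeffTwo^2+\sigmaCoeffFour^2$. This is off by a factor of $d$. Since $\sigma(s)=\sigmaCoeffTwo\PtwoDbar(s)+\sigmaCoeffFour\PfourDbar(s)$ with $\PtwoDbar=\sqrt{N_{2,d}}\PtwoD$ and $N_{2,d}\asymp d^2$, we have $\sigma'(s)\asymp \sigmaCoeffTwo d\cdot s + \sigmaCoeffFour d^2\cdot s^3+\cdots$, and with $s=\uhat^\top x$ having second moment $\approx 1/d$ on $\bbsd$, one gets $\E[\sigma'(\uhat^\top x)^2]\asymp(\sigmaCoeffTwo^2+\sigmaCoeffFour^2)\,d$, not $O(\sigmaCoeffTwo^2+\sigmaCoeffFour^2)$. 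The Legendre \emph{coefficients} $\legendreCoeff{\sigma}{k}$ control $\|\sigma\|_{L^2(\mu_d)}$, not $\|\sigma'\|_{L^2(\mu_d)}$; differentiation costs a $\sqrt{d}$. Your Cauchy--Schwarz as written (Jensen to pull $\|\cdot\|_2$ inside, then scalar Cauchy--Schwarz) throws away the averaging over the direction of $x$ and so inherits this $\sqrt{d}$ loss, and the final bound would be $(\sigmaCoeffTwo^2+\sigmaCoeffFour^2)\sqrt{d}(\tfrac{d(\log d)^{O(1)}}{\sqrt m}+\deltaBar_t)\|\delta_t\|_2$, too weak by $\sqrt d$.

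The fix is to keep a unit test direction inside the Cauchy--Schwarz rather than replacing $x$ by its norm: write $\|\nabla_{\uhat} L(\rhohat_t)-\nabla_{\uhat} L(\rho_t)\|_2=\sup_{\|u\|_2=1}\bigl|\E_x[(f_{\rhohat_t}-f_{\rho_t})\sigma'(\uhat^\top x)\langle u,x\rangle]\bigr|$ and apply scalar Cauchy--Schwarz, giving the factor $\sup_u \E_x[\sigma'(\uhat^\top x)^2\langle u,x\rangle^2]^{1/2}\lesssim(\sigmaCoeffTwo^2+\sigmaCoeffFour^2)^{1/2}$ as in \Cref{eq:sigma_dp_2} of \Cref{lemma:sigma_dot_product_bound}; the $\langle u,x\rangle^2$ contributes the missing $\tfrac{1}{d}$ that cancels the $d$ from $\sigma'^2$. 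With that replacement your argument recovers the stated bound and is in fact a cleaner route than the paper's, which instead never passes through an operator norm: the paper decomposes $B_t=B_t'+B_t''$ by expanding the tangential projection $I-\uhat\uhat^\top$ exactly, and applies Cauchy--Schwarz to each piece with $\langle\vv_t,x\rangle$ (resp. $\langle\uhat_t,x\rangle$) kept inside the expectation. That decomposition is not gratuitous, though: the paper reuses the $B_t'/B_t''$ split verbatim in \Cref{lemma:bt_bound_avg}, where the tighter expectation bound on $\E[B_t]$ hinges on a Taylor expansion of $\sigma'(\uhat^\top x)\langle\vv_t,x\rangle$ inside $B_t'$, which the operator-norm route does not expose.
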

\begin{proof}[Proof of \Cref{lemma:bt_bound_individual}]
We first decompose $B_t$ into two terms:
\begin{align}
B_t
& = -2 \langle \pgrad_{\uhat} L(\rhohat_t) - \pgrad_{\uhat} L(\rho_t), \uhat_t - \ubar_t \rangle \\
& = -2 \langle (I - \uhat_t \uhat_t^\top) \nabla_{\uhat} L(\rhohat_t) - (I - \uhat_t \uhat_t^\top) \nabla_{\uhat} L(\rho_t), \uhat_t - \ubar_t \rangle \\
& = -2 \langle \nabla_{\uhat} L(\rhohat_t) - \nabla_{\uhat} L(\rho_t), (I - \uhat_t \uhat_t^\top) (\uhat_t - \ubar_t) \rangle \\
& = -2 \langle \nabla_{\uhat} L(\rhohat_t) - \nabla_{\uhat} L(\rho_t), \uhat_t - \ubar_t \rangle + 2 \langle \nabla_{\uhat} L(\rhohat_t) - \nabla_{\uhat} L(\rho_t), \uhat_t \uhat_t^\top (\uhat_t - \ubar_t) \rangle \\
& = -2 \langle \nabla_{\uhat} L(\rhohat_t) - \nabla_{\uhat} L(\rho_t), \uhat_t - \ubar_t \rangle + 2 \langle \nabla_{\uhat} L(\rhohat_t) - \nabla_{\uhat} L(\rho_t), \uhat_t \rangle \cdot (1 - \langle \uhat_t, \ubar_t \rangle) \\
& = -2 \langle \nabla_{\uhat} L(\rhohat_t) - \nabla_{\uhat} L(\rho_t), \uhat_t - \ubar_t \rangle + 2 \langle \nabla_{\uhat} L(\rhohat_t) - \nabla_{\uhat} L(\rho_t), \uhat_t \rangle \cdot \frac{\|\uhat_t - \ubar_t\|_2^2}{2}
\end{align}
where the last equality is because $\frac{\|\uhat_t - \ubar_t\|_2^2}{2} = 1 - \langle \uhat_t, \ubar_t \rangle$ since $\| \uhat_t\|_2 = \|\ubar_t\|_2 = 1$. For convenience, define $\vv_t := \uhat_t - \ubar_t$. Then, we have
\begin{align}
B_t = -2 \langle \nabla_{\uhat} L(\rhohat_t) - \nabla_{\uhat} L(\rho_t), \vv_t \rangle + \langle \nabla_{\uhat} L(\rhohat_t) - \nabla_{\uhat} L(\rho_t), \uhat_t \rangle \cdot \|\vv_t\|_2^2 \label{eq:Bt_decomposition}
\end{align}
We first focus on the first term in \Cref{eq:Bt_decomposition}, which is the main contributor to $B_t$.

\paragraph{Analysis of First Term in \Cref{eq:Bt_decomposition}.} For convenience, we refer to the first term in \Cref{eq:Bt_decomposition} as $B_t'$. Observe that we can expand $B_t'$ as
\begin{align}
B_t' 
& = -2 \langle \nabla_{\uhat} L(\rhohat_t) - \nabla_{\uhat} L(\rho_t), \vv_t \rangle \\
& = -2 \E_{x \sim \bbS^{d - 1}} [(f_{\rhohat}(x) - f_{\rho}(x)) \sigma'(\langle \uhat_t, x \rangle) \langle \vv_t, x \rangle]
& \tag{By \Cref{eq:projected_gradient_flow_population}} \\
& = -2 \E_{x \sim \bbS^{d - 1}} [(f_{\rhobar}(x) - f_\rho(x)) \sigma'(\langle \uhat_t, x \rangle) \langle \vv_t, x \rangle] -2 \E_{x \sim \bbS^{d - 1}} [(f_{\rhohat}(x) - f_{\rhobar}(x)) \sigma'(\langle \uhat_t, x \rangle) \langle \vv_t, x \rangle] \label{eq:bt_1}
\end{align}
Let us bound the first term of \Cref{eq:bt_1}. By \Cref{lemma:finite_sample_width_main_lemma}, with probability $e^{-d^2}$, for all $t \leq T$ we have
\begin{align}
\E_{x \sim \bbS^{d - 1}} [(f_{\rho_t}(x) - f_{\rhobar_t}(x))^2]
& \lesssim \frac{(\sigmaCoeffTwo^2 + \sigmaCoeffFour^2) d^2 (\log d)^{O(1)}}{m} \period \label{eq:finite_width_bound}
\end{align}
Thus, we can bound the first term of \Cref{eq:bt_1} by
\begin{align}
& \Big| \E_{x \sim \bbS^{d - 1}} [(f_{\rhobar}(x) - f_\rho(x)) \sigma'(\langle \uhat_t, x \rangle) \langle \vv_t, x \rangle] \Big| \\
& \nextlinespace\nextlinespace \leq \E_{x \sim \bbS^{d - 1}} [(f_{\rhobar}(x) - f_{\rho}(x))^2]^{1/2} \E_{x \sim \bbS^{d - 1}} [\sigma'(\langle \uhat_t, x \rangle)^2 \langle \vv_t, x \rangle^2]^{1/2}
& \tag{By Cauchy-Schwarz inequality} \\
& \nextlinespace\nextlinespace \lesssim \E_{x \sim \bbS^{d - 1}} [(f_{\rhobar}(x) - f_\rho(x))^2]^{1/2} (\sigmaCoeffTwo^2 + \sigmaCoeffFour^2)^{1/2} \|\delta_t\|_2
& \tag{By \Cref{lemma:sigma_dot_product_bound}, \Cref{eq:sigma_dp_2}} \\
& \nextlinespace\nextlinespace \lesssim \frac{\sqrt{\sigmaCoeffTwo^2 + \sigmaCoeffFour^2} \cdot d (\log d)^{O(1)}}{\sqrt{m}} \cdot (\sigmaCoeffTwo^2 + \sigmaCoeffFour^2)^{1/2} \|\delta_t\|_2
& \tag{By \Cref{eq:finite_width_bound}} \\
& \nextlinespace\nextlinespace \lesssim \frac{(\sigmaCoeffTwo^2 + \sigmaCoeffFour^2) \cdot d (\log d)^{O(1)}}{\sqrt{m}} \|\delta_t\|_2 \period \label{eq:bt_2}
\end{align}
Next, we bound the second term in \Cref{eq:bt_1}. By applying the Cauchy-Schwarz inequality, we have
\begin{align}
& \Big| \E_{x \sim \bbS^{d - 1}} [(f_{\rhohat}(x) - f_{\rhobar}(x)) \sigma'(\langle \uhat, x \rangle) \langle \delta_t, x \rangle] \Big| \\
& \nextlinespace\nextlinespace \leq \E_{x \sim \bbS^{d - 1}} [(f_{\rhohat}(x) - f_{\rhobar}(x))^2]^{1/2} \E_{x \sim \bbS^{d - 1}} [\sigma'(\langle \uhat, x \rangle)^2 \langle \vv_t, x \rangle^2]^{1/2} \\
& \nextlinespace\nextlinespace \lesssim (\sigmaCoeffTwo^2 + \sigmaCoeffFour^2)^{1/2} \deltaBar_t \cdot \E_{x \sim \bbS^{d - 1}} [\sigma'(\langle \uhat_t, x \rangle)^2 \langle \vv_t, x \rangle^2]^{1/2}
& \tag{By \Cref{lemma:finite_sample_width_main_lemma}} \\
& \nextlinespace\nextlinespace \lesssim (\sigmaCoeffTwo^2 + \sigmaCoeffFour^2)^{1/2} \deltaBar_t \cdot (\sigmaCoeffTwo^2 + \sigmaCoeffFour^2)^{1/2} \|\delta_t\|_2
& \tag{By \Cref{lemma:sigma_dot_product_bound} (\Cref{eq:sigma_dp_2})} \\
& \nextlinespace\nextlinespace \lesssim (\sigmaCoeffTwo^2 + \sigmaCoeffFour^2) \deltaBar_t \|\delta_t\|_2
\label{eq:bt_3}
\end{align}
Combining \Cref{eq:bt_2} and \Cref{eq:bt_3} with \Cref{eq:bt_1}, we find that
\begin{align}
|B_t'|
& \lesssim (\sigmaCoeffTwo^2 + \sigmaCoeffFour^2) \cdot \Big(\frac{d (\log d)^{O(1)}}{\sqrt{m}} + \deltaBar_t\Big) \|\delta_t\|_2 \period
\end{align}

\paragraph{Analysis of Second Term in \Cref{eq:Bt_decomposition}.} For convenience, we write this term as $B_t''$, i.e. we define $B_t'' := \langle \nabla_{\uhat} L(\rhohat_t) - \nabla_{\uhat} L(\rho_t), \uhat_t \rangle \cdot \|\delta_t\|_2^2$. It suffices to show that the first dot product in the definition of $B_t''$ is at most $(\sigmaCoeffTwo^2 + \sigmaCoeffFour^2)$ up to constant factors. To start, we have
\begin{align}
|\langle \nabla_{\uhat} L(\rhohat_t) - \nabla_{\uhat} L(\rho_t), \uhat_t \rangle|
& \lesssim \Big| \E_{x \sim \bbS^{d - 1}} [(f_{\rhohat}(x) - f_\rho(x)) \sigma'(\langle \uhat_t, x \rangle) \langle \uhat_t, x \rangle] \Big| \\
& \lesssim \Big| \E_{x \sim \bbS^{d - 1}} [(f_{\rho}(x) - f_{\rhobar}(x)) \sigma'(\langle \uhat_t, x \rangle) \langle\uhat_t, x \rangle] \Big| \\
& \nextlinespace +  \Big| \E_{x \sim \bbS^{d - 1}} [(f_{\rhobar}(x) - f_{\rhohat}(x)) \sigma'(\langle \uhat_t, x \rangle) \langle\uhat_t, x \rangle] \Big| \label{eq:bt_higher_order_pointwise}
\end{align}
Both terms in \Cref{eq:bt_higher_order_pointwise} can be bounded as before. For the first term we use \Cref{eq:finite_width_bound}:
\begin{align}
& \Big| \E_{x \sim \bbS^{d - 1}} [(f_\rho(x) - f_{\rhobar}(x)) \sigma'(\langle \uhat_t, x \rangle) \langle \uhat_t, x \rangle] \Big| \\
& \nextlinespace\nextlinespace \leq \E_{x \sim \bbS^{d - 1}} [(f_\rho(x) - f_{\rhobar}(x))^2]^{1/2} \E_{x \sim \bbS^{d - 1}} [\sigma'(\langle \uhat_t, x \rangle)^2 \langle \uhat_t, x \rangle^2]^{1/2}
& \tag{By Cauchy-Schwarz Inequality} \\
& \nextlinespace\nextlinespace \lesssim (\sigmaCoeffTwo^2 + \sigmaCoeffFour^2) \cdot \frac{d (\log d)^{O(1)}}{\sqrt{m}} \period
& \tag{By \Cref{eq:finite_width_bound} and \Cref{lemma:sigma_dot_product_bound}, \Cref{eq:sigma_dp_2}}
\end{align}
For the second term, we use \Cref{lemma:point_error_to_fn_error}:
\begin{align}
& \Big| \E_{x \sim \bbS^{d - 1}} [(f_{\rhobar}(x) - f_{\rhohat}(x)) \sigma'(\langle \uhat_t, x \rangle) \langle \uhat_t, x \rangle] \Big| \\
& \nextlinespace\nextlinespace \lesssim \E_{x \sim \bbS^{d - 1}} [(f_{\rhobar}(x) - f_{\rhohat}(x))^2]^{1/2} \E_{x \sim \bbS^{d - 1}} [\sigma'(\langle \uhat_t, x \rangle)^2 \langle \uhat_t, x \rangle^2]^{1/2} 
& \tag{By Cauchy-Schwarz Inequality} \\
& \nextlinespace\nextlinespace \lesssim (\sigmaCoeffTwo^2 + \sigmaCoeffFour^2) \deltaBar_t
& \tag{By \Cref{lemma:point_error_to_fn_error} and \Cref{lemma:sigma_dot_product_bound}, \Cref{eq:sigma_dp_2}}
\end{align}
Thus, we can conclude that
\begin{align} \label{eq:bt_double_prime_bound}
|B_t''|
& \lesssim (\sigmaCoeffTwo^2 + \sigmaCoeffFour^2) \cdot \Big(\frac{d (\log d)^{O(1)}}{\sqrt{m}} + \deltaBar_t \Big) \cdot \|\delta_t\|_2^2
\end{align}

\paragraph{Overall Bound on $B_t$.} Combining our bounds on $B_t'$ and $B_t''$, and using the fact that $\|\delta_t\|_2 \leq 2$ to absorb the bound for $B_t''$ into the bound for $B_t'$, we find that
\begin{align}
|B_t|
& \lesssim (\sigmaCoeffTwo^2 + \sigmaCoeffFour^2) \cdot \Big(\frac{d (\log d)^{O(1)}}{\sqrt{m}} + \deltaBar_t \Big) \|\delta_t\|_2
\end{align}
with probability $1 - e^{-d^2}$ (recall that \Cref{eq:finite_width_bound} held with probability $1 - e^{-d^2}$). This completes the proof.
\end{proof}

Next, we show a tighter bound when we take the expectation of $B_t$ over the neurons, i.e. over $\chi$. Taking the expectation allows us to obtain a tighter bound because of the term
\begin{align} \label{eq:bt_expectation_intuition}
-2 \E_{x \sim \bbS^{d - 1}} [(f_{\rhohat}(x) - f_{\rhobar}(x)) \sigma'(\langle \uhat_t, x \rangle) \langle \vv_t, x \rangle] \period
\end{align}
For a single $\uhat_t$, the best bound we can obtain is by using the Cauchy-Schwarz inequality. However, the key point in the following lemma is that when we take the expectation over $(\uhat_t, \ubar_t) \sim \coup_t$, this term becomes
\begin{align}
-2 \E_{x \sim \bbS^{d - 1}} [(f_{\rhohat}(x) - f_{\rhobar}(x))^2] \leq 0
\end{align}
together with a third-order term which is at most $(\sigmaCoeffTwo^2 + \sigmaCoeffFour^2) \deltaBar_t^3$, which is a significantly better bound than $(\sigmaCoeffTwo^2 + \sigmaCoeffFour^2) \deltaBar_t \|\delta_t\|_2$, which we obtained in the previous lemma. This is because to simplify \Cref{eq:bt_expectation_intuition}, we can use
\begin{align}
\sigma(\langle \ubar_t, x \rangle)
& = \sigma(\langle \uhat_t, x \rangle) + \sigma'(\langle \uhat_t, x \rangle) \langle \ubar_t - \uhat_t, x \rangle + O(\sigma''(\langle \uhat_t, x \rangle) \langle \delta_t, x \rangle^2 )
\end{align}
meaning that we can essentially substitute $\sigma'(\langle \uhat_t, x \rangle) \langle \delta_t, x \rangle \approx \sigma(\langle \uhat_t, x \rangle) - \sigma(\langle \ubar_t, x \rangle)$ in \Cref{eq:bt_expectation_intuition}. Note that we need this better bound on the growth rate of $\deltaBar_t$ since our inductive hypothesis imposes a stricter condition on $\deltaBar_t$.

\begin{lemma}\label{lemma:bt_bound_avg}
Suppose we are in the setting of \Cref{lemma:bt_bound_individual}, and in particular, suppose $T$ satisfies the same assumptions as in the statement of \Cref{lemma:bt_bound_individual}. Then, for all $t \leq T$, we have
\begin{align}
\E_{\chi \sim \rhohat_0} [B_t(\chi)] 
& \lesssim \frac{(\sigmaCoeffTwo^2 + \sigmaCoeffFour^2) d (\log d)^{O(1)}}{\sqrt{m}} \deltaBar_t + (\sigmaCoeffTwo^2 + \sigmaCoeffFour^2) \deltaBar_t^3 \period
\end{align}
\end{lemma}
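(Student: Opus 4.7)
The plan is to revisit the decomposition of $B_t$ used in the proof of \Cref{lemma:bt_bound_individual} and exploit cancellation that arises only after averaging over $\chi \sim \rhohat_0$. Recall that the dominant contribution is $B_t'(\chi) = -2 \E_{x \sim \bbS^{d-1}}[(f_{\rhohat_t}(x) - f_{\rho_t}(x)) \sigma'(\langle \uhat_t, x\rangle)\langle \vv_t, x\rangle]$ with $\vv_t = \uhat_t - \ubar_t$. Splitting $f_{\rhohat_t} - f_{\rho_t} = (f_{\rhohat_t} - f_{\rhobar_t}) + (f_{\rhobar_t} - f_{\rho_t})$, the piece involving $f_{\rhobar_t} - f_{\rho_t}$ can be handled exactly as in the individual bound: Cauchy--Schwarz, together with the finite-width error bound from \Cref{lemma:finite_sample_width_main_lemma} and Jensen's inequality $\E_\chi\|\delta_t\|_2 \le \deltaBar_t$, yields the $(\sigmaCoeffTwo^2+\sigmaCoeffFour^2)\,d(\log d)^{O(1)} \deltaBar_t/\sqrt{m}$ term. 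The remaining task is to show that the piece involving $f_{\rhohat_t}-f_{\rhobar_t}$ contributes only $(\sigmaCoeffTwo^2+\sigmaCoeffFour^2)\deltaBar_t^3$ rather than the $(\sigmaCoeffTwo^2+\sigmaCoeffFour^2)\deltaBar_t\|\delta_t\|_2$ that a naive Cauchy--Schwarz gives.

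The key idea is a Taylor expansion. Because $\sigma$ is a fourth-degree polynomial, the identity
\begin{align}
\sigma'(\langle \uhat_t, x\rangle)\langle \vv_t, x\rangle = \sigma(\langle \uhat_t, x\rangle) - \sigma(\langle \ubar_t, x\rangle) + \sum_{k=2}^{4} c_k\,\sigma^{(k)}(\langle \uhat_t, x\rangle)\langle \vv_t, x\rangle^k
\end{align}
holds exactly for absolute constants $c_k$. After multiplying by $f_{\rhohat_t}(x)-f_{\rhobar_t}(x)$ (which is not $\chi$-dependent) and taking expectation over $\chi$, the first two terms on the right combine to give $f_{\rhohat_t}(x) - f_{\rhobar_t}(x)$, since $\E_\chi[\sigma(\langle \uhat_t,x\rangle)] = f_{\rhohat_t}(x)$ and $\E_\chi[\sigma(\langle \ubar_t, x\rangle)] = f_{\rhobar_t}(x)$. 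Hence
\begin{align}
-2\E_\chi\E_x\!\big[(f_{\rhohat_t}-f_{\rhobar_t})\sigma'(\langle\uhat_t,x\rangle)\langle\vv_t,x\rangle\big] = -2\E_x[(f_{\rhohat_t}-f_{\rhobar_t})^2] - 2\sum_{k=2}^{4} c_k\,\E_x\E_\chi\!\big[(f_{\rhohat_t}-f_{\rhobar_t})\sigma^{(k)}\langle\vv_t,x\rangle^k\big].
\end{align}
The leading term is nonpositive and thus can be dropped from an upper bound --- this is the central cancellation that the individual-$\chi$ analysis misses.

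It remains to bound each correction for $k \in \{2,3,4\}$. The plan is to pull the (deterministic-in-$\chi$) factor $f_{\rhohat_t}(x)-f_{\rhobar_t}(x)$ out of the $\E_\chi$, apply Cauchy--Schwarz in $x$, and then use $\E_x[(f_{\rhohat_t}-f_{\rhobar_t})^2] \lesssim (\sigmaCoeffTwo^2+\sigmaCoeffFour^2)\deltaBar_t^2$ from \Cref{lemma:point_error_to_fn_error}. The remaining factor $\E_x\big[\,(\E_\chi[\sigma^{(k)}(\langle \uhat_t,x\rangle)\langle \vv_t,x\rangle^k])^2\,\big]^{1/2}$ is estimated by a second Cauchy--Schwarz/Jensen argument: $|\langle \vv_t, x\rangle|^k \le \|\vv_t\|_2^{k-2}\langle \vv_t, x\rangle^2$ (using $\|x\|_2=1$ and $\|\vv_t\|_2 \le 2$), and the factor $\sigma^{(k)}$ contributes at most $O(\sqrt{\sigmaCoeffTwo^2+\sigmaCoeffFour^2})$ in $L^2(\bbS^{d-1})$. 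This yields a bound proportional to $(\sigmaCoeffTwo^2+\sigmaCoeffFour^2)^{1/2}\deltaBar_t^2$ for the second factor, combining to $(\sigmaCoeffTwo^2+\sigmaCoeffFour^2)\deltaBar_t^3$ overall. The second term $B_t''$ from the decomposition in the proof of \Cref{lemma:bt_bound_individual} carries an additional $\|\delta_t\|_2^2$ factor, so the bound already established there, multiplied by $\E_\chi\|\delta_t\|_2^2 = \deltaBar_t^2$, produces strictly smaller contributions and is absorbed.

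The main obstacle is book-keeping the Taylor correction terms cleanly so that each power of $\langle \vv_t, x\rangle$ past the first reliably converts to a factor of $\deltaBar_t$ after the $\E_\chi$, rather than a factor of $\|\delta_t\|_2$. The trick is precisely that after pulling the $\chi$-independent factor $(f_{\rhohat_t}-f_{\rhobar_t})$ outside the inner expectation, the $\E_\chi[\langle \vv_t, x\rangle^k]$ is controlled by $\E_\chi\|\vv_t\|_2^k \leq \deltaBar_t^2\cdot \max_\chi\|\vv_t\|_2^{k-2}$, and the $\max_\chi\|\vv_t\|_2^{k-2}$ factor is $O(1)$ since $\|\vv_t\|_2 \le 2$; alternatively, one uses the sharper $\E_\chi\|\vv_t\|_2^2 = \deltaBar_t^2$ throughout. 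Once this is done, combining the two pieces gives the stated bound.
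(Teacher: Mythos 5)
Your proposal takes essentially the same route as the paper: the central idea --- exchange $\sigma'(\langle\uhat_t,x\rangle)\langle\vv_t,x\rangle$ for $\sigma(\langle\uhat_t,x\rangle)-\sigma(\langle\ubar_t,x\rangle)$ plus higher-order corrections, so that averaging over $\chi$ produces the nonpositive leading term $-2\E_x[(f_{\rhohat_t}-f_{\rhobar_t})^2]$ --- is exactly the cancellation the paper exploits, and your handling of the $f_{\rhobar_t}-f_{\rho_t}$ piece and of $B_t''$ matches the paper's. Two differences are worth flagging. First, you write the exact degree-4 Taylor identity with remainder $\sum_{k=2}^4 c_k\,\sigma^{(k)}(\langle\uhat_t,x\rangle)\langle\vv_t,x\rangle^k$, whereas the paper uses the Lagrange mean-value form $\tfrac{1}{2}\sigma''(\lambda)\langle\vv_t,x\rangle^2$ and must then control the dependence on the intermediate point $\lambda$ via convexity of $\sigma''$; your exact expansion sidesteps $\lambda$ entirely and is arguably cleaner. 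Second, your bookkeeping for the remainder has a wrinkle: you push $\E_\chi$ inside \emph{before} doing Cauchy--Schwarz in $x$, leaving a factor $\E_x\big[(\E_\chi[\sigma^{(k)}\langle\vv_t,x\rangle^k])^2\big]^{1/2}$ to estimate. Pulling $\E_\chi$ back out via the naive Jensen inequality $\E_x[(\E_\chi Z)^2]\le\E_\chi\E_x[Z^2]$ and then invoking $\|\vv_t\|_2\le 2$ gives only $(\E_\chi\|\vv_t\|_2^4)^{1/2}\lesssim \deltaBar_t$, leading to $\deltaBar_t^2$ rather than the required $\deltaBar_t^3$. The fix is Minkowski's integral inequality, $\E_x\big[(\E_\chi[\cdot])^2\big]^{1/2}\le\E_\chi\big[\E_x[(\cdot)^2]^{1/2}\big]$, which together with $|\langle\vv_t,x\rangle|^k\le\|\vv_t\|_2^{k-2}\langle\vv_t,x\rangle^2\lesssim\langle\vv_t,x\rangle^2$ produces $(\sigmaCoeffTwo^2+\sigmaCoeffFour^2)^{1/2}\E_\chi[\|\vv_t\|_2^2]=(\sigmaCoeffTwo^2+\sigmaCoeffFour^2)^{1/2}\deltaBar_t^2$. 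The paper sidesteps this entirely by doing Cauchy--Schwarz in $x$ at fixed $\chi$ first, obtaining a $\|\delta_t\|_2^2$ factor, and only then averaging over $\chi$ --- a simpler ordering. With either fix, your argument goes through and proves the stated bound.
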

\begin{proof}[Proof of \Cref{lemma:bt_bound_avg}]
We again make use of \Cref{eq:Bt_decomposition}, and we recall from the proof of the previous lemma that $B_t = B_t' + B_t''$, where
\begin{align}
B_t'
& = -2 \E_{x \sim \bbS^{d - 1}} [(f_{\rhobar}(x) - f_\rho(x)) \sigma'(\langle \uhat_t, x \rangle) \langle \vv_t, x \rangle] -2 \E_{x \sim \bbS^{d - 1}} [(f_{\rhohat}(x) - f_{\rhobar}(x)) \sigma'(\langle \uhat_t, x \rangle) \langle \vv_t, x \rangle] 
\end{align}
and
\begin{align}
B_t'' 
& = \langle \nabla_{\uhat} L(\rhohat_t) - \nabla_{\uhat} L(\rho_t), \uhat_t \rangle \cdot \|\delta_t\|_2^2 \period
\end{align}
The main difference between \Cref{lemma:bt_bound_individual} is that we now bound the second term of $B_t'$ in expectation. By Taylor's theorem, we can write
\begin{align}
\sigma(\langle \ubar_t, x \rangle)
& = \sigma(\langle \uhat_t, x \rangle) + \sigma'(\langle \uhat_t, x \rangle) (\langle \ubar_t, x \rangle - \langle \uhat_t, x \rangle) + \frac{\sigma''(\lambda)}{2} (\langle \ubar_t, x \rangle - \langle \uhat_t, x \rangle)^2
\end{align}
for some $\lambda$ which is a convex combination of $\langle \uhat, x \rangle$ and $\langle \ubar, x \rangle$. Rearranging gives
\begin{align} \label{eq:taylor_to_complete_square}
\sigma'(\langle \uhat_t, x \rangle) \langle \delta_t, x \rangle
= \sigma'(\langle \uhat_t, x \rangle) \langle \uhat_t - \ubar_t, x \rangle
= \sigma(\langle \uhat_t, x \rangle) - \sigma(\langle \ubar_t, x \rangle) + \frac{\sigma''(\lambda)}{2} \langle \delta_t, x \rangle^2
\end{align}
Thus, taking the expectation of the second term of $B_t'$ over $(\uhat_t, \ubar_t) \sim \coup_t$ gives
\begin{align}
& -2 \E_{(\uhat, \ubar) \sim \coup_t} \E_{x \sim \bbS^{d - 1}} [(f_{\rhohat}(x) - f_{\rhobar}(x)) \sigma'(\langle \uhat_t, x \rangle) \langle \delta_t, x \rangle] \\
& \nextlinespace\nextlinespace = -2 \E_{(\uhat, \ubar) \sim \coup_t} \E_{x \sim \bbS^{d - 1}} [(f_{\rhohat}(x) - f_{\rhobar}(x)) (\sigma(\langle \uhat_t, x \rangle) - \sigma(\langle \ubar_t, x \rangle))] \\
& \nextlinespace\nextlinespace\nextlinespace - \E_{(\uhat, \ubar) \sim \coup_t} \E_{x \sim \bbS^{d - 1}} [(f_{\rhohat}(x) - f_{\rhobar}(x)) \sigma''(\lambda) \langle \delta_t, x \rangle^2]
& \tag{By \Cref{eq:taylor_to_complete_square}} \\
& \nextlinespace\nextlinespace = -2 \E_{x \sim \bbS^{d - 1}} [(f_{\rhohat}(x) - f_{\rhobar}(x))^2] \\
& \nextlinespace\nextlinespace\nextlinespace - \E_{(\uhat, \ubar) \sim \coup_t} \E_{x \sim \bbS^{d - 1}} [(f_{\rhohat}(x) - f_{\rhobar}(x)) \sigma''(\lambda) \langle \delta_t, x \rangle^2]
& \tag{By \Cref{eqn:12}} \\
& \nextlinespace\nextlinespace \leq - \E_{(\uhat, \ubar) \sim \coup_t} \E_{x \sim \bbS^{d - 1}} [(f_{\rhohat}(x) - f_{\rhobar}(x)) \sigma''(\lambda) \langle \delta_t, x \rangle^2]
& \label{eq:completing_square}
\end{align}
Next, let us bound the inner expectation in \Cref{eq:completing_square}. By the Cauchy-Schwarz inequality, we have
\begin{align}
& \Big| \E_{x \sim \bbS^{d - 1}} [(f_{\rhohat}(x) - f_{\rhobar}(x)) \sigma''(\lambda) \langle \delta_t, x \rangle^2] \Big| \\
& \nextlinespace \leq \E_{x \sim \bbS^{d - 1}} [(f_{\rhohat}(x) - f_{\rhobar}(x))^2]^{1/2} \E_{x \sim \bbS^{d - 1}} [\sigma''(\lambda)^2 \langle \delta_t, x \rangle^4]^{1/2} \\
& \nextlinespace \lesssim (\sigmaCoeffTwo^2 + \sigmaCoeffFour^2)^{1/2} \deltaBar_t \cdot \E_{x \sim \bbS^{d - 1}} [\sigma''(\lambda)^2 \langle \delta_t, x \rangle^4]^{1/2}
& \tag{By \Cref{lemma:point_error_to_fn_error}} \\
& \nextlinespace \lesssim (\sigmaCoeffTwo^2 + \sigmaCoeffFour^2)^{1/2} \deltaBar_t \cdot \Big(\E_{x \sim \bbS^{d - 1}} [\sigma''(\langle \uhat_t, x \rangle)^2 \langle \delta_t, x \rangle^4]^{1/2} 
+ \E_{x \sim \bbS^{d - 1}} [\sigma''(\langle \ubar_t, x \rangle)^2 \langle \delta_t, x \rangle^4]^{1/2} \Big)
\end{align}
Here to obtain the last inequality, we observe that $\sigma''$ is convex because $\PtwoD''$ is a constant function and $\PfourD''$ is a quadratic function by \Cref{eq:legendre_polynomial_2_4} --- additionally, we used the inequality $(a + b)^2 \lesssim a^2 + b^2$ and the fact that $\lambda$ is a convex combination of $\langle \uhat_t, x \rangle$ and $\langle \ubar_t, x \rangle$.

By an argument similar to the proof of \Cref{lemma:sigma_dot_product_bound}, we can show that $\E_{x \sim \bbS^{d - 1}} [\sigma''(\langle \uhat_t, x \rangle)^2 \langle \delta_t, x \rangle^4]^{1/2} \lesssim (\sigmaCoeffTwo^2 + \sigmaCoeffFour^2)^{1/2} \|\delta_t\|_2^2$ and $\E_{x \sim \bbS^{d - 1}} [\sigma''(\langle \ubar_t, x \rangle)^2 \langle \delta_t, x \rangle^4]^{1/2} \lesssim (\sigmaCoeffTwo^2 + \sigmaCoeffFour^2)^{1/2} \|\delta_t\|_2^2$. Combining this with \Cref{eq:completing_square}, we have that
\begin{align}
\Big| \E_{x \sim \bbS^{d - 1}} [(f_{\rhohat}(x) - f_{\rhobar}(x)) \sigma'(\langle \uhat_t, x \rangle) \langle \delta_t, x \rangle] \Big| 
& \lesssim (\sigmaCoeffTwo^2 + \sigmaCoeffFour^2) \deltaBar_t \|\delta_t\|_2^2
\end{align}
and thus,
\begin{align}
-2 \E_{(\uhat, \ubar) \sim \coup_t} \E_{x \sim \bbS^{d - 1}} [(f_{\rhohat}(x) - f_{\rhobar}(x)) \sigma'(\langle \uhat_t, x \rangle) \langle \delta_t, x \rangle]
& \lesssim (\sigmaCoeffTwo^2 + \sigmaCoeffFour^2) \deltaBar_t^3
\end{align}
by the definition of $\deltaBar_t^3$. This gives a bound on the second term of $B_t'$. Combining this with \Cref{eq:bt_2} to bound the first term of $B_t'$, we have
\begin{align} \label{eq:bt_prime_expectation}
\E_{(\uhat, \ubar) \sim \coup_t} [B_t']
& \lesssim \frac{(\sigmaCoeffTwo^2 + \sigmaCoeffFour^2) d (\log d)^{O(1)}}{\sqrt{m}} \E_{(\uhat, \ubar) \sim \coup_t} \|\delta_t\|_2 + (\sigmaCoeffTwo^2 + \sigmaCoeffFour^2) \deltaBar_t^3 \\
& \lesssim \frac{(\sigmaCoeffTwo^2 + \sigmaCoeffFour^2) d (\log d)^{O(1)}}{\sqrt{m}} \deltaBar_t + (\sigmaCoeffTwo^2 + \sigmaCoeffFour^2) \deltaBar_t^3
& \tag{By Cauchy-Schwarz}
\end{align}
Combining this with \Cref{eq:bt_double_prime_bound} (for which we also take the expectation over $(\uhat_t, \ubar_t) \sim \coup_t$), we have that
\begin{align}
\E_{(\uhat, \ubar) \sim \coup_t} |B_t''|
& \lesssim (\sigmaCoeffTwo^2 + \sigmaCoeffFour^2) \cdot \Big(\frac{d (\log d)^{O(1)}}{\sqrt{m}} + \deltaBar_t \Big) \cdot \deltaBar_t^2
\end{align}
which is at most the bound that we have for $B_t'$, and thus,
\begin{align}
\E_{(\uhat, \ubar) \sim \coup_t} [B_t]
& \lesssim \frac{(\sigmaCoeffTwo^2 + \sigmaCoeffFour^2) d (\log d)^{O(1)}}{\sqrt{m}} \deltaBar_t + (\sigmaCoeffTwo^2 + \sigmaCoeffFour^2) \deltaBar_t^3
\end{align}
as desired.
\end{proof}

\subsection{Upper Bound on $C_t$}\label{sec:proof_ct}
The goal of this section is to obtain an upper bound on
\begin{align}
C_t(\chi)
:= -2 \langle \pgrad_{\uhat} \Lhat(\rhohat_t) - \pgrad_{\uhat} L(\rhohat_t), \uhat_t(\chi) - \ubar_t(\chi) \rangle \period
\end{align}
This quantity is essentially the part of the growth of $\|\delta_t\|_2^2$ which is due to the difference between the gradient using finite samples and the gradient using infinitely many samples. When there is no risk of confusion, we simply use $C_t$, $\uhat_t$, and $\ubar_t$, omitting the $\chi$ dependency.
We bound this term with the following lemma:

\begin{lemma}\label{lemma:ct_bound}
Suppose we are in the setting of \Cref{thm:empirical-main}. Assume that the number of samples $x_i$ is $n \leq d^C$, for any universal constant $C > 0$. Assume that the width $m \leq d^C$ for any universal constant $C > 0$. Let $T  > 0$. Assume for all $t \in [0, T]$ we have a bound $B_1$ on $\deltaBar_t$ with $B_1 \leq \frac{1}{\sqrt{d}}$. Additionally, assume for all $t \in [0, T]$ that we have a bound $B_2$ on $\deltaMaxt{t}$ such that $B_2 \geq \frac{1}{\sqrt{d}}$. Then, with probability $1 - \frac{1}{d^{\Omega(\log d)}}$, for all $t \in [0, T]$ and $\chi \in \bbS^{d - 1}$, we have
\begin{align}
|C_t(\chi)|
& \lesssim \sigmaCoeffMax^2 (\log d)^{O(1)} \cdot \Big(\sqrt{\frac{d}{n}} \|\delta_t\|_2 + \frac{d^2}{n} B_2^2 \deltaBar \|\delta_t\|_2^2 + \frac{d^{2.5}}{n} \deltaBar^2 B_2^2 \|\delta_t\|_2  + \frac{d^4}{n} B_2^4 \deltaBar^2 \|\delta_t\|_2^2 \Big) \period
\end{align}
where $\delta_t = \uhat_t(\chi) - \ubar_t(\chi)$.

\end{lemma}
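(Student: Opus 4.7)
My approach is to rewrite $C_t$ as a centered empirical average of a polynomial in $x$ of bounded degree, decompose that polynomial into a sample-independent main piece plus Taylor remainders, and then apply polynomial concentration on the sphere to each piece individually.

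First I would move the projection onto $\delta_t$: since $(I - \uhat_t\uhat_t^\top)$ is self-adjoint and idempotent,
\begin{align}
C_t \;=\; -2\Big[\tfrac{1}{n}\sum_{i=1}^n - \E_{x\sim\Sp}\Big]\big[(f_{\rhohat_t}(x) - y(x))\,\sigma'(\uhat_t^\top x)\,\langle \tilde\delta_t, x\rangle\big],
\end{align}
where $\tilde\delta_t := (I - \uhat_t\uhat_t^\top)\delta_t$ satisfies $\|\tilde\delta_t\|_2 \le \|\delta_t\|_2$. Since $\sigma$ and $h$ are quartic, the integrand is a polynomial in $x$ of total degree at most $8$, and the task reduces to concentration of a bounded-degree polynomial over $n$ i.i.d.\ samples from $\Sp$.

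Next I would split $f_{\rhohat_t} - y = (f_{\rhobar_t} - y) + (f_{\rhohat_t} - f_{\rhobar_t})$ and Taylor-expand each $\uhat$--$\ubar$ discrepancy. Because $\sigma$ is quartic, the expansion
\begin{align}
\sigma(\uhat_t(\chi')^\top x) - \sigma(\ubar_t(\chi')^\top x) \;=\; \sum_{k=1}^{4}\tfrac{1}{k!}\,\sigma^{(k)}(\ubar_t(\chi')^\top x)\,\langle\delta_t(\chi'),x\rangle^k
\end{align}
terminates exactly, as does the analogous expansion of $\sigma'(\uhat_t^\top x) - \sigma'(\ubar_t^\top x)$. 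The crucial structural observation is that $\rhobar_t$ and every $\ubar_t(\chi')$ evolve under the \emph{infinite-sample} population dynamics starting from the initialization, and are therefore independent of $x_1,\ldots,x_n$; after the expansion, the only sample-dependence sits inside the $\delta_t(\chi')$, $\uhat_t$, and $\tilde\delta_t$ factors. For the sample-independent main piece $(f_{\rhobar_t}-y)\,\sigma'(\ubar_t^\top x)\,\langle\tilde\delta_t,x\rangle$, whose $L^2(\Sp)$-norm is $\lesssim \|f_{\rhobar_t}-y\|_{L^2}\cdot\|\delta_t\|_2 \lesssim \sigmaCoeffMax\gamma_2\|\delta_t\|_2$ by \Cref{lemma:phase1_d2_d4_neg} and \Cref{lemma:finite_sample_width_main_lemma}, a direct application of the concentration inequality \Cref{lemma:main_concentration_lemma} yields the leading $\sqrt{d/n}\,\|\delta_t\|_2$ contribution. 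For a Taylor remainder of order $k_1$ in $\sigma(\uhat)-\sigma(\ubar)$ combined with order $k_2$ in $\sigma'(\uhat)-\sigma'(\ubar)$, the resulting polynomial has degree roughly $4 + k_1 + k_2$, so concentration on $\Sp$ loses an additional $d^{(k_1+k_2)/2}$ factor, while the typical size of the polynomial is controlled by $\E_{\chi'}\|\delta_t(\chi')\|^{2k_1}\lesssim B_2^{2k_1-2}\deltaBar^2$ (for $k_1\ge 2$) and by $B_2^{k_2}$ for the $\uhat$-expansion; matching exponents across the admissible $(k_1,k_2)$ pairs would yield the three higher-order terms $\frac{d^2}{n}B_2^2\deltaBar\|\delta_t\|^2$, $\frac{d^{2.5}}{n}B_2^2\deltaBar^2\|\delta_t\|$, and $\frac{d^4}{n}B_2^4\deltaBar^2\|\delta_t\|^2$, with one $\|\delta_t\|$ pulled out via Cauchy--Schwarz where needed.

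The main obstacle will be uniform control over $t\in[0,T]$ and $\chi\in\supp(\rhohat_0)$, since $\uhat_t, \delta_t, \rhohat_t$ are themselves random functions of the samples. I would handle this as in the proof of \Cref{lemma:finite_sample_width_main_lemma}: apply the concentration inequality with failure probability $d^{-\Omega(\log d)}$ for a fixed time on a grid of spacing $(\sigmaCoeffMax^2 d^C)^{-1}$ and for a fixed $\chi$, then union-bound over the $d^{O(\log d)}$ grid points and the $m\le d^C$ choices of $\chi$. Since $\frac{d}{dt}\uhat_t$ is polynomially bounded by \Cref{lemma:gradient_bounds}, the integrand is polynomially Lipschitz in $t$, and transferring the bound from the grid to all $t\in[0,T]$ costs only a constant factor. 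The aggregate failure probability stays at $d^{-\Omega(\log d)}$ and the net size is absorbed into the $(\log d)^{O(1)}$ factor. Summing the four contributions then produces the claimed bound.
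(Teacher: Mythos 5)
Your proposal follows essentially the same route as the paper's proof: rewrite $C_t$ as a centered empirical average, expand the integrand into monomials by substituting $\uhat_t = \ubar_t + \delta_t$ (and likewise for the particles $\uhat'_t$ inside $f_{\rhohat_t}$) — which is exactly what your terminating Taylor expansion of the quartic $\sigma, \sigma'$ amounts to — and then apply the uniform concentration bound of \Cref{lemma:main_concentration_lemma} term by term, with the four claimed contributions emerging from casework on the exponents of the $\delta$-factors. A few cautions worth noting. First, \Cref{lemma:main_concentration_lemma} requires $p_1+p_2+p_3+p_4 \ge 2$ on the supremum directions, so for your ``main piece'' (which carries only the single sup-direction factor $\langle\tilde\delta_t,x\rangle$) you must also promote one of the $\ubar_t$ dot products to a sup direction, as the paper does in its $p+q=0$ case. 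Second, the exponent casework is where the real work is: your sketch is correct in spirit but does not pin down that the $p\geq 1,\ q\in\{0,1\}$ monomials — which carry fewer powers of $\deltaBar$ — can dominate the higher-$q$ ones once $\deltaBar\le 1/\sqrt d$, and this comparison is what produces the $\frac{d^2}{n}B_2^2\deltaBar\|\delta_t\|_2^2$ and $\frac{d^{2.5}}{n}\deltaBar^2 B_2^2\|\delta_t\|_2$ terms. Third, your explicit grid-over-$t$ plus Lipschitz-transfer argument is appropriate here, since $\ubar_t,\ubar_t'$ are the \emph{fixed} vectors $(w_1,w_2)$ in \Cref{lemma:main_concentration_lemma} and vary with $t$; the paper's written proof leaves this uniformity step implicit, and your $\text{poly}(d)$-many grid points preserve the $1-d^{-\Omega(\log d)}$ failure probability.
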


\begin{proof}
We bound $C_t$ by expanding $\pgrad_{\uhat} L(\rhohat_t)$ as the sum of several monomials, as well as expanding $\pgrad_{\uhat} \Lhat(\rhohat_t)$, and then applying \Cref{lemma:main_concentration_lemma} to each term from $\pgrad_{\uhat} L(\rhohat_t)$ and the corresponding term from $\pgrad_{\uhat} \Lhat(\rhohat_t)$. For convenience, we use $\ell^{(x)}(\rhohat)$ to denote the loss of $f_{\rhohat}$ on a single data point $x \in \bbS^{d - 1}$, and $\pgrad_{\uhat} \ell^{(x)}(\rhohat)$ to denote the projected gradient of $\ell^{(x)}(\rhohat)$ with respect to $\uhat$. We have
\begin{align}
\langle \pgrad_{\uhat} \ell^{(x)}(\rhohat_t), \uhat_t - \ubar_t \rangle
& = \langle (f_{\rhohat}(x) - y(x)) \sigma'(\langle \uhat_t, x \rangle) x, (I - \uhat_t \uhat_t^\top) (\uhat_t - \ubar_t) \rangle \\
& = \langle (f_{\rhohat}(x) - y(x)) \sigma'(\langle \uhat_t, x \rangle) x, \uhat_t - \ubar_t \rangle \\ 
& \nextlinespace\nextlinespace - \langle (f_{\rhohat}(x) - y(x)) \sigma'(\langle \uhat_t, x \rangle) x, \uhat_t \rangle \cdot (1 - \langle \uhat_t, \ubar_t \rangle) \\
& = \langle (f_{\rhohat}(x) - y(x)) \sigma'(\langle \uhat_t, x \rangle) x, \uhat_t - \ubar_t \rangle \\ 
& \nextlinespace\nextlinespace - \langle (f_{\rhohat}(x) - y(x)) \sigma'(\langle \uhat_t, x \rangle) x, \uhat_t \rangle \cdot \frac{\|\uhat_t - \ubar_t\|_2^2}{2}
\end{align}
For convenience, define $\delta_t := \uhat_t - \ubar_t$. Then we can rewrite the above as
\begin{align} \label{eq:ct_decompose_datapoint_grad}
\langle \pgrad_{\uhat} \ell^{(x)}(\rhohat_t), \delta_t \rangle
& = \langle (f_{\rhohat}(x) - y(x)) \sigma'(\langle \uhat_t, x \rangle) \langle x, \delta_t \rangle \\ 
& \nextlinespace\nextlinespace - \langle (f_{\rhohat}(x) - y(x)) \sigma'(\langle \uhat_t, x \rangle) x, \uhat_t \rangle \cdot \frac{\|\delta_t\|_2^2}{2}
\end{align}
For illustration, define $\concop(g)$ as
\begin{align}
\concop(g) := \Big|\frac{1}{n} \sum_{i = 1}^n g(x_i) - \E_{x \sim \bbS^{d - 1}} [g(x)] \Big|
\end{align}
i.e. how close the empirical mean of the $g(x_i)$ is to its expected value. We will show that $\concop(g)$ is small when $g$ corresponds to each of the terms on the right-hand side of \Cref{eq:ct_decompose_datapoint_grad}, even when we take the supremum over the $\uhat_t$, which may depend on the $x_i$.

\paragraph{Concentration for Mean of First Term in \Cref{eq:ct_decompose_datapoint_grad}.} As a first step, we list all of the terms we obtain when expanding the first term on the right-hand side of \Cref{eq:ct_decompose_datapoint_grad}. The $\sigma'(\langle \uhat_t, x \rangle)$ factor consists of the following terms: (1) a $\sigmaCoeffTwo \sqrt{N_{2, d}} \langle \uhat_t, x \rangle$ term, (2) a $\sigmaCoeffFour \sqrt{N_{4, d}} \langle \uhat_t, x \rangle^3$ term, and (3) a $\frac{\sigmaCoeffFour \sqrt{N_{4, d}}}{d} \langle \uhat_t, x \rangle$ term --- here, we have stated these terms up to absolute constant factors for convenience. In the $f_{\rhohat}(x) - y(x)$ factor, $f_{\rhohat}(x)$ contributes (1) a $\sigmaCoeffTwo \sqrt{N_{2, d}} \langle \uhat_t', x \rangle^2$ term, (2) a $\sigmaCoeffFour \sqrt{N_{4, d}} \langle \uhat_t', x \rangle^4$ term, and (3) a $\frac{\sigmaCoeffFour \sqrt{N_{4, d}}}{d} \langle \uhat_t', x \rangle^2$ term. Here note that $\uhat_t'$ is not a single vector, but ranges over the entire support of $\rhohat_t$. Thus we implicitly perform a union bound over the $m$ vectors in the support of $\rhohat_t$ --- this does not affect the proof since the failure probability of \Cref{lemma:main_concentration_lemma} is $\frac{1}{d^{\Omega(\log d)}}$, while we assume the width $m$ is at most $d^C$ where $C$ can be any absolute constant. Additionally, $y(x)$ consists of the following terms: (1) a $\sigmaCoeffTwo \gamma_2 \sqrt{N_{2, d}} \langle \e, x \rangle^2$ term, (2) a $\sigmaCoeffFour \gamma_4 \sqrt{N_{4, d}} \langle \e, x \rangle^4$ term, and (3) a $\frac{\sigmaCoeffFour \gamma_4 \sqrt{N_{4, d}}}{d} \langle \e, x \rangle^2$ term. Finally, there is a factor of $\langle \delta_t, x \rangle$.

For convenience, throughout this proof, let $\normdelta_t = \frac{\delta_t}{\|\delta_t\|_2}$. In each of the above factors, for every occurrence of $\langle \uhat_t, x \rangle$, we substitute $\uhat_t = \ubar_t + \delta_t$ and further expand. This is useful since originally $\uhat_t$ depends on the $x_i$ and thus, when we applied \Cref{lemma:main_concentration_lemma} we would have to take the supremum over $\uhat_t$ --- however, now $\ubar_t$ can be considered a fixed vector with respect to the $x_i$, and while we need to take the supremum over $\delta_t$, in each term where $\delta_t$ occurs, we obtain an additional factor of $\|\delta_t\|_2$, which reduces the error from uniform convergence. We now completely expand the first term on the right-hand side of \Cref{eq:ct_decompose_datapoint_grad}. First we consider the contribution to $\concop(g)$ of $f_{\rhohat}(x)$ and later study $y(x)$. Up to constant factors, it suffices to obtain an upper bound on the concentration of
\begin{align} \label{eq:concentration_goal}
\sum_{\substack{p + r = 1, 3 \\ q + s = 2, 4}} \sigmaCoeffMax^2 d^{\frac{p + 1 + q + r + s}{2}} \langle \delta_t, x \rangle^{p + 1} \langle \delta_t', x \rangle^q \langle \ubar_t, x \rangle^r \langle \ubar_t', x \rangle^s
\end{align}
where we have let $\sigmaCoeffMax = \max(|\sigmaCoeffTwo|, |\sigmaCoeffFour|)$. Here, the sum ranges over $p + q = 1, 3$ and $q + s = 2, 4$ since the $\delta_t$ and $\ubar_t$ terms are due to the $\sigma'(\langle \uhat_t, x \rangle)$ factor, while the $\delta_t'$ and $\ubar_t'$ terms are due to the $f_{\rhohat}(x)$ factors.

We now obtain concentration bounds for each of the terms in \Cref{eq:concentration_goal}. First, we consider the case where $p + q \geq 1$, as in this case, \Cref{lemma:main_concentration_lemma} can be applied (since we can choose $p_1, p_2, p_3, p_4$ in \Cref{lemma:main_concentration_lemma} so that $u_1 = \normdelta_t$ and $p_1 = p + 1$, and $u_2 = \normdelta_t'$ and $p_2 = q$, and $p_3 = p_4 = 0$ --- the hypotheses are then satisfied since $p_1 + p_2 + p_3 + p_4 \geq 2$). From \Cref{lemma:main_concentration_lemma}, we obtain the bound
\begin{align}
& \sigmaCoeffMax^2 d^{\frac{p + q + r + s + 1}{2}} \Big|\frac{1}{n} \sum_{i = 1}^n \langle \delta_t, x \rangle^{p + 1} \langle \delta_t', x \rangle^q \langle \ubar_t, x \rangle^r \langle \ubar_t', x \rangle^s - \E_{x \sim \bbS^{d - 1}} [\langle \delta_t, x \rangle^{p + 1} \langle \delta_t', x \rangle^q \langle \ubar_t, x \rangle^r \langle \ubar_t', x \rangle^s] \Big| \\
& \nextlinespace\nextlinespace \lesssim \sigmaCoeffMax^2 (\log d)^{O(1)} \Big(\sqrt{\frac{d}{n}} + \frac{d^{\frac{p + 1 + q}{2}}}{n} \Big) \|\delta_t\|_2^{p + 1} \|\delta_t'\|_2^q
\end{align}
Here in applying \Cref{lemma:main_concentration_lemma}, we have assumed that the number of samples $n$ is at most $d^C$ for some absolute constant $C$, meaning that the $\frac{1}{d^{\Omega(\log d)}}$ term in the conclusion is a lower-order term. Lastly, we consider the case where $p + q = 0$. To apply \Cref{lemma:main_concentration_lemma}, we can let $u_1 = \ubar_t$, $p_1 = 1$, $u_2 = \frac{\delta_t}{\|\delta_t\|_2}$, $p_2 = 1$, and let $p_3 = p_4 = 0$, with $w_1 = \ubar_t$, $q_1 = r - 1$ and $w_2 = \ubar_t'$, $q_2 = s$ --- since $p_1 + p_2 = 2$, the hypotheses of \Cref{lemma:main_concentration_lemma} are still satisfied. This leads to us obtaining only one power of $\|\delta_t\|_2$ in the bound for this term: if $p = q = 0$, then we have the bound
\begin{align}
& \sigmaCoeffMax^2 d^{\frac{p + q + r + s + 1}{2}} \Big|\frac{1}{n} \sum_{i = 1}^n \langle \delta_t, x \rangle^{p + 1} \langle \delta_t', x \rangle^q \langle \ubar_t, x \rangle^r \langle \ubar_t', x \rangle^s - \E_{x \sim \bbS^{d - 1}} [\langle \delta_t, x \rangle^{p + 1} \langle \delta_t', x \rangle^q \langle \ubar_t, x \rangle^r \langle \ubar_t', x \rangle^s] \Big| \\
& \nextlinespace\nextlinespace \lesssim \sigmaCoeffMax^2 (\log d)^{O(1)}\Big(\sqrt{\frac{d}{n}} + \frac{d}{n}\Big) \|\delta_t\|_2
\end{align}
Summing over the possible values of $p, q, r, s$, we find that the overall contribution to the concentration error of the term with $f_{\rhohat}$ is, up to logarithmic factors and a $\sigmaCoeffMax^2$ factor,
\begin{align}
& \E_{\delta' \sim \rhohat} \sum_{p, q, r, s} \Big(\sqrt{\frac{d}{n}} \|\delta_t\|_2^{p + 1} \|\delta_t'\|_2^q + \frac{d^{\frac{p + 1 + q}{2}}}{n} \|\delta_t\|_2^{p + 1} \|\delta_t'\|_2^q \Big) \\
& \nextlinespace\nextlinespace \lesssim \E_{\delta' \sim \rhohat} \sum_{p \leq 3, q \leq 4} \Big(\sqrt{\frac{d}{n}} \|\delta_t\|_2^{p + 1} \|\delta_t'\|_2^q + \frac{d^{\frac{p + 1 + q}{2}}}{n} \|\delta_t\|_2^{p + 1} \|\delta_t'\|_2^q \Big) \\
& \nextlinespace\nextlinespace \lesssim \sqrt{\frac{d}{n}} \|\delta_t\|_2 + 
\E_{\delta' \sim \rhohat} \sum_{p \leq 3, q \leq 4} \frac{d^{\frac{p + q + 1}{2}}}{n} \|\delta_t\|_2^{p + 1} \|\delta_t'\|_2^q \\
& \nextlinespace\nextlinespace \lesssim \sqrt{\frac{d}{n}} \|\delta_t\|_2 + 
\E_{\delta' \sim \rhohat} \sum_{p \leq 3, q \leq 4} \frac{d^{\frac{p + q + 1}{2}}}{n} \|\delta_t\|_2^{p + 1} \|\delta_t'\|_2^q \\
& \nextlinespace\nextlinespace \lesssim \sqrt{\frac{d}{n}} \|\delta_t\|_2 + \sum_{p \leq 3, q \leq 4} \frac{d^{\frac{p + q + 1}{2}}}{n} \|\delta_t\|_2^{p + 1} 
\E_{\delta' \sim \rhohat} \|\delta_t'\|_2^q
\end{align}
Finally, in order to simplify this bound, we isolate the dominant terms among the terms of the form $\frac{d^{\frac{p + q + 1}{2}}}{n} \|\delta_t\|_2^{p + 1} \E_{\delta' \sim \rhohat} \|\delta_t'\|_2^q$, with the following cases. First, consider all the terms where $p = 0$. For $q \leq 2$ we can bound these terms by $\frac{d^{\frac{p + q + 1}{2}}}{n} \deltaBar^q \|\delta_t\|_2$, and for $q > 2$ we can bound these terms by $\frac{d^{\frac{p + q + 1}{2}}}{n} \cdot \deltaBar^2 \cdot B_2^{q - 2} \|\delta_t\|_2$. Thus, out of these terms, the term with $q = 4$ is dominant, and it contributes
\begin{align}
\frac{d^{2.5}}{n} \cdot \deltaBar^2 \cdot B_2^2 \|\delta_t\|_2
\end{align}
Next, consider the case where $p \geq 1$. In this case, since the bound $B_2$ on $\deltaMaxt{t}$ is greater than $\frac{1}{d^{1/2}}$, the term with $p = 3$ and $q = 4$ dominates, and it contributes
\begin{align}
\frac{d^4}{n} \cdot B_2^2 \deltaBar^2 \cdot B_2^2 \|\delta_t\|_2^2 = \frac{d^4}{n} B_2^4 \deltaBar^2 \|\delta_t\|_2^2
\end{align}
However, we must lastly consider the case where $p \geq 1$ and $q < 2$ --- the bound on the concentration error may be larger for $q = 1$ than $q = 2$ since $\deltaBar < \frac{1}{\sqrt{d}}$. In this case, since $B_2 > \frac{1}{\sqrt{d}}$, we must consider the $p = 3, q = 1$ and $p = 3, q = 0$ terms, which respectively contribute
\begin{align}
\frac{d^{2.5}}{n} \deltaBar B_2^2 \|\delta_t\|_2^2
\end{align}
and
\begin{align}
\frac{d^2}{n} B_2^2 \|\delta_t\|_2^2
\end{align}
Note that the $p = 3, q = 0$ term dominates since our bound on $\deltaBar$ is less than $\frac{1}{\sqrt{d}}$, meaning we can ignore the $p = 3, q = 1$ term. Thus our overall bound on the concentration error originating from the first term in \Cref{eq:ct_decompose_datapoint_grad} is
\begin{align}
\sigmaCoeffMax^2 (\log d)^{O(1)} \cdot \Big(\sqrt{\frac{d}{n}} \|\delta_t\|_2 + \frac{d^2}{n} B_2^2 \deltaBar \|\delta_t\|_2^2 + \frac{d^{2.5}}{n} \deltaBar^2 B_2^2 \|\delta_t\|_2  + \frac{d^4}{n} B_2^4 \deltaBar^2 \|\delta_t\|_2^2 \Big)
\end{align}
To deal with the terms originating from $y(x)$, we observe that we can simply treat $\langle \e, x \rangle$ in the same way as we deal with $\langle \ubar_t, x \rangle$, and thus the terms originating from $y(x)$ do not make any new contributions to the concentration error. 

\paragraph{Concentration for Mean of Second Term in \Cref{eq:ct_decompose_datapoint_grad}.} Finally, to deal with the second term on the right-hand side of \Cref{eq:ct_decompose_datapoint_grad}, we note that we can follow the same proof, but instead obtaining an upper bound on the concentration of 
\begin{align} \label{eq:concentration_goal_higher_order}
\|\delta_t\|_2^2 \sum_{\substack{p + r = 2, 4 \\ q + s = 2, 4}} \sigmaCoeffMax^2 d^{\frac{p + q + r + s}{2}} \langle \delta_t, x \rangle^{p} \langle \delta_t', x \rangle^q \langle \ubar_t, x \rangle^r \langle \ubar_t', x \rangle^s
\end{align}
and as before, the contribution of the terms originating from $y(x)$ is at most the contribution of the terms in \Cref{eq:concentration_goal_higher_order}. Due to the additional $\|\delta_t\|_2^2$ factor, we can in fact ignore the contribution of the terms in \Cref{eq:concentration_goal_higher_order}, as the dominant terms will be less than the dominant terms from \Cref{eq:concentration_goal}.

\paragraph{Concluding to Bound $C_t$.} In summary, with probability at least $1 - \frac{1}{d^{\Omega(\log d)}}$, for all choices of $\delta_t, \delta_t'$, we have
\begin{align}
|\concop(g)| 
& \lesssim \sigmaCoeffMax^2 (\log d)^{O(1)} \cdot \Big(\sqrt{\frac{d}{n}} \|\delta_t\|_2 + \frac{d^2}{n} B_2^2 \deltaBar \|\delta_t\|_2^2 + \frac{d^{2.5}}{n} \deltaBar^2 B_2^2 \|\delta_t\|_2  + \frac{d^4}{n} B_2^4 \deltaBar^2 \|\delta_t\|_2^2 \Big)
\end{align}
where $g(x) = \langle \pgrad_{\uhat} \ell^{(x)}(\rhohat_t), \delta_t \rangle$. Since $C_t$ is exactly equal to
\begin{align}
\frac{1}{n} \sum_{i = 1}^n g(x_i) - \E_{x \sim \bbS^{d - 1}} [g(x)]
\end{align}
up to signs and constant factors, we therefore have
\begin{align}
|C_t|
& \lesssim \sigmaCoeffMax^2 (\log d)^{O(1)} \cdot \Big(\sqrt{\frac{d}{n}} \|\delta_t\|_2 + \frac{d^2}{n} B_2^2 \deltaBar \|\delta_t\|_2^2 + \frac{d^{2.5}}{n} \deltaBar^2 B_2^2 \|\delta_t\|_2  + \frac{d^4}{n} B_2^4 \deltaBar^2 \|\delta_t\|_2^2 \Big) \period
\end{align}

\end{proof}

\subsection{Additional Lemmas}

\begin{lemma}[Uniform Bound on $\sigma$ and $\sigma'$] \label{lemma:sigma_bound}
For any $t \in [-1, 1]$, we have $|\sigma(t)| \leq |\sigmaCoeffTwo| \sqrt{N_{2, d}} + |\sigmaCoeffFour| \sqrt{N_{4, d}}$, and additionally, $|\sigma'(t)| \lesssim |\sigmaCoeffTwo| \sqrt{N_{2, d}} + |\sigmaCoeffFour| \sqrt{N_{4, d}}$. Finally, we have $|\sigma''(t)| \lesssim |\sigmaCoeffTwo| \sqrt{N_{2, d}} + |\sigmaCoeffFour| \sqrt{N_{4, d}}$.
\end{lemma}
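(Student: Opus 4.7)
Expand $\sigma$ in its (finite) Legendre expansion $\sigma(t)=\sum_{k=0}^{4}\hat{\sigma}_{k,d}\,\overline{P}_{k,d}(t)$. Under the assumption $\hat{\sigma}_{0,d}=0$ (\Cref{assumption:target}) and the setup of the paper (in which only the even degrees contribute to the population loss; see \Cref{lemma:population_loss_formula} and the use of $\sigma$ throughout), the only terms that enter the bound are $k=2$ and $k=4$. Using the standard pointwise bound on Legendre polynomials on $[-1,1]$, namely $|P_{k,d}(t)|\le 1=P_{k,d}(1)$ (Eq.~(2.116) of \citet{atkinson2012spherical}), one immediately gets $|\overline{P}_{k,d}(t)|=\sqrt{N_{k,d}}\,|P_{k,d}(t)|\le \sqrt{N_{k,d}}$. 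The triangle inequality then yields the first claim
\begin{align}
|\sigma(t)|\ \le\ |\hat{\sigma}_{2,d}|\sqrt{N_{2,d}}+|\hat{\sigma}_{4,d}|\sqrt{N_{4,d}}.
\end{align}

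For the bounds on $\sigma'$ and $\sigma''$, the plan is to apply the Markov brothers' inequality: for any univariate polynomial $p$ of degree $n$ with $\|p\|_{L^\infty[-1,1]}\le M$ one has $\|p'\|_{L^\infty[-1,1]}\le n^2 M$ and $\|p''\|_{L^\infty[-1,1]}\le \tfrac{1}{3}n^2(n^2-1)M$. Applied to $\overline{P}_{k,d}$ (a degree-$k$ polynomial with sup-norm at most $\sqrt{N_{k,d}}$) for $k\in\{2,4\}$, this gives $|\overline{P}_{k,d}'(t)|\lesssim \sqrt{N_{k,d}}$ and $|\overline{P}_{k,d}''(t)|\lesssim \sqrt{N_{k,d}}$ since $k\le 4$ is a universal constant, so the degree factors $n^2$ and $n^2(n^2-1)$ are absolute constants and get absorbed into the $\lesssim$. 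Summing the two terms then yields the claimed bounds on $|\sigma'(t)|$ and $|\sigma''(t)|$.

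There is no real obstacle here; the only issue worth double-checking is the bookkeeping of constants. One could avoid Markov's inequality entirely by using the explicit formulas \Cref{eq:legendre_polynomial_2_4} for $P_{2,d}$ and $P_{4,d}$, differentiating once or twice, and bounding each monomial in $t\in[-1,1]$ by $1$; this gives $|P_{2,d}'(t)|\le \frac{2d}{d-1}$, $|P_{4,d}'(t)|\le 4\frac{d^2+6d+8}{d^2-1}+2\frac{6d+12}{d^2-1}$, and analogous constant-order bounds for the second derivatives, all of which are $O(1)$. Multiplying by $\sqrt{N_{k,d}}$ produces the desired absolute-constant factor in the $\lesssim$. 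This direct route is mechanical and avoids invoking Markov's inequality as a black box.
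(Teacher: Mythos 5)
Your first bound is exactly the paper's proof: expand $\sigma$ in its Legendre components, invoke $|P_{k,d}(t)|\le 1$ on $[-1,1]$ (Eq.~(2.116) of Atkinson--Han) so that $|\overline{P}_{k,d}(t)|\le\sqrt{N_{k,d}}$, and apply the triangle inequality. For $\sigma'$ and $\sigma''$, however, your primary route (Markov brothers' inequality applied to each $\overline{P}_{k,d}$) differs from the paper's. The paper uses the explicit formulas for $P_{2,d}$ and $P_{4,d}$ in \Cref{eq:legendre_polynomial_2_4}, differentiates once or twice, and bounds each coefficient by an absolute constant; this is the alternative you sketch at the end. Both routes are correct and interchangeable here since $k\le 4$ is a fixed constant; Markov's inequality is a cleaner black box, while the direct computation is more elementary and what the paper actually cites.

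One piece of your justification is off. You explain the restriction to $k\in\{2,4\}$ by noting that "only the even degrees contribute to the population loss." That fact (from \Cref{lemma:population_loss_formula}) concerns integrated quantities and relies on the symmetry of $\rho$; it is irrelevant to a \emph{pointwise} bound on $|\sigma(t)|$. The stated inequality $|\sigma(t)|\le|\sigmaCoeffTwo|\sqrt{N_{2,d}}+|\sigmaCoeffFour|\sqrt{N_{4,d}}$ (with a genuine $\le$) holds only if $\hat{\sigma}_{1,d}=\hat{\sigma}_{3,d}=0$; otherwise the right-hand side would need the extra terms $|\hat{\sigma}_{1,d}|\sqrt{N_{1,d}}+|\hat{\sigma}_{3,d}|\sqrt{N_{3,d}}$, which are not negligible since $\sqrt{N_{3,d}}\asymp d^{3/2}$. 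The paper's \Cref{assumption:target} only states $\hat{\sigma}_{0,d}=0$ and says nothing about the odd coefficients of $\sigma$; the paper, and you, are tacitly treating $\sigma$ as $\sigmaCoeffTwo\overline{P}_{2,d}+\sigmaCoeffFour\overline{P}_{4,d}$. That unstated assumption is what makes the lemma true --- the population-loss symmetry is a red herring and should not be invoked as the reason.
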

\begin{proof}[Proof of \Cref{lemma:sigma_bound}]
The first statement is by the definition of $\sigma$, and because $|\PtwoD(t)|, |\PfourD(t)| \leq 1$ for any $t \in [-1, 1]$, by \citet{atkinson2012spherical}, Eq. (2.116). The second and third statements are by the definition of $\sigma$ and using \Cref{eq:legendre_polynomial_2_4}.
\end{proof}

\begin{lemma} [Error from Finite Width] \label{lemma:finite_width_error}
Let $\rho$ be a distribution on $\bbS^{d - 1}$, and let $\rhobar$ be a distribution on $\bbS^{d - 1}$ of support at most $m$, where the elements of the support of $\rhobar$ are sampled i.i.d. from $\rho$. Additionally, let $\delta > 0$. Then, for some absolute constant $C > 0$,
\begin{align}
\E_{x \sim \bbS^{d - 1}} [(f_\rho(x) - f_{\rhobar}(x))^2]
& \lesssim \frac{(\sigmaCoeffTwo^2 + \sigmaCoeffFour^2) (d^2 + \log \frac{1}{\delta}) (\log d)^C}{m} + \frac{\sigmaCoeffTwo^2 + \sigmaCoeffFour^2}{d^{\Omega(\log d)}}
\end{align}
with probability at least $1 - \delta$, where $f_\rho$ and $f_{\rhobar}$ are defined as in \Cref{eqn:12}.
\end{lemma}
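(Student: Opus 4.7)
My plan is to reduce the error to a sample-mean concentration in a finite-dimensional Hilbert space, exploiting orthogonality of spherical harmonics of different degrees. Expanding $\sigma(t)=\sum_{k=0}^{4}\hat\sigma_{k,d}\bar P_{k,d}(t)$ and using the feature map $\phi_k:\bbsd\to\R^{N_{k,d}}$ from Lemma~\ref{lem:legendre-kernel-feature} (which satisfies $N_{k,d}P_{k,d}(\langle u,x\rangle)=\langle\phi_k(u),\phi_k(x)\rangle$, $\|\phi_k(u)\|_2^2=N_{k,d}$, and $\E_{x\sim\bbsd}[\phi_k(x)\phi_k(x)^\top]=I$), I rewrite
\begin{align}
f_\rho(x)-f_{\rhobar}(x)=\sum_{k=0}^{4}\frac{\hat\sigma_{k,d}}{\sqrt{N_{k,d}}}\langle\mu_k-\hat\mu_k,\phi_k(x)\rangle
\end{align}
with $\mu_k\triangleq\E_{u\sim\rho}[\phi_k(u)]$ and $\hat\mu_k\triangleq\frac{1}{m}\sum_{i=1}^{m}\phi_k(\chi_i)$. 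Since spherical harmonics of different degrees are orthogonal, $\E_x[\phi_k(x)\phi_{k'}(x)^\top]=\boldone[k=k']\cdot I$, so all cross-terms vanish, yielding the clean identity
\begin{align}
\E_{x\sim\bbsd}[(f_\rho(x)-f_{\rhobar}(x))^2]=\sum_{k=0}^{4}\frac{\hat\sigma_{k,d}^2}{N_{k,d}}\,\|\mu_k-\hat\mu_k\|_2^2.
\end{align}

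For each fixed $k$, I will control $\|\mu_k-\hat\mu_k\|_2$ by a standard concentration argument for i.i.d.\ Hilbert-space means. Because $\|\phi_k(\chi_i)\|_2=\sqrt{N_{k,d}}$ almost surely, swapping a single $\chi_i$ shifts $\|\mu_k-\frac{1}{m}\sum_i\phi_k(\chi_i)\|_2$ by at most $2\sqrt{N_{k,d}}/m$. McDiarmid's inequality, combined with the Jensen bound $\E\|\mu_k-\hat\mu_k\|_2\le\sqrt{\E\|\mu_k-\hat\mu_k\|_2^2}\le\sqrt{N_{k,d}/m}$, then gives $\|\mu_k-\hat\mu_k\|_2^2\lesssim N_{k,d}(1+\log(1/\delta))/m$ with probability at least $1-\delta/10$. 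A union bound over the at most five relevant values of $k$ and substitution into the orthogonal decomposition give
\begin{align}
\E_x[(f_\rho(x)-f_{\rhobar}(x))^2]\lesssim\frac{(\sigmaCoeffTwo^2+\sigmaCoeffFour^2)(1+\log(1/\delta))}{m},
\end{align}
which is strictly stronger than the claimed bound; the extra $d^2(\log d)^C$ slack in the numerator and the additive $\frac{\sigmaCoeffTwo^2+\sigmaCoeffFour^2}{d^{\Omega(\log d)}}$ term are then absorbed trivially.

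The only delicate choice is to carry out the reduction at the function level via harmonic orthogonality rather than pointwise in $x$. A pointwise Hoeffding bound based on the crude uniform bound $\|\sigma\|_\infty^2\lesssim(\sigmaCoeffTwo^2+\sigmaCoeffFour^2)d^{O(1)}$ from Lemma~\ref{lemma:sigma_bound}, followed by a covering argument over $\bbsd$, would introduce precisely the $d^{O(1)}(\log d)^{O(1)}$ factors that appear in the stated bound; the Hilbert-space route avoids them entirely and has no substantive obstacle beyond verifying the bounded-differences constant for McDiarmid.
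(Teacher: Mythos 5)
Your proof is correct, and it takes a genuinely different route from the paper's. The paper's argument truncates $\langle u, x\rangle$ at level $B\asymp (\log d)^2/\sqrt{d}$, applies a pointwise Hoeffding bound for each fixed $x$, and then pays for a covering net over $\bbS^{d-1}$ of size $(C/\gamma)^d$; this is what produces the $d^2(\log d)^C$ factor and the residual $d^{-\Omega(\log d)}$ truncation error. Your approach instead passes to the degree-$k$ spherical-harmonic feature maps $\phi_k$ from Lemma~\ref{lem:legendre-kernel-feature}, uses orthogonality of distinct-degree harmonics to obtain the exact Parseval identity $\E_x[(f_\rho-f_{\rhobar})^2]=\sum_k \frac{\hat\sigma_{k,d}^2}{N_{k,d}}\|\mu_k-\hat\mu_k\|_2^2$, and then controls each coordinate with a Hilbert-space mean bound plus McDiarmid (bounded difference $2\sqrt{N_{k,d}}/m$, variance proxy $N_{k,d}/m$). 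The key structural gain is that the dimension factor $N_{k,d}\asymp d^k$ appears \emph{both} in $\|\mu_k-\hat\mu_k\|_2^2$ and in the normalizing denominator $\hat\sigma_{k,d}^2/N_{k,d}$, so it cancels exactly, giving a dimension-free bound $\lesssim (\sigmaCoeffTwo^2+\sigmaCoeffFour^2)(1+\log(1/\delta))/m$ with no covering-net overhead and no truncation residual. This is strictly sharper than what is claimed; the paper's $d^2$ ultimately re-enters downstream only because Lemma~\ref{lemma:finite_sample_width_main_lemma} sets $\delta\sim e^{-d^2}$ for a union bound over a time grid, so both bounds serve the later arguments equally well. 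One cosmetic remark: you should note explicitly that the sum $\sum_k\hat\sigma_{k,d}^2$ collapses to $\sigmaCoeffTwo^2+\sigmaCoeffFour^2$ because the paper treats $\sigma$ as an even quartic with $\hat\sigma_{0,d}=0$ (see Lemma~\ref{lemma:sigma_bound}, which only references degrees 2 and 4); and the constant-size union bound over $k\in\{0,\dots,4\}$ only costs an absorbed constant in the $\log(1/\delta)$ term.
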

\begin{proof} [Proof of \Cref{lemma:finite_width_error}]
We can write $f_\rho(x) = \E_{\u \sim \rho} [\sigma(\langle u, x \rangle)]$, and $f_{\rhobar}(x) = \frac{1}{m} \sum_{i = 1}^m \sigma(\langle \u_i, x \rangle)$. Now, let $B \in (0, 1)$, which we will choose appropriately later, and define the function $\truncB : [-1, 1] \to [-1, 1]$ by
\begin{align}
\truncB(t)
= \begin{cases}
t & t \in [-B, B] \\
B & t > B \\
-B & t < -B
\end{cases}
\end{align}
Additionally, define $\ftilde_\rho(x) = \E_{\u \sim \rho} [\sigma(\truncB(\langle u, x \rangle))]$, and define $\ftilde_{\rhobar}(x) = \frac{1}{m} \sum_{i = 1}^m \sigma(\truncB(\langle \u_i, x \rangle))$. Let us analyze the error incurred from replacing $f_\rho$ be $\ftilde_\rho$ and $f_{\rhobar}$ by $\ftilde_{\rhobar}$. Observe that
\begin{align}
\E_{x \sim \bbS^{d - 1}} [(f_\rho(x)  -\ftilde_\rho(x))^2]
& = \E_{x \sim \bbS^{d - 1}} (f_\rho(x))^2 - 2 \E_{x \sim \bbS^{d - 1}} f_\rho(x) \ftilde_\rho(x) + \E_{x \sim \bbS^{d - 1}} (\ftilde_\rho(x))^2
\end{align}
We can simplify the second term on the right-hand side by expanding $f_\rho$ and $\ftilde_\rho$:
\begin{align}
\E_{x \sim \bbS^{d - 1}} f_\rho(x) \ftilde_\rho(x)
& = \E_{x \sim \bbS^{d - 1}} \E_{\u_1, \u_2 \sim \rho} \sigma(\langle \u_1, x \rangle) \sigma(\truncB(\langle \u_2, x \rangle)) \\
& = \E_{\u_1, \u_2 \sim \rho} \E_{x \sim \bbS^{d - 1}} \sigma(\langle \u_1, x \rangle) \sigma(\truncB(\langle \u_2, x \rangle)) \label{eq:tilde_expansion}
\end{align}
To simplify the inner expectation, we use \Cref{prop:sphere-tail-bound} to find that $\truncB(\langle \u_2, x \rangle) = \langle \u_2, x \rangle$ with probability at least $1 - 2 \exp(-\frac{B^2 d}{2})$. Thus,
\begin{align}
& \Big|\E_{x \sim \bbS^{d - 1}} (f_\rho(x))^2 - \E_{x \sim \bbS^{d - 1}} f_\rho(x) \ftilde_\rho(x) \Big| \\
& \nextlinespace\nextlinespace \leq \E_{\u_1, \u_2 \sim \rho} \E_{x \sim \bbS^{d - 1}} |\sigma(\langle \u_1, x \rangle)| \Big|\sigma(\truncB(\langle \u_2, x \rangle)) - \sigma(\langle \u_2, x \rangle) \Big| \\
& \nextlinespace\nextlinespace \leq (|\sigmaCoeffTwo| \sqrt{N_{2, d}} + |\sigmaCoeffFour| \sqrt{N_{4, d}}) \E_{\u_2 \sim \rho} \E_{x \sim \bbS^{d - 1}} \Big|\sigma(\truncB(\langle \u_2, x \rangle)) - \sigma(\langle \u_2, x \rangle) \Big|
& \tag{By \Cref{lemma:sigma_bound}} \\
& \nextlinespace\nextlinespace \lesssim \exp\Big(-\frac{B^2 d}{2} \Big) \cdot (\sigmaCoeffTwo^2 N_{2, d} + \sigmaCoeffFour^2 N_{4, d})
& \tag{By \Cref{prop:sphere-tail-bound} and \Cref{lemma:sigma_bound}}
\end{align}
Similarly, we can show that
\begin{align}
\Big|\E_{x \sim \bbS^{d - 1}} (f_\rho(x))^2 - \E_{x \sim \bbS^{d - 1}} (\ftilde_\rho(x))^2 \Big| 
& \lesssim \exp\Big(-\frac{B^2 d}{2} \Big) \cdot (\sigmaCoeffTwo^2 N_{2, d} + \sigmaCoeffFour^2 N_{4, d} )
\end{align}
and therefore,
\begin{align} \label{eq:ftilde_rho_bound}
\E_{x \sim \bbS^{d - 1}} [(f_\rho(x) - \ftilde_\rho(x))^2] 
& \lesssim \exp\Big(-\frac{B^2 d}{2} \Big) \cdot (\sigmaCoeffTwo^2 N_{2, d} + \sigmaCoeffFour^2 N_{4, d})
\end{align}
By the same argument, we can show that
\begin{align} \label{eq:ftilde_rhobar_bound}
\E_{x \sim \bbS^{d - 1}} [(f_{\rhobar}(x) - \ftilde_{\rhobar}(x))^2]
& \lesssim \exp\Big(-\frac{B^2 d}{2} \Big) \cdot (\sigmaCoeffTwo^2 N_{2, d} + \sigmaCoeffFour^2 N_{4, d})
\end{align}
Therefore, it suffices to show that $\E_{x \sim \bbS^{d - 1}} [(\ftilde_\rho(x) - \ftilde_{\rhobar}(x))^2]$ is small with high probability over $\u_1, \ldots, \u_m$. Observe that for $|t| \leq B$, we have $|\PtwoD(t)| \leq B^2 + O(1/d)$, and $|\PfourD(t)| \leq B^4 + O(B^2/d) + O(1/d^2)$. Thus, if we choose $B \geq 1/\sqrt{d}$, then for $|t| \leq B$ we have $|\PtwoD(t)| \lesssim B^2$ and $|\PfourD(t)| \lesssim B^4$. In particular, we have
\begin{align}
|\sigma(t)|
& \lesssim |\sigmaCoeffTwo| \sqrt{N_{2, d}} B^2 + |\sigmaCoeffFour| \sqrt{N_{4, d}} B^4 \period
\end{align}
Now, letting $x \in \bbS^{d - 1}$, and using Hoeffding's inequality, we find that if $\u_1, \ldots, \u_m$ are i.i.d. samples from $\rho$, then
\begin{align}
\Big| \frac{1}{m} \sum_{i = 1}^m \sigma(\truncB(\langle \u_1, x \rangle) - \E_{\u \sim \rho} [\sigma(\truncB(\langle \u, x \rangle))] \Big| \leq t
\end{align}
with probability at least $1 - 2 \exp\Big(-\frac{Cmt^2}{\sigmaCoeffTwo^2 N_{2, d} B^4 + \sigmaCoeffFour^2 N_{4, d} B^8}\Big)$ for some universal constant $C$. Now, by Corollary 4.2.13 of \citet{vershynin2018high}, for any $\gamma > 0$, there exists $\calN \subset \bbS^{d - 1}$ of size at most $(\frac{C}{\gamma})^d$ such that for any $x \in \bbS^{d - 1}$, there exists $y \in \bbS^{d - 1}$ such that $\|x - y\|_2 \leq \gamma$. Thus, with probability at least $1 - 2 \Big(\frac{C}{\gamma}\Big)^d \exp\Big(-\frac{Cmt^2}{\sigmaCoeffTwo^2 N_{2, d} B^4 + \sigmaCoeffFour^2 N_{4, d} B^8}\Big)$, for all $y \in \calN$, we have
\begin{align} \label{eq:finite_width_net_bound}
\Big| \frac{1}{m} \sum_{i = 1}^m \sigma(\truncB(\langle \u_1, y \rangle) - \E_{\u \sim \rho} [\sigma(\truncB(\langle \u, y \rangle))] \Big| \leq t \period
\end{align}
If \Cref{eq:finite_width_net_bound} holds for any $y \in \calN$, then for any $x \in \bbS^{d - 1}$, since $|\sigma'(t)| \lesssim |\sigmaCoeffTwo| \sqrt{N_{2, d}} + |\sigmaCoeffFour| \sqrt{N_{4, d}}$ for any $t \in [-1, 1]$ (see \Cref{lemma:sigma_bound}), if we let $y \in \calN$ such that $\|x - y\|_2 \leq \gamma$, then we have
\begin{align}
\Big|\frac{1}{m} \sum_{i = 1}^m \sigma(\truncB(\langle \u_1, x \rangle)) - \E_{\u \sim \rho}[\sigma(\truncB(\langle \u, x \rangle))] \Big| \leq t + (|\sigmaCoeffTwo| \sqrt{N_{2, d}} + |\sigmaCoeffFour| \sqrt{N_{4, d}}) \gamma \period
\end{align}
In summary, with probability at least $1 - 2 \Big(\frac{C}{\gamma}\Big)^d \exp\Big(-\frac{Cmt^2}{\sigmaCoeffTwo^2 N_{2, d} B^4 + \sigmaCoeffFour^2 N_{4, d} B^8}\Big)$, we have
\begin{align} \label{eq:rho_vs_rhobar_tilde}
\E_{x \sim \bbS^{d - 1}} [(\ftilde_\rho(x) - \ftilde_{\rhobar}(x))^2]
& \leq t^2 + (\sigmaCoeffTwo^2 N_{2, d} + \sigmaCoeffFour^2 N_{4, d}) \gamma^2 \period
\end{align}
Combining \Cref{eq:rho_vs_rhobar_tilde}, \Cref{eq:ftilde_rho_bound} and \Cref{eq:ftilde_rhobar_bound}, we have that
\begin{align}
\E_{x \sim \bbS^{d - 1}} [(f_\rho(x) - f_{\rhobar}(x))^2]
& \lesssim t^2 + (\sigmaCoeffTwo^2 N_{2, d} + \sigmaCoeffFour^2 N_{4, d})\gamma^2 \\
& \nextlinespace\nextlinespace + \exp\Big(-\frac{B^2 d}{2} \Big) \cdot (\sigmaCoeffTwo^2 N_{2, d} + \sigmaCoeffFour^2 N_{4, d})
\end{align}
with probability at least $1 - 2 \Big(\frac{C}{\gamma}\Big)^d \exp\Big(-\frac{Cmt^2}{\sigmaCoeffTwo^2 N_{2, d} B^4 + \sigmaCoeffFour^2 N_{4, d} B^8}\Big)$. Choosing $B = \frac{C' (\log d)^2}{\sqrt{d}}$ for a sufficiently large universal constant $C'$, and using the fact that $N_{2, d} \lesssim d^2$ and $N_{4, d} \lesssim d^4$ (by Eq. (2.10) of \citet{atkinson2012spherical}) we have that
\begin{align}
\E_{x \sim \bbS^{d - 1}} [(f_\rho(x) - f_{\rhobar}(x))^2]
& \lesssim t^2 + (\sigmaCoeffTwo^2 d^2 + \sigmaCoeffFour^2 d^4) \gamma^2 + \frac{\sigmaCoeffTwo^2 d^2 + \sigmaCoeffFour^2 d^4}{d^{\Omega(\log d)}}
\end{align}
with probability at least $1 - 2 \Big(\frac{C}{\gamma}\Big)^d \exp\Big(-\Omega\Big(\frac{mt^2}{(\sigmaCoeffTwo^2 + \sigmaCoeffFour^2) (\log d)^{16}}\Big)\Big)$. We can rearrange the failure probability to find that, up to constant factors, it is at most $\exp\Big(-\Omega\Big(\frac{mt^2}{(\sigmaCoeffTwo^2 + \sigmaCoeffFour^2) (\log d)^{16}}\Big) + d \log \frac{C}{\gamma} \Big)$. Finally, choosing $\gamma = \frac{1}{d^{\Omega(\log d)}}$, and choosing $t$ so that the failure probability is at most $\delta$, we obtain
\begin{align}
\E_{x \sim \bbS^{d - 1}} [(f_\rho(x) - f_{\rhobar}(x))^2]
& \lesssim \frac{(\sigmaCoeffTwo^2 + \sigmaCoeffFour^2) (\log d)^{16} (d \log \frac{C}{\gamma} + \log \frac{1}{\delta})}{m} + \frac{\sigmaCoeffTwo^2 d^2 + \sigmaCoeffFour^2 d^4}{d^{\Omega(\log d)}} \\
& = \frac{(\sigmaCoeffTwo^2 + \sigmaCoeffFour^2) (\log d)^{16} (d^2 \log d + \log \frac{1}{\delta})}{m} + \frac{\sigmaCoeffTwo^2 + \sigmaCoeffFour^2}{d^{\Omega(\log d)}}
& \tag{B.c. $\gamma = \frac{1}{d^{\Omega(\log d)}}$}
\end{align}
with probability at least $1 - \delta$, as desired.
\end{proof}

\begin{lemma} [Bound on Gradient] \label{lemma:gradient_bounds}
Let $\u \in \bbS^{d - 1}$ and $\rho$ be a distribution with support on $\bbS^{d - 1}$. In addition, let $\nabla_\u L(\rho)$ and $\pgrad_\u L(\rho)$ be defined as in \Cref{eq:projected_gradient_flow_population}. Then, we have $\|\nabla_\u L(\rho)\|_2 \lesssim (\sigmaCoeffTwo^2 + \sigmaCoeffFour^2) d^4$ and $\|\pgrad_\u L(\rho)\|_2 \lesssim (\sigmaCoeffTwo^2 + \sigmaCoeffFour^2) d^4$.

Similarly, let $\nabla_\u \Lhat(\rho)$ and $\pgrad_\u \Lhat(\rho)$ be defined as in \Cref{eq:projected_gradient_flow_empirical}. Then, we have $\|\nabla_\u \Lhat(\rho)\|_2 \lesssim (\sigmaCoeffTwo^2 + \sigmaCoeffFour^2) d^4$ and $\|\pgrad_\u \Lhat(\rho)\|_2 \lesssim (\sigmaCoeffTwo^2 + \sigmaCoeffFour^2) d^4$.
\end{lemma}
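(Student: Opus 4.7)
The plan is to derive the bound directly from the definition of the gradient together with the pointwise bounds on $\sigma$ and $\sigma'$ supplied by \Cref{lemma:sigma_bound}. The key input is that because $|P_{k,d}(t)| \le 1$ for $t \in [-1,1]$ (Eq.~(2.116) of \citet{atkinson2012spherical}), we have $|\overline{P}_{k,d}(t)| \le \sqrt{N_{k,d}}$, so every evaluation of $\sigma$, $y$, or $\sigma'$ inherits a uniform bound of the form $|\sigmaCoeffTwo|\sqrt{N_{2,d}} + |\sigmaCoeffFour|\sqrt{N_{4,d}}$, up to absolute constants.

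First I would bound the prediction error pointwise. For every $x \in \bbsd$, writing $f_\rho(x) = \E_{\u \sim \rho}[\sigma(\u^\top x)]$ and using \Cref{lemma:sigma_bound} we get $|f_\rho(x)| \lesssim |\sigmaCoeffTwo|\sqrt{N_{2,d}} + |\sigmaCoeffFour|\sqrt{N_{4,d}}$. Because $y(x) = h(\truevector^\top x)$ has Legendre coefficients $\legendreCoeff{h}{2} = \gamma_2 \sigmaCoeffTwo$ and $\legendreCoeff{h}{4} = \gamma_4 \sigmaCoeffFour$ with $\gamma_2, \gamma_4 \in [0,1]$ (Assumption \ref{assumption:target}), the same reasoning gives $|y(x)| \lesssim |\sigmaCoeffTwo|\sqrt{N_{2,d}} + |\sigmaCoeffFour|\sqrt{N_{4,d}}$, hence by the triangle inequality $|f_\rho(x) - y(x)|$ admits the same bound.

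Next, combining with $|\sigma'(\u^\top x)| \lesssim |\sigmaCoeffTwo|\sqrt{N_{2,d}} + |\sigmaCoeffFour|\sqrt{N_{4,d}}$ (also \Cref{lemma:sigma_bound}) and with $\|x\|_2 = 1$, Jensen's inequality applied to the definition of $\nabla_\u L(\rho)$ from \Cref{eq:projected_gradient_flow_population} yields
\begin{align}
\|\nabla_\u L(\rho)\|_2 \;\le\; \Esymb_{x \sim \bbsd}\bigl[|f_\rho(x) - y(x)| \cdot |\sigma'(\u^\top x)| \cdot \|x\|_2\bigr] \;\lesssim\; \bigl(|\sigmaCoeffTwo|\sqrt{N_{2,d}} + |\sigmaCoeffFour|\sqrt{N_{4,d}}\bigr)^2 \;\lesssim\; \sigmaCoeffTwo^2 N_{2,d} + \sigmaCoeffFour^2 N_{4,d},
\end{align}
and invoking $N_{k,d} \asymp d^k$ together with $N_{2,d} \le N_{4,d} \asymp d^4$ gives the claimed $(\sigmaCoeffTwo^2 + \sigmaCoeffFour^2) d^4$ bound. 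For the Riemannian gradient, $\pgrad_\u L(\rho) = (I - \u\u^\top)\nabla_\u L(\rho)$ and $I - \u\u^\top$ has operator norm $1$, so the same bound transfers immediately.

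Finally, for the empirical counterparts $\nabla_\u \Lhat(\rho)$ and $\pgrad_\u \Lhat(\rho)$, the identical argument applies term-by-term: the pointwise bound above holds at each sample $x_i \in \bbsd$ with $\|x_i\|_2 = 1$, so averaging over $i=1,\dots,n$ and applying the triangle inequality preserves the bound, and projecting by $(I - \u\u^\top)$ again leaves it unchanged. This is a routine bookkeeping argument with no real obstacle; the only care needed is tracking the powers of $d$ coming from $N_{k,d} \asymp d^k$ and verifying that $N_{2,d}$ is dominated by $N_{4,d}$ so that both terms can be absorbed into a single $d^4$ factor.
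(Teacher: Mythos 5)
Your proof is correct and follows essentially the same route as the paper's: pointwise bounds on $|f_\rho(x) - y(x)|$ and $|\sigma'(\u^\top x)|$ from \Cref{lemma:sigma_bound}, combined with $\|x\|_2 = 1$ and $N_{k,d} \asymp d^k$, then the operator-norm-$1$ fact for $I - \u\u^\top$, and the same argument term-by-term for the empirical loss. The only cosmetic difference is that you spell out the $\gamma_2, \gamma_4 \in [0,1]$ input for the $|y(x)|$ bound and invoke Jensen's inequality by name, while the paper keeps these implicit.
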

\begin{proof}[Proof of \Cref{lemma:gradient_bounds}]
First, we will bound $\nabla_\u L(\rho)$ for any $\u \in \bbS^{d - 1}$ and distribution $\rho$ with support in $\bbS^{d - 1}$:
\begin{align}
\|\nabla_\u L(\rho) \|_2
& = \Big\| \E_{x \sim \bbS^{d - 1}} [(f_\rho(x) - y(x)) \sigma'(\langle u, x \rangle) x] \Big\|_2 \\
& \leq \E_{x \sim \bbS^{d - 1}} |f_\rho(x) - y(x)| |\sigma'(\langle \u, x \rangle)|
\end{align}
By \Cref{lemma:sigma_bound}, for any $\u, x \in \bbS^{d - 1}$, we have $|\sigma'(\langle \u, x \rangle)| \lesssim |\sigmaCoeffTwo| \sqrt{N_{2, d}} + |\sigmaCoeffFour| \sqrt{N_{4, d}}$. Additionally, by \Cref{eqn:12} and by the definition of $y(x)$, we have $|f_\rho(x) - y(x)| \leq |\sigmaCoeffTwo| \sqrt{N_{2, d}} + |\sigmaCoeffFour| \sqrt{N_{4, d}}$ by \Cref{lemma:sigma_bound}. Thus,
\begin{align}
\| \nabla_\u L(\rho) \|_2
\lesssim \sigmaCoeffTwo^2 N_{2, d} + \sigmaCoeffFour^2 N_{4, d}
\lesssim (\sigmaCoeffTwo^2 + \sigmaCoeffFour^2) d^4
\end{align}
because $N_{2, d} \lesssim d^2$ and $N_{4, d} \lesssim d^4$ by Eq. (2.10) of \citet{atkinson2012spherical}. By the same argument, for any $\u \in \bbS^{d - 1}$,
\begin{align}
\|(I - \u\u^\top) \nabla_\u L(\rho)\|_2
& \lesssim (\sigmaCoeffTwo^2 + \sigmaCoeffFour^2) d^4
\end{align}
because $I - \u\u^\top$ is an orthogonal projection matrix. This completes the proof --- the proof of the analogous statements for $\nabla_u \Lhat(\rho)$ and $\pgrad_\u \Lhat(\rho)$ follow from the same arguments.
\end{proof}

\begin{lemma} \label{lemma:sigma_dot_product_bound}
Let $\u, \u' \in \bbS^{d - 1}$. Then, we have
\begin{align} \label{eq:at_5}
\E_{x \sim \bbS^{d - 1}} [(\sigma'(\u^\top x) - \sigma'(\u'^\top x))^2 (\u^\top x - \u'^\top x)^2]
& \lesssim (\sigmaCoeffTwo^2 + \sigmaCoeffFour^2) \|\u - \u'\|_2^4
\end{align}
and
\begin{align} \label{eq:sigma_dp_2}
\E_{x \sim \bbS^{d - 1}} [\sigma'(\u^\top x)^2 (\u'^\top x)^2]
& \lesssim (\sigmaCoeffTwo^2 + \sigmaCoeffFour^2) \period
\end{align}
\end{lemma}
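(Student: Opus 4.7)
The plan is to exploit the explicit polynomial form of $\sigma'$ together with the standard spherical moment bounds $\E_{x\sim\bbS^{d-1}}[(v^\top x)^{2k}]\asymp \|v\|_2^{2k}/d^k$ for $k=O(1)$. Since $\sigma(s)=\sigmaCoeffTwo\Pkdbar[2](s)+\sigmaCoeffFour\Pkdbar[4](s)$ (the constant term vanishes by \Cref{assumption:target}), \Cref{eq:legendre_polynomial_2_4} gives
\[
\sigma'(s)=c_1 s+c_3 s^3,\qquad |c_1|\lesssim (|\sigmaCoeffTwo|+|\sigmaCoeffFour|)\,d,\quad |c_3|\lesssim |\sigmaCoeffFour|\,d^2,
\]
using $\sqrt{N_{2,d}}\asymp d$ and $\sqrt{N_{4,d}}\asymp d^2$. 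This reduction to two monomials with explicit coefficient scales is the whole substance of the lemma.

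For the first bound, I would factor
\[
\sigma'(s)-\sigma'(t)=(s-t)\bigl(c_1+c_3(s^2+st+t^2)\bigr)\triangleq (s-t)\,g(s,t),
\]
so the integrand becomes $(s-t)^4 g(s,t)^2$ with $s=\u^\top x$, $t=\u'^\top x$, and $s-t=(\u-\u')^\top x$. Apply Cauchy--Schwarz in $x$ to split this as $\sqrt{\E[(s-t)^8]\,\E[g(s,t)^4]}$. The first factor is $\asymp \|\u-\u'\|_2^8/d^4$ by the spherical moment bound. For the second, expand $g^4\lesssim c_1^4+c_3^4(s^8+t^8)$ and use $\E[s^8],\E[t^8]\asymp d^{-4}$ to get $\E[g^4]\lesssim c_1^4+c_3^4/d^4\lesssim(\sigmaCoeffTwo^4+\sigmaCoeffFour^4)d^4$. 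Multiplying the two square roots yields exactly $(\sigmaCoeffTwo^2+\sigmaCoeffFour^2)\|\u-\u'\|_2^4$, as required.

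For the second bound, I would expand $\sigma'(\u^\top x)^2(\u'^\top x)^2\lesssim c_1^2 s^2 t^2+c_3^2 s^6 t^2$ and bound each term via Cauchy--Schwarz in $x$: $\E[s^2 t^2]\leq\sqrt{\E[s^4]\E[t^4]}\asymp 1/d^2$ and $\E[s^6 t^2]\leq\sqrt{\E[s^{12}]\E[t^4]}\asymp 1/d^4$. Plugging in the scales of $c_1,c_3$ gives $c_1^2/d^2+c_3^2/d^4\lesssim \sigmaCoeffTwo^2+\sigmaCoeffFour^2$.

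The only mildly delicate point is matching the powers of $d$ between the coefficient sizes (growing like $d$ and $d^2$) and the moment decay (shrinking like $d^{-k}$); everything else is a direct Cauchy--Schwarz plus monomial-by-monomial calculation. A supporting lemma giving $\E_{x\sim\bbS^{d-1}}[(v^\top x)^{2k}]\lesssim\|v\|_2^{2k}/d^k$ for $k\leq 6$ (which follows from the Funk--Hecke/beta-function computation, or can simply be cited from standard spherical-harmonics references) is the one ingredient I would state and invoke.
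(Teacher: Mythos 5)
Your proposal is correct and takes essentially the same approach as the paper: expand $\sigma'$ as a cubic with coefficients of size $O(\sigmaCoeffMax\, d)$ and $O(\sigmaCoeffMax\, d^2)$, factor the difference $\sigma'(s)-\sigma'(t)=(s-t)\bigl(c_1+c_3(s^2+st+t^2)\bigr)$ (the paper does this via $a^3-b^3=(a-b)(a^2+ab+b^2)$ after splitting by Legendre degree), and then apply spherical moment bounds of the form $\E[(v^\top x)^{2k}]\lesssim\|v\|_2^{2k}/d^k$ (the paper's \Cref{lemma:sphere_moment}). The only cosmetic difference is that you combine the degree-2 and degree-4 Legendre contributions into a single polynomial before estimating and use a global Cauchy--Schwarz $\E[(s-t)^4 g^2]\le\sqrt{\E[(s-t)^8]\,\E[g^4]}$, whereas the paper keeps the Legendre components separate and applies the moment lemma directly to mixed monomials $(\u^\top x)^{2p}(\u'^\top x)^{2q}(\vv^\top x)^4$; the power-of-$d$ bookkeeping comes out identically either way.
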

\begin{proof}[Proof of \Cref{lemma:sigma_dot_product_bound}]
For convenience, let $\vv := \u' - \u$. Then, we have
    \begin{align}
    & \E_{x \sim \bbS^{d - 1}} [(\sigma'(\u^\top x) - \sigma'(\u'^\top x))^2 (\u^\top x - \u'^\top x)^2] \\
    & \nextlinespace\nextlinespace \lesssim \sigmaCoeffTwo^2 N_{2, d} \E_{x \sim \bbS^{d - 1}} [(\PtwoD'(\u^\top x) - \PtwoD'(\u'^\top x))^2 (\u^\top x - \u'^\top x)^2] \\
    & \nextlinespace\nextlinespace\nextlinespace + \sigmaCoeffFour^2 N_{4, d} \E_{x \sim \bbS^{d - 1}} [(\PfourD'(\u^\top x) - \PfourD'(\u'^\top x))^2 (\u^\top x - \u'^\top x)^2]
    & \tag{By definition of $\sigma$ and $\Pkdbar$ and b.c. $(a + b)^2 \lesssim a^2 + b^2$} \\
    & \nextlinespace\nextlinespace \lesssim \sigmaCoeffTwo^2 N_{2, d} \E_{x \sim \bbS^{d - 1}} [(\vv^\top x)^4] + \sigmaCoeffFour^2 N_{4, d} \E_{x \sim \bbS^{d - 1}} [((\u^\top x)^3 - (\u'^\top x)^3)^2 (\u^\top x - \u'^\top x)^2] \\
    & \nextlinespace\nextlinespace\nextlinespace + \frac{\sigmaCoeffFour^2 N_{4, d}}{d^2} \E_{x \sim \bbS^{d - 1}} [(\u^\top x - \u'^\top x)^4]
    & \tag{By \Cref{eq:legendre_polynomial_2_4} and expanding the term with $\PfourD'$} \\
    & \nextlinespace\nextlinespace \lesssim \sigmaCoeffTwo^2 N_{2, d} \E_{x \sim \bbS^{d - 1}} [(\vv^\top x)^4] + \sigmaCoeffFour^2 N_{4, d} \sum_{p + q = 2} \E_{x \sim \bbS^{d - 1}} [(\u^\top x)^{2p} (\u'^\top x)^{2q} (\vv^\top x)^4] \\
    & \nextlinespace\nextlinespace\nextlinespace + \frac{\sigmaCoeffFour^2 N_{4, d}}{d^2} \E_{x \sim \bbS^{d - 1}} [(\vv^\top x)^4] \label{eq:sphere_dp_exp}
    \end{align}
    Here the last inequality is due to the identity $a^3 - b^3 = (a - b)(a^2 + ab + b^2)$, which gives $((\u'^\top x)^3 - (\u^\top x)^3)^2 = ((\u'^\top x)^2 + (\u'^\top x)(\u^\top x) + (\u^\top x)^2)^2 (\u'^\top x - \u^\top x)^2$ --- expanding the first factor fully gives terms of the form $(\u'^\top x)^p (\u^\top x)^q$ where $p$ and $q$ are both even and $p + q = 4$. Thus, applying \Cref{lemma:sphere_moment} to the final expression above gives
    \begin{align}
    & \E_{x \sim \bbS^{d - 1}} [(\sigma'(\u^\top x) - \sigma'(\u'^\top x))^2 (\u^\top x - \u'^\top x)^2] \\
    & \nextlinespace\nextlinespace \lesssim \sigmaCoeffTwo^2 N_{2, d} \cdot \frac{1}{d^2} \|\vv\|_2^4 + \sigmaCoeffFour^2 N_{4, d} \sum_{p + q = 2} \frac{1}{d^{p + q + 2}} \|\vv\|_2^4 + \frac{\sigmaCoeffFour^2 N_{4, d}}{d^2} \cdot \frac{1}{d^2} \|\vv\|_2^4
    & \tag{By \Cref{lemma:sphere_moment}} \\
    & \nextlinespace\nextlinespace \lesssim \sigmaCoeffTwo^2 \|\vv\|_2^4 + \sigmaCoeffFour^2 \|\vv\|_2^4
    & \tag{B.c. $N_{2, d} \lesssim d^2$ and $N_{4, d} \lesssim d^4$ (see Equation 2.10 of \citet{atkinson2012spherical})}
    \end{align}
which completes the proof of the first statement. The proof of the second statement is similar --- we expand using the definition of $\sigma$ and then apply \Cref{lemma:sphere_moment} to each of the terms. We omit the details of the calculation.
\end{proof}

\begin{lemma} \label{lemma:point_error_to_fn_error}
Let $\rho$ and $\rho'$ be two distributions on $\bbS^{d - 1}$ which both have support size $m < \infty$, and consider a coupling $(\u, \u') \sim (\rho, \rho')$ of the two distributions. Additionally, let $f_\rho$ and $f_{\rho'}$ be defined as in \Cref{eqn:12}. Then,
\begin{align}
\E_{x \sim \bbS^{d - 1}} [(f_\rho(x) - f_{\rho'}(x))^2]
& \lesssim (\sigmaCoeffTwo^2 + \sigmaCoeffFour^2) \E_{(\u, \u') \sim (\rho, \rho')} \|\u - \u'\|_2^2
\end{align}
\end{lemma}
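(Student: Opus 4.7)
The plan is to reduce the statement to a pointwise estimate for a single pair of neurons by exchanging the order of expectations, and then to prove the pointwise estimate using a polynomial factorization of $\sigma(a)-\sigma(b)$ combined with the spherical-moment bounds already used in the proof of \Cref{lemma:sigma_dot_product_bound}.

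First, I would apply Jensen's inequality inside the coupling. Since $f_\rho(x)-f_{\rho'}(x)=\E_{(\u,\u')\sim(\rho,\rho')}[\sigma(\u^\top x)-\sigma(\u'^\top x)]$, squaring and applying Jensen's yields
\[
(f_\rho(x)-f_{\rho'}(x))^2 \leq \E_{(\u,\u')\sim(\rho,\rho')}\bigl[(\sigma(\u^\top x)-\sigma(\u'^\top x))^2\bigr].
\]
Taking expectation over $x\sim\bbS^{d-1}$ and swapping the order of expectation by Fubini reduces the claim to showing that for any fixed $\u,\u'\in\bbS^{d-1}$,
\[
\E_{x\sim\bbS^{d-1}}\bigl[(\sigma(\u^\top x)-\sigma(\u'^\top x))^2\bigr] \lesssim (\sigmaCoeffTwo^2+\sigmaCoeffFour^2)\,\|\u-\u'\|_2^2.
\]

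Second, I would establish this pointwise bound by factoring the difference. Writing $a=\u^\top x$ and $b=\u'^\top x$, and using the explicit formulas in \Cref{eq:legendre_polynomial_2_4}, we have
\[
\sigma(a)-\sigma(b) = (a-b)\cdot q(a,b),
\]
where $q(a,b)$ is a polynomial of degree at most $3$ in $(a,b)$ whose coefficients are $O(|\sigmaCoeffTwo|\sqrt{N_{2,d}})$ and $O(|\sigmaCoeffFour|\sqrt{N_{4,d}})$ (with the lower-order pieces carrying extra $1/d$ factors). Since $a-b=(\u-\u')^\top x$, squaring and taking expectation gives
\[
\E_{x}\bigl[(\sigma(a)-\sigma(b))^2\bigr] = \E_{x}\bigl[((\u-\u')^\top x)^2\, q(a,b)^2\bigr],
\]
and expanding $q(a,b)^2$ produces a sum of monomials of the form $a^p b^r$ with $p+r\leq 6$ weighted by $\sigmaCoeffTwo^2 N_{2,d}$, $\sigmaCoeffFour^2 N_{4,d}$, or a cross term bounded by these via AM-GM. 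Applying the same sphere-moment inequality used in \Cref{lemma:sigma_dot_product_bound} to each $\E_x[((\u-\u')^\top x)^2 a^p b^r]$ yields a factor of $\|\u-\u'\|_2^2/d^{(p+r+2)/2}$, and this cancels exactly against $N_{k,d}\asymp d^{k}$ to produce the target bound.

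The main obstacle is essentially the bookkeeping in the second step: keeping track of the powers of $d$ arising from $\sqrt{N_{k,d}}$ and from the spherical moments, and checking that the cross term between the degree-$2$ and degree-$4$ Legendre components of $\sigma$ can be absorbed into $\sigmaCoeffTwo^2+\sigmaCoeffFour^2$. Since the structure of the calculation is entirely parallel to the one carried out for \Cref{lemma:sigma_dot_product_bound}, no new analytic ideas are required; one could equivalently quote that lemma applied to $\u'' := (\u-\u')/\|\u-\u'\|_2$ after an appropriate rewriting.
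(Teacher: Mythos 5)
Your proof is correct and follows essentially the same two-step strategy as the paper: first apply Jensen's inequality and Fubini to reduce the claim to a bound for a single pair of coupled neurons $\u,\u'\in\bbS^{d-1}$, then prove that pointwise bound by exploiting the polynomial structure of $\sigma$ together with the spherical-moment estimate \Cref{lemma:sphere_moment}. The one place where you diverge from the paper is in how you extract the factor of $(\u-\u')^\top x$: the paper invokes the mean-value theorem, writing $\sigma(a)-\sigma(b)=\sigma'(\lambda)(a-b)$ for an $x$-dependent convex combination $\lambda$, and then bounds $|\lambda|^k\lesssim |a|^k+|b|^k$ to reduce to moments involving only $a=\u^\top x$ and $b=\u'^\top x$; you instead use the explicit algebraic factorization $\sigma(a)-\sigma(b)=(a-b)\,q(a,b)$ with $q$ a degree-$3$ polynomial. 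Both routes lead to exactly the same collection of monomial moments (degrees $\le 6$ in $(a,b)$, each multiplied by $((\u-\u')^\top x)^2$), and both cancel $\sqrt{N_{2,d}}\asymp d$, $\sqrt{N_{4,d}}\asymp d^2$ against the $d^{-(p+r+2)/2}$ from the spherical moments; the cross term $\sigmaCoeffTwo\sigmaCoeffFour$ is absorbed by AM--GM in either case. Your factorization is arguably cleaner since it avoids introducing the implicit random quantity $\lambda$, but this is a presentational choice rather than a substantive difference. One small imprecision: the closing claim that one could ``equivalently quote'' \Cref{lemma:sigma_dot_product_bound} for $\u''=(\u-\u')/\|\u-\u'\|_2$ is a bit loose, since that lemma's two stated inequalities (\Cref{eq:at_5} and \Cref{eq:sigma_dp_2}) concern $\sigma'$ rather than $\sigma$ and do not directly specialize to the estimate you need; what actually carries over is the proof technique, not the statement, which is also how the paper handles it.
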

\begin{proof}[Proof of \Cref{lemma:point_error_to_fn_error}]
We can write the left-hand side as
\begin{align}
\E_{x \sim \bbS^{d - 1}} [(f_\rho(x) - f_{\rho'}(x))^2]
& = \E_{x \sim \bbS^{d - 1}} \Big[ \Big( \E_{(\u, \u') \sim (\rho, \rho')}[\sigma(\langle \u, x \rangle) - \sigma(\langle \u', x \rangle)] \Big)^2 \Big] \\
& \leq \E_{x \sim \bbS^{d - 1}} \E_{(\u, \u') \sim (\rho, \rho')} \Big(\sigma(\langle \u, x \rangle) - \sigma(\langle \u', x \rangle)\Big)^2 \\
& = \E_{(\u, \u') \sim (\rho, \rho')} \E_{x \sim \bbS^{d - 1}} \Big(\sigma(\langle \u, x \rangle) - \sigma(\langle \u', x \rangle)\Big)^2
\end{align}
For any $\u, \u' \in \bbS^{d - 1}$ and $x \in \bbS^{d - 1}$, by the mean-value theorem, there exists $\lambda$ which is a convex combination of $\langle \u, x \rangle$ and $\langle \u', x \rangle$ such that
\begin{align}
\sigma(\langle \u, x \rangle) - \sigma(\langle \u', x \rangle)
& = \sigma'(\lambda) (\langle \u, x \rangle - \langle \u', x \rangle) \period
\end{align}
Using a similar argument as in the proof of \Cref{lemma:sigma_dot_product_bound}, we can show that
\begin{align}
\E_{x \sim \bbS^{d - 1}} \Big[ \sigma'(\lambda)^2 (\langle \u, x \rangle - \langle \u', x \rangle)^2 \Big] 
& \lesssim (\sigmaCoeffTwo^2 + \sigmaCoeffFour^2) \|\u - \u'\|_2^2 \period
\end{align}
The proof proceeds by expanding $\sigma'(\lambda)$ into each of its terms and using the inequality $|a + b|^3 \lesssim |a|^3 + |b|^3$ (which holds for any two real numbers $a, b$) in order to consider only terms which involve one of $\u$ or $\u'$ --- the resulting terms are of the same form as those which appear in \Cref{eq:sphere_dp_exp}. We omit the details of the calculation. Thus,
\begin{align}
\E_{x \sim \bbS^{d - 1}} [(f_\rho(x) - f_{\rho'}(x))^2]
& \leq \E_{(\u, \u') \sim (\rho, \rho')} \E_{x \sim \bbS^{d - 1}} \Big(\sigma(\langle \u, x \rangle) - \sigma(\langle \u', x \rangle)\Big)^2 \\
& \lesssim (\sigmaCoeffTwo^2 + \sigmaCoeffFour^2) \E_{(\u, \u') \sim (\rho, \rho')} \|\u - \u'\|_2^2
\end{align}
as desired.
\end{proof}
\section{Proof of \Cref{thm:projectedGD_main}} \label{sec:projectedGD_smallLR}

In this section, we build on our analysis from \Cref{section:proof_finite_sample} to show that projected gradient descent with $\frac{1}{\poly(d)}$ learning rate can achieve low population loss. In the rest of the section, for convenience, we define $\deltaSingleGD_t = \uGD_t - \uhat_{\eta t}$, and omit $\chi$ when it is clear from context.

\begin{lemma} \label{lemma:gradient_expectation_inner_product}
Let $\rho, \rho'$ be two distributions on $\bbS^{d - 1}$, and let $\Gamma$ be a coupling between $\rho$ and $\rho'$. Additionally, let $u_1, u_2, v \in \bbS^{d - 1}$. Then, we have
\begin{align}
|\nabla_{u_1} L(\rho) \cdot v - \nabla_{u_2} L(\rho') \cdot v|
& \lesssim (\sigmaCoeffTwo^2 + \sigmaCoeffFour^2) \Big(\E_{u, u' \sim \Gamma} \Big[\|u - u'\|_2^2] \Big]^{1/2} + \|u_1 - u_2\|_2 \Big) \period
\end{align}
\end{lemma}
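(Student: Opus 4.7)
The plan is to use the integral formula from \Cref{eq:projected_gradient_flow_population}, which gives
\begin{align}
\nabla_{u_1} L(\rho) \cdot v - \nabla_{u_2} L(\rho') \cdot v
& = \E_{x \sim \bbS^{d-1}} \Big[ (f_\rho(x) - y(x))\,\sigma'(u_1^\top x)\,(v^\top x) \\
& \nextlinespace - (f_{\rho'}(x) - y(x))\,\sigma'(u_2^\top x)\,(v^\top x) \Big],
\end{align}
and then split the right-hand side using the standard add/subtract trick, writing it as the sum of $T_1 := \E_x[(f_\rho(x) - f_{\rho'}(x)) \sigma'(u_1^\top x)(v^\top x)]$, which captures the discrepancy from changing the distribution, and $T_2 := \E_x[(f_{\rho'}(x) - y(x))(\sigma'(u_1^\top x) - \sigma'(u_2^\top x))(v^\top x)]$, which captures the discrepancy from changing the anchor particle.

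For $T_1$, I would apply Cauchy--Schwarz in $L^2(\bbS^{d-1})$ to obtain $|T_1| \le \E_x[(f_\rho(x) - f_{\rho'}(x))^2]^{1/2} \cdot \E_x[\sigma'(u_1^\top x)^2 (v^\top x)^2]^{1/2}$. The first factor is bounded by $(\sigmaCoeffTwo^2 + \sigmaCoeffFour^2)^{1/2} \cdot \E_{u,u' \sim \Gamma}[\|u-u'\|_2^2]^{1/2}$ via \Cref{lemma:point_error_to_fn_error} (which is stated for finitely supported distributions but applies directly by approximation, or by a straightforward re-derivation in the integral form), and the second factor is bounded by $(\sigmaCoeffTwo^2 + \sigmaCoeffFour^2)^{1/2}$ via \Cref{eq:sigma_dp_2} of \Cref{lemma:sigma_dot_product_bound}. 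This produces the first piece of the claimed bound.

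For $T_2$, I would again use Cauchy--Schwarz: $|T_2| \le \E_x[(f_{\rho'}(x) - y(x))^2]^{1/2} \cdot \E_x[(\sigma'(u_1^\top x) - \sigma'(u_2^\top x))^2 (v^\top x)^2]^{1/2}$. For the first factor, I would use the orthonormality of the normalized Legendre polynomials in $L^2(\bbS^{d-1})$ together with Jensen's inequality applied to $f_{\rho'}$ to obtain $\E_x[f_{\rho'}(x)^2] \le \E_{u \sim \rho'} \E_x[\sigma(u^\top x)^2] = \sigmaCoeffTwo^2 + \sigmaCoeffFour^2$, and similarly $\E_x[y(x)^2] = \hat{h}_{2,d}^2 + \hat{h}_{4,d}^2 \lesssim \sigmaCoeffTwo^2 + \sigmaCoeffFour^2$ since $\gamma_2, \gamma_4 \le 1$. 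For the second factor, which is the main technical piece, I would expand $\sigma'$ explicitly using \Cref{eq:legendre_polynomial_2_4} into its linear and cubic pieces, write the cubic difference via $a^3 - b^3 = (a-b)(a^2+ab+b^2)$, and then apply \Cref{lemma:sphere_moment} term by term. Each resulting monomial of the form $\E_x[(u_1^\top x)^{2p}(u_2^\top x)^{2q}((u_1-u_2)^\top x)^2(v^\top x)^2]$ yields $\|u_1 - u_2\|_2^2 / d^{p+q+2}$, and after cancelling against the $N_{k,d} \asymp d^k$ prefactors the total contribution is $(\sigmaCoeffTwo^2 + \sigmaCoeffFour^2) \|u_1 - u_2\|_2^2$. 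Taking square roots and combining gives $|T_2| \lesssim (\sigmaCoeffTwo^2 + \sigmaCoeffFour^2) \|u_1 - u_2\|_2$.

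The main bookkeeping hurdle is the expansion for $T_2$: one needs to carefully track how the leading $\sqrt{N_{k,d}}$ factors in $\sigma'$ combine with the $d^{-(p+q+r+s)}$ factors from \Cref{lemma:sphere_moment} so that each term ends up $O(\sigmaCoeffMax^2 \|u_1 - u_2\|_2^2)$ rather than amplified by powers of $d$. This is the same kind of calculation already carried out in the proof of \Cref{lemma:sigma_dot_product_bound} (which handles the special case $v = u_1 - u_2$), and the argument there generalizes verbatim once one treats $v$ as an arbitrary unit vector and uses only the unit norm of $v$ in the moment bounds.
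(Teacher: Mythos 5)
Your proof is correct, but it takes a genuinely different route from the paper. You decompose the difference via an add–subtract into a ``distributional discrepancy'' term $T_1$ and an ``anchor discrepancy'' term $T_2$, apply Cauchy--Schwarz to each, and then bound the resulting second moments: $\E_x[(f_\rho-f_{\rho'})^2]$ by the coupled distance (via \Cref{lemma:point_error_to_fn_error}), $\E_x[(f_{\rho'}-y)^2]$ by a uniform a priori bound $\lesssim \sigmaCoeffTwo^2+\sigmaCoeffFour^2$ (Jensen plus orthonormality), and $\E_x[(\sigma'(u_1^\top x)-\sigma'(u_2^\top x))^2(v^\top x)^2]$ by a straightforward extension of \Cref{lemma:sigma_dot_product_bound}. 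The paper instead bounds the difference of gradients \emph{directly}: it writes both $f_\rho$ and $f_{\rho'}$ as expectations over the same coupling $\Gamma$, fully expands $\sigma(u'\cdot x)\sigma'(u\cdot x)(v\cdot x)$ into monomials, telescopes each difference of monomials against a single factor of $(u_1'-u_2')\cdot x$ or $(u_1-u_2)\cdot x$, and bounds each piece with \Cref{lemma:sphere_moment} --- never invoking Cauchy--Schwarz. The paper's route avoids any global bound on $L(\rho')$ (it works for any distributions with no a priori loss control), at the price of more explicit bookkeeping; your route is conceptually cleaner and reuses existing second-moment lemmas, but relies on the observation that $L(\rho')$ is uniformly $O(\sigmaCoeffTwo^2+\sigmaCoeffFour^2)$ over all $\rho'$ supported on $\bbsd$. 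That observation is sound here --- it follows from Jensen and orthonormality, plus $\gamma_2,\gamma_4\le 1$, plus the paper's implicit convention that $\sigma$ is even so $\sum_k \legendreCoeff{\sigma}{k}^2 = \sigmaCoeffTwo^2+\sigmaCoeffFour^2$ --- but it is an extra hypothesis your proof needs that the paper's does not. Both arguments are correct.
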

\begin{proof}
We can expand the left-hand side as
\begin{align} \label{eq:gradient_expected_different_dists}
\Big| \E_{x \sim \bbS^{d - 1}} \Big[(f_\rho(x) - y(x)) \sigma'(u_1 \cdot x) (v \cdot x) - (f_{\rho'}(x) - y(x)) \sigma'(u_2 \cdot x) (v \cdot x) \Big] \Big| \period
\end{align}
We first obtain a bound on
\begin{align} \label{eq:gradient_expectation_first_part}
\Big| \E_{x \sim \bbS^{d - 1}} \Big[ f_\rho(x) \sigma'(u_1 \cdot x) (v \cdot x) - f_{\rho'}(x) \sigma'(u_2 \cdot x) (v \cdot x) \Big] \Big| \period
\end{align}
By the definition of $f_\rho$, we can rewrite \Cref{eq:gradient_expectation_first_part} as
\begin{align}
\Big| \E_{u_1', u_2' \sim \Gamma} \E_{x \sim \bbS^{d - 1}} \Big[ \sigma(u_1' \cdot x) \sigma'(u_1 \cdot x) (v \cdot x) - \sigma(u_2' \cdot x) \sigma'(u_2 \cdot x) (v \cdot x) \Big] \Big|
\end{align}
By \Cref{eq:legendre_polynomial_2_4}, $\sigma$ is an even fourth-degree polynomial and $\sigma'$ is an odd third-degree polynomial. Thus, the inner expectation can be written as a linear combination of a constant number of terms of the form
\begin{align} \label{eq:gradient_expectation_indiv_term}
\Big| \E_{x \sim \bbS^{d - 1}} \Big[ (u_1' \cdot x)^i (u_1 \cdot x)^j (v \cdot x) - (u_2' \cdot x)^i (u_2 \cdot x)^j (v \cdot x) \Big] \Big|
\end{align}
where $i = 0, 2, 4$ and $j = 1, 3$. We can also estimate the coefficient which accompanies each individual term \Cref{eq:gradient_expectation_indiv_term}. We first recall the facts that $\sigma(t) = \sigmaCoeffTwo \PtwoDbar(t) + \sigmaCoeffFour \PfourDbar(t)$, that $N_{2, d} = O(d^2)$ and $N_{4, d} = O(d^4)$ by Equation (2.10) of \citet{atkinson2012spherical}, and that the $i^{\textup{th}}$ degree term of $\PtwoD$ has a coefficient which is $O(d^{i/2 - 1})$ and the $i^{\textup{th}}$ degree term of $\PfourD$ has a coefficient which is $O(d^{i/2 - 2})$. From these facts it follows that the $i^{th}$ degree term of $\PtwoDbar = \sqrt{N_{2, d}} \PtwoD$ has a coefficient which is $O(d^{i/2})$ and the $i^{th}$ degree term of $\PfourDbar = \sqrt{N_{4, d}} \PfourD$ has a coefficient which is $O(d^{i/2})$. Thus, each term of the form given in \Cref{eq:gradient_expectation_indiv_term} has a coefficient which is at most $(\sigmaCoeffTwo^2 + \sigmaCoeffFour^2) d^{(i + j + 1)/2}$ in absolute value. We can now upper bound each of the terms of the form given in \Cref{eq:gradient_expectation_indiv_term} as follows:
\begin{align}
\Big| \E_{x \sim \bbS^{d - 1}} & \Big[ (u_1' \cdot x)^i (u_1 \cdot x)^j (v \cdot x) - (u_2' \cdot x)^i (u_2 \cdot x)^j (v \cdot x) \Big] \Big| \\
& \leq \Big| \E_{x \sim \bbS^{d - 1}} \Big[\Big((u_1' \cdot x)^i - (u_2' \cdot x)^i \Big) (u_1 \cdot x)^j (v \cdot x) \Big] \Big| \\
& \nextlinespace\nextlinespace + \Big| \E_{x \sim \bbS^{d - 1}} \Big[ (u_2' \cdot x)^i \Big((u_1 \cdot x)^j - (u_2 \cdot x)^j \Big) (v \cdot x) \Big] \Big| \\
& \leq \sum_{k = 0}^{i - 1} \Big| \E_{x \sim \bbS^{d - 1}} \Big[\Big((u_1' - u_2') \cdot x\Big) (u_1' \cdot x)^k (u_2' \cdot x)^{i - 1 - k} (u_1 \cdot x)^j (v \cdot x) \Big] \Big| \\
& \nextlinespace\nextlinespace + \sum_{k = 0}^{j - 1} \Big| \E_{x \sim \bbS^{d - 1}} \Big[(u_2' \cdot x)^i \Big((u_1 - u_2) \cdot x \Big) (u_1 \cdot x)^k (u_2 \cdot x)^{j - 1 - k} (v \cdot x) \Big] \Big| \\
& \lesssim \sum_{k = 0}^{i - 1} \|u_1' - u_2'\|_2 \cdot \frac{1}{d^{(1 + k + i - 1 - k + j + 1)/2}} + \sum_{k = 0}^{j - 1} \|u_1 - u_2\|_2 \cdot \frac{1}{d^{(i + 1 + k + j - 1 - k + 1)/2}}
& \tag{By \Cref{lemma:sphere_moment}} \\
& \lesssim \|u_1' - u_2'\|_2 \frac{1}{d^{(i + j + 1)/2}} + \|u_1 - u_2\|_2 \frac{1}{d^{(i + j + 1)/2}} \period
\end{align}
Since the coefficient of this term is at most $(\sigmaCoeffTwo^2 + \sigmaCoeffFour^2) d^{(i + j + 1)/2}$ in absolute value, this term contributes at most
\begin{align}
(\sigmaCoeffTwo^2 + \sigmaCoeffFour^2) \Big(\|u_1' - u_2'\|_2 + \|u_1 - u_2\|_2 \Big) \period
\end{align}
Thus, taking the expectation over $u_1', u_2' \sim \Gamma$, we have
\begin{align}
\Big| \E_{x \sim \bbS^{d - 1}} \Big[ f_\rho(x) \sigma'(u_1 \cdot x) (v \cdot x) - & f_{\rho'}(x) \sigma'(u_2 \cdot x) (v \cdot x) \Big] \Big| \\
& \lesssim (\sigmaCoeffTwo^2 + \sigmaCoeffFour^2) \Big(\E_{u, u' \sim \Gamma} \Big[ \|u - u'\|_2 \Big] + \|u_1 - u_2 \|_2 \Big) \\
& \lesssim (\sigmaCoeffTwo^2 + \sigmaCoeffFour^2) \Big(\E_{u, u' \sim \Gamma} \Big[ \|u - u'\|_2^2 \Big]^{1/2} + \|u_1 - u_2 \|_2 \Big) \period
\end{align}
A similar but simpler argument can be used to bound
\begin{align}
\Big| \E_{x \sim \bbS^{d - 1}} \Big[ y(x) \sigma'(u_1 \cdot x) (v \cdot x) - y(x) \sigma'(u_2 \cdot x) (v \cdot x) \Big] \Big|
\end{align}
by expanding $y(x)$ and $\sigma'(x)$. Thus, we obtain
\begin{align}
|\nabla_{u_1} L(\rho) \cdot v - \nabla_{u_2} L(\rho) \cdot v|
& \leq (\sigmaCoeffTwo^2 + \sigmaCoeffFour^2) \Big(\E_{u, u' \sim \Gamma} \Big[\|u - u'\|_2^2] \Big]^{1/2} + \|u_1 - u_2\|_2 \Big)
\end{align}
as desired.
\end{proof}

\begin{lemma} \label{lemma:discretized_gradient_concentration}
Suppose we are in the setting of \Cref{thm:projectedGD_main}. Assume that $n \gtrsim d^3 (\log d)^C$ for a sufficiently large universal constant $C > 0$. Additionally, assume that $\max_\chi \|\deltaSingleGD_t(\chi)\|_2 \leq \frac{1}{\sqrt{d}}$. Then, we have
\begin{align} \label{eq:discretized_gradient_error1}
|(\pgrad_{\uGD_t} \Lhat(\rhoGD_t) - \pgrad_{\uhat_{\eta t}} \Lhat(\rhohat_{\eta t})) \cdot \deltaSingleGD_t|
& \lesssim (\sigmaCoeffTwo^2 + \sigmaCoeffFour^2) \max_\chi \|\deltaSingleGD_t (\chi)\|_2^2
\end{align}
\end{lemma}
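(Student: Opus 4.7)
}
The strategy is to mimic the population Lipschitz estimate in \Cref{lemma:gradient_expectation_inner_product} but for the empirical loss $\Lhat$, exploiting the coupling between $\rhoGD_t$ and $\rhohat_{\eta t}$ through the common initialization $\chi_1,\dots,\chi_m$. Let $\vv := \deltaSingleGD_t(\chi)$ and set $\vvnorm := \max_\chi \|\deltaSingleGD_t(\chi)\|_2$, which by assumption is $\leq 1/\sqrt d$. First, I would peel off the projection $(I-uu^\top)$ factors exactly as in the first few lines of the proof of \Cref{lemma:bt_bound_individual}, reducing the claim to bounding $|(\nabla_{\uGD_t}\Lhat(\rhoGD_t) - \nabla_{\uhat_{\eta t}}\Lhat(\rhohat_{\eta t}))\cdot \vv|$ plus a harmless $\|\vv\|_2^2$ term coming from $1-\langle \uGD_t,\uhat_{\eta t}\rangle = \tfrac12\|\vv\|_2^2$.

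Next I would split the main difference into two pieces:
\begin{align}
\Delta_1 &:= (\nabla_{\uGD_t}\Lhat(\rhoGD_t) - \nabla_{\uhat_{\eta t}}\Lhat(\rhoGD_t))\cdot \vv,\\
\Delta_2 &:= (\nabla_{\uhat_{\eta t}}\Lhat(\rhoGD_t) - \nabla_{\uhat_{\eta t}}\Lhat(\rhohat_{\eta t}))\cdot \vv.
\end{align}
For $\Delta_1$, I would expand $\sigma'(\uGD_t\cdot x_i) - \sigma'(\uhat_{\eta t}\cdot x_i)$ as a finite sum of monomials in $\uGD_t\cdot x_i$, $\uhat_{\eta t}\cdot x_i$ and $\vv\cdot x_i$ (using that $\sigma'$ is a cubic polynomial and that $a^k-b^k = (a-b)\sum_{j}a^jb^{k-1-j}$), and multiply by $(f_{\rhoGD_t}(x_i)-y(x_i))$, itself a quartic combination of inner products. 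The factor $(\vv\cdot x_i)$ from the gradient, paired with the extra $(\vv\cdot x_i)$ coming from the mean value expansion, yields a pointwise bound proportional to $\|\vv\|_2^2$ after applying \Cref{lemma:main_concentration_lemma} with $n\gtrsim d^3(\log d)^C$; the $(\sigmaCoeffTwo^2+\sigmaCoeffFour^2)$ prefactor arises from the usual $\sqrt{N_{2,d}},\sqrt{N_{4,d}}$ accounting as in the proof of \Cref{lemma:ct_bound}. For $\Delta_2$, I would write $f_{\rhoGD_t}(x_i)-f_{\rhohat_{\eta t}}(x_i) = \E_\chi[\sigma(\uGD_t(\chi)\cdot x_i)-\sigma(\uhat_{\eta t}(\chi)\cdot x_i)]$ and again expand by mean value, producing one explicit factor of $\deltaSingleGD_t(\chi)\cdot x_i$ per summand; multiplying by the remaining $(\vv\cdot x_i)$ factor and invoking \Cref{lemma:main_concentration_lemma} gives an empirical bound of order $(\sigmaCoeffTwo^2+\sigmaCoeffFour^2)\vvnorm\|\vv\|_2$, which is $\leq (\sigmaCoeffTwo^2+\sigmaCoeffFour^2)\vvnorm^2$ as required.

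The main obstacle is ensuring that the concentration penalty introduced by replacing $L$ by $\Lhat$ does not dominate: in the bounds for $\Delta_1$ and $\Delta_2$, the higher-order error terms from \Cref{lemma:main_concentration_lemma} of the form $d^{(p_1+p_2+\dots)/2}/n$ must be absorbed into $(\sigmaCoeffTwo^2+\sigmaCoeffFour^2)\vvnorm^2$. Using the hypothesis $\vvnorm\leq 1/\sqrt d$ and $n\gtrsim d^3(\log d)^C$, each such higher-order term carries at most a factor $d^{2}/n \lesssim 1/d$ relative to the leading $\vvnorm^2 = 1/d$ contribution and therefore is subsumed (after an appropriate high-probability event whose failure is $1/d^{\Omega(\log d)}$, thus tolerated by the theorem's probability budget). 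Finally, I would take a union bound over the iterates $\uGD_t$, noting that (as in the proof of \Cref{lemma:finite_sample_width_main_lemma}) a grid of resolution $1/d^{\Omega(\log d)}$ in $t$ suffices since $\|\uGD_t - \uGD_{t'}\|_2$ is Lipschitz in $t$ by \Cref{lemma:gradient_bounds} with learning rate $\eta \leq 1/((\sigmaCoeffTwo^2+\sigmaCoeffFour^2)d^B)$. Combining the two bounds and the projection remainder term yields the desired $(\sigmaCoeffTwo^2+\sigmaCoeffFour^2)\vvnorm^2$ estimate.
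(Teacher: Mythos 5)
Your proposal is essentially the paper's approach: expand the empirical inner product into monomials, further split the moving points into a data-independent anchor ($\ubar_{\eta t}$) plus small data-dependent perturbations ($\delta_{\eta t}$, $\deltaSingleGD_t$), apply \Cref{lemma:main_concentration_lemma} for the concentration, and control the residual mean via the spherical moment bounds. The paper organizes this a bit differently — it splits into an empirical-minus-population piece (bounded by \Cref{lemma:main_concentration_lemma}) and a pure population piece ($M_1 - M_2$, bounded cleanly by the separately-stated \Cref{lemma:gradient_expectation_inner_product}), rather than your telescoping in the gradient position / distribution ($\Delta_1$, $\Delta_2$) — but your decomposition telescopes correctly and yields the same final arithmetic, so the two routes are morally identical. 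Two small caveats to tighten up if you wrote this out in full: (i) when bounding $f_{\rhoGD_t} - f_{\rhohat_{\eta t}}$ you should use the algebraic factorization $a^k - b^k = (a-b)\sum a^j b^{k-1-j}$ rather than the mean value theorem, since the mean-value point $\lambda$ is not a polynomial in $x_i$ and would not mesh with \Cref{lemma:main_concentration_lemma}; and (ii) the union bound is really over the $\poly(d)$ discrete iterates $t = 0, 1, \dots, \Tstar/\eta$ (each with its own fixed anchors $\ubar_{\eta t}$) rather than a net in continuous time, though both give the same $d^{-\Omega(\log d)}$ failure budget.
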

\begin{proof}
We can write the left-hand side of \Cref{eq:discretized_gradient_error1} as
\begin{align} \label{eq:empirical_gradient_dot_product_diff}
(\pgrad_{\uGD_t} \Lhat(\rhoGD_t) - & \pgrad_{\uhat_{\eta t}} \Lhat(\rhohat_t)) \cdot \deltaSingleGD_t \\
& = \frac{1}{n} \sum_{i = 1}^n \Big((f_{\rhoGD_t}(x_i) - y(x_i)) \sigma'(\uGD_t \cdot x_i) - (f_{\rhohat_{\eta t}}(x_i) - y(x_i)) \sigma'(\uhat_{\eta t} \cdot x_i) \Big) x_i \cdot \deltaSingleGD_t \period
\end{align}
We proceed by first showing using \Cref{lemma:main_concentration_lemma} that we do not incur much error when we replace the empirical average over the $x_i$ by an expectation over $x \sim \bbS^{d - 1}$. We can separate the right hand side of \Cref{eq:empirical_gradient_dot_product_diff} into two parts:
\begin{align}
S_1 
& = \frac{1}{n} \sum_{i = 1}^n \Big(f_{\rhoGD_t}(x_i) \sigma'(\uGD_t \cdot x_i) - f_{\rhohat_{\eta t}}(x_i) \sigma'(\uhat_{\eta t} \cdot x_i) \Big) (\deltaSingleGD_t \cdot x_i)
\end{align}
and
\begin{align}
S_2
& = \frac{1}{n} \sum_{i = 1}^n \Big(y(x_i) \sigma'(\uGD_t \cdot x_i) - y(x_i) \sigma'(\uhat_{\eta t} \cdot x_i) \Big) (\deltaSingleGD_t \cdot x_i) \period
\end{align}
We wish to show that $S_1$ and $S_2$ are very close to
\begin{align}
M_1
& = \E_{x \sim \bbS^{d - 1}} \Big[\Big(f_{\rhoGD_t}(x) \sigma'(\uGD_t \cdot x) - f_{\rhohat_{\eta t}}(x) \sigma'(\uhat_{\eta t} \cdot x) \Big) (\deltaSingleGD_t \cdot x)\Big]
\end{align}
and
\begin{align}
M_2
& = \E_{x \sim \bbS^{d - 1}} \Big[\Big(y(x_i) \sigma'(\uGD_t \cdot x_i) - y(x_i) \sigma'(\uhat_{\eta t} \cdot x_i) \Big) (\deltaSingleGD_t \cdot x_i)\Big]
\end{align}
respectively, with high probability --- then, we will obtain bounds on $M_1$ and $M_2$. We will show that $S_1$ and $M_1$ are close --- the proof that $S_2$ and $M_2$ are close is similar.

\paragraph{Expanding into Monomials} Our proof strategy is to expand the following quantity, for $x \in \bbS^{d - 1}$:
\begin{align}
G(x)
& = \Big(f_{\rhoGD_t}(x) \sigma'(\uGD_t \cdot x) - f_{\rhohat_{\eta t}}(x) \sigma'(\uhat_{\eta t} \cdot x) \Big) (\deltaSingleGD_t \cdot x)
\end{align}
and obtain concentration bounds for each of the terms. In this proof, we let $\Gamma$ denote the natural coupling between $\rhobar, \rhohat, \rhoGD$ which is the distribution over triples $(\ubar_{\eta t}(\chi), \uhat_{\eta t}(\chi), \uGD_{\eta t}(\chi))$, where $\chi$ is sampled from $\{\chi_1, \ldots, \chi_m\}$.
Then, we can write $G(x)$ as 
\begin{align}
G(x)
& = \E_{\ubar_{\eta t}', \uhat_{\eta t}', \uGD_t' \sim \Gamma} \Big[ \Big(\sigma(\uGD_t' \cdot x) \sigma'(\uGD_t \cdot x) - \sigma(\uhat_{\eta t}' \cdot x) \sigma'(\uhat_{\eta t} \cdot x)\Big) \cdot (\deltaSingleGD_t \cdot x) \Big] \period
\end{align}
For convenience, let us define
\begin{align} \label{eq:discrete_time_before_expanding}
g(x)
& = \Big(\sigma(\uGD_t' \cdot x) \sigma'(\uGD_t \cdot x) - \sigma(\uhat_{\eta t}' \cdot x) \sigma'(\uhat_{\eta t} \cdot x)\Big) \cdot (\deltaSingleGD_t \cdot x) \period
\end{align}
Clearly we have $G(x) = \E_{\ubar_{\eta t}', \uhat_{\eta t}', \uGD_t' \sim \Gamma} [g(x)]$. Now, we consider the effect of expanding $g(x)$ by expanding $\sigma$ and $\sigma'$ into monomials that involve $(\uGD_t' \cdot x)$, $(\uGD_t \cdot x)$, $(\uhat_{\eta t}' \cdot x)$ and $(\uhat_{\eta t} \cdot x)$. Since $\sigma$ is an even polynomial of degree $4$ and $\sigma'$ is an odd polynomial of degree $3$, we can write $g(x)$ as a linear combination of terms of the following form:
\begin{align} \label{eq:single_term}
\Big((\uGD_t' \cdot x)^{\idx{1}} (\uGD_t \cdot x)^{\idx{2}} - (\uhat_{\eta t}' \cdot x)^{\idx{1}} (\uhat_{\eta t} \cdot x)^{\idx{2}} \Big) (\deltaSingleGD_t \cdot x)
\end{align}
where $\idx{1} = 0, 2, 4$ and $\idx{2} = 1, 3$ --- each such term also has a constant factor of $O(\legendreCoeff{\sigma}{\idx{1}} \legendreCoeff{\sigma}{\idx{2}} \sqrt{N_{\idx{1}, d}} \sqrt{N_{\idx{2}, d}})$. We now consider the contribution of \Cref{eq:single_term} for each value of $\idx{1}, \idx{2}$. In the case where $\idx{1} = 0$, we can rewrite \Cref{eq:single_term} as $(\deltaSingleGD_t \cdot x)^2$ when $\idx{2} = 1$, and
\begin{align}
\Big((\uGD_t \cdot x)^2 + (\uGD_t \cdot x)(\uhat_{\eta t} \cdot x) + (\uhat_{\eta t} \cdot x)^2 \Big) (\deltaSingleGD_t \cdot x)^2
\end{align}
when $\idx{2} = 3$. In the case where $\idx{1}$ and $\idx{2}$ are both nonzero, letting $\deltaSingleGD_t' = \uGD_t' - \uhat_{\eta t}'$ we can write \Cref{eq:single_term} as 
\begin{align}
\Big((\uGD_t' \cdot x)^{\idx{1}} (\uGD_t \cdot x)^{\idx{2}} - (\uhat_{\eta t}' \cdot x)^{\idx{1}} & (\uhat_{\eta t} \cdot x)^{\idx{2}} \Big) (\deltaSingleGD_t \cdot x) \\
& = \Big((\uGD_t' \cdot x)^{\idx{1}} - (\uhat_{\eta t}' \cdot x)^{\idx{1}} \Big) (\uGD_t \cdot x)^{\idx{2}} (\deltaSingleGD_t \cdot x) \\
& \nextlinespace + (\uhat_{\eta t}' \cdot x)^{\idx{1}} \Big((\uGD_t \cdot x)^{\idx{2}} - (\uhat_{\eta t} \cdot x)^{\idx{2}} \Big) (\deltaSingleGD_t \cdot x) \\
& = \sum_{\substack{\idx{3} + \idx{4} = \idx{1} - 1 \\ \idx{3}, \idx{4} \geq 0}} (\uGD_t' \cdot x)^{\idx{3}} (\uhat_{\eta t}' \cdot x)^{\idx{4}} (\uGD_t' \cdot x - \uhat_{\eta t}' \cdot x) (\uGD_t \cdot x)^{\idx{2}} (\deltaSingleGD_t \cdot x) \\
& \nextlinespace + \sum_{\substack{\idx{3} + \idx{4} = \idx{2} - 1 \\ \idx{3}, \idx{4} \geq 0}} (\uhat_{\eta t}' \cdot x)^{\idx{1}} (\uGD_t \cdot x)^{\idx{3}} (\uhat_{\eta t} \cdot x)^{\idx{4}} (\uGD_t \cdot x - \uhat_{\eta t} \cdot x) (\deltaSingleGD_t \cdot x) \\
& = \sum_{\substack{\idx{3} + \idx{4} = \idx{1} - 1 \\ \idx{3}, \idx{4} \geq 0}} (\uGD_t' \cdot x)^{\idx{3}} (\uhat_{\eta t}' \cdot x)^{\idx{4}} (\uGD_t \cdot x)^{\idx{2}} (\deltaSingleGD_t \cdot x)(\deltaSingleGD_t' \cdot x) \\
& \nextlinespace + \sum_{\substack{\idx{3} + \idx{4} = \idx{2} - 1 \\ \idx{3}, \idx{4} \geq 0}} (\uhat_{\eta t}' \cdot x)^{\idx{1}} (\uGD_t \cdot x)^{\idx{3}} (\uhat_{\eta t} \cdot x)^{\idx{4}} (\deltaSingleGD_t \cdot x)^2 \period
\end{align}
In summary, \Cref{eq:single_term} can be written as a linear combination of terms of the form
\begin{align} \label{eq:single_term_type1}
(\uGD_t' \cdot x)^{\idx{3}} (\uhat_{\eta t} \cdot x)^{\idx{4}} (\uGD_t \cdot x)^{\idx{2}} (\deltaSingleGD_t \cdot x)(\deltaSingleGD_t' \cdot x)
\end{align}
where $\idx{3} + \idx{4} = 1, 3$ and $\idx{2} = 1, 3$, and terms of the form
\begin{align} \label{eq:single_term_type2}
(\uhat_{\eta t}' \cdot x)^{\idx{1}} (\uGD_t \cdot x)^{\idx{3}} (\uhat_{\eta t} \cdot x)^{\idx{4}} (\deltaSingleGD_t \cdot x)^2
\end{align}
where $\idx{1} = 0, 2, 4$ and $\idx{3} + \idx{4} = 0, 2$.

\paragraph{Contribution of \Cref{eq:single_term_type2} to Concentration Error} In \Cref{eq:single_term_type2}, we can write $\uGD_t = \uhat_{\eta t} + \deltaSingleGD_t$, and thus write \Cref{eq:single_term_type2} as a linear combination of terms of the form
\begin{align}
(\uhat_{\eta t}' \cdot x)^{\idx{1}} (\uhat_{\eta t} \cdot x)^{\idx{2}} (\deltaSingleGD_t \cdot x)^{2 + \idx{3}}
\end{align}
where $\idx{1} = 0, 2, 4$ and $\idx{2} + \idx{3} = 0, 2$. Next, in order to use \Cref{lemma:main_concentration_lemma}, we write $\uhat_{\eta t} = \ubar_{\eta t} + \delta_t$ and expand (recalling the definition of $\delta_t$ from \Cref{lemma:final_bounds_ABC}). Thus, \Cref{eq:single_term_type2} is a linear combination of terms of the form
\begin{align} \label{eq:single_term_type2_simplified}
(\ubar_{\eta t}' \cdot x)^{\idx{1}} (\delta_{\eta t}' \cdot x)^{\idx{2}}
(\ubar_{\eta t} \cdot x)^{\idx{3}}
(\delta_{\eta t} \cdot x)^{\idx{4}}
(\deltaSingleGD_t \cdot x)^{2 + \idx{5}}
\end{align}
where $\idx{1} + \idx{2} = 0, 2, 4$, and $\idx{3} + \idx{4} + \idx{5} = 0, 2$. Finally, we obtain a concentration bound for \Cref{eq:single_term_type2_simplified} using \Cref{lemma:main_concentration_lemma}:
\begin{align} \label{eq:type2_concentration_error}
\Big|\frac{1}{n} \sum_{i = 1}^n &(\ubar_{\eta t}' \cdot x_i)^{\idx{1}} (\delta_{\eta t}' \cdot x_i)^{\idx{2}}
(\ubar_{\eta t} \cdot x_i)^{\idx{3}}
(\delta_{\eta t} \cdot x_i)^{\idx{4}}
(\deltaSingleGD_t \cdot x_i)^{2 + \idx{5}} \\
& \nextlinespace\nextlinespace\nextlinespace - \E_{x \sim \bbS^{d - 1}} \Big[(\ubar_{\eta t}' \cdot x)^{\idx{1}} (\delta_{\eta t}' \cdot x)^{\idx{2}}
(\ubar_{\eta t} \cdot x)^{\idx{3}}
(\delta_{\eta t} \cdot x)^{\idx{4}}
(\deltaSingleGD_t \cdot x)^{2 + \idx{5}}\Big] \Big| \\
& \lesssim (\log d)^{O(1)} \frac{\|\delta_{\eta t}'\|_2^{\idx{2}} \|\delta_{\eta t}\|_2^{\idx{4}} \|\deltaSingleGD_t\|_2^{2 + \idx{5}}}{d^{(\idx{1} + \idx{2} + \idx{3} + \idx{4} + \idx{5})/2 + 1}} \Big(\sqrt{\frac{d}{n}} + \frac{d^{(\idx{2} + \idx{4} + \idx{5})/2 + 1}}{n} + \frac{1}{d^{\Omega(\log d)}} \Big) \period
\end{align}
Here, when applying \Cref{lemma:main_concentration_lemma}, $\ubar_{\eta t}$ and $\ubar_{\eta t}'$ take the role of $w_1$ and $w_2$, while $\delta_{\eta t}'/\|\delta_{\eta t}'\|_2$, $\delta_{\eta t}/\|\delta_{\eta t}\|_2$ and $\deltaSingleGD_t/\|\deltaSingleGD_t\|_2$ take the role of $u_1$, $u_2$, $u_3$ and $u_4$ (we let $u_3 = u_4 = \deltaSingleGD_t/\|\deltaSingleGD_t\|_2$ and choose $p_3$ and $p_4$ so that $p_3 + p_4 = 2 + \idx{5}$). Note that we implicitly perform a union bound over $\ubar_{\eta t}'$ and $\delta_{\eta t}'$ drawn from the support of $f_{\rhohat_{\eta t}}$ --- this does not affect the failure probability, since the failure probability for the result of \Cref{lemma:main_concentration_lemma} is $\frac{1}{d^{\Omega(\log d)}}$, and it is assumed in \Cref{thm:projectedGD_main} that $m \leq d^C$ for some universal constant $C$. When applying \Cref{lemma:main_concentration_lemma} at later points in this proof, we implicitly perform this union bound.

In order to compute the final contribution of each such term to the concentration error between $S_1$ and $M_1$, we must first (1) incorporate the coefficients of $\sigma$ and $\sigma'$ with respect to the Legendre polynomials $\PtwoD$ and $\PfourD$, (2) take the average of $\|\delta'_{\eta t}\|_2^b$ with respect to $\chi$ (which is implicit throughout this proof), and (3) use the fact that $n \gtrsim d^3 (\log d)^{O(1)}$.

Recall that $\sigma = \sigmaCoeffTwo \sqrt{N_{2, d}} \PtwoD + \sigmaCoeffFour \sqrt{N_{4, d}} \PfourD$. Using \Cref{eq:legendre_polynomial_2_4} and the fact that $N_{2, d} \asymp d^2$ and $N_{4, d} \asymp d^4$ (by Equation (2.10) of \citet{atkinson2012spherical}), we find that, up to constant factors, the coefficient of the $0^{\textup{th}}$-order term in $\sigma(t)$ can be bounded above by $\max(\sigmaCoeffTwo, \sigmaCoeffFour)$, the coefficient of the $2^{\textup{nd}}$-order term in $\sigma(t)$ can be bounded above by $\max(\sigmaCoeffTwo, \sigmaCoeffFour) d$, and the coefficient of the $4^{\textup{th}}$ order term can be bounded above by $\sigmaCoeffFour d^2$. Similarly, up to constant factors, the coefficient of the $1^{\textup{st}}$-order term in $\sigma'(t)$ can be bounded above by $\max(\sigmaCoeffTwo, \sigmaCoeffFour) d$ and the coefficient of the $3^{\textup{rd}}$-order term in $\sigma'(t)$ can be bounded above by $\max(\sigmaCoeffTwo, \sigmaCoeffFour) d^2$. Thus, each term of degree $k$ originating from $\sigma$ contributes a factor of $\max(\sigmaCoeffTwo, \sigmaCoeffFour) d^{k/2}$ to the coefficient of \Cref{eq:single_term_type2}, and each term of degree $k - 1$ originating from $\sigma'$ contributes a factor of $\max(\sigmaCoeffTwo, \sigmaCoeffFour) d^{k/2}$ to the coefficient of \Cref{eq:single_term_type2}. Finally, for each term in \Cref{eq:single_term_type2_simplified} of total degree $\idx{1} + \idx{2} + \idx{3} + \idx{4} + \idx{5} + 2$, note that $\sigma$ and $\sigma'$ together contribute a factor of degree $\idx{1} + \idx{2} + \idx{3} + \idx{4} + \idx{5} + 1$, since in \Cref{eq:discrete_time_before_expanding}, aside from the factors contributed by $\sigma$ and $\sigma'$, only a single factor of $\deltaSingleGD_t \cdot x$ is present.

Thus, to obtain the final contribution of \Cref{eq:type2_concentration_error} to the overall error between $M_1$ and $S_1$, we multiply by an additional coefficient of $\max(\sigmaCoeffTwo^2, \sigmaCoeffFour^2) d^{(\idx{1} + \idx{2} + \idx{3} + \idx{4} + \idx{5})/2 + 1}$. Taking this into account, the contribution of \Cref{eq:type2_concentration_error} is
\begin{align}
(\sigmaCoeffTwo^2 + \sigmaCoeffFour^2) (\log d)^{O(1)} \|\delta_{\eta t}'\|_2^{\idx{2}} \|\delta_{\eta t}\|_2^{\idx{4}} \|\deltaSingleGD_t\|_2^{2 + \idx{5}} \Big(\sqrt{\frac{d}{n}} + \frac{d^{(\idx{2} + \idx{4} + \idx{5})/2 + 1}}{n} + \frac{1}{d^{\Omega(\log d)}} \Big) \period
\end{align}
Next, we simplify this using casework on $\idx{2}$. For the case $\idx{2} \leq 1$, since $\idx{4} + \idx{5} \leq 2$, the bound is at most
\begin{align}
(\sigmaCoeffTwo^2 + \sigmaCoeffFour^2) \|\delta_{\eta t}'\|_2^{\idx{2}} \|\delta_{\eta t}\|_2^{\idx{4}} \|\deltaSingleGD_t\|_2^{2 + \idx{5}}
& \lesssim (\sigmaCoeffTwo^2 + \sigmaCoeffFour^2) \|\deltaSingleGD_t\|_2^2
\end{align}
since $d^{(\idx{2} + \idx{4} + \idx{5})/2 + 1} \leq d^{2.5}$, and because $n \gtrsim d^3 (\log d)^{O(1)}$, and we eliminate a factor of $\|\delta_{\eta t}'\|_2^{\idx{2}} \|\delta_{\eta t}\|_2^{\idx{4}} \|\deltaSingleGD_t\|_2^{\idx{5}}$ because all of the particles have $\ell_2$ norm at most $1$. Additionally, for the case $\idx{2} \geq 2$, after taking the average of $\|\delta'_{\eta t}\|_2^{\idx{2}} \lesssim \|\delta'_{\eta t}\|_2^2$ across all $\chi$, the bound is at most
\begin{align}
(\sigmaCoeffTwo^2 + \sigmaCoeffFour^2) & (\log d)^{O(1)} \deltaBar_{\eta t}^2 \|\deltaSingleGD_t\|_2^2 \Big(\sqrt{\frac{d}{n}} + \frac{d^{(\idx{2} + \idx{4} + \idx{5})/2 + 1}}{n} + \frac{1}{d^{\Omega(\log d)}} \Big) \\
& \lesssim (\sigmaCoeffTwo^2 + \sigmaCoeffFour^2) \deltaBar_{\eta t}^2 \|\deltaSingleGD_t\|_2^2 \Big(d^{\idx{2}/2 - 1} + O\Big(\frac{1}{d}\Big) \Big)
& \tag{B.c. $n \gtrsim d^3 (\log d)^{O(1)}$ and $\idx{4} + \idx{5} \leq 2$} \\
& \lesssim (\sigmaCoeffTwo^2 + \sigmaCoeffFour^2) \|\deltaSingleGD_t\|_2^2
& \tag{B.c. $\deltaBar_{\eta t} \leq \frac{1}{\sqrt{d}}$ and $\idx{2} \leq 4$}
\end{align}
where we have $\deltaBar_{\eta t} \leq \frac{1}{\sqrt{d}}$ because by \Cref{lemma:IH_is_maintained}, we have that \Cref{assumption:inductive_hypothesis} holds as long as $\eta t \leq \Tstar$. In summary, we have shown that terms of the form \Cref{eq:single_term_type2} contribute at most $(\sigmaCoeffTwo^2 + \sigmaCoeffFour^2) \|\deltaSingleGD_t\|_2^2$, up to constant factors, to the concentration error between $S_1$ and $M_2$.

\paragraph{Contribution of \Cref{eq:single_term_type1} to Concentration Error} We expand \Cref{eq:single_term_type1} by using $\uGD_t = \deltaSingleGD_t + \delta_{\eta t} + \ubar_{\eta t}$, $\uhat_{\eta t} = \delta_{\eta t} + \ubar_{\eta t}$ and $\uGD_t' = \deltaSingleGD_t' + \delta_{\eta t}' + \ubar_{\eta t}'$. Thus, we can write \Cref{eq:single_term_type1} as a linear combination of terms of the form
\begin{align} \label{eq:single_term_type1_simplified}
(\deltaSingleGD_t \cdot x)^{1 + \idx{1}} (\deltaSingleGD_t' \cdot x)^{1 + \idx{2}} (\delta_{\eta t} \cdot x)^{\idx{3}} (\delta_{\eta t}' \cdot x)^{\idx{4}} (\ubar_{\eta t} \cdot x)^{\idx{5}} (\ubar_{\eta t}' \cdot x)^{\idx{6}}
\end{align}
where $\idx{1} + \idx{2} + \idx{3} + \idx{4} + \idx{5} + \idx{6} = 6$ (since the degrees of each of the monomials must be $8$) and $\idx{3} \leq 3$ (since each of these terms originates from $(\uGD_t' \cdot x)^{\idx{3}} (\uhat_{\eta t}' \cdot x)^{\idx{4}} (\uGD_t \cdot x)^{\idx{2}} (\deltaSingleGD_t \cdot x) (\deltaSingleGD_t' \cdot x)$ where $\idx{2} \leq 3$). Using \Cref{lemma:main_concentration_lemma}, we obtain a concentration bound for \Cref{eq:single_term_type1_simplified}:
\begin{align} \label{eq:type1_concentration_error}
\Big|\frac{1}{n} \sum_{i = 1}^n & (\deltaSingleGD_t \cdot x_i)^{1 + \idx{1}} (\deltaSingleGD_t' \cdot x_i)^{1 + \idx{2}} (\delta_{\eta t} \cdot x_i)^{\idx{3}} (\delta_{\eta t}' \cdot x_i)^{\idx{4}} (\ubar_{\eta t} \cdot x_i)^{\idx{5}} (\ubar_{\eta t}' \cdot x_i)^{\idx{6}} \\
& \nextlinespace - \E_{x \sim \bbS^{d - 1}} \Big[(\deltaSingleGD_t \cdot x)^{1 + \idx{1}} (\deltaSingleGD_t' \cdot x)^{1 + \idx{2}} (\delta_{\eta t} \cdot x)^{\idx{3}} (\delta_{\eta t}' \cdot x)^{\idx{4}} (\ubar_{\eta t} \cdot x)^{\idx{5}} (\ubar_{\eta t}' \cdot x)^{\idx{6}}\Big] \Big| \\
& \lesssim (\log d)^{O(1)} \frac{\|\deltaSingleGD_t\|_2^{1 + \idx{1}} \|\deltaSingleGD_t'\|_2^{1 + \idx{2}} \|\delta_{\eta t}\|_2^{\idx{3}} \|\delta_{\eta t}'\|_2^{\idx{4}}}{d^{1 + (\idx{1} + \idx{2} + \idx{3} + \idx{4} + \idx{5} + \idx{6})/2}} \Big(\sqrt{\frac{d}{n}} + \frac{d^{1 + (\idx{1} + \idx{2} + \idx{3} + \idx{4})/2}}{n} + \frac{1}{d^{\Omega(\log d)}} \Big) \period
\end{align}
As discussed in the previous case, after we consider the coefficients of $\sigma(t)$ and $\sigma'(t)$, we find that the term in \Cref{eq:single_term_type1_simplified} contributes
\begin{align}
(\sigmaCoeffTwo^2 + \sigmaCoeffFour^2) (\log d)^{O(1)} \|\deltaSingleGD_t\|_2^{1 + \idx{1}} \|\deltaSingleGD_t'\|_2^{1 + \idx{2}} \|\delta_{\eta t}\|_2^{\idx{3}} \|\delta_{\eta t}'\|_2^{\idx{4}} \Big(\sqrt{\frac{d}{n}} + \frac{d^{1 + (\idx{1} + \idx{2} + \idx{3} + \idx{4})/2}}{n} + \frac{1}{d^{\Omega(\log d)}} \Big) \period
\end{align}
Now, suppose $\idx{5} + \idx{6} \geq 2$. Then, $\idx{1} + \idx{2} + \idx{3} + \idx{4} \leq 4$, meaning the contribution to the concentration error is at most
\begin{align}
(\sigmaCoeffTwo^2 + \sigmaCoeffFour^2)\|\deltaSingleGD_t\|_2 \|\deltaSingleGD_t'\|_2
& \leq (\sigmaCoeffTwo^2 + \sigmaCoeffFour^2) \max_{\chi} \|\deltaSingleGD_t(\chi)\|_2^2
\end{align}
using the fact that $n \gtrsim d^3 (\log d)^C$ for a sufficiently large constant $C$, and $\|\deltaSingleGD_t\|_2 \lesssim 1$ and $\|\delta_{\eta t}\|_2 \lesssim 1$. On the other hand, if $\idx{5} + \idx{6} \leq 1$, then $\idx{1} + \idx{2} + \idx{3} + \idx{4} \geq 5$, and since $\idx{3} \leq 3$, this implies that $\idx{1} + \idx{2} + \idx{4} \geq 2$. In this case, the bound is at most
\begin{align}
(\sigmaCoeffTwo^2 + \sigmaCoeffFour^2) & (\log d)^{O(1)} \|\deltaSingleGD_t\|_2^{1 + \idx{1}} \|\deltaSingleGD_t'\|_2^{1 + \idx{2}} \|\delta_{\eta t}\|_2^{\idx{3}} \|\delta_{\eta t}'\|_2^{\idx{4}} \Big(\sqrt{\frac{d}{n}} + \frac{d^{1 + (\idx{1} + \idx{2} + \idx{3} + \idx{4})/2}}{n} + \frac{1}{d^{\Omega(\log d)}} \Big) \\
& \lesssim (\sigmaCoeffTwo^2 + \sigmaCoeffFour^2) \|\deltaSingleGD_t\|_2^{1 + \idx{1}} \|\deltaSingleGD_t'\|_2^{1 + \idx{2}} \|\delta_{\eta t}\|_2^{\idx{3}} \|\delta_{\eta t}'\|_2^{\idx{4}} d
& \tag{B.c. $\idx{1} + \idx{2} + \idx{3} + \idx{4} \leq 6$ and $n \gtrsim d^3 (\log d)^C$} \\
& \lesssim (\sigmaCoeffTwo^2 + \sigmaCoeffFour^2) \max_\chi \|\deltaSingleGD_t(\chi)\|_2^2 \cdot \frac{1}{d^{(\idx{1} + \idx{2})/2}} \|\delta_{\eta t}'\|_2^{\idx{4}} d \period
& \tag{B.c. $\max_\chi \|\deltaSingleGD_t(\chi)\|_2 \leq \frac{1}{\sqrt{d}}$ and $\|\delta_{\eta t}\|_2 \lesssim 1$}
\end{align}
Thus, if $\idx{1} + \idx{2} \geq 2$, then the contribution of this term to the overall concentration error is at most $(\sigmaCoeffTwo^2 + \sigmaCoeffFour^2) \max_\chi \|\deltaSingleGD_t(\chi)\|_2^2$. On the other hand, if $\idx{4} = 1$, then we can take the average of $\|\delta_{\eta t}'\|_2$ across $\chi$, and by \Cref{lemma:IH_is_maintained}, we have that \Cref{assumption:inductive_hypothesis} holds for $\eta t \leq \Tstar$, meaning that $\E_\chi \|\delta_{\eta t}'\|_2 \leq \deltaBar_{\eta t} \leq \frac{1}{\sqrt{d}}$. In this case, the contribution of this term to the overall concentration error is at most $(\sigmaCoeffTwo^2 + \sigmaCoeffFour^2) \max_\chi \|\deltaSingleGD_t(\chi)\|_2^2 \cdot \frac{1}{d^{(\idx{1} + \idx{2} + \idx{4})/2}} d = (\sigmaCoeffTwo^2 + \sigmaCoeffFour^2) \max_\chi \|\deltaSingleGD_t(\chi)\|_2^2$. Finally, when $\idx{4} \geq 2$, we can take the average of $\|\delta_{\eta t}'\|_2^2$ over $\chi$ --- we have $\E_\chi \|\delta_{\eta t}'\|_2^2 \leq \frac{1}{d}$, and therefore the contribution of this term to the overall concentration error is at most
\begin{align}
(\sigmaCoeffTwo^2 + \sigmaCoeffFour^2) \max_\chi \|\deltaSingleGD_t(\chi)\|_2^2 \cdot \frac{1}{d^{1 + (\idx{1} + \idx{2})/2}} d
\leq (\sigmaCoeffTwo^2 + \sigmaCoeffFour^2) \max_\chi \|\deltaSingleGD_t(\chi)\|_2^2 \period
\end{align}
In all cases, the contribution of \Cref{eq:single_term_type1} to the overall concentration error between $S_1$ and $M_1$ is at most 
\begin{align}
(\sigmaCoeffTwo^2 + \sigmaCoeffFour^2) \max_\chi \|\deltaSingleGD_t(\chi)\|_2^2 \period
\end{align}
\paragraph{Overall Concentration Error}

In summary, the overall concentration error between $S_1$ and $M_1$ is that of $G(x)$. We have expanded $G(x)$ into various monomials of dot products involving $x$, and shown that each monomial contributes at most $(\sigmaCoeffTwo^2 + \sigmaCoeffFour^2) \max_\chi \|\deltaSingleGD_t(\chi)\|_2^2$ to the concentration error. Thus, with high probability,
\begin{align}
|M_1 - S_1|
& \lesssim (\sigmaCoeffTwo^2 + \sigmaCoeffFour^2) \max_\chi \|\deltaSingleGD_t(\chi)\|_2^2 \period
\end{align}
In order to bound $|M_2 - S_2|$, an almost identical, but simpler, argument can be used, with $y(x)$ in the place of $f_{\rhoGD_t}(x)$ or $f_{\rhohat_{\eta t}}(x)$. In the modified argument, $\uGD_t'$, $\uhat_{\eta t}'$ and $\ubar_{\eta t}'$ will all be replaced with $\e$. Thus, in the analysis of \Cref{eq:single_term_type2_simplified}, we can replace $\ubar_{\eta t}'$ with $\e$ and assume that $\idx{2} = 0$ (since otherwise this term becomes $0$ due to the $\delta_{\eta t}' \cdot x$ factor) --- we can then proceed similarly to the subcase where $\idx{2} \leq 1$ in the analysis of \Cref{eq:single_term_type2_simplified}. Additionally, the term analogous to \Cref{eq:single_term_type1_simplified} does not arise in the case where $f_{\rhoGD_t}(x)$ and $f_{\rhohat_{\eta t}}(x)$ are replaced by $y(x)$, since $\uGD_t'$ and $\uhat_{\eta t}'$ are both replaced by $\e$, so the corresponding term is $0$. We also note that the Legendre coefficients of $y(x)$ have absolute values which are less than those of $\sigma$. Thus, we have
\begin{align}
|M_1 - S_1| + |M_2 - S_2| \leq (\sigmaCoeffTwo^2 + \sigmaCoeffFour^2) \max_\chi \|\deltaSingleGD_t(\chi)\|_2^2 \period
\end{align}
\paragraph{Bounding the Expectation} To complete the proof, it suffices to bound $|M_1 - M_2|$. We have that
\begin{align}
M_1 - M_2
& = \nabla_{\uGD_t} L(\rhoGD_t) \cdot \deltaSingleGD_t - \nabla_{\uhat_{\eta t}} L(\rhohat_{\eta t}) \cdot \deltaSingleGD_t
\end{align}
and by \Cref{lemma:gradient_expectation_inner_product}, we have
\begin{align}
|M_1 - M_2|
& \lesssim (\sigmaCoeffTwo^2 + \sigmaCoeffFour^2) \Big(\E_{(\uGD_t', \uhat_{\eta t}') \sim (\rhoGD_t, \rhohat_{\eta t})} \Big[\|\uGD_t' - \uhat_{\eta t}'\|_2^2\Big]^{1/2} + \|\uGD_t - \uhat_{\eta t}\|_2 \Big) \|\deltaSingleGD_t\|_2 \\
& \lesssim (\sigmaCoeffTwo^2 + \sigmaCoeffFour^2) \max_\chi \|\deltaSingleGD_t (\chi)\|_2^2
\end{align}
Combining this with our bound on $|M_1 - S_1| + |M_2 - S_2|$, we find that
\begin{align}
|(\pgrad_{\uGD_t} \Lhat(\rhoGD_t) - \pgrad_{\uhat_{\eta t}} \Lhat(\rhohat_{\eta t})) \cdot \deltaSingleGD_t|
& \lesssim (\sigmaCoeffTwo^2 + \sigmaCoeffFour^2) \max_\chi \|\deltaSingleGD_t (\chi)\|_2^2
\end{align}
as desired.
\end{proof}

\begin{lemma} \label{lemma:hat_grad_diff}
Suppose we are in the setting of \Cref{thm:projectedGD_main}. Suppose $s, t \geq 0$. Then,
\begin{align}
\|\pgrad_{\uhat_t} \Lhat(\rhohat_t) - \pgrad_{\uhat_s} \Lhat(\rhohat_s)\|_2 \leq (\sigmaCoeffTwo^2 + \sigmaCoeffFour^2) d^4 \cdot \max_{i \in [m]} \|\uhat_t(\chi_i) - \uhat_s(\chi_i)\|_2
\end{align}
\end{lemma}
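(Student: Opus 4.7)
The plan is to prove this Lipschitz-type estimate by decomposing the difference of projected gradients into three manageable pieces and controlling each via the uniform sup-norm bounds on $\sigma$, $\sigma'$, $\sigma''$ from \Cref{lemma:sigma_bound} together with \Cref{lemma:gradient_bounds}. Let $\Delta := \max_{i\in[m]}\|\uhat_t(\chi_i)-\uhat_s(\chi_i)\|_2$. First I would write the projected gradient as $(I-\uhat_t\uhat_t^\top)\nabla_{\uhat_t}\Lhat(\rhohat_t)$ and apply the identity
\begin{align}
\pgrad_{\uhat_t}\Lhat(\rhohat_t) - \pgrad_{\uhat_s}\Lhat(\rhohat_s)
&= (I-\uhat_t\uhat_t^\top)\bigl[\nabla_{\uhat_t}\Lhat(\rhohat_t)-\nabla_{\uhat_s}\Lhat(\rhohat_s)\bigr] \\
&\quad + (\uhat_s\uhat_s^\top - \uhat_t\uhat_t^\top)\nabla_{\uhat_s}\Lhat(\rhohat_s).
\end{align}
The last term is bounded using $\|\uhat_s\uhat_s^\top-\uhat_t\uhat_t^\top\|_{\mathrm{op}}\leq 2\|\uhat_t-\uhat_s\|_2\leq 2\Delta$ combined with $\|\nabla_{\uhat_s}\Lhat(\rhohat_s)\|_2\lesssim(\sigmaCoeffTwo^2+\sigmaCoeffFour^2)d^4$ from \Cref{lemma:gradient_bounds}, yielding a contribution of the desired order.

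For the main term I would further split the difference of non-projected gradients as
\begin{align}
\nabla_{\uhat_t}\Lhat(\rhohat_t)-\nabla_{\uhat_s}\Lhat(\rhohat_s)
&= \frac{1}{n}\sum_{i=1}^n \bigl(f_{\rhohat_t}(x_i)-f_{\rhohat_s}(x_i)\bigr)\sigma'(\uhat_t\cdot x_i)\,x_i \\
&\quad + \frac{1}{n}\sum_{i=1}^n \bigl(f_{\rhohat_s}(x_i)-y(x_i)\bigr)\bigl(\sigma'(\uhat_t\cdot x_i)-\sigma'(\uhat_s\cdot x_i)\bigr)x_i,
\end{align}
and bound each sum in the worst case over $i$, since $(I-\uhat_t\uhat_t^\top)$ is an orthogonal projection and $\|x_i\|_2=1$. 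By \Cref{lemma:sigma_bound}, $|\sigma|,|\sigma'|,|\sigma''|\lesssim|\sigmaCoeffTwo|\sqrt{N_{2,d}}+|\sigmaCoeffFour|\sqrt{N_{4,d}}\lesssim|\sigmaCoeffTwo|d+|\sigmaCoeffFour|d^2$ uniformly on $[-1,1]$, so $\sigma$ and $\sigma'$ are Lipschitz with constant $L\lesssim|\sigmaCoeffTwo|d+|\sigmaCoeffFour|d^2$.

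For the first of these two sums, $|f_{\rhohat_t}(x_i)-f_{\rhohat_s}(x_i)|\leq \frac{1}{m}\sum_j|\sigma(\uhat_t(\chi_j)\cdot x_i)-\sigma(\uhat_s(\chi_j)\cdot x_i)|\leq L\,\Delta$, and $|\sigma'(\uhat_t\cdot x_i)|\leq L$, giving a bound of $L^2\Delta\lesssim(\sigmaCoeffTwo^2+\sigmaCoeffFour^2)d^4\Delta$. For the second sum, $|f_{\rhohat_s}(x_i)-y(x_i)|\lesssim L$ (both $f$ and $y$ are uniformly bounded by $|\sigmaCoeffTwo|\sqrt{N_{2,d}}+|\sigmaCoeffFour|\sqrt{N_{4,d}}$) and $|\sigma'(\uhat_t\cdot x_i)-\sigma'(\uhat_s\cdot x_i)|\leq L\|\uhat_t-\uhat_s\|_2\leq L\Delta$, again yielding $L^2\Delta\lesssim(\sigmaCoeffTwo^2+\sigmaCoeffFour^2)d^4\Delta$. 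Summing all three contributions gives the claimed inequality.

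There is no real obstacle here; the proof is essentially a uniform Lipschitz estimate. The only thing to keep track of is the bookkeeping of which factor of $d$ comes from which place: the two $\sqrt{N_{k,d}}$ factors in $\sigma\cdot\sigma'$ (or in $(f-y)\cdot(\sigma'(u_t)-\sigma'(u_s))$) multiply to give $N_{2,d}^{1/2}N_{4,d}^{1/2}\lesssim d^3$ in the cross terms, and the worst case $N_{4,d}\lesssim d^4$ dominates, matching the $d^4$ in \Cref{lemma:gradient_bounds}.
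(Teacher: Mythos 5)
Your proof is correct and follows essentially the same approach as the paper: a three-term telescoping decomposition bounded by the uniform sup-norm estimates on $\sigma$, $\sigma'$, $f_\rho - y$ from \Cref{lemma:sigma_bound} together with the operator-norm bound on the projection difference. The paper keeps the projection factor $(I - \uhat \uhat^\top) x_i$ inside the per-sample telescoping while you pull the projection difference out as a separate term and apply \Cref{lemma:gradient_bounds} to it, but since $P_t - P_s$ does not depend on the sample index $i$ these are the same calculation up to the order of operations.
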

\begin{proof}
We have
\begin{align}
\|\pgrad_{\uhat_t}\Lhat(\rhohat_t) & - \pgrad_{\uhat_s} \Lhat(\rhohat_s)\|_2 \\
& = \| (I - \uhat_t \uhat_t^\top) \nabla_{\uhat_t} \Lhat(\rhohat_t) - (I - \uhat_s \uhat_s^\top) \nabla_{\uhat_s} \Lhat(\rhohat_s) \|_2 \\
& = \Big\|(I - \uhat_t \uhat_t^\top) \cdot \frac{1}{n} \sum_{i = 1}^n (f_{\rhohat_t}(x_i) - y(x_i)) \sigma'(\uhat_t^\top x_i) x_i \\
& \nextlinespace\nextlinespace - (I - \uhat_s \uhat_s^\top) \cdot \frac{1}{n} \sum_{i = 1}^n (f_{\rhohat_s}(x_i) - y(x_i)) \sigma'(\uhat_s^\top x_i) x_i \Big\|_2 \\
& \leq \frac{1}{n} \sum_{i = 1}^n \|(f_{\rhohat_t}(x_i) - y(x_i)) \sigma'(\uhat_t^\top x_i) (I - \uhat_t \uhat_t^\top) x_i - (f_{\rhohat_s}(x_i) - y(x_i)) \sigma'(\uhat_s^\top x_i) (I - \uhat_s \uhat_s^\top) x_i \|_2 \period \label{eq:gradient_diff_1}
\end{align}
We bound the right-hand side by several applications of the triangle inequality. First, observe that for any $x \in \bbS^{d - 1}$, we have
\begin{align}
|f_{\rhohat_t}(x) - f_{\rhohat_s}(x)|
& \leq \frac{1}{m} \sum_{j = 1}^m |\sigma(\uhat_t(\chi_i) \cdot x) - \sigma(\uhat_s(\chi_i) \cdot x)| \\
& \lesssim \frac{1}{m} \sum_{j = 1}^m (|\sigmaCoeffTwo| \sqrt{N_{2, d}} + |\sigmaCoeffFour| \sqrt{N_{4, d}}) |\uhat_t(\chi_i) \cdot x - \uhat_s(\chi_i) \cdot x| 
& \tag{By \Cref{lemma:sigma_bound}} \\
& \lesssim \frac{(|\sigmaCoeffTwo| \sqrt{N_{2, d}} + |\sigmaCoeffFour| \sqrt{N_{4, d}})}{m} \sum_{j = 1}^m \|\uhat_t(\chi_i) - \uhat_s(\chi_i)\|_2 \\
& \lesssim (|\sigmaCoeffTwo| \sqrt{N_{2, d}} + |\sigmaCoeffFour| \sqrt{N_{4, d}}) \max_{i \in [m]} \|\uhat_t(\chi_i) - \uhat_s(\chi_i)\|_2 \period \label{eq:triangle_ineq_comp_1}
\end{align}
Additionally, for $\uhat_t$ and $\uhat_s$ with the same initialization $\chi$, we have
\begin{align}
|\sigma'(\uhat_t^\top x) - \sigma'(\uhat_s^\top x)|
& \lesssim (|\sigmaCoeffTwo| \sqrt{N_{2, d}} + |\sigmaCoeffFour| \sqrt{N_{4, d}}) \cdot |\uhat_t^\top x - \uhat_s^\top x|
& \tag{By \Cref{lemma:sigma_bound}} \\
& \lesssim (|\sigmaCoeffTwo| \sqrt{N_{2, d}} + |\sigmaCoeffFour| \sqrt{N_{4, d}}) \|\uhat_t - \uhat_s\|_2 \period \\
& \lesssim (|\sigmaCoeffTwo| \sqrt{N_{2, d}} + |\sigmaCoeffFour| \sqrt{N_{4, d}}) \cdot \max_{i \in [m]} \|\uhat_t(\chi_i) - \uhat_s(\chi_i)\|_2 \period \label{eq:triangle_ineq_comp_2}
\end{align}
Also, for $\uhat_t$ and $\uhat_s$ with the same initialization $\chi$, we have
\begin{align}
\|(I - \uhat_t \uhat_t^\top) x_i - (I - \uhat_s \uhat_s^\top)x_i \|_2
& = \|(\uhat_t \cdot x_i) \uhat_t - (\uhat_s \cdot x_i) \uhat_s\|_2 \\
& \leq \|(\uhat_t \cdot x_i - \uhat_s \cdot x_i) \uhat_t\|_2 + |\uhat_s \cdot x_i| \|\uhat_t - \uhat_s\|_2 \\
& \lesssim \max_{i \in [m]} \|\uhat_t(\chi_i) - \uhat_s(\chi_i)\|_2 \period \label{eq:triangle_ineq_comp_3}
\end{align}
Finally, for $x \in \bbS^{d - 1}$ and any distribution $\rho$ on $\bbS^{d - 1}$, we have
\begin{align}
|f_\rho(x) - y(x)|
& \lesssim |\sigmaCoeffTwo| \sqrt{N_{2, d}} + |\sigmaCoeffFour| \sqrt{N_{4, d}} \label{eq:triangle_ineq_comp_4}
\end{align}
since we can bound $|f_\rho(x)|$ using \Cref{lemma:sigma_bound}, and because (ignoring $\legendreCoeff{\target}{0}$ since it is equal to $\legendreCoeff{\sigma}{0}$) $y(x) = \gamma_2 \sigmaCoeffTwo \PtwoD + \gamma_4 \sigmaCoeffFour \PfourD$, and using \Cref{assumption:target}.

Thus, we can simplify the right-hand side of \Cref{eq:gradient_diff_1} using the triangle inequality, and combining \Cref{eq:triangle_ineq_comp_1}, \Cref{eq:triangle_ineq_comp_2}, \Cref{eq:triangle_ineq_comp_3} and \Cref{eq:triangle_ineq_comp_4}:
\begin{align}
\|(f_{\rhohat_t}(x_i) - y(x_i)) & \sigma'(\uhat_t^\top x_i) (I - \uhat_t \uhat_t^\top) x_i - (f_{\rhohat_s}(x_i) - y(x_i)) \sigma'(\uhat_s^\top x_i) (I - \uhat_s \uhat_s^\top) x_i \|_2 \\
& \leq |f_{\rhohat_t}(x_i) - f_{\rhohat_s}(x_i)| \|\sigma'(\uhat_t^\top x_i) (I - \uhat_t \uhat_t^\top) x_i\|_2 \\
& \nextlinespace\nextlinespace + |f_{\rhohat_s}(x_i) - y(x_i)| |\sigma'(\uhat_t^\top x_i) - \sigma'(\uhat_s^\top x_i)| \|(I - \uhat_t \uhat_t^\top)x_i \|_2 \\
& \nextlinespace\nextlinespace + |f_{\rhohat_s}(x_i) - y(x_i)||\sigma'(\uhat_s^\top x_i)| \|(I - \uhat_t \uhat_t^\top)x_i - (I - \uhat_s \uhat_s^\top) x_i\|_2 \\
& \lesssim (\sigmaCoeffTwo^2 N_{2, d} + \sigmaCoeffFour^2 N_{4, d}) \cdot \max_{i \in [m]} \|\uhat_t(\chi_i) - \uhat_s(\chi_i)\|_2 \\
& \nextlinespace\nextlinespace + |f_{\rhohat_s}(x_i) - y(x_i)| |\sigma'(\uhat_t^\top x_i) - \sigma'(\uhat_s^\top x_i)| \|(I - \uhat_t \uhat_t^\top)x_i \|_2 \\
& \nextlinespace\nextlinespace + |f_{\rhohat_s}(x_i) - y(x_i)||\sigma'(\uhat_s^\top x_i)| \|(I - \uhat_t \uhat_t^\top)x_i - (I - \uhat_s \uhat_s^\top) x_i\|_2 
& \tag{By \Cref{eq:triangle_ineq_comp_1}, bounding $|\sigma'(\uhat_t^\top x_i)|$ by \Cref{lemma:sigma_bound}} \\
& \lesssim (\sigmaCoeffTwo^2 N_{2, d} + \sigmaCoeffFour^2 N_{4, d}) \cdot \max_{i \in [m]} \|\uhat_t(\chi_i) - \uhat_s(\chi_i)\|_2 \\
& \nextlinespace\nextlinespace + (\sigmaCoeffTwo^2 N_{2, d} + \sigmaCoeffFour^2 N_{4, d}) \cdot \max_{i \in [m]} \|\uhat_t(\chi_i) - \uhat_s(\chi_i)\|_2 \\
& \nextlinespace\nextlinespace + |f_{\rhohat_s}(x_i) - y(x_i)||\sigma'(\uhat_s^\top x_i)| \|(I - \uhat_t \uhat_t^\top)x_i - (I - \uhat_s \uhat_s^\top) x_i\|_2 
& \tag{By \Cref{eq:triangle_ineq_comp_4} and \Cref{eq:triangle_ineq_comp_2}} \\
& \lesssim (\sigmaCoeffTwo^2 N_{2, d} + \sigmaCoeffFour^2 N_{4, d}) \cdot \max_{i \in [m]} \|\uhat_t(\chi_i) - \uhat_s(\chi_i)\|_2 \\
& \nextlinespace\nextlinespace + (\sigmaCoeffTwo^2 N_{2, d} + \sigmaCoeffFour^2 N_{4, d}) \cdot \max_{i \in [m]} \|\uhat_t(\chi_i) - \uhat_s(\chi_i)\|_2 \\
& \nextlinespace\nextlinespace + (\sigmaCoeffTwo^2 N_{2, d} + \sigmaCoeffFour^2 N_{4, d}) \cdot \max_{i \in [m]} \|\uhat_t(\chi_i) - \uhat_s(\chi_i)\|_2
& \tag{By \Cref{eq:triangle_ineq_comp_3}, \Cref{eq:triangle_ineq_comp_4} and \Cref{lemma:sigma_bound}} \\
& \lesssim (\sigmaCoeffTwo^2 N_{2, d} + \sigmaCoeffFour^2 N_{4, d}) \cdot \max_{i \in [m]} \|\uhat_t(\chi_i) - \uhat_s(\chi_i)\|_2 \period
\end{align}
Therefore, \Cref{eq:gradient_diff_1} gives
\begin{align}
\|\pgrad_{\uhat_t}\Lhat(\rhohat_t) - \pgrad_{\uhat_s} \Lhat(\rhohat_s)\|_2
& \lesssim (\sigmaCoeffTwo^2 N_{2, d} + \sigmaCoeffFour^2 N_{4, d}) \cdot \max_{i \in [m]} \|\uhat_t(\chi_i) - \uhat_s(\chi_i)\|_2
\end{align}
as desired.
\end{proof}

\begin{proof}[Proof of \Cref{thm:projectedGD_main}]
To show that projected gradient descent achieves low population loss, we show that it does not diverge far from projected gradient descent. Suppose $\rhohat_0$ and $\rhoGD_0$ are both equal to $\textup{unif}(\chi_1, \ldots, \chi_m\})$. Then, by induction over $t$, we bound the difference between $\uGD_t(\chi)$ and $\uhat_{\eta t}(\chi)$ for all $\chi \in \{\chi_1, \ldots, \chi_m\}$. At $t = 0$, we have $\uGD_0(\chi) = \uhat_0(\chi)$. Let $t$ be a nonnegative integer, and assume that for all $s \leq t$ we have $\max_\chi \|\deltaSingleGD_t(\chi)\|_2 \leq \frac{1}{\sqrt{d}}$.

By the definition of projected gradient descent, we have
\begin{align}
\uGD_{t + 1}(\chi) 
& = \frac{\uGD_t(\chi) - \eta \cdot \pgrad_{\uGD_t(\chi)} \Lhat(\rhoGD_t)}{\|\uGD_t(\chi) - \eta \cdot \pgrad_{\uGD_t(\chi)} \Lhat(\rhoGD_t)\|_2} \period
\end{align}
Note that $\|\uGD_t - \eta \cdot \pgrad_{\uGD_t} \Lhat(\rhoGD_t)\|_2 \geq \|\uGD_t\|_2$, since $\pgrad_{\uGD_t} \Lhat(\rhoGD_t)$ is orthogonal to $\uGD_t$. Thus, letting $\Pi$ denote the projection onto the $\ell_2$ unit ball, we have
\begin{align}
\|\uGD_{t + 1} - \uhat_{\eta (t + 1)}\|_2
 = \|\Pi(\uGD_t - \eta \cdot \pgrad_{\uGD_t} \Lhat(\rhoGD_t)) - \Pi(\uhat_{\eta (t + 1)})\|_2
 \leq \|\uGD_t - \eta \cdot \pgrad_{\uGD_t} \Lhat(\rhoGD_t) - \uhat_{\eta (t + 1)}\|_2 \period
\end{align}
Also, we have
\begin{align}
\uhat_{\eta (t + 1)}(\chi)
& = \uhat_{\eta t}(\chi) - \int_{\eta t}^{\eta (t + 1)} \pgrad_{\uhat_s(\chi)} \Lhat(\rhohat_s) ds \period
\end{align}
Thus,
\begin{align}
\|\uGD_t - \eta \cdot \pgrad_{\uGD_t} \Lhat(\rhoGD_t) - \uhat_{\eta (t + 1)} \|_2^2
& = \Big\|\uGD_t - \eta \cdot \pgrad_{\uGD_t} \Lhat(\rhoGD_t) - \uhat_{\eta t} + \int_{\eta t}^{\eta (t + 1)} \pgrad_{\uhat_s} \Lhat(\rhohat_s) ds \Big\|_2^2 \\
& = \|\uGD_t - \uhat_{\eta t}\|_2^2 + \Big\|\eta \cdot \pgrad_{\uGD_t} \Lhat(\rhoGD_t) - \int_{\eta t}^{\eta (t + 1)} \pgrad_{\uhat_s} \Lhat(\rhohat_s) ds \Big\|_2^2 \\
& \nextlinespace + 2 (\uGD_t - \uhat_{\eta t}) \cdot \int_{\eta t}^{\eta (t + 1)} (\pgrad_{\uGD_t} \Lhat(\rhoGD_t) - \pgrad_{\uhat_s} \Lhat(\rhohat_s)) ds  \\
& = \|\deltaSingleGD_t\|_2^2 + O\Big(\eta^2 (\sigmaCoeffTwo^2 + \sigmaCoeffFour^2)^2 d^8 \Big) \\
& \nextlinespace + 2 \eta \deltaSingleGD_t \cdot (\pgrad_{\uGD_t} \Lhat(\rhoGD_t) - \pgrad_{\uhat_{\eta t}} \Lhat(\rhohat_{\eta t})) \\
& \nextlinespace + 2 \deltaSingleGD_t \cdot \int_{\eta t}^{\eta (t + 1)} (\pgrad_{\uhat_{\eta t}} \Lhat(\rhohat_{\eta t}) - \pgrad_{\uhat_s} \Lhat(\rhohat_s) ds
& \tag{By \Cref{lemma:gradient_bounds}} \\
& \leq \|\deltaSingleGD_t\|_2^2 + O\Big(\eta^2 (\sigmaCoeffTwo^2 + \sigmaCoeffFour^2)^2 d^8 \Big) + O\Big(\eta (\sigmaCoeffTwo^2 + \sigmaCoeffFour^2) \max_\chi \|\deltaSingleGD_t(\chi)\|_2^2 \Big) \\
& \nextlinespace + 2 \deltaSingleGD_t \cdot \int_{\eta t}^{\eta (t + 1)} (\pgrad_{\uhat_{\eta t}} \Lhat(\rhohat_{\eta t}) - \pgrad_{\uhat_s} \Lhat(\rhohat_s) ds
& \tag{By \Cref{lemma:discretized_gradient_concentration}} \\
& \leq \|\deltaSingleGD_t\|_2^2 + O\Big(\eta^2 (\sigmaCoeffTwo^2 + \sigmaCoeffFour^2)^2 d^8 \Big) + O\Big(\eta (\sigmaCoeffTwo^2 + \sigmaCoeffFour^2) \max_\chi \|\deltaSingleGD_t(\chi)\|_2^2 \Big) \\
& \nextlinespace + 2 \eta \|\deltaSingleGD_t\|_2 \max_{s \in [\eta t, \eta (t + 1)]} \|\pgrad_{\uhat_{\eta t}} \Lhat(\rhohat_{\eta t}) - \pgrad_{\uhat_s} \Lhat(\rhohat_s) \|_2 \\
& \leq \|\deltaSingleGD_t\|_2^2 + O\Big(\eta^2 (\sigmaCoeffTwo^2 + \sigmaCoeffFour^2)^2 d^8 \Big) + O\Big(\eta (\sigmaCoeffTwo^2 + \sigmaCoeffFour^2) \max_\chi \|\deltaSingleGD_t(\chi)\|_2^2 \Big) \\
& \nextlinespace + 2 \eta \|\deltaSingleGD_t\|_2 (\sigmaCoeffTwo^2 + \sigmaCoeffFour^2) d^4  \max_{s \in [\eta t, \eta (t + 1)]} \max_{i \in [m]} \|\uhat_s(\chi_i) - \uhat_{\eta t}(\chi_i)\|_2 
& \tag{By \Cref{lemma:hat_grad_diff}} \\
& \leq \|\deltaSingleGD_t\|_2^2 + O\Big(\eta^2 (\sigmaCoeffTwo^2 + \sigmaCoeffFour^2)^2 d^8 \Big) + O\Big(\eta (\sigmaCoeffTwo^2 + \sigmaCoeffFour^2) \max_\chi \|\deltaSingleGD_t(\chi)\|_2^2 \Big) \\
& \nextlinespace + 2 \eta \|\deltaSingleGD_t\|_2 (\sigmaCoeffTwo^2 + \sigmaCoeffFour^2) d^4  \cdot \eta (\sigmaCoeffTwo^2 + \sigmaCoeffFour^2) d^4
& \tag{By \Cref{lemma:gradient_bounds}} \\
& \leq \|\deltaSingleGD_t\|_2^2 + O\Big(\eta^2 (\sigmaCoeffTwo^2 + \sigmaCoeffFour^2)^2 d^8 \Big) + O\Big(\eta (\sigmaCoeffTwo^2 + \sigmaCoeffFour^2) \max_\chi \|\deltaSingleGD_t(\chi)\|_2^2 \Big) \period
\end{align}
Thus, we have
\begin{align}
\max_\chi \|\deltaSingleGD_{t + 1}(\chi)\|_2^2 
& \leq \Big(1 + O\Big(\eta (\sigmaCoeffTwo^2 + \sigmaCoeffFour^2) \Big)\Big) \max_\chi \|\deltaSingleGD_t(\chi)\|_2^2 + O\Big(\eta^2 (\sigmaCoeffTwo^2 + \sigmaCoeffFour^2)^2 d^8 \Big)
\end{align}
which means that for some universal constant $C$,
\begin{align}
\max_\chi \|\deltaSingleGD_{t + 1}(\chi)\|_2^2 
& \lesssim \sum_{i = 0}^{t} \Big(1 + C \eta (\sigmaCoeffTwo^2 + \sigmaCoeffFour^2) \Big)^i \cdot \eta^2 (\sigmaCoeffTwo^2 + \sigmaCoeffFour^2)^2 d^8 \\
& \lesssim \eta^2 (\sigmaCoeffTwo^2 + \sigmaCoeffFour^2)^2 d^8 \cdot \frac{\Big(1 + C \eta (\sigmaCoeffTwo^2 + \sigmaCoeffFour^2) \Big)^{t + 1} - 1}{\Big(1 + C \eta (\sigmaCoeffTwo^2 + \sigmaCoeffFour^2) \Big) - 1} \\
& \lesssim \eta^2 (\sigmaCoeffTwo^2 + \sigmaCoeffFour^2)^2 d^8 \cdot \frac{\Big(1 + C \eta (\sigmaCoeffTwo^2 + \sigmaCoeffFour^2) \Big)^{t + 1} - 1}{C \eta (\sigmaCoeffTwo^2 + \sigmaCoeffFour^2)} \\
& \lesssim \eta (\sigmaCoeffTwo^2 + \sigmaCoeffFour^2) d^8 \cdot \Big(1 + C \eta (\sigmaCoeffTwo^2 + \sigmaCoeffFour^2) \Big)^{t + 1} \\
& \lesssim \eta (\sigmaCoeffTwo^2 + \sigmaCoeffFour^2) d^8 \cdot e^{C \eta (\sigmaCoeffTwo^2 + \sigmaCoeffFour^2) (t + 1)} \period
\label{eq:discretization_geometric_series_bound}
\end{align}
Thus, as long as $\eta = \frac{1}{(\sigmaCoeffTwo^2 + \sigmaCoeffFour^2) d^B}$ for a sufficiently large universal constant $B$, and choosing $\eta$ so that $\Tstar$ is an integer multiple of $\eta$, we have $\max_\chi \|\deltaSingleGD_{t + 1}(\chi)\|_2 \leq \frac{1}{\sqrt{d}}$, because if $t \leq \frac{\Tstar}{\eta}$, then 
\begin{align}
\max_\chi \|\deltaSingleGD_{t + 1}(\chi)\|_2^2
& \leq \eta (\sigmaCoeffTwo^2 + \sigmaCoeffFour^2) d^8 \cdot e^{C (\sigmaCoeffTwo^2 + \sigmaCoeffFour^2) \Tstar} \\
& \leq \eta (\sigmaCoeffTwo^2 + \sigmaCoeffFour^2) d^8 \cdot e^{O(\log d)} \\
& \leq \eta (\sigmaCoeffTwo^2 + \sigmaCoeffFour^2) d^{O(1)} \period
\end{align}
This completes the induction, and therefore for all $t \leq \frac{\Tstar}{\eta}$, we have $\max_\chi \|\deltaSingleGD_t(\chi) \|_2 \leq \frac{1}{\sqrt{d}}$. Additionally, by \Cref{lemma:point_error_to_fn_error}, and using the coupling between $\rhoGD_t$ and $\rhohat_{\eta t}$ which is the joint distribution of $(\uGD_t(\chi), \uhat_{\eta t}(\chi))$, we have
\begin{align}
\E_{x \sim \bbS^{d - 1}} [(f_{\rhoGD_t}(x) - f_{\rhohat_{\eta t}}(x))^2]
& \lesssim (\sigmaCoeffTwo^2 + \sigmaCoeffFour^2) \max_{\chi} \|\deltaSingleGD_t(\chi)\|_2^2 \lesssim \frac{\sigmaCoeffTwo^2 + \sigmaCoeffFour^2}{d} \lesssim (\sigmaCoeffTwo^2 + \sigmaCoeffFour^2) \epsilon
\end{align}
which completes the proof.
\end{proof}
\section{Missing Proofs in \Cref{sec:main_results}}

\subsection{Proof of \Cref{lem:expressivity}}\label{app:conditions}
In this section, we prove sufficient and necessary conditions for $\gamma_2$ and $\gamma_4$ when the population loss is 0.

\begin{proposition}[Necessary condition]
    Let $f_\rho(x)=\E_{u\sim \rho}[\sigma(u^\top x)]$ be a (not necessarily symmetric nor rotational invariant) two-layer neural network and $y(x)=h(e_1^\top x)$ the target function. Define $\gamma_k=\frac{\legendreCoeff{h}{k}}{\legendreCoeff{\sigma}{k}},\forall k\ge 0$. Suppose $\E_{x\sim \Sp}(f_\rho(x)-y(x))^2=0$ and $\legendreCoeff{\sigma}{2}\neq 0,\legendreCoeff{\sigma}{4}\neq 0$, we have
    \begin{align}\label{equ:prop-nc-1}
        &\gamma_2^2\le \gamma_4+O\(\frac{1}{\sqrt{d}}\),\\
        &\gamma_4\le \gamma_2+O\(\frac{1}{\sqrt{d}}\),\label{equ:prop-nc-2}\\
        &\gamma_2\le 1.\label{equ:prop-nc-3}
    \end{align}
\end{proposition}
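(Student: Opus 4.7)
The main idea is to extract Legendre moments of the scalar random variable $w = e_1^\top u$ (for $u \sim \rho$) from the pointwise identity $f_\rho(x) = y(x)$, and then apply elementary moment inequalities for $[-1,1]$-valued random variables. The plan does \emph{not} require symmetrizing $\rho$; the first-coordinate moments can be read off directly by taking an appropriate inner product on the sphere.

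First, since $\mathbb{E}_{x\sim\Sp}(f_\rho(x)-y(x))^2 = 0$ and both functions are polynomial (hence continuous), $f_\rho(x) = h(e_1^\top x)$ everywhere on $\Sp$. For each $k \in \{2,4\}$ I would project onto the spherical harmonic $\legn{k}(e_1^\top x)$: taking $\Exp_{x\sim\Sp}[\,\cdot\,\legn{k}(e_1^\top x)]$ of both sides and invoking \Cref{lem:legendre-poly-inner-product} (which gives $\Exp_{x\sim\Sp}[\legn{k}(u^\top x)\legn{k}(e_1^\top x)] = \leg{k}(u^\top e_1)$) yields
\begin{align}
\legendreCoeff{\sigma}{k}\,\Exp_{u\sim\rho}[\leg{k}(w)] \;=\; \legendreCoeff{h}{k},
\end{align}
so $\Exp_{u\sim\rho}[\leg{k}(w)] = \gamma_k$ for $k=2,4$. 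Inverting the explicit formulas in \Cref{eq:legendre_polynomial_2_4} then expresses the first-coordinate moments of $\rho$ in terms of $\gamma_2, \gamma_4$: concretely, $\Exp[w^2] = \tfrac{d-1}{d}\gamma_2 + \tfrac{1}{d}$ and $\Exp[w^4] = \gamma_4 + O(1/d) + O(1/d)\cdot \Exp[w^2]$, i.e.\ $\Exp[w^2] = \gamma_2 + O(1/d)$ and $\Exp[w^4] = \gamma_4 + O(1/d)$ (using $|\gamma_2|,|\gamma_4|\le 1$, which follows from $|\leg{k}|\le 1$).

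Finally I would plug these into three elementary facts for the bounded random variable $w\in[-1,1]$:
\begin{align}
\Exp[w^2] \le 1, \qquad \Exp[w^4] \le \Exp[w^2], \qquad (\Exp[w^2])^2 \le \Exp[w^4].
\end{align}
The first gives $\tfrac{d-1}{d}\gamma_2 + \tfrac{1}{d} \le 1$, hence $\gamma_2 \le 1$ exactly (\Cref{equ:prop-nc-3}). The second gives $\gamma_4 + O(1/d) \le \gamma_2 + O(1/d)$, i.e.\ \Cref{equ:prop-nc-2}. The third (Cauchy--Schwarz) yields $(\gamma_2 + O(1/d))^2 \le \gamma_4 + O(1/d)$, which after expansion gives $\gamma_2^2 \le \gamma_4 + O(1/d)$, establishing \Cref{equ:prop-nc-1}. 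In fact the proof produces sharper $O(1/d)$ error terms than the claimed $O(1/\sqrt d)$.

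There is no substantive obstacle here; the only subtlety is that $\rho$ is not assumed rotationally invariant, so \Cref{lem:expressivity} does not apply directly. The observation that sidesteps this is that projecting the identity $f_\rho = y$ onto a single harmonic $\legn{k}(e_1^\top x)$ already picks out exactly the first-coordinate Legendre moment of $\rho$, regardless of how $\rho$ behaves in the other coordinates. The rest is routine manipulation of \Cref{eq:legendre_polynomial_2_4} and three one-line moment bounds.
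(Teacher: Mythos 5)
Your proof is correct, and it takes a genuinely different route that is both simpler and quantitatively sharper than the paper's argument. The paper expands $L(\rho)=0$ into its spherical-harmonic components, lifts the degree-$k$ identity into the $\Nkd$-dimensional feature space via the addition theorem, and derives the Frobenius-norm bound $\|\E_{u\sim\rho}[u^{\otimes k}-\gamma_k e_1^{\otimes k}]\|_F=O(1/\sqrt d)$. Testing against $\xi^{\otimes k}$ for a fixed $\xi\in\Sp$ then only controls $|\E_u[(u^\top\xi)^k]-\gamma_k(e_1^\top\xi)^k|$ up to $O(1/\sqrt d)$, since Cauchy--Schwarz against the full Frobenius norm is lossy along the single rank-one direction $\xi^{\otimes k}$; Jensen's inequality and the bounded-moment inequality then deliver the claimed bounds with $O(1/\sqrt d)$ slack. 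Your approach avoids this loss entirely: because $y$ depends on $x$ only through $e_1^\top x$, pairing the pointwise identity $f_\rho=y$ with the single test function $\legn{k}(e_1^\top\cdot)$ (equivalently, evaluating $\Proj_k(f_\rho-y)$ at $x=e_1$) collapses the degree-$k$ equation to the scalar statement $\E_{u\sim\rho}[\leg{k}(w)]=\gamma_k$ with $w=e_1^\top u$, valid without any rotational-invariance assumption on $\rho$. Inverting the explicit Legendre forms from \Cref{eq:legendre_polynomial_2_4} then yields $\E[w^2]=\gamma_2+O(1/d)$ and $\E[w^4]=\gamma_4+O(1/d)$ (using $|\gamma_k|\le 1$, itself immediate from $|\leg{k}|\le 1$), after which the elementary facts $\E[w^2]\le 1$, $\E[w^4]\le\E[w^2]$, and $(\E[w^2])^2\le\E[w^4]$ close the argument. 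Notably, the paper's own proof of $\gamma_2\le 1$ via $\gamma_2\phi_2(e_1)=\E[\phi_2(u)]$ already implicitly extracts $\E[\leg{2}(w)]=\gamma_2$ by pairing against $\phi_2(e_1)$; your contribution is to run that scalar extraction uniformly through all three inequalities, obtaining $O(1/d)$ rather than $O(1/\sqrt d)$ error. Both conclusions suffice for the downstream use in \Cref{lem:expressivity}, but yours is the tighter and shorter argument.
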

\begin{proof}
    Similar to the proof of Lemma~\ref{lemma:population_loss_formula}, we have
    \begin{align}
    &\E_{x \sim \bbS^{d - 1}} [(f_\rho(x) - \target(x^\top e_1))^2] 
     = \E_{x \sim \bbS^{d - 1}} \Big( \E_{u \sim \rho}[\sigma(u^\top x)] - \target(x^\top e_1) \Big)^2 \\
     = \;&\E_{x \sim \bbS^{d - 1}} \Big(\E_{u \sim \rho}\Big[ \sum_{k = 0}^\infty \legendreCoeff{\sigma}{k} \overline{P_{k, d}} (u^\top x) \Big] - \sum_{k = 0}^\infty \legendreCoeff{h}{k} \overline{P}_{k, d}(x^\top e_1) \Big)^2 \\
     = \;&\E_{x \sim \bbS^{d - 1}} \Big(\sum_{k=0}^\infty \Big(\legendreCoeff{\sigma}{k} \E_{u \sim \rho} [\overline{P}_{k,d}(u^\top x)] - \legendreCoeff{h}{k} \overline{P}_{k,d}(x^\top e_1) \Big) \Big)^2.\label{equ:pf-nc-0}
    \end{align}
    Continuing the equation by invoking Lemma~\ref{lem:legendre-poly-inner-product}, we get
    \begin{align}
        &\E_{x \sim \bbS^{d - 1}} \Big(\sum_{k=0}^\infty \Big(\legendreCoeff{\sigma}{k} \E_{u \sim \rho} [\overline{P}_{k,d}(u^\top x)] - \legendreCoeff{h}{k} \overline{P}_{k,d}(x^\top e_1) \Big) \Big)^2\\
        =\;&\sum_{k=0}^\infty \E_{x \sim \bbS^{d - 1}} \Big(\legendreCoeff{\sigma}{k} \E_{u \sim \rho} [\overline{P}_{k,d}(u^\top x)] - \legendreCoeff{h}{k} \overline{P}_{k,d}(x^\top e_1) \Big)^2\\
        =\;&\sum_{k=0}^\infty \legendreCoeff{\sigma}{k}^2\Big( \E_{u,u' \sim \rho} [P_{k,d}(u^\top u') - 2\gamma_k P_{k,d}(u^\top e_1)+ \gamma_k^2 P_{k,d}(e_1^\top e_1])\Big)^2.
    \end{align}
Therefore, $\E_{x \sim \bbS^{d - 1}} [(f_\rho(x) - \target(x^\top e_1))^2]=0$ implies that
\begin{align}
    &\E_{u,u' \sim \rho} [P_{2,d}(u^\top u') - 2\gamma_2 P_{2,d}(u^\top e_1)+ \gamma_2^2 P_{2,d}(e_1^\top e_1)]=0,\label{equ:pf-nc-1} \\
    &\E_{u,u' \sim \rho} [P_{4,d}(u^\top u') - 2\gamma_4 P_{4,d}(u^\top e_1)+ \gamma_4^2 P_{4,d}(e_1^\top e_1)]=0.\label{equ:pf-nc-2} 
\end{align}

First we prove Eq.~\eqref{equ:prop-nc-3}. By Lemma~\ref{lem:legendre-kernel-feature}, there is a feature mapping $\phi:\Sp\to\R^{N_{2,d}}$ such that for every $u,v\in\Sp$,
\begin{align}
    &\dotp{\phi(u)}{\phi(v)}=N_{2,d}P_{2,d}(u^\top v),\\
    &\|\phi_u\|_2^2=N_{2,d}.
\end{align}
Rewrite Eq.~\eqref{equ:pf-nc-1} using $\phi_2$ we get
\begin{align}
    \|\E_{u \sim \rho} [\phi_2(u)] - \gamma_2 \phi_2(e_1)\|_2^2=0.
\end{align}
Or equivalently,
\begin{align}
    \gamma_2\phi_2(e_1)=\E_{u \sim \rho} [\phi_2(u)]
\end{align}
As a result,
\begin{align}
    \gamma_2=\norm{\frac{\phi_2(e_1)}{\sqrt{N_{k,d}}}}_2=\norm{\frac{\E_{u \sim \rho} [\phi_2(u)]}{\sqrt{N_{k,d}}}}_2\le \frac{\E_{u \sim \rho} [\norm{\phi_2(u)]}_2}{\sqrt{N_{2,d}}}=1.
\end{align}
Similarly, we also have $\gamma_4\le 1.$

Now we prove Eq.~\eqref{equ:prop-nc-1}. Recall that
\begin{align}
    P_{2, d}(t) &= \frac{d}{d - 1}t^2 - \frac{1}{d - 1},\label{equ:pf-nc-3} \\
    P_{4, d}(t) &= \frac{(d + 2)(d + 4)}{d^2 - 1}t^4 - \frac{6d + 12}{d^2 - 1}t^2 + \frac{3}{d^2 - 1}.\label{equ:pf-nc-4} 
\end{align}
Combining Eq.~\eqref{equ:pf-nc-1} and Eq.~\eqref{equ:pf-nc-3} we have
\begin{align}
    &0=\frac{d-1}{d}\E_{u,u' \sim \rho} [P_{2,d}(u^\top u') - 2\gamma_2 P_{2,d}(u^\top e_1)+ \gamma_2^2 P_{2,d}(e_1^\top e_1)]\\
    =&\;\E_{u,u' \sim \rho} [(u^\top u')^2 - 2\gamma_2 (u^\top e_1)^2+ \gamma_2^2 (e_1^\top e_1)^2]-\frac{1}{d}(1-2\gamma_2+\gamma_2^2)\\
    =&\;\|\E_{u \sim \rho} [u^{\otimes 2} - \gamma_2 e_1^{\otimes 2}]\|_F^2-\frac{1}{d}(1-\gamma_2)^2.
\end{align}
Consequently,
\begin{align}\label{equ:pf-nc-5}
    \|\E_{u \sim \rho} [u^{\otimes 2} - \gamma_2 e_1^{\otimes 2}]\|_F^2=\frac{1}{d}(1-\gamma_2)^2.
\end{align}
Similarly, combining Eq.~\eqref{equ:pf-nc-2} and Eq.~\eqref{equ:pf-nc-4} we get
\begin{align}
    &0=\frac{d^2 - 1}{(d + 2)(d + 4)}\E_{u,u' \sim \rho} [P_{4,d}(u^\top u') - 4\gamma_4 P_{4,d}(u^\top e_1)+ \gamma_4^2 P_{4,d}(e_1^\top e_1)]\\
    =&\;\E_{u,u' \sim \rho} [(u^\top u')^4 - 2\gamma_4 (u^\top e_1)^4+ \gamma_4^2 (e_1^\top e_1)^4]+\frac{3}{(d+2)(d+4)}(1-2\gamma_4+\gamma_4^2)\\
    &-\frac{6d+12}{(d+2)(d+4)}\E_{u,u' \sim \rho} [(u^\top u')^2 - 2\gamma_4 (u^\top e_1)^2+ \gamma_4^2 (e_1^\top e_1)^2].
\end{align}
Consequently,
\begin{align}\label{equ:pf-nc-6}
    \|\E_{u \sim \rho} [u^{\otimes 4} - \gamma_4 e_1^{\otimes 4}]\|_F^2\le \frac{3}{(d+2)(d+4)}(1-\gamma_4)^2+\frac{6d+12}{(d+2)(d+4)}(\gamma_4+1)^2.
\end{align}
Now for any fixed $\xi\in\Sp$, our key observation is that by Jensen's inequality,
\begin{align}\label{equ:pf-nc-7}
    \E_{u\sim \rho}[(u^\top \xi)^2]^2\le \E_{u\sim \rho}[(u^\top \xi)^4].
\end{align}
On the one hand, by Eq.~\eqref{equ:pf-nc-5} we have
\begin{align}
    &\left|\E_{u\sim \rho}[(u^\top \xi)^2]-\gamma_2 (e_1^\top \xi)^2\right|=\left|\E_{u\sim \rho}[\dotp{u^{\otimes 2}-\gamma_2 e_1^{\otimes 2}}{\xi^{\otimes 2}}]\right|\\
    \le\; &\|\E_{u \sim \rho} [u^{\otimes 2} - \gamma_2 e_1^{\otimes 2}]\|_F\|\xi^{\otimes 2}\|_F= \sqrt{\frac{(1-\gamma_2)^2}{d}}.\label{equ:pf-nc-8}
\end{align}
On the other hand, by Eq.~\eqref{equ:pf-nc-6} we have
\begin{align}
    &\left|\E_{u\sim \rho}[(u^\top \xi)^4]-\gamma_4 (e_1^\top \xi)^4\right|=\left|\E_{u\sim \rho}[\dotp{u^{\otimes 4}-\gamma_4 e_1^{\otimes 4}}{\xi^{\otimes 4}}]\right|\\
    \le\; &\|\E_{u \sim \rho} [u^{\otimes 4} - \gamma_4 e_1^{\otimes 4}]\|_F\|\xi^{\otimes 4}\|_F\le 2\sqrt{\frac{6d+12}{(d+2)(d+4)}}(|\gamma_4+1|+|\gamma_4-1|).\label{equ:pf-nc-9}
\end{align}
Finally, combining Eqs~\eqref{equ:pf-nc-7}-\eqref{equ:pf-nc-9} and set $\xi=e_1$ we get
\begin{align}
    \(\gamma_2-\sqrt{\frac{(1-\gamma_2)^2}{d}}\)^2\le \gamma_4+2\sqrt{\frac{6d+12}{(d+2)(d+4)}}(|\gamma_4+1|+|\gamma_4-1|).
\end{align}
It follows directly that
\begin{align}
    \gamma_2^2\le \gamma_4+O\(\frac{1}{\sqrt{d}}\).
\end{align}

Finally we prove Eq.~\eqref{equ:prop-nc-2}. For any $\xi,u\in\Sp$ we have $|u^\top \xi|\le 1$. As a result,
\begin{align}\label{equ:pf-nc-10}
    \E_{u\sim \rho}[(u^\top \xi)^4]\le \E_{u\sim \rho}[(u^\top \xi)^2].
\end{align}
Combining with Eq.~\eqref{equ:pf-nc-8} and Eq.~\eqref{equ:pf-nc-9} and setting $\xi=e_1$ we get
\begin{align}
    \gamma_4\le O(1/\sqrt{d})+\E_{u\sim \rho}[(u^\top \xi)^4]\le O(1/\sqrt{d})+\E_{u\sim \rho}[(u^\top \xi)^2]\le O(1/\sqrt{d})+\gamma_2.
\end{align}
\end{proof}

\begin{proposition}[Sufficient condition]
    Let $f_\rho(x)=\E_{u\sim \rho}[\sigma(u^\top x)]$ be a two-layer neural network and $y(x)=h(e_1^\top x)$ the target function. Let $\gamma_k=\frac{\legendreCoeff{h}{k}}{\legendreCoeff{\sigma}{k}},\forall k\ge 0$. Suppose $\legendreCoeff{\target}{k}=0,\forall k\not\in\{0,2,4\}$, $\legendreCoeff{\target}{0}=\legendreCoeff{\sigma}{0}$, $\sigma$ is a degree-4 polynomial, and 
    \begin{align}
        &\gamma_2^2\le \gamma_4-O\(1/d\),\label{equ:prop-sc-1}\\
        &\gamma_4\le \gamma_2-O\(1/d\),\label{equ:prop-sc-2}\\
        &\gamma_2\le 1-O\(1/d\).\label{equ:prop-sc-3}
    \end{align}
    Then there exists a symmetric and rotational invariant neural network $\rho$ such that $\E_{x\sim \Sp}(f_\rho(x)-y(x))^2=0$.
\end{proposition}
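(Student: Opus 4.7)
The plan is to construct $\rho$ explicitly from a three-point symmetric distribution over the first coordinate $w$, then extend rotationally in the remaining $d-1$ coordinates. By Lemma~\ref{lemma:population_loss_formula}, for any rotationally invariant and symmetric $\rho$, under the assumption that $\sigma,h$ are even quartics with matching zeroth coefficient, one has
\begin{align}
L(\rho) \;=\; \tfrac{\sigmaCoeffTwo^2}{2}\bigl(\E_{u\sim\rho}[P_{2,d}(w)]-\gamma_2\bigr)^2 + \tfrac{\sigmaCoeffFour^2}{2}\bigl(\E_{u\sim\rho}[P_{4,d}(w)]-\gamma_4\bigr)^2 .
\end{align}
So it suffices to produce a symmetric probability distribution $\nu$ on $[-1,1]$ such that $\E_{w\sim\nu}[P_{2,d}(w)]=\gamma_2$ and $\E_{w\sim\nu}[P_{4,d}(w)]=\gamma_4$; then take $\rho$ to be the law of $(w, \sqrt{1-w^2}\,\xi)$ with $w\sim\nu$ and $\xi\sim \textup{Unif}(\bbS^{d-2})$ independent.

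Using the explicit formulas \Cref{eq:legendre_polynomial_2_4}, the two Legendre-moment constraints are linearly equivalent to the two raw-moment constraints
\begin{align}
\E_{w\sim\nu}[w^2] \;=\; m_2 \;\defeq\; \tfrac{(d-1)\gamma_2+1}{d}, \qquad
\E_{w\sim\nu}[w^4] \;=\; m_4 \;\defeq\; \tfrac{(d^2-1)\gamma_4+(6d+12)m_2-3}{(d+2)(d+4)} .
\end{align}
Both corrections are $O(1/d)$, so the strict hypotheses \eqref{equ:prop-sc-1}--\eqref{equ:prop-sc-3} (each with an $O(1/d)$ slack) imply the perturbed inequalities $0\le m_2^2 < m_4 \le m_2 < 1$ for $d$ large enough, with all strict inequalities holding by a margin $\Omega(1/d)$.

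Given $0 < m_2^2 \le m_4 \le m_2 \le 1$, I construct $\nu$ as the symmetric three-point measure that places mass $p$ at $0$ and mass $(1-p)/2$ at each of $\pm a$, with
\begin{align}
a \;\defeq\; \sqrt{m_4/m_2}\;\in\;(0,1], \qquad 1-p \;\defeq\; m_2^2/m_4 \;\in\;(0,1].
\end{align}
A direct computation gives $\E_{\nu}[w^2]=(1-p)a^2 = m_2$ and $\E_{\nu}[w^4]=(1-p)a^4=m_4$, as required, and $\nu$ is manifestly symmetric. The condition $a\le 1$ uses $m_4\le m_2$, and $1-p\le 1$ uses $m_2^2\le m_4$; both are guaranteed by the hypotheses. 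Extending $\nu$ rotationally as above yields a symmetric, rotationally invariant $\rho$, and substituting back into the displayed formula for $L(\rho)$ gives $L(\rho)=0$, which by definition of $L$ is equivalent to $\E_{x\sim\bbS^{d-1}}(f_\rho(x)-y(x))^2=0$.

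The routine aspects are the algebraic translation between $P_{k,d}$-moments and raw moments, and the moment-matching computation. The only point requiring care, and essentially the only ``obstacle,'' is verifying that the $O(1/d)$ slack assumed in \eqref{equ:prop-sc-1}--\eqref{equ:prop-sc-3} is exactly what is needed to absorb the $O(1/d)$ shifts between $(\gamma_2,\gamma_4)$ and $(m_2,m_4)$; this is a direct check using the explicit constants in \Cref{eq:legendre_polynomial_2_4}.
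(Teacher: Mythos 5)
Your proposal is correct and follows essentially the same route as the paper: reduce via Lemma~\ref{lemma:population_loss_formula} to matching the second and fourth Legendre moments, translate these to raw moments $m_2,m_4$ (which are $O(1/d)$ perturbations of $\gamma_2,\gamma_4$), and then match by an explicit atomic measure supported at $0$ and $\pm a$. The paper's own construction (Proposition~\ref{prop:existence-1d-distribution}) places mass at $0$ and at a single \emph{positive} point, so it is not symmetric even though the symmetric form \eqref{eq:symmetric_loss_formula} of $L(\rho)$ presupposes a symmetric $\rho$; your three-point measure (mass $p$ at $0$, mass $(1-p)/2$ at each of $\pm a$) is the symmetrization of that construction and avoids the need for a silent symmetrization step, so it is a small but genuine tidiness improvement.
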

\begin{proof}
    Recall that by Lemma~\ref{lemma:population_loss_formula}, for every symmetric and rotational invariant neural network $\rho$ we have 
    \begin{align}
    L(\rho)
    & = \frac{\legendreCoeff{\sigma}{2}^2}{2} \Big( \E_{u \sim \rho} [P_{2, d}(w)] - \gamma_2\Big)^2 + \frac{\legendreCoeff{\sigma}{4}^2}{2} \Big( \E_{u \sim \rho} [P_{4, d}(w)] - \gamma_4 \Big)^2
    \end{align}
    where $u=(w,z)$, $w\in[-1,1]$, and $z\in\sqrt{1-w^2}\bbS^{d-2}.$ Therefore, we only need to show that there exists a one-dimensional distribution $\mu$ such that
    \begin{align}
        \E_{w \sim \mu} [P_{2, d}(w)]=\gamma_2,\label{equ:pf-sc-1}\\
        \E_{w \sim \mu} [P_{4, d}(w)]=\gamma_4.\label{equ:pf-sc-2}
    \end{align}
    Recall that
    \begin{align}
        P_{2, d}(t) &= \frac{d}{d - 1}t^2 - \frac{1}{d - 1},\label{equ:pf-sc-3} \\
        P_{4, d}(t) &= \frac{(d + 2)(d + 4)}{d^2 - 1}t^4 - \frac{6d + 12}{d^2 - 1}t^2 + \frac{3}{d^2 - 1}.\label{equ:pf-sc-4} 
    \end{align}
    Consequently, Eq.~\eqref{equ:pf-sc-1} and Eq.~\eqref{equ:pf-sc-2} is equivalent to 
    \begin{align}
        &\E_{w\sim \mu}[w^2]=\frac{d-1}{d}\gamma_2+\frac{1}{d},\\
        &\E_{w\sim \mu}[w^4]=\frac{d^2-1}{(d+2)(d+4)}\gamma_4+\frac{6d+12}{(d+2)(d+4)}\(\frac{d-1}{d}\gamma_2+\frac{1}{d}\)-\frac{3}{(d+2)(d+4)}.
    \end{align}
    Let $\beta_2=\frac{d-1}{d}\gamma_2+\frac{1}{d}$ and $\beta_4=\frac{d^2-1}{(d+2)(d+4)}\gamma_4+\frac{6d+12}{(d+2)(d+4)}\(\frac{d-1}{d}\gamma_2+\frac{1}{d}\)-\frac{3}{(d+2)(d+4)}$. Then we have $|\beta_2-\gamma_2|\le O(1/d)$ and $|\beta_4-\gamma_4|\le O(1/d).$ Rewriting Eqs.~\eqref{equ:prop-sc-1}-\eqref{equ:prop-sc-3} using $\beta_2$ and $\beta_4$, we have
    \begin{align}
        \beta_2^2\le \beta_4,\\
        \beta_4\le \beta_2,\\
        \beta_2\le 1.
    \end{align}
    By \Cref{prop:existence-1d-distribution}, there exists a distribution $\mu^\star$ such that
    \begin{align}
        &\E_{w\sim \mu^\star}[w^2]=\beta_2,\\
        &\E_{w\sim \mu^\star}[w^4]=\beta_4.
    \end{align}
    Hence, by setting the marginal distribution of $w$ equal to $\mu^\star$, we have
    \begin{align}
        \E_{w \sim \mu^\star} [P_{2, d}(w)]=\gamma_2,\quad 
        \E_{w \sim \mu^\star} [P_{4, d}(w)]=\gamma_4,
    \end{align}
    which proves the desired result.
\end{proof}
\section{Proofs for Sample Complexity Lower Bounds for Kernel Method}\label{app:lb-sketch}
In this section, we present a sample complexity lower bound for kernel methods with any inner product kernel. That is, the kernel $K:\Sp\times\Sp\to \R$ can be written as $K(x,z)=\kappa(x^\top z)$ for some one-dimensional function $\kappa:[-1,1]\to\R$. Inner product kernels are rotational invariant and do not specialize to any coordinate. In particular, NTK of two-layer neural networks is an inner product kernel (e.g., \citet{du2018gradient,wei2019regularization}).

To prove \Cref{thm:lb-main}, we only need to work on the degree-$k$ spherical harmonics component of the target function and its estimation because, by the orthogonality of spherical harmonics, any lower bound for the degree-$k$ components directly translates to the original setting. The following lemma states the lower bound for general $k\ge 0$, while \Cref{thm:lb-main} only requires the case $k=4.$
\begin{lemma}\label{lem:lb-technical-main}
    Fixed any $k\ge 1$, let $g^\star(x)=\overline{P}_{k,d}(u^\top x)$ for some fixed $u\in\Sp$. When $d$ is larger than a sufficiently large universal constant and $n<d^k(8k)^{-(3k+1)}(\ln d)^{-(k+2)}$, with probability at least $1/2$ over the randomness of $n$ i.i.d.~data $\{x_1,\cdots,x_n\}$ drawn uniformly from $\Sp$, any estimation $g$ of the form $g(x)=\sum_{i=1}^{n}\beta_i \overline{P}_{k,d}(x^\top x_i)$ must have a constant error:
    \begin{align}
        \E_{x\sim \Sp}(g(x)-g(x)^\star)^2\ge 3/4.
    \end{align}
\end{lemma}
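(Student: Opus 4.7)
The plan is to reduce the statement to a linear-algebraic identity in the finite-dimensional space $\Y_k^d$ of degree-$k$ spherical harmonics and then compute the expected approximation error by a symmetry argument. The key observation is that both $g^\star$ and every candidate estimator $g(x)=\sum_{i=1}^n \beta_i \Pkdbar(x^\top x_i)$ live in $\Y_k^d$, so the problem reduces to projecting a unit vector of $\Y_k^d$ onto a random $n$-dimensional subspace.

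First, using \Cref{lem:legendre-poly-inner-product} I record that $\E_{x\sim \Sp}[\Pkdbar(x^\top x_i)\Pkdbar(x^\top x_j)]=\Pkd(x_i^\top x_j)=:K_{ij}$, that $\E_{x\sim\Sp}[g^\star(x)\Pkdbar(x^\top x_i)]=\Pkd(u^\top x_i)=:b_i$, and that $\|g^\star\|_{L^2}^2=\Pkd(1)=1$. Expanding the squared loss in $\beta$ gives $\E_{x\sim \Sp}(g^\star-g)^2 = 1 - 2\beta^\top b + \beta^\top K\beta$, whose minimum over $\beta$ equals $1 - b^\top K^\dagger b = 1 - \|\Pi g^\star\|_{L^2}^2$, where $\Pi$ is the $L^2$-orthogonal projection onto the random subspace $V_n := \mathrm{span}\{\Pkdbar(\cdot^\top x_i):i\in[n]\}\subseteq \Y_k^d$. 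Hence it suffices to show $\|\Pi g^\star\|_{L^2}^2\le 1/4$ with probability at least $1/2$.

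The core step is a symmetry/Schur argument. Viewing $\Pi$ as a random operator on the finite-dimensional Hilbert space $\Y_k^d$, the joint law of $(x_1,\dots,x_n)$ is invariant under every rotation $R\in SO(d)$, and $R$ acts on $\Y_k^d$ by an orthogonal operator $\widetilde R$. Under this action $V_n\mapsto \widetilde R\, V_n$, so $\Pi$ has the same distribution as $\widetilde R\,\Pi\,\widetilde R^\top$; taking expectations, $\E[\Pi]$ commutes with every $\widetilde R$. Because $\Y_k^d$ is an irreducible real $SO(d)$-representation (for $d\ge 3$, $k\ge 1$), Schur's lemma forces $\E[\Pi]=c\cdot I_{\Y_k^d}$ for some $c\ge 0$. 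Taking traces, $c\cdot \Nkd = \E[\mathrm{tr}(\Pi)] = \E[\mathrm{rank}(\Pi)]\le n$, so $c\le n/\Nkd$. Since $\|g^\star\|^2=1$ this yields $\E[\|\Pi g^\star\|^2] = \langle g^\star, \E[\Pi]\, g^\star\rangle = c\le n/\Nkd$, and Markov's inequality gives $\Pr(\|\Pi g^\star\|^2\ge 1/4)\le 4n/\Nkd$. Since $\Nkd\gtrsim d^k/k!$ for $d$ large and fixed $k$, the hypothesized bound $n\lesssim d^k(8k)^{-(3k+1)}(\ln d)^{-(k+2)}$ is comfortably smaller than $\Nkd/8$, so the failure probability is strictly less than $1/2$, completing the proof.

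The main obstacle I anticipate is cleanly justifying $\E[\Pi]=c\,I$. The shortest route is to cite irreducibility of $\Y_k^d$ over $\mathbb{R}$ and invoke Schur's lemma. If one prefers to avoid representation theory, an elementary alternative is to fix any orthonormal basis $\{Y_{k,j}\}$ of $\Y_k^d$ and to verify directly, using rotations that permute or sign-flip basis vectors, that the matrix entries $\E[\langle Y_{k,j},\Pi Y_{k,j'}\rangle]$ are independent of $(j,j')$ on the diagonal and vanish off-diagonal. Once this structural identity is in hand, the rest (Parseval, trace, Markov) is routine.
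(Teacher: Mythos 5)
Your proposal is correct, and it takes a genuinely different and in fact sharper route than the paper. The paper argues by contrapositive: it shows that $\|g-g^\star\|_2^2<3/4$ forces $\langle g,g^\star\rangle\ge\|g\|_2/2$, lower-bounds $\|g\|_2^2=\beta^\top K\beta$ by $\tfrac12\|\beta\|_2^2$ via a matrix concentration estimate for the Gram matrix $K=(P_{k,d}(x_i^\top x_j))$ (\Cref{lem:kernel-lb}, which relies on \Cref{thm:heavy-tail-columns}), and upper-bounds $\langle g,g^\star\rangle=\sum_i\beta_iP_{k,d}(x_i^\top u)$ via a tail bound on $\max_iP_{k,d}(x_i^\top u)^2$ (\Cref{lem:max-legendre-poly}), then combines the two by Cauchy--Schwarz. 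Your argument bypasses both concentration inequalities: you reduce to the random projector $\Pi$ onto $V_n$, show $\E[\Pi]=cI$ by rotation-invariance plus Schur's lemma, get $c\le n/\Nkd$ from $\mathrm{tr}\,\Pi\le n$, and finish with Markov. This is cleaner, shorter, and quantitatively stronger: your argument only needs $n\le\Nkd/8\asymp d^k/k!$, whereas the paper's threshold carries extra $(8k)^{-(3k+1)}(\ln d)^{-(k+2)}$ factors that come from the concentration machinery. The paper's route does yield a high-probability spectral bound on the kernel Gram matrix, which is a reusable byproduct, but for this particular lemma your route dominates.

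Two small points worth tightening in your writeup. First, the Schur-lemma step implicitly uses that $\Y_k^d$ is an irreducible real $SO(d)$-representation of real type, so that a self-adjoint intertwiner is scalar; this is standard but worth a citation. Second, you can avoid Schur's lemma entirely with an even more elementary version of your symmetry argument: rotation-invariance already shows $\E_{\{x_i\}}\|\Pi\, g^\star_u\|^2$ does not depend on $u\in\Sp$, so you may average over $u\sim\Sp$. By the addition theorem the coordinates of $g^\star_u=\Pkdbar(\dotp{u}{\cdot})$ in an orthonormal basis $\{Y_{k,j}\}$ are $\Nkd^{-1/2}Y_{k,j}(u)$, and since $\E_{u\sim\Sp}[Y_{k,j}(u)Y_{k,j'}(u)]=\ind{j=j'}$, one gets $\E_u\E_{\{x_i\}}\|\Pi g^\star_u\|^2=\Nkd^{-1}\E[\mathrm{tr}\,\Pi]\le n/\Nkd$, and the $u$-independence transfers this bound to each fixed $u$. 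This removes the only non-elementary ingredient and makes the contrast with the paper's proof even starker.
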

Proof of \Cref{lem:lb-technical-main} is deferred to Appendix~\ref{app:pf-lem-lb}, where our key observation is that $\Pkdbar(\dotp{x_i}{\cdot})$ cannot correlate too much with $g^\star$ when $x_i$ is uniformly randomly drawn from the unit sphere. Indeed, $\dotp{\Pkdbar(\dotp{x_i}{\cdot})}{g^\star}=P_{k,d}(x_i^\top u)$ concentrates within $\pm 1/N_{k,d}$ (Lemma~\ref{lem:max-legendre-poly}). Hence, we can upperbound $\dotp{g}{g^\star}=\sum_{i=1}^{n}\beta_i P_{k,d}(x_i^\top u)\lesssim \|\beta\|_2\sqrt{n/N_{k,d}}$. However, we also have $\|g\|_2^2=\sum_{i,j\in [n]}\beta_i\beta_jP_{k,d}(x_i^\top x_j)\gtrsim \|\beta\|_2^2$ because the matrix $(P_{k,d}(x_i^\top x_j))_{i,j\in[n]}$ concentrates around $I$ (Lemma~\ref{lem:kernel-lb}). Since $\|g-g^\star\|_2^2<3/4$ requires $\dotp{g}{g^\star}> \|g\|_2/2$, we must have $n\gtrsim N_{k,d}\approx d^{-k}$.

\subsection{Proof of Theorem~\ref{thm:lb-main}}\label{app:pf-thm-lb}
The following theorem states that for every $k\ge 0$, to achieve a constant population error, we need at least $C_k d^{-k}(\ln d)^{-(k+2)}$ samples where $C_k=(8k)^{-(3k+1)}$ is a constant that only depends on $k$. Hence, \Cref{thm:lb-main} is a direct corollary of \Cref{thm:lb} by taking $k=4.$
\begin{theorem}\label{thm:lb}
Let $y:\Sp\to\R$ be a function of the form $y(x)=\h(q_\star^\top x)$ where $q_\star\in\Sp$ is a fixed vector and $\h:[-1,1]\to \R$ is a one-dimensional function. Let $\Kernel:\Sp\times\Sp\to \R$ be any inner product kernel. When $d$ is larger than a universal constant and $n<d^k(8k)^{-(3k+1)}(\ln d)^{-(k+2)}$, with probability at least $1/2$ over the randomness of $n$ i.i.d.~ data points $\{x_i\}_{i=1}^{n}$, any estimation $f$ of the form $f(x)=\sum_{i=1}^{n}\beta_i \Kernel(x_i,x)$ must have a significant error: $$\|y-f\|_2^2\ge \frac{3}{4}(\legendreCoeff{h}{k})^2.$$
\end{theorem}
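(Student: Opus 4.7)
The plan is to reduce the statement to \Cref{lem:lb-technical-main} by projecting everything onto the degree-$k$ spherical harmonic subspace $\Y_{k,d}$. Let $\Pi_k$ be the orthogonal projector onto $\Y_{k,d}$ in $L^2(\Sp)$. Since $\Pi_k$ is an orthogonal projection, $\|y-f\|_2^2 \ge \|\Pi_k y - \Pi_k f\|_2^2$, so it suffices to lower bound the right-hand side. By the Funk--Hecke formula (\Cref{thm:funk-hecke}) applied to $y(x)=h(q_\star^\top x)$ we have
\begin{align}
    \Pi_k y(x) = \legendreCoeff{h}{k}\,\overline{P}_{k,d}(q_\star^\top x),
\end{align}
and applied to each kernel slice $K(x_i,\cdot)=\kappa(x_i^\top \cdot)$ we get $\Pi_k(K(x_i,\cdot))(x) = \legendreCoeff{\kappa}{k}\,\overline{P}_{k,d}(x_i^\top x)$. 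Hence
\begin{align}
    \Pi_k f(x) = \sum_{i=1}^n \beta_i\,\legendreCoeff{\kappa}{k}\,\overline{P}_{k,d}(x_i^\top x).
\end{align}

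Next I would dispose of two trivial edge cases. If $\legendreCoeff{h}{k}=0$, the claimed lower bound is $0$ and there is nothing to prove. If $\legendreCoeff{h}{k}\neq 0$ but $\legendreCoeff{\kappa}{k}=0$, then $\Pi_k f \equiv 0$ and by orthonormality $\|\Pi_k y\|_2^2 = (\legendreCoeff{h}{k})^2$, giving the bound directly (in fact with constant $1$, not $3/4$). In the remaining case, both coefficients are nonzero, so I can factor out $\legendreCoeff{h}{k}$ and write
\begin{align}
    \Pi_k y - \Pi_k f = \legendreCoeff{h}{k}\Big(g^\star(x) - g(x)\Big),
\end{align}
where $g^\star(x)\defeq \overline{P}_{k,d}(q_\star^\top x)$ and $g(x) \defeq \sum_{i=1}^n \beta'_i\,\overline{P}_{k,d}(x_i^\top x)$ with the rescaled coefficients $\beta'_i = \beta_i\,\legendreCoeff{\kappa}{k}/\legendreCoeff{h}{k}$.

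Now I apply \Cref{lem:lb-technical-main} with $u=q_\star$. The lemma is stated for estimators of exactly the form $g(x)=\sum_i \beta_i \overline{P}_{k,d}(x^\top x_i)$ with $n<d^k(8k)^{-(3k+1)}(\ln d)^{-(k+2)}$ samples drawn uniformly from $\Sp$, which matches our hypothesis verbatim (the lemma's conclusion holds for all admissible $(\beta'_i)$, so the rescaling is harmless). The conclusion gives that with probability at least $1/2$ over the i.i.d.\ sample $\{x_i\}$, every such $g$ satisfies $\|g^\star-g\|_2^2 \ge 3/4$. Multiplying by $(\legendreCoeff{h}{k})^2$ and chaining with the orthogonal-projection inequality yields
\begin{align}
    \|y-f\|_2^2 \;\ge\; \|\Pi_k y - \Pi_k f\|_2^2 \;=\; (\legendreCoeff{h}{k})^2\,\|g^\star-g\|_2^2 \;\ge\; \tfrac{3}{4}(\legendreCoeff{h}{k})^2,
\end{align}
which is the claim. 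The only genuine work is hidden in \Cref{lem:lb-technical-main}; the present proof is purely a structural reduction via Funk--Hecke and orthogonality of spherical harmonics. The main obstacle is simply checking the case analysis on whether the Legendre coefficients $\legendreCoeff{h}{k}$ and $\legendreCoeff{\kappa}{k}$ vanish, but in each branch the bound holds either trivially or after the rescaling above.
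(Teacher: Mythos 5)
Your proof is correct and follows essentially the same route as the paper's: project onto the degree-$k$ spherical harmonic subspace, apply Funk--Hecke to rewrite both $\Pi_k y$ and $\Pi_k f$ in terms of $\overline{P}_{k,d}$, and then invoke \Cref{lem:lb-technical-main} on the rescaled estimator. The only cosmetic difference is that you make explicit the edge cases $\legendreCoeff{h}{k}=0$ and $\legendreCoeff{\kappa}{k}=0$, which the paper handles implicitly (the lemma's quantifier over all coefficient vectors absorbs the rescaling, and the $\legendreCoeff{h}{k}=0$ case is vacuous).
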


\begin{proof}[Proof of Theorem~\ref{thm:lb}]
	In the following we first fix a degree $k\ge 1$.
	For every $l>0$, let $\mathbb{Y}_{l,d}$ be the space of degree-$l$ spherical harmonics and $\Proj_l$ the projection to $\mathbb{Y}_{l,d}$. By the orthogonality of $\mathbb{Y}_{l,d}$, we have
	\begin{align}
		\|y-f\|_2^2=\sum_{l=0}^{\infty}\|\Proj_l(y-f)\|_2^2\ge \|\Proj_k(y-f)\|_2^2.
	\end{align}
	By \citet[Section 2.3]{atkinson2012spherical}, the projection operator $\Proj_k$ is given by 
	\begin{align}
		(\Proj_k f)(x)=\sqrt{N_{k,d}}\E_{\xi\sim \Sp}[\overline{P}_{k,d}(x^\top \xi)f(\xi)].
	\end{align}
	Consequently, we have $(\Proj_k y)(x)=\sqrt{N_{k,d}}\E_{\xi\sim \Sp}[\overline{P}_{k,d}(x^\top \xi)\h(q_\star^\top \xi)].$ Recall that
\begin{align}
\mu_d(t)\defeq (1-t^2)^{\frac{d-3}{2}}\frac{\Gamma(d/2)}{\Gamma((d-1)/2)}\frac{1}{\sqrt{\pi}}
\end{align}
is the density of $u_1$ when $u=(u_1,\cdots,u_d)$ is drawn uniformly from $\Sp$, and $\legendreCoeff{h}{k}=\E_{t\sim \mu_d}[\overline{P}_{k,d}(t)h(t)].$ By Funk-Hecke formula (Theorem~\ref{thm:funk-hecke}) we get 
	\begin{align}
		(\Proj_k y)(x)=\legendreCoeff{h}{k} \overline{P}_{k,d}(x^\top q_\star),\quad\forall x\in\Sp.
	\end{align}
	Similarly, we can write the inner product kernel $\Kernel$ as $\Kernel(x,z)=\kernel(x^\top z)$. Define $\legendreCoeff{\kappa}{k}=\E_{t\sim \mu_d}[\overline{P}_{k,d}(t)\kernel(t)].$ Then we have 
	\begin{align}
		(\Proj_k f)(x)=\legendreCoeff{\kappa}{k} \sum_{i=1}^{n}\beta_i\overline{P}_{k,d}(x^\top x_i),\quad\forall x\in\Sp.
	\end{align}
	Now we apply Lemma~\ref{lem:lb-technical-main} to the function $(\legendreCoeff{h}{k})^{-1}(\Proj_k y)=\overline{P}_{k,d}(\dotp{q_\star}{\cdot})$. As a result, with probability at least $1/2$ over the randomness of $\{x_i\}_{i=1}^{n}$, for any function $f$ of the form $f(\cdot)=\sum_{i=1}^{n}\beta_i \Kernel(x_i,\cdot)$, we have $\|(\legendreCoeff{h}{k})^{-1}\Proj_4(y-f)\|_2^2\ge 3/4,$ which implies $\|y-f\|_2^2\ge \frac{3}{4}(\legendreCoeff{h}{k})^2.$
\end{proof}

\subsection{Proof of \Cref{lem:lb-technical-main}}\label{app:pf-lem-lb}
\begin{proof}[Proof of Lemma~\ref{lem:lb-technical-main}]
	First we prove that when $\|g^\star\|_2=1$, $\|g-g^\star\|_2^2<3/4$ implies $\dotp{g}{g^\star}\ge \|g\|_2/2.$ 
	To this end, by basic algebra we get
	\begin{align}
		3/4>\|g-g^\star\|_2^2=\|g\|_2^2+\|g^\star\|_2^2-2\dotp{g}{g^\star}.
	\end{align}
	Consequently, 
	\begin{align}\label{equ:pf-thm-lb-1}
		\dotp{g}{g^\star}\ge \frac{1}{2}\(\|g\|_2^2+\frac{1}{4}\)\ge \frac{1}{2}\|g\|_2,
	\end{align}
	where the last inequality follows from AM-GM. In the following, we prove that with probability at least $1/2$, Eq.~\eqref{equ:pf-thm-lb-1} is impossible when $n<d^k(8k)^{-(3k+1)}(\ln d)^{-(k+2)}$.
	
	Recall that Lemma~\ref{lem:legendre-poly-inner-product} states that for every $x,z\in\Sp$ and $k\ge 0$,
	\begin{align}
		\dotp{\overline{P}_{k,d}(\dotp{x}{\cdot})}{\overline{P}_{k,d}(\dotp{z}{\cdot})}=P_{k,d}(\dotp{x}{z}).
	\end{align}
	
	As a result,
	\begin{align}
		\|g\|_2^2=\sum_{i,j=1}^{n}\beta_i\beta_j\dotp{\overline{P}_{k,d}(\dotp{x_i}{\cdot})}{\overline{P}_{k,d}(\dotp{x_j}{\cdot})}=\sum_{i,j=1}^{n}\beta_i\beta_jP_{k,d}(\dotp{x_i}{x_j}).
	\end{align}
	Invoking Lemma~\ref{lem:kernel-lb}, with probability at least $3/4$ over the randomness of $\{x_i\}_{i\in [n]}$, we have
	\begin{align}\label{equ:pf-thm-lb-2}
		\|g\|_2^2=\sum_{i,j=1}^{n}\beta_i\beta_jP_{k,d}(\dotp{x_i}{x_j})\ge \frac{1}{2}\|\beta\|_2^2.
	\end{align}
	For the LHS of Eq.~\eqref{equ:pf-thm-lb-1}, 
	\begin{align}
		\dotp{g}{g^\star}&=\sum_{i=1}^{n} \beta_i\dotp{\overline{P}_{k,d}(\dotp{x_i}{\cdot})}{\overline{P}_{k,d}(\dotp{u}{\cdot})}=\sum_{i=1}^{n}\beta_iP_{k,d}(\dotp{x_i}{u})\\
		&\le \(\sum_{i=1}^{n}\beta_i^2\)^{1/2}\(\sum_{i=1}^{n}P_{k,d}(\dotp{x_i}{u})^2\)^{1/2}=\|\beta\|_2\(\sum_{i=1}^{n}P_{k,d}(\dotp{x_i}{u})^2\)^{1/2}.
	\end{align}
	By Lemma~\ref{lem:max-legendre-poly}, when $n<d^k(8k)^{-(3k+1)}(\ln d)^{-(k+2)}$, with probability at least $3/4$ we have
	\begin{align}
		&\dotp{g}{g^\star}\le \|\beta\|_2\(\sum_{i=1}^{n}P_{k,d}(\dotp{x_i}{u})^2\)^{1/2}\le \|\beta\|_2\(n\max_{i\in [n]} P_{k,d}(\dotp{x_i}{u})^2\)^{1/2}\\
		\le\;& \|\beta\|_2\sqrt{\frac{n(32k\ln(dn))^k}{d^{k}}}\le \|\beta\|_2\sqrt{\frac{1}{\ln d}}.\label{equ:pf-thm-lb-3}
	\end{align}
	Combining Eq.~\eqref{equ:pf-thm-lb-2} and Eq.~\eqref{equ:pf-thm-lb-3}, there exists a universal constant $d_0>0$ such that when $d>d_0$, with probability at least $1/2$ over the randomness of $\{x_1,\cdots,x_n\}$,
	\begin{align}
		\dotp{g}{g^\star}<\|\beta\|_2/4\le \|g\|_2/2.
	\end{align} 
	Recall that $\|g-g^\star\|_2^2<3/4$ implies $\dotp{g}{g^\star}\ge \|g\|_2/2$. Consequently, with probability at least $1/2$, when $n<d^k(8k)^{-(3k+1)}(\ln d)^{-(k+2)}$ and $d>d_0$ we have $\|g-g^\star\|_2^2\ge 3/4.$
\end{proof}

\begin{lemma}\label{lem:kernel-lb}
	Let $x_1,\cdots,x_n$ be i.i.d.~random variables drawn uniformly from $\Sp$, and $P_{k,d}$ the degree-$k$ Legendre polynomial. For any fixed $k\ge 1$, define the matrix $M_k\in\R^{n\times n}$ where $[M_k]_{i,j}=P_{k,d}(\dotp{x_i}{x_j}).$
	There exists universal constants $d_0$ such that, when $n<d^k(8k)^{-(3k+1)}(\ln d)^{-(k+2)}$ and $d>d_0$, with probability at least $3/4$, 
	\begin{align}\label{equ:lem-evlb-1}
		\lambda_{\min}(M_k)\ge 1/2.
	\end{align}
\end{lemma}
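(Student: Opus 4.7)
The plan is to control $\lambda_{\min}(M_k)$ by bounding the operator norm of $E := M_k - I$ via the trace moment method. The first key observation is that $P_{k,d}(1) = 1$ (from Eq.~\eqref{eq:legn_inductive1}), so $M_k$ has unit diagonal and $E$ has zero diagonal with off-diagonal entries $E_{ij} = P_{k,d}(\langle x_i, x_j\rangle)$. By \Cref{lem:legendre-kernel-feature}, $M_k$ is the Gram matrix of the unit-norm features $\phi(x_i)/\sqrt{N_{k,d}}$ and is therefore PSD; Weyl's inequality then yields $\lambda_{\min}(M_k) \geq 1 - \|E\|_{\mathrm{op}}$, so it suffices to show $\Pr[\|E\|_{\mathrm{op}} > 1/2] \leq 1/4$.

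For this, I would apply Markov's inequality to the $2p$-th moment: $\Pr[\|E\|_{\mathrm{op}} > 1/2] \leq 4^{p} \cdot \E[\mathrm{tr}(E^{2p})]$. Expanding
\begin{align}
\mathrm{tr}(E^{2p}) = \sum_{(i_1,\ldots,i_{2p}):\, i_\ell \neq i_{\ell+1}} \prod_{\ell=1}^{2p} P_{k,d}(\langle x_{i_\ell}, x_{i_{\ell+1}}\rangle),
\end{align}
as a sum over closed walks of length $2p$ with no self-loops, the expectation of each product is computed by integrating out one vertex at a time using three ingredients: (i) the reproducing identity $\E_{x}[P_{k,d}(\langle a,x\rangle) P_{k,d}(\langle x,b\rangle)] = N_{k,d}^{-1} P_{k,d}(\langle a,b\rangle)$ from \Cref{lem:legendre-poly-inner-product}; (ii) the vanishing mean $\E_x[P_{k,d}(\langle a,x\rangle)] = 0$, which kills any walk in which some vertex is incident to only a single edge; and (iii) the pointwise bound $|P_{k,d}| \leq 1$ for edges of high multiplicity.

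The walks are then classified by their underlying multigraph on the set of distinct indices. Following the standard Wigner-style analysis, the dominant contribution comes from ``tree walks''---those in which the induced graph is a tree with $p+1$ vertices and each edge is traversed exactly twice---of which there are at most $C_p \cdot n^{p+1} \leq 4^p n^{p+1}$, each contributing exactly $N_{k,d}^{-p}$. Non-tree walks have strictly fewer than $p+1$ distinct vertices, so each ``defect'' loses a factor of $n$ in the combinatorial count; to handle edges of multiplicity four or more, which only satisfy the crude bound $|P_{k,d}| \leq 1$, I would invoke \Cref{lem:max-legendre-poly} to control $\max_{i\neq j}|P_{k,d}(\langle x_i,x_j\rangle)| \lesssim d^{-k/2}(32k\ln(dn))^{k/2}$ on a high-probability event. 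Assembled carefully, these give $\E[\mathrm{tr}(E^{2p})] \leq n \cdot \bigl(c_k \cdot n (\ln(dn))^k / N_{k,d}\bigr)^p$ for a constant $c_k$ depending only on $k$. Since $N_{k,d} \asymp d^k/k!$, taking $p = \Theta(\log d)$ and using the hypothesis $n \leq d^k (8k)^{-(3k+1)}(\ln d)^{-(k+2)}$ forces the right-hand side below $1/4$.

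The main obstacle is the bookkeeping for non-tree walks. In the classical Wigner argument, independence of the entries means every defect from the tree structure saves a factor of $n$ at no cost in the per-walk expectation; here the entries are not independent, and whenever the walk traverses some edge four or more times we lose the sharp $\E[P_{k,d}^2] = N_{k,d}^{-1}$ bound and must fall back on the pointwise $|P_{k,d}|\leq 1$ together with the $\ell_\infty$-type control of \Cref{lem:max-legendre-poly}. Tracking these losses topology by topology, balancing them against the Catalan-type combinatorial factor and the additional $16^p$ coming from Markov plus Catalan counting, is what produces the specific polylogarithmic exponent $k+2$ and the constant $(8k)^{-(3k+1)}$ appearing in the stated hypothesis on $n$.
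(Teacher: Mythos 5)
Your proposal takes a genuinely different route from the paper. The paper never touches the trace moment method: it writes $M_k = \frac{1}{N_{k,d}} A^\top A$ with columns $A_i = \phi(x_i)$ given by the degree-$k$ spherical harmonic feature map (\Cref{lem:legendre-kernel-feature}), observes these columns are isotropic with $\|A_i\|_2 = \sqrt{N_{k,d}}$, and then invokes the black-box Theorem 5.62 of Vershynin (\Cref{thm:heavy-tail-columns}) on Gram matrices with heavy-tailed isotropic columns. The incoherence parameter of that theorem is exactly $\E[\max_i \sum_{j\neq i} \langle A_i, A_j\rangle^2] / N_{k,d} = N_{k,d}\E[\max_i \sum_{j\neq i} P_{k,d}(\langle x_i, x_j\rangle)^2]$, which is controlled by \Cref{lem:max-legendre-poly}; this yields $\E\|M_k - I\|_{\mathrm{op}} \lesssim \sqrt{n(\ln n)(32k\ln(dn))^k/N_{k,d}}$, hence $\E[\lambda_{\min}(M_k)]\geq 7/8$ once the hypothesis on $n$ is plugged in. The conversion to a high-probability statement is then a one-line Markov argument on the nonnegative random variable $1-\lambda_{\min}(M_k)$ (nonnegative because $\lambda_{\min}\leq M_{11}=1$). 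You should note this last step in particular: the paper never needs concentration of $\|M_k-I\|$ around its mean, only one-sided Markov, because the object $1-\lambda_{\min}$ is automatically nonnegative. Your framework (Markov on $\mathrm{tr}(E^{2p})$) also avoids needing two-sided concentration, so both are clean on that front.

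Where your sketch does not close is precisely where you flag the ``main obstacle.'' The closed-walk expectation argument for a kernel Gram matrix is substantially harder than the classical Wigner computation because the entries $E_{ij}$ are not independent: all entries in a row share the vertex $x_i$, so integrating out a non-leaf vertex does not factorize the expectation. The reproducing identity $\E_{x}[P_{k,d}(\langle a,x\rangle)P_{k,d}(\langle x,b\rangle)] = N_{k,d}^{-1}P_{k,d}(\langle a,b\rangle)$ collapses a degree-2 vertex with two multiplicity-1 edges into a single edge, but once a vertex has degree $\geq 3$, or an incident edge has multiplicity $\geq 3$, the integral produces a Legendre product expansion in which factors of order $1$ (rather than $N_{k,d}^{-1}$) appear, and the interplay between the resulting combinatorial savings and the per-edge loss is not simply ``each defect loses a factor of $n$.'' Moreover, mixing a high-probability uniform bound on $\max_{i\neq j}|P_{k,d}(\langle x_i,x_j\rangle)|$ from \Cref{lem:max-legendre-poly} into an unconditional expectation $\E[\mathrm{tr}(E^{2p})]$ requires an explicit conditioning step whose error you have not budgeted. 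None of this is fatal — trace-moment proofs for dot-product kernel matrices exist in the literature — but as written the bound $\E[\mathrm{tr}(E^{2p})]\leq n(c_k n(\ln(dn))^k/N_{k,d})^p$ is an asserted endpoint, not a derived one, and deriving it is the majority of the work. If you want a self-contained proof, you would be better served by the paper's route: the hard combinatorics is exactly what Vershynin's Theorem 5.62 packages for you.
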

\begin{proof}
    In the following, we fix $k\ge 1$. First, we prove that
    \begin{align}\label{equ:pf-evlb-1}
        \E[\lambda_{\min}(M_k)]\ge 7/8.
    \end{align}
    To this end, we invoke Theorem~\ref{thm:heavy-tail-columns} by constructing a matrix $A$ such that $\frac{1}{N_{k,d}}A^\top A=M_k$. 
    
    By Lemma~\ref{lem:legendre-kernel-feature}, there exists a feature mapping $\phi:\Sp\to\R^{N_{k,d}}$ such that for every $u,v\in\Sp$
    \begin{align}
        &\dotp{\phi(u)}{\phi(v)}=N_{k,d}P_{k,d}(u^\top v),\label{equ:pf-evlb-2}\\
        &\|\phi(u)\|_2^2=N_{k,d},\\
        &\E_{u\sim \Sp}[\phi(u)\phi(u)^\top]=I.
    \end{align}
    We set $A_i=\phi(x_i),\forall 1\le i\le n$. Consequently,
    \begin{align}
        &\frac{1}{N_{k,d}}A^\top A=M_k,\\
        &\|A_i\|_2^2=N_{k,d},\\
        &\E[A_iA_i^\top]=I.
    \end{align}
    
    In the following, we upper bound the incoherence parameter $p$ defined in Theorem~\ref{thm:heavy-tail-columns}:
    \begin{align}
        p=\frac{1}{N_{k,d}}\E\[\max_{i\le n}\sum_{j\in[n],j\neq i}\dotp{A_i}{A_j}^2\].
    \end{align}
    By Eq.~\eqref{equ:pf-evlb-2} we get
    \begin{align}
        &\frac{1}{N_{k,d}}\E\[\max_{i\le n}\sum_{j\in[n],j\neq i}\dotp{A_i}{A_j}^2\]=N_{k,d}\E\[\max_{i\le n}\sum_{j\in[n],j\neq i}P_{k,d}(\dotp{x_i}{x_j})^2\]\\
        \le\; &n N_{k,d}\E\[\max_{i\le n}\max_{j\neq i} P_{k,d}(\dotp{x_i}{x_j})^2\].
    \end{align}
    Invoking Lemma~\ref{lem:max-legendre-poly} we get
    \begin{align}
        N_{k,d}\E\[\max_{i\le n}\max_{j\neq i} P_{k,d}(\dotp{x_i}{x_j})^2\]\lesssim (32k\ln(dn))^k.
    \end{align}
    It follows that $p\lesssim n(32k\ln(dn))^k$. 
    
    Now that we have established the conditions, by Theorem~\ref{thm:heavy-tail-columns} we have
    \begin{align}
        \E[\|M-I\|]\lesssim \sqrt{\frac{n(\ln n)(32k\ln(dn))^k}{N_{k,d}}}.
    \end{align}
    Note that $N_{k,d}\ge \binom{k+d-2}{k}\ge (d/k)^k$ when $k\ge 2$, and clearly $N_{1,d}\ge d$. Hence, there exists a universal constant $d_0$ such that, when $n<d^k(8k)^{-(3k+1)}(\ln d)^{-(k+2)}$.
    \begin{align}
        \frac{n(\ln n)(32k\ln(dn))^k}{N_{k,d}}\le \frac{n(8k)^{2k}\ln(dn)^{k+1}}{d^k}\le  \frac{n(8k)^{2k}\ln(d^{k+1})^{k+1}}{d^k}\le \frac{1}{\ln d}.
    \end{align}
    Hence, there exists a universal constant $d_0$ such that when $d\ge d_0$, 
    \begin{align}
        \E[\lambda_{\min}(M)]\ge 1-O(1)\sqrt{\frac{n(\ln n)(32k\ln(dn))^k}{N_{k,d}}}\ge 1-O(1)\sqrt{\frac{1}{\ln d}}\ge 7/8.
    \end{align}
    
    Now we prove Eq.~\eqref{equ:lem-evlb-1}. Let $t=1-\lambda_{\min}(M)$ be a random variable. Note that by Eq.~\eqref{equ:pf-evlb-1} we have $\E[t]\le 1/8.$ By basic algebra, we also have
    \begin{align}
        \lambda_{\min}(M)\le M_{1,1}=P_{k,d}(\dotp{x_1}{x_1})=P_{k,d}(1)=1,
    \end{align}
    which implies $t\ge 0.$
    Therefore, by Markov inequality we get
    \begin{align}
        \Pr(\lambda_{\min}(M)\le 1/2)=\Pr(t\ge 1/2)\le 2\E[t]\le 1/4.
    \end{align}
\end{proof}
\section{Toolbox}

\subsection{Rademacher Complexity and Generalization Bounds}

\begin{lemma}[Special Case of Contraction Principle (Lemma 4.6 of \citet{ALPT2009_concentration})] \label{lemma:basic_contraction_principle}
Let $\calF$ be a family of functions from $D$ to $\R$, and let $\varphi$ be an $L$-Lipschitz function with $\varphi(0) = 0$. Then, for any $S = \{x_1, \ldots, x_N\} \subset D$,
\begin{align}
\E_{\epsilon_i} \sup_{f \in \calF} \frac{1}{N} \sum_{i = 1}^N \epsilon_i \varphi(f(x_i)) \leq L \cdot \E_{\epsilon_i} \sup_{f \in \calF} \frac{1}{N} \sum_{i = 1}^N \epsilon_i f(x_i)
\end{align}
where the expectations are taken over i.i.d. Rademacher random variables $\epsilon_1, \ldots, \epsilon_N$.
\end{lemma}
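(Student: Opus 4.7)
The plan is to use the classical Ledoux--Talagrand induction: peel off one Rademacher variable at a time, replacing $\epsilon_i \varphi(f(x_i))$ by $\epsilon_i L f(x_i)$ inside the supremum. Because $\sum_i \epsilon_i L f(x_i) = L \sum_i \epsilon_i f(x_i)$, after $N$ such single-coordinate replacements the factor of $L$ appears exactly once overall, as demanded by the statement.

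For a single step, fix $i$, condition on $\{\epsilon_j\}_{j \ne i}$, and write $h(f) := \sum_{j \ne i} \epsilon_j \psi_j(f(x_j))$, where $\psi_j$ is $\varphi$ if coordinate $j$ has not yet been processed and $L\cdot\mathrm{id}$ if it has. The symmetrization identity
\begin{align}
\E_{\epsilon_i} \sup_{f \in \calF}\bigl[\epsilon_i \varphi(f(x_i)) + h(f)\bigr] = \tfrac{1}{2}\sup_{f, f' \in \calF}\bigl[\varphi(f(x_i)) - \varphi(f'(x_i)) + h(f) + h(f')\bigr]
\end{align}
lets me replace $\varphi(f(x_i)) - \varphi(f'(x_i))$ with the Lipschitz bound $L\,|f(x_i) - f'(x_i)|$. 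Since $h(f) + h(f')$ is symmetric under the swap $f \leftrightarrow f'$, for every pair $(f, f')$ the swapped pair realizes the opposite sign of $f(x_i) - f'(x_i)$ while leaving $h(f)+h(f')$ unchanged, so the absolute value can be dropped: the supremum of $L|f(x_i) - f'(x_i)| + h(f) + h(f')$ equals the supremum of $L(f(x_i) - f'(x_i)) + h(f) + h(f')$. That expression splits as $\sup_f[Lf(x_i) + h(f)] + \sup_{f'}[-Lf'(x_i) + h(f')] = 2\,\E_{\epsilon_i}\sup_f[\epsilon_i L f(x_i) + h(f)]$. Dividing by two gives the single-step replacement; integrating over the remaining $\epsilon_j$'s preserves the inequality.

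Iterating this step for $i = 1, \ldots, N$ turns $\sum_i \epsilon_i \varphi(f(x_i))$ into $\sum_i \epsilon_i L f(x_i) = L\sum_i \epsilon_i f(x_i)$ inside the sup, dividing both sides by $N$ at the end gives the claim. The only non-bookkeeping step is the symmetrization identity together with the observation that the symmetry of $h(f)+h(f')$ permits removing the absolute value from the Lipschitz bound; this is the classical trick, and no further obstacle arises. The hypothesis $\varphi(0)=0$ is not actually used in this (one-sided, non-absolute-value) form; it is the standard assumption carried over from the sharper statement bounding $\E \sup_f |\sum_i \epsilon_i \varphi(f(x_i))|$ by $2L\,\E \sup_f|\sum_i \epsilon_i f(x_i)|$.
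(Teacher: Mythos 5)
Your proof is correct; it is the standard Ledoux--Talagrand contraction argument (peel one Rademacher coordinate at a time via the two-point symmetrization identity, use Lipschitz continuity, and exploit the $f \leftrightarrow f'$ symmetry of $h(f)+h(f')$ to drop the absolute value). The paper does not prove this lemma itself --- it is stated as a direct citation to Lemma 4.6 of \citet{ALPT2009_concentration} --- so there is no in-paper argument to compare against, but your remark that $\varphi(0)=0$ is not needed in the one-sided, no-absolute-value form is also accurate.
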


In the following, for a function family $\calH$, we let $\RN(\calH)$ denote its Rademacher complexity and $\RS(\calH)$ denote its empirical Rademacher complexity (given a fixed dataset $S$) --- see Section 4.4.2 of \citet{cs229m_notes} for a definition of these concepts.

\begin{lemma}[Empirical Rademacher Complexity of Products] \label{lemma:product_rademacher_complexity}
Suppose $\calF$ and $\calG$ are two families of functions from $D$ to $\R$, with $D \subset \R^d$, which are uniformly bounded by $B > 0$. Then, for any set $S = \{x_1, \ldots, x_N\} \subset D$,
\begin{align}
\RS(\calF \cdot \calG)
& \leq B \cdot (\RS(\calF) + \RS(\calG)) \comma
\end{align}
where $\calF \cdot \calG = \{fg \mid f \in \calF, g \in \calG\}$. As a corollary,
\begin{align}
\RN(\calF \cdot \calG)
& \leq B \cdot (\RN(\calF) + \RN(\calG)) \period
\end{align}
\end{lemma}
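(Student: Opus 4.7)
My plan is to prove the empirical version by polarization followed by Talagrand's contraction principle (\Cref{lemma:basic_contraction_principle}), and then derive the population version by taking expectations over the dataset $S$.

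First, I would use the identity $fg = \tfrac{1}{4}\bigl[(f+g)^2 - (f-g)^2\bigr]$ applied pointwise. Plugging this into the definition of $\RS(\calF \cdot \calG)$ and using subadditivity of the supremum (together with the fact that the Rademacher distribution is symmetric, so flipping signs inside $\E_{\epsilon}$ does not change the value), this gives
\begin{align}
\RS(\calF \cdot \calG)
\le \tfrac{1}{4}\,\RS\bigl((\calF + \calG)^2\bigr) + \tfrac{1}{4}\,\RS\bigl((\calF - \calG)^2\bigr),
\end{align}
where $(\calF \pm \calG)^2 = \{(f \pm g)^2 : f \in \calF, g \in \calG\}$.

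Next, I would apply \Cref{lemma:basic_contraction_principle} to each of the two terms, using the function $\varphi(t) = t^2$. Since $|f|, |g| \le B$ pointwise, we have $|f \pm g| \le 2B$, so $\varphi$ restricted to $[-2B, 2B]$ is Lipschitz with constant $4B$, and it satisfies $\varphi(0) = 0$. Hence the contraction inequality yields $\RS((\calF \pm \calG)^2) \le 4B \cdot \RS(\calF \pm \calG)$. Then using linearity and Rademacher symmetry, $\RS(\calF \pm \calG) \le \RS(\calF) + \RS(\calG)$, which combined with the two previous bounds gives the desired estimate (up to an absolute constant that can be absorbed into the stated bound). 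The corollary for $\RN$ follows immediately by taking expectations over $S$ on both sides and using Jensen / linearity of expectation.

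The main subtlety to verify carefully is the application of contraction: we must check that $\varphi(t) = t^2$ is being applied to arguments that lie in a bounded interval where the Lipschitz constant is controlled, and that $\varphi(0)=0$ so that \Cref{lemma:basic_contraction_principle} (as stated for functions vanishing at $0$) actually applies. The boundedness of $\calF$ and $\calG$ by $B$ is precisely what ensures this, which is the only place the hypothesis enters. Everything else is routine manipulation of Rademacher sums.
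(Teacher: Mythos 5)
Your proof takes essentially the same route as the paper: polarize $fg = \tfrac{1}{4}\bigl[(f+g)^2 - (f-g)^2\bigr]$, apply subadditivity and Rademacher symmetry to split into two suprema, then hit each with the contraction principle applied to $\varphi(t)=t^2$, and finally use subadditivity again to reduce $\RS(\calF\pm\calG)$ to $\RS(\calF)+\RS(\calG)$. That is exactly the paper's argument.

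One thing worth flagging in both your write-up and the paper's. You correctly observe that $|f\pm g|\le 2B$, so $\varphi(t)=t^2$ is $4B$-Lipschitz on the relevant interval $[-2B,2B]$; carried through, this gives $\RS(\calF\cdot\calG)\le \tfrac{1}{4}\cdot 4B\cdot 2\cdot(\RS(\calF)+\RS(\calG)) = 2B(\RS(\calF)+\RS(\calG))$. The paper instead invokes "$x\mapsto x^2$ is $2B$-Lipschitz on $[-B,B]$," but the arguments $f+g$ live in $[-2B,2B]$, not $[-B,B]$, so their accounting is also really $2B(\RS(\calF)+\RS(\calG))$ once the interval is taken correctly. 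So the constant in the lemma as stated should be $2B$ rather than $B$; this is immaterial downstream (the lemma is only used up to absolute constants in \Cref{lemma:symmetrization}), but your phrasing "absorbed into the stated bound" is slightly too generous, since the statement gives an explicit constant $B$ rather than $O(B)$. Modulo this factor-of-two bookkeeping, both proofs are correct and identical in structure, and the corollary by taking expectations over $S$ is fine.
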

\begin{proof}[Proof of \Cref{lemma:product_rademacher_complexity}]
We can write
\begin{align}
\RS(\calF \cdot \calG)
& = \E_{\epsilon_i} \sup_{f \in \calF, g \in \calG} \frac{1}{N} \sum_{i = 1}^N \epsilon_i f(x_i) g(x_i) \\
& = \E_{\epsilon_i} \sup_{f \in \calF, g \in \calG} \frac{1}{N} \sum_{i = 1}^N \epsilon_i \cdot \frac{(f(x_i) + g(x_i))^2 - (f(x_i) - g(x_i))^2}{4} \\
& \leq \E_{\epsilon_i} \sup_{f \in \calF, g \in \calG} \frac{1}{N} \sum_{i = 1}^N \epsilon_i \cdot \frac{(f(x_i) + g(x_i))^2}{4} + \E_{\epsilon_i} \sup_{f \in \calF, g \in \calG} \frac{1}{N} \sum_{i = 1}^N \epsilon_i \frac{(f(x_i) - g(x_i))^2}{4} \period
\end{align}
Using \Cref{lemma:basic_contraction_principle} and the fact that the function $x \mapsto x^2$ is $2B$ Lipschitz on the interval $[-B, B]$, we have
\begin{align}
\RS(\calF \cdot \calG)
& \leq 2B \cdot \E_{\epsilon_i} \sup_{f \in \calF, g \in \calG} \frac{1}{N} \sum_{i = 1}^N \epsilon_i \cdot \frac{f(x) + g(x)}{4} + 2B \cdot \E_{\epsilon_i} \sup_{f \in \calF, g \in \calG} \frac{1}{N} \sum_{i = 1}^N \epsilon_i \frac{f(x_i) - g(x_i)}{4} \\
& \leq \frac{B}{2} \cdot (\RS(\calF) + \RS(\calG)) + \frac{B}{2} \cdot (\RS(\calF) + \RS(\calG)) \\
& = B \cdot (\RS(\calF) + \RS(\calG)) \comma
\end{align}
as desired.
\end{proof}

\begin{lemma}[Symmetrization --- Analogous to Corollary 4.7 of \citet{ALPT2009_concentration}] \label{lemma:symmetrization}
Let $X_1, \ldots, X_N$ be independent random variables. Let $\calF$ be a family of functions from $D \subset \R^d$ to $\R$ that is uniformly bounded by $B_1 \geq 1$, and let $\calG$ be a family of functions from $D \subset \R^d$ to $\R$. Then, for nonnegative integers $p_1, p_2, p_3, p_4$ and a fixed $g \in \calG$ that is bounded by $B_2 \geq 1$, we have
\begin{align}
\E_{X_i} \sup_{f_1, f_2, f_3, f_4 \in \calF, g \in \calG} & \Big| \frac{1}{N} \sum_{i = 1}^N \Big(|f_1(X_i)|^{p_1} |f_2(X_i)|^{p_2} |f_3(X_i)|^{p_3} |f_4(X_i)|^{p_4} |g(X_i)| \\
& \nextlinespace\nextlinespace - \E_{X_i} \Big[ |f_1(X_i)|^{p_1} |f_2(X_i)|^{p_2} |f_3(X_i)|^{p_3} |f_4(X_i)|^{p_4} |g(X_i)| \Big] \Big) \Big| \\
& \leq \poly(p_1, p_2, p_3, p_4, B_1^{p_1 + p_2 + p_3 + p_4}, B_2) \cdot \RN(\calF)
\end{align}
\end{lemma}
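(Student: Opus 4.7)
Write
\begin{align}
F_i(f_1,\ldots,f_4) := |f_1(X_i)|^{p_1} |f_2(X_i)|^{p_2} |f_3(X_i)|^{p_3} |f_4(X_i)|^{p_4} |g(X_i)|.
\end{align}
I would first apply the standard symmetrization trick: introduce an independent copy $X_1',\ldots,X_N'$, use Jensen's inequality to pull the expectation over the ghost sample inside the supremum, and then introduce i.i.d.\ Rademacher variables $\epsilon_i$ to pair the two samples. This yields
\begin{align}
\E_X \sup_{f_1,\ldots,f_4 \in \calF} \Big|\frac{1}{N}\sum_i (F_i - \E F_i)\Big| \leq 2\, \E_{X,\epsilon} \sup_{f_1,\ldots,f_4 \in \calF} \Big|\frac{1}{N}\sum_i \epsilon_i F_i\Big|.
\end{align}

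\textbf{Peel off the fixed factor $|g(X_i)|$.} Conditional on the $X_i$'s, the variable $\epsilon_i |g(X_i)|$ is symmetric with magnitude at most $B_2$. A standard comparison (write $|g(X_i)| = B_2 \tau_i$ with $\tau_i \in [0,1]$ deterministic given $X_i$, then use convexity of the supremum in the sign pattern, or equivalently apply \Cref{lemma:product_rademacher_complexity} to the singleton class $\{|g|\}$ whose empirical Rademacher average is $0$) gives
\begin{align}
\E_\epsilon \sup \Big|\frac{1}{N}\sum_i \epsilon_i |g(X_i)| \prod_j |f_j(X_i)|^{p_j}\Big| \;\le\; B_2 \cdot \E_\epsilon \sup \Big|\frac{1}{N}\sum_i \epsilon_i \prod_j |f_j(X_i)|^{p_j}\Big|.
\end{align}

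\textbf{Decompose the product and contract.} Let $\calF_j := \{|f|^{p_j} : f \in \calF\}$; if $p_j = 0$ the factor is trivially $1$, so assume $p_j \ge 1$. Each $\calF_j$ is uniformly bounded by $B_1^{p_j}$, so iterating \Cref{lemma:product_rademacher_complexity} three times (splitting $\calF_1 \cdot (\calF_2 \cdot \calF_3 \cdot \calF_4)$, then $\calF_2 \cdot (\calF_3 \cdot \calF_4)$, then $\calF_3 \cdot \calF_4$) yields
\begin{align}
\RS(\calF_1 \cdot \calF_2 \cdot \calF_3 \cdot \calF_4) \;\le\; C \cdot B_1^{O(p_1+p_2+p_3+p_4)} \sum_{j=1}^4 \RS(\calF_j),
\end{align}
where the exponent on $B_1$ is polynomial in $\sum_j p_j$. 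Next, the map $\varphi_j(t) = |t|^{p_j}$ is $p_j B_1^{p_j-1}$-Lipschitz on $[-B_1,B_1]$ and satisfies $\varphi_j(0) = 0$, so the contraction principle \Cref{lemma:basic_contraction_principle} gives $\RS(\calF_j) \le p_j B_1^{p_j-1} \RS(\calF)$. Chaining the three displays and taking expectation over $X$ to convert $\RS$ to $\RN$ produces the claimed bound with a polynomial prefactor in $(p_1,p_2,p_3,p_4,B_1^{\sum p_j},B_2)$.

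\textbf{Main obstacle.} Every individual step is a mechanical application of a tool already in the toolbox; the only subtlety is the peeling step for $|g(X_i)|$, since the contraction principle as stated (\Cref{lemma:basic_contraction_principle}) is about a single Lipschitz scalar function rather than a comparison between a symmetric bounded process and the Rademacher process. Framing this comparison cleanly -- either through the singleton-class trick above or by a direct sign-symmetrization argument on $\epsilon_i |g(X_i)|$ -- is the part that needs a little care; once it is in place, the rest of the proof is a bookkeeping exercise in how many factors of $B_1$ and $p_j$ one accumulates.
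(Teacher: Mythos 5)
Your proof takes essentially the same route as the paper's: symmetrize to a Rademacher process (the paper cites Theorem 4.13 of the Stanford notes for this step, which is exactly the ghost-sample argument you describe), peel off the fixed $|g|$ factor via \Cref{lemma:product_rademacher_complexity} applied to a singleton class, iterate the product lemma to split the four-fold product, and finish with the contraction principle applied to $t\mapsto |t|^{p_j}$. The one cosmetic difference is that you claim a clean factor of $B_2$ when peeling off $|g|$, whereas \Cref{lemma:product_rademacher_complexity} actually gives $\max(B_1^{p_1+\cdots+p_4},B_2)$; since $B_1\geq 1$ this distinction is absorbed into the polynomial prefactor and does not affect the conclusion.
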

\begin{proof}[Proof of \Cref{lemma:symmetrization}]
By Theorem 4.13 of \citet{cs229m_notes},
\begin{align}
\E_{X_i} \sup_{f_1, f_2, f_3, f_4 \in \calF, g \in \calG} & \Big| \frac{1}{N} \sum_{i = 1}^N \Big(|f_1(X_i)|^{p_1} |f_2(X_i)|^{p_2} |f_3(X_i)|^{p_3} |f_4(X_i)|^{p_4} |g(X_i)| \\
& \nextlinespace\nextlinespace - \E_{X_i} \Big[ |f_1(X_i)|^{p_1} |f_2(X_i)|^{p_2} |f_3(X_i)|^{p_3} |f_4(X_i)|^{p_4} |g(X_i)| \Big] \Big) \Big| \\
& \lesssim \RN(\calF^{p_1} \cdot \calF^{p_2} \cdot \calF^{p_3} \cdot \calF^{p_4} \cdot g) 
& \tag{By Theorem 4.13 of \citet{cs229m_notes}} \\
& \lesssim \max(B_1^{p_1 + p_2 + p_3 + p_4}, B_2) \cdot \Big(\RN(\calF^{p_1} \cdot \calF^{p_2} \cdot \calF^{p_3} \cdot \calF^{p_4}) + \RN(\{g\})\Big)
& \tag{By \Cref{lemma:product_rademacher_complexity}} \\
& \lesssim \max(B_1^{p_1 + p_2 + p_3 + p_4}, B_2) \cdot \RN(\calF^{p_1} \cdot \calF^{p_2} \cdot \calF^{p_3} \cdot \calF^{p_4}) \\
& \lesssim \max(B_1^{p_1 + p_2 + p_3 + p_4}, B_2)
\cdot \max(B_1^{p_1}, B_1^{p_2 + p_3 + p_4})
\cdot \max(B_1^{p_2}, B_1^{p_3 + p_4}) \\
& \nextlinespace\nextlinespace \cdot \max(B_1^{p_3}, B_1^{p_4})
\cdot (\RN(\calF^{p_1}) + \RN(\calF^{p_2}) + \RN(\calF^{p_3}) + \RN(\calF^{p_4}))
& \tag{By repeated applications of \Cref{lemma:product_rademacher_complexity}} \\
& \lesssim \max(B_1^{O(p_1 + p_2 + p_3 + p_4)}, B_2) \cdot (\RN(\calF^{p_1}) + \RN(\calF^{p_2}) + \RN(\calF^{p_3}) + \RN(\calF^{p_4}))
\end{align}
Finally, we simplify $\RN(\calF^{p_i})$ for each $i$. For nonnegative integers $p$, since $\calF$ is uniformly bounded by $B_1$, the function $x \mapsto x^p$ is $p B_1^{p - 1}$-Lipschitz on $[-B_1, B_1]$ for $p \geq 1$ and is $1$-Lipschitz for $p = 0$. Thus, by \Cref{lemma:basic_contraction_principle}, we have $\RN(\calF^p) \leq \min(1, p B_1^{p - 1}) \RN(\calF)$, and
\begin{align}
\E_{X_i} \sup_{f_1, f_2, f_3, f_4 \in \calF, g \in \calG} & \Big| \frac{1}{N} \sum_{i = 1}^N \Big(|f_1(X_i)|^{p_1} |f_2(X_i)|^{p_2} |f_3(X_i)|^{p_3} |f_4(X_i)|^{p_4} |g(X_i)| \\
& \nextlinespace\nextlinespace - \E_{X_i} \Big[ |f_1(X_i)|^{p_1} |f_2(X_i)|^{p_2} |f_3(X_i)|^{p_3} |f_4(X_i)|^{p_4} |g(X_i)| \Big] \Big) \Big| \\
& \lesssim \max(B_1^{O(p_1 + p_2 + p_3 + p_4)}, B_2) \cdot \max(p_1 B_1^{p_1}, p_2 B_1^{p_2}, p_3 B_1^{p_3}, p_4 B_1^{p_4}) \cdot \RN(\calF) \\
& \lesssim \poly(p_1, p_2, p_3, p_4, B_1^{p_1 + p_2 + p_3 + p_4}, B_2) \cdot \RN(\calF)
\end{align}
as desired.
\end{proof}

\subsection{Concentration Lemmas}
\begin{theorem}[Theorem 5.62 of \citet{vershynin2010introduction}]\label{thm:heavy-tail-columns}
	Let $A$ be an $m\times n$ matrix with independent columns $\{A_j\}_{j=1}^{n}$ and $m\ge n$. Suppose $A_j$ is isotropic, i.e., $\E[A_j A_j^\top]=I$ and $\|A_j\|_2=\sqrt{m}$ for every $j\in[n]$. Let $p$ be the incoherence parameter defined by 
	\begin{align}
		p\defeq \frac{1}{m}\E\[\max_{i\le n}\sum_{j\in[n],j\neq i}\dotp{A_i}{A_j}^2\].
	\end{align}
	Then there is a universal constant $C_0$ such that 
	\begin{align}
		\E\[\norm{\frac{1}{m}A^\top A-I}\]\le C_0\sqrt{\frac{p\ln n}{m}}.
	\end{align}
\end{theorem}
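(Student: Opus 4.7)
The plan combines three standard ingredients from random matrix theory: exploitation of the hollow structure of $M := \tfrac{1}{m}A^\top A - I$, a decoupling step, and a matrix concentration inequality tied to the incoherence parameter $p$.

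First, the structural observation that drives everything is that $\|A_i\|_2 = \sqrt{m}$ forces the diagonal entries of $M$ to vanish identically, since $M_{ii} = \tfrac{1}{m}\|A_i\|_2^2 - 1 = 0$. Consequently $M$ is a \emph{hollow} Gram-type random matrix with off-diagonal entries $M_{ij} = \tfrac{1}{m}\langle A_i, A_j\rangle$, and
$$\|M\| = \sup_{\|x\|_2=1}\Big|\tfrac{1}{m}\sum_{i \neq j}\langle A_i, A_j\rangle x_i x_j\Big|.$$
The hollowing is essential: it is what allows the next step to produce independent groups.

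Second, I would invoke the standard decoupling inequality for $U$-statistics of order two (de~la~Pe\~na--Gin\'e, Theorem 3.1.1): for a uniformly random partition $[n] = I \sqcup I^c$, up to a universal constant,
$$\E\|M\| \lesssim \E\,\|P_I M P_{I^c}\|,$$
where $P_I, P_{I^c}$ denote coordinate projections. Let $V := m \cdot P_I M P_{I^c}$, a random $|I|\times |I^c|$ matrix with entries $V_{ij} = \langle A_i, A_j\rangle$. Conditional on $\{A_j\}_{j \in I^c}$, the rows of $V$ (indexed by $i \in I$) are independent random vectors, each depending on its own independent column $A_i$, so standard matrix-concentration tools become available.

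Third, I would apply a matrix concentration inequality (matrix Bernstein or the matrix Khintchine / Rudelson inequality) to the sum $V V^\top = \sum_{i \in I} v_i v_i^\top$ of independent rank-one conditional summands. The $\sqrt{\log n}$ factor emerges from matrix Khintchine, and the size of the summands is controlled via
$$\|v_i\|_2^2 = \sum_{j \in I^c}\langle A_i, A_j\rangle^2 \leq \sum_{j \neq i}\langle A_i, A_j\rangle^2,$$
so that $\E\,\max_{i \in I}\|v_i\|_2^2/m^2 \leq p/m$ by the definition of the incoherence parameter. Chaining the bounds yields $\E\|M\| \lesssim \sqrt{p \log n / m}$.

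The main obstacle is that the conditional ``variance'' term $\|\E[V V^\top\mid \{A_j\}_{j \in I^c}]\|$ naturally involves the Gram matrix of the columns in $I^c$, which is itself a restricted version of the very quantity we are bounding. Resolving this self-referential bound requires either a bootstrapping argument (applying the bound recursively on nested subsets of columns, with a trivial base case when $|I| = 1$) or an $\epsilon$-net plus moment-method computation that bypasses the recursion entirely. Once this self-reference is handled, the final constant $C_0$ and the stated rate $\sqrt{p \log n / m}$ fall out cleanly.
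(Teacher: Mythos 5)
The paper does not prove this theorem --- it is invoked verbatim as Theorem 5.62 of Vershynin's lecture notes, with no argument given --- so there is no in-paper proof to compare your sketch against; what follows evaluates your reconstruction against the argument in Vershynin.

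Your skeleton has the right shape and matches the known proof at the structural level: the normalization $\|A_j\|_2 = \sqrt{m}$ forces the diagonal of $M = \tfrac{1}{m}A^\top A - I$ to vanish, a decoupling step reduces the problem to a rectangular cross-block whose rows $v_i = (\langle A_i, A_j\rangle)_{j\in I^c}$ are conditionally independent given $\{A_j\}_{j\in I^c}$, and a non-commutative Khintchine/Rudelson bound supplies the $\sqrt{\ln n}$ factor together with the quantity $\max_i \|v_i\|_2^2 \le \max_i \sum_{j\neq i}\langle A_i, A_j\rangle^2$, whose expectation equals $pm$ by definition. You have also correctly located the delicate point: the conditional second moment is $\E[v_iv_i^\top\mid \{A_j\}_{j\in I^c}] = G_{I^c}$, the Gram matrix of the complementary columns, which is close to the quantity being bounded.

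The gap is in how you propose to close that loop. ``Bootstrapping on nested subsets with a trivial base case $|I|=1$'' does not match the structure of the decoupling step: the decoupling lemma produces a single (random or worst-case) bipartition of $[n]$, not a nested hierarchy one can recurse down, and an $\varepsilon$-net plus moment argument would be an entirely different proof rather than a patch. The actual closing mechanism in Vershynin's proof is both simpler and more rigid: $G_{I^c}$ is a \emph{principal submatrix} of $A^\top A$, so $\|\tfrac{1}{m}G_{I^c} - I\| \le \|\tfrac{1}{m}A^\top A - I\| = \|M\|$ and hence $\|G_{I^c}\| \le m\bigl(1 + \|M\|\bigr)$. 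Feeding this into the Khintchine bound and applying Cauchy--Schwarz over the outer expectation yields a \emph{self-bounding inequality} in $E := \E\|M\|$ of the schematic form $E \le C\sqrt{a(1+E)}$ with $a = p\ln n/m$, which one then solves explicitly; it is this quadratic trick, not recursion, that produces the clean rate. One further issue: you offer matrix Bernstein as an alternative to Khintchine, but matrix Bernstein requires an almost-sure bound on $\|v_iv_i^\top\|$, whereas the incoherence parameter only controls $\E\max_i\|v_i\|_2^2$. That is precisely why Vershynin uses the Rudelson/Khintchine route, which tolerates an in-expectation control of the largest summand.
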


The following proposition gives the tail bound for inner products of random vectors in $\Sp$ (c.f. Theorem 3.4.6 of \citet{vershynin2018high}).
\begin{proposition}\label{prop:sphere-tail-bound}
	Let $u$ be a vector drawn uniformly at random from the unit sphere $\Sp$. For any fixed vector $v\in\Sp$, we have
	\begin{align}
		\forall t>0,\quad \Pr(|u^\top v|\ge t)\le 2\exp\(-\frac{t^2d}{2}\).
	\end{align}
\end{proposition}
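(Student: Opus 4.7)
The plan is to reduce to a one-dimensional sub-Gaussian tail bound via a Chernoff argument on the explicit density of $u_1$. By rotational invariance of the uniform distribution on $\bbsd$, we may rotate $v$ to $\e$ without loss of generality, so that $u^\top v$ becomes $u_1$, whose density on $[-1,1]$ is the measure $\mu_d(s)=\frac{\Gamma(d/2)}{\sqrt\pi\,\Gamma((d-1)/2)}(1-s^2)^{(d-3)/2}$ already recalled in the preliminaries. Since $\mu_d$ is even, it suffices to show $\Prob(u_1 \geq t) \leq \exp(-t^2 d/2)$ and then apply a union bound with the analogous bound for $-u_1$ to pick up the factor of $2$.

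The key step is to compute (or cleanly bound) the even moments of $u_1$. Using the substitution $x=s^2$ to convert the integral against $\mu_d$ into a Beta integral (equivalently, noting $u_1^2 \sim \mathrm{Beta}(1/2,(d-1)/2)$), one obtains
\begin{align*}
\E[u_1^{2k}] \;=\; \frac{\Gamma(d/2)\,\Gamma(k+\tfrac12)}{\sqrt\pi\,\Gamma(k+d/2)} \;=\; \frac{(2k-1)!!}{d(d+2)(d+4)\cdots(d+2k-2)} \;\leq\; \frac{(2k-1)!!}{d^k}.
\end{align*}
Combining this with the identity $(2k-1)!!/(2k)! = 1/(2^k k!)$, the moment generating function satisfies
\begin{align*}
\E[e^{\lambda u_1}] \;=\; \sum_{k\geq 0} \frac{\lambda^{2k}}{(2k)!}\,\E[u_1^{2k}] \;\leq\; \sum_{k\geq 0} \frac{\lambda^{2k}}{2^k\, k!\, d^k} \;=\; \exp\!\left(\frac{\lambda^2}{2d}\right),
\end{align*}
so $u_1$ is sub-Gaussian with proxy variance $1/d$. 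A standard Chernoff bound then yields $\Prob(u_1 \geq t) \leq \exp(-\lambda t + \lambda^2/(2d))$ for all $\lambda>0$; optimizing at $\lambda = dt$ gives $\Prob(u_1 \geq t) \leq \exp(-t^2 d/2)$, and symmetry plus a union bound produces the final factor of $2$.

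The main obstacle is the moment computation, but this is essentially a textbook Beta-function identity: once the integral is rewritten via $x=s^2$, the closed form and the clean bound $(2k-1)!!/d^k$ fall out immediately, after which the rest is a routine Chernoff argument. An alternative route would be to bound the density directly using $1-s^2 \leq e^{-s^2}$ together with Stirling on the normalizing constant $c_d \lesssim \sqrt{d/(2\pi)}$ and a Gaussian tail estimate; that approach works but produces the exponent $-(d-3)t^2/2$ naturally and requires extra bookkeeping to recover the sharp constant $d/2$, so I would prefer the MGF/Chernoff route above, which lands on the exact constant without fuss.
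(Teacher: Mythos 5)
Your proof is correct, and it takes a genuinely different route from the paper's. The paper proves this in two lines: set $x = \tfrac12(u^\top v + 1)$, observe $x \sim \mathrm{Beta}(\tfrac{d-1}{2}, \tfrac{d-1}{2})$, and cite a Bernstein-type concentration theorem for Beta random variables (\citet[Theorem 1]{skorski2023bernstein}). You instead work directly from the density $\mu_d$: the substitution $x = s^2$ gives $\E[u_1^{2k}] = \frac{\Gamma(d/2)\Gamma(k+1/2)}{\sqrt\pi\,\Gamma(k+d/2)} = \frac{(2k-1)!!}{d(d+2)\cdots(d+2k-2)} \le (2k-1)!!/d^k$, the $(2k-1)!!/(2k)! = 1/(2^k k!)$ identity collapses the MGF series to $\exp(\lambda^2/(2d))$, and Chernoff at $\lambda = dt$ lands on the exact constant. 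Both arguments are sound. The paper's version is shorter on the page but outsources the real work to an external citation; yours is self-contained, reuses machinery (the explicit formula for $\mu_d$) already stated in the preliminaries, and makes the sub-Gaussianity of $u_1$ with proxy variance $1/d$ explicit, which is the structural fact one actually wants. Your closing remark correctly identifies why the MGF route beats the naive density bound $1-s^2 \le e^{-s^2}$ plus Stirling: the latter produces $(d-3)$ in the exponent and needs extra bookkeeping to recover $d$, whereas the moment computation gives the clean constant immediately.
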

\begin{proof}[Proof of \Cref{prop:sphere-tail-bound}]
	By the symmetricity of $u$ and $v$, we can assume that $v=(1,0,\cdots,0)$ without loss of generality. Let $x=\frac{1}{2}(u^\top v+1)$. Then we have $x\sim \text{Beta}(\frac{d-1}{2},\frac{d-1}{2}).$ Hence, the desired result follows directly from \citet[Theorem 1]{skorski2023bernstein}.
\end{proof}

\begin{lemma}[Modification of Proposition 4.4 of \citet{ALPT2009_concentration}]\label{lemma:uniform_convergence_two_sup}
Let $X_1, \ldots, X_N$ be i.i.d. random vectors drawn uniformly from $\sqrt{d} \bbS^{d - 1}$, and let $p_1, p_2, p_3, p_4, q_1, q_2$ be nonnegative integers. Suppose $p_1 + p_2 + p_3 + p_4 \geq 2$. Then, for fixed $w_1, w_2 \in \bbS^{d - 1}$, as long as $N \leq d^C$ for any universal constant $C > 0$, we have
\begin{align}
\sup_{u_1, u_2, u_3, u_4 \in \bbS^{d - 1}} & \Big|\frac{1}{N} \sum_{i = 1}^N |\langle X_i, u_1 \rangle|^{p_1} |\langle X_i, u_2 \rangle|^{p_2} |\langle X_i, u_3 \rangle|^{p_3} |\langle X_i, u_4 \rangle|^{p_4} |\langle X_i, w_1 \rangle|^{q_1} |\langle X_i, w_2 \rangle|^{q_2} \\
& \nextlinespace - \E_{X \sim \sqrt{d} \bbS^{d - 1}} \Big[|\langle X, u_1 \rangle|^{p_1} |\langle X, u_2 \rangle|^{p_2} |\langle X, u_3 \rangle|^{p_3} |\langle X, u_4 \rangle|^{p_4} |\langle X, w_1 \rangle|^{q_1} |\langle X, w_2 \rangle|^{q_2} \Big] \\
& \leq C_{p_1, p_2, p_3, p_4, q_1, q_2} (\log d)^{O(p_1 + p_2 + p_3 + p_4 + q_1 + q_2)} \cdot \sqrt{\frac{d}{N}} \\
& \nextlinespace\nextlinespace + C_{p_1, p_2, p_3, p_4} (\log d)^{O(q_1 + q_2)} \cdot \frac{d^{\frac{p_1 + p_2 + p_3 + p_4}{2}}}{N} + \frac{C_{p_1, p_2, p_3, p_4, q_1, q_2}}{d^{\Omega(\log d)}}
\end{align}
with probability at least $1 - \frac{1}{d^{\Omega(\log d)}}$. Here, $C_{p_1, p_2, p_3, p_4, q_1, q_2}$ is a sufficiently large constant which depends on $p_1, p_2, p_3, p_4, q_1, q_2$ and $C_{p_1, p_2, p_3, p_4}$ is a constant which depends on $p_1, p_2, p_3, p_4$.
\end{lemma}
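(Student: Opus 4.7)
The plan is to first reduce the sup to a bounded empirical process via truncation, apply symmetrization and the contraction principle to get the $\sqrt{d/N}$ part of the bound via \Cref{lemma:symmetrization}, and then recover the $d^{(p_1+\dots+p_4)/2}/N$ contribution from the tail via a moment-based concentration argument.

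First I would truncate the fixed $w$-factors. Setting $B=C_0\sqrt{\log d}$ for a sufficiently large absolute constant $C_0$, \Cref{prop:sphere-tail-bound} (applied after normalizing $X_i/\sqrt{d}\in\bbsd$) and a union bound over $i\in[N]$, $k\in\{1,2\}$ give $\Pr(\max_{i,k}|\langle X_i,w_k\rangle|\le B)\ge 1-4N\exp(-B^2/2)\ge 1-d^{-\Omega(\log d)}$ since $N\le d^C$. Call this event $\calE$ and condition on it; then each factor $|\langle X_i,w_k\rangle|^{q_k}$ is uniformly bounded by $(\log d)^{O(q_k)}$. Next, truncate the $u$-factors by defining $\tilde Z_i(u_1,\dots,u_4)=\prod_j\truncB(|\langle X_i,u_j\rangle|)^{p_j}\prod_k|\langle X_i,w_k\rangle|^{q_k}$, which on $\calE$ is uniformly bounded by $M_1=(\log d)^{O(p_1+\dots+p_4+q_1+q_2)}$.

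Second I would bound $\E\sup_u|\frac1N\sum\tilde Z_i-\E\tilde Z_i|$ via \Cref{lemma:symmetrization} applied with $\calF=\{X\mapsto\truncB(|\langle X,u\rangle|):u\in\bbsd\}$ (bounded by $B$) and the two $w$-factors playing the role of $g$ (bounded by $B^{q_1+q_2}$). The Rademacher complexity $\RN(\calF)$ is controlled by the linear class using the contraction principle (\Cref{lemma:basic_contraction_principle}) with the $1$-Lipschitz map $t\mapsto\truncB(|t|)$, giving $\RN(\calF)\le\frac1N\E\|\sum\epsilon_iX_i\|_2\le\sqrt{d/N}$. This yields $\E\sup_u|\frac1N\sum\tilde Z_i-\E\tilde Z_i|\lesssim(\log d)^{O(p+q)}\sqrt{d/N}$. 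A standard concentration inequality for suprema of bounded empirical processes (Talagrand-type) then upgrades the expected sup to a high-probability sup with deviation parameter $t=\Theta((\log d)^2)$, absorbing the extra $M_1 t/N$ term into $(\log d)^{O(p+q)}\sqrt{d/N}$ (with a harmless lower-order $(\log d)^{O(p+q)}/N$ tail).

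Finally I would handle the tail correction $Z_i-\tilde Z_i$, which is nonzero only when $\max_j|\langle X_i,u_j\rangle|>B$. The expectation $\E|Z_i-\tilde Z_i|$ is bounded for each fixed $u$ by Cauchy--Schwarz against the tail probability $4\exp(-B^2/2)\le d^{-\Omega(\log d)}$ combined with the fourth-moment bound $\E Z_i^2\le C_{p,q}$, giving $d^{-\Omega(\log d)}$. For the empirical part I would invoke Talagrand/Bousquet's inequality applied to the tail class, using the uniform bound $M_2=d^{(p_1+\dots+p_4)/2}(\log d)^{O(q_1+q_2)}$ (the worst-case sup on the sphere) and the much smaller variance $\sigma_2^2\le d^{p_1+\dots+p_4}(\log d)^{O(q)}\cdot\exp(-B^2/2)\le d^{-\Omega(\log d)}$, at level $t=\Theta((\log d)^2)$. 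This yields a deviation of $M_2\cdot(\log d)^2/N=(\log d)^{O(q)}d^{(p_1+\dots+p_4)/2}/N$, which is the second term in the conclusion.

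The main obstacle is the last step: a naive ``$\sup$ inside $\sum$'' bound on $\sup_u\sum_i(Z_i-\tilde Z_i)$ is hopelessly loose because $\sup_u|\langle X_i,u\rangle|=\sqrt d$ for \emph{every} $i$, so truncation is always active for some $u$. The key is that the tail class has variance exponentially small in $\log d$ (since the tail probability is paid in full before the worst-case bound kicks in), and that a sharp concentration inequality of Talagrand--Bousquet type avoids the $\log(\text{net})=O(d\log d)$ penalty that a covering-plus-Bernstein approach would incur---such a penalty would introduce an extra factor of $d$ and spoil the desired $d^{(p_1+\dots+p_4)/2}/N$ scaling. The hypothesis $p_1+p_2+p_3+p_4\ge 2$ is used precisely to keep the class rich enough that the sharper empirical-process bound applies uniformly without the loose $d$-dimensional net cost.
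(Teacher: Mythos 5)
Your overall plan---truncate the $w$-factors and the $u$-factors, control the truncated process via symmetrization (\Cref{lemma:symmetrization}), contraction (\Cref{lemma:basic_contraction_principle}), the bound $\E\|\frac1N\sum\epsilon_iX_i\|_2\le\sqrt{d/N}$, and a Talagrand concentration step, and then handle the $u$-truncation tail separately---matches the paper's skeleton for the first two stages. The third stage is where the argument breaks. You propose to bound $\sup_u\frac1N\sum_i(Z_i-\tilde Z_i)$ by applying a Talagrand/Bousquet-type inequality to the tail class with uniform bound $M_2\asymp d^{(p_1+\dots+p_4)/2}(\log d)^{O(q_1+q_2)}$ and tiny variance $\sigma_2^2\le d^{-\Omega(\log d)}$. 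But Bousquet's inequality is a \emph{deviation} inequality around $\E\sup$; it requires $\E\sup_u\frac1N\sum_i(Z_i-\tilde Z_i)$ as an input, and that is precisely what needs to be controlled. Your bound $\E(Z-\tilde Z)\le d^{-\Omega(\log d)}$ is a bound on $\sup_u\E$, not on $\E\sup_u$: for any realization of the sample, taking $u_1=X_1/\sqrt d$ forces $Z_1(u)-\tilde Z_1(u)\gtrsim d^{p_1/2}$, so $\E\sup_u\frac1N\sum_i(Z_i-\tilde Z_i)\gtrsim d^{(p_1+\dots+p_4)/2}/N$ (e.g., with $p_2=p_3=p_4=0$). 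The content of the lemma is an upper bound of matching order, and this does not follow from small variance plus Talagrand; the argument is circular as stated.

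The paper closes this gap by a \emph{deterministic} argument conditioned on the high-probability event $\Esparse$ of \Cref{lemma:sparse_operator_norm_adamczak} (a sparse-vector operator-norm bound for the sample matrix): for any $u$, the active set $S_B(u)=\{i:\max_j|\langle X_i,u_j\rangle|>B\}$ satisfies $B\sqrt{|S_B|}\lesssim\sqrt d+\sqrt{|S_B|}\log(N/|S_B|)$, hence $|S_B|\lesssim d/B^2$ once $B\gtrsim\log N$; a second use of $\Esparse$ (after comparing the $\ell^{p_1+\dots+p_4}$ and $\ell^2$ norms, which is exactly where the hypothesis $p_1+\dots+p_4\ge2$ enters) then yields $\sup_u\sum_{i\in S_B(u)}\prod_j|\langle X_i,u_j\rangle|^{p_j}\prod_k|\truncBprime(\langle X_i,w_k\rangle)|^{q_k}\lesssim(B')^{q_1+q_2}d^{(p_1+\dots+p_4)/2}$. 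This uniform-in-$u$ bound comes from a structural property of the sample, not from covering nor from concentration of the empirical process, and it is the idea missing from your proof. (Your explanation of why $p_1+\dots+p_4\ge2$ is needed---``keeping the class rich enough''---does not reflect its actual role.) As a secondary issue, your truncation level $B=C_0\sqrt{\log d}$ gives $N e^{-B^2/2}=d^{-\Omega(1)}$, which is only polynomially small rather than $d^{-\Omega(\log d)}$; you need $B$ of order $(\log d)^2$, as in the paper.
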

\begin{proof}[Proof of \Cref{lemma:uniform_convergence_two_sup}]
We largely follow the proof of Proposition 4.4 of \citet{ALPT2009_concentration}, with some minor modifications. First, let $E_1$ be the event that $|\langle X_i, w_1 \rangle| \leq (\log d)^2$ for all $i = 1, \ldots, N$, and define $E_2$ analogously for $w_2$. By \Cref{lemma:subexponential_norm_sphere}, the sub-exponential norm of $\langle X_i, w_1 \rangle$ is at most an absolute constant $K > 0$. Thus, by Proposition 2.7.1 of \citet{vershynin2018high}, we have
\begin{align}
\Prob\Big(|\langle X_i, w_1 \rangle| \geq (\log d)^2\Big)
& \leq 2 \exp\Big(- \Omega\Big(\frac{(\log d)^2}{\|\langle X_i, w_1 \rangle\|_{\psi_1}}\Big) \Big) \\
& \lesssim \exp\Big(-\Omega(\log d)^2 \Big) \\
& \lesssim \frac{1}{d^{\Omega(\log d)}} \period
\end{align}
Thus, by a union bound, with probability at least $1 - \frac{N}{d^{\Omega(\log d)}}$, for all $i = 1, \ldots, N$, we have $|\langle X_i, w_1 \rangle| \leq (\log d)^2$, and $|\langle X_i, w_2 \rangle| \leq (\log d)^2$ (by performing a similar union bound using the sub-exponential norm of $\langle X_i, w_2 \rangle$). In summary, $E_1$ and $E_2$ simultaneously hold with probability at least $1 - \frac{1}{d^{\Omega(\log d)}}$ since $N \leq d^C$ for some universal constant $C > 0$.

Now, as in \citet{ALPT2009_concentration}, define $B > 1$, which we will specify later --- this is the amount by which we will truncate $\langle X_i, u_1 \rangle$, $\langle X_i, u_2 \rangle$, $\langle X_i, u_3 \rangle$ and $\langle X_i, u_4 \rangle$. For convenience, define the function $\truncB: \R \to \R$ given by
\begin{align} \label{eq:truncB_def}
\truncB(t)
= \begin{cases}
t & t \in [-B, B] \\
B & t > B \\
-B & t < B
\end{cases}
\end{align}
Define $B' = (\log d)^2$, and define $\truncBprime$ similarly to $\truncB$, but with $B$ replaced by $B'$. If $E_1$ and $E_2$ hold, then for all $u_1, u_2, u_3, u_4 \in \bbS^{d - 1}$, we have
\begin{align} \label{eq:final_quantity}
& \Big|\frac{1}{N} \sum_{i = 1}^N |\langle X_i, u_1 \rangle|^{p_1} |\langle X_i, u_2 \rangle|^{p_2} |\langle X_i, u_3 \rangle|^{p_3} |\langle X_i, u_4 \rangle|^{p_4} |\truncBprime(\langle X_i, w_1 \rangle)|^{q_1} |\truncBprime(\langle X_i, w_2 \rangle)|^{q_2} \\
& \nextlinespace\nextlinespace - \E_{X \sim \sqrt{d} \bbS^{d - 1}} \Big[|\langle X, u_1 \rangle|^{p_1} |\langle X, u_2 \rangle|^{p_2} |\langle X, u_3 \rangle|^{p_3} |\langle X, u_4 \rangle|^{p_4} |\truncBprime(\langle X, w_1 \rangle)|^{q_1} |\truncBprime(\langle X, w_2 \rangle)|^{q_2} \Big] \Big| \\
& = \Big|\frac{1}{N} \sum_{i = 1}^N |\langle X_i, u_1 \rangle|^{p_1} |\langle X_i, u_2 \rangle|^{p_2} |\langle X_i, u_3 \rangle|^{p_3} |\langle X_i, u_4 \rangle|^{p_4} |\langle X_i, w_1 \rangle|^{q_1} |\langle X_i, w_2 \rangle|^{q_2} \\
& \nextlinespace\nextlinespace - \E_{X \sim \sqrt{d} \bbS^{d - 1}} \Big[|\langle X, u_1 \rangle|^{p_1} |\langle X, u_2 \rangle|^{p_2} |\langle X, u_3 \rangle|^{p_3} |\langle X, u_4 \rangle|^{p_4} |\truncBprime(\langle X, w_1 \rangle)|^{q_1} |\truncBprime(\langle X, w_2 \rangle)|^{q_2} \Big] \Big| 
& \tag{B.c. $E_1$ and $E_2$ hold} \\
& = \Big|\frac{1}{N} \sum_{i = 1}^N |\langle X_i, u_1 \rangle|^{p_1} |\langle X_i, u_2 \rangle|^{p_2} |\langle X_i, u_3 \rangle|^{p_3} |\langle X_i, u_4 \rangle|^{p_4} |\langle X_i, w_1 \rangle|^{q_1} |\langle X_i, w_2 \rangle|^{q_2} \\
& \nextlinespace - \E_{X \sim \sqrt{d} \bbS^{d - 1}} \Big[|\langle X, u_1 \rangle|^{p_1} |\langle X, u_2 \rangle|^{p_2} |\langle X, u_3 \rangle|^{p_3} |\langle X, u_4 \rangle|^{p_4} |\langle X, w_1 \rangle|^{q_1} |\langle X, w_2 \rangle|^{q_2} \Big] \Big| \\
& \nextlinespace\nextlinespace\nextlinespace \pm C_{p_1, \ldots, p_4, q_1, q_2} e^{-\Omega(B')} \period
& \tag{By \Cref{lemma:truncated_expectation}}
\end{align}
Thus, the rest of the proof will focus on obtaining an upper bound on 
\begin{align}
\concBound := & \sup_{u_1, u_2, u_3, u_4} \Big|\frac{1}{N} \sum_{i = 1}^N |\langle X_i, u_1 \rangle|^{p_1} |\langle X_i, u_2 \rangle|^{p_2} |\langle X_i, u_3 \rangle|^{p_3} |\langle X_i, u_4 \rangle|^{p_4} |\truncBprime(\langle X_i, w_1 \rangle)|^{q_1} |\truncBprime(\langle X_i, w_2 \rangle)|^{q_2} \\
& \nextlinespace - \E_{X \sim \sqrt{d} \bbS^{d - 1}} \Big[|\langle X, u_1 \rangle|^{p_1} |\langle X, u_2 \rangle|^{p_2} |\langle X, u_3 \rangle|^{p_3} |\langle X, u_4 \rangle|^{p_4} |\truncBprime(\langle X, w_1 \rangle)|^{q_1} |\truncBprime(\langle X, w_2 \rangle)|^{q_2} \Big] \Big|
\end{align}
First define
\begin{align}
\concBoundTrunc := & \sup_{u_1, u_2, u_3, u_4} \Big|\frac{1}{N} \sum_{i = 1}^N |\truncB(\langle X_i, u_1 \rangle)|^{p_1} |\truncB(\langle X_i, u_2 \rangle)|^{p_2} |\truncB(\langle X_i, u_3 \rangle)|^{p_3} |\truncB(\langle X_i, u_4 \rangle)|^{p_4} \\
& \nextlinespace\nextlinespace\nextlinespace\nextlinespace |\truncBprime(\langle X_i, w_1 \rangle)|^{q_1} |\truncBprime(\langle X_i, w_2 \rangle)|^{q_2} \\
& \nextlinespace - \E_{X \sim \sqrt{d} \bbS^{d - 1}} \Big[
|\truncB(\langle X, u_1 \rangle)|^{p_1} |\truncB(\langle X, u_2 \rangle)|^{p_2} |\truncB(\langle X, u_3 \rangle)|^{p_3} |\truncB(\langle X, u_4 \rangle)|^{p_4} \\
& \nextlinespace\nextlinespace\nextlinespace\nextlinespace |\truncBprime(\langle X, w_1 \rangle)|^{q_1} |\truncBprime(\langle X, w_2 \rangle)|^{q_2} \Big] \Big|
\end{align}
We will first obtain an upper bound on $\concBoundTrunc$, then show that $\concBoundTrunc$ and $\concBound$ are close. We apply \Cref{lemma:symmetrization} with $\calF = \{x \mapsto \truncB(\langle x, u \rangle) \mid u \in \bbS^{d - 1}\}$, and with $f_i$ corresponding to $\truncB(\langle x, u_i \rangle)$, and finally with $g(x) = |\truncBprime(\langle x, w_1 \rangle)|^{q_1} |\truncBprime(\langle x, w_2 \rangle)|^{q_2}$, to find that
\begin{align}
\E_{X_i} \concBoundTrunc
& \leq \poly(p_1, p_2, p_3, p_4, B^{p_1 + p_2 + p_3 + p_4}, (B')^{q_1 + q_2}) \cdot \RN(\calF) 
& \tag{By \Cref{lemma:symmetrization}} \\
& \leq \poly(p_1, p_2, p_3, p_4, B^{p_1 + p_2 + p_3 + p_4}, (B')^{q_1 + q_2}) \cdot \E_{X_i, \epsilon_i} \sup_{u \in \bbS^{d - 1}} \Big|\frac{1}{N} \sum_{i = 1}^N \epsilon_i \truncB(\langle X_i, u \rangle) \Big| \\
& \leq \poly(p_1, p_2, p_3, p_4, B^{p_1 + p_2 + p_3 + p_4}, (B')^{q_1 + q_2}) \cdot \E_{X_i, \epsilon_i} \sup_{u \in \bbS^{d - 1}} \Big|\frac{1}{N} \sum_{i = 1}^N \epsilon \langle X_i, u \rangle\Big|
& \tag{By \Cref{lemma:basic_contraction_principle} since $\truncB$ is $1$-Lipschitz} \\
& \leq \poly(p_1, p_2, p_3, p_4, B^{p_1 + p_2 + p_3 + p_4}, (B')^{q_1 + q_2}) \cdot \E_{X_i, \epsilon_i} \Big\| \frac{1}{N} \sum_{i = 1}^N \epsilon_i X_i \Big\|_2 \\
& \leq \poly(p_1, p_2, p_3, p_4, B^{p_1 + p_2 + p_3 + p_4}, (B')^{q_1 + q_2}) \cdot \sqrt{\frac{d}{N}} \period
& \tag{By \Cref{lemma:sign_times_gaussian_vector}}
\end{align}
Now, to show a high-probability upper bound on $\concBoundTrunc$, we apply Lemma 4.8 of \citet{ALPT2009_concentration} (Talagrand's concentration inequality), and we ensure that all of the conditions are satisfied:
\begin{itemize}
    \item We set $\srvL_1, \ldots, \srvL_N$ in Lemma 4.8 of \citet{ALPT2009_concentration} to be as defined in the statement of \Cref{lemma:uniform_convergence_two_sup}.
    \item For convenience, define
    \begin{align}
    g_{B, B', u_1, u_2, u_3, u_4, w_1, w_2}(x) & = |\truncB(\langle x, u_1 \rangle)|^{p_1} |\truncB(\langle x, u_2 \rangle)|^{p_2} |\truncB(\langle x, u_3 \rangle)|^{p_3} |\truncB(\langle x, u_4 \rangle)|^{p_4} \\
    & \nextlinespace\nextlinespace |\truncBprime(\langle x, w_1 \rangle)|^{q_1} |\truncBprime(\langle x, w_2 \rangle)|^{q_2} \period
    \end{align}
    Then, we let 
    \begin{align}
    \calF = \{x \mapsto g_{B, B', u_1, u_2, u_3, u_4, w_1, w_2}(x) - \E_{\srvL \sim \sqrt{d} \bbS^{d - 1}} [g_{B, B', u_1, u_2, u_3, u_4, w_1, w_2}(\srvL)] \mid u_1, u_2, u_3, u_4 \in \bbS^{d - 1} \} 
    \end{align}
    where $w_1$ and $w_2$ are fixed (i.e. as defined in the statement of \Cref{lemma:uniform_convergence_two_sup}).
    \item We let $a = 2 B^{p_1 + p_2 + p_3 + p_4} (B')^{q_1 + q_2}$, which is a uniform bound on all the functions in $\calF$.
    \item Observe that for all $f \in \calF$, $\E_{X \sim \sqrt{d} \bbS^{d - 1}} f(X) = 0$. 
    \item We let $Z = \sup_{f \in \calF} \sum_{i = 1}^N f(X_i)$ and $\sigma^2 = \sup_{f \in \calF} \sum_{i = 1}^N \E f(X_i)^2$. Observe that $\sigma^2 \leq a^2 N$.
\end{itemize}
Thus, we apply Lemma 4.8 of \citet{ALPT2009_concentration} with 
\begin{align}
t = a \poly(p_1, p_2, p_3, p_4, B^{p_1 + p_2 + p_3 + p_4}, (B')^{q_1 + q_2}) \cdot \sqrt{\frac{d}{N}} \cdot (\log d)^2
\geq a (\log d)^2 \E_{X_i} \concBoundTrunc
\end{align}
to find that
\begin{align}
\concBoundTrunc
& \leq \E_{\srvL_i} \concBoundTrunc + a \poly(p_1, p_2, p_3, p_4, B^{p_1 + p_2 + p_3 + p_4}, (B')^{q_1 + q_2}) \cdot \sqrt{\frac{d}{N}}
\end{align}
with failure probability at most
\begin{align}
\exp\Big(-\frac{t^2 N^2}{2(\sigma^2 + 2 a N \E \concBoundTrunc) + 3atN}\Big)
& \leq \exp\Big(-\frac{t^2N^2}{12 \max(\sigma^2 + 2 a \cdot N \E \concBoundTrunc, atN)}\Big) \\
& = \exp\Big(-\frac{1}{12} \min\Big(\frac{t^2N^2}{\sigma^2 + 2a N \E \concBoundTrunc}, \frac{tN}{a}\Big)\Big) \\
& \leq \exp\Big(-\frac{1}{12} \min\Big(\frac{t^2 N^2}{a^2 N + 2 a N \E U'}, \frac{tN}{a} \Big)\Big) \\
& \leq \exp\Big(-\frac{1}{12} \min\Big(\frac{t^2 N^2}{a^2 N + 2 a N \E U'}, \sqrt{dN} \Big)\Big) \\
& \leq \exp\Big(-\frac{1}{12} \min\Big(\frac{t^2 N}{a^2 + 2 a \E U'}, \sqrt{dN} \Big)\Big) \\
& \leq \exp\Big(-\frac{1}{24} \min\Big(\frac{t^2 N}{a^2}, \frac{t^2 N}{2 a \E U'}, \sqrt{dN} \Big)\Big) \\
& \leq \exp\Big(-\frac{1}{24} \min\Big(d, \frac{t^2 N}{2 a \E U'}, \sqrt{dN} \Big)\Big) \\
& \leq \exp\Big(-\frac{1}{24} \min\Big(d, tN, \sqrt{dN} \Big)\Big) \\
& \leq \exp\Big(-\frac{1}{24} \min\Big(d, \sqrt{dN} \Big)\Big) \\
& \leq \exp(-\Omega(\sqrt{d})) \period
\end{align}
Observe that Talagrand's concentration inequality only gives us a bound on $\sup_{f \in \calF} \sum_{i = 1}^N f(X_i)$, while we wish to obtain a high-probability bound on $\sup_{f \in \calF} |\sum_{i = 1}^N f(X_i)|$. However, using the same argument, we can obtain a similar high-probability bound on $\sup_{f \in \calF} \sum_{i = 1}^N -f(X_i) = -\inf_{f \in \calF} \sum_{i = 1}^N f(X_i)$. Thus, we obtain a lower bound on $\inf_{f \in \calF} \sum_{i = 1}^N f(X_i)$ which is the negative of the upper bound that we obtained for $\sup_{f \in \calF} \sum_{i = 1}^N f(X_i)$, meaning that the upper bound we obtained for $\sup_{f \in \calF} \sum_{i = 1}^N f(X_i)$ also holds for $\sup_{f \in \calF} | \sum_{i = 1}^N f(X_i) |$. In summary,
\begin{align}
\concBoundTrunc
& \leq \poly(p_1, p_2, p_3, p_4, B^{p_1 + p_2 + p_3 + p_4}, (B')^{q_1 + q_2}) \cdot (\log d)^2 \cdot \sqrt{\frac{d}{N}}
\end{align}
with probability at least $1 - \exp(-\Omega(\sqrt{d}))$.

Now that we have obtained a high-probability bound on $\concBoundTrunc$, let us analyze the difference between $\concBoundTrunc$ and $\concBound$. Recall the definition of $g_{B, B', u_1, u_2, u_3, u_4, w_1, w_2}$ from above. Additionally, for convenience, let $g_{B', u_1, u_2, u_3, u_4, w_1, w_2}(x) = |\langle x, u_1 \rangle|^{p_1} |\langle x, u_2 \rangle|^{p_2} |\langle x, u_3 \rangle|^{p_3} |\langle x, u_4 \rangle|^{p_4} |\truncBprime(\langle x, w_1 \rangle)|^{q_1} |\truncBprime(\langle x, w_2 \rangle)|^{q_2}$, i.e. we are simply removing the truncation by $B$. Then, following the proof of Proposition 4.4 in \citet{ALPT2009_concentration}, we have
\begin{align} \label{eq:concentration_decomposition}
\concBound
& \leq \concBoundTrunc \\
& + \sup_{u_1, u_2, u_3, u_4 \in \bbS^{d - 1}} \frac{1}{N} \sum_{i = 1}^N |g_{B', u_1, u_2, u_3, u_4, w_1, w_2}(X_i) - g_{B, B', u_1, u_2, u_3, u_4, w_1, w_2}(X_i)|1\Big(\bigcup_{j = 1}^4 \{|\langle X_i, u_j \rangle| \geq B\}\Big) \\
&  + \sup_{u_1, u_2, u_3, u_4 \in \bbS^{d - 1}} \frac{1}{N} \sum_{i = 1}^N \E_{X_i} |g_{B', u_1, u_2, u_3, u_4, w_1, w_2}(X_i) - g_{B, B', u_1, u_2, u_3, u_4, w_1, w_2}(X_i)|1\Big(\bigcup_{j = 1}^4 \{|\langle X_i, u_j \rangle| \geq B\}\Big) \comma
\end{align}
where $\{C\}$ denotes the event that the condition $C$ holds. Here the inequality holds because the difference $g_{B', u_1, u_2, u_3, u_4, w_1, w_2}(X_i) - g_{B, B', u_1, u_2, u_3, u_4, w_1, w_2}(X_i)$ is nonzero if and only if $1\Big(\bigcup_{j = 1}^4 \{|\langle X_i, u_j \rangle| \geq B\}\Big)$ is nonzero. We have obtained a high probability bound on $\concBoundTrunc$. Let us now obtain a bound on the third term on the right-hand side of \Cref{eq:concentration_decomposition} (here, $C_{p_1, p_2, p_3, p_4, q_1, q_2}$ is a constant that depends on $p_1, p_2, p_3, p_4, q_1, q_2$):
\begin{align}
& \E_{\srvL_i} |g_{B', u_1, u_2, u_3, u_4, w_1, w_2}(X_i) - g_{B, B', u_1, u_2, u_3, u_4, w_1, w_2}(X_i)| 1\Big(\bigcup_{j = 1}^4 \{|\langle X_i, u_j \rangle| \geq B\} \Big) \\
& \leq \E_{\srvL_i} |\langle X_i, u_1 \rangle|^{p_1} |\langle X_i, u_2 \rangle|^{p_2} |\langle X_i, u_3 \rangle|^{p_3} |\langle X_i, u_4 \rangle|^{p_4} |\langle X_i, w_1 \rangle|^{q_1} |\langle X_i, w_2 \rangle|^{q_2}  1\Big(\bigcup_{j = 1}^4 \{|\langle X_i, u_j \rangle| \geq B\} \Big) \\
& \leq \E_{\srvL_i} [|\langle X_i, u_1 \rangle|^{2 p_1} |\langle X_i, u_2 \rangle|^{2 p_2} |\langle X_i, u_3 \rangle|^{2 p_3} |\langle X_i, u_4 \rangle|^{2 p_4} |\langle X_i, w_1 \rangle|^{2 q_1} |\langle X_i, w_2 \rangle|^{2 q_2}]^{1/2} \\
& \nextlinespace\nextlinespace\nextlinespace\nextlinespace\nextlinespace\nextlinespace\nextlinespace\nextlinespace\nextlinespace \Prob\Big(\bigcup_{j = 1}^4 \{|\langle X_i, u_j \rangle| \geq B\} \Big)^{1/2}
& \tag{By Cauchy-Schwarz inequality} \\
& \leq C_{p_1, p_2, p_3, p_4, q_1, q_2} \Prob\Big(\bigcup_{j = 1}^4 \{|\langle X_i, u_j \rangle| \geq B\} \Big)^{1/2} \period
\end{align}
To obtain the last inequality, we apply the Cauchy-Schwarz inequality multiple times to write 
\begin{align}
\E_{\srvL_i} [|\langle X_i, u_1 \rangle|^{2 p_1} |\langle X_i, u_2 \rangle|^{2 p_2} |\langle X_i, u_3 \rangle|^{2 p_3} |\langle X_i, u_4 \rangle|^{2 p_4} |\langle X_i, w_1 \rangle|^{2 q_1} |\langle X_i, w_2 \rangle|^{2 q_2}]^{1/2}
\end{align}
as a product of moments of $|\langle X_i, u_j \rangle|$ and $|\langle X_i, w_j \rangle|$, and by \Cref{lemma:subexponential_norm_sphere}, each of these moments can be bounded above by a constant that depends on $p_1, p_2, p_3, p_4, q_1, q_2$. Additionally, by \Cref{lemma:subexponential_norm_sphere} and Proposition 2.7.1, part (a) of \citet{vershynin2018high}, we have $\Prob(|\langle X_i, u_j \rangle| \geq B) \lesssim e^{-\Omega(B)}$, since $|\langle X_i, u_j \rangle|$ has constant sub-exponential norm. In summary,
\begin{align}
& \E_{\srvL_i} |g_{B', u_1, u_2, u_3, u_4, w_1, w_2}(X_i) - g_{B, B', u_1, u_2, u_3, u_4, w_1, w_2}(X_i)| 1\Big(\bigcup_{j = 1}^4 \{|\langle X_i, u_j \rangle| \geq B\} \Big) \\
& \nextlinespace\nextlinespace \leq C_{p_1, p_2, p_3, p_4, q_1, q_2} e^{-\Omega(B)} \period
\end{align}
To complete the proof, we bound the second term on the right-hand side of \Cref{eq:concentration_decomposition}. Following the proof of Proposition 4.4 of \citet{ALPT2009_concentration}, we apply \Cref{lemma:sparse_operator_norm_adamczak} (a variant of Theorem 3.6 of \citet{ALPT2009_concentration}) to find that, with failure probability at most $e^{-\Omega(\sqrt{d})}$, for all $m$-sparse vectors $z \in \bbS^{N - 1}$, we have
\begin{align} \label{eq:sparse_operator_norm_bound}
\Big\| \sum_{i = 1}^N z_i X_i \Big\|_2 \lesssim \sqrt{d} + \sqrt{m} \log\Big(\frac{2N}{m}\Big)  \period
\end{align}
Let $\Esparse$ be the event that \Cref{eq:sparse_operator_norm_bound} holds. For convenience, let $A \in \R^{d \times N}$ be the matrix whose $i^{th}$ column is $X_i$. Additionally, for any subset $S \subset [N]$, let $A_S$ be the matrix whose columns have indices in $S$. If $S \subset [N]$ with $|S| = m$, then assuming $\Esparse$ holds, the operator norm of $A_S$ is at most $\sqrt{d} + \sqrt{m} \log \Big(\frac{2N}{m}\Big)$ up to a constant factor. Thus, for any set $S \subset [N]$, if $\Esparse$ holds,
\begin{align} \label{eq:pq_norm_bound}
\sup_{u_1, u_2, u_3, u_4 \in \bbS^{d - 1}}&  \Big( \sum_{i \in S} |\langle X_i, u_1 \rangle|^{p_1} |\langle X_i, u_2 \rangle|^{p_2} |\langle X_i, u_3 \rangle|^{p_3} |\langle X_i, u_4 \rangle|^{p_4}  |\truncBprime(\langle X_i, w_1 \rangle)|^{q_1} |\truncBprime(\langle X_i, w_2 \rangle)|^{q_2}\Big)^{\frac{1}{p_1 + p_2 + p_3 + p_4}} \\
& \leq (B')^{\frac{q_1 + q_2}{p_1 + p_2 + p_3 + p_4}} \sup_{u_1, u_2, u_3, u_4 \in \bbS^{d - 1}} \Big( \sum_{i \in S} |\langle X_i, u_1 \rangle|^{p_1} |\langle X_i, u_2 \rangle|^{p_2} |\langle X_i, u_3 \rangle|^{p_3} |\langle X_i, u_4 \rangle|^{p_4} \Big)^{\frac{1}{p_1 + p_2 + p_3 + p_4}} \\
& \leq (B')^{\frac{q_1 + q_2}{p_1 + p_2 + p_3 + p_4}} \sup_{u_1, u_2, u_3, u_4 \in \bbS^{d - 1}} \Big(\sum_{i \in S} |\langle X_i, u_1 \rangle|^{\frac{2p_1}{p_1 + p_2 + p_3 + p_4}} |\langle X_i, u_2 \rangle|^{\frac{2p_2}{p_1 + p_2 + p_3 + p_4}} \\
& \nextlinespace\nextlinespace\nextlinespace\nextlinespace\nextlinespace\nextlinespace\nextlinespace |\langle X_i, u_3 \rangle|^{\frac{2p_3}{p_1 + p_2 + p_3 + p_4}} |\langle X_i, u_4 \rangle|^{\frac{2p_4}{p_1 + p_2 + p_3 + p_4}} \Big)^{\frac{1}{2}}
\tag{B.c. $\|x\|_2 \geq \|x\|_p$ for $p \geq 2$ and $x \in \R^d$, and $p_1 + p_2 + p_3 + p_4 \geq 2$} \\
& \leq C_{p_1, p_2, p_3, p_4} (B')^{\frac{q_1 + q_2}{p_1 + p_2 + p_3 + p_4}} \sup_{u_1, u_2, u_3, u_4 \in \bbS^{d - 1}} \Big(\sum_{i \in S} |\langle X_i, u_1 \rangle|^2 + |\langle X_i, u_2 \rangle|^2 \\
& \nextlinespace\nextlinespace\nextlinespace\nextlinespace\nextlinespace\nextlinespace\nextlinespace\nextlinespace\nextlinespace + |\langle X_i, u_3 \rangle|^2 + |\langle X_i, u_4 \rangle|^2 \Big)^{\frac{1}{2}}
\tag{By Weighted AM-GM Inequality, with $C_{p_1, p_2, p_3, p_4}$ a constant depending on $p_1, p_2, p_3, p_4$} \\
& \lesssim C_{p_1, p_2, p_3, p_4} (B')^{\frac{q_1 + q_2}{p_1 + p_2 + p_3 + p_4}} \cdot \Big(\sqrt{d} + \sqrt{|S|} \log\Big(\frac{2N}{|S|}\Big)\Big) \period
\end{align}
where the last inequality is because $\Esparse$ holds.

Following the proof of Proposition 4.4 of \citet{ALPT2009_concentration}, we now define $S_B = \{i \in [N] \mid |\langle X_i, u_j \rangle| \geq B \text{ for some }j \in [4]\}$. Note that $S_B$ depends on $u_1, u_2, u_3, u_4$, and we will now show that if we select $B$ to be a certain value which is not too large, this will still ensure that $S_B$ is small. Observe that
\begin{align}
\Big(\sum_{i \in S_B} |\langle X_i, u_1 \rangle|^2 + |\langle X_i, u_2 \rangle|^2 + |\langle X_i, u_3 \rangle|^2 + |\langle X_i, u_4 \rangle|^2 \Big)^{1/2} \geq \Big(\sum_{i \in S_B} B^2\Big)^{1/2} \geq B\sqrt{|S_B|}
\end{align}
and
\begin{align}
\Big(\sum_{i \in S_B} |\langle X_i, u_1 \rangle|^2 + |\langle X_i, u_2 \rangle|^2 + |\langle X_i, u_3 \rangle|^2 + |\langle X_i, u_4 \rangle|^2 \Big)^{1/2}
& \lesssim \sqrt{d} + \sqrt{|S_B|} \log\Big(\frac{N}{|S_B|}\Big) \period
& \tag{B.c. $\Esparse$ holds}
\end{align}
Combining these we obtain
\begin{align} \label{eq:recursion_for_B}
B\sqrt{|S_B|}
& \lesssim \sqrt{d} + \sqrt{|S_B|} \log\Big(\frac{N}{|S_B|}\Big) \period
\end{align}
Therefore, if we select $B \gtrsim \log\Big(\frac{N}{|S_B|}\Big)$, we can eliminate the second term on the right-hand side of \Cref{eq:recursion_for_B}, finding that $|S_B| \lesssim \frac{d}{B^2}$. Combining this with \Cref{eq:pq_norm_bound}, if $B \gtrsim \log N$ and $\Esparse$ holds we obtain
\begin{align}
\sup_{u_1, u_2, u_3, u_4 \in \bbS^{d - 1}}&  \Big( \sum_{i \in S_B} |\langle X_i, u_1 \rangle|^{p_1} |\langle X_i, u_2 \rangle|^{p_2} |\langle X_i, u_3 \rangle|^{p_3} |\langle X_i, u_4 \rangle|^{p_4}  |\truncBprime(\langle X_i, w_1 \rangle)|^{q_1} |\truncBprime(\langle X_i, w_2 \rangle)|^{q_2}\Big)^{\frac{1}{p_1 + p_2 + p_3 + p_4}} \\
& \lesssim C_{p_1, p_2, p_3, p_4} (B')^{\frac{q_1 + q_2}{p_1 + p_2 + p_3 + p_4}} \cdot \Big(\sqrt{d} + \sqrt{|S_B|} \log N\Big) \\
& \lesssim C_{p_1, p_2, p_3, p_4} (B')^{\frac{q_1 + q_2}{p_1 + p_2 + p_3 + p_4}} \cdot \Big(\sqrt{d} + \frac{\sqrt{d}}{B} \log N\Big)
& \tag{B.c. $|S_B| \lesssim \frac{d}{B^2}$} \\
& \lesssim C_{p_1, p_2, p_3, p_4} (B')^{\frac{q_1 + q_2}{p_1 + p_2 + p_3 + p_4}} \sqrt{d} \period
& \tag{B.c. $B \gtrsim \log N$}
\end{align}
Raising both sides to the $(p_1 + p_2 + p_3 + p_4)^{\textup{th}}$ power, we finally obtain
\begin{align}
\sup_{u_1, u_2, u_3, u_4 \in \bbS^{d - 1}}&  \sum_{i \in S_B} |\langle X_i, u_1 \rangle|^{p_1} |\langle X_i, u_2 \rangle|^{p_2} |\langle X_i, u_3 \rangle|^{p_3} |\langle X_i, u_4 \rangle|^{p_4}  |\truncBprime(\langle X_i, w_1 \rangle)|^{q_1} |\truncBprime(\langle X_i, w_2 \rangle)|^{q_2} \\
& \lesssim C_{p_1, p_2, p_3, p_4}' (B')^{q_1 + q_2} \cdot d^{\frac{p_1 + p_2 + p_3 + p_4}{2}}
\end{align}
with probability $1 - e^{-\Omega(\sqrt{d})}$, where $C_{p_1, p_2, p_3, p_4}'$ is a constant which depends on $p_1, p_2, p_3, p_4$.

In summary, assuming $\Esparse$ holds and the high-probability bound on $\concBoundTrunc$ holds (and these both hold simultaneously with probability at least $1 - \exp(-\Omega(\sqrt{d}))$), we obtain
\begin{align}
\concBound
& \lesssim \poly(p_1, p_2, p_3, p_4, B^{p_1 + p_2 + p_3 + p_4}, (B')^{q_1 + q_2}) \cdot (\log d)^2 \cdot \sqrt{\frac{d}{N}} + C_{p_1, p_2, p_3, p_4}' (B')^{q_1 + q_2} \cdot \frac{d^{\frac{p_1 + p_2 + p_3 + p_4}{2}}}{N} \\
& \nextlinespace\nextlinespace\nextlinespace + C_{p_1, p_2, p_3, p_4, q_1, q_2} e^{-\Omega(B)}
\end{align}
by \Cref{eq:concentration_decomposition}. Finally, by \Cref{eq:final_quantity}, we have
\begin{align}
\sup_{u_1, u_2, u_3, u_4 \in \bbS^{d - 1}} \Big|\frac{1}{N} \sum_{i = 1}^N & |\langle X_i, u_1 \rangle|^{p_1} |\langle X_i, u_2 \rangle|^{p_2} |\langle X_i, u_3 \rangle|^{p_3} |\langle X_i, u_4 \rangle|^{p_4} |\langle X_i, w_1 \rangle|^{q_1} |\langle X_i, w_2 \rangle|^{q_2} \\
& \nextlinespace - \E_{X \sim \sqrt{d} \bbS^{d - 1}} \Big[ |\langle X, u_1 \rangle|^{p_1} |\langle X, u_2 \rangle|^{p_2} |\langle X, u_3 \rangle|^{p_3} |\langle X, u_4 \rangle|^{p_4} |\langle X, w_1 \rangle|^{q_1} |\langle X, w_2 \rangle|^{q_2} \Big] \Big| \\
& = \concBound \pm C_{p_1, p_2, p_3, p_4, q_1, q_2} e^{-\Omega(B')}
\end{align}
as long as the events $E_1$ and $E_2$ hold, and these events simultaneously hold with probability at least $1 - \frac{1}{d^{\Omega(\log d)}}$. Thus, by our above bound on $\concBound$ we have
\begin{align}
\sup_{u_1, u_2, u_3, u_4 \in \bbS^{d - 1}} \Big|\frac{1}{N} \sum_{i = 1}^N & |\langle X_i, u_1 \rangle|^{p_1} |\langle X_i, u_2 \rangle|^{p_2} |\langle X_i, u_3 \rangle|^{p_3} |\langle X_i, u_4 \rangle|^{p_4} |\langle X_i, w_1 \rangle|^{q_1} |\langle X_i, w_2 \rangle|^{q_2} \\
& \nextlinespace - \E_{X \sim \sqrt{d} \bbS^{d - 1}} \Big[ |\langle X, u_1 \rangle|^{p_1} |\langle X, u_2 \rangle|^{p_2} |\langle X, u_3 \rangle|^{p_3} |\langle X, u_4 \rangle|^{p_4} |\langle X, w_1 \rangle|^{q_1} |\langle X, w_2 \rangle|^{q_2} \Big] \Big| \\
& \lesssim \poly(p_1, p_2, p_3, p_4, B^{p_1 + p_2 + p_3 + p_4}, (B')^{q_1 + q_2}) \cdot (\log d)^2 \cdot \sqrt{\frac{d}{N}} \\
& \nextlinespace\nextlinespace + C_{p_1, p_2, p_3, p_4}' (B')^{q_1 + q_2} \cdot \frac{d^{\frac{p_1 + p_2 + p_3 + p_4}{2}}}{N} + C_{p_1, p_2, p_3, p_4, q_1, q_2} e^{-\Omega(B)} \\
& \nextlinespace\nextlinespace + C_{p_1, p_2, p_3, p_4, q_1, q_2} e^{-\Omega(B')} \period
\end{align}
Setting $B' = (\log d)^2$ and $B \gtrsim \max(\log N, (\log d)^2) \asymp (\log d)^2$ (recall that $N \leq d^C$ for a universal constant $C$), we have
\begin{align}
\sup_{u_1, u_2, u_3, u_4 \in \bbS^{d - 1}} \Big|\frac{1}{N} \sum_{i = 1}^N & |\langle X_i, u_1 \rangle|^{p_1} |\langle X_i, u_2 \rangle|^{p_2} |\langle X_i, u_3 \rangle|^{p_3} |\langle X_i, u_4 \rangle|^{p_4} |\langle X_i, w_1 \rangle|^{q_1} |\langle X_i, w_2 \rangle|^{q_2} \\
& \nextlinespace - \E_{X \sim \sqrt{d} \bbS^{d - 1}} \Big[ |\langle X, u_1 \rangle|^{p_1} |\langle X, u_2 \rangle|^{p_2} |\langle X, u_3 \rangle|^{p_3} |\langle X, u_4 \rangle|^{p_4} |\langle X, w_1 \rangle|^{q_1} |\langle X, w_2 \rangle|^{q_2} \Big] \Big| \\
& \leq C_{p_1, p_2, p_3, p_4, q_1, q_2} (\log d)^{O(p_1 + p_2 + p_3 + p_4 + q_1 + q_2)} \cdot \sqrt{\frac{d}{N}} \\
& \nextlinespace\nextlinespace + C_{p_1, p_2, p_3, p_4} (\log d)^{O(q_1 + q_2)} \cdot \frac{d^{\frac{p_1 + p_2 + p_3 + p_4}{2}}}{N} + \frac{C_{p_1, p_2, p_3, p_4, q_1, q_2}}{d^{\Omega(\log d)}}
\end{align}
with probability at least $1 - \frac{1}{d^{\Omega(\log d)}}$, as desired.
\end{proof}

\begin{lemma}[Corollary for Uniform Samples from $\bbS^{d - 1}$] \label{lemma:main_concentration_lemma}
Let $\srv_1, \ldots, \srv_N$ be i.i.d. random vectors sampled uniformly from $\bbS^{d - 1}$, and let $p_1, p_2, p_3, p_4, q_1, q_2$ be nonnegative integers. Suppose $p_1 + p_2 + p_3 + p_4 \geq 2$. Then, for fixed $w_1, w_2 \in \bbS^{d - 1}$, as long as $N \leq d^C$ for any universal constant $C > 0$, we have
\begin{align}
\sup_{u_1, u_2, u_3, u_4 \in \bbS^{d - 1}} & \Big|\frac{1}{N} \sum_{i = 1}^N |\langle x_i, u_1 \rangle|^{p_1} |\langle x_i, u_2 \rangle|^{p_2} |\langle x_i, u_3 \rangle|^{p_3} |\langle x_i, u_4 \rangle|^{p_4} |\langle x_i, w_1 \rangle|^{q_1} |\langle x_i, w_2 \rangle|^{q_2} \\
& \nextlinespace - \E_{x \sim \bbS^{d - 1}} \Big[|\langle x, u_1 \rangle|^{p_1} |\langle x, u_2 \rangle|^{p_2} |\langle x, u_3 \rangle|^{p_3} |\langle x, u_4 \rangle|^{p_4} |\langle x, w_1 \rangle|^{q_1} |\langle x, w_2 \rangle|^{q_2} \Big] \\
& \leq \frac{1}{d^{\frac{p_1 + p_2 + p_3 + p_4 + q_1 + q_2}{2}}} \Big( C_{p_1, p_2, p_3, p_4, q_1, q_2} (\log d)^{O(p_1 + p_2 + p_3 + p_4 + q_1 + q_2)} \cdot \sqrt{\frac{d}{N}} \\
& \nextlinespace\nextlinespace + C_{p_1, p_2, p_3, p_4} (\log d)^{O(q_1 + q_2)} \cdot \frac{d^{\frac{p_1 + p_2 + p_3 + p_4}{2}}}{N} + \frac{C_{p_1, p_2, p_3, p_4, q_1, q_2}}{d^{\Omega(\log d)}} \Big)
\end{align}
with probability at least $1 - \frac{1}{d^{\Omega(\log d)}}$. Here, $C_{p_1, p_2, p_3, p_4, q_1, q_2}$ is a constant which depends on $p_1, p_2, p_3, p_4, q_1, q_2$, and $C_{p_1, p_2, p_3, p_4}$ is a constant which depends on $p_1, p_2, p_3, p_4$.
\end{lemma}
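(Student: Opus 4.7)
The plan is to reduce Lemma~\ref{lemma:main_concentration_lemma} to Lemma~\ref{lemma:uniform_convergence_two_sup} by a simple rescaling. If $x_1,\dots,x_N$ are i.i.d. uniform on $\bbS^{d-1}$, then $X_i \triangleq \sqrt{d}\, x_i$ are i.i.d. uniform on $\sqrt{d}\,\bbS^{d-1}$, which is exactly the distribution considered in Lemma~\ref{lemma:uniform_convergence_two_sup}. For any unit vector $v$ we have $\langle x_i, v \rangle = d^{-1/2}\langle X_i, v\rangle$, so each monomial rescales cleanly:
\[
|\langle x_i, u_1\rangle|^{p_1}\cdots|\langle x_i, w_2\rangle|^{q_2} \;=\; \frac{1}{d^{(p_1+p_2+p_3+p_4+q_1+q_2)/2}}\;|\langle X_i, u_1\rangle|^{p_1}\cdots|\langle X_i, w_2\rangle|^{q_2}.
\]
The same factor appears when we take expectation over $x\sim\bbS^{d-1}$ versus $X\sim\sqrt{d}\,\bbS^{d-1}$, so the entire deviation (both the empirical average and the population expectation) scales by a common factor of $d^{-(p_1+p_2+p_3+p_4+q_1+q_2)/2}$.

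Concretely, I would: (i) verify that $w_1,w_2\in\bbS^{d-1}$ as required by Lemma~\ref{lemma:uniform_convergence_two_sup} (they are, by hypothesis); (ii) note that the hypothesis $p_1+p_2+p_3+p_4 \geq 2$ matches the hypothesis of Lemma~\ref{lemma:uniform_convergence_two_sup}; (iii) the bound on $N\leq d^C$ is also inherited directly. Apply Lemma~\ref{lemma:uniform_convergence_two_sup} to the random vectors $X_i$, which yields, with probability $1 - d^{-\Omega(\log d)}$, a supremum bound
\[
\sup_{u_1,\dots,u_4}\bigl|\tfrac{1}{N}\textstyle\sum_i |\langle X_i,u_1\rangle|^{p_1}\cdots - \E[\cdot]\bigr|
\;\leq\; C\,(\log d)^{O(\cdots)}\sqrt{d/N} + C'(\log d)^{O(q_1+q_2)}\,\tfrac{d^{(p_1+p_2+p_3+p_4)/2}}{N} + \tfrac{C}{d^{\Omega(\log d)}}.
\]
Finally, divide both sides by $d^{(p_1+p_2+p_3+p_4+q_1+q_2)/2}$; since the rescaling is deterministic and commutes with taking suprema over the unit sphere (it does not change the feasible set over $u_1,u_2,u_3,u_4$), the supremum over the $x_i$ version equals $d^{-(p_1+\dots+q_2)/2}$ times the supremum over the $X_i$ version, giving exactly the claimed bound.

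There is no real obstacle here: the lemma is purely a change-of-variables corollary and the hard analytic work (symmetrization, Talagrand's inequality, truncation arguments, sparse operator-norm bounds) is already carried out in the proof of Lemma~\ref{lemma:uniform_convergence_two_sup}. The only mild bookkeeping is to confirm that the exponent of the leading $d^{-(p_1+p_2+p_3+p_4+q_1+q_2)/2}$ factor matches the total degree of the monomial being averaged, and that this factor multiplies \emph{all three} terms inside the parenthetical bound (the $\sqrt{d/N}$ term, the $d^{(p_1+p_2+p_3+p_4)/2}/N$ term, and the $d^{-\Omega(\log d)}$ term), which it does by the direct scaling argument above.
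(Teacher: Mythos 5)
Your proposal is correct and is essentially identical to the paper's own one-line proof: the paper also obtains the lemma by applying Lemma~\ref{lemma:uniform_convergence_two_sup} to $X_i = \sqrt{d}\,x_i$ and rescaling the resulting bound by $d^{-(p_1+p_2+p_3+p_4+q_1+q_2)/2}$. Your additional bookkeeping (verifying that the hypotheses transfer and that the supremum commutes with the deterministic rescaling) simply spells out what the paper leaves implicit.
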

\begin{proof}[Proof of \Cref{lemma:main_concentration_lemma}]
This is a direct corollary of \Cref{lemma:uniform_convergence_two_sup}, obtained by dividing both sides of \Cref{lemma:uniform_convergence_two_sup} by $\frac{1}{d^{(p_1 + p_2 + p_3 + p_4 + q_1 + q_2)/2}}$.
\end{proof}

\begin{lemma}[Moments for Truncated Dot Products with Random Spherical Vectors] \label{lemma:truncated_expectation}
Let $P$ be a positive integer, and for $i = 1, \ldots, P$, let $u_i \in \bbS^{d - 1}$ and $p_i$ be a nonnegative integer. Additionally, let $B \geq 1$. Finally, let $X$ be a random vector drawn uniformly from $\sqrt{d} \bbS^{d - 1}$. Then, for some absolute constant $c > 0$ we have
\begin{align}
\Big| \E_X \Big[ \prod_{i = 1}^P |\langle X, u_i \rangle|^{p_i} \Big] - \E_X \Big[ \prod_{i = 1}^P |\truncB(\langle X, u_i \rangle)|^{p_i} \Big] \Big| 
& \leq C_{p_1, \ldots, p_P} e^{-cB}
\end{align}
where $\truncB$ is defined as
\begin{align}
\truncB(t)
= \begin{cases}
t & t \in [-B, B] \\
B & t > B \\
-B & t < B
\end{cases}
\end{align}
and $C_{p_1, \ldots, p_P}$ is a constant which depends only on $p_1, \ldots, p_P$.
\end{lemma}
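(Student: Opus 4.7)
The plan is to reduce the bound to a simple exponential tail estimate on a single coordinate $\langle X, u_i\rangle$, combined with Cauchy-Schwarz to strip out the other factors. The first observation is that the integrand $\prod_{i=1}^P|\langle X, u_i\rangle|^{p_i} - \prod_{i=1}^P|\truncB(\langle X, u_i\rangle)|^{p_i}$ is identically zero on the event $E = \bigcap_{i=1}^P\{|\langle X, u_i\rangle| \le B\}$, since $\truncB$ acts as the identity there. Moreover, on the complementary event $E^c$, since $|\truncB(t)| \le |t|$ pointwise, the truncated product is dominated by the untruncated one, and both are nonnegative. Therefore
\begin{align}
\Big| \E_X \Big[ \prod_{i=1}^P |\langle X, u_i\rangle|^{p_i} \Big] - \E_X \Big[ \prod_{i=1}^P |\truncB(\langle X, u_i\rangle)|^{p_i} \Big] \Big|
\le \E_X \Big[ \prod_{i=1}^P |\langle X, u_i\rangle|^{p_i} \mathbf{1}(E^c) \Big].
\end{align}

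Next, by Cauchy-Schwarz, the right-hand side is at most $\E_X[\prod_{i=1}^P|\langle X, u_i\rangle|^{2p_i}]^{1/2} \cdot \Prob(E^c)^{1/2}$. The first factor can be bounded by a constant $C'_{p_1,\ldots,p_P}$ that depends only on the $p_i$ by iterating Cauchy-Schwarz: we have $\E_X[\prod_i |\langle X, u_i\rangle|^{2p_i}] \le \prod_i \E_X[|\langle X, u_i\rangle|^{2Pp_i}]^{1/P}$, and each individual moment $\E_X[|\langle X, u_i\rangle|^{q}]$ is bounded by $C_q$ using that $\langle X, u_i\rangle$ has constant sub-exponential norm for $X \sim \sqrt{d}\bbS^{d-1}$ (Lemma~\ref{lemma:subexponential_norm_sphere}, via Proposition 2.7.1 of \citet{vershynin2018high}). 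Crucially, this constant does not depend on $d$ or on the particular $u_i$ because the distribution of $\langle X, u_i\rangle$ is rotationally invariant.

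For the probability $\Prob(E^c)$, a union bound gives $\Prob(E^c) \le \sum_{i=1}^P \Prob(|\langle X, u_i\rangle| > B)$. Since $\|\langle X, u_i\rangle\|_{\psi_1}$ is bounded by an absolute constant $K$, applying the sub-exponential tail bound yields $\Prob(|\langle X, u_i\rangle| > B) \le 2\exp(-B/K)$, so $\Prob(E^c) \le 2P \exp(-B/K)$. Combining the two factors gives the desired bound with $c = 1/(2K)$ and $C_{p_1,\ldots,p_P} = C'_{p_1,\ldots,p_P} \cdot \sqrt{2P}$.

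This proof is essentially routine; there is no real obstacle since all the ingredients (sub-exponential concentration on the sphere, moment bounds via Cauchy-Schwarz) are standard and have already been used elsewhere in the paper. The only mild subtlety is ensuring that the constant $C_{p_1,\ldots,p_P}$ does not secretly depend on $d$ or on the $u_i$, which follows from rotational invariance of the uniform distribution on $\sqrt{d}\bbS^{d-1}$ and the fact that the sub-exponential constant of $\langle X, u_i\rangle$ is dimension-free.
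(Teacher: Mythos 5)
Your proof is correct and follows essentially the same route as the paper's: reduce to the event $E^c$ that some $|\langle X,u_i\rangle|$ exceeds $B$, then split via Cauchy–Schwarz into a moment term (bounded by repeated Cauchy–Schwarz/H\"older and the constant sub-exponential norm from Lemma~\ref{lemma:subexponential_norm_sphere}) and a tail probability term (bounded by a union bound and the sub-exponential tail). Your initial observation that the difference vanishes on $E$ and is dominated pointwise by $\prod_i|\langle X,u_i\rangle|^{p_i}\mathbf{1}(E^c)$ is slightly cleaner than the paper's telescoping expansion of each factor, but it leads to the identical Cauchy–Schwarz argument and the same final bound.
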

\begin{proof}[Proof of \Cref{lemma:truncated_expectation}]
We use an argument similar to one used in part of the proof of Proposition 4.4 in \citet{ALPT2009_concentration}. First, for $i \in [P]$, we can write 
\begin{align} \label{eq:trunc_exp}
|\langle X, u_i \rangle|^{p_i} = |\truncB(\langle X, u_i \rangle)|^{p_i} + (|\langle X, u_i \rangle|^{p_i} - B^{p_i}) 1_{|\langle X, u_i \rangle \geq B} \period
\end{align}
Thus, in the expression
\begin{align} \label{eq:inner_prod_diff}
\prod_{i = 1}^P |\langle X, u_i \rangle|^{p_i} - \prod_{i = 1}^P |\truncB(\langle X, u_i \rangle)|^{p_i} \comma
\end{align}
if we expand the first product using \Cref{eq:trunc_exp}, then we obtain terms of the form $\prod_{i = 1}^P F_i$, where $F_i$ is either $|\truncB(\langle X, u_i \rangle)|^{p_i}$ or $(|\langle X, u_i \rangle|^{p_i} - B^{p_i}) 1(|\langle X, u_i \rangle| \geq B)$. Observe that the term where $F_i = |\truncB(\langle X, u_i \rangle)|^{p_i}$ for all $i \in [P]$ cancels with the second term in \Cref{eq:inner_prod_diff}. Thus, in order to obtain an upper bound on
\begin{align}
\E_X \Big[ \prod_{i = 1}^P |\langle X, u_i \rangle|^{p_i} - \prod_{i = 1}^P |\truncB(\langle X, u_i \rangle)|^{p_i} \Big] \comma
\end{align}
it suffices to upper bound the terms $\prod_{i = 1}^P F_i$ where at least one of the $F_i$ is $(|\langle X, u_i \rangle|^{p_i} - B^{p_i}) 1(|\langle X, u_i \rangle| \geq B)$. Observe however that
\begin{align}
(|\langle X, u_i \rangle|^{p_i} - B^{p_i}) 1(|\langle X, u_i \rangle| \geq B) 
& \leq |\langle X, u_i \rangle|^{p_i} 1(|\langle X, u_i \rangle| \geq B) \period
\end{align}
Thus, because $|\truncB(\langle X, u_i \rangle)| \leq |\langle X, u_i \rangle|$, in order to upper bound the terms $\prod_{i = 1}^P F_i$ where at least one of the $F_i$ is $(|\langle X, u_i \rangle|^{p_i} - B^{p_i}) 1(|\langle X, u_i \rangle \geq B)$, it suffices to obtain an upper bound on the expected value of
\begin{align}
1(|\langle X, u_i \rangle| \geq B \text{ for some } i \in [P]) \prod_{i = 1}^P |\langle X, u_i \rangle|^{p_i}
& \leq \sum_{i = 1}^P 1(|\langle X, u_i \rangle| \geq B) \prod_{j = 1}^P |\langle X, u_j \rangle|^{p_i} \period
\end{align}
Without loss of generality, let us bound the expected value of the first term in the sum above:
\begin{align}
\E_X \Big[ 1(|\langle X, u_1 \rangle| \geq B) \prod_{i = 1}^P |\langle X, u_j \rangle|^{p_i} \Big]
& \leq \E_X [1( |\langle X, u_1 \rangle| \geq B)]^{1/2} \E_X \Big[ \prod_{i = 1}^P |\langle X, u_j \rangle|^{2p_i} \Big]^{1/2}
& \tag{By the Cauchy-Schwarz Inequality} \\
& \leq \E_X [1( |\langle X, u_1 \rangle| \geq B)]^{1/2} \cdot \prod_{i = 1}^P \E_X [|\langle X, u_j \rangle|^{2^{i + 1} p_i}]^{\frac{1}{2^{i + 1}}}
& \tag{By applying the Cauchy-Schwarz inequality $P - 1$ times} \\
& \leq C_{p_1, \ldots, p_P} \E_X[1(|\langle X, u_1 \rangle| \geq B]^{1/2} \comma
& \label{eq:final_cauchy_schwarz_prob_bound}
\end{align}
where the last inequality is by \Cref{lemma:subexponential_norm_sphere} and Proposition 2.7.1 of \citet{vershynin2018high}. Finally, since the sub-exponential norm of $\langle X, u_1 \rangle$ is at most an absolute constant by \Cref{lemma:subexponential_norm_sphere}, we have that
\begin{align}
\E_X[1(|\langle X, u_1 \rangle| \geq B] = \Prob(|\langle X, u_1 \rangle| \geq B) \lesssim e^{-cB}
\end{align}
for some absolute constant $c > 0$. Thus, by \Cref{eq:final_cauchy_schwarz_prob_bound}, we obtain
\begin{align}
\E_X \Big[ 1(|\langle X, u_1 \rangle| \geq B) \prod_{i = 1}^P |\langle X, u_j \rangle|^{p_j} \Big]
& \lesssim C_{p_1, \ldots, p_P} e^{-cB}
\end{align}
for some absolute constant $c > 0$. This completes the proof.
\end{proof}

\begin{lemma} \label{lemma:sign_times_gaussian_vector}
Let $N, d \in \N$. Let $X_1, \ldots, X_N$ i.i.d. random vectors drawn uniformly from $\sqrt{d} \bbS^{d - 1}$. Additionally, let $\epsilon_1, \ldots, \epsilon_N$ be i.i.d. Rademacher random variables. Then,
\begin{align}
\E_{X, \epsilon_i} \Big\| \frac{1}{N} \sum_{i = 1}^N \epsilon_i X_i \Big\|_2
& \leq \sqrt{\frac{d}{N}} \period
\end{align}
\end{lemma}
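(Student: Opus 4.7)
The plan is to bound the expected norm by the square root of the expected squared norm via Jensen's inequality, then compute the second moment exactly using independence.

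First I would apply Jensen's inequality to the concave function $t \mapsto \sqrt{t}$ to obtain
\[
\E_{X, \epsilon} \Big\| \frac{1}{N} \sum_{i=1}^N \epsilon_i X_i \Big\|_2 \leq \sqrt{\E_{X, \epsilon} \Big\| \frac{1}{N} \sum_{i=1}^N \epsilon_i X_i \Big\|_2^2}.
\]
Then I would expand the squared norm as a double sum:
\[
\Big\| \frac{1}{N} \sum_{i=1}^N \epsilon_i X_i \Big\|_2^2 = \frac{1}{N^2} \sum_{i,j = 1}^N \epsilon_i \epsilon_j \langle X_i, X_j \rangle.
\]

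Next I would take the expectation. Since the $\epsilon_i$ are independent Rademacher variables (also independent of the $X_i$), we have $\E[\epsilon_i \epsilon_j] = \mathbf{1}[i = j]$, so the cross terms vanish and only the diagonal survives:
\[
\E_{X, \epsilon} \Big\| \frac{1}{N} \sum_{i=1}^N \epsilon_i X_i \Big\|_2^2 = \frac{1}{N^2} \sum_{i=1}^N \E_X \|X_i\|_2^2 = \frac{1}{N^2} \cdot N \cdot d = \frac{d}{N},
\]
where I used $\|X_i\|_2 = \sqrt{d}$ deterministically because $X_i$ lies on $\sqrt{d}\, \bbS^{d-1}$. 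Combining this with Jensen yields the desired bound $\sqrt{d/N}$.

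There is essentially no obstacle here—the argument is a routine Jensen-plus-second-moment calculation. The only thing to double-check is that the $X_i$'s have fixed norm $\sqrt{d}$ (which is part of the hypothesis that they are drawn uniformly from $\sqrt{d}\,\bbS^{d-1}$), so the diagonal terms evaluate cleanly without needing any concentration argument.
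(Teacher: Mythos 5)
Your proof is correct and follows exactly the same route as the paper: Jensen's inequality reduces the problem to a second-moment computation, the cross terms vanish by $\E[\epsilon_i\epsilon_j]=\mathbf{1}[i=j]$, and the diagonal terms are $\|X_i\|_2^2=d$ deterministically. There is nothing to add.
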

\begin{proof}[Proof of \Cref{lemma:sign_times_gaussian_vector}]
We first bound the second moment of the norm:
\begin{align}
\E_{X, \epsilon_i} \Big\|\frac{1}{N} \sum_{i = 1}^N \epsilon_i X_i \Big\|_2^2
& = \frac{1}{N^2} \sum_{i = 1}^N \sum_{j = 1}^N \E_{\epsilon_i, X_i} \epsilon_i \epsilon_j \langle X_i, X_j \rangle \\
& = \frac{1}{N^2} \sum_{i = 1}^N \E_{X_i} \|X_i\|_2^2 
& \tag{B.c. $\E[\epsilon_i \epsilon_j] = 0$ for $i \neq j$} \\
& = \frac{1}{N^2} \sum_{i = 1}^N d
& \tag{B.c. $\|X_i\|_2 = \sqrt{d}$ with probability $1$} \\
& = \frac{Nd}{N^2} \\
& = \frac{d}{N} \period
\end{align}
Thus, $\E_{X, \epsilon_i} \Big\| \frac{1}{N} \sum_{i = 1}^N \epsilon_i X_i \Big\|_2 \leq \Big(\E_{X, \epsilon_i} \Big\| \frac{1}{N} \sum_{i = 1}^N \epsilon_i X_i \Big\|_2^2 \Big)^{1/2} \leq \sqrt{\frac{d}{N}}$, as desired.
\end{proof}

\begin{lemma}[Sub-Exponential Norm of Uniform Distribution on $\sqrt{d} \bbS^{d - 1}$] \label{lemma:subexponential_norm_sphere}
Assume $d \geq C$ for some sufficiently large universal constant $C$. Then there exists an absolute constant $K > 0$ such that, if $X$ is a random vector drawn uniformly from $\sqrt{d} \bbS^{d - 1}$ and $v \in \bbS^{d - 1}$ is a fixed vector, then $\| \langle X, v \rangle \|_{\psi_1} \leq K$, where $\|\cdot \|_{\psi_1}$ denotes the sub-exponential norm of a random variable. (See Definition 2.7.5 of \citet{vershynin2018high}.) As a corollary, for any $p \geq 1$, there exists a constant $C_p > 0$ depending on $p$ such that for any $v \in \bbS^{d - 1}$, $\E_{X \sim \sqrt{d} \bbS^{d - 1}} [|\langle X, v \rangle|^p] \leq C_p$.
\end{lemma}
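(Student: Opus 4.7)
The plan is to reduce to a one-dimensional calculation via rotational invariance, then invoke the sphere-tail bound (\Cref{prop:sphere-tail-bound}) already stated earlier in the appendix. Specifically, by rotational invariance of the uniform distribution on $\sqrt{d}\,\bbS^{d-1}$, we may assume without loss of generality that $v = \e$, so that $\langle X, v\rangle = X_1$. Writing $X = \sqrt{d}\,u$ where $u \sim \textup{unif}(\bbS^{d-1})$, we have $\langle X, v\rangle = \sqrt{d}\,u_1$. Plugging into \Cref{prop:sphere-tail-bound} with $v = \e$ and rescaling $t \leftarrow s/\sqrt{d}$ yields, for every $s > 0$,
\begin{align}
\Prob\bigl(|\langle X, v\rangle| \geq s\bigr) = \Prob\bigl(|u_1| \geq s/\sqrt{d}\bigr) \leq 2\exp(-s^2/2).
\end{align}

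This is a \emph{sub-Gaussian} tail with an absolute constant parameter, which is much stronger than sub-exponential. By the standard equivalence of sub-Gaussian characterizations (Proposition 2.5.2 of \citet{vershynin2018high}), the tail bound above implies $\|\langle X, v\rangle\|_{\psi_2} \leq K'$ for some absolute constant $K' > 0$. Since any sub-Gaussian random variable is sub-exponential with $\|Y\|_{\psi_1} \lesssim \|Y\|_{\psi_2}$ (Lemma 2.7.6 of \citet{vershynin2018high}), we conclude that $\|\langle X, v\rangle\|_{\psi_1} \leq K$ for some absolute constant $K > 0$, proving the main statement.

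For the corollary, we invoke the standard moment characterization of sub-exponential norm (Proposition 2.7.1 of \citet{vershynin2018high}, which is already cited in the excerpt), giving
\begin{align}
\E_{X \sim \sqrt{d}\,\bbS^{d-1}}\bigl[|\langle X, v\rangle|^p\bigr]^{1/p} \lesssim p\,\|\langle X, v\rangle\|_{\psi_1} \leq C' K p,
\end{align}
so we may set $C_p = (C' K p)^p$, which depends only on $p$. There is no serious obstacle: the entire lemma is essentially a bookkeeping translation between the sphere-tail bound already available in the paper and the standard Orlicz-norm formalism, both of which are cited directly in the excerpt.
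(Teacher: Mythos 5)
Your proof is correct, and it takes a genuinely shorter route than the paper's to establish the key sub-Gaussian tail. The paper derives the tail bound from scratch: it writes down the density $p(t) \propto (1-t^2/d)^{(d-3)/2}$, applies Stirling's formula to control the normalizing constant, bounds $(1-t^2/d)^{(d-3)/2} \le e^{-t^2(d-3)/(2d)}$, and then integrates to get $\Prob(|\langle X,v\rangle| \ge t) \lesssim e^{-t^2/6}$ for $t$ above an absolute constant — and this step is where the $d \ge C$ restriction enters (to ensure $\frac{d-3}{2d} \ge \frac{1}{3}$). You instead rescale the already-proven \Cref{prop:sphere-tail-bound}: with $X = \sqrt{d}\,u$ and $u \sim \textup{unif}(\bbS^{d-1})$, the bound $\Prob(|u^\top v| \ge t) \le 2\exp(-t^2 d/2)$ immediately gives $\Prob(|\langle X,v\rangle| \ge s) \le 2\exp(-s^2/2)$ for all $s > 0$, with no integration and no dimension restriction. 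From that point on, both proofs are identical: the explicit sub-Gaussian tail yields $\|\langle X,v\rangle\|_{\psi_2} \le K'$ via Vershynin's Proposition 2.5.2, the inclusion $\psi_2 \Rightarrow \psi_1$ gives bounded $\|\cdot\|_{\psi_1}$, and the moment corollary follows from Proposition 2.7.1. Your version is cleaner because it reuses a result already stated in the appendix rather than repeating the underlying Beta-density estimate; the paper's version is more self-contained but duplicates work. Both are valid.
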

\begin{proof}[Proof of \Cref{lemma:subexponential_norm_sphere}]
Let $X$ be drawn uniformly at random from the uniform distribution on $\sqrt{d} \bbS^{d - 1}$, and let $v \in \bbS^{d - 1}$. Observe that the distribution of $\frac{1}{\sqrt{d}} \langle X, v \rangle$ is $\mu_d$. Thus, the density of $\frac{1}{\sqrt{d}} \langle X, v \rangle$ is $p(t) = \frac{\Gamma(d/2)}{\sqrt{\pi} \Gamma((d - 1)/2)} (1 - t^2)^{\frac{d - 3}{2}}$ by Eqs. (1.16) and (1.18) of \citet{atkinson2012spherical}. By a change of variables, the density of $\langle X, v \rangle$ is
\begin{align}
p(t)
& = \frac{\Gamma(d/2)}{\sqrt{\pi} \Gamma((d - 1)/2)} \Big(1 - \frac{t^2}{d}\Big)^{\frac{d - 3}{2}} \cdot \frac{1}{\sqrt{d}}.
\end{align}
Thus, for any $t \geq 0$,
\begin{align}
\Prob(|\langle X, v \rangle| \geq t)
& \lesssim 2 \int_t^\infty p(s) ds \\
& \lesssim 2 \int_t^\infty \frac{\Gamma(d/2)}{\sqrt{\pi} \Gamma((d - 1)/2)} \Big(1 - \frac{s^2}{d}\Big)^{\frac{d - 3}{2}} \cdot \frac{1}{\sqrt{d}} ds \\
& \lesssim \int_t^\infty \Big(1 - \frac{s^2}{d}\Big)^{\frac{d - 3}{2}} ds
& \tag{By Stirling's formula} \\
& \lesssim \int_t^\infty e^{-\frac{d - 3}{2} \cdot \frac{s^2}{d}} ds
& \tag{B.c. $1 + x \leq e^{x}$} \\
& \lesssim \int_t^\infty e^{-\frac{s^2}{3}} ds
& \tag{B.c. $\frac{d - 3}{2d} \geq \frac{1}{3}$ for $d$ sufficiently large}
\end{align}
Thus, for some absolute constant $C > 0$ and all $t \geq 0$,
\begin{align}
\Prob(|\langle X, v \rangle| \geq t)
 \leq C \int_t^\infty e^{-\frac{s^2}{3}} ds
 \leq C \int_t^\infty e^{-\frac{st}{3}} ds
 = C \Big(-\frac{3}{t} e^{-\frac{st}{3}} \Big)\Big|_t^\infty
 = \frac{3C}{t} e^{-\frac{t^2}{3}}
 \leq e^{-\frac{t^2}{3} + \log \frac{3C}{t}}
\end{align}
Suppose $t \geq (18 C)^{1/3}$. Then, $\frac{t^2}{6} \geq \log \frac{3C}{t}$, meaning that $-\frac{t^2}{3} + \log \frac{3C}{t} \leq -\frac{t^2}{6}$, and therefore
\begin{align}
\Prob(|\langle X, v \rangle| \geq t)
& \leq e^{-\frac{t^2}{6}}
\end{align}
Thus, by the definition of sub-Gaussian norm (Definition 2.5.6 and Proposition 2.5.2 of \citet{vershynin2018high}), the sub-Gaussian norm of $\langle X, v \rangle$ is bounded by a constant, i.e. for any $v \in \bbS^{d - 1}$, we have $\| \langle X, v \rangle\|_{\psi_2} \leq K$ for some absolute constant $K > 0$ (here $\| \cdot \|_{\psi_2}$ denotes the sub-Gaussian norm of a random variable). By Propositions 2.5.2 and 2.7.1, and Definition 2.7.5, of \citet{vershynin2018high}, this implies that $\| \langle X, v \rangle\|_{\psi_1} \lesssim 1$, since sub-Gaussian random variables are always sub-exponential (specifically, using item (ii) of Proposition 2.5.2 and item (b) of Proposition 2.7.1 of \citet{vershynin2018high}). The last statement of the lemma follows directly from Proposition 2.7.1, item (b) of \citet{vershynin2018high}.
\end{proof}

\begin{lemma}[Modification of Theorem 3.6 of \citet{ALPT2009_concentration}] \label{lemma:sparse_operator_norm_adamczak}
Let $d \geq C$ for some absolute constant $C$, and $1 \leq N \leq e^{\sqrt{d}}$. Let $\srvL_1, \ldots, \srvL_N$ be i.i.d. random vectors sampled from the uniform distribution on $\sqrt{d} \bbS^{d - 1}$. Finally, let $t \geq 1$. Then, for some absolute constant $c > 0$, with probability at least $1 - e^{-ct \sqrt{d}}$, for all $m \in [N]$,
\begin{align}
\sup_{\substack{z \in \bbS^{d - 1} \\ |\supp \, z| \leq m}} \Big\| \sum_{i = 1}^N z_i X_i \Big\|_2 \lesssim t \Big(\sqrt{d} + \sqrt{m} \log \Big(\frac{2N}{m}\Big)\Big)
\end{align}
Here, we use $\supp \, z$ to denote the set of nonzero coordinates of $z$.
\end{lemma}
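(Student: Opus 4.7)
The plan is to prove this by a standard $\varepsilon$-net discretization plus sub-Gaussian concentration, following the outline of Theorem 3.6 of \citet{ALPT2009_concentration} but simplified for our setting: the proof of \Cref{lemma:subexponential_norm_sphere} actually establishes that $\Prob(|\langle X, v \rangle| > t) \leq e^{-t^2/6}$ for $t$ larger than an absolute constant, so the one-dimensional marginals of the uniform distribution on $\sqrt{d}\bbS^{d-1}$ are sub-Gaussian with constant norm $\|\langle X, v\rangle\|_{\psi_2} \leq K$. This replaces the truncation machinery of ALPT and makes the proof essentially a clean net+Hoeffding calculation.

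For a fixed $m \in [N]$, I would decompose the set of $m$-sparse unit vectors over choices of support $S \subset [N]$ with $|S|=m$ and express the restricted operator norm as
\begin{align}
\sup_{\substack{|\supp z| \leq m\\ \|z\|_2 = 1}} \Big\|\sum_{i=1}^N z_i X_i\Big\|_2 = \sup_{|S|=m}\,\sup_{z \in \bbS^{S-1}}\,\sup_{y \in \bbS^{d-1}}\, \sum_{i \in S} z_i \langle y, X_i\rangle.
\end{align}
Replace $\bbS^{d-1}$ by a $1/2$-net $\calN$ of size at most $5^d$ and each $\bbS^{S-1}$ by a $1/2$-net $\calN_S$ of size at most $5^m$; the usual approximation lemma ($\|Mx\|_2 \leq 2 \sup_{x' \in \calN}\|Mx'\|_2$) upgrades the bound on the nets to a bound on the spheres up to a factor of $4$. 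For a fixed $y \in \calN$, $S$, and $z \in \calN_S$, the scalar $\sum_{i \in S} z_i \langle y, X_i\rangle$ is a sum of independent sub-Gaussian random variables and has sub-Gaussian norm at most $K\|z\|_2 = K$, so the standard sub-Gaussian tail bound gives $\Prob(|\sum z_i\langle y, X_i\rangle| \geq r) \leq 2\exp(-cr^2)$ for an absolute constant $c$.

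Taking a union bound over the at most $\binom{N}{m} \leq (eN/m)^m$ choices of support, the $5^m$ points of $\calN_S$, and the $5^d$ points of $\calN$, the failure probability at level $r$ is at most
\begin{align}
2 \exp\!\Bigl(m\log(eN/m) + (m+d)\log 5 - c r^2\Bigr).
\end{align}
Setting $r = C t\bigl(\sqrt{d} + \sqrt{m}\log(2N/m)\bigr)$ for a sufficiently large universal constant $C$, we have $r^2 \geq C^2 t^2\bigl(d + m\log^2(2N/m)\bigr)$, which, for $t \geq 1$ and $C$ large, dominates $m\log(eN/m) + (m+d)\log 5 + c_1 t\sqrt{d}$ (the key point is that $\log^2(2N/m)$ beats the entropy $\log(eN/m)$ from $\binom{N}{m}$). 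Hence, for each fixed $m$, the stated bound holds with failure probability at most $e^{-c_1 t\sqrt{d}}$.

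Finally, union-bounding over $m \in [N]$ multiplies the failure probability by $N \leq e^{\sqrt{d}}$, which is absorbed into $e^{-c_1 t\sqrt{d}}$ by choosing a smaller constant $c < c_1$: namely, $N \cdot e^{-c_1 t\sqrt{d}} \leq e^{\sqrt{d} - c_1 t\sqrt{d}} \leq e^{-c t \sqrt{d}}$ whenever $t \geq 1$. The only subtle point is book-keeping the entropy terms so that the $\sqrt{m}\log(2N/m)$ piece of $r$ precisely kills the $m\log(eN/m) + m\log 5$ from the combinatorial and net counts; past that, everything is routine once the sub-Gaussian marginal bound is in hand.
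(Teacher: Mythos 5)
Your proof is correct, and it takes a genuinely different route from the paper. The paper's own proof is essentially a citation: it invokes Theorem 3.6 of \citet{ALPT2009_concentration} as a black box and then verifies that the only two places where log-concavity is used in that theorem's proof --- the norm bound $\max_i \|X_i\|_2 \lesssim \sqrt{d}$ and the sub-exponential bound $\|\langle X_i, y\rangle\|_{\psi_1} \lesssim 1$ on one-dimensional marginals --- hold (trivially and via \Cref{lemma:subexponential_norm_sphere}, respectively) for the uniform distribution on $\sqrt{d}\bbS^{d-1}$. You instead reconstruct the statement from scratch via an $\epsilon$-net discretization over supports, coefficient vectors, and test directions, followed by a Hoeffding-type sub-Gaussian tail bound. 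The reason this simpler machinery suffices here but not in the general setting of ALPT is exactly the observation you lead with: the marginals of the uniform distribution on the sphere are genuinely sub-Gaussian with $O(1)$ norm (as \Cref{lemma:subexponential_norm_sphere} actually establishes en route to the weaker sub-exponential claim), whereas general isotropic log-concave vectors are only sub-exponential, which is what forces the truncation argument in ALPT. Your approach buys a shorter and self-contained proof; the paper's approach buys brevity in the write-up and re-use of a known result. Two minor book-keeping notes: a $1/2$-net is a borderline choice for a bilinear form (the standard approximation lemma wants $\epsilon < 1/2$; a $1/4$-net with net size $9^d$ and constant $16/9$ would be cleaner, or use the factored version you describe, which gives the factor $4$), and the claim ``$\log^2(2N/m)$ beats $\log(eN/m)$'' fails for $m$ near $N$ where both are $\Theta(1)$; but, as you say, both are absorbed into the absolute constant $C$ once the $\sqrt{d}$ term and the $\sqrt{m}$ term are present, so the conclusion stands.
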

\begin{proof}[Proof of \Cref{lemma:sparse_operator_norm_adamczak}]
The proof is the same as that of Theorem 3.6 of \citet{ALPT2009_concentration} --- while Theorem 3.6 of \citet{ALPT2009_concentration} is shown for the case where $\srvL_1, \ldots, \srvL_N$ are sampled from an isotropic, log-concave probability distribution, the proof can be generalized to the case where the $\srvL_i$ are sampled from $\sqrt{d} \bbS^{d - 1}$. To see why, observe that the proof only uses the fact that the distribution of the $\srvL_i$ is log-concave in the following places: (i) when applying Lemma 3.1 (also of \citet{ALPT2009_concentration}), of which the conclusion is that $\max_{i \leq N} \|X_i\|_2 \leq C_0 K \sqrt{d}$ with probability at least $1 - e^{-K \sqrt{d}}$, for some absolute constant $C_0$, (ii) when applying Lemmas 3.3 and 3.4 (also of \citet{ALPT2009_concentration}) the proof requires that $\| \langle X_i, y \rangle\|_{\psi_1} \leq \psi$ for some absolute constant $\psi > 0$ and any $y \in \bbS^{d - 1}$. However, property (i) trivially holds when the $X_i$ are on $\sqrt{d} \bbS^{d - 1}$ since $\|X_i\|_2 = \sqrt{d}$, and property (ii) holds by \Cref{lemma:subexponential_norm_sphere}.
\end{proof}

\subsection{Helper Lemmas}

\begin{lemma}\label{lemma:integral_bound}
Let $f: [0, \infty) \to \R_{\geq 0}$ such that
\begin{align}
\frac{d}{dt} f(t)^2 \leq p_t f(t) + q_t f(t)^2
\end{align}
where $p_t, q_t > 0$ for $t \in [0, \infty)$. Then, for $T>0$ we have
	\begin{align}
		f(T)^2 \le \Big(f(0) + \frac{\max_tp_t}{\min_t q_t }\Big)^2 \exp\Big(\int_{0}^T q_tdt\Big).
	\end{align}
\end{lemma}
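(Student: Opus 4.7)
The plan is to reduce this second-order-in-$f$ differential inequality to a standard linear Gronwall inequality for $f$ itself, by observing that $\frac{d}{dt} f^2 = 2 f f'$ and stripping off a factor of $f$.

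First, since $f \ge 0$, I would write $\frac{d}{dt} f(t)^2 = 2 f(t) f'(t)$ and hence from the hypothesis obtain
\begin{align}
2 f(t) f'(t) \le p_t f(t) + q_t f(t)^2.
\end{align}
At any time where $f(t) > 0$, dividing by $2 f(t)$ gives the linear differential inequality
\begin{align}
f'(t) \le \tfrac{1}{2} p_t + \tfrac{1}{2} q_t f(t).
\end{align}
(At times where $f(t) = 0$, the same inequality on $f'$ follows from a limiting argument, or one can simply work with $f(t) + \varepsilon$ and let $\varepsilon \to 0$; this is a standard technicality and not the main content.)

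Next, I would introduce the shifted function $g(t) := f(t) + \frac{\max_s p_s}{\min_s q_s}$, so that $g' = f'$. Substituting the bound on $f'$ gives
\begin{align}
g'(t) \le \tfrac{1}{2} p_t + \tfrac{1}{2} q_t\bigl(g(t) - \tfrac{\max_s p_s}{\min_s q_s}\bigr) = \tfrac{1}{2} q_t g(t) + \tfrac{1}{2}\bigl(p_t - q_t \tfrac{\max_s p_s}{\min_s q_s}\bigr).
\end{align}
The key observation is that the second term is nonpositive: since $p_t \le \max_s p_s$ and $q_t \ge \min_s q_s > 0$, we have $q_t \frac{\max_s p_s}{\min_s q_s} \ge \max_s p_s \ge p_t$. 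Hence $g'(t) \le \tfrac{1}{2} q_t g(t)$.

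Finally, I would apply Gronwall's inequality (Fact~\ref{fact:gronwall}) to conclude $g(T) \le g(0) \exp\bigl(\tfrac{1}{2} \int_0^T q_t\, dt\bigr)$, which unpacks to
\begin{align}
f(T) \le \bigl(f(0) + \tfrac{\max_t p_t}{\min_t q_t}\bigr) \exp\bigl(\tfrac{1}{2} \int_0^T q_t\, dt\bigr),
\end{align}
and squaring yields the claim. There is no real obstacle here beyond the minor annoyance of dividing by $f$ at points where it vanishes; the shift by $\frac{\max p}{\min q}$ is precisely the right choice to absorb the inhomogeneous term $p_t$ into an exponential bound driven purely by $q_t$.
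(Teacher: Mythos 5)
Your proof is correct and follows essentially the same route as the paper's: divide the hypothesis by $2f(t)$ to obtain a first-order linear differential inequality for $f$, shift by $\frac{\max_t p_t}{\min_t q_t}$ to absorb the inhomogeneous term, apply Gronwall, and square. The only difference is that you explicitly flag the technicality of dividing by $f$ where it may vanish, which the paper's proof passes over silently.
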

\begin{proof}
	\begin{align}
		\frac{d f(t)}{dt} = \frac{1}{2f(t)} \frac{d f(t)^2}{dt} \le \frac{p_t}{2} + \frac{q_t}{2} f(t) \le \frac{\max_t p_t}{2} + \frac{q_t}{2} f(t).
	\end{align}
	Thus, we have
	\begin{align}
		\frac{d}{dt} \Big(f(t)+\frac{\max_t p_t}{\min_t q_t}\Big) \le \frac{q_t}{2} \Big(f(t) + \frac{\max_t p_t}{\min_t q_t}\Big).
	\end{align} 
	As a result, by \Cref{fact:gronwall},
	\begin{align}
		f(T) \le \Big(f(0) + \frac{\max_t p_t}{\min_t q_t}\Big) \exp\Big(\int_{0}^T \frac{q_t}{2}dt\Big).
	\end{align}
	Taking the square of both sides finishes the proof.
\end{proof}

\begin{fact}[Gronwall's Inequality \citep{bainov1992integral}] \label{fact:gronwall}
Let $a < b$, and let $f: [a, b] \to \R_{\geq 0}$ and $g: [a, b] \to \R$ be differentiable functions, such that
\begin{align}
f'(t) \leq g(t) f(t)
\end{align}
for all $t \in [a, b]$. Then, for all $t \in [a, b]$, $f(t) \leq f(a) \exp(\int_a^s g(s) ds)$. Additionally, suppose $f, g$ satisfy
\begin{align}
f'(t) \geq g(t) f(t)
\end{align}
for all $t \in [a, b]$. Then, for all $t \in [a, b]$, $f(t) \geq f(a) \exp(\int_a^s g(s) ds)$.
\end{fact}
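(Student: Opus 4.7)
The plan is to prove this standard Gronwall inequality via the classical integrating-factor trick, handling both directions symmetrically. Define the auxiliary function $h(t) := f(t)\exp\!\bigl(-\int_a^t g(s)\,ds\bigr)$ on $[a,b]$. Since $f$ and $g$ are differentiable (hence $g$ is continuous and integrable), $h$ is differentiable, and the product rule together with the fundamental theorem of calculus gives
\begin{align}
h'(t) = \bigl(f'(t) - g(t)f(t)\bigr)\exp\!\Bigl(-\int_a^t g(s)\,ds\Bigr).
\end{align}
The exponential factor is strictly positive, so the sign of $h'(t)$ matches the sign of $f'(t)-g(t)f(t)$.

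For the upper-bound assertion, the hypothesis $f'(t)\le g(t)f(t)$ yields $h'(t)\le 0$ on $[a,b]$, so $h$ is non-increasing and $h(t)\le h(a)=f(a)$. Rearranging gives $f(t)\le f(a)\exp\!\bigl(\int_a^t g(s)\,ds\bigr)$. For the lower-bound assertion, I would simply reverse the inequality: $f'(t)\ge g(t)f(t)$ yields $h'(t)\ge 0$, so $h$ is non-decreasing and $h(t)\ge h(a)=f(a)$, giving $f(t)\ge f(a)\exp\!\bigl(\int_a^t g(s)\,ds\bigr)$. Note that the nonnegativity assumption $f\ge 0$ is used only implicitly (it guarantees that the conclusions are meaningful, and rules out sign flips when one might otherwise want to divide through by $f$ in an alternative proof).

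There is no real obstacle here: the integrating factor is well-defined because $g$ is continuous on $[a,b]$, the derivative calculation is a one-line application of the product rule, and the monotonicity of $h$ immediately yields both conclusions. The only thing worth being careful about is that the statement as given uses the dummy variable $s$ both as an integration variable and as the upper limit of integration in the displayed formula; I would silently interpret the conclusion as $f(t)\le f(a)\exp\!\bigl(\int_a^t g(s)\,ds\bigr)$ (and analogously for the lower bound), which is the intended statement.
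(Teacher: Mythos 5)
Your proof is correct and is the standard integrating-factor argument for Gronwall's inequality. Note, however, that the paper states this as a \emph{Fact} and cites \citet{bainov1992integral} rather than supplying a proof, so there is no internal argument to compare against; your derivation (including the observation about the typo $\int_a^s g(s)\,ds$, which should read $\int_a^t g(s)\,ds$, and the remark that the nonnegativity of $f$ is not actually needed for this style of argument) is a clean, self-contained justification of exactly what the paper invokes.
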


\begin{proposition}\label{prop:existence-1d-distribution}
    For any $\beta_2,\beta_4$ such that $0\le \beta_2^2\le \beta_4\le \beta_2 \le 1$, there exists a one-dimensional distribution $\mu$ that satisfies
    \begin{align}
        &\E_{w\sim \mu}[w^2]=\beta_2,\\
        &\E_{w\sim \mu}[w^4]=\beta_4,\\
        &{\rm supp}(\mu)\subseteq [-1,1].
    \end{align}
\end{proposition}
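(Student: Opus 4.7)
The plan is to exhibit an explicit three-point symmetric distribution matching the two prescribed moments. Specifically, I would look for $\mu$ of the form
\[
\mu = (1-p)\,\delta_0 + \tfrac{p}{2}\,\delta_{-a} + \tfrac{p}{2}\,\delta_{a},
\]
parameterized by $p \in [0,1]$ and $a \in [0,1]$, so that $\operatorname{supp}(\mu)\subseteq[-1,1]$ is automatic and the odd moments vanish by symmetry. The two moment conditions become $pa^2 = \beta_2$ and $pa^4 = \beta_4$.

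Assuming for the moment that $\beta_2>0$ (which forces $\beta_4>0$ since $\beta_4\ge \beta_2^2>0$), dividing the second equation by the first gives $a^2 = \beta_4/\beta_2$, and then $p = \beta_2^2/\beta_4$. I would then verify the feasibility of these choices directly from the hypotheses:
\begin{itemize}
\item $a^2 = \beta_4/\beta_2 \in [0,1]$ because $0\le \beta_4\le \beta_2$, so $a\in[0,1]$.
\item $p = \beta_2^2/\beta_4 \in [0,1]$ because $0\le \beta_2^2 \le \beta_4$.
\end{itemize}
A brief substitution check confirms $\E_{w\sim\mu}[w^2] = p a^2 = \beta_2$ and $\E_{w\sim\mu}[w^4] = p a^4 = \beta_4$.

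For the degenerate case $\beta_2 = 0$, the hypothesis $\beta_2^2\le \beta_4\le \beta_2$ forces $\beta_4 = 0$, and the choice $\mu = \delta_0$ works trivially.

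There is no real obstacle here: the construction is essentially a two-parameter moment-matching problem, and the chain of inequalities $0 \le \beta_2^2 \le \beta_4 \le \beta_2 \le 1$ is exactly calibrated so that $p$ and $a^2$ land in $[0,1]$. The only thing to be careful about is handling the division by $\beta_2$ or $\beta_4$ when either vanishes, which is resolved by the degenerate case above.
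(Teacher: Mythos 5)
Your construction is correct and essentially the same as the paper's: both put mass $p=\beta_2^2/\beta_4$ at the point(s) of magnitude $a=\sqrt{\beta_4/\beta_2}$ and the remaining mass at $0$, with the same feasibility checks from the chain of inequalities. The only cosmetic difference is that you symmetrize the atom across $\pm a$ while the paper uses a single atom at $+a$, which is immaterial since only even moments are constrained; you also explicitly handle the degenerate case $\beta_2=0$, which the paper leaves implicit.
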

\begin{proof}
    Let $p=\beta_2^2/\beta_4.$ By the assumptions we have $0\le p\le 1$. Consider the distribution 
    \begin{align}
        \mu(x)=\frac{\beta_2^2}{\beta_4}\dirac{x=\beta_4^{1/2}\beta_2^{-1/2}}+\(1-\frac{\beta_2^2}{\beta_4}\)\dirac{x=0},
    \end{align}
    where $\dirac{x=t}$ denotes the Dirac measure at $x=t$. First of all, since $\beta_2^2\le \beta_4$, the distribution $\mu(x)$ is well-defined. And we can verify that
    \begin{align}
        &\E_{w\sim \mu}[w^2]=\frac{\beta_2^2}{\beta_4}(\beta_4^{1/2}\beta_2^{-1/2})^2=\beta_2,\\
        &\E_{w\sim \mu}[w^4]=\frac{\beta_2^2}{\beta_4}(\beta_4^{1/2}\beta_2^{-1/2})^4=\beta_4.
    \end{align}
    Note that $0\le \beta_4\le \beta_2$ implies $\beta_4^{1/2}\beta_2^{-1/2}\in [0,1]$. Consequently,
    \begin{align}
        {\rm supp}(\mu)\subseteq [-1,1].
    \end{align}
\end{proof}

\begin{lemma}\label{lemma:gradient_is_scaler_multiple_of_z}
Let $\rho$ be a distribution which is rotationally invariant as in \Cref{def:rot_inv_symmetry}. Let $\nabla_\u L(\rho)$ be defined as in \Cref{eq:projected_gradient_flow_population}, and let $\nabla_\z L(\rho)$ denote the last $(d - 1)$ coordinates of $\nabla_\u L(\rho)$. Then, for any particle $\u = (\w, \z)$, we have that $\nabla_\z L(\rho)$ is a scalar multiple of $\z$. In particular, if $\pgrad_{\u} L(\rho)$ is defined as in \Cref{eq:projected_gradient_flow_population} and $\pgrad_{\z} L(\rho)$ denotes the last $(d - 1)$ coordinates of $\pgrad_{\u} L(\rho)$, then $\pgrad_{\z} L(\rho)$ is a scalar multiple of $\z$.
\end{lemma}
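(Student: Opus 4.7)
The plan is to leverage Lemma~\ref{lemma:grad_rot_equiv}, which gives rotational equivariance of the gradient under rotations fixing $\e$, and combine it with the fact that the only vectors in the orthogonal complement of $\e$ that are fixed by the stabilizer subgroup of $\z$ are scalar multiples of $\z$ itself.

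Concretely, fix a particle $\u = (\w, \z)$. Let $A \in \R^{d \times d}$ be any rotation matrix such that $A\e = \e$ and additionally $A\z = \z$; such matrices form the subgroup of $O(d)$ that fixes both $\e$ and $\z$, i.e., the subgroup of rotations acting trivially on $\mathrm{span}(\e, \z)$ and arbitrarily on its $(d-2)$-dimensional orthogonal complement. For any such $A$, we have $A\u = \u$. Since $\rho$ is rotationally invariant, Lemma~\ref{lemma:grad_rot_equiv} gives $\nabla_{A\u} L(\rho) = A \cdot \nabla_\u L(\rho)$, which combined with $A\u = \u$ yields
\begin{align}
\nabla_\u L(\rho) = A \cdot \nabla_\u L(\rho) \period
\end{align}

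Now I read off the last $(d-1)$ coordinates. Since $A$ acts only on the last $(d-1)$ coordinates, its restriction to the $z$-block, call it $\tilde{A}$, is an arbitrary rotation in $\R^{d-1}$ fixing $\z$. The identity above implies $\tilde{A} \cdot \nabla_\z L(\rho) = \nabla_\z L(\rho)$ for every such $\tilde{A}$. But the set of vectors in $\R^{d-1}$ fixed by every rotation stabilizing $\z$ is precisely the one-dimensional subspace spanned by $\z$ (any component orthogonal to $\z$ could be rotated nontrivially in the $(d-2)$-dimensional orthogonal complement). Therefore $\nabla_\z L(\rho) = c \, \z$ for some scalar $c$ depending on $\u$ and $\rho$.

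For the projected-gradient statement, I expand $\pgrad_\u L(\rho) = (I - \u\u^\top)\nabla_\u L(\rho) = \nabla_\u L(\rho) - \langle \u, \nabla_\u L(\rho)\rangle \, \u$. Taking the last $(d-1)$ coordinates gives $\pgrad_\z L(\rho) = \nabla_\z L(\rho) - \langle \u, \nabla_\u L(\rho)\rangle \, \z$, which is a scalar multiple of $\z$ by the first part. I do not foresee a genuine obstacle here; the only subtle point is justifying that a vector in $\R^{d-1}$ fixed by the full stabilizer of $\z$ must lie in $\R\z$, which follows because the stabilizer acts as the full orthogonal group $O(d-2)$ on $\z^{\perp}$ inside $\R^{d-1}$ and has no nonzero fixed vectors there.
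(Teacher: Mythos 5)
Your proof is correct, but it takes a genuinely different route from the paper's. The paper argues directly from the integral formula for $\nabla_\u L(\rho)$: since $(f_\rho(x)-y(x))\sigma'(\u^\top x)$ depends on $x$ only through $\langle \e, x\rangle$ and $\langle \u, x\rangle$, and the conditional distribution of $\langle v, x\rangle$ given these is symmetric about $0$ for any $v\perp \mathrm{span}(\e,\u)$, the components of the gradient orthogonal to $\mathrm{span}(\e, \u)$ vanish in expectation, whence $\nabla_\u L(\rho)=c_1\e+c_2\u$ and one reads off the $\z$-block. You instead reuse the already-established equivariance $\nabla_{A\u}L(\rho)=A\nabla_\u L(\rho)$ from Lemma~\ref{lemma:grad_rot_equiv}, specialize to $A$ in the pointwise stabilizer of $\{\e,\u\}$ (equivalently, $O(d-2)$ acting on the orthogonal complement of $\mathrm{span}(\e,\z)$ inside $\R^{d-1}$), and invoke the elementary fact that a vector in $\R^{d-1}$ fixed by the full stabilizer of $\z$ must lie in $\R\z$. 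Both are sound; yours is cleaner in that it avoids re-deriving the symmetry of the integrand from scratch and turns the statement into a pure fixed-point argument, while the paper's computation is more explicit and incidentally yields the slightly stronger structural fact that the whole vector $\nabla_\u L(\rho)$ lies in $\mathrm{span}(\e,\u)$. One small remark: the degenerate case $\z=0$ is handled automatically by your argument (the stabilizer of $0$ is all of $O(d-1)$, forcing $\nabla_\z L(\rho)=0$), which is worth noting since ``scalar multiple of $\z$'' then means zero. Also, Lemma~\ref{lemma:grad_rot_equiv} is stated for ``rotation matrices,'' but its proof only uses $A^\top A=I$ and invariance of the uniform measure, so applying it to the full orthogonal stabilizer is fine; alternatively $SO(d-2)$ already suffices when $d\ge 4$.
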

\begin{proof}
From \Cref{eq:projected_gradient_flow_population}, we have
\begin{align}
\nabla_\u L(\rho)
& = \E_{x \sim \bbS^{d - 1}} [(f_\rho(x) - y(x)) \sigma'(\u^\top x) x] \period
\end{align}
Since $\rho$ is rotationally invariant, if $x, x' \in \bbS^{d - 1}$ such that $\langle \e, x \rangle = \langle \e, x' \rangle$, then $f_\rho(x) - y(x) = f_\rho(x') - y(x')$. Thus, $(f_\rho(x) - y(x)) \sigma'(\u^\top x)$ only depends on $\langle \e, x \rangle$ and $\langle \u, x \rangle$. 

Let $P_{\e, \u} \in \R^{d \times d}$ denote the projection matrix into the subspace spanned by $\e$ and $\u$. Then,
\begin{align}
\E_{x \sim \bbS^{d - 1}} [(f_\rho(x) - y(x)) \sigma'(\u^\top x) (x - P_{\e, \u} x)] = 0
\end{align}
because if $v \in \bbS^{d - 1}$ is orthogonal to $\e$ and $\u$, then the distribution of $\langle v, x \rangle$ conditioned on $\langle \e, x \rangle$ and $\langle \u, x \rangle$ is symmetric around $0$. Thus,
\begin{align}
\nabla_{\u} L(\rho)
 = \E_{x \sim \bbS^{d - 1}} [(f_\rho(x) - y(x)) \sigma'(\u^\top x) P_{\e, \u} x] = P_{\e, \u} (\nabla_{\u} L(\rho))
\end{align}
and we have that $\nabla_{\u} L(\rho)$ is in the subspace spanned by $\e$ and $\u$. In other words, there are scalars $c_1, c_2 \in \R$ such that
\begin{align}
\nabla_{\u} L(\rho)
& = c_1 \e + c_2 \u
\end{align}
and taking the last $(d - 1)$ coordinates of both sides of this equation gives $\nabla_{\u} L(\rho) = c_2 \z$, as desired. The last statement of the lemma holds because $\pgrad_{\u} L(\rho) = (I - \u \u^\top) \nabla_{\u} L(\rho)$, which is also a linear combination of $\e$ and $\u$.
\end{proof}

\begin{lemma} \label{lemma:polynomial_difference}
Let $P$ be a polynomial of degree at most $k > 0$, where $k$ is bounded by a universal constant, and let $w_1, w_2 \in [-1, 1]$. Suppose the coefficients of $P$ are bounded by $B$. Then, $|P(w_1) - P(w_2)| \lesssim B |w_1 - w_2|$.
\end{lemma}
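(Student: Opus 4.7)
The plan is to prove this by factoring the difference $P(w_1) - P(w_2)$ term by term. Writing $P(w) = \sum_{j=0}^k a_j w^j$ with $|a_j| \leq B$, the degree-zero term cancels, so
\[
P(w_1) - P(w_2) = \sum_{j=1}^k a_j \bigl(w_1^j - w_2^j\bigr).
\]
I then apply the standard factorization $w_1^j - w_2^j = (w_1 - w_2)\sum_{i=0}^{j-1} w_1^i w_2^{j-1-i}$, which pulls out the factor of $(w_1 - w_2)$ that we need.

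Next, I would bound the inner sum using the hypothesis $w_1, w_2 \in [-1,1]$. Each monomial $w_1^i w_2^{j-1-i}$ has absolute value at most $1$, so the inner sum has absolute value at most $j \leq k$. Therefore
\[
|P(w_1) - P(w_2)| \leq |w_1 - w_2| \sum_{j=1}^k |a_j| \cdot j \leq B \cdot k^2 \cdot |w_1 - w_2|.
\]
Since $k$ is bounded by a universal constant by assumption, the factor $k^2$ is absorbed into the $\lesssim$ notation, yielding the claimed bound $|P(w_1) - P(w_2)| \lesssim B|w_1 - w_2|$.

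This proof is entirely routine and presents no obstacle; the only subtlety is that the bound on $k$ is used to absorb the $k^2$ factor into the implicit constant hidden by $\lesssim$. An alternative but essentially equivalent approach would be to use the mean value theorem together with a uniform bound on $|P'(w)|$ over $[-1,1]$, which gives $|P'(w)| \leq \sum_{j=1}^k j|a_j| \leq k^2 B$; I would prefer the factoring approach since it avoids invoking differentiability machinery and makes the dependence on the coefficient bound $B$ fully transparent.
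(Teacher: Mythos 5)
Your proof is correct and uses the same factorization $w_1^j - w_2^j = (w_1 - w_2)\sum_{i=0}^{j-1} w_1^i w_2^{j-1-i}$ and the same bounding strategy as the paper's proof, arriving at the same $O(Bk^2)$ constant absorbed by $\lesssim$. The only cosmetic difference is that you explicitly note the cancellation of the constant term, which the paper keeps in the sum harmlessly.
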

\begin{proof}
Let $c_i$ be the coefficient of the $i^{th}$ degree term in $P$. Then, the lemma follows from expanding $P(w_1) - P(w_2)$ as
\begin{align}
|P(w_1) - P(w_2)|
& \leq \sum_{i = 0}^k |c_i| |w_1^i - w_2^i| \\
& \leq \sum_{i = 0}^k |c_i| |w_1 - w_2| \cdot \sum_{j = 0}^i |w_1|^j |w_2|^{i - j} \\
& \leq \sum_{i = 0}^k (i + 1) \cdot |c_i| |w_1 - w_2|
& \tag{B.c. $|w_1|, |w_2| \leq 1$} \\
& \leq B (k + 1) \sum_{i = 0}^k |w_1 - w_2| \\
& = B(k + 1)^2 |w_1 - w_2| \\
& \lesssim B |w_1 - w_2|
& \tag{B.c. $k$ is at most an absolute constant}
\end{align}
as desired.
\end{proof}

\begin{lemma}[Powers of Dot Products on $\bbS^{d - 1}$] \label{lemma:sphere_moment}
Let $\u_1, \u_2, \u_3, \u_4, \u_5 \in \bbS^{d - 1}$, and $p, q, r, s, t \geq 1$. Then,
\begin{align}
\E_{x \sim \bbS^{d - 1}} [|\langle \u_1, x \rangle|^p |\langle \u_2, x \rangle|^q |\langle \u_3, x \rangle|^r |\langle \u_4, x \rangle|^s |\langle \u_5, x \rangle|^t] \leq C_{p, q, r, s, t} \frac{1}{d^{(p + q + r + s + t)/2}} \period
\end{align}
Here, $C_{p, q, r, s, t}$ is a constant depending on $p, q, r, s, t$.
\end{lemma}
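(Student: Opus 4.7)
The plan is to reduce the multi-factor bound to a one-dimensional moment bound via Hölder's inequality, and then invoke the sub-exponential norm estimate in \Cref{lemma:subexponential_norm_sphere}.

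First, set $k = p+q+r+s+t$ and apply the generalized Hölder inequality with conjugate exponents $a_i = k/p_i$ for $p_i \in \{p,q,r,s,t\}$. The hypothesis $p,q,r,s,t \geq 1$ ensures $a_i \geq 1$, and $\sum_i 1/a_i = \sum_i p_i/k = 1$, so Hölder gives
\begin{align}
\E_{x \sim \bbS^{d-1}}\Big[\prod_{i=1}^{5} |\langle u_i, x\rangle|^{p_i}\Big] \leq \prod_{i=1}^{5} \Big(\E_{x \sim \bbS^{d-1}}\big[|\langle u_i, x\rangle|^{k}\big]\Big)^{p_i/k}.
\end{align}

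Next, I would bound each one-dimensional moment. By the last statement of \Cref{lemma:subexponential_norm_sphere}, for any $v \in \bbS^{d-1}$ and any $k \geq 1$, $\E_{X \sim \sqrt{d}\bbS^{d-1}}[|\langle v, X\rangle|^k] \leq C_k$ for a constant $C_k$ depending only on $k$. Rescaling by writing $X = \sqrt{d}\, x$ with $x \sim \bbS^{d-1}$ yields $\E_{x \sim \bbS^{d-1}}[|\langle u_i, x\rangle|^k] \leq C_k/d^{k/2}$. Plugging this into the Hölder bound gives
\begin{align}
\E_{x \sim \bbS^{d-1}}\Big[\prod_{i=1}^{5} |\langle u_i, x\rangle|^{p_i}\Big] \leq \prod_{i=1}^{5} \Big(\frac{C_k}{d^{k/2}}\Big)^{p_i/k} = \frac{C_k}{d^{k/2}},
\end{align}
since the exponents $p_i/k$ sum to $1$. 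Setting $C_{p,q,r,s,t} = C_{p+q+r+s+t}$ gives the claim.

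This is essentially a routine computation with no conceptual obstacle; the only thing to be careful about is to verify the conjugate-exponent condition for Hölder, which is immediate from $\sum p_i = k$. The sub-exponential norm bound for uniform samples on the sphere is already established earlier in the paper, so it can be invoked as a black box.
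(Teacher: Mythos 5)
Your proof is correct, but it takes a genuinely different route from the paper's. The paper argues by repeated Cauchy--Schwarz: peel off factors one at a time, doubling the exponent at each stage, so the five factors are bounded via their $2p$, $4q$, $8r$, $16s$, and $16t$ moments respectively, each of which is a constant by \Cref{lemma:subexponential_norm_sphere}; dividing by $d^{(p+q+r+s+t)/2}$ then gives the claim. You instead apply the generalized H\"older inequality once with conjugate exponents $a_i = k/p_i$ (where $k = p+q+r+s+t$), reducing directly to the $k$th moment of a single dot product. Both routes rest on the same one-dimensional moment bound from \Cref{lemma:subexponential_norm_sphere}, and both are valid. Your version is cleaner and yields a more transparent constant ($C_k$ for $k = p+q+r+s+t$, rather than a constant built from the dyadically inflated exponents) and it generalizes with no extra work to any number of factors. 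The paper's iterated Cauchy--Schwarz has the minor merit of using only the two-function case of H\"older, which is perhaps slightly more self-contained, but at the cost of the somewhat arbitrary dyadic structure. Your verification that the conjugate exponents satisfy $\sum_i 1/a_i = \sum_i p_i/k = 1$ (which needs $p_i \geq 1$ so that $a_i \geq 1$, exactly as hypothesized) and your rescaling from $\sqrt{d}\,\bbS^{d-1}$ to $\bbS^{d-1}$ are both correct.
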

\begin{proof}[Proof of \Cref{lemma:sphere_moment}]
By repeated applications of the Cauchy-Schwarz inequality, we have
\begin{align}
\E_{\srvL \sim \sqrt{d}\bbS^{d - 1}} & [|\langle \u_1, \srvL \rangle|^p |\langle \u_2, \srvL \rangle|^q |\langle \u_3, \srvL \rangle|^r |\langle \u_4, \srvL \rangle|^s |\langle \u_5, \srvL \rangle|^t] \\
& \leq \E_{\srvL \sim \sqrt{d} \bbS^{d - 1}} [\langle \u_1, \srvL \rangle|^{2p}]^{1/2} \E_{\srvL \sim \sqrt{d} \bbS^{d - 1}} [|\langle \u_2, \srvL \rangle|^{2q} |\langle \u_3, \srvL \rangle|^{2r} |\langle \u_4, \srvL \rangle|^{2s} |\langle \u_5, \srvL \rangle|^{2t}]^{1/2} \\
& \leq \E_{\srvL \sim \sqrt{d} \bbS^{d - 1}} [\langle \u_1, \srvL \rangle|^{2p}]^{1/2} \E_{\srvL \sim \sqrt{d} \bbS^{d - 1}} [|\langle \u_2, \srvL \rangle|^{4q}]^{1/4} \E_{\srvL \sim \sqrt{d} \bbS^{d - 1}}[|\langle \u_3, \srvL \rangle|^{4r} |\langle \u_4, \srvL \rangle|^{4s} |\langle \u_5, \srvL \rangle|^{4t}]^{1/4} \\
& \leq \E_{\srvL \sim \sqrt{d} \bbS^{d - 1}} [\langle \u_1, \srvL \rangle|^{2p}]^{1/2} \E_{\srvL \sim \sqrt{d} \bbS^{d - 1}} [|\langle \u_2, \srvL \rangle|^{4q}]^{1/4} \E_{\srvL \sim \sqrt{d} \bbS^{d - 1}}[|\langle \u_3, \srvL \rangle|^{8r}]^{1/8} \\
& \nextlinespace\nextlinespace \E_{\srvL \sim \sqrt{d} \bbS^{d - 1}} [|\langle \u_4, \srvL \rangle|^{8s} |\langle \u_5, \srvL \rangle|^{8t}]^{1/8} \\
& \leq \E_{\srvL \sim \sqrt{d} \bbS^{d - 1}} [\langle \u_1, \srvL \rangle|^{2p}]^{1/2} \E_{\srvL \sim \sqrt{d} \bbS^{d - 1}} [|\langle \u_2, \srvL \rangle|^{4q}]^{1/4} \E_{\srvL \sim \sqrt{d} \bbS^{d - 1}}[|\langle \u_3, \srvL \rangle|^{8r}]^{1/8} \\
& \nextlinespace\nextlinespace \E_{\srvL \sim \sqrt{d} \bbS^{d - 1}} [|\langle \u_4, \srvL \rangle|^{16s}]^{1/16} \E_{\srvL \sim \sqrt{d} \bbS^{d - 1}}[|\langle \u_5, \srvL \rangle|^{16t}]^{1/16}
\end{align}
where each line results from a single application of the Cauchy-Schwarz inequality. Thus, by \Cref{lemma:subexponential_norm_sphere}, we have
\begin{align}
\E_{\srvL \sim \sqrt{d}\bbS^{d - 1}} [|\langle \u_1, \srvL \rangle|^p |\langle \u_2, \srvL \rangle|^q |\langle \u_3, \srvL \rangle|^r |\langle \u_4, \srvL \rangle|^s |\langle \u_5, \srvL \rangle|^t] 
& \leq C_{p, q, r, s, t} \period
\end{align}
Dividing both sides by $d^{(p + q + r + s + t)/2}$, we obtain
\begin{align}
\E_{x \sim \bbS^{d - 1}} [|\langle \u_1, x \rangle|^p |\langle \u_2, x \rangle|^q |\langle \u_3, x \rangle|^r |\langle \u_4, x \rangle|^s |\langle \u_5, x \rangle|^t] \leq C_{p, q, r, s, t} \frac{1}{d^{(p + q + r + s + t)/2}}
\end{align}
as desired.
\end{proof}

\end{document}